\newcommand{\eqn}[1]{\begin{equation}\begin{split} #1 \end{split}\end{equation}}
\newcommand{\R}{\mathbb{R}}
\newcommand{\T}{\mathbfcal{T}|}
\newcommand{\Z}{\mathbb{Z}}
\newcommand{\Q}{\mathbb{Q}}
\newcommand{\E}{\mathbb{E}}
\newcommand{\h}{\hbar}
\newcommand{\p}[2]{\frac{\partial #1}{\partial #2}}
\def\ie{{\frenchspacing\it i.e.}}
\def\eg{{\frenchspacing\it e.g.}}
\def\etc{{\frenchspacing\it etc.}}
\def\expec#1{\langle#1\rangle}
\def\B{\textbf{B}}
\def\D{D}
\def\E{\textbf{E}}
\def\F{\textbf{F}}
\def\J{\textbf{J}}
\def\Ell{\mathcal{L}}
\def\T{\mathcal{T}}
\def\ro{{\rho}}
\def\vtheta{{\boldsymbol{\theta}}}
\def\vphi{{\boldsymbol{\phi}}}
\def\m{\boldsymbol{\mu}}
\def\b{\textbf{b}}
\def\c{{\textbf{c}}}
\def\f{{\textbf{f}}}
\def\g{\textbf{g}}
\def\h{\textbf{h}}
\def\l{\ell}
\def\p{\textbf{p}}
\def\u{\textbf{u}}
\DeclareMathAlphabet\mathbfcal{OMS}{cmsy}{b}{n}
\DeclarePairedDelimiter{\norm}{\lVert}{\rVert}
\def\x{\textbf{x}}
\def\y{\textbf{y}}
\DeclareMathOperator*{\argmax}{arg\,max}
\DeclareMathOperator*{\argmin}{arg\,min}
 \def\logplus{\log_+}
\def\DL{{\rm DL}}
\newtheorem{theorem}{Theorem}
\newtheorem{corollary}{Corollary}[theorem]
\newtheorem{lemma}{Lemma}[theorem]
\newtheorem{definition}{Def.}
\newtheorem{assumption}{Assumption}
\def\spose#1{\hbox to 0pt{#1\hss}}
\def\simlt{\mathrel{\spose{\lower 3pt\hbox{$\mathchar"218$}}
   \raise 2.0pt\hbox{$\mathchar"13C$}}}
\def\simgt{\mathrel{\spose{\lower 3pt\hbox{$\mathchar"218$}}
     \raise 2.0pt\hbox{$\mathchar"13E$}}}
 \def\simpropto{\mathrel{\spose{\lower 3pt\hbox{$\mathchar"218$}}
     \raise 2.0pt\hbox{$\propto$}}}
\def\eq#1{equation~(\ref{#1})}
\def\fig#1{Figure~\ref{#1}}
\def\Fig#1{Figure~\ref{#1}}
\def\Sec#1{Section~\ref{#1}}
\def\IB{\text{IB}}
\def\E{\mathbb{E}}
\def\log{\text{log}}
\def\D{\mathcal{D}}
\def\X{\mathcal{X}}
\def\KL{\operatorname{KL}}
\def\O{\mathcal{O}}
\newcommand\numberthis{\addtocounter{equation}{1}\tag{\theequation}}
\def\all{all}
\all \typeout{Including all files.} \else 
\def\logplus{\log_+}
\def\E{\mathbb{E}}
\def\R{\mathbb{R}}
\newcommand\independent{\protect\mathpalette{\protect\independenT}{\perp}}
\def\independenT#1#2{\mathrel{\rlap{$#1#2$}\mkern2mu{#1#2}}}
\let\emptyset\varnothing
\newcommand{\rulesep}{\unskip\ \vrule\ }
\DeclarePairedDelimiter{\ceil}{\lceil}{\rceil}
\def\IB{\text{IB}}
\def\IBB{\emph{IB}}
\def\betath{\beta^{(\text{th})}}
\def\betanew{\beta^{(\text{new})}}
\def\pstar{p^*_\beta(z|x)}
\def\thetaa{{\boldsymbol{\theta}}}
\def\pthe{p_\thetaa(z|x)}
\def\Dthe{{\Delta\thetaa}}
\def\partialthe{{\frac{\partial}{\partial\thetaa}}}
\def\E{\mathbb{E}}
\def\log{\text{log}}
\def\logg{\emph{log}}
\def\e{\epsilon}
\def\D{\mathcal{D}}
\def\G{\mathcal{G}}
\def\KL{\operatorname{KL}}
\def\O{\mathcal{O}}
\def\Q{\mathcal{Q}}
\def\R{\mathbb{R}}
\def\S{\mathcal{S}}
\def\titlemath#1{\texorpdfstring{#1}{Lg}}
\def\X{\mathcal{X}}
\def\XX{\mathcal{X}}
\def\Y{\mathcal{Y}}
\def\Z{\mathcal{Z}}
\def\QQ{\mathcal{Q}_{\mathcal{Z}|\mathcal{X}}}
\def\QQA{\mathcal{Q}^{(0)}_{\mathcal{Z}|\mathcal{X}}}
\def\QQB{\mathcal{Q}^{(1)}_{\mathcal{Z}|\mathcal{X}}}
\def\QQC{\mathcal{Q}^{(2)}_{\mathcal{Z}|\mathcal{X}}}
\def\QQD{\mathcal{Q}^{(3)}_{\mathcal{Z}|\mathcal{X}}}
\def\QQf{\mathcal{Q}^{(f)}_{\mathcal{X}|\mathcal{Z}}}
\def\beq#1{\begin{equation}\label{#1}}
\def\eeq{\end{equation}}
\def\beqa{\begin{equation*}\begin{aligned}}
\def\eeqa{\end{aligned}\end{equation*}}
\def\D{{\bf D}}
\def\g{{\bf g}}
\def\h{{\bf h}}
\def\J{{\bf J}}
\def\m{{\bf m}}
\def\f{{\bf f}}
\def\x{{\bf x}}
\def\y{{\bf y}}
\def\yhat{\hat{\bf y}}
\def\z{{\bf z}}
\def\XX{\mathbf{X}}
\def\YY{\mathbf{Y}}
\def\gammavec{\boldsymbol\gamma}
\def\phivec{\boldsymbol\theta}
\def\muvec{\boldsymbol\mu}
\def\sout{s_{\rm out}}
\def\spool{s_{\rm pool}}
\def\slat{s_{\rm code}}
\newcommand{\estimates}{\overset{\scriptscriptstyle\wedge}{=}}
\newcommand{\indicator}[1]{\mathbbm{1}[[#1]]}
\newcommand{\customlabel}[2]{%
  \protected@write \@auxout {}{\string \newlabel {#1}{{#2}{\thepage}{#2}{#1}{}} }%
  \hypertarget{#1}{#2}
}
\def\abovestrut#1{\rule[0in]{0in}{#1}\ignorespaces}
\def\belowstrut#1{\rule[-#1]{0in}{#1}\ignorespaces}
\newlist{legal}{enumerate}{10}
\setlist[legal]{label={(\arabic*)}}
\def\a{{\bf a}}
\def\b{{\bf b}}
\def\h{{\bf h}}
\def\dKL{d_{\rm{KL}}}
\def\Ell{{\mathcal L}}
\def\p{{\bf p}}
\def\pbar{\bar{p}}
\def\u{{\bf u}}
\def\what{{\hat w}}
\def\x{{\bf x}}
\def\y{{\bf y}}
\def\z{{\bf z}}
\def\B{{\bf B}}
\def\D{{\bf D}}
\def\E{\mathbb{E}}
\def\F{{\bf F}}
\def\P{{\bf P}}
\def\Q{{\bf Q}}
\def\R{{\bf R}}
\def\S{{\bf S}}
\def\T{{\bf T}}
\def\eqn#1{~(\ref{#1})}
\def\beqa#1{\begin{eqnarray}\label{#1}}
\def\eeqa{\end{eqnarray}}
\def\expec#1{\langle#1\rangle}
\renewcommand*{\citet}{\cite}
\renewcommand*{\citep}{\cite}
\begin{document}

\title{Intelligence, physics and information -- the tradeoff between accuracy and simplicity in machine learning}

\author{Tailin Wu}
\prevdegrees{Bachelor of Science in Physics, Peking University 2012}
\department{Department of Physics}

\degree{Doctor of Philosophy in Physics}

\degreemonth{February}
\degreeyear{2020}
\thesisdate{November 27, 2019}


\supervisor{Isaac L. Chuang}{Professor of Physics}
\supervisor{Max Tegmark}{Professor of Physics}

\chairman{Nergis Mavalvala}{Associate Department Head of Physics}

\maketitle



\cleardoublepage
\setcounter{savepage}{\thepage}
\begin{abstractpage}
%
%
%

How can we make machines more intelligent, so that they can make sense of the world, and become better at learning? To approach this goal, I believe that viewing intelligence in terms of many integral aspects, and also in terms of a universal two-term tradeoff between task performance and complexity, provides two feasible perspectives, and physics and information should play central underlying roles. In this thesis, I address several key questions in some aspects of intelligence, and study the phase transitions in the two-term tradeoff, using strategies and tools from physics and information.  

Firstly, how can we make the learning models more flexible and efficient, so that agents can learn quickly with fewer examples? Inspired by how physicists model the world, we approach this question by introducing a paradigm and an Artificial Intelligence Physicist (AI Physicist) agent for simultaneously learning many small specialized models (theories) and the domain they are accurate, which can then be simplified, unified and stored, facilitating few-shot learning in a continual way. We also introduce a Meta-Learning Autoencoder architecture, which utilizes learning good task representations to facilitate few-shot learning. 

Secondly, for representation learning, when can we learn a good representation, and how does learning depend on the structure of the dataset? We approach this question by studying phase transitions in the two-term tradeoff: the hyperparameter\footnote{The parameter whose value is set before the learning begins, e.g. relative strength between task performance and complexity in the learning objective.} setting where key quantities, e.g. prediction accuracy change in a discontinuous way. 
We introduce a technique for predicting when the second-order phase transitions will occur, and in the information bottleneck objective, we show that the formulas we derive reveal deep connections between the data, the model, the learned representation, and the loss landscape of the objective function. For example, each phase transition corresponds to learning a new component of nonlinear maximum correlation between the input and the target, and in classification, they correspond to the learning of new classes.

Thirdly, how can agents discover causality from observations? We address part of this question by introducing an algorithm that combines prediction and minimizing information from the input, for exploratory causal discovery from observational time series. 

Last but not least, how can we make classifiers more robust to noise? In the presence of label noise, we introduce Rank Pruning, a robust and general algorithm for classification with noisy labels, prove its consistency, and improve state-of-the-art of learning with noisy labels. 

I believe that building on the work of my thesis we will be one step closer to enable more intelligent machines that can make sense of the world.

\end{abstractpage}


\cleardoublepage

\section*{Acknowledgments}

Firstly, I would like to express my sincere gratitude to Professor Isaac Chuang, my thesis advisor. He encourages me to be a leader in the direction I work on, and his strict criteria help me form my own high standards to strive to realize my full potential. His keen insights in various aspects of physics and computer science have provided valuable guidance in my research. Moreover, he has provided valuable support and guidance in several key moments in my PhD career. Specifically, I would like to thank him for his support when I transformed my research direction from quantum computing experiment to machine learning in my mid-PhD; his introduction of me to a collaboration with Curtis Northcutt which helps me jump start my experience in machine learning; his introduction of me to a collaboration with Ian Fischer from Google, without which I would not have discovered the exciting intersection area between representation learning and phase transitions; and his insightful advice on graduating with momentum and job search. His support and guidance prove pivotal in hindsight, which I am deeply grateful.

Secondly, I would like to express my sincere thanks to Professor Max Tegmark, my thesis co-advisor. I'm grateful for having been working with Max, which greatly expanded my perspective and horizon in AI. As a physicist, Max's strive for simplicity and pursuit of  ``intelligible intelligence'' resonate with me, and has inspired my initial idea of AI Physicist and discovery of the learnability phase transition in the Information Bottleneck. Max has sharp intuitions, and his first-principles thinking stands out, which has influenced me to also think from first principles. I enjoy our close collaboration, as well as discussing universe, intelligence, life.

Thirdly, I would like to thank Ian Fischer. We begin collaborating since the end of 2018, and Ian was also my host for my internship at Google in the summer of 2019. Ian introduced me to the powerful perspective of viewing many machine learning problems in terms of information, which constitutes a major perspective that underlies my thesis. He has guided me with his rich experience in the Information Bottleneck and representation learning, and I also learn a great deal from his scalable numerical experimental skills. I enjoyed our extensive discussions everyday in the January of 2019, when we analyzed the Information Bottleneck together. I enjoyed a lot our close collaboration during my internship at Google, where we pushed the frontiers on various aspects of the Information Bottleneck.

I would like to thank my other thesis committee members: Professor Marin Solja\v{c}i\'c and Professor Riccardo Comin, who have provided valuable guidance and feedbacks in my thesis preparation and advice on career. I would also like to thank my academic advisor Professor Wolfgang Ketterle, who has provided insightful advice throughout my PhD.

I would like to thank my other collaborators: Curtis Northcutt, who I had a great time working with, with our complementary skills we really form a good team; Thomas Breuel and Jan Kautzin, who guided me during my internship at NVIDIA; Michael Skuhersky, with whom I am having fun with the \emph{C.elegans}.

I would also like to thank many other people who I have discussed research with: Professor Josh Tenenbaum, Alex Alemi, Kevin Murphy, who have provided valuable feedbacks on the research projects.

Last but not least, I would like to thank my parents, who have given me the chance to experience this wonderful Universe, and have given me unconditional support all the time. I love you!


\pagestyle{plain}
\tableofcontents
\newpage
\listoffigures
\newpage
\listoftables

\chapter{Introduction}
\label{chap1:introduction}

\section{What is intelligence?}
\label{sec:what_is_intelligence}

What is intelligence? This is a question that has been intriguing generations of scientists and artificial intelligence (AI) researchers. The understanding and building of intelligence not only is in itself an important question, but also has profound influence on the society, with the potential to help solve important problems. For example, it has helped scientists predict protein structure from genomic data with unprecedented precision \cite{alphafold}, simulate light scattering by multilayer nanoparticles and facilitate nanophotonic inverse design \cite{peurifoy2018nanophotonic}, predict molecule properties in quantum chemistry \cite{gilmer2017neural}, identify central nervous system tumours \cite{capper2018dna}, reconstruct neural circuit map \cite{januszewski2017high}, etc. But I believe we have just seen a tip of the iceberg of what it may achieve. Future intelligent machines may help solve important problems that benefit humanity as a whole, for example, design more efficient ways for space exploration, automatically discover new physics and new mathematics, identify mechanisms in biological systems and propose cure for disease, 
to name just a few. 

Physics has been extending its scope of study from space and time to matter and energy in all its forms, from the subatomic to the cosmological, from the elementary to the complex, and from the inanimate to living organisms. The study of intelligence is another frontier physics should and can illuminate. The wisdom and strategies developed by generations of physicists may help better understand and build intelligence, just as many of the techniques and perspectives in AI domain are inspired by physics, e.g. energy models \cite{lecun2006tutorial}, simulated annealing \cite{kirkpatrick1983optimization,vcerny1985thermodynamical}, Hamiltonian Monte Carlo (HMC) \cite{DUANE1987216}, critical behaviors in random Boolean expressions \cite{Kirkpatrick1297} and  data representations \cite{cubero2019statistical}, neural ordinary differential equations \cite{chen2018neural}, fluctuation-dissipation relations for stochastic gradient descent \cite{yaida2018fluctuationdissipation}, to name just a few. In turn, the better intelligent algorithms can help solve important physics problems, e.g. quantum state reconstruction \cite{carrasquilla2019reconstructing}, phase transitions \cite{carrasquilla2017machine,wang2016discovering,van2017learning}, planetary dynamics \cite{lam2018machine} and particle physics \cite{baldi2014searching}.

Before diving into studying it, it is important that we have a notion of what ``intelligence'' is.  As R. J. Sternberg has put it \cite{gregory1987oxford}, ``Viewed narrowly, there seem to be almost as many definitions of intelligence as there were experts asked to define it''. Still, there exist similarities among many of the definitions. Below I quote some of the prominent definitions:

\begin{legal}
\item ``Intelligence measures an agent's ability to achieve goals in a wide range of environments'' \citet{legg2007collection}.

\item ``The intelligence of a system is a measure of its skill-acquisition efficiency over a scope of tasks, with respect to priors, experience, and generalization difficulty'' \citet{chollet2019measure}. In this paper, the author argues that measuring skill by testing on the same kind of training task (``local generalization'') does not measure intelligence, since the skills can be bought by training with arbitrary number of examples or injecting prior knowledge by the developer, which should be on the orthogonal axis of intelligence. He then proposes a new measure of intelligence, which roughly translates to

$$\text{intelligence}:=\E\left[\frac{\text{skill} \times \text{(generalization difficulty)}}{\text{prior} + \text{experience}}\right]$$

where the generalization difficulty is defined as the ratio of the length of shortest program that solves the testing task given the shortest
program that achieves optimal training-time performance over the situations in the curriculum, over the length of the shortest program that solves the testing task. 

\item ``... the ability of a system to act appropriately in an uncertain environment, where appropriate action is that which increases the probability of success, and success is the achievement of behavioral subgoals that support the system's ultimate goal'' \citet{albus1991outline}.

\item ``Intelligence is the ability to use optimally limited resources - including time - to achieve goals'' \citet{kurzweil2000age}.

\item ``Humans (machines) are intelligent to the extent that our (their) actions can be expected to achieve our (their) objectives.'' \citet{russell2019humancompatible}.

\item ``Intelligence is the power to rapidly find an adequate solution in what appears \textit{a priori} (to observers) to be an immense search space'' \citet{lenat1992thresholds}.

\item ``Intelligence is the ability to process information properly in a complex environment. The criteria of properness are not predefined and hence not available beforehand. They are acquired as a result of the information processing.'' \citet{nakashima1999ai}.

\item ``Intelligence means getting better over time'' \citet{schank1991s}.

\item ``Intellignence = Ability to accomplish complex goals''
 \citet{tegmark2017life}.
 
\item ``Intelligence is the ability for an information processing system to adapt to its environment with insufficient knowledge and resources'' \citet{wang1995working}.

\item ``Intelligence is not a single, unitary ability, but rather a composite of several functions. The term denotes that combination of abilities required for survival and advancement within a particular culture'' \citet{anastasi1992counselors}.

\item ``...in its lowest terms intelligence is present where the individual animal, or human being, is aware, however dimly, of the relevance of his behaviour to an objective. Many definitions of what is indefinable have been attempted by psychologists, of which the least unsatisfactory are 1. the capacity to meet novel situations, or to learn to do so, by new adaptive responses and 2. the ability to perform tests or tasks, involving the grasping of relationships, the degree of intelligence being proportional to the complexity, or the abstractness, or both, of the relationship'' \citet{colman2015dictionary}.

\item ``Intelligence is assimilation to the extent that it incorporates all the given data of experience within its framework . . . There can be no doubt either, that mental life is also accommodation to the environment. Assimilation can never be pure because by incorporating new elements into its earlier schemata the intelligence constantly modifies the latter in order to adjust them to new elements.'' \citet{piaget2005psychology}.

\item `` ... certain set of cognitive capacities that enable an individual to adapt and thrive in any given environment they find themselves in, and those cognitive capacities include things like memory and retrieval, and problem solving and so forth. There's a cluster of cognitive abilities that lead to successful adaptation to a wide range of environments'' \citet{simonton2003interview}.
\end{legal}

We see that although the definitions vary, there exist similar aspects of the definitions. For example, (1) intelligence involves an agent's interaction with the \emph{environment}; (2) it is a property of the agent's \emph{information processing}; (3) it involves the agents ability to \emph{achieve goals} or \emph{solve tasks} in diverse environments, under environmental constraints (e.g. incomplete information, limited resources of computation, space or time).

To better understand intelligence, I believe the following two perspectives provide feasible routes. For the first perspective, instead of trying to directly understand and build intelligence  using a single formula or definition, I believe a better approach is to understand its different aspects. Just as in the understanding of life, although once doubting whether biological mechanisms could ever explain the property of being alive, biologists gradually uncovered aspects of life, for example metabolism, homeostasis and reproduction. Physicist Erwin Schr\"odinger also introduced the idea of an ``aperiodic crystal'' containing genetic information \cite{schrodinger1992life} that inspired the discovery of DNA. I think intelligence is a \emph{system}. Just as a human is a system, with its body, hands, sensory organisms, different parts of the brain (e.g. visual cortex, hippocampus, prefrontal cortex, etc.), intelligence should also have  many aspects that work integrally to form intelligent behavior.  By identifying and understanding the various aspects of intelligence (which we expand on in section \ref{sec:different_aspects}) with humans as a blueprint, we may have a feasible route to fully understand intelligence.

The second perspective is that, inspired by the above common aspect in definitions of intelligence that agent can solve tasks in diverse environments under  environmental constraints, I view intelligence as a result of the ability to solve a universal two-term trade-off in diverse settings, which may be a manifestation and approximation of the AIXI formalism \cite{legg2007universal} under different scenarios, and resonant with the notion of ``intelligence as compression'' \cite{legg2007universal}. In the two-term trade-off, we have one term that measures the agent's performance on the task, and another term that constrains resources of the agent, usually in the form of limiting the complexity of the learned model or representation. The agent has to find the best solution to a task under such constraints. In machine learning, this usually comes in the form of regularization.

In the above two perspectives, physics and information have the potential to play a central role. Since intelligence involves an agent's interaction with the environment, in order to achieve a diverse set of goals in the environment, the agent has to first understand the \emph{physics} of the environment, so as to utilize it and direct the environment to its need, just as humans cannot build rockets without understanding Newton's law of universal gravitation. To build intelligent agents, we can get inspirations from how generations of physicists study and model the different aspects of the world, from space and time to matter and energy, from the subatomic to the cosmological, from the elementary to the complex, and from the inanimate to living organisms. Therefore, the different aspects of intelligence, viewed from the first perspective, should benefit a lot from borrowing strategies and techniques from physics.

The second perspective, viewing intelligence as the ability to solve a universal two-term tradeoff in diverse settings, is also resonant with the central goal of physics. We want to find physics theories that not only can predict parts of Nature precisely, but are also \emph{simple}, where the simplicity may be governed by Occam's razor principle, or measured by Kolmogorov complexity \cite{kolmogorov1963tables} or description length \cite{rissanen1983universal}, or their approximations.

Information, on the other hand, should also lie at the core of intelligence. As we have stated before, intelligence can be viewed as a specific form of information processing. Moreover, I believe that this specific form of information processing should be independent of the substrate on which it performs, be it a human brain, a computer, a coordinated group of people, some emergent plasma behavior on the sun, or even an imaginary scenario of billions of ants doing the exact same information processing as a brain does. It is \emph{structure} of the information processing that matters, not the substrate that performs it.\footnote{This understanding is also resonant with the notion that consciousness is independent of the substrate, and only depend of the information processing \cite{tegmark2017life}.} To understand the information processing that gives rise to intelligence, information theory\footnote{Information theory studies the quantification, storage, and communication of information. It was originally introduced by Claude Shannon \cite{shannon1948mathematical} and further developed by many others.}, and the variational techniques developed thereupon, provide an ideal language and technique\footnote{There may be other techniques that address other aspects of the information processing, besides information theory.} to measure, constrain and optimize the information flow, regardless of the architecture of the model, and the task the agent is solving. Therefore, information theory may provide a valuable tool in developing different aspects of intelligence, and may arise as a natural objective in the universal two-term tradeoff, in either measuring the task performance, or constraining the complexity of the model, as we shall see in the following two sections.

As a sidenote, physics and information are not unrelated. Instead, information plays important roles in many branches of physics. For example, thermodynamics involves the study of \emph{entropy}\footnote{which is also called ``self information''.} and its interplay with other thermodynamic quantities; a generalized version of second law of thermodynamics \cite{bera2017generalized} involves  mutual information; quantum information studies the information processing and channels in quantum scenarios; to name just a few. Moreover, simplicity as measured by Kolmogorov complexity has a close connection with entropy \cite{grunwald2004shannon}.

The above two perspectives of intelligence, and the central role physics and information play,  lay the foundation of my thesis. In the following two sections I will expand each perspective in more detail, while putting my thesis work into the picture.

\section{First perspective: aspects of intelligence}
\label{sec:different_aspects}

Regarding intelligence as a system with many integral aspects working together, just as a computer or a human can be thought of as a system, I believe the following aspects are integral to intelligence:

Basic level:
\begin{legal}
\setlength\itemsep{0em}
\item \textbf{Sensory inputs}: it should be able to receive sensory inputs from the outside world.
\item \textbf{Intrinsic reward}: it should be able to generate intrinsic reward from interactions with the environment, directing its action.
\item \textbf{Memory}: it should have some form of memory.
\item \textbf{Actuators}: it should be able to influence the environment by its actuator.
\end{legal}

High level:
\begin{legal}
\setlength\itemsep{0em}
\item \textbf{Working robustly}: it should be able to work robustly in noisy and complex real environments.
\item \textbf{Working intelligibly\footnote{Although human intelligence is often not inteligible, intelligibility is arguably a desirable trait for AI.}}: we should be able to understand its goal and how it makes its decisions.
\item \textbf{Learning good representations}: it should be able to learn good internal representations for the tasks it is assigned to.
\item \textbf{Communication}: it should be able to communicate with human or other agents with succinct language.
\item \textbf{Predicting the future}: it should be able to learn to predict the future from the past, allowing it to understand how the world works, and better plan its action and achieving its goal.
\item \textbf{Causal inference}: it should be able to think in terms of cause and effect, and infer causal relations from observations or by performing experiments.
\item \textbf{Planning}: it should be able to plan its action to achieve its goal. 
\item \textbf{Few-shot learning}: it should be able to learn to learn across tasks, allowing it to learn with few examples for novel tasks.
\item \textbf{Lifelong learning}: it should be able to continuously learn new things without forgetting the past learning. This includes lifelong learning of prediction models, and lifelong skill acquisition.
\item \textbf{Emotional intelligence}: it should have a theory of mind, and be able to perceive other agents' feeling within their frame of reference.
\item (Something we don't know yet)
\end{legal}

\begin{figure}[ht!]
\centerline{\includegraphics[width=1\columnwidth]{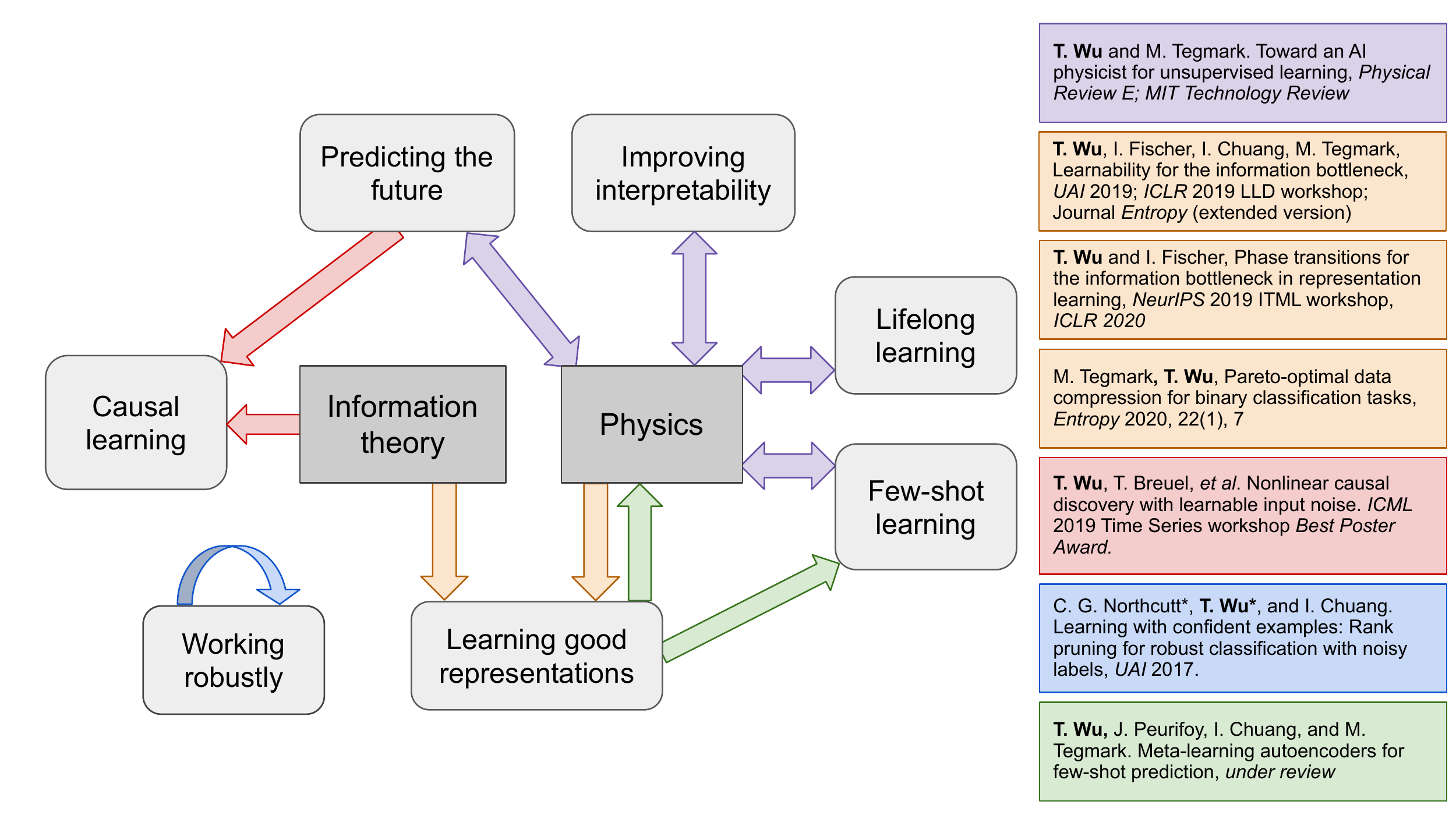}}
\caption{
How different parts of my thesis address the aspects of intelligence.
\label{fig:thesis_for_aspects}
}
\end{figure}

I believe a seamless integration of the above aspects in a single architecture will bring us closer to understanding intelligence. Moreover, the above aspects are not independent. Instead, doing well on one aspect will probably help some others. 

Based on the above understanding, I try in my thesis to combine aspects together if possible, and use one aspect to help another. My different thesis projects and their addressed aspects are shown in Fig. \ref{fig:thesis_for_aspects}. As we see, there are many connections between different aspects, and one aspect can help another. For example, the causal learning project (red) uses predicting the future for  causal learning; the meta-learning autoencoder project (green) applies learning good representations to achieve few-shot learning. 

Moreover, from Fig. \ref{fig:thesis_for_aspects}, we see that physics and information play a central role in connecting these different aspects. For example, the AI Physics project (purple) uses techniques inspired by physics, for predicting the future, improving interpretability, few-shot learning and lifelong learning aspects of intelligence, which in turn help the learning of physics theories. The information bottleneck projects (yellow) applies the analogy of phase transition in physics for understanding the phase transitions in representation learning, and whose objective uses information theory for representation learning.

\section{Second perspective: universal two-term trade-off}

As mentioned in Section \ref{sec:what_is_intelligence}, we may view intelligence as the ability to solve universal two-term trade-off in diverse settings. In fact, many machine learning objectives also have this format of two-term tradeoff, as follows:

\begin{equation}
\label{eq:universal_tradeoff}
L=\text{(Prediction loss)}+\beta\cdot\text{Complexity}
\end{equation}

For example, for unsupervised learning, the variational autoencoder \cite{kingma2013auto} has one term that encourages a small reconstruction loss, and another term that controls the KL-divergence between the posterior and the prior. For classification and regression, the Information Bottleneck \cite{tishby2000information} has one term that encourages the latent representation $Z$ to contain as much information about the target $Y$ as possible, and another term that controls the information contained in $Z$ about the input $X$. Similar formalisms can also be seen in InfoDropout \cite{achille2018emergence}, L1 regularizations, Kolmogorov structure functions  \cite{vereshchagin2004kolmogorov}, and Least Angle Regression \cite{efron2004least}, to name just a few.

In my thesis projects, this two-term tradeoff is also a recurring theme. Specifically,
\begin{legal}
\item AI Physicist (Chapter \ref{chap2:AI_physicist}): $$L=\DL(\text{data}|\text{model}) + \DL(\text{model})$$
\item Information Bottleneck (Chapter \ref{chap3:IB} and \ref{chap4:IB_phase_transition}): $$L=I(X;Z)-\beta\, I(Y;Z)$$
\item Pareto optimal data compression for binary classification (Chapter \ref{chap5:distillation}), equivalent to the following form: $$L=H(Z)-\beta\, I(Y;Z)$$
\item Causal learning with minimum mutual information regularization (Chapter \ref{chap6:causal}): $$L=\text{MSE}(f(\tilde{X});Y) + \lambda\, I(\tilde{X};X)$$
\end{legal}

From the expression of the two-term tradeoff (Eq. \ref{eq:universal_tradeoff}), we see that there is a hyperparameter $\beta$ that tunes the importance of the complexity constraint relative to the prediction loss. In one extreme, when the incentive to minimize complexity is significantly greater than that to minimize the prediction loss, we would expect that the optimum under the objective $L$ is a trivial solution with zero complexity. In the other extreme, we would expect that the optimum of the objective minimizes the prediction loss, but may use a very complicated model or representation, which may harm generalization and robustness \cite{shamir2010learning,alemi2016deep}. Between these two extremes, interesting phase transitions have been observed, where key quantities, e.g. prediction accuracy, change in a discontinuous way \cite{tishbyinfo,strouse2017information,chechik2005information,achille2018emergence,vereshchagin2004kolmogorov,rezende2018taming}. Usually, learning is at a specific $\beta$. Here, we are instead interested in understanding the full range of learning between the above two extremes, and specifically how the phase transitions depend on the structure of the task/dataset, the objective and the model. Moreover, there has been a long history of studying phase transition in physics. The intuition and strategies in physics may help the understanding of phase transitions in learning.

\section{Roadmap}
\subsection{Scope and structure of the thesis}
With the above motivation and perspectives in mind, this thesis tries to address some key questions in the above perspectives and aspects of intelligence. We do not attempt to solve intelligence in this thesis, since it is an extremely difficult and long-term endeavor that requires the efforts of researchers around the world, still needs many breakthroughs, and may be decades away \cite{muller2016future,grace2018will}. Instead, by laying out the perspectives and aspects, and addressing some key questions therein, we hope to make small steps towards a better understanding of the questions and improvements in answering these questions. Moreover, the lessons learned in answering those questions may help future efforts that build upon them.

As we have identified the different aspects of intelligence in Section \ref{sec:different_aspects}, each chapter of the thesis addresses a key question related to one or more aspects, or addresses the second perspective of the two-term tradeoff. We start with a simple question. Suppose that an agent, without knowing anything, experiences many environments each of which may have multiple physics laws, how can it make sense of the environments, and become better at learning over time? This is a simplified version of how a human or an agent may try to make sense of different environments, each of which may have parts of underlying models in common. Although it is a simplified version, it still poses a difficult challenge. Inspired by how physicists model the world, in Chapter \ref{chap2:AI_physicist} we introduce a learning paradigm  that learns and manipulates \emph{theories} as ``atoms'' of learning. We introduce four algorithms, differentiable-divide-and-conquer (DDAC), Occam's razor with minimum description length (MDL), unification and lifelong learning of theories, and integrate them into a simple AI Physicist agent, and demonstrate its capability in few-shot learning, lifelong learning, improving interpretability in a diverse set of prototypical physics environments.

Next, we study a key question in the universal two-term tradeoff, i.e. when we vary the hyperparameter $\beta$ that tunes the relative strength of controlling complexity and minimizing prediction loss, how do the phase transitions depend on the dataset, the objective and the model? We focus our attention on the Information Bottleneck (IB) objective, which provides a principled method to balance compression and prediction in representation learning via information. In Chapter \ref{chap3:IB}, we derive formulas that reveal how the learnability phase transition in IB depends on the structure of the dataset, which we analyze is determined by the \emph{conspicuous subset}, i.e. the most confident, typical and large, and imbalanced subset of the examples. The formulas also provide a tool to measure model capacity in a task-specific manner. We demonstrate that our theory and algorithm match closely with the experimentally observed onset of learning in mixture of Gaussian, MNIST and CIFAR10 datasets. In Chapter \ref{chap4:IB_phase_transition}, we generalize our approach to study the phase transitions for the full IB trade-off curve. We provide the first formula that gives the condition for the phase transitions in the most general setting, reveal that each phase transition is finding a (nonlinear) maximum correlation component between the input and target, orthogonal to the learned representation. We present an algorithm for discovering the phase transition points. We verify that our theory and algorithm accurately predict phase transitions in categorical datasets, predict the onset of learning new classes and class difficulty in MNIST, and predict prominent phase transitions.

In Chapter \ref{chap5:distillation}, we take a slightly different approach in studying the two-term tradeoff. In the scenario of the tradeoff between the $I(Y;Z)$ vs. $H(Z)$ in binary classification ($X$ is the input, $Y$ is the target, and $Z$ is a representation of $X$), we prove that we can use binning a uniformized and sorted histogram of $P(Y|X)$ to achieve the Pareto frontier of the $I(Y;Z)$ vs. $H(Z)$ tradeoff curve. We apply our technique to MNIST, FashionMNIST and CIFAR10 datasets, and  illustrate how it can be interpreted as an information-theoretically optimal image clustering algorithm.

An important aspect of intelligence is its ability to learn causal relations from observations. Different from mere associations, causal relations provide a succinct way to describe the underlying mechanism of the data-generating process. The learning of causal models is important especially under shifts in environments, where environmental factors may vary but the causal relations persist \cite{de2019causal}. Learning causal relations can also help the agent answer \emph{interventional} and \emph{counterfactual} questions, an important aspect of human reasoning. Furthermore, the understanding of causal relations between components of a system also constitutes an important aspect of scientific endeavor, including physics. In Chapter \ref{chap6:causal}, we ask the question: given multiple time series, without intervention, how can an agent discover the underlying causal relations? To address this question, we introduce an algorithm that combines prediction and minimizing information from the input, for exploratory causal discovery from observational time series, and demonstrate its effectiveness in synthetic, video game, breath rate vs. heart rate and \emph{C.elegans} datasets.

Suppose that an agent has experienced and  solved many similar tasks. How can it utilize its past learning to solve new tasks with few examples? In Chapter \ref{chap7:mela} we introduce the Meta-Learning Autoencoders (MeLA) architecture, which uses learning task representations for few-shot learning. When given a new task, its meta-recognition model maps the task into a task representation, and its meta-generative model maps the task representation into the weights and biases of a task-specific model. We demonstrate MeLA's effectiveness in physics prediction tasks, and show that it compares favorably with the state-of-the-art meta-learning algorithms.

Working robustly is another important aspect of intelligence. In the scenario of binary classification, where the labels are corrupted by an unknown noise process, how can a classifier still classify accurately as if the labels are not corrupted? In Chapter \ref{chap8:rankpruning}, we introduce Rank Pruning, a robust, time-efficient, general algorithm for both binary classification with noisy labels, and estimation of the fraction of mislabeling in the training sets. We prove that under certain assumptions, Rank Pruning achieves perfect noise estimation and equivalent expected risk as learning with correct labels, and provide closed-form solutions when those assumptions are relaxed. It improves the state-of-the-art of learning with noisy labels across F1 score, AUC-PR, and Error.

Finally, in Chapter \ref{chap9:conclusion}, I conclude the thesis, and provide prospects for future works, particularly detailing the specific directions for future work building on my thesis. This thesis is just a start, and countless opportunities lie ahead. I am excited to embark on the exciting journey of understanding intelligence and applying it for solving problems in society. 

\subsection{My contributions}

Chapters 2-8 correspond to separate published or submitted papers, presented as unchanged as possible. Since they report work done together with various co-authors, I will now detail my specific contributions.

For the AI Physicist project (Chapter \ref{chap2:AI_physicist}), I proposed the main architecture of the AI physicist agent, developed the differentiable divide-and-conquer, unification and lifelong learning aspects of the agent, and performed extensive experiments for improving and validating the agent. Max Tegmark developed the Occam's-razor-with-MDL aspect of the agent and created datasets and Mathematica scripts for evaluating the agent's performance. 
The project would not have  succeeded without close collaboration on 
proofs, algorithm development and writing, occasionally late into the night.

For the Learnability for the Information Bottleneck work (Chapter \ref{chap3:IB}), inspired by Max's suggestion of starting from simple scenarios, I discovered the Information Bottleneck learnability phase transition. Through extensive discussion with Ian Fischer, I developed the main theorems that relate the learnability phase transition to the structure of the dataset. I performed the experiments on synthetic and MNIST datasets, and Ian performed the CIFAR10 experiments.  For the initial draft, I wrote the method, proof and experiment sections, and Ian wrote the introduction and related work sections. Ian, Ike and Max all provided valuable feedback for significantly improving the drafts.

For the phase transition for the Information Bottleneck project (Chapter \ref{chap4:IB_phase_transition}), which was done during my internship at Google AI, I developed the main theories through extensive discussions with Ian and performing experiments together. I wrote the majority of the draft, and both Ian and I contributed significantly to the numerical experiments.

For the Pareto-optimal data compression project (Chapter \ref{chap5:distillation}), I mainly contributed to the experimental part, by running experiments to compute the Pareto frontier of MNIST, FashionMNIST and CIFAR10 datasets in binary classification scenarios. Max contributed to the motivation, theorems and writing of the paper, but we extensively discussed all aspects together.

For the causal learning work (Chapter \ref{chap6:causal}), which was mainly done during my internship at NVIDIA Research, I contributed to the initial idea, algorithm development, and extensive experiments for evaluating the methods, with Thomas Breuel providing lots of guidance during the process. I wrote the initial draft and 
subsequent re-submissions, with Thomas Breuel and Jan Kautz providing valuable feedback for improving the draft. Michael  Skuhersky contributed to the C.~elegans experiment corresponding text in the  re-submissions.

For the meta-learning autoencoder (MeLA) work (Chapter \ref{chap7:mela}), I contributed to the initial idea, main experiment and writing of the draft. John Peurifoy contributed to the comparison experiments of the paper. Ike provided valuable ideas for influence identification and interactive learning aspects of the MeLA architecture, and both Ike and Max provided valuable guidance, feedback during the whole process, as well as editing.

For the Rank Pruning work (Chapter \ref{chap8:rankpruning}), I contributed to the proposal of the robust noise estimator and  theory developments of the work, and Curtis Northcutt contributed to the initial ideas and initial experiments of the work. Both Curtis and I contributed significantly to the main experiments and writing of the paper, and Ike provided valuable guidance and feedback during the process.

\chapter{AI Physicist for few-shot, lifelong learning of physics}
\label{chap2:AI_physicist}

Imagine that you are an agent. As a start, you don't know anything about how the world works, but has been endowed with a perfect mechanism to learn. Suppose that you will experience many environments each of which may have multiple physics laws, how can you make sense of the environments, and become better at learning over time?

To address this question\footnote{Published in \emph{Physical Review E}, 100 (3), 033311, ``\href{https://journals.aps.org/pre/abstract/10.1103/PhysRevE.100.033311}{Toward an artificial intelligence physicist for unsupervised learning}'', Wu, Tailin and Max Tegmark \cite{wu2018toward}.}\footnote{The code is open-sourced at \href{https://github.com/tailintalent/AI_physicist}{github.com/tailintalent/AI\_physicist}.}, we look at how physicists model the world. Inspired by four common strategies with a long history in physics: divide-and-conquer, Occam's razor, unification and lifelong learning, we propose a novel paradigm 
centered around the learning and manipulation of \emph{theories}, which parsimoniously predict both aspects of the future (from past observations) and the domain in which these predictions are accurate. This is in sharp contrast with the standard approach of using a single model to learn everything. Specifically, we propose a novel generalized-mean-loss to encourage each theory to specialize in its comparatively advantageous domain, and a differentiable description length objective to downweight bad data and ``snap" learned theories into simple symbolic formulas. Theories are stored in a ``theory hub", which continuously unifies learned theories and can propose theories when encountering new environments. We test our implementation, the toy ``AI Physicist" learning agent, on a suite of increasingly complex physics environments. From unsupervised observation of trajectories through worlds involving random combinations of gravity, electromagnetism, harmonic motion and elastic bounces, our agent typically learns faster and produces mean-squared prediction errors about a billion times smaller than a standard feedforward neural net of comparable complexity, typically recovering integer and rational theory parameters exactly. 
Our agent successfully identifies domains with different laws of motion also for a nonlinear chaotic double pendulum in a piecewise constant force field.

\section{Introduction}
\label{IntroSec}

\subsection{Motivation}

The ability to predict, analyze and parsimoniously model observations is not only central to 
physics,
but also a goal of unsupervised machine learning, which is a key frontier in artificial intelligence (AI) research 
\cite{lecun2015deep}. Despite impressive recent progress with artificial neural nets,
they still get frequently outmatched by human researchers at such modeling, suffering from two drawbacks: 
\begin{enumerate}
\item Different parts of the data are often generated by different mechanisms in different contexts.
A big model that tries to fit all the data in one environment may therefore underperform in a new environment where some mechanisms are replaced by new ones, being inflexible and inefficient at combinatorial generalization \cite{battaglia2018relational}.  
\item Big models are generally hard to interpret, and may not reveal succinct and universal knowledge such as Newton's law of gravitation that explains only some aspects of the data. The pursuit of ``intelligible intelligence" in place of inscrutable black-box neural nets is important and timely, given the growing interest in AI interpretability from AI users and policymakers, especially for AI components involved in decisions and infrastructure where trust is important \cite{russell2015research, amodei2016concrete, boden2017principles, krakovna2016increasing}.
\end{enumerate}

\def\mytab{\hglue5mm}
\begin{table}[t]
\begin{center}
\begin{tabular}{|>{\raggedright}p{2.5cm}|p{7.4cm}|}
\hline
Strategy			&Definition\\
\hline
Divide-and-	     	&Learn multiple theories each of which  \\ 
\mytab conquer		& specializes to fit {\it part} of the data very well\\
\hline
Occam's		&Avoid overfitting by minimizing description\\
\mytab Razor		& length, which can include 
replacing fitted constants by simple integers or fractions.\\
\hline
Unification 		&Try unifying learned theories by introducing parameters\\
\hline
Lifelong 			&Remember learned solutions and try them\\
\mytab Learning		&on future problems\\
\hline
\end{tabular}
\end{center}
\caption{AI Physicist strategies tested.
\label{table1}
}
\end{table}
To address these challenges, we will borrow from physics the core idea of a  \emph{theory}, which parsimoniously predicts both aspects of the future (from past observations) and also the domain in which these predictions are accurate. This suggests an alternative to the standard machine-learning paradigm of fitting a single big model to all the data: instead, learning small theories one by one, and gradually accumulating and organizing them. This paradigm suggests the four specific approaches summarized in Table \ref{table1}, which we combine into a toy ``AI Physicist" learning agent: To find individual theories from complex observations, we use the divide-and-conquer strategy with multiple theories and a novel generalized-mean loss that encourages each theory to specialize in its own domain by giving larger gradients for better-performing theories. To find simple theories that avoid overfitting and generalize well, we use the strategy known as Occam's razor, favoring simple theories that explain a lot, using a computationally efficient approximation of the minimum-description-length (MDL) formalism. To unify similar theories found in different environments, we use the description length for clustering and then learn a ``master theory" for each class of theories. To accelerate future learning, we use a lifelong learning strategy where learned theories are stored in a theory hub for future use.

\subsection{Goals \& relation to prior work}

The goal
of the AI Physicist learning agent presented in this paper is quite limited, and does not even remotely approach the ambition level of problem solving by human physicists. The latter is likely to be almost as challenging as artificial general intelligence, which most AI researchers guess remains decades away \cite{muller2016future,grace2018will}.
Rather, the goal of this paper is to take a very modest but useful step in that direction, combining the four physics-inspired strategies above.  

Our approach complements other work on automatic program learning, such as neural program synthesis/induction
\cite{graves2014neural,sukhbaatar2015end,reed2015neural,parisotto2016neuro,devlin2017robustfill,bramley2018learning} 
and symbolic program induction \cite{Muggleton1991,lavrac1994inductive,liang2010learning,ellis2015unsupervised,dechter2013bootstrap} and builds on prior machine-learning work on divide-and-conquer \cite{cormen2009introduction,furnkranz1999separate,ghosh2017divide}, 
network simplification \cite{rissanen1978modeling,hassibi1993second,suzuki2001simple,grunwald2005advances,han2015deep,han2015learning} 
and continuous learning \cite{kirkpatrick2017overcoming,li2017learning,lopez2017gradient,nguyen2017variational}. 
It is often said that babies are born scientists, and there is arguably evidence for use of all of these four strategies during childhood development as well \cite{bramley2018learning}.

There has been significant recent progress on AI-approaches specifically linked to physics, including
physical scene understanding \cite{yildirim2018neurocomputational},
latent physical properties \cite{zheng2018unsupervised,battaglia2016interaction,chang2016compositional},
learning physics simulators \cite{watters2017visual},
physical concept discovery \cite{iten2018discovering},
an intuitive physics engine  \cite{lake2017building},
and
the ``Sir Isaac" automated adaptive inference agent \cite{daniels2015automated}.
Our AI Physicist is different and complementary in two fundamental ways that loosely correspond to the two motivations on the first page: 

\begin{enumerate}
\item All of these papers learn one big model to fit all the data. In contrast, the AI Physicist learns many small models applicable in different domains, using the divide-and-conquer strategy. 
\item Our primary focus is not 
on making approximate predictions or discovering latent variables, but on 
near-exact predictions and complete intelligibility. From the former perspective, it is typically irrelevant if a model parameter changes by a tiny amount, but from a physics perspective, one is quite interested to learn that gravity weakens like distance to the power $2$ rather than $1.99999314$.
\end{enumerate}

We share this focus on intelligibility with a long tradition of research on computational scientific discovery \cite{dvzeroski2007computational}, including the Bacon system \cite{langley1981data} and its successors \cite{langley1989data}, which induced physical laws from observations and which also used a divide-and-conquer strategy. Other work has extended this paradigm to support discovery of differential equation models from multivariate time series \cite{dzeroski1995discovering,bradley2001reasoning,langley2003robust,langley2015heuristic}.

The rest of this paper is organized as follows. In Section \ref{sec:method}, we introduce the architecture of our AI Physicist learning agent, and the algorithms implementing the four strategies. We present the results of our numerical experiments using a suite of physics environment benchmarks in Section \ref{sec:numerical_experiments}, and discuss our conclusions in Section IV, delegating supplementary technical details to a series of appendices.

\section{Methods}
\label{sec:method}

Unsupervised learning of regularities in time series can be viewed as a supervised learning problem of predicting the future from the past.
This paper focuses on the task of predicting the next state vector $\y_t\in \R^d$ in a sequence from the
concatenation $\x_t =(\y_{t-T},...,\y_{t-1})$  of the last $T$ vectors. 
 However, our AI Physicist formalism applies more generally to learning any function $\R^M\mapsto\R^N$ from examples.
In the following we first define \emph{theory}, then introduce a unified AI Physicist architecture implementing the four aforementioned strategies.

\subsection{Definition of Theory}

A theory $\mathcal{T}$ is a 2-tuple $(\f, c)$, where $\f$ is a prediction function that predicts $\y_t$ when $\x_t$ is within the theory's domain, and $c$ is a domain sub-classifier which takes $\x_t$  as input and outputs a logit of whether $\x_t$ is inside this domain. When multiple theories are present, the sub-classifier $c$'s outputs are concatenated and fed into a softmax function, producing probabilities for which theory is 
applicable. 
Both $\f$ and $c$ can be implemented by a neural net or symbolic formula,
and can be set to learnable during training and fixed during prediction/validation.

This definition draws inspirations from physics theories (conditional statements), such as ``a ball not touching anything (\emph{condition}) with vertical velocity and height $(v_0,h_0)$ will a time $t$ later have $\y\equiv(v,h)=(v_0-gt,h_0+v_0 t -gt^2/2)$ (\emph{prediction function})".
For our AI Physicist, theories constitute its ``atoms" of learning, as well as the building blocks for higher-level manipulations.

\begin{figure}[t]
\centerline{\includegraphics[width=100mm]{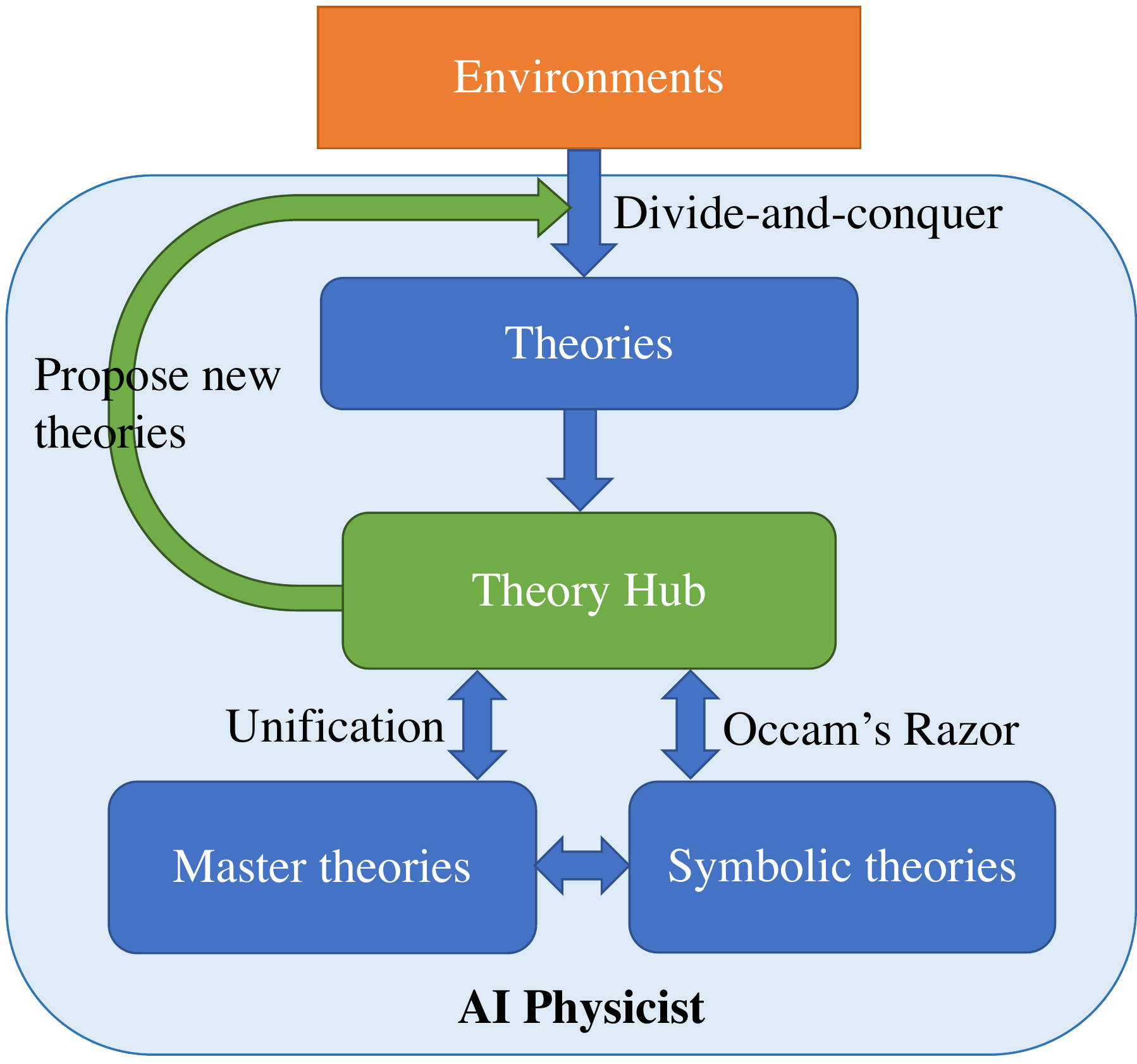}}
\caption{
AI Physicist Architecture
\label{fig:AI_physicist_architecture}
}
\end{figure}
\subsection{Divide-and-Conquer}
\label{sec:divide_and_conquer}

\subsection{AI Physicist Architecture Overview}

Figure \ref{fig:AI_physicist_architecture} illustrates the architecture of the AI Physicist learning agent. At the center is a theory hub which stores the learned and organized theories. When encountering a new environment, the agent first inspects the hub and proposes old theories that help account for parts of the data as well as randomly initialized new theories for the rest of the data. All these theories are trained via our divide-and-conquer strategy, first jointly with our generalized-mean loss then separately to fine-tune each theory in its domain (section \ref{sec:divide_and_conquer}). Successful theories along with the corresponding data are added to the theory hub.

The theory hub has two organizing strategies: (1) Applying Occam's razor, it snaps the learned theories, in the form of neural nets, into simpler symbolic formulas (section \ref{sec:Occams_Razor}). (2) Applying unification, it clusters and unifies the symbolic theories into master theories (section \ref{sec:unification}). The symbolic and master theories can be added back into the theory hub, improving theory proposals for new environments. The detailed AI Physicist algorithm is presented in a series of appendices. It has polynomial time complexity, as detailed in Appendix~\ref{ComplexitySec}.

Conventionally, a function $\f$ mapping $\x_t\mapsto\y_t$  is learned by parameterizing
$\f$ by some parameter vector $\vtheta$ that is adjusted to minimize a loss (empirical risk)
\beq{expected_risk}
\Ell\equiv\sum_t \ell[\f(\x_t), \y_t],
\eeq
where $\ell$ is some non-negative distance function quantifying how far each prediction is from the target, typically satisfying $\ell(\y,\y)=0$.
In contrast, a physicist observing an unfamiliar environment  does typically {\it not} try to predict everything with one model, instead starting with an easier question: is there any part or aspect of the world that can be described? 
For example, when Galileo famously tried to model the motion of swinging lamps in the Pisa cathedral, he completely ignored everything else, and made no attempts to simultaneously predict the behavior of sound waves, light rays, weather, or subatomic particles.
In this spirit, we allow multiple competing theories $\mathbfcal{T}=\{\T_i\}=\{(\f_i, c_i)\}$, $i=1,2,...M$, to specialize in different domains, with a novel generalized-mean loss
\beq{generalized_mean_risk_empirical}
\Ell_\gamma\equiv\sum_t\left(\frac{1}{M}\sum_{i=1}^M \ell[\f_i(\x_t), \y_t]^\gamma\right)^{1/\gamma}
\eeq

When $\gamma<0$, the loss $\Ell_\gamma$
will be dominated by whichever prediction function $\f_i$ fits each data point best. This dominance is controlled by $\gamma$, with 
$\Ell_\gamma\to \min_i \ell[\f_i(\x_t),\y_t]$ in the limit where $\gamma\to-\infty$.
This means that the best way to minimize $\Ell_\gamma$ is for each $\f_i$ to specialize by further improving its accuracy for the data points where it already outperforms the other theories.
The following Theorem \ref{thm:theorem_gradient} formalizes the above intuition, stating that under mild conditions for the loss function $\ell(\cdot,\cdot)$, the generalized-mean loss gives larger gradient w.r.t.~the error $|\hat{\y}_t-\y_t|$ for theories that perform better, so that a gradient-descent loss minimization encourages specialization. 

\begin{theorem}
\label{thm:theorem_gradient}
Let $\hat{\y}^{(i)}_t\equiv \f_i(\x_t)$ denote the prediction of the target $\y_t$ by the function $\f_i$, $i=1,2,...M$.
Suppose that $\gamma<0$ and $\ell(\hat{\y}_t, \y_t) = \ell(|\hat{\y}_t - \y_t|)$ for a monotonically increasing function 
$\ell(u)$ that vanishes on $[0,u_0]$ for some $u_0\geq 0$, with $\ell(u)^\gamma$ differentiable and strictly convex for $u>u_0$. \\
Then if $0<\ell(\hat{\y}_t^{(i)}, \y_t) < \ell(\hat{\y}_t^{(j)}, \y_t)$, we have 
\beq{gradient_greater}
\left|\frac{\partial\Ell_\gamma}{\partial u^{(i)}_t}\right| > \left|\frac{\partial \Ell_\gamma}{\partial u^{(j)}_t}\right|,
\eeq
 where $u^{(i)}_t\equiv|\hat{\y}_t^{(i)} - \y_t|$.
\end{theorem}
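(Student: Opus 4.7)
The plan is to reduce the claim to a pointwise comparison of the magnitudes of $dg/du$ for $g(u) \equiv \ell(u)^\gamma$ at two different values of $u$, and then exploit strict convexity of $g$ combined with $\gamma<0$. First I would differentiate $\Ell_\gamma$ directly. Because the outer sum over $t$ is additive and $u_t^{(i)}$ enters only the time-$t$ summand, the chain rule gives
$$\frac{\partial \Ell_\gamma}{\partial u_t^{(i)}} \;=\; \frac{1}{M}\left(\frac{1}{M}\sum_{k=1}^M \ell_k^\gamma\right)^{\!1/\gamma - 1}\,\frac{1}{\gamma}\,\frac{d\,\ell(u)^\gamma}{du}\bigg|_{u = u_t^{(i)}},$$
where $\ell_k \equiv \ell(u_t^{(k)})$. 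Here I would note that the prefactor $M^{-1}\bigl(M^{-1}\sum_k \ell_k^\gamma\bigr)^{1/\gamma-1}\,/\,\gamma$ depends only on the data point $t$ and the full set of theory errors, not on the particular index being differentiated. Hence, when taking absolute values and comparing the $i$-th and $j$-th partial derivatives, this prefactor cancels and the claim reduces to showing
$$\left|\frac{d\,\ell(u)^\gamma}{du}\bigg|_{u = u_t^{(i)}}\right| \;>\; \left|\frac{d\,\ell(u)^\gamma}{du}\bigg|_{u = u_t^{(j)}}\right|.$$

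Next I would set up the key convexity argument. By hypothesis both $\ell(u_t^{(i)})$ and $\ell(u_t^{(j)})$ are strictly positive, so (using that $\ell$ vanishes on $[0,u_0]$) both arguments lie in the open interval $(u_0,\infty)$ on which $g(u)\equiv\ell(u)^\gamma$ is differentiable and strictly convex. Moreover $g$ is strictly decreasing on $(u_0,\infty)$: $\ell$ is positive and monotonically increasing there, and $\gamma<0$, so $g = \ell^{\gamma}$ is positive with $g'<0$ (this would follow formally by combining the chain rule expression $g' = \gamma\,\ell^{\gamma-1}\ell'$ with the fact that strict convexity of $g$ precludes a constant stretch, so $\ell'>0$ almost everywhere; a clean way is to observe that a strictly convex function on an open interval whose limit at one endpoint is $+\infty$ must be strictly monotone). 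Strict convexity of $g$ means $g'$ is strictly increasing; combined with $g'<0$ this yields that $|g'|$ is strictly decreasing on $(u_0,\infty)$.

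Finally, the assumption $0 < \ell(u_t^{(i)}) < \ell(u_t^{(j)})$ together with the (weak) monotonicity of $\ell$ forces $u_t^{(i)} < u_t^{(j)}$: if instead $u_t^{(i)} \ge u_t^{(j)}$ held, monotonicity would give $\ell(u_t^{(i)}) \ge \ell(u_t^{(j)})$, contradicting the hypothesis. Plugging $u_t^{(i)} < u_t^{(j)}$ into the strict monotonicity of $|g'|$ proved in the previous step yields $|g'(u_t^{(i)})| > |g'(u_t^{(j)})|$, which is exactly the reduced inequality, and the theorem follows.

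I expect the main obstacle to be the third paragraph: carefully tracking the interplay between the negative exponent $\gamma$, the decreasing nature of $g$, and the strict convexity assumption to reach ``$|g'|$ strictly decreasing.'' A sign slip here would flip the conclusion. Once that monotonicity of $|g'|$ is nailed down, the chain-rule bookkeeping and the deduction $u_t^{(i)} < u_t^{(j)}$ are essentially routine.
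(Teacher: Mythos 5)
Your proof is correct and follows essentially the same route as the paper's: isolate the common positive prefactor in $\partial\Ell_\gamma/\partial u_t^{(i)}$, reduce to comparing $|d\,\ell(u)^\gamma/du|$ at $u_t^{(i)}$ vs.\ $u_t^{(j)}$, use $\gamma<0$ and $\ell'\ge 0$ to get $d\,\ell(u)^\gamma/du\le 0$, then invoke strict convexity to conclude $-d\,\ell(u)^\gamma/du$ is strictly decreasing, and finish with $u_t^{(i)}<u_t^{(j)}$ from monotonicity of $\ell$. One minor simplification relative to your write-up: you do not actually need $g'<0$ strictly (the aside about ``strict convexity precludes a constant stretch'' and the endpoint-blowup remark are unnecessary); $g'\le 0$ suffices to write $|g'|=-g'$, and strict convexity alone then delivers the strict decrease of $|g'|$, which is exactly what the paper does.
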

Appendix \ref{proof_theorem_gradient} gives the proof, and also shows
that this theorem applies to mean-squared-error (MSE) loss $\ell(u)=u^2$,
mean-absolute-error loss $\ell(u)=|u|$, Huber loss and our description-length loss from the next section.

We find empirically that the simple choice $\gamma=-1$ works quite well, striking a good balance between encouraging specialization for the best theory and also giving some gradient for theories that currently perform slightly worse. We term  this choice $\Ell_{-1}$ the ``harmonic loss", because it corresponds to the harmonic mean of the losses for the different theories. Based on the harmonic loss, we propose an unsupervised differentiable divide-and-conquer (DDAC) algorithm (Alg.~\ref{alg:divide_and_conquer} in Appendix~\ref{DivideAndConquerAlgo}) that simultaneously learns prediction functions $\{\f_i\}$ and corresponding domain classifiers $\{c_i\}$ from observations.

Our DDAC method's combination of multiple prediction modules into a single prediction is reminiscent of AdaBoost \cite{freund1997decision}.
While AdaBoost gradually upweights those modules that best predict {\it all} the data, DDAC instead 
identifies complementary modules that each predict some {\it part} of the data best, and encourages these modules to simplify and improve by specializing on these respective parts.

\subsection{Occam's Razor}
\label{sec:Occams_Razor}

The principle of  Occam's razor, that simpler explanations are better, is quite popular among physicists.
This preference for parsimony helped dispense with phlogiston, aether and other superfluous concepts.

Our method therefore incorporates the minimum-description-length (MDL) formalism \cite{rissanen1978modeling,grunwald2005advances}, which provides an elegant mathematical implementation of Occam's razor. It is rooted in Solomonoff's theory of inference \cite{solomonoff1964formal} and is linked to Hutter's AIXI approach to artificial general intelligence \cite{hutter2000theory}. 
The description length (DL) of a dataset $\D$ is defined as the number of bits required to describe it.
For example, if regularities are discovered that enable data compression, then the corresponding description length is defined as 
the number of bits of the program that produces $\D$ as its output (including both the code bits and the compressed data bits).
In our context of predicting a time series, this means that the description length is the number of bits required to describe the theories used plus the number of bits required to store all prediction errors.
Finding the optimal data compression and hence computing the MDL is a famous hard problem that involves searching an exponentially large space, but any discovery reducing the description length is a step in the right direction, and provably avoids the overfitting problem that plagues many alternative machine-learning strategies \cite{rissanen1978modeling,grunwald2005advances}.

The end-goal of the AI Physicist is to discover theories $\mathbfcal{T}$ minimizing the total description length, given by
\beq{description_length}
\DL(\mathbfcal{T},\D)=\DL(\mathbfcal{T})+\sum_t\DL(\u_t),
\eeq
where $\u_t=\hat{\y}_t-\y_t$ is the prediction error at time step $t$. By discovering simple theories that can each account for parts of the data very well, the AI Physicist strives to make both $\DL(\mathbfcal{T})$ and $\sum_t\DL(\u_t)$ small.

Physics has enjoyed great success in its pursuit of simpler theories using rather vague definitions of simplicity. In the this spirit, we choose to compute the description length DL not exactly, but using an approximate heuristic that is numerically efficient, and significantly simpler than more precise versions such as \cite{rissanen1983universal},
paying special attention to rational numbers since they are appear in many physics theories.
We compute the DL of both theories $\T$ and prediction errors $\u_t$ as the sum of the DL of all numbers that specify them, using the following conventions for the DL of integers, rational numbers and real numbers. Our MDL implementation differs from popular machine-learning approaches whose goal is efficiency and generalizability  \cite{Hinton:1993:KNN:168304.168306,han2015deep,blierdescription} rather than intelligibility.

\begin{figure}[pbt]
\centerline{\includegraphics[width=88mm]{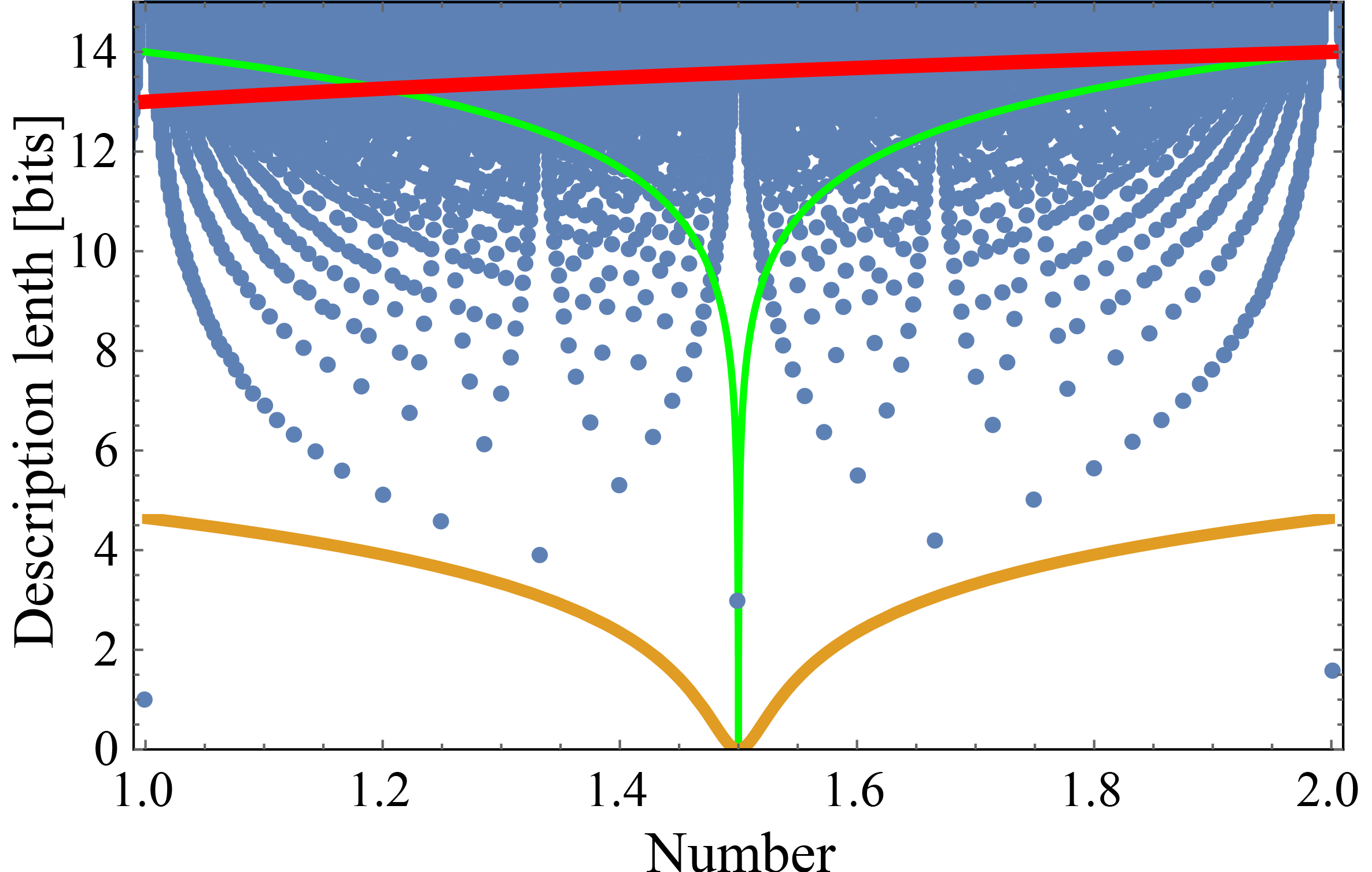}}
\caption{The description length $\DL$ is shown for real numbers $p$ with $\epsilon=2^{-14}$ (rising curve) and for 
rational numbers (dots).
Occam's Razor favors lower DL, and our MDL rational approximation of a real parameter $p$ is the lowest point after taking these ``model bits" specifying the approximate parameter and adding the ``data bits" $\Ell$ required to specify the prediction error made.
The two symmetric curves illustrate the simple example where $\Ell=\logplus\left({x-x_0\over\epsilon}\right)$
for $x_0=1.4995$, $\epsilon=2^{-14}$ and $0.02$, respectively.}
\label{RationalComplexityFig}
\end{figure}
The number of binary digits required to specify a natural number $n=1,2,3,...$ is approximately $\log_2 n$, so we define 
$\DL(n)\equiv\log_2 n$ for natural numbers. 
For an integer $m$, we define
\beq{IntegerDLeq}
\DL(m)\equiv 
\log_2 (1+|m|).
\eeq
For a rational number $q=m/n$, the description length is the sum of that for its integer numerator and (natural number) denominator, as illustrated in \fig{RationalComplexityFig}:
\beq{RationalDLeq}
\DL\left({m\over n}\right)=\log_2[(1+|m|)n].
\eeq
For a real number $r$ and a numerical precision floor $\epsilon$, we define
\beq{RealDLeq}
\DL(r)=\logplus\left({r\over\epsilon}\right),
\eeq
where the function
\beq{LogplusDefEq}
\logplus(x)\equiv{1\over 2}\log_2\left(1+x^2\right)
\eeq
is plotted in \fig{RationalComplexityFig}.
Since  $\logplus(x)\approx\log_2 x$ for $x\gg1$, 
$\DL(r)$ is approximately the description length of  the integer closest to $r/\epsilon$.
Since  $\logplus(x)\simpropto x^2$ for $x\ll 1$, $\DL(r)$ simplifies to a quadratic (mean-squared-error) loss function below the numerical precision, which will prove useful 
below.\footnote{Natural alternative definitions of $\logplus(x)$ include $\log_2\left(1+|x|\right)$, $\log_2\max(1,|x|)$, $(\ln 2)^{-1}\sinh^{-1}|x|$
and $(2\ln 2)^{-1}\sinh^{-1}(x^2)$. Unless otherwise specified, we choose $\epsilon=2^{-32}$ in our experiments.}

Note that as long as all prediction absolute errors $|u_i|\gg\epsilon$ 
for some dataset,

minimizing the total description length $\sum_i \DL(u_i)$ instead of the MSE $\sum_i u_i^2$
corresponds to minimizing the geometric mean instead of the arithmetic mean of the squared errors, which encourages focusing more on improving already well-fit points. 
$\sum_i \DL(u_i)$ drops by 1 bit whenever one prediction error is halved, which is can typically be achieved by fine-tuning the fit for many valid data points that are already well predicted while increasing DL for bad or extraneous points at most marginally.

For numerical efficiency, our AI Physicist minimizes the description length of 
\eq{description_length} in two steps:
1) All model parameters are set to trainable real numbers, and the DDAC algorithm is applied to minimize the harmonic loss $\Ell_{-1}$ with 
$\ell(\u)\equiv\sum_i \DL(u_i)$ using \eq{RealDLeq} and the annealing procedure for the precision floor described in Appendix~\ref{DivideAndConquerAlgo}.
2) Some model parameters are replaced by rational numbers as described below, followed by re-optimization of the other parameters. 
The idea behind the second step is that if a physics experiment or neural net training produces a parameter $p=1.4999917$, it would be natural to interpret this as a hint, and to check if $p=3/2$ gives an equally acceptable fit to the data, reducing total DL.
We implement step 2 using continued fraction expansion as described in Appendix \ref{appendix:OccamsRazorAlgo} and illustrated in \fig{NumberMDLfig}.

\begin{figure}[pbt]
\centerline{\includegraphics[width=88mm]{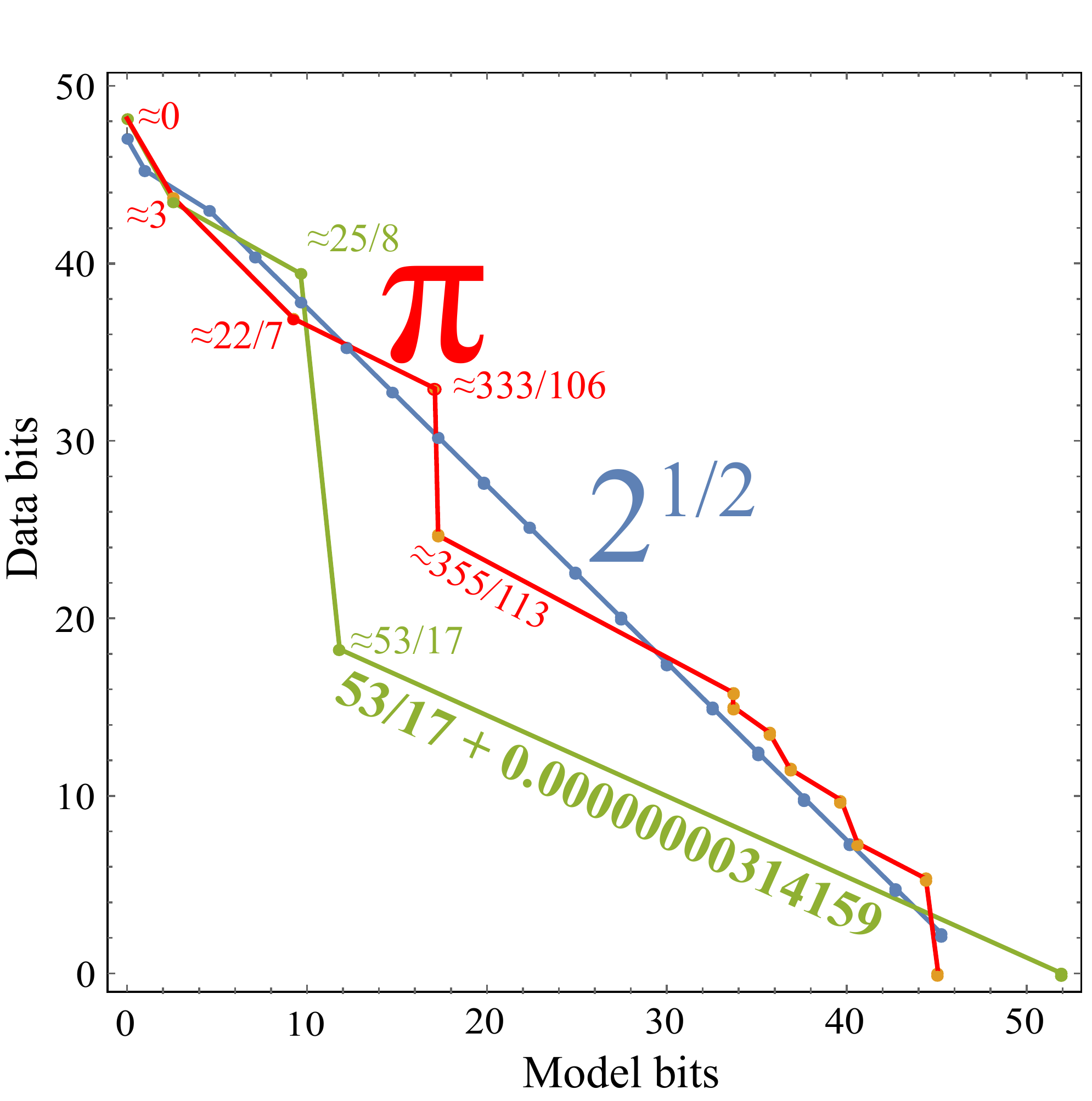}}
\caption{Illustration of our minimum-description-length (MDL) analysis 
of the parameter vector $\p=\{\pi,\sqrt{2},3.43180632382353\}$.
We approximate each real number $r$ as a fraction $a_k/b_k$ using the first $k$ terms of its  continued fraction expansion, and for each integer $k=1,...$, we plot the number of ``data bits" required to encode the prediction error $r-a_k/b_k$ to 14 decimal places versus the number of ``model bits" required to encode the rational approximation $a_k/b_k$, as described in the text. We then select the point with smallest bit sum (furthest down/left from the diagonal) as our first approximation candidate to test. 
Generic irrational numbers are incompressible; the total description length (model bits+data bits) is roughly independent of $k$ as is seen for $\pi$ and $\sqrt{2}$, corresponding to a line of slope $-1$ around which there are small random fluctuations. In contrast, the green/light grey curve (bottom) is for a parameter that is anomalously close to a rational number, and the curve reveals this by the approximation $53/17$ reducing the total description length (model$+$data bits) by about 16 bits.
\label{NumberMDLfig}
}
\end{figure}

\subsection{Unification}
\label{sec:unification}

Physicists aspire not only to find simple theories that explain aspects of the world accurately, but also to discover underlying similarities between theories and {\it unify} them. 
For example, when James Clerk Maxwell corrected and unified four key formulas describing electricity and magnetism into his eponymous equations ($\mathrm{d}\F=0$, 
$\mathrm{d}\star \F=\J$ in differential form notation), he revealed the nature of light and enabled the era of wireless communication.

Here we make a humble attempt to automate part of this process. The goal of the unification is to output a master theory $\mathscr{T}=\{(\f_\p,\cdot)\}$,  such that varying the parameter vector $\p\in\R^n$ can generate a continuum of theories $(\f_\p,\cdot)$ including previously discovered ones.
For example, Newton's law of gravitation can be viewed as a master theory unifying the gravitational force formulas around different planets by introducing a parameter $p$ corresponding to planet mass. Einstein's special relativity can be viewed as a master theory unifying the approximate formulas for $v\ll c$ and $v\approx c$ motion. 

We perform unification by first computing the description length $\text{dl}^{(i)}$ of the prediction function $\f_i$ (in  symbolic form) for each theory $i$ and performing clustering on $\{\text{dl}^{(i)}\}$. Unification is then achieved by discovering similarities and variations between the symbolic formulas in each cluster, retaining the similar patterns, and introducing parameters in place of the parameters that vary as detailed in Appendix \ref{UnificationAlgo}.

\subsection{Lifelong Learning}

Isaac Newton once said ``If I have seen further it is by standing on the shoulders of giants", emphasizing the utility of  building on past discoveries. At a more basic level, our past experiences enable us humans to model new environments much faster than if we had to re-acquire all our knowledge from scratch. We therefore embed a 
lifelong learning strategy
into the architecture of the AI Physicist. As shown in Fig. \ref{fig:AI_physicist_architecture} and Alg.~\ref{alg:overall_algorithm}, the theory hub stores successfully learned theories, organizes them with our Occam's razor and unification algorithms (reminiscent of what humans do while dreaming and reflecting), and when encountering new environments, uses its accumulated knowledge to propose new theories that can explain parts of the data. This both
ensures that past experiences are not forgotten and enables faster learning in novel environments. The detailed algorithms for proposing and adding theories are in Appendix \ref{appendix:theory_proposal_adding}.

\section{Results of  Numerical Experiments}
\label{sec:numerical_experiments}

\subsection{Physics Environments}

We test our algorithms on two suites of benchmarks, each with increasing complexity. In all cases, the goal is to predict the two-dimensional motion as accurately as possible. One suite involves chaotic and highly nonlinear motion of a charged double pendulum in two adjacent electric fields. The other suite involves balls affected by gravity, electromagnetic fields, springs and bounce-boundaries, as exemplified in \fig{WorldExampleFig}.
Within each spatial region, the force corresponds to a potential energy function $V\propto (ax+by+c)^n$ for some constants $a$, $b$, $c$, where 
$n=0$ (no force), 
$n=1$ (uniform electric or gravitational field), 
$n=2$ (spring obeying Hooke's law)
or $n=\infty$ (ideal elastic bounce), and optionally involves also a uniform magnetic field.
The environments are summarized in Table~\ref{DetailedResultsTable2}.

\begin{figure}[phbt]
\centerline{\includegraphics[width=86mm]{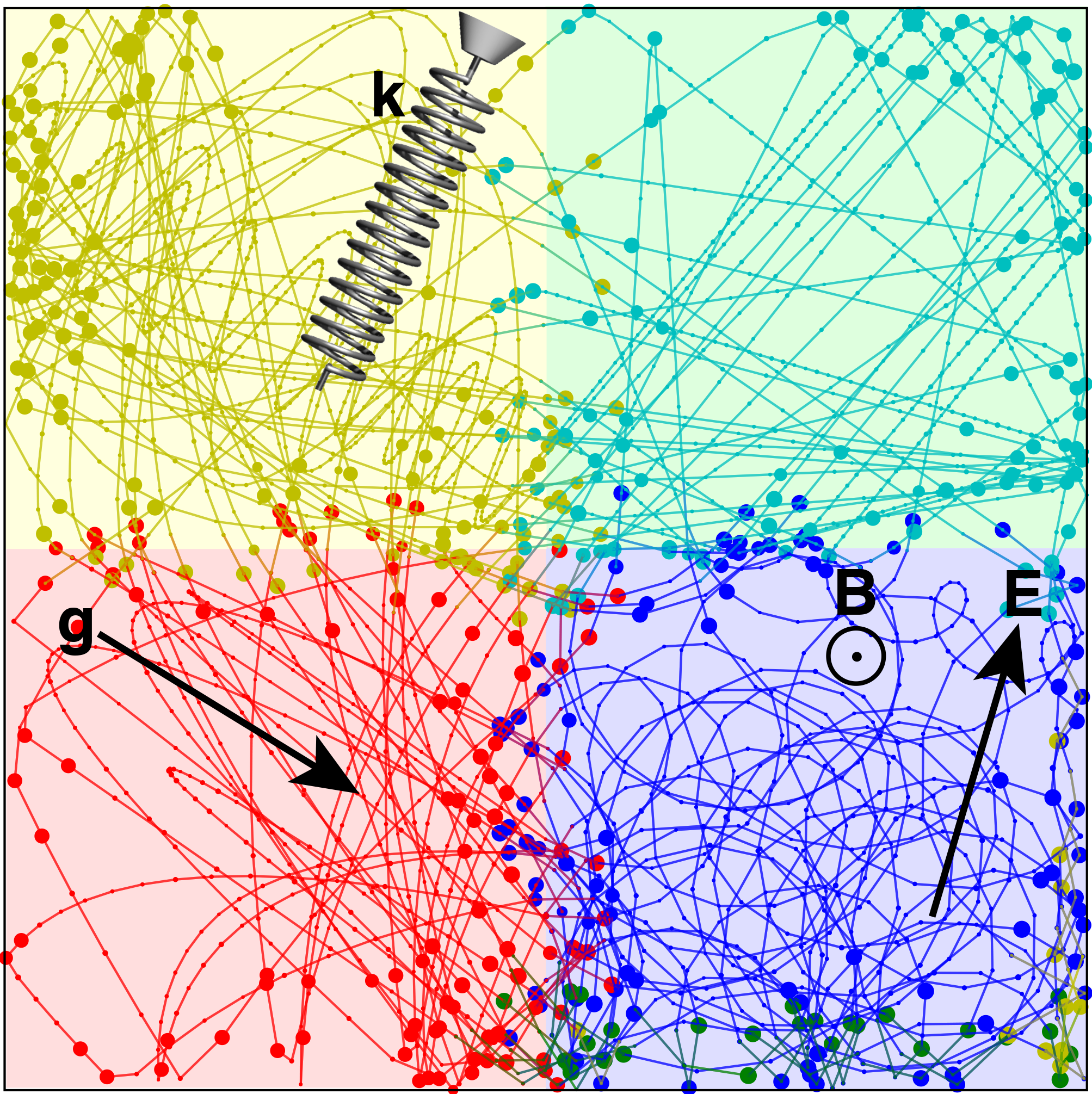}}
\caption{In this sample mystery world, a ball moves through a harmonic potential (upper left quadrant), a gravitational field (lower left) and an electromagnetic field (lower right quadrant) and bounces elastically from four walls. 
The only input to the AI Physicist is the sequence of dots (ball positions); the challenge is to learn all boundaries and laws of motion (predicting each position from the previous two). 
The color of each dot represents the domain into which it is classified by 
$\c$,
and its area represents the description length of the error with which its position is predicted ($\epsilon=10^{-6}$) after the DDAC
(differentiable divide-and-conquer) algorithm; the AI Physicist tries to minimize the total area of all dots.
\label{WorldExampleFig}
}
\end{figure}

\subsection{Numerical Results}

In the mystery world example of \fig{WorldExampleFig}, after the DDAC algorithm~\ref{alg:divide_and_conquer} taking the sequence of coordinates as the only input, we see that the AI Physicist has learned to simultaneously predict the future position of the ball from the previous two, and classify 
without external supervision the observed inputs into four big physics domains. The predictions are seen to be more accurate deep inside the domains (tiny dots) than near boundaries (larger dots) where transitions and bounces create small domains with laws of motion that are harder to infer because of complexity and limited data. Because these small domains can be automatically inferred and eliminated once the large ones are known as described in Appendix~\ref{DomainElimination}, all accuracy benchmarks quoted below refer to points in the large domains only.

After DDAC, the AI Physicist performs Occam's-razor-with-MDL (Alg.~\ref{alg:MDL_simplification}) 
on the learned theories. As an example, it discovers that the motion deep inside the lower-left quadrant obeys the difference equation parameterized by a learned 3-layer neural net, which after the first collapseLayer transformation simplifies to
\begin{equation}
\label{RawDifferenceEq}
\begin{aligned}
\hat{\y}_t&=&\left(
\begin{tabular}{rrrr}
-0.99999994&0.00000006&1.99999990&-0.00000012\\ 
-0.00000004&-1.0000000&0.00000004&2.00000000 
\end{tabular}
\right)\x_t\nonumber\\
&+& 
\left(
\begin{tabular}{r}
 0.01088213\\ 
-0.00776199
\end{tabular}
\right),
\end{aligned}
\end{equation}

with $\DL(\f)=212.7$ and $\sum_t\DL(\u_t)=2524.1$.
The snapping stage thereafter simplifies this to
\beq{RawDifferenceEqSnapped}
\hat{\y}_t=\left(
\begin{tabular}{rrrr}
-1&0&2&0\\ 
0&-1&0&2 
\end{tabular}
\right)\x_t
+ 
\left(
\begin{tabular}{r}
 0.010882\\ 
-0.007762 
\end{tabular}
\right).
\eeq
which has lower description length in both model bits ($\DL(\f)=55.6$) and data bits ($\sum_t\DL(\u_t)=2519.6$) and gets transformed to the symbolic expressions
\begin{eqnarray}
\hat{x}_{t+2}&=&2 x_{t+1}-x_t+0.010882,\nonumber\\
\hat{y}_{t+2}&=&2 y_{t+1}-y_t-0.007762,
\end{eqnarray}
where we have writen the 2D position vector $\y=(x,y)$ for brevity.
During unification (Alg.~\ref{UnificationAlgo}), the AI Physicist discovers multiple clusters of theories based on the DL of each theory, where one cluster has DL ranging between 48.86 and 55.63, which it unifies into a master theory $\f_\p$ with
\begin{equation}
\label{GravityUnifiedEq}
\begin{aligned}
\hat{x}_{t+2}&=&2 x_{t+1}-x_t+p_1,\nonumber\\
\hat{y}_{t+2}&=&2 y_{t+1}-y_t+p_2,
\end{aligned}
\end{equation}
effectively discovering a ``gravity" master theory out of the different types of environments it encounters.
If so desired, the difference equations~(\ref{GravityUnifiedEq}) can be automatically generalized to the more familiar-looking differential equations 
\begin{eqnarray}
\ddot x&=&g_x,\nonumber\\
\ddot y&=&g_y,\nonumber
\end{eqnarray}
where $g_i\equiv p_i(\Delta t)^2$,
and both the Harmonic Oscillator Equation and Lorentz Force Law of electromagnetism can be analogously auto-inferred from other master theories learned.

Many mystery domains in our test suite involve laws of motion whose parameters include both rational and irrational numbers. To count a domain as ``solved" below, we use the very stringent requirement that any rational numbers (including integers) must be discovered {\it exactly}, while irrational numbers must be recovered with accuracy $10^{-4}$.

We apply our AI Physicist to 40 mystery worlds in sequence (Appendix~\ref{DetailedResultsSec}). After this training, we apply it to a suite of 40 additional worlds to test how it learns different numbers of examples.
The results 
are shown in tables~\ref{DetailedResultsTable}
and~\ref{DetailedResultsTable2}, and  
Table~\ref{ResultsSummaryTable} summarizes these results using the median over worlds.
For comparison, we also show results for two simpler agents with similar parameter count: a ``baseline" agent consisting of a three-layer feedforward MSE-minimizing leakyReLU network and a ``newborn" AI Physicist that has not seen any past examples and therefore cannot benefit from the lifelong learning strategy.

We see that the newborn agent outperforms baseline on all the tabulated measures, and that the AI Physicist does still better. 
Using all data, the Newborn agent and AI Physicist are able to predict
with mean-squared prediction error below $10^{-13}$, more than nine orders of magnitude below baseline. Moreover, the Newborn and AI Physicist agents are able to simultaneously learn the domain classifiers with essentially perfect accuracy, without external supervision. Both agents are able to solve above 90\% of all the 40 mystery worlds according to our stringent criteria.

The main advantage of the AI Physicist over the Newborn agent is seen to be its learning speed, attaining given accuracy levels faster, especially during the early stage of learning. Remarkably, for the subsequent 40 worlds, the AI Physicist reaches 0.01 MSE within 35 epochs using as little as 1\% of the data, performing almost as well as with 50\% of the data much better than the Newborn agent. This illustrates that the lifelong learning strategy enables the AI Physicist to learn much faster in novel environments with less data.
This is much like an experienced scientist can solve new problems way faster than a beginner by building on prior knowledge about similar problems.
  
\begin{table}
\begin{center}
\begin{tabular}{|l|r|r|r|}
\hline
Benchmark			&Baseline		&Newborn		&AI Physicist\\
\hline
$\log_{10}$ mean-squared error 	&-3.89	&-13.95		&-13.88\\
Classification accuracy	&67.56\%		&100.00\%	&100.00\%\\
Fraction of worlds solved	&0.00\%		&90.00\%		&92.50\%\\
Description length for $\f$		&11,338.7		&198.9 		&198.9\\
Epochs until $10^{-2}$ MSE		&95			&83 			&15\\
Epochs until $10^{-4}$ MSE	&6925		&330		&45\\
Epochs until $10^{-6}$ MSE&$\infty$		&5403		&3895\\
Epochs until $10^{-8}$ MSE&$\infty$		&6590		&5100\\
\hline
$\log_{10}$ MSE		&			&			&\\
$\>\>$ using 100\% of data 	&-3.78		&-13.89		&-13.89\\
$\>\>$ using 50\% of data 	&-3.84		&-13.76		&-13.81	\\
$\>\>$ using 10\% of data 	&-3.16		&-7.38		&-10.54\\
$\>\>$ using 5\% of data 	&-3.06		&-6.06 		&-6.20\\
$\>\>$ using 1\% of data 	&-2.46		&-3.69 		&-3.95\\
\hline
Epochs until $10^{-2}$ MSE 	&			& 			&	\\
$\>\>$ using 100\% of data 	&95		&80	&15\\
$\>\>$ using 50\% of data 	&190		&152.5	&30\\
$\>\>$ using 10\% of data 	&195		&162.5	&30\\
$\>\>$ using 5\% of data 	&205		&165	&30\\
$\>\>$ using 1\% of data 	&397.5		&235	&35\\
\hline
\end{tabular}
\end{center}
\caption{Summary of numerical results, taking the median over 40 mystery environments
from Table~\ref{DetailedResultsTable} (top part) and on 40 novel environments with varying fraction of random examples (bottom parts), where each world is run with 10 random initializations and taking the best performance.
Accuracies refer to big regions only.
\label{ResultsSummaryTable}
}
\end{table}

\begin{figure}[phbt]
\centerline{\includegraphics[width=86mm]{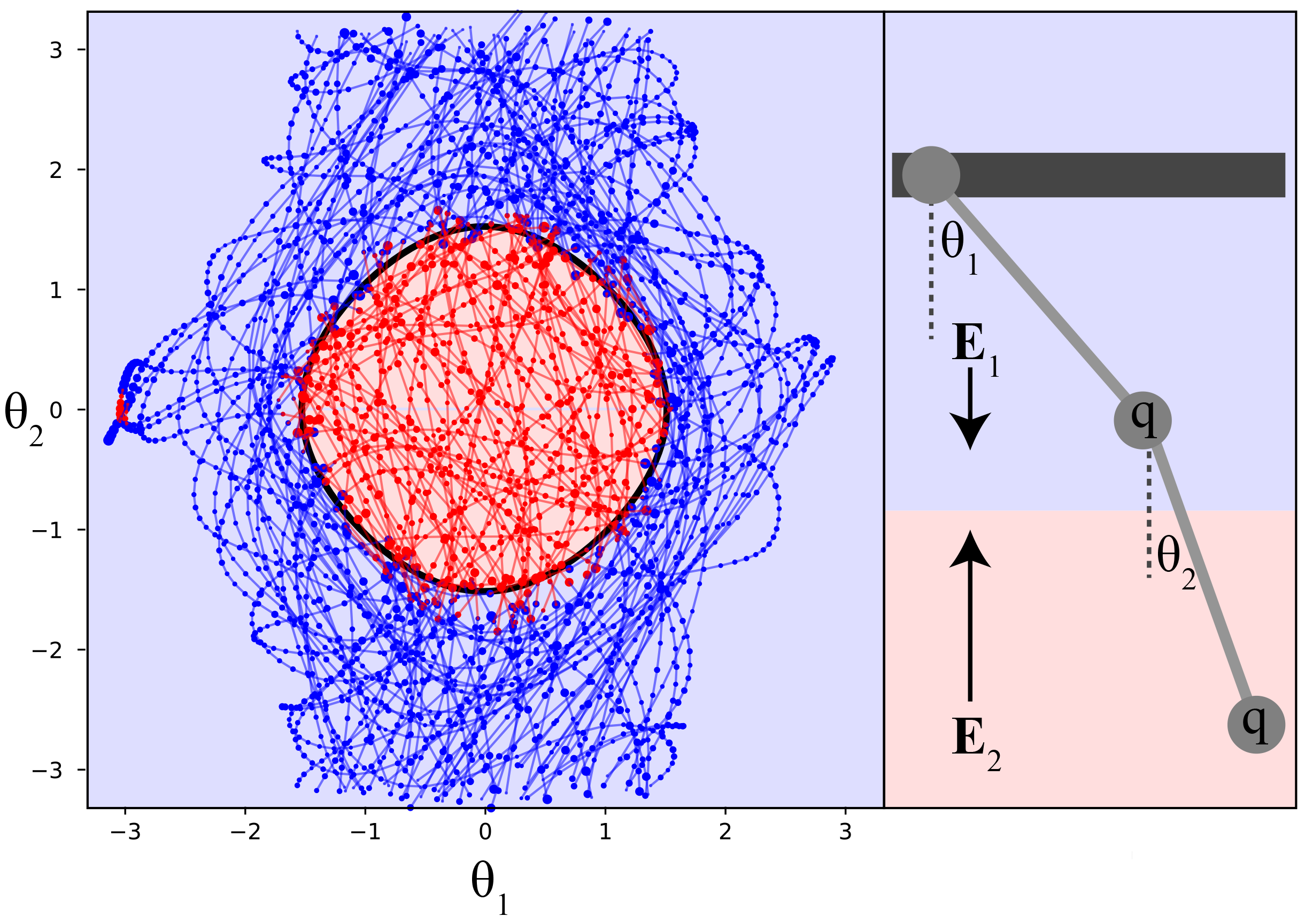}}
\vskip-5mm
\caption{In this mystery, a charged double pendulum moves through two different electric fields $\E_1$ and $\E_2$, with a domain boundary corresponding to $\cos\theta_1+\cos\theta_2=1.05$ (the black curve above left, where the lower charge crosses the $\E$-field boundary).
The color of each dot represents the domain into which it is classified by a Newborn agent, and its area represents the description length of the error with which its position is predicted, for a precision floor $\epsilon\approx 0.006$. In this world, the Newborn agent has a domain prediction accuracy of 96.5\%.
\label{PendulumFig}
}
\end{figure}

Our double-pendulum mysteries (Appendix~\ref{appendix:double_pendulum}) are more challenging for all the agents, because the motion is more nonlinear and indeed chaotic.
Although none of our double-pendulum mysteries get exactly solved according to our very stringent above-mentioned criterion, \fig{PendulumFig}
 illustrates that the Newborn agent does a good job: it discovers the two domains and classifies points into them with an accuracy of 96.5\%. Overall, the Newborn agent has a median best accuracy of 91.0\% compared with the baseline of  76.9\%.
 The MSE prediction error is comparable to the baseline performance ($\sim4\times 10^{-4})$ in the median, since both architectures have similar large capacity.
 We analyze this challenge and opportunities for improvement below.

\section{Conclusions}

We have presented a toy ``AI Physicist" unsupervised learning agent 
centered around the learning and manipulation of theories, which in polynomial time learns to parsimoniously predict both aspects of the future (from past observations) and the domain in which these predictions are accurate.

\subsection{Key findings}

Testing it on a suite of mystery worlds involving random combinations of gravity, electromagnetism, harmonic motion and elastic bounces, we found that its divide-and-conquer and Occam's razor strategies effectively identified domains with different laws of motion and reduced the mean-squared prediction error billionfold, typically recovering integer and rational theory parameters exactly. 
These two strategies both encouraged prediction functions to specialize: 
the former on the domains they handled best, and the latter on the data points within their domain that they handled best.
Adding the lifelong learning strategy 
greatly accelerated learning in novel environments.

\subsection{What has been learned?}

Returning to the broader context of unsupervised learning from \Sec{IntroSec} raises two important questions: what is the {\it difficulty} of the problems that our AI physicist solved, and what is the {\it generality} of our paradigm?

In terms of {\it difficulty}, our solved physics problems are clearly on the easier part of the spectrum, so if we were to have faced the {\it supervised} learning problem where the different domains were pre-labeled, the domain learning would have been a straightforward classification task and the forecasting task could have been easily solved by a standard feedforward neural network.
Because the real world is generally unlabeled, we instead tackled the more difficult problem where boundaries of multiple domains had to be learned concurrently with the dynamical evolution rules in a fully 
unsupervised fashion. 
The dramatic performance improvement over a traditional neural network seen in Table~\ref{DetailedResultsTable} reflects the power of the divide-and-conquer and Occam's razor strategies, and their robustness is indicated by the the fact that unsupervised domain discovery worked well even for the two-field non-linear double-pendulum system whose dynamic is notoriously chaotic and whose domain boundary is the curved rhomboid $\cos\theta_1 + \cos\theta_2 = 1.05$.

In terms of {\it generality}, our core contribution lies in the AI physicist paradigm we propose (combining divide-and-conquer, Occam’s razor, unification and lifelong learning), not in the specific implementation details.
Here we draw inspiration from the history of the Turing Machine: Turing's initial implementation of a universal computer was very inefficient for all but toy problems, but his framework laid out the essential architectural components that subsequent researchers developed into today's powerful computers.
What has been learned is that our AI physicist paradigm outperforms traditional deep learning
on a test suite of problems even though it is a fully general paradigm that is was not designed specifically for these problems. For example, it is defined to work for an arbitrary number of input spatial dimensions, spatial domains, past time steps used, boundaries of arbitrary shapes, and evolution laws of arbitrary complexity.

From the above-mentioned successes and failures of our paradigm, we have also learned about promising opportunities for improvement of the implementation which we will now discuss.
First of all, the more modest success in the double-pendulum experiments illustrated the value of  learned theories being {\it simple}: if they are highly complex, they are less likely to unify or generalize to future environments, and the correspondingly complex baseline model will have less incentive to specialize because it has enough expressive power to approximate the motion in all domains at once. 
It will therefore be valuable to improve techniques for simplifying complex learned neural nets.
The specific implementation details for the Occam's Razor toolkit would then change, but the principle and numerical objective would remain the same: reducing their total description length from \eq{description_length}.
There are many promising opportunities for this using techniques from the Monte-Carlo-Markov-Chain-based and genetic techniques \cite{real2017large}, reinforcement learning \cite{zoph2016neural,baker2016designing} and symbolic regression \cite{schmidt2009distilling,udrescu2019ai} 
literature to simplify and shrink the model architecture.
Also, it will be valuable and straightforward to generalize our implementation to simplify not only the prediction functions, but also the classifiers, for example to find sharp domain boundaries composed of hyperplanes or other simple surfaces.

Analogously, there are many ways in which the unification and life-long learning toolkits can be improved while staying within our AI physicist paradigm. For example, unification can undoubtedly be improved by using more sophisticated clustering techniques for grouping the learned theories with similar ones. Life-long learning can probably be made more efficient by using better methods for determining which previous theories to try when faced by new data, for example by training a separate neural network to perform this prediction task.

\subsection{Outlook}
In summary, these and other improvements to the algorithms that implement our AI Physicist paradigm could enable future unsupervised learning agents to learn simpler and more accurate models faster from fewer examples, and also to discover accurate symbolic expressions for more complicated physical systems. More broadly, AI has been used with great success to tackle problems in diverse areas of physics,
ranging from quantum state reconstruction \cite{carrasquilla2019reconstructing} to phase transitions \cite{carrasquilla2017machine,wang2016discovering,van2017learning}, planetary dynamics \cite{lam2018machine} and particle physics \cite{baldi2014searching}. 
We hope that building on the ideas of this paper may
one day enable AI to help us discover entirely novel physical theories from data.

  


{\bf Acknowledgements:} 
This work was supported by the The Casey and Family Foundation, the Ethics and Governance of AI Fund, the Foundational Questions Institute and the Rothberg Family Fund for Cognitive Science. We thank Isaac Chuang, John Peurifoy and Marin Solja\v{c}i\'c for helpful discussions and suggestions,  and the Center for Brains, Minds, and Machines (CBMM) for hospitality.

\chapter{Learnability phase transition: onset of learning}
\label{chap3:IB}

As I have stated in the Introduction (Chapter \ref{chap1:introduction}), the ability to solve a universal two-term tradeoff: a term  on task performance and a term on controlling complexity, provides an important perspective in intelligence. When the relative strength between the two terms vary, how do the learning bahave? In this chapter, and Chapter \ref{chap4:IB_phase_transition} and Chapter \ref{chap5:distillation}, we set out to address this question. Specifically, we study  the two-term tradeoff in the Information Bottleneck paradigm \cite{tishby2000information}, whose objective is based on information, and provides an insightful and principled approach for balancing compression and prediction for representation learning. This chapter and Chapter \ref{chap4:IB_phase_transition} will focus on the phase transitions, which like the phase transitions in physics, have key quantities for the system changing in a discontinuous way. Chapter \ref{chap5:distillation} will directly study the two-term tradeoff curve in the binary classification cases, where we introduce a method for directly compute the Pareto frontier.

In this chapter\footnote{This extended version is published in \emph{Entropy} 2019, \emph{21}(10), 924, \href{https://www.mdpi.com/1099-4300/21/10/924}{``Learnability for the information bottleneck''}, Wu, Tailin,
Ian Fischer, Isaac Chuang, Max Tegmark \cite{wu2019learnabilityEntropy}. Also \href{Association for Uncertainty in Artificial Intelligence
}{published} at Conference on Uncertainty in Artificial Intelligence (\emph{UAI} 2019) and \href{https://openreview.net/forum?id=SJePKo5HdV}{presented} at \emph{ICLR} 2019 LLD workshop as spotlight. \href{https://arxiv.org/abs/1907.07331}{arXiv: 1907.07331}.
}, we focus on  understanding the learnability transition in IB.
The IB objective $I(X;Z)-\beta I(Y;Z)$ employs a Lagrange multiplier $\beta$ to tune the trade-off between compression and prediction. 
However, in practice, not only is $\beta$ chosen empirically without theoretical guidance, there is also a lack of theoretical understanding between $\beta$, learnability, the intrinsic nature of the dataset and model capacity. 
In this chapter, we show that if $\beta$ is improperly chosen, learning cannot happen -- the trivial representation $P(Z|X)=P(Z)$ becomes the global minimum of the IB objective. 
We show how this can be avoided, by identifying a sharp phase transition between the unlearnable and the learnable which arises as $\beta$ is varied.
This phase transition defines the concept of IB-Learnability.
We prove several sufficient conditions for IB-Learnability, which provides theoretical guidance for choosing a good $\beta$. 
We further show that IB-learnability is determined by the largest \emph{confident}, \emph{typical}, and \emph{imbalanced subset} of the examples (the \emph{conspicuous subset}),
and discuss its relation with model capacity. 
We give practical algorithms to estimate the minimum $\beta$ for a given dataset. 
We also empirically demonstrate our theoretical conditions with analyses of synthetic datasets, MNIST, and CIFAR10.

\section{Introduction}
\label{sec:introduction}

\citet{tishby2000information} introduced the \textit{Information Bottleneck} (IB) objective function which learns a representation $Z$ of observed variables $(X,Y)$ that retains as little information about $X$ as possible, but simultaneously captures as much information about $Y$ as possible:
\begin{equation}
\label{eq:IB_beta}
\min \IB_\beta(X,Y;Z) = \min [I(X;Z) - \beta I(Y;Z)]
\end{equation}
$I(\cdot)$ is the mutual information.
The hyperparameter $\beta$ controls the trade-off between compression and prediction, in the same spirit as Rate-Distortion Theory~\citep{shannon}, but with a learned representation function $P(Z|X)$ that automatically captures some part of the ``semantically meaningful'' information, where the semantics are determined by the observed relationship between $X$ and $Y$. The IB framework has been extended to and extensively studied in a variety of scenarios, including Gaussian variables \citep{chechik2005information}, meta-Gaussians \citep{rey2012meta}, continuous variables via variational methods \citep{alemi2016deep,chalk2016relevant,fischer2018the}, deterministic scenarios \citep{strouse2017deterministic,kolchinsky2018caveats}, geometric clustering \citep{strouse2017information}, and is used for learning invariant and disentangled representations in deep neural nets \citep{achille2018emergence,achille2018information}. 

From the IB objective (Eq. \ref{eq:IB_beta}) we see that when $\beta\to0$ it will encourage $I(X;Z)=0$ which leads to a trivial representation $Z$ that is independent of $X$, while when $\beta\to+\infty$, it reduces to a maximum likelihood objective (e.g. in classification, it reduces to cross-entropy loss).
Therefore, as we vary $\beta$ from $0$ to $+\infty$, there must exist a point $\beta_0$ at which IB starts to learn a nontrivial representation where $Z$ contains information about $X$.

As an example, we train multiple variational information bottleneck  (VIB) models on binary classification of MNIST \cite{mnist} digits 0 and 1 with 20\% label noise at different $\beta$.
The accuracy vs. $\beta$ is shown in Fig. \ref{fig:mnist_0.2}. 
We see that when $\beta < 3.25$, no learning happens, and the accuracy is the same as random guessing.
Beginning with $\beta > 3.25$, there is a clear phase transition where the accuracy sharply increases, indicating the objective is able to learn a nontrivial representation.
In general, we observe that different datasets and model capacity will result in different $\beta_0$ at which IB starts to learn a nontrivial representation.
How does $\beta_0$ depend on the aspects of the dataset and model capacity, and how can we estimate it?
What does an IB model learn at the onset of learning?
Answering these questions may provide a deeper understanding of IB in particular, and learning on two observed variables in general.

In this work, we begin to answer the above questions. Specifically:

\begin{itemize}
\item We introduce the concept of \textit{IB-Learnability}, and show that when we vary $\beta$, the IB objective will undergo a phase transition from the inability to learn to the ability to learn (Section~\ref{sec:learnability}).
\item Using the second-order variation, we derive sufficient conditions for IB-Learnability, which provide upper bounds for the learnability threshold $\beta_0$  (Section~\ref{sec:suff_conditions}).
\item We show that IB-Learnability is determined by the largest \emph{confident}, \emph{typical}, and \emph{imbalanced subset} of the examples (the \textit{conspicuous subset}), reveal its relationship with the slope of the Pareto frontier at the origin on the information plane $I(X;Z)$ vs. $I(Y;Z)$, and discuss its relation to model capacity (Section~\ref{sec:discussion}).
\item We prove a deep relationship between IB-Learnability, our upper bounds on $\beta_0$, the hypercontractivity coefficient, the contraction coefficient, and the maximum correlation (Section~\ref{sec:discussion}).
\end{itemize}

\begin{figure}[t]
\begin{center}
\includegraphics[width=0.5\columnwidth]{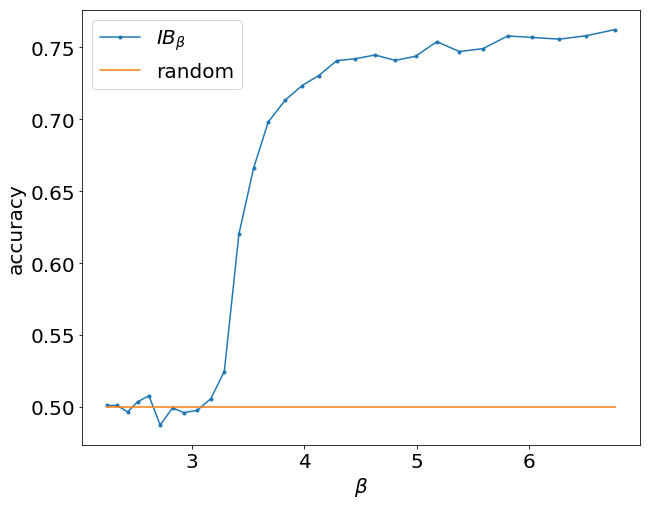}
\end{center}
\caption{Accuracy for binary classification of MNIST digits 0 and 1 with 20\% label noise and varying $\beta$. No learning happens for models trained at $\beta < 3.25$.} 
\label{fig:mnist_0.2}
\end{figure}

We also present an algorithm for estimating the onset of IB-Learnability and the conspicuous subset, which provide us with a tool for understanding a key aspect of the learning problem $(X,Y)$ (Section~\ref{sec:estimate}).

Finally, we use our main results to demonstrate on synthetic datasets, MNIST~\citep{mnist} and CIFAR10 \citep{cifar} that the theoretical prediction for IB-Learnability closely matches experiment, and show the conspicuous subset our algorithm discovers (Section~\ref{sec:experiments}).

\section{Related Work}
\label{sec:related_work}

The seminal IB work~\citep{tishby2000information} provides a tabular method for exactly computing the optimal encoder distribution $P(Z|X)$ for a given $\beta$ and cardinality of the discrete representation, $|Z|$. They did not consider the IB learnability problem as addressed in this work.

\citet{chechik2005information} presents the Gaussian Information Bottleneck (GIB) for learning a multivariate Gaussian representation $Z$ of $(X,Y)$, assuming that both $X$ and $Y$ are also multivariate Gaussians. Under GIB, they derive analytic formula for the optimal representation as a noisy linear projection to eigenvectors of the normalized regression matrix $\Sigma_{x|y}\Sigma_x^{-1}$, and the learnability threshold $\beta_0$ is then given by $\beta_0=\frac{1}{1-\lambda_1}$ where $\lambda_1$ is the largest eigenvalue of the matrix $\Sigma_{x|y}\Sigma_x^{-1}$. This work provides deep insights about relations between the dataset, $\beta_0$ and optimal representations in the Gaussian scenario, but the restriction to multivariate Gaussian datasets limits the generality of the analysis

Another analytic treatment of IB is given in~\cite{rey2012meta}, which reformulates the objective in terms of the copula functions.
As with the GIB approach, this formulation restricts the form of the data distributions -- the copula functions for the joint distribution $(X,Y)$ are assumed to be known, which is unlikely in practice.

\citet{strouse2017deterministic} presents the Deterministic Information Bottleneck (DIB), which minimizes the coding cost of the representation, $H(Z)$, rather than the transmission cost, $I(X;Z)$ as in IB.
This approach learns hard clusterings with different code entropies that vary with $\beta$.
In this case, it is clear that a hard clustering with minimal $H(Z)$ will result in a single cluster for all of the data, which is the DIB trivial solution.
No analysis is given beyond this fact to predict the actual onset of learnability, however.

The first amortized IB objective is in the Variational Information Bottleneck (VIB) of~\citet{alemi2016deep}.
VIB replaces the exact, tabular approach of IB with variational approximations of the classifier distribution ($P(Y|Z)$) and marginal distribution ($P(Z)$).
This approach cleanly permits learning a stochastic encoder, $P(Z|X)$, that is applicable to any $x \in \mathcal{X}$, rather than just the particular $X$ seen at training time.
The cost of this flexibility is the use of variational approximations that may be less expressive than the tabular method.
Nevertheless, in practice, VIB learns easily and is simple to implement, so we rely on VIB models for our experimental confirmation.

Closely related to IB is the recently proposed Conditional Entropy Bottleneck (CEB)~\citep{fischer2018the}.
CEB attempts to explicitly learn the Minimum Necessary Information (MNI), defined as the point in the information plane where $I(X;Y) = I(X;Z) = I(Y;Z)$.
The MNI point may not be achievable even in principle for a particular dataset.
However, the CEB objective provides an explicit estimate of how closely the model is approaching the MNI point by observing that a necessary condition for reaching the MNI point occurs when $I(X;Z|Y) = 0$.
The CEB objective $I(X;Z|Y) - \gamma I(Y;Z)$ is equivalent to IB at $\gamma = \beta + 1$, so our analysis of IB-Learnability applies equally to CEB.

\citet{kolchinsky2018caveats} 
shows that when $Y$ is a deterministic function of X, the ``corner point'' of the IB curve (where $I(X;Y)=I(X;Z)=I(Y;Z)$) is the unique optimizer of the IB objective for all $0<\beta'<1$ (with the parameterization of \citet{kolchinsky2018caveats}, $\beta'=1/\beta$), which they consider to be a ``trivial solution''.

However, their use of the term ``trivial solution'' is distinct from ours.
They are referring to the observation that all points on the IB curve contain uninteresting interpolations between two different but valid solutions on the optimal frontier, rather than demonstrating a non-trivial trade-off between compression and prediction as expected when varying the IB Lagrangian.
Our use of ``trivial'' refers to whether IB is capable of learning at all given a certain dataset and value of $\beta$.

\citet{achille2018information} apply the IB Lagrangian to the weights of a neural network, yielding InfoDropout.
In~\citet{achille2018emergence}, the authors give a deep and compelling analysis of how the IB Lagrangian can yield invariant and disentangled representations.
They do not, however, consider the question of the onset of learning, although they are aware that not all models will learn a non-trivial representation.
More recently, \citet{achille2018dynamics} repurpose the InfoDropout IB Lagrangian as a Kolmogorov Structure Function to analyze the ease with which a previously-trained network can be fine-tuned for a new task.
While that work is tangentially related to learnability, the question it addresses is substantially different from our investigation of the onset of learning.

Our work is also closely related to the hypercontractivity coefficient \citep{anantharam2013maximal,polyanskiy2017strong}, defined as $\sup_{Z-X-Y}\frac{I(Y;Z)}{I(X;Z)}$, which by definition equals the inverse of $\beta_0$, our IB-learnability threshold.
In \cite{anantharam2013maximal}, the authors prove that the hypercontractivity cofficient equals the contraction coefficient $\eta_{\KL}(P_{Y|X}, P_X)$, and \citet{kim2017discovering} propose a practical algorithm to estimate $\eta_{\KL}(P_{Y|X},P_X)$, which provides a measure for potential influence in the data.
Although our goal is different, the sufficient conditions we provide for IB-Learnability are also lower bounds for the hypercontractivity coefficient.

\section{IB-Learnability}
\label{sec:learnability}

We are given instances of $(x,y)$ drawn from a distribution with probability (density) $P(X,Y)$ with support of $\mathcal{X}\times\mathcal{Y}$, where unless otherwise stated, both $X$ and $Y$ can be discrete or continuous variables.
$(X,Y)$ is our \textit{training data}, and may be characterized by different types of noise.
The nature of this training data and the choice of $\beta$ will be sufficient to predict the transition from unlearnable to learnable.

We can learn a representation $Z$ of $X$ with conditional probability\footnote{%
  We use capital letters $X,Y,Z$ for random variables and lowercase $x,y,z$ to denote the instance of variables, with $P(\cdot)$ and $p(\cdot)$ denoting their probability or probability density, respectively.
}
$p(z|x)$, such that $X,Y,Z$ obey the Markov chain $Z\gets X\leftrightarrow Y$.
Eq.~\ref{eq:IB_beta} above gives the IB objective with Lagrange multiplier $\beta$, $\IB_\beta(X,Y;Z)$, which is a functional of $p(z|x)$: $\IB_\beta(X,Y;Z)=\IB_\beta[p(z|x)]$.
The IB learning task is to find a conditional probability $p(z|x)$ that minimizes $\IB_\beta(X,Y;Z)$.
The larger $\beta$, the more the objective favors making a good prediction for $Y$.
Conversely, the smaller $\beta$, the more the objective favors learning a concise representation.

How can we select $\beta$ such that the IB objective learns a useful representation?
In practice, the selection of $\beta$ is done empirically.
Indeed,~\citet{tishby2000information} recommends ``sweeping $\beta$''.
In this paper, we provide theoretical guidance for choosing $\beta$ by introducing the concept of $\IB$-Learnability and providing a series of $\IB$-learnable conditions.

\begin{definition}
\label{def:learnable}
$(X,Y)$ is $\IB_\beta$-learnable if there exists a $Z$ given by some $p_1(z|x)$, such that $\IB_\beta(X,Y;Z)\rvert_{p_1(z|x)} < \IB_\beta(X,Y;Z)\rvert_{p(z|x)=p(z)}$, where $p(z|x)=p(z)$ characterizes the trivial representation where 
$Z=Z_\text{trivial}$ is independent of $X$.
\end{definition}

If $(X;Y)$ is $\IB_\beta$-learnable, then when $\IB_\beta(X,Y;Z)$ is globally minimized, it will \emph{not} learn a trivial representation.
On the other hand, if $(X;Y)$ is not $\IB_\beta$-learnable, then when $\IB_\beta(X,Y;Z)$ is globally minimized, it may learn a trivial representation.

\paragraph{Trivial solutions.}
Definition~\ref{def:learnable} defines trivial solutions in terms of representations where $I(X;Z) = I(Y;Z) = 0$.
Another type of trivial solution occurs when $I(X;Z) > 0$ but $I(Y;Z) = 0$.
This type of trivial solution is not directly achievable by the IB objective, as $I(X;Z)$ is minimized, but it can be achieved by construction or by chance.
It is possible that starting learning from $I(X;Z) > 0, I(Y;Z) = 0$ could result in access to non-trivial solutions not available from $I(X;Z) = 0$.
We do not attempt to investigate this type of trivial solution in this work.

\paragraph{Necessary condition for IB-Learnability.}
From Definition~\ref{def:learnable}, we can see that $\IB_\beta$-Learnability for any dataset $(X;Y)$ requires $\beta > 1$.
In fact, from the Markov chain $Z\gets X\leftrightarrow Y$, we have $I(Y;Z) \le I(X;Z)$ via the data-processing inequality.
If $\beta \le 1$, then since $I(X;Z) \ge 0$ and $I(Y;Z) \ge 0$, we have that $\min(I(X;Z) - \beta I(Y;Z))=0 = \IB_\beta(X,Y;Z_{trivial})$.
Hence $(X,Y)$ is not $\IB_\beta$-learnable for $\beta \le 1$.

Due to the reparameterization invariance of mutual information, we have the following theorem for $\IB_\beta$-Learnability:

\begin{lemma}
\label{thm:homo_learnability}
Let $X'=g(X)$ be an invertible map (if $X$ is a continuous variable, $g$ is additionally required to be continuous). Then $(X,Y)$ and $(X',Y)$ have the same $\IB_\beta$-Learnability.
\end{lemma}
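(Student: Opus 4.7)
The plan is to exploit the invariance of mutual information under invertible reparameterizations, which will set up a one-to-one correspondence between representations of $(X,Y)$ and of $(X',Y)$ that preserves the IB functional value. If this correspondence maps the trivial representation to the trivial representation, then by Definition~\ref{def:learnable}, $\IB_\beta$-Learnability of one pair is equivalent to that of the other.

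First I would set up the bijection on representations. Given any $p(z|x)$ for the problem on $X$, define $p'(z|x') := p(z|g^{-1}(x'))$. Conversely, given any $p'(z|x')$ for the problem on $X'$, define $p(z|x) := p'(z|g(x))$. In the continuous case, invertibility and continuity of $g$ make $g^{-1}$ well-defined and measurable, so this is a valid conditional density; in the discrete case no measure-theoretic subtlety arises. Note that the Markov chain $Z \gets X' \leftrightarrow Y$ is automatic since $Z\gets X\leftrightarrow Y$ and $X'$ is a deterministic function of $X$ (and vice versa).

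Next I would verify that the IB functional is preserved under this bijection. Since $g$ is a measurable bijection, the joint laws satisfy $p'(x',y) = p(g^{-1}(x'),y)$ (up to a Jacobian factor in the continuous case that cancels in the mutual information) and similarly $p'(x',z)$ corresponds to $p(x,z)$. Reparameterization invariance of mutual information (valid for discrete variables and, with the continuity-plus-invertibility hypothesis, for continuous ones) gives
\begin{equation}
I(X';Z) = I(X;Z).
\end{equation}
The marginal $p(y,z) = \int p(y|x)\,p(z|x)\,p(x)\,dx$ (or the discrete sum) equals $\int p(y|g^{-1}(x'))\,p'(z|x')\,p'(x')\,dx'$, hence $I(Y;Z)$ is identical in both problems. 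Therefore
\begin{equation}
\IB_\beta(X,Y;Z)\big|_{p(z|x)} \;=\; \IB_\beta(X',Y;Z)\big|_{p'(z|x')}.
\end{equation}

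Finally I would observe that the bijection sends trivial to trivial: $p(z|x)=p(z)$ corresponds to $p'(z|x')=p(z)$, giving the same value $\IB_\beta(X,Y;Z_{\text{trivial}})=\IB_\beta(X',Y;Z_{\text{trivial}})=0$. Hence the existence of a $p_1(z|x)$ strictly beating the trivial bound for $(X,Y)$ is equivalent, via the bijection, to the existence of a corresponding $p'_1(z|x')$ strictly beating the trivial bound for $(X',Y)$, which is exactly the statement that the two problems share the same $\IB_\beta$-Learnability.

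The only mild obstacle is the continuous case: one must justify the change of variables and the invariance of $I(X;Z)$ under a continuous invertible reparameterization. This is a standard consequence of the measure-theoretic definition of mutual information (it depends only on the joint law up to push-forward by a bimeasurable bijection), and the continuity assumption on $g$ is precisely what guarantees bimeasurability. No deep machinery is needed beyond this.
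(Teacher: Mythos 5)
Your proof is correct and follows essentially the same route as the paper's: both arguments rest on the invariance of mutual information under invertible (and, in the continuous case, continuous) reparameterizations of $X$, used to transport a representation achieving $\IB_\beta<0$ between the two problems while fixing the trivial representation. You make the correspondence $p'(z|x')=p(z|g^{-1}(x'))$ explicit, which the paper leaves implicit, but the substance is identical.
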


The proof for Lemma \ref{thm:homo_learnability} is in Appendix \ref{app:homo_learnability}.
Lemma \ref{thm:homo_learnability} implies a favorable property for any condition for $\IB_\beta$-Learnability: the condition should be invariant to invertible mappings of $X$.
We will inspect this invariance in the conditions we derive in the following sections.

\section{Sufficient conditions for IB-Learnability}
\label{sec:suff_conditions}

Given $(X,Y)$, how can we determine whether it is $\IB_\beta$-learnable?
To answer this question, we derive a series of sufficient conditions for $\IB_\beta$-Learnability, starting from its definition.
The conditions are in increasing order of practicality, while sacrificing as little generality as possible.

Firstly, Theorem \ref{thm:beta_monotonic} characterizes the $\IB_\beta$-Learnability range for $\beta$, with proof in Appendix \ref{app:beta_monotonic}:

\begin{theorem}
\label{thm:beta_monotonic}
If $(X,Y)$ is $\IB_{\beta_1}$-learnable, then for any $\beta_2>\beta_1$, it is $\IB_{\beta_2}$-learnable.
\end{theorem}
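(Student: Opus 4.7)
The plan is to exploit the fact that the IB objective evaluated at the trivial representation $p(z|x)=p(z)$ is exactly zero, because both $I(X;Z)$ and $I(Y;Z)$ vanish there. Thus the definition of $\IB_\beta$-learnability can be rephrased: $(X,Y)$ is $\IB_\beta$-learnable iff there exists some encoder $p(z|x)$ for which $I(X;Z) - \beta I(Y;Z) < 0$. My strategy is to take the witnessing encoder $p_1(z|x)$ supplied by $\IB_{\beta_1}$-learnability and show that the same encoder already witnesses $\IB_{\beta_2}$-learnability for any $\beta_2 > \beta_1$.

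First I would observe that, letting $Z_1$ be the representation induced by $p_1(z|x)$, the hypothesis $I(X;Z_1) - \beta_1 I(Y;Z_1) < 0$ together with the nonnegativity of mutual information forces $I(Y;Z_1) > 0$ (otherwise the left-hand side would be $I(X;Z_1) \ge 0$). This small positivity check is the only subtle point in the argument.

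Next I would compute directly, using the identical $p_1(z|x)$ for the $\beta_2$ objective:
\begin{equation*}
\IB_{\beta_2}(X,Y;Z_1) \;=\; I(X;Z_1) - \beta_2 I(Y;Z_1) \;=\; \bigl[I(X;Z_1) - \beta_1 I(Y;Z_1)\bigr] - (\beta_2 - \beta_1)\, I(Y;Z_1).
\end{equation*}
The first bracket is strictly negative by hypothesis, and the second term is strictly negative because $\beta_2 - \beta_1 > 0$ and $I(Y;Z_1) > 0$. Hence $\IB_{\beta_2}(X,Y;Z_1) < 0 = \IB_{\beta_2}(X,Y;Z_{\text{trivial}})$, which is exactly the statement that $(X,Y)$ is $\IB_{\beta_2}$-learnable.

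There is essentially no hard step here; if anything, the only place to be careful is in justifying that the witness encoder has $I(Y;Z_1) > 0$ before one can assert the strict inequality after adding $-(\beta_2-\beta_1)I(Y;Z_1)$. Once that is in place, the result is a one-line monotonicity argument in $\beta$ and requires no construction of a new encoder for $\beta_2$.
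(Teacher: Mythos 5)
Your proof is correct and follows essentially the same route as the paper's: keep the witness encoder from $\beta_1$, note the trivial representation scores $0$, and observe the objective only decreases as $\beta$ grows. The one thing you flag as ``the only subtle point''---establishing $I(Y;Z_1)>0$---is actually unnecessary: the paper simply uses the non-strict $I(Y;Z_1)\ge 0$ to conclude $\IB_{\beta_2}(X,Y;Z_1)\le\IB_{\beta_1}(X,Y;Z_1)<0$, which already gives the strict inequality you need; your stronger observation is true but adds no content to the argument.
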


Based on Theorem \ref{thm:beta_monotonic}, the range of $\beta$ such that $(X,Y)$ is $\IB_\beta$-learnable has the form $\beta \in (\beta_0, +\infty)$.
Thus, $\beta_0$ is the \textit{threshold} of IB-Learnability.

\begin{lemma}
\label{lemma:stationary}
$p(z|x)=p(z)$ is a stationary solution for $\IB_\beta(X,Y;Z)$.
\end{lemma}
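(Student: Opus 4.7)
The plan is to verify stationarity by computing the first variation of the functional $\IB_\beta[p(z|x)]$ at the candidate point $p(z|x)=p(z)$, subject only to the normalization constraint $\int p(z|x)\,dz = 1$ for every $x$ (together with the Markov chain $Z\gets X\leftrightarrow Y$). Concretely, I would introduce a one-parameter family $p_\epsilon(z|x) = p(z) + \epsilon\,\eta(z|x)$ where $\eta(z|x)$ is an arbitrary test perturbation with $\int \eta(z|x)\,dz = 0$ for each $x$, and show that $\frac{d}{d\epsilon}\IB_\beta[p_\epsilon]\big|_{\epsilon=0}=0$ for every such $\eta$.

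The first step is to compute $\delta I(X;Z)$. Writing $I(X;Z) = \int p(x)\,p(z|x)\,\log \frac{p(z|x)}{p(z)}\,dx\,dz$ and differentiating, three pieces appear: the ``outside'' variation of $p(z|x)$ in front of the log, the variation of $\log p(z|x)$, and the variation of $\log p(z) = \log \int p(x')\,p(z|x')\,dx'$. The last two pieces each contribute $\int p(x)\,\eta(z|x)\,dx\,dz$ (using $\delta p(z) = \int p(x')\,\eta(z|x')\,dx'$) and cancel, leaving the standard expression
\begin{equation*}
\delta I(X;Z) \;=\; \int p(x)\,\eta(z|x)\,\log\frac{p(z|x)}{p(z)}\,dx\,dz .
\end{equation*}
At $p(z|x)=p(z)$ the logarithm is identically zero, so $\delta I(X;Z)=0$ for every admissible $\eta$.

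Second, I would treat $\delta I(Y;Z)$ analogously, using the Markov chain to write $p(z|y) = \int p(z|x)\,p(x|y)\,dx$, so the induced perturbation is $\delta p(z|y) = \int \eta(z|x)\,p(x|y)\,dx$. The same cancellation as above (normalization implies $\int \delta p(z|y)\,dz = \int \delta p(z)\,dz = 0$) yields
\begin{equation*}
\delta I(Y;Z) \;=\; \int p(y)\,\delta p(z|y)\,\log\frac{p(z|y)}{p(z)}\,dy\,dz .
\end{equation*}
At the trivial point, $p(z|y) = \int p(z)\,p(x|y)\,dx = p(z)$, so again the logarithm vanishes and $\delta I(Y;Z)=0$. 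Combining gives $\delta \IB_\beta = \delta I(X;Z) - \beta\,\delta I(Y;Z) = 0$, establishing that $p(z|x)=p(z)$ is a stationary point.

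The proof is short and the main technical care is bookkeeping: one must track the three contributions to each mutual information variation (the explicit $\eta$ in the measure, the induced change of $\log p(z|x)$ or $\log p(z|y)$, and the induced change of $\log p(z)$) and verify that the latter two cancel against each other under the normalization constraint, so that only the log-ratio term survives. Once that cancellation is in place, vanishing at $p(z|x)=p(z)$ is immediate because the log-ratio is identically zero there, and no assumption beyond the Markov chain and normalization is needed. This is precisely why stationarity alone does not imply learnability, motivating the second-order analysis that follows in the paper.
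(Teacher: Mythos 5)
Your proof is correct and follows essentially the same route as the paper's: compute the first variation of each mutual information term, observe that after the cancellations it reduces to integrals weighted by $\log\frac{p(z|x)}{p(z)}$ and $\log\frac{p(z|y)}{p(z)}$ respectively, and note that both log-ratios vanish identically at $p(z|x)=p(z)$ (using the Markov chain to get $p(z|y)=p(z)$ there). The paper simply packages the first-variation computation into its Lemma B.2 and then substitutes the trivial solution, which is the same argument you give.
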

The proof in Appendix \ref{app:stationary} shows that both first-order variations
$\delta I(X;Z)=0$ and $\delta I(Y;Z)=0$ vanish at the trivial representation $p(z|x)=p(z)$, so
$\delta \IB_\beta[p(z|x)] = 0$ at the trivial representation.

Lemma \ref{lemma:stationary} yields our strategy for finding sufficient conditions for learnability: find conditions such that $p(z|x)=p(z)$ is not a local minimum for the functional $\IB_\beta[p(z|x)]$. 
Based on the necessary condition for the minimum (Appendix \ref{app:suff_1}), we have the following theorem \footnote{The theorems in this paper deal with learnability w.r.t. true mutual information. If parameterized models are used to approximate the mutual information, the limitation of the model capacity will translate into more uncertainty of $Y$ given $X$, viewed through the lens of the model.}:

\begin{theorem}[\textbf{Suff. Cond. 1}]
\label{thm:suff_1}
A sufficient condition for $(X, Y)$ to be $\IB_\beta$-learnable is that there exists a perturbation function\footnote{so that the perturbed probability (density) is $p'(z|x)=p(z|x)+\epsilon\cdot h(z|x)$. Also, for integrals, whenever a variable $W$ is discrete, we can simply replace the integral $(\int \cdot dw)$ by summation $(\sum_w\cdot)$.} $h(z|x)$ with
$\int h(z|x)dz = 0$, such that the second-order variation $\delta^2 \IB_\beta[p(z|x)] < 0$ at the trivial representation $p(z|x)=p(z)$.
\end{theorem}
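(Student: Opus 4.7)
The plan is to start from the definition of $\IB_\beta$-Learnability (Definition \ref{def:learnable}) and exhibit an explicit perturbed encoder $p_1(z|x)$ that strictly decreases the IB functional below its value at the trivial representation. Given a candidate direction $h(z|x)$ with $\int h(z|x)\,dz=0$, I would consider the one-parameter family
\begin{equation*}
p_\epsilon(z|x) = p(z) + \epsilon\, h(z|x),
\end{equation*}
where $p(z)$ is any fixed marginal (the trivial encoder). The zero-integral constraint on $h$ guarantees that $p_\epsilon(\cdot|x)$ remains normalized for every $x$, and for $|\epsilon|$ sufficiently small, non-negativity is preserved wherever $p(z)$ is strictly positive (a point to address carefully if $p(z)$ has zeros on its support).

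Next, I would perform a Taylor expansion of the functional $\IB_\beta$ about $\epsilon=0$:
\begin{equation*}
\IB_\beta[p_\epsilon(z|x)] = \IB_\beta[p(z)] + \epsilon\, \delta\IB_\beta\bigl[h\bigr] + \tfrac{1}{2}\epsilon^2\, \delta^2\IB_\beta\bigl[h\bigr] + O(\epsilon^3).
\end{equation*}
By Lemma \ref{lemma:stationary}, the trivial solution is a stationary point, so $\delta\IB_\beta[h]=0$ for every admissible $h$. The hypothesis supplies an $h$ with $\delta^2\IB_\beta[h]<0$, so the leading nonzero term in the expansion is strictly negative. Hence, for every sufficiently small $\epsilon>0$, we obtain
\begin{equation*}
\IB_\beta[p_\epsilon(z|x)] < \IB_\beta[p(z)],
\end{equation*}
which by Definition \ref{def:learnable} establishes $\IB_\beta$-Learnability with $p_1(z|x) := p_\epsilon(z|x)$.

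The routine part is the Taylor expansion itself; the main obstacle is the admissibility of the perturbation. Specifically, I need to argue that (i) the remainder $O(\epsilon^3)$ is uniformly controlled, so it is dominated by the quadratic term for small $\epsilon$ (this follows from smoothness of $\IB_\beta$ in $p(z|x)$ away from degenerate points, and can be verified by bounding the third functional derivative on a neighborhood of $p(z)$), and (ii) positivity of $p_\epsilon(z|x)$ holds for the chosen $\epsilon$, which may require restricting $h$ to be supported where $p(z)>0$ or invoking a standard mollification argument. Both issues are technical but not conceptually hard; the real content of the theorem is reducing IB-Learnability to the much more tractable task of exhibiting a single direction of negative second variation, which sets up the subsequent concrete upper bounds on $\beta_0$ derived from the spectral/combinatorial structure of the joint distribution $P(X,Y)$.
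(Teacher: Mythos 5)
Your proof is correct and takes essentially the same route as the paper: expand $\IB_\beta$ to second order around the trivial encoder, invoke Lemma~\ref{lemma:stationary} to kill the first variation, and conclude that a strictly negative second variation forces $\IB_\beta[p_\epsilon(z|x)] < \IB_\beta[p(z)]$ for small $\epsilon$, hence $\IB_\beta$-learnability. The paper compresses this by citing Gelfand and Fomin's necessary condition for a minimum (nonnegative second variation at a stationary point), whereas you prove that fact inline via the Taylor expansion; the two technical points you flag (nonnegativity of $p_\epsilon$ and control of the remainder) are real but standard, and the paper elides them by delegating to the cited theorem.
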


The proof for Theorem  \ref{thm:suff_1} is given in Appendix \ref{app:suff_1}.
Intuitively, if $\delta^2 \IB_\beta[p(z|x)]\big\rvert_{p(z|x)=p(z)} < 0$, we can always find a $p'(z|x) = p(z|x) + \epsilon \cdot h(z|x)$ in the neighborhood of the trivial representation $p(z|x)=p(z)$, such that $\IB_\beta[p'(z|x)] < \IB_\beta[p(z|x)]$, thus satisfying the definition for $\IB_\beta$-Learnability.

To make Theorem \ref{thm:suff_1} more practical, we perturb $p(z|x)$ around the trivial solution $p'(z|x) = p(z|x) + \epsilon \cdot h(z|x)$, and expand $\IB_\beta[p(z|x) + \epsilon\cdot h(z|x)] - \IB_\beta[p(z|x)]$ to the second order of $\epsilon$.
We can then prove Theorem \ref{thm:suff_2}:

\begin{theorem}[\textbf{Suff. Cond. 2}]
\label{thm:suff_2}
A sufficient condition for $(X,Y)$ to be $\IB_\beta$-learnable is $X$ and $Y$ are not independent, and
\begin{equation}
\begin{aligned}
\label{eq:suff_2}
\beta > \inf_{h(x)}
\beta_0[h(x)]
\end{aligned}
\end{equation}

where the functional $\beta_0[h(x)]$ is given by
$$\beta_0[h(x)]=\frac{\E_{x \sim p(x)} [h(x)^2] - \left(\E_{x\sim p(x)} [h(x)]\right)^2}{\E_{y \sim p(y)}\left[\left(\E_{x \sim p(x|y)} [h(x)]\right)^2\right] - \left(\E_{x\sim p(x)} [h(x)]\right)^2}$$

Moreover, we have that $\left(\inf_{h(x)}\beta[h(x)]\right)^{-1}$ is a lower bound of the slope of the Pareto frontier in the information plane $I(Y;Z)$ vs. $I(X;Z)$ at the origin.
\end{theorem}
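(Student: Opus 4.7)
The plan is to apply Theorem~\ref{thm:suff_1} with a carefully chosen ansatz for the perturbation and directly compute the second-order variation of $\IB_\beta$ around the trivial solution $p(z|x) = p(z)$. By Lemma~\ref{lemma:stationary} the first-order terms vanish, so the sign of the full variation is controlled by $\delta^2 I(X;Z) - \beta\, \delta^2 I(Y;Z)$, and I only need to find one admissible $h(z|x)$ making this negative.

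First I would parameterize the perturbation as $p'(z|x) = p(z) + \epsilon\, h(z|x)$ with $\int h(z|x)\,dz = 0$, then introduce the induced quantities $h(z) \equiv \E_{x \sim p(x)}[h(z|x)]$ and $h(z|y) \equiv \E_{x \sim p(x|y)}[h(z|x)]$, so that $p'(z) = p(z) + \epsilon\, h(z)$ and $p'(z|y) = p(z) + \epsilon\, h(z|y)$. Expanding $\log(1+u)$ to second order and plugging into $I(X;Z) = \iint p(z|x)p(x)\log[p(z|x)/p(z)]\,dx\,dz$ and the analogous expression for $I(Y;Z)$, the $O(\epsilon)$ pieces cancel (consistent with Lemma~\ref{lemma:stationary}) and what remains is
\begin{equation*}
\delta^2 I(X;Z) = \epsilon^2 \int \frac{1}{p(z)}\Bigl(\E_{x}[h(z|x)^2] - h(z)^2\Bigr)\,dz,
\end{equation*}
\begin{equation*}
\delta^2 I(Y;Z) = \epsilon^2 \int \frac{1}{p(z)}\Bigl(\E_{y}[h(z|y)^2] - h(z)^2\Bigr)\,dz.
\end{equation*}

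Next, to make the analysis tractable and still sufficient, I would use the separable ansatz $h(z|x) = \tilde h(z)\, h(x)$ with $\int \tilde h(z)\,dz = 0$ and $\int \tilde h(z)^2/p(z)\,dz > 0$. Under this choice, both variations factor as a common positive prefactor $\int \tilde h(z)^2/p(z)\,dz$ times, respectively, $\mathrm{Var}_x[h(x)]$ and $\E_y\bigl[\E_{x|y}[h(x)]^2\bigr] - (\E_x[h(x)])^2$. The condition $\delta^2 \IB_\beta < 0$ then reduces algebraically to $\beta > \beta_0[h(x)]$ with $\beta_0[h(x)]$ exactly as stated. Taking the infimum over all admissible scalar functions $h(x)$ yields the stated sufficient condition via Theorem~\ref{thm:suff_1}. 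The non-independence of $X,Y$ is needed to guarantee that the denominator $\E_y[\E_{x|y}[h(x)]^2] - (\E_x[h(x)])^2 = \mathrm{Var}_y(\E[h(X)\mid Y])$ is strictly positive for some $h$; if $X \perp Y$, this conditional-mean variance is zero for every $h$ and no finite $\beta$ works.

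For the Pareto-frontier statement, along the one-parameter family generated by the separable perturbation we have $I(Y;Z)/I(X;Z) \to 1/\beta_0[h(x)]$ as $\epsilon \to 0$, i.e.\ the chord from the origin to the perturbed representation approaches that slope. Any achievable slope at the origin is therefore at most $\sup_{h(x)} 1/\beta_0[h(x)] = (\inf_{h(x)} \beta_0[h(x)])^{-1}$ from below, giving the claimed lower bound on the slope of the Pareto frontier at the origin. The main obstacle I anticipate is being careful about the perturbation class: the separable ansatz is enough to exhibit a descent direction (which is all Theorem~\ref{thm:suff_1} requires), but one has to verify that the admissibility constraints $p'(z|x)\geq 0$ and $\int p'(z|x)\,dz = 1$ are preserved by choosing $\tilde h$ suitably (e.g.\ a mean-zero bump supported where $p(z) > 0$, with $\epsilon$ small), and also to handle the discrete/continuous $z$ case uniformly via the convention that integrals over discrete variables are sums.
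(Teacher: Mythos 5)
Your proof is correct and follows essentially the same route as the paper's: compute the second variation of $\IB_\beta$ at the trivial solution, restrict to the separable ansatz $h(z|x)=\tilde h(z)h(x)$ so the $z$-integral factors out as a common positive prefactor, and reduce $\delta^2\IB_\beta<0$ to $\beta>\beta_0[h(x)]$, with non-independence guaranteeing a positive denominator via Jensen, and the Pareto-slope bound read off from the ratio $\delta^2 I(Y;Z)/\delta^2 I(X;Z)$. The only cosmetic difference is that you substitute $p(z|x)=p(z)$ from the start rather than deriving the general second variation first, and your slope sentence (``any achievable slope at the origin is therefore at most\ldots'') is worded in a way that reads backwards even though the intended conclusion — restricting to separable perturbations gives an achievable slope, hence a lower bound on the true Pareto slope — is right.
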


The proof is given in Appendix \ref{app:suff_2}, which also shows that if $\beta>\inf_{h(x)}\beta_0[h(x)]$ in Theorem \ref{thm:suff_2} is satisfied, we can construct a perturbation function $h(z|x)=h^*(x)h_2(z)$ with $h^*(x)=\argmin_{h(x)}\beta_0[h(x)]$, $\int h_2(z)dz=0, \int \frac{h_2^2(z)}{p(z)}dz>0$ for some $h_2(z)$, such that $h(z|x)$ satisfies Theorem \ref{thm:suff_1}. 
It also shows that the converse is true: if there exists $h(z|x)$ such that the condition in Theorem \ref{thm:suff_1} is true, then Theorem \ref{thm:suff_2} is satisfied\footnote{%
    We do not claim that any $h(z|x)$ satisfying Theorem~\ref{thm:suff_1} can be decomposed to $h^*(x)h_2(z)$ at the onset of learning.
    But from the equivalence of Theorems \ref{thm:suff_1} and \ref{thm:suff_2} as explained above, when there exists an $h(z|x)$ such that Theorem \ref{thm:suff_1} is satisfied, we can always construct an $h'(z|x)=h^*(x)h_2(z)$ that also satisfies Theorem \ref{thm:suff_1}.
}, i.e. $\beta>\inf_{h(x)}\beta_0[h(x)]$.
Moreover, letting the perturbation function $h(z|x)=h^*(x)h_2(z)$ at the trivial solution, we have

\begin{equation}
\begin{aligned}
\label{eq:what_first_learns}
p_\beta(y|x) = p(y) + \epsilon^2 C_z (h^*(x)-\overline{h}^*_x) \int p(x,y)(h^*(x)-\overline{h}^*_x)dx
\end{aligned}
\end{equation}

where $p_\beta(y|x)$ is the estimated $p(y|x)$ by IB for a certain $\beta$, $\overline{h}^*_x=\int h^*(x)p(x)dx$, and $C_z=\int\frac{h_2^2(z)}{p(z)}dz>0$ is a constant.
This shows how the $p_\beta(y|x)$ by IB explicitly depends on $h^*(x)$ at the onset of learning.
The proof is provided in Appendix~\ref{app:what_first_learns}.

Theorem \ref{thm:suff_2} suggests a method to estimate $\beta_0$: we can parameterize $h(x)$ e.g. by a neural network, with the objective of minimizing $\beta_0[h(x)]$.
At its minimization, $\beta_0[h(x)]$ provides an upper bound for $\beta_0$,  and $h(x)$ provides a \emph{soft clustering} of the examples corresponding to a nontrivial perturbation of $p(z|x)$ at $p(z|x)=p(z)$ that minimizes $\IB_\beta[p(z|x)]$. 

Alternatively, based on the property of $\beta_0[h(x)]$, we can also use a specific functional form for $h(x)$ in Eq.~(\ref{eq:suff_2}), and obtain a stronger sufficient condition for $\IB_\beta$-Learnability.
But we want to choose $h(x)$ as near to the infimum as possible.
To do this, we note the following characteristics for the R.H.S of Eq.~(\ref{eq:suff_2}):

\begin{itemize}
\item We can set $h(x)$ to be nonzero if $x \in \Omega_x$ for some region $\Omega_x\subset \X$ and 0 otherwise.
Then we obtain the following sufficient condition:
\begin{equation}
\begin{aligned}
\label{eq:suff_2_omega}
\beta>\inf_{h(x),\Omega_x\subset\X}\frac{\frac{\E_{x \sim p(x), x \in \Omega_x} [h(x)^2]}{\left(\E_{x \sim p(x), x \in \Omega_x} [h(x)]\right)^2} - 1}{\int\frac{dy}{p(y)}\left(\frac{\E_{x \sim p(x), x \in \Omega_x} [p(y|x) h(x)]}{\E_{x \sim p(x), x \in \Omega_x} [h(x)]}\right)^2 - 1}
\end{aligned}
\end{equation}

\item The numerator of the R.H.S. of Eq.~(\ref{eq:suff_2_omega}) attains its minimum when $h(x)$ is a constant within $\Omega_x$.
This can be proved using the Cauchy-Schwarz inequality: $\langle u,u \rangle \langle v,v \rangle \geq \langle u,v \rangle^2$, setting $u(x) = h(x) \sqrt{p(x)}$, $v(x) = \sqrt{p(x)}$, and defining the inner product as $\langle u,v \rangle = \int u(x) v(x) dx$.
Therefore, the numerator of the R.H.S. of Eq.~(\ref{eq:suff_2_omega}) $\ge \frac{1}{\int_{x \in \Omega_x} p(x)} - 1$, and attains equality when $\frac{u(x)}{v(x)} = h(x)$ is constant.
\end{itemize}

Based on these observations, we can let $h(x)$ be a nonzero constant inside some region $\Omega_x\subset\X$ and 0 otherwise, and the infimum over an arbitrary function $h(x)$ is simplified to infimum over $\Omega_x\subset\X$, and we obtain a sufficient condition for $\IB_\beta$-Learnability, which is a key result of this paper:

\begin{theorem}[\textbf{Conspicuous Subset Suff. Cond.}]
\label{thm:suff_3}
A sufficient condition for $(X,Y)$ to be $\IB_\beta$-learnable is $X$ and $Y$ are not independent, and
\begin{equation}
\begin{aligned}
\label{eq:suff_3}
\beta > \inf_{\Omega_x\subset \mathcal{X}}\beta_0(\Omega_x)
\end{aligned}
\end{equation}
where 
$$\beta_0(\Omega_x)=\frac{\frac{1}{p(\Omega_x)} - 1}{\E_{y \sim p(y|\Omega_x)} \left[ \frac{p(y|\Omega_x)}{p(y)} - 1 \right]}$$
$\Omega_x$ denotes the event that $x \in \Omega_x$, with probability $p(\Omega_x)$. 

$\left(\inf_{\Omega_x\subset\X}\beta_0(\Omega_x)\right)^{-1}$ gives a lower bound of the slope of the Pareto frontier in the information plane $I(Y;Z)$ vs. $I(X;Z)$ at the origin.
\end{theorem}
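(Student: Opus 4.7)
The plan is to obtain Theorem~\ref{thm:suff_3} as a direct specialization of Theorem~\ref{thm:suff_2}, by choosing the perturbation function $h(x)$ to be (a scalar multiple of) the indicator of a subset $\Omega_x \subset \mathcal{X}$. Since Theorem~\ref{thm:suff_2} already establishes that $(X,Y)$ is $\IB_\beta$-learnable whenever $\beta > \inf_{h(x)} \beta_0[h(x)]$, any restriction of the class of admissible $h$'s produces an upper bound on this infimum and hence a valid (stronger-hypothesis) sufficient condition. So the strategy is: (i) plug $h(x) = \mathbf{1}[x \in \Omega_x]$ into $\beta_0[h]$, (ii) simplify via Bayes' rule to recognize the stated formula for $\beta_0(\Omega_x)$, and (iii) take the infimum over measurable subsets $\Omega_x \subset \mathcal{X}$.

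For step (ii), I would compute each moment in closed form. The numerator $\E_{x\sim p(x)}[h(x)^2] - (\E_{x\sim p(x)}[h(x)])^2$ reduces to $p(\Omega_x)(1 - p(\Omega_x))$ since $h^2 = h$. For the denominator, $\E_{x\sim p(x|y)}[h(x)] = p(\Omega_x \mid y)$, and Bayes' rule gives $p(\Omega_x\mid y) = p(y\mid\Omega_x)\,p(\Omega_x)/p(y)$; substituting and squaring yields
\begin{equation*}
\E_{y\sim p(y)}\!\left[p(\Omega_x\mid y)^2\right] - p(\Omega_x)^2 \;=\; p(\Omega_x)^2\left(\E_{y\sim p(y\mid\Omega_x)}\!\left[\frac{p(y\mid\Omega_x)}{p(y)}\right] - 1\right).
\end{equation*}
Dividing numerator by denominator and cancelling one factor of $p(\Omega_x)$ produces exactly the $\beta_0(\Omega_x)$ stated in the theorem, and combining with step (iii) gives the asserted sufficient condition $\beta > \inf_{\Omega_x} \beta_0(\Omega_x)$.

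A secondary remark, already foreshadowed in the discussion preceding the theorem, is that restricting to indicators is not arbitrary: within the subclass of perturbations supported on $\Omega_x$, a Cauchy--Schwarz argument with $u = h\sqrt{p}$, $v = \sqrt{p}$ shows that the normalized numerator $\E[h^2]/(\E[h])^2$ attains its minimum $1/p(\Omega_x)$ precisely when $h$ is constant on $\Omega_x$. This motivates indicators as a principled subclass that tightly controls the numerator, and explains why the resulting upper bound on $\inf_{h(x)} \beta_0[h(x)]$ still captures the ``conspicuous subset'' structure.

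Finally, the Pareto-frontier claim transfers automatically: Theorem~\ref{thm:suff_2} already asserts that $\left(\inf_{h(x)}\beta_0[h(x)]\right)^{-1}$ lower-bounds the slope of the $I(Y;Z)$ vs.\ $I(X;Z)$ frontier at the origin; since $\inf_{\Omega_x}\beta_0(\Omega_x) \ge \inf_{h(x)}\beta_0[h(x)]$, taking reciprocals yields a (generally weaker but valid) lower bound of the same form. I expect the main obstacle to be not conceptual but bookkeeping: excluding the degenerate cases $p(\Omega_x) \in \{0,1\}$ from the infimum so denominators remain finite and positive, and verifying that the denominator $\E_{y\sim p(y\mid\Omega_x)}[p(y\mid\Omega_x)/p(y) - 1]$ vanishes for every $\Omega_x$ only when $X \independent Y$, so that the nondegeneracy hypothesis of the theorem is exactly what is needed.
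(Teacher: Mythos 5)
Your proposal matches the paper's proof essentially step for step: specialize $h(x)$ in Theorem~\ref{thm:suff_2} to the indicator of $\Omega_x$, simplify the numerator to $p(\Omega_x)(1-p(\Omega_x))$ and the denominator via Bayes' rule to $p(\Omega_x)^2\bigl(\E_{y\sim p(y\mid\Omega_x)}[p(y\mid\Omega_x)/p(y)]-1\bigr)$, cancel, and note that restricting the infimum only enlarges it so the Pareto-slope bound is inherited. The Cauchy--Schwarz motivation, the handling of the degenerate denominator, and the role of the non-independence hypothesis are all exactly as in the paper.
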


The proof is given in Appendix \ref{app:suff_3}. In the proof we also show that this condition is invariant to invertible mappings of $X$.

\section{Discussion}
\label{sec:discussion}

\subsection{The conspicuous subset determines \texorpdfstring{$\beta_0$}{Lg}.}

From Eq. (\ref{eq:suff_3}), we see that three characteristics of the subset $\Omega_x\subset\X$ lead to low $\beta_0$:
\textbf{(1) confidence:} $p(y|\Omega_x)$ is large;
\textbf{(2) typicality and size:} the number of elements in $\Omega_x$ is large, or the elements in $\Omega_x$ are typical, leading to a large probability of $p(\Omega_x)$;
\textbf{(3) imbalance:} $p(y)$ is small for the subset $\Omega_x$, but large for its complement.
In summary, $\beta_0$ will be determined by the largest \emph{confident}, \emph{typical} and \emph{imbalanced subset} of examples, or an equilibrium of those characteristics. We term $\Omega_x$ at the minimization of $\beta_0(\Omega_x)$ the \emph{conspicuous subset}.

\subsection{Multiple phase transitions.}
Based on this characterization of $\Omega_x$, we can hypothesize datasets with multiple learnability phase transitions.
Specifically, consider a region $\Omega_{x0}$ that is small but ``typical'', consists of all elements confidently predicted as $y_0$ by $p(y|x)$, and where $y_0$ is the least common class.
By construction, this $\Omega_{x0}$ will dominate the infimum in Eq.~(\ref{eq:suff_3}), resulting in a small value of $\beta_0$.
However, the remaining $\mathcal{X} - \Omega_{x0}$ effectively form a new dataset, $\mathcal{X}_1$.
At exactly $\beta_0$, we may have that the current encoder, $p_0(z|x)$, has no mutual information with the remaining classes in $\mathcal{X}_1$; i.e., $I(Y_1;Z_0) = 0$.
In this case, Definition~\ref{def:learnable} applies to $p_0(z|x)$ with respect to $I(X_1;Z_1)$.
We might expect to see that, at $\beta_0$, learning will plateau until we get to some $\beta_1 > \beta_0$ that defines the phase transition for $\mathcal{X}_1$.
Clearly this process could repeat many times, with each new dataset $\mathcal{X}_i$ being distinctly more difficult to learn than $\mathcal{X}_{i-1}$.

\subsection{Similarity to information measures.}
The denominator of $\beta_0(\Omega_x)$ in Eq.~(\ref{eq:suff_3}) is closely related to mutual information.
Using the inequality $x - 1 \ge \log(x)$ for $x > 0$, it becomes:
\begin{align*}
\E_{y \sim p(y|\Omega_x)} \bigg[\frac{p(y|\Omega_x)}{p(y)} - 1 \bigg] &\ge \E_{y \sim p(y|\Omega_x)} \bigg[ \log \frac{p(y|\Omega_x)}{p(y)} \bigg] = \tilde{I}(\Omega_x;Y)
\end{align*}

where $\tilde{I}(\Omega_x;Y)$ is the mutual information ``density'' at $\Omega_x\subset\X$.
Of course, this quantity is also $\mathbb{D}_{\KL}[p(y|\Omega_x)||p(y)]$, so we know that the denominator of Eq.~(\ref{eq:suff_3}) is non-negative. Incidentally, $\E_{y \sim p(y|\Omega_x)} \big[\frac{p(y|\Omega_x)}{p(y)} - 1 \big]$ is the density of ``rational mutual information'' (\cite{lin2016criticality}) at $\Omega_x$.

Similarly, the numerator of $\beta_0(\Omega_x)$ is related to the self-information of $\Omega_x$:
$$\frac{1}{p(\Omega_x)} - 1 \ge \log \frac{1}{p(\Omega_x)} = -\log\ p(\Omega_x) = h(\Omega_x)$$
so we can estimate $\beta_0$ as:
\begin{equation}
\label{eq:info_approx}
\beta_0 \simeq \inf_{\Omega_x\subset \X} \frac{h(\Omega_x)}{\tilde{I}(\Omega_x;Y)}
\end{equation}
Since Eq.~(\ref{eq:info_approx}) uses upper bounds on both the numerator and the denominator, it does not give us a bound on $\beta_0$, only an estimate.

\subsection{Estimating model capacity.}
The observation that a model cannot distinguish between cluster overlap in the data and its own lack of capacity gives an interesting way to use IB-Learnability to measure the capacity of a set of models relative to the task they are being used to solve.
For example, for a classification task, we can use different model classes to estimate $p(y|x)$.
For each such trained model, we can estimate the corresponding IB-learnability threshold $\beta_0$.
A model with smaller capacity than the task needs will translate to more uncertainty in $p(y|\Omega_x)$, resulting in a larger $\beta_0$.
On the other hand, models that give the same $\beta_0$ as each other all have the same capacity relative to the task, even if we would otherwise expect them to have very different capacities.
For example, if two deep models have the same core architecture, but one has twice the number of parameters at each layer, and they both yield the same $\beta_0$, their capacities are equivalent with respect to the task.
Thus, $\beta_0$ provides a way to measure model capacity in a task-specific manner.

\subsection{Learnability and the Information Plane.}

\begin{figure}[t]
\begin{center}
\includegraphics[width=0.5\columnwidth]{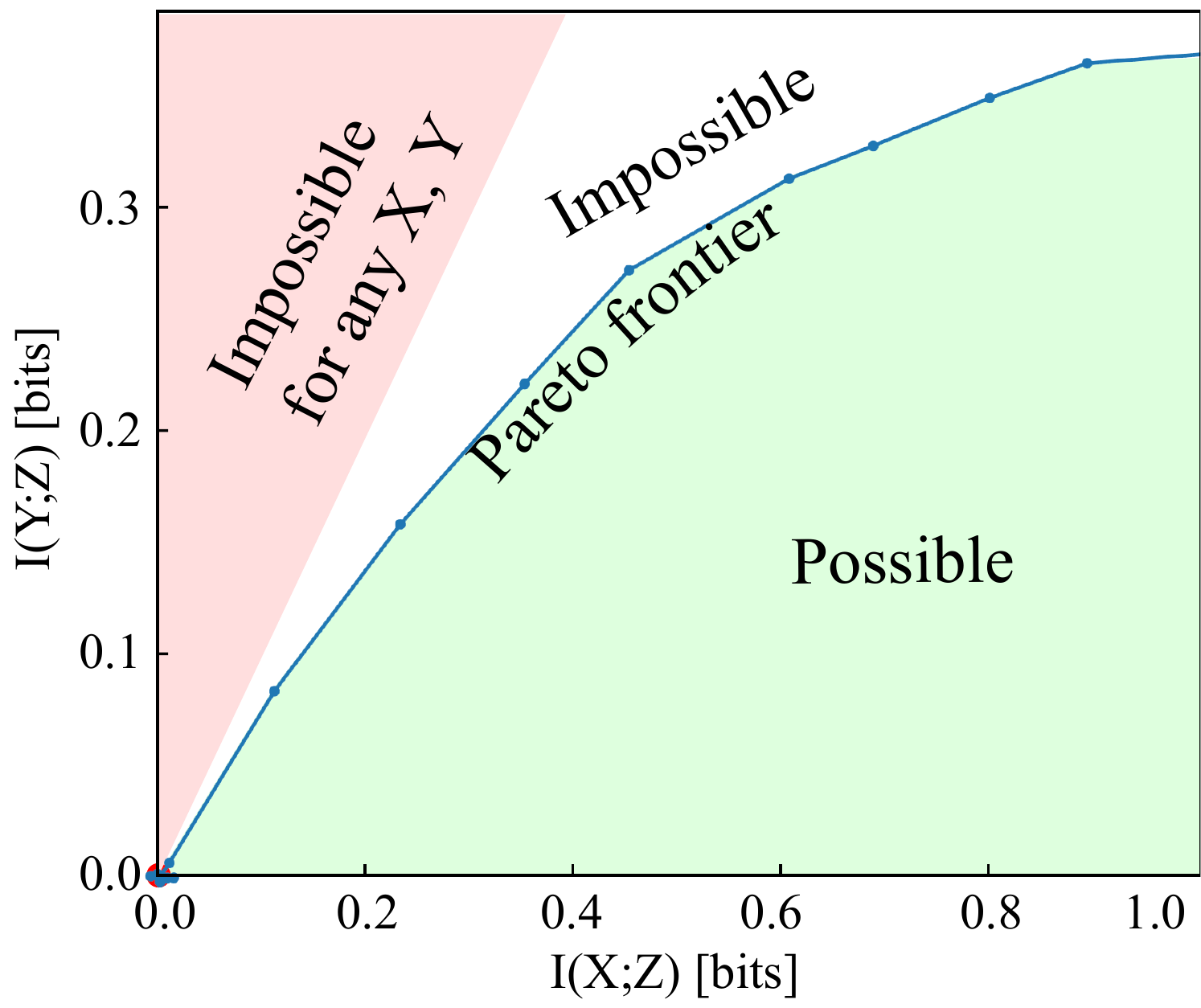}
\end{center}
\caption{The Pareto frontier of the information plane, $I(X;Z)$ vs $I(Y;Z)$, for the binary classification of MNIST digits 0 and 1 with 20\% label noise described in Sec.~\ref{sec:introduction} and Fig.~\ref{fig:mnist_0.2}.
For this problem, learning happens for models trained at $\beta > 3.25$.
$H(Y)=1$ bit since only two of ten digits are used, and $I(Y;Z) \le I(X;Y) \approx 0.5$ bits $<H(Y)$ because of the 20\% label noise.
The true frontier is differentiable; the figure shows a variational approximation that places an upper bound on both informations, horizontally offset to pass through the origin.
}
\label{fig:frontier}
\end{figure}

Many of our results can be interpreted in terms of the geometry of the Pareto frontier illustrated in Fig.~\ref{fig:frontier}, which describes the trade-off between increasing $I(Y;Z)$ and decreasing $I(X;Z)$.
At any point on this frontier that minimizes $\IB_\beta^{\min} \equiv \min I(X;Z) - \beta I(Y;Z)$, the frontier will have slope $\beta^{-1}$ if it is differentiable. If the frontier is also concave (has negative second derivative), then this slope $\beta^{-1}$ will take its maximum $\beta_0^{-1}$ at the origin, which implies
$\IB_\beta$-Learnability for $\beta > \beta_0$, so that the threshold for $\IB_\beta$-Learnability is simply the inverse slope of the frontier at the origin.
More generally, as long as the Pareto frontier is differentiable, the threshold for IB$_\beta$-learnability is the inverse of its maximum slope. Indeed, Theorem \ref{thm:suff_2} and Theorem \ref{thm:suff_3} give lower bounds of the slope of the Pareto frontier at the origin.

\subsection{IB-Learnability, hypercontractivity, and maximum correlation.}

IB-Learnability and its sufficient conditions we provide harbor a deep connection with hypercontractivity and maximum correlation:

\begin{align}
\frac{1}{\beta_0} &= \xi(X;Y)=\eta_\text{KL} \ge \sup_{h(x)}\frac{1}{\beta_0[h(x)]} = \rho_m^2(X;Y) \label{eq:relations_all}
\end{align}

which we prove in Appendix~\ref{app:maximum_corr}.
Here $\rho_m(X;Y)\equiv\max_{f,g} \mathbb{E}[f(X)g(Y)]$ s.t. $\mathbb{E}[f(X)]=\mathbb{E}[g(Y)]=0$ and $\mathbb{E}[f^2(X)]=\mathbb{E}[g^2(Y)]=1$ is the \textit{maximum correlation}~\citep{hirschfeld1935connection,gebelein1941statistische}, $\xi(X;Y)\equiv\sup_{Z-X-Y}\frac{I(Y;Z)}{I(X;Z)}$ is the \textit{hypercontractivity coefficient}, and $\eta_\text{KL}(p(y|x),p(x))\equiv\sup_{r(x)\neq p(x)}\frac{\mathbb{D}_\text{KL}(r(y)||p(y))}{\mathbb{D}_\text{KL}(r(x)||p(x))}$ is the \textit{contraction coefficient}.
Our proof relies on \citet{anantharam2013maximal}'s proof $\xi(X;Y)=\eta_\text{KL}$.
Our work reveals the deep relationship between IB-Learnability and these earlier concepts and provides additional insights about what aspects of a dataset give rise to high maximum correlation and hypercontractivity: the most confident, typical, imbalanced subset of $(X,Y)$.

\section{Estimating the IB-Learnability Condition}
\label{sec:estimate}

Theorem \ref{thm:suff_3} not only reveals the relationship between the learnability threshold for $\beta$ and the least noisy region of $P(Y|X)$, but also provides a way to practically estimate $\beta_0$, both in the general classification case, and in more structured settings.

\subsection{Estimation Algorithm}

Based on Theorem \ref{thm:suff_3}, for general classification tasks we suggest Algorithm \ref{alg:estimating_beta} to empirically estimate an upper-bound $\tilde{\beta}_{0} \ge \beta_0$, as well as discovering the conspicuous subset that determines $\beta_0$.

We approximate the probability of each example $p(x_i)$ by its empirical probability, $\hat{p}(x_i)$.
E.g., for MNIST, $p(x_i) = \frac{1}{N}$, where $N$ is the number of examples in the dataset.
The algorithm starts by first learning a maximum likelihood model of $p_\theta(y|x)$, using e.g. feed-forward neural networks.
It then constructs a matrix $P_{y|x}$ and a vector $p_y$ to store the estimated $p(y|x)$ and $p(y)$ for all the examples in the dataset.
To find the subset $\Omega$ such that the $\tilde{\beta}_0$ is as small as possible, by previous analysis we want to find a \emph{conspicuous} subset such that its $p(y|x)$ is large for a certain class $j$ (to make the denominator of Eq.~(\ref{eq:suff_3}) large), and containing as many elements as possible (to make the numerator small).

We suggest the following heuristics to discover such a conspicuous subset. For each class $j$,
we sort the rows of $(P_{y|x})$ according to its probability for the pivot class $j$ by decreasing order, and then perform a search over $i_\text{left}, i_\text{right}$ for $\Omega=\{i_\text{left},i_\text{left}+1,...,i_\text{right}\}$.
Since $\tilde{\beta}_0$ is large when $\Omega$ contains too few or too many elements, the minimum of $\tilde{\beta}_0^{(j)}$ for class $j$ will typically be reached with some intermediate-sized subset, and we can use binary search or other discrete search algorithm for the optimization.
The algorithm stops when $\tilde{\beta}_0^{(j)}$ does not improve by tolerance $\varepsilon$. The algorithm then returns the $\tilde{\beta}_0$ as the minimum over all the classes $\tilde{\beta}_0^{(1)},...\tilde{\beta}_0^{(N)}$, as well as the conspicuous subset that determines this $\tilde{\beta}_0$.

\begin{algorithm}[h]
\setstretch{1.5}
 \caption{\textbf{Estimating the upper bound for $\beta_0$ and identifying the conspicuous subset.}}
\label{alg:estimating_beta}
\begin{algorithmic}
\STATE {\bfseries Require}: Dataset $\D=\{(x_i,y_i)\},i=1,2,...N$. The number of classes is $C$.
\STATE {\bfseries Require} $\varepsilon$: tolerance for estimating $\beta_0$
\STATE 1: Learn a maximum likelihood model $p_\theta(y|x)$ using the dataset $\D$.
\STATE 2: Construct matrix $(P_{y|x})$ such that $(P_{y|x})_{ij}=p_\theta(y=y_j|x=x_i)$.
\vspace{0.5ex}
\STATE 3: Construct vector $p_y=(p_{y1},..,p_{yC})$ such that $p_{yj}=\frac{1}{N}\sum_{i=1}^N(P_{y|x})_{ij}$.
\STATE 4: \textbf{for} $j$ \textbf{in} $\{1,2,...C\}$:
\vspace{0.5ex}
\STATE 5: \ \ \ \ $P_{y|x}^{(\text{sort} j)}\gets$Sort the rows of $P_{y|x}$ in decreasing values of $(P_{y|x})_{ij}$.
\vspace{1.5ex}
\STATE 6: \ \ \ \ $\tilde{\beta}_0^{(j)},\Omega^{(j)}\gets$Search $i_\text{left}$, $i_\text{right}$ until $\tilde{\beta}_0^{(j)}=\textbf{Get}\boldsymbol{\beta}(P_{y|x},p_y,\Omega)$ is minimal with tolerance $\varepsilon$, 
\STATE \ \ \ \ \ \ \ \ where $\Omega=\{i_\text{left},i_\text{left}+1,...i_\text{right}\}$.
\STATE 7: \textbf{end for}
\STATE 8: $j^*\gets\argmin_j\{\tilde{\beta}_0^{(j)}\}, j=1,2,...N$.
\vspace{1.5ex}
\STATE 9: $\tilde{\beta}_0\gets\tilde{\beta}_0^{(j^*)}$.
\vspace{1.5ex}
\STATE 10: $P_{y|x}^{(\tilde{\beta}_0)}\gets$ the rows of $P_{y|x}^{(\text{sort} j^*)}$ indexed by $\Omega^{(j^*)}$.
\vspace{1.5ex}
\STATE 11: \textbf{return} $\tilde{\beta}_0, P_{y|x}^{(\tilde{\beta}_0)}$
\STATE
\STATE \textbf{subroutine Get$\boldsymbol{\beta}$}($P_{y|x}, p_y, \Omega$):
\STATE s1: $N\gets $ number of rows of $P_{y|x}$.
\STATE s2: $C\gets $ number of columns of $P_{y|x}$.
\STATE s3: $n\gets$ number of elements of $\Omega$.
\STATE s4: $(p_{y|\Omega})_j\gets\frac{1}{n}\sum_{i\in\Omega}(P_{y|x})_{ij}$, $j=1,2,...,C$.
\vspace{1.5ex}
\STATE s5: $\tilde{\beta}_0\gets \frac{\frac{N}{n} - 1}{\sum_{j} \big[ \frac{(p_{y|\Omega_x})_j^2}{p_{yj}} - 1 \big]}$
\vspace{1.5ex}
\STATE s6: \textbf{return} $\tilde{\beta}_0$
\end{algorithmic}
\end{algorithm}

After estimating $\tilde{\beta}_0$, we can then use it for learning with IB, either directly, or as an anchor for a region where we can perform a much smaller sweep than we otherwise would have.
This may be particularly important for very noisy datasets, where $\beta_0$ can be very large.

\subsection{Special Cases for Estimating \texorpdfstring{$\beta_0$}{Lg}}

Theorem \ref{thm:suff_3} may still be challenging to estimate, due to the difficulty of making accurate estimates of $p(\Omega_x)$ and searching over $\Omega_x\subset\X$.
However, if the learning problem is more structured, we may be able to obtain a simpler formula for the sufficient condition.

\subsubsection{Class-conditional label noise.}
Classification with noisy labels is a common practical scenario.
An important noise model is that the labels are randomly flipped with some hidden class-conditional probabilities and we only observe the corrupted labels.
This problem has been studied extensively \citep{angluin1988learning,natarajan2013learning,liu2016classification,xiao2015learning,northcutt2017learning}.
If IB is applied to this scenario, how large $\beta$ do we need?
The following corollary provides a simple formula.

\begin{corollary}
\label{corollary:suff_3_class_conditional}
Suppose that the true class labels are $y^*$, and the input space belonging to each $y^*$ has no overlap.
We only observe the corrupted labels $y$ with class-conditional noise $p(y|x,y^*) = p(y|y^*)$, and $Y$ is not independent of $X$.
We have that a sufficient condition for $\IB_\beta$-Learnability is:
\begin{equation}
\begin{aligned}
\label{eq:suff_3_class_conditional}
\beta > \inf_{y^*} \frac{\frac{1}{p(y^*)} - 1}{\sum_y \frac{p(y|y^*)^2}{p(y)} - 1}
\end{aligned}
\end{equation}
\end{corollary}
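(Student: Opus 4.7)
The plan is to derive the corollary as a direct specialization of Theorem~\ref{thm:suff_3} (the Conspicuous Subset Sufficient Condition). That theorem takes an infimum of $\beta_0(\Omega_x)$ over all subsets $\Omega_x \subset \mathcal{X}$; the corollary restricts attention to a particularly structured subfamily --- the support of each true class $y^*$ --- which under the stated non-overlap and class-conditional noise assumptions admits clean closed-form expressions for $p(\Omega_x)$ and $p(y|\Omega_x)$. Restricting the infimum to this subfamily only increases its value, so what we obtain is a valid (though not necessarily tight) sufficient condition, which is exactly what the corollary asserts.

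Concretely, first I would define $\mathcal{X}_{y^*} \subset \mathcal{X}$ to be the set of inputs whose true label is $y^*$. The non-overlap hypothesis means the $\{\mathcal{X}_{y^*}\}$ partition $\mathcal{X}$, so $p(\mathcal{X}_{y^*}) = p(y^*)$. Next, I would invoke the class-conditional noise assumption $p(y|x,y^*) = p(y|y^*)$ to conclude that for every $x \in \mathcal{X}_{y^*}$, $p(y|x) = p(y|y^*)$ (no dependence on $x$ beyond the class membership). Averaging over $x \in \mathcal{X}_{y^*}$ using Bayes' rule then gives $p(y|\mathcal{X}_{y^*}) = p(y|y^*)$.

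Substituting these two identities into the definition of $\beta_0(\Omega_x)$ from Theorem~\ref{thm:suff_3}, the numerator $\frac{1}{p(\Omega_x)} - 1$ becomes $\frac{1}{p(y^*)} - 1$, while the denominator expands as
\[
\mathbb{E}_{y \sim p(y|y^*)}\!\left[\frac{p(y|y^*)}{p(y)} - 1\right] \;=\; \sum_{y} \frac{p(y|y^*)^2}{p(y)} \;-\; 1.
\]
Taking the infimum over $y^*$ of the resulting ratio yields an upper bound on $\inf_{\Omega_x \subset \mathcal{X}} \beta_0(\Omega_x)$, so any $\beta$ that exceeds this infimum also exceeds $\inf_{\Omega_x} \beta_0(\Omega_x)$ and hence satisfies the sufficient condition of Theorem~\ref{thm:suff_3}, establishing IB$_\beta$-learnability. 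The non-independence hypothesis ensures the denominator is strictly positive for at least one $y^*$, so the bound is meaningful.

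This reduction is essentially mechanical, and I do not anticipate a serious obstacle. The only point needing a brief justification is the averaging step $p(y|\mathcal{X}_{y^*}) = p(y|y^*)$ in the continuous case, but under the non-overlap assumption this is a one-line application of Bayes' rule with $p(y|x)$ constant on $\mathcal{X}_{y^*}$. It is worth remarking that tighter bounds could in principle be obtained by optimizing over unions of class-supports rather than single classes, but the single-class form is what makes the closed-form expression in the corollary so interpretable.
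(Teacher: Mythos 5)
Your proposal is correct and follows essentially the same route as the paper: specialize Theorem~\ref{thm:suff_3} by choosing $\Omega_x = \mathcal{X}_{y^*}$, use non-overlap to get $p(\Omega_x) = p(y^*)$ and class-conditional noise to get $p(y|\Omega_x) = p(y|y^*)$, and note that restricting the infimum to this subfamily can only raise its value, which still yields a valid sufficient condition. If anything, your explicit remark that the restricted infimum is an upper bound (rather than an equality) on $\inf_{\Omega_x}\beta_0(\Omega_x)$ is slightly more careful than the paper's displayed step, which is written as an equality but really means the same one-sided inequality.
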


We see that under class-conditional noise, the sufficient condition reduces to a discrete formula which only depends on the noise rates $p(y|y^*)$ and the true class probability $p(y^*)$, which can be accurately estimated via e.g. \citet{northcutt2017learning}.
Additionally, if we know that the noise is class-conditional, but the observed $\beta_0$ is greater than the R.H.S. of Eq.~(\ref{eq:suff_3_class_conditional}), we can deduce that there is overlap between the true classes.
The proof of Corollary \ref{corollary:suff_3_class_conditional} is provided in Appendix \ref{app:corollaries}.

\subsubsection{Deterministic relationships.}
Theorem \ref{thm:suff_3} also reveals that $\beta_0$ relates closely to whether $Y$ is a deterministic function of $X$, as shown by Corollary \ref{corollary:suff_3_2}:
\begin{corollary}
\label{corollary:suff_3_2}

Assume that $Y$ contains at least one value $y$ such that its probability $p(y)>0$. If $Y$ is a deterministic function of $X$ and not independent of $X$, then a sufficient condition for $\IB_\beta$-Learnability is $\beta > 1$.
\end{corollary}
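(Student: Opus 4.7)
The plan is to apply Theorem~\ref{thm:suff_3} (the Conspicuous Subset Sufficient Condition) with a carefully chosen subset $\Omega_x$, namely the preimage under the deterministic map $Y=Y(X)$ of a value $y_0$ that has positive probability strictly less than one. This should make both the numerator and denominator of $\beta_0(\Omega_x)$ collapse to the same expression, yielding $\beta_0(\Omega_x)=1$, which combined with Theorem~\ref{thm:suff_3} gives exactly the sufficient condition $\beta>1$.

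First I would verify that such a $y_0$ exists. By hypothesis there is some $y_0 \in \Y$ with $p(y_0)>0$. If we had $p(y_0)=1$ then $Y$ would be almost surely constant, hence independent of $X$, contradicting the non-independence assumption. Together with the hypothesis that $Y$ is not independent of $X$, this forces the existence of a value $y_0$ with $0<p(y_0)<1$. Next, since $Y$ is a deterministic function of $X$, define
\begin{equation*}
\Omega_x \;=\; \{\, x \in \X : Y(x) = y_0 \,\},
\end{equation*}
which satisfies $p(\Omega_x)=p(y_0)$ and, because $Y$ is deterministic given $X$, also $p(y\mid \Omega_x)=\mathbf{1}[y=y_0]$.

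Substituting into the formula of Theorem~\ref{thm:suff_3}, the numerator becomes $1/p(\Omega_x)-1 = (1-p(y_0))/p(y_0)$, while the denominator reduces to a single term,
\begin{equation*}
\E_{y\sim p(y\mid \Omega_x)}\!\left[\frac{p(y\mid \Omega_x)}{p(y)}-1\right] \;=\; \frac{1}{p(y_0)}-1 \;=\; \frac{1-p(y_0)}{p(y_0)}.
\end{equation*}
Since $0<p(y_0)<1$, both expressions are finite and nonzero, giving $\beta_0(\Omega_x)=1$. Therefore $\inf_{\Omega_x\subset\X}\beta_0(\Omega_x)\le 1$, and Theorem~\ref{thm:suff_3} yields $\IB_\beta$-Learnability whenever $\beta>1$.

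The argument is essentially a direct substitution, so there is no deep technical obstacle. The only care needed is in the edge-case bookkeeping: ensuring that a non-degenerate $y_0$ exists (ruling out $p(y_0)=1$ via the non-independence hypothesis) and ensuring $p(\Omega_x)>0$ so that the conditional distribution $p(y\mid \Omega_x)$ is well defined. If the underlying measure-theoretic setup allows $Y$ to be continuous, one should note that the hypothesis ``$p(y)>0$ for some $y$'' explicitly guarantees an atom, which is exactly what makes the preimage construction meaningful; otherwise the construction trivially transfers to the discrete-$Y$ setting assumed throughout the paper.
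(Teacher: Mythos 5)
Your proof is correct and takes essentially the same route as the paper: apply Theorem~\ref{thm:suff_3} with $\Omega_x$ equal to the preimage of a positive-probability value $y_0$, so that $p(\Omega_x)=p(y_0)$ and $p(y\mid\Omega_x)$ is a point mass, making numerator and denominator both equal to $1/p(y_0)-1$ and hence $\beta_0(\Omega_x)=1$. Your treatment is actually slightly more careful than the paper's, which does not explicitly rule out the degenerate case $p(y_0)=1$; your observation that non-independence of $X$ and $Y$ forces $0<p(y_0)<1$ is a worthwhile addition.
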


The assumption in the corollary \ref{corollary:suff_3_2} is satisfied by classification, and certain regression problems.\footnote{%
    The following scenario does not satisfy this assumption: for certain regression problems where $Y$ is a continuous random variable and the probability density function $p_Y(y)$ is bounded, then for any $y$, the \emph{probability} $P(Y=y)$ has measure 0.
}
This corollary generalizes the result in \cite{kolchinsky2018caveats} which only proves it for classification problems.
Combined with the necessary condition $\beta>1$ for any dataset $(X,Y)$ to be $\IB_\beta$-learnable (Section \ref{sec:learnability}), we have that under the assumption, if $Y$ is a deterministic function of $X$, then a necessary and sufficient condition for $\IB_\beta$-learnability is $\beta>1$; i.e., its $\beta_0$ is 1.
The proof of Corollary \ref{corollary:suff_3_2} is provided in Appendix \ref{app:corollaries}.

Therefore, in practice, if we find that $\beta_0 > 1$, we may infer that $Y$ is not a deterministic function of $X$.
For a classification task, we may infer that either some classes have overlap, or the labels are noisy.
However, recall that finite models may add effective class overlap if they have insufficient capacity for the learning task, as mentioned in Section~\ref{sec:suff_conditions}.
This may translate into a higher observed $\beta_0$, even when learning deterministic functions.

\section{Experiments}
\label{sec:experiments}

To test how the theoretical conditions for $\IB_\beta$-learnability match with experiment, we apply them to synthetic data with varying noise rates and class overlap, MNIST binary classification with varying noise rates, and CIFAR10 classification, comparing with the $\beta_0$ found experimentally.
We also compare with the algorithm in \citet{kim2017discovering} for estimating the hypercontractivity coefficient (=$1/\beta_0$) via the contraction coefficient $\eta_{\text{KL}}$.
Experiment details are in Section~\ref{app:experiment}.

\subsection{Synthetic Dataset Experiments}
\label{sec:synthetic_exp}

We construct a set of datasets from 2D mixtures of 2 Gaussians as $X$ and the identity of the mixture component as $Y$.
We simulate two practical scenarios with these datasets: \textbf{(1)} noisy labels with class-conditional noise, and \textbf{(2)} class overlap.
For (1), we vary the class-conditional noise rates.
For (2), we vary class overlap by tuning the distance between the Gaussians.
For each experiment, we sweep $\beta$ with exponential steps, and observe $I(X;Z)$ and $I(Y;Z)$.
We then compare the empirical $\beta_0$ indicated by the onset of above-zero $I(X;Z)$ with predicted values for $\beta_0$.

\begin{figure}[t]
\begin{center}
\includegraphics[width=0.78\columnwidth]{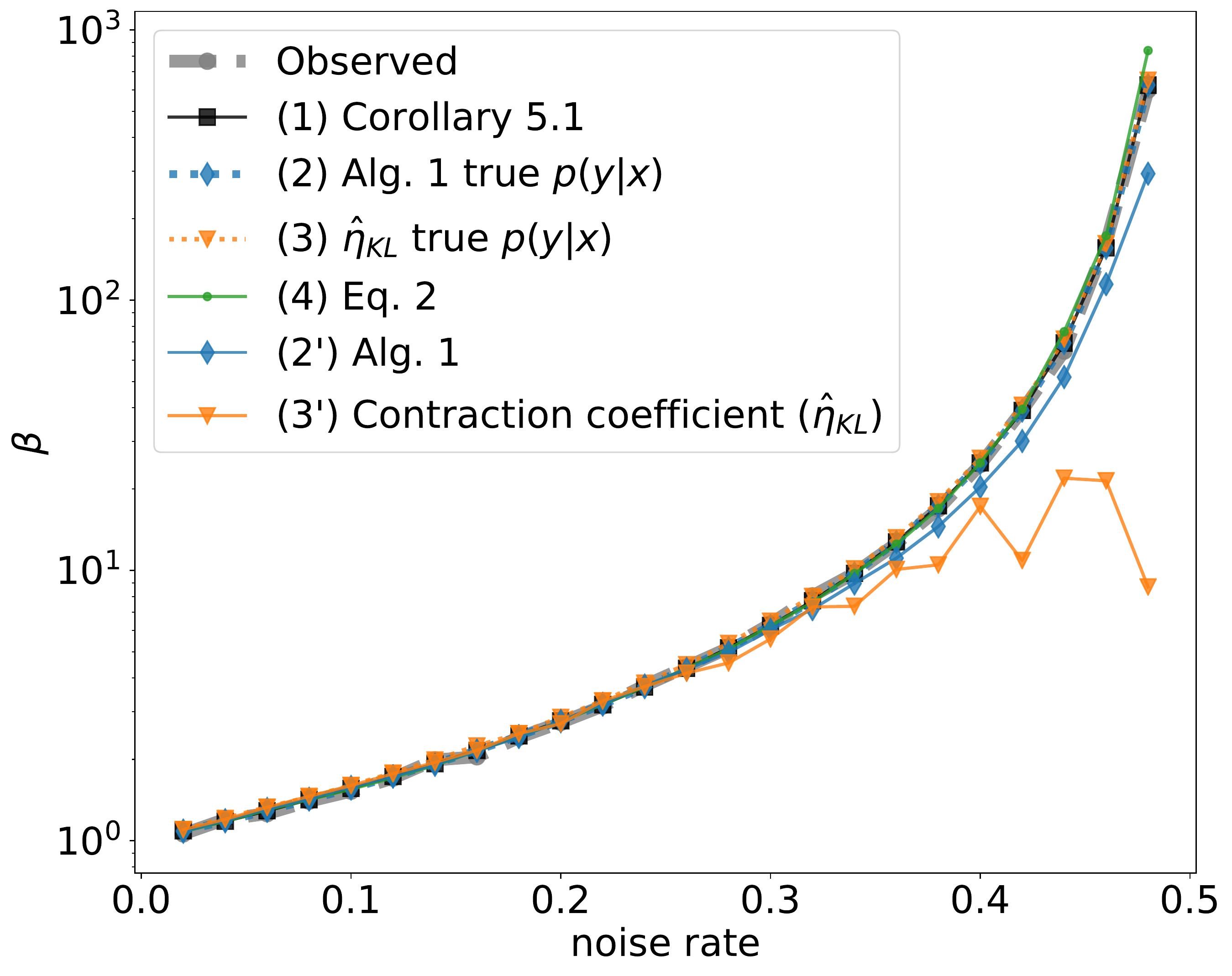}
\end{center}
\caption{Predicted vs. experimentally identified $\beta_0$, for mixture of Gaussians with varying class-conditional noise rates.}
\label{fig:gauss_noise_beta}
\end{figure}

\begin{table}[ht!]
\begin{center}
\caption{
Full table of values used to generate Fig.~\ref{fig:gauss_noise_beta}.}
\label{table:class_cond_noise}
\vskip 0.1in
\setlength{\tabcolsep}{4pt}  
\begin{tabular}{r | c c c c c c c }

 & & & (2) Alg. \ref{alg:estimating_beta} & (3) $\hat{\eta}_{\KL}$ & & & \\
Noise rate    & Observed & (1) Corollary~\ref{corollary:suff_3_class_conditional} & true $p(y|x)$ & true $p(y|x)$ & (4) Eq.~\ref{eq:suff_2} & (2$'$) Alg.~\ref{alg:estimating_beta} & (3$'$) $\hat{\eta}_{\KL}$ \\
\hline
\hline\noalign{\smallskip}
0.02 &    1.06 &    1.09 &    1.09 &    1.10 &    1.08 &    1.08 &   1.10 \\
0.04 &    1.20 &    1.18 &    1.18 &    1.21 &    1.18 &    1.19 &   1.20 \\
0.06 &    1.26 &    1.29 &    1.29 &    1.33 &    1.30 &    1.31 &   1.33 \\
0.08 &    1.40 &    1.42 &    1.42 &    1.45 &    1.42 &    1.43 &   1.46 \\
0.10 &    1.52 &    1.56 &    1.56 &    1.60 &    1.55 &    1.58 &   1.60 \\
0.12 &    1.70 &    1.73 &    1.73 &    1.78 &    1.71 &    1.73 &   1.77 \\
0.14 &    1.99 &    1.93 &    1.93 &    1.99 &    1.90 &    1.91 &   1.95 \\
0.16 &    2.04 &    2.16 &    2.16 &    2.24 &    2.15 &    2.15 &   2.16 \\
0.18 &    2.41 &    2.44 &    2.44 &    2.49 &    2.43 &    2.42 &   2.49 \\
0.20 &    2.74 &    2.78 &    2.78 &    2.86 &    2.76 &    2.77 &   2.71 \\
0.22 &    3.15 &    3.19 &    3.19 &    3.29 &    3.19 &    3.21 &   3.29 \\
0.24 &    3.75 &    3.70 &    3.70 &    3.83 &    3.71 &    3.75 &   3.72 \\
0.26 &    4.40 &    4.34 &    4.34 &    4.48 &    4.35 &    4.31 &   4.17 \\
0.28 &    5.16 &    5.17 &    5.17 &    5.37 &    5.12 &    4.98 &   4.55 \\
0.30 &    6.34 &    6.25 &    6.25 &    6.49 &    6.24 &    6.03 &   5.58 \\
0.32 &    8.06 &    7.72 &    7.72 &    8.02 &    7.63 &    7.19 &   7.33 \\
0.34 &    9.77 &    9.77 &    9.77 &   10.13 &    9.74 &    8.95 &   7.37 \\
0.36 &   12.58 &   12.76 &   12.76 &   13.21 &   12.51 &   11.11 &  10.09 \\
0.38 &   16.91 &   17.36 &   17.36 &   17.96 &   16.97 &   14.55 &  10.49 \\
0.40 &   24.66 &   25.00 &   25.00 &   25.99 &   25.01 &   20.36 &  17.27 \\
0.42 &   39.08 &   39.06 &   39.06 &   40.85 &   39.48 &   30.12 &  10.89 \\
0.44 &   64.82 &   69.44 &   69.44 &   71.80 &   76.48 &   51.95 &  21.95 \\
0.46 &  163.07 &  156.25 &  156.26 &  161.88 &  173.15 &  114.57 &  21.47 \\
0.48 &  599.45 &  625.00 &  625.00 &  651.47 &  838.90 &  293.90 &   8.69 \\
\hline
\end{tabular}
\end{center}
\end{table}

\paragraph{Classification with class-conditional noise.}
In this experiment, we have a mixture of Gaussian distribution with 2 components, each of which is a 2D Gaussian with diagonal covariance matrix $\Sigma=\text{diag}(0.25, 0.25)$.
The two components have distance 16 (hence virtually no overlap) and equal mixture weight.
For each $x$, the label $y \in \{0,1\}$ is the identity of which component it belongs to.
We create multiple datasets by randomly flipping the labels $y$ with a certain noise rate $\rho = P(y=0|y^*=1) = P(y=1|y^*=0)$.
For each dataset, we train VIB models across a range of $\beta$, and observe the onset of learning via random $I(X;Z)$ (Observed).
To test how different methods perform in estimating $\beta_0$, we apply the following methods:
\textbf{(1)} Corollary 5.1, since this is classification with class-conditional noise, and the two true classes have virtually no overlap;
\textbf{(2)} Alg.~\ref{alg:estimating_beta} with true $p(y|x)$;
\textbf{(3)} The algorithm in \citet{kim2017discovering} that estimates $\hat{\eta}_{\KL}$, provided with true $p(y|x)$;
\textbf{(4)} $\beta_0[h(x)]$ in Eq.~(\ref{eq:suff_2});
\textbf{(2$'$)} Alg.~\ref{alg:estimating_beta} with $p(y|x)$ estimated by a neural net;
\textbf{(3$'$)} $\hat{\eta}_{\KL}$ with the same $p(y|x)$ as in (2$'$).
The results are shown in Fig.~\ref{fig:gauss_noise_beta} and in Table \ref{table:class_cond_noise}.

From Fig.~\ref{fig:gauss_noise_beta} and Table \ref{table:class_cond_noise} we see the following. 
\textbf{(A)} When using the true $p(y|x)$, both Alg.~\ref{alg:estimating_beta} and $\hat{\eta}_\text{KL}$ generally upper bound the empirical $\beta_0$, and Alg.~\ref{alg:estimating_beta} is generally tighter.
\textbf{(B)} When using the true $p(y|x)$, Alg.~\ref{alg:estimating_beta} and Corollary \ref{corollary:suff_3_class_conditional} give the same result.
\textbf{(C)} Comparing Alg.~\ref{alg:estimating_beta} and $\hat{\eta}_{\KL}$ both of which use the same empirically estimated $p(y|x)$, both approaches provide good estimation in the low-noise region; however, in the high-noise region, Alg.~\ref{alg:estimating_beta} gives more precise values than $\hat{\eta}_{\KL}$, indicating that Alg.~\ref{alg:estimating_beta} is more robust to the estimation error of $p(y|x)$. 
\textbf{(D)} Eq.~(\ref{eq:suff_2}) empirically upper bounds the experimentally observed $\beta_0$, and gives almost the same result as theoretical estimation in Corollary \ref{corollary:suff_3_class_conditional} and Alg.~\ref{alg:estimating_beta} with the true $p(y|x)$.
In the classification setting, this approach doesn't require any learned estimate of $p(y|x)$, as we can directly use the empirical $p(y)$ and $p(x|y)$ from SGD mini-batches.

This experiment also shows that for dataset where the signal-to-noise is small, $\beta_0$ can be very high.
Instead of blindly sweeping $\beta$, our result can provide guidance for setting $\beta$ so learning can happen.

\begin{figure}[t!]
\begin{center}
\includegraphics[width=0.7\columnwidth]{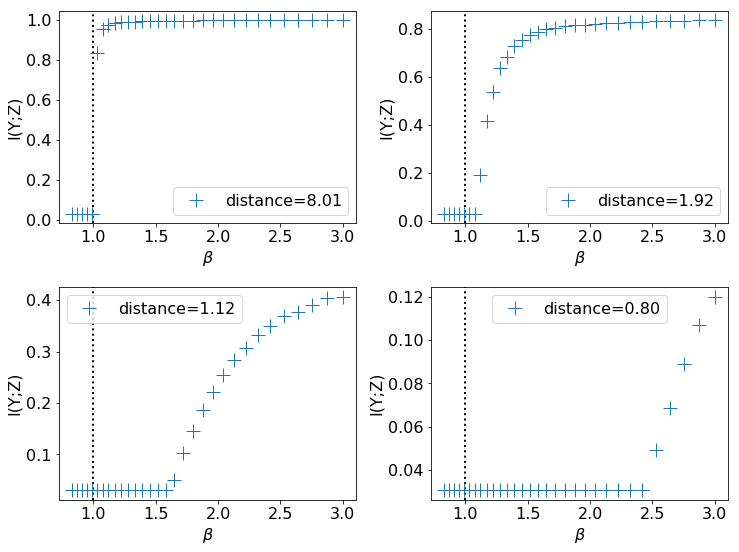}
\end{center}
\caption{
$I(Y;Z)$ vs. $\beta$, for mixture of Gaussian datasets with different distances between the two mixture components.
The vertical lines are $\beta_{0,\text{predicted}}$ computed by the R.H.S. of Eq.~(\ref{eq:suff_3_class_conditional}).
As Eq.~(\ref{eq:suff_3_class_conditional}) does not make predictions w.r.t. class overlap, the vertical lines are always just above $\beta_{0,\text{predicted}} = 1$.
However, as expected, decreasing the distance between the classes in $X$ space also increases the true $\beta_0$.
}
\label{fig:gauss_overlap_beta_indi}
\end{figure}

\begin{figure}[t!]
\begin{center}
\includegraphics[width=0.8\columnwidth]{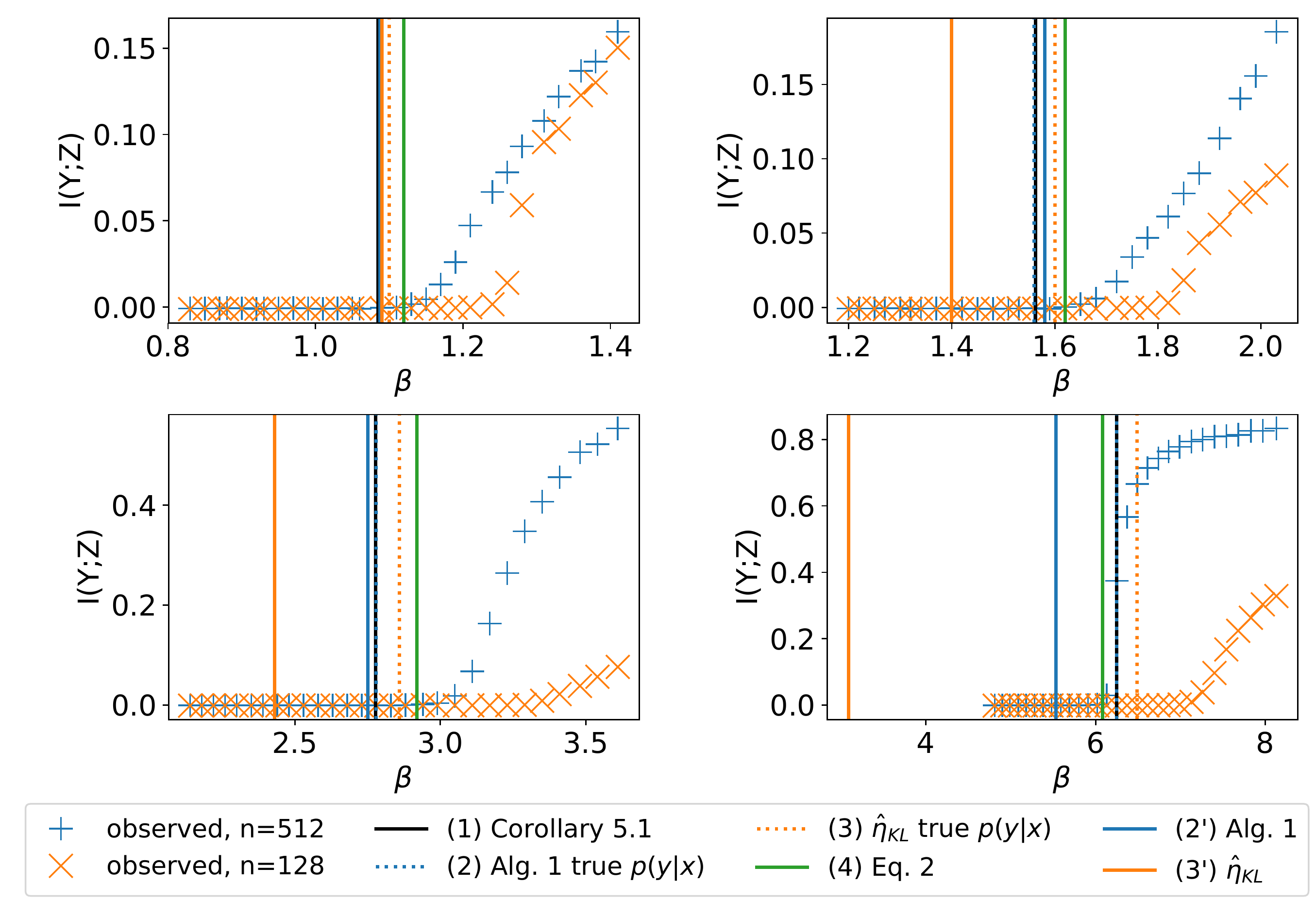}
\end{center}
\caption{
$I(Y;Z)$ vs. $\beta$ for the MNIST binary classification with different hidden units per layer $n$ and noise rates $\rho$: (upper left) $\rho=0.02$, (upper right) $\rho=0.1$, (lower left) $\rho=0.2$, (lower right) $\rho=0.3$.
The vertical lines are $\beta_{0}$ estimated by different methods.
$n=128$ has insufficient capacity for the problem, so its observed learnability onset is pushed higher, similar to the class overlap case.
}
\label{fig:mnist_noise_beta_indi}
\end{figure}

\paragraph{Classification with class overlap.}
\label{sec:exp_overlap}

In this experiment, we test how different amounts of overlap among classes influence $\beta_0$.
We use the mixture of Gaussians with two components, each of which is a 2D Gaussian with diagonal covariance matrix $\Sigma=\text{diag}(0.25,0.25)$.
The two components have weights 0.6 and 0.4.
We vary the distance between the Gaussians from 8.0 down to 0.8 and observe the $\beta_{0,exp}$.
Since we don't add noise to the labels, if there were no overlap and a deterministic map from $X$ to $Y$, we would have $\beta_0=1$ by Corollary \ref{corollary:suff_3_2}.
The more overlap between the two classes, the more uncertain $Y$ is given $X$.
By Eq.~\ref{eq:suff_3} we expect $\beta_0$ to be larger, which is corroborated in Fig.~\ref{fig:gauss_overlap_beta_indi}.

\subsection{MNIST Experiments}

We perform binary classification with digits 0 and 1, and as before, add class-conditional noise to the labels with varying noise rates $\rho$.
To explore how the model capacity influences the onset of learning, for each dataset we train two sets of VIB models differing only by the number of neurons in their hidden layers of the encoder: one with $n=512$ neurons, the other with $n=128$ neurons.
As we describe in Section \ref{sec:suff_conditions}, insufficient capacity will result in more uncertainty of $Y$ given $X$ from the point of view of the model, so we expect the observed $\beta_{0}$ for the $n=128$ model to be larger.
This result is confirmed by the experiment (Fig. \ref{fig:mnist_noise_beta_indi}). Also, in Fig. \ref{fig:mnist_noise_beta_indi} 
we plot $\beta_0$ given by different estimation methods. We see that the observations (A), (B), (C) and (D) in Section \ref{sec:synthetic_exp} still hold.

\subsection{MNIST Experiments using Equation \ref{eq:suff_2}}
\label{sec:mnist_eq2}

To see what IB learns at its onset of learning for the full MNIST dataset, we optimize Eq. (\ref{eq:suff_2}) w.r.t. the full MNIST dataset, and visualize the clustering of digits by $h(x)$.
Eq.~(\ref{eq:suff_2}) can be optimized using SGD using any differentiable parameterized mapping $h(x) : \mathcal{X} \rightarrow \mathbbm{R}$.
In this case, we chose to parameterize $h(x)$ with a PixelCNN++ architecture~\citep{pixelcnn,pxpp}, as PixelCNN++ is a powerful autoregressive model for images that gives a scalar output (normally interpreted as $\log\,p(x)$).
Eq.~(\ref{eq:suff_2}) should generally give two clusters in the output space, as discussed in Section~\ref{sec:suff_conditions}.
In this setup, smaller values of $h(x)$ correspond to the subset of the data that is easiest to learn.
Fig.~\ref{fig:mnist_hx_hist} shows two strongly separated clusters, as well as the threshold we choose to divide them.
Fig.~\ref{fig:mnist_sort_by_hx} shows the first 5,776 MNIST training examples as sorted by our learned $h(x)$, with the examples above the threshold highlighted in red.
We can clearly see that our learned $h(x)$ has separated the ``easy'' one (1) digits from the rest of the MNIST training set.

\begin{figure}[t!]
\begin{center}
\includegraphics[width=\columnwidth]{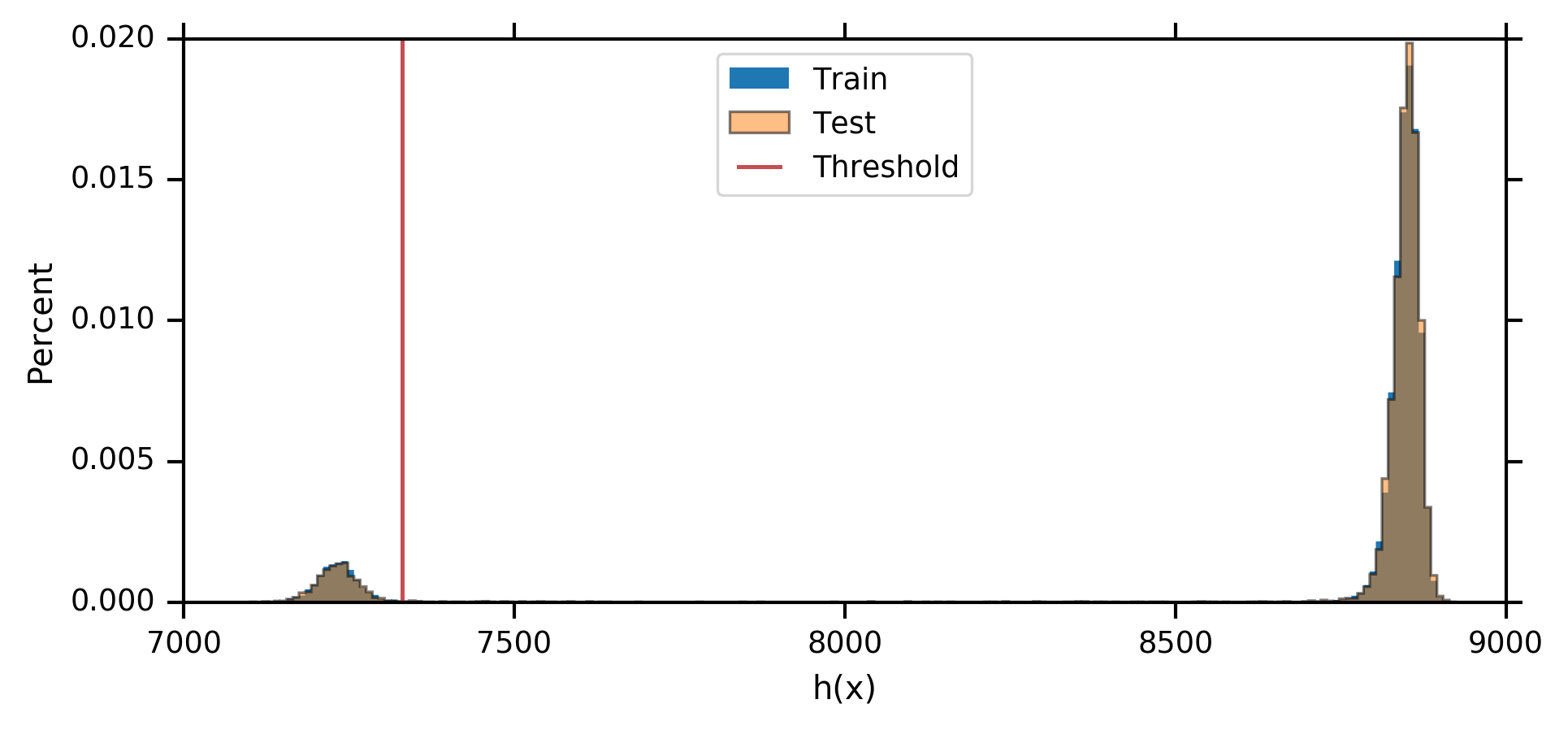}
\end{center}
\caption{
Histograms of the full MNIST training and validation sets according to $h(X)$.
Note that both are bimodal, and the histograms are indistinguishable.
In both cases, $h(x)$ has learned to separate most of the ones into the smaller mode, but difficult ones are in the wide valley between the two modes.
See Figure~\ref{fig:mnist_sort_by_hx} for all of the training images to the left of the red threshold line, as well as the first few images to the right of the threshold.
}
\label{fig:mnist_hx_hist}
\end{figure}

\begin{figure}[t!]
\begin{center}
\includegraphics[width=0.6\columnwidth]{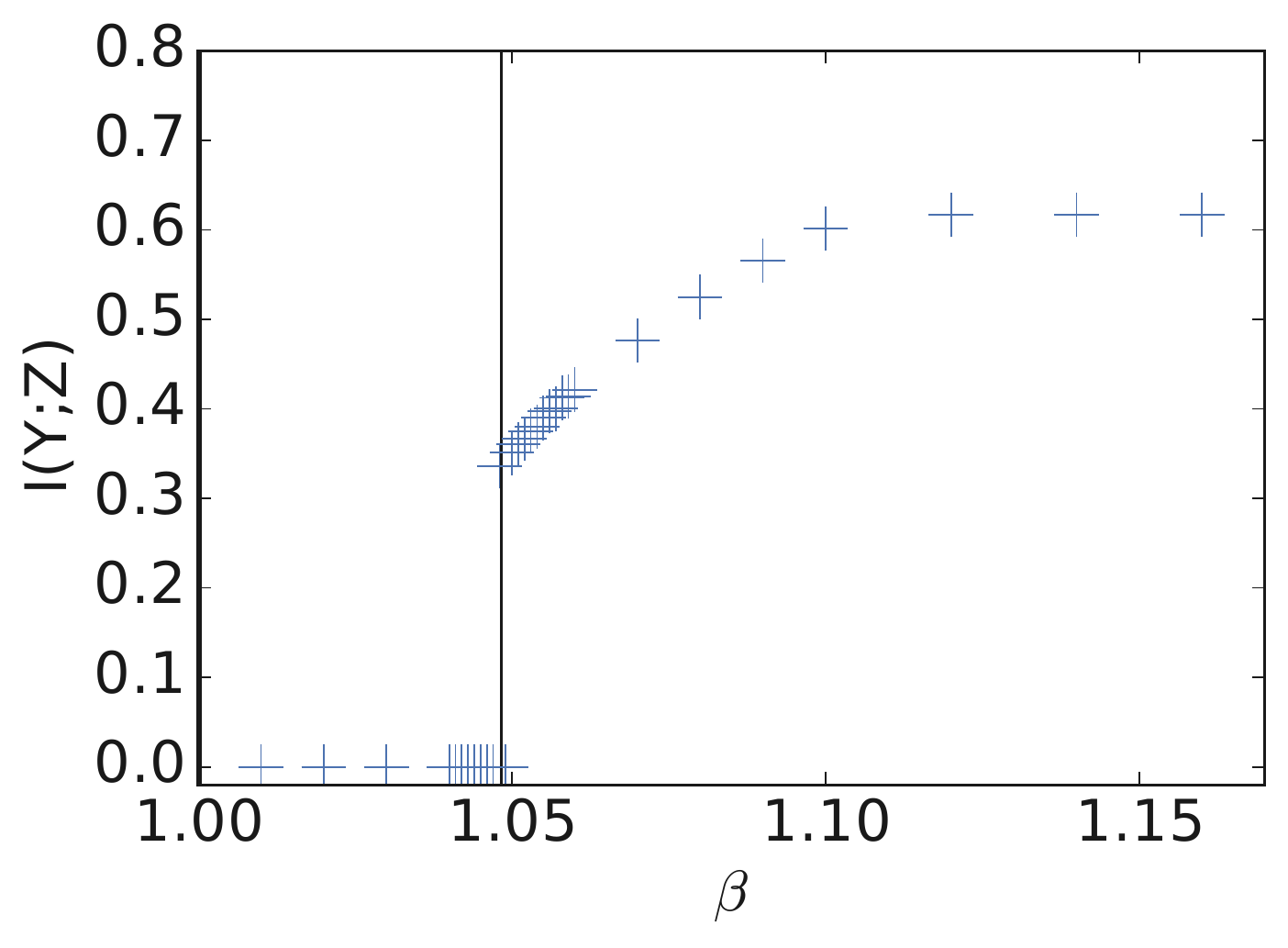}
\end{center}
\vskip -0.12in
\caption{
Plot of $I(Y;Z)$ vs $\beta$ for CIFAR10 training set with 20\% label noise.
Each blue cross corresponds to a fully-converged model starting with independent initialization.
The vertical black line corresponds to the predicted $\beta_0=1.0483$ using Alg.~\ref{alg:estimating_beta}.
The empirical $\beta_0=1.048$.
}
\label{fig:cifar10_experiments}
\end{figure}

\begin{figure*}[p]
\begin{center}
\includegraphics[width=1\columnwidth]{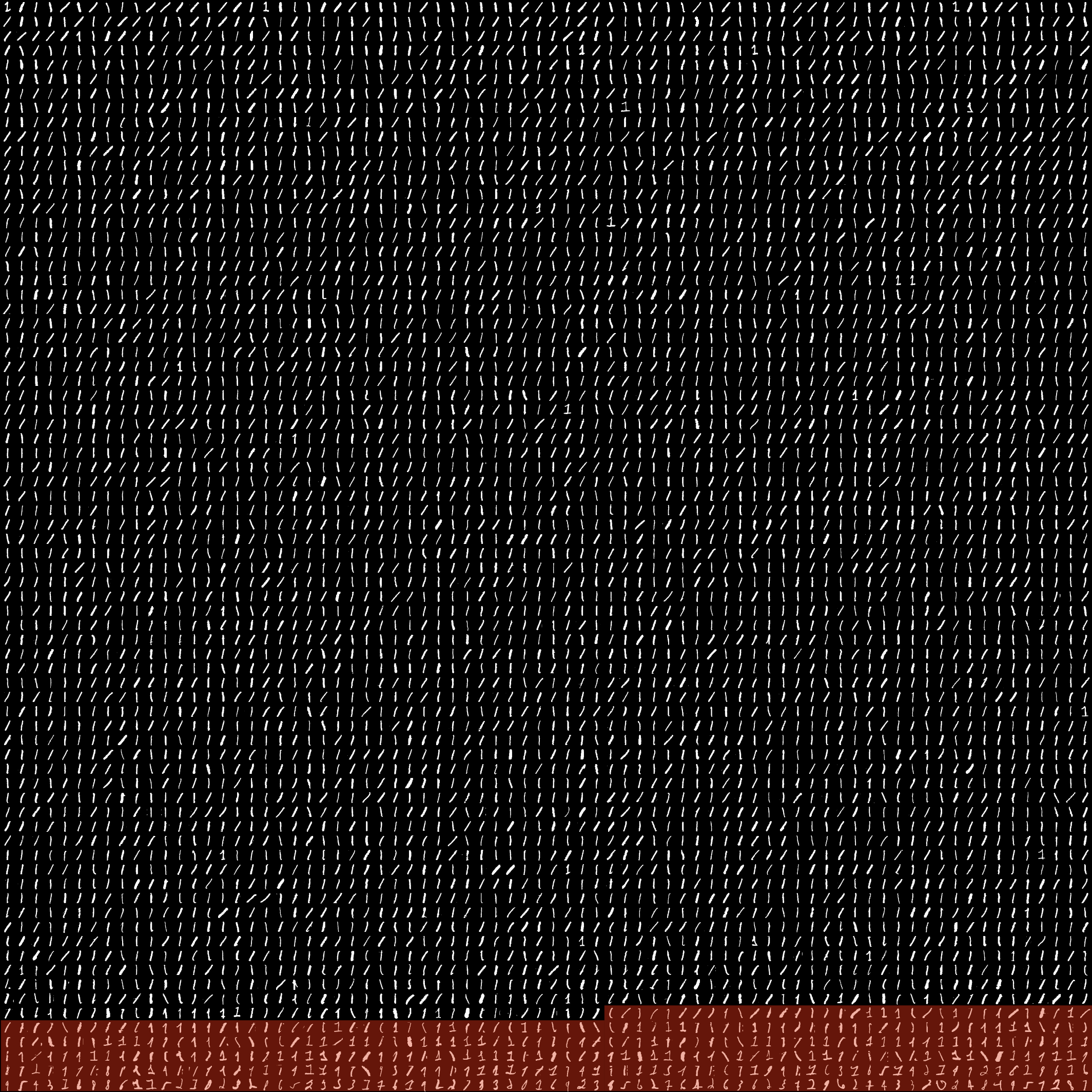}
\end{center}
\caption{
The first 5776 MNIST training set digits when sorted by $h(x)$.
The digits highlighted in red are above the threshold drawn in Figure~\ref{fig:mnist_hx_hist}.
}
\label{fig:mnist_sort_by_hx}
\end{figure*}

\subsection{CIFAR10 Forgetting Experiments}

For CIFAR10~\citep{cifar}, we study how \textit{forgetting} varies with $\beta$.
In other words, given a VIB model trained at some high $\beta_2$, if we anneal it down to some much lower $\beta_1$, what $I(Y;Z)$ does the model converge to?
Using Alg.~\ref{alg:estimating_beta}, we estimated $\beta_0 = 1.0483$ on a version of CIFAR10 with 20\% label noise, where the $P_{y|x}$ is estimated by maximum likelihood training with the same encoder and classifier architectures as used for VIB. 
For the VIB models, the lowest $\beta$ with performance above chance was $\beta = 1.048$, a very tight match with the estimate from Alg.~\ref{alg:estimating_beta}.
See Appendix~\ref{app:cifar_details} for details.

\section{Conclusion}

In this paper, we have presented theoretical results for predicting the onset of learning, and have shown that it is determined by the conspicuous subset of the training examples.
We gave a practical algorithm for predicting the transition as well as discovering this subset, and showed that those predictions are accurate, even in cases of extreme label noise.
We proved a deep connection between IB-learnability, our upper bounds on $\beta_0$, the hypercontractivity coefficient, the contraction coefficient, and the maximum correlation.
We believe that these results provide a deeper understanding of IB, as well as a tool for analyzing a dataset by discovering its conspicuous subset, and a tool for measuring model capacity in a task-specific manner.

Our work also raises other questions, such as whether there are other phase transitions in learnability that might be identified.
We hope to address some of those questions in future work.

\chapter{Intermediate phase transitions}
\label{chap4:IB_phase_transition}

In Chapter \ref{chap3:IB}, we have studied the first phase transition in Information Bottleneck (IB), the learnability transition, in detail. In general, when tuning the relative strength between compression and prediction terms in IB, how do the two terms behave, and what's their relationship with the dataset and the learned representation?
In this chapter\footnote{The paper \href{https://openreview.net/forum?id=HJloElBYvB}{``Phase transitions for the information bottleneck in representation learning''} is published as a conference paper at \emph{ICLR} 2020 \cite{Wu2020Phase}. A short version is presented at \href{https://sites.google.com/view/itml19/home}{\emph{NeurIPS} 2019 Workshop on Information Theory and Machine Learning}. Authors: Wu, Tailin and Ian Fischer. \href{https://arxiv.org/abs/2001.01878}{arXiv:2001.01878}.}, we set out to answer this question by studying multiple phase transitions in the IB objective: $\IB_\beta[p(z|x)]=I(X;Z)-\beta I(Y;Z)$, where sudden jumps of $\frac{dI(Y;Z)}{d\beta}$ and prediction accuracy are observed with increasing $\beta$.
We introduce a definition for IB phase transitions as a qualitative change of the IB loss landscape, and show that the transitions correspond to the onset of learning new classes. Using second-order calculus of variations, we derive a formula that provides a practical condition for IB phase transitions, and draw its connection with the Fisher information matrix for parameterized models.
We provide two perspectives to understand the formula, revealing that each IB phase transition is finding a component of maximum (nonlinear) correlation between $X$ and $Y$ orthogonal to the learned representation, in close analogy with canonical-correlation analysis (CCA) in linear settings. Based on the theory, we present an algorithm for discovering phase transition points.
Finally, we verify that our theory and algorithm accurately predict phase transitions in categorical datasets, predict the onset of learning new classes and class difficulty in MNIST, and predict prominent phase transitions in CIFAR10 experiments.

\section{Introduction}
\label{sec:phase_introduction}

The Information Bottleneck ($\IB$) objective~\citep{tishby2000information}:
\begin{equation}
\label{eq:phase_IB_beta}
\IB_\beta[p(z|x)]:= I(X;Z) - \beta I(Y;Z)
\end{equation}
explicitly trades off model compression ($I(X;Z)$, $I(\cdot;\cdot)$ denoting mutual information) with predictive performance ($I(Y;Z)$) using the Lagrange multiplier $\beta$, where $X,Y$ are observed random variables, and $Z$ is a learned representation of $X$.
The IB method has proved effective in a variety of scenarios, including improving the robustness against adversarial attacks \citep{alemi2016deep,fischer2018the}, learning invariant and disentangled representations~ \citep{achille2018emergence,achille2018information}, underlying information-based geometric clustering \citep{strouse2017information}, improving the training and performance in adversarial learning ~\citep{peng2018variational}, and facilitating skill discovery~\citep{sharma2019dynamics} and learning goal-conditioned policy~\citep{goyal2019infobot} in reinforcement learning.

From Eq. (\ref{eq:phase_IB_beta}) we see that when $\beta\to0$ it will encourage $I(X;Z)=0$ which leads to a trivial representation $Z$ that is independent of $X$, while when $\beta\to+\infty$, it reduces to a maximum likelihood objective\footnote{%
  For example, in classification, it reduces to cross-entropy loss.
} that does not constrain the information flow.
Between these two extremes, how will the IB objective behave? 
Will prediction and compression performance change smoothly, or do there exist interesting transitions in between? 
In \citet{wu2019learnabilityEntropy}, the authors observe and study the learnability transition, i.e. the $\beta$ value such that the IB objective transitions from a trivial global minimum to learning a nontrivial representation.
They also show how this first phase transition relates to the structure of the dataset.
However, to answer the full question, we need to consider the full range of $\beta$. 

\paragraph{Motivation.}
To get a sense of how $I(Y;Z)$ and $I(X;Z)$ vary with $\beta$, we train Variational Information Bottleneck (VIB) models~\citep{alemi2016deep} on the CIFAR10 dataset~\citep{cifar}, where each experiment is at a different $\beta$ and random initialization of the model.
Fig.~\ref{fig:cifar_multiple_phase_transition_noise} shows the $I(X;Z)$, $I(Y;Z)$ and accuracy vs. $\beta$, as well as $I(Y;Z)$ vs. $I(X;Z)$ for CIFAR10 with 20\% label noise (see Appendix \ref{app:cifar_details} for details).

\begin{figure}[t]
\begin{center}
\includegraphics[width=1\columnwidth]{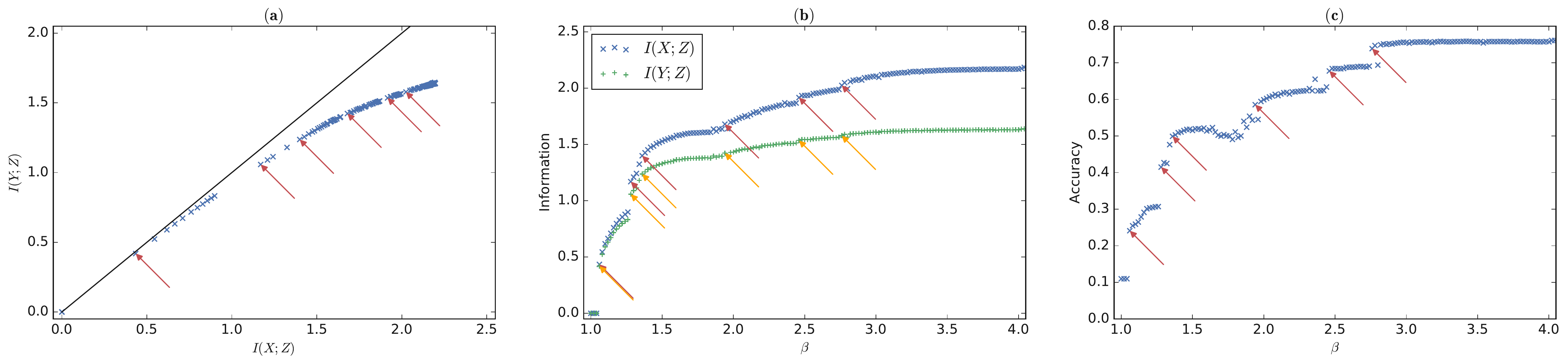}
\end{center}
\caption{%
CIFAR10 plots \textbf{(a)} showing the information plane, as well as $\beta$ vs \textbf{(b)} $I(X;Z)$ and $I(Y;Z)$, and \textbf{(c)} accuracy, all on the training set with 20\% label noise.
The arrows point to empirically-observed phase transitions.
The vertical lines correspond to phase transitions found with Alg.~\ref{alg:coordinate_descent}.
}
\label{fig:cifar_multiple_phase_transition_noise}
\end{figure}

From Fig.~\ref{fig:cifar_multiple_phase_transition_noise}\textbf{(b)(c)}, we see that as we increase $\beta$, instead of going up smoothly, both $I(X;Z)$ and $I(Y;Z)$ show multiple phase transitions, where the slopes $\frac{dI(X;Z)}{d\beta}$ and $\frac{dI(Y;Z)}{d\beta}$ are discontinuous and the accuracy has discrete jumps.
The observation lets us refine our question: When do the phase transitions occur, and how do they depend on the structure of the dataset?
These questions are important, since answering them will help us gain a better understanding of the IB objective and its close interplay with the dataset and the learned representation. 

Moreover, the IB objective belongs to a general form of two-term trade-offs in many machine learning objectives: $L=\text{Prediction-loss}+\beta\cdot\text{Complexity}$,
where the complexity term generally takes the form of regularization.
Usually, learning is set at a specific $\beta$.
Many more insights can be gained if we understand the behavior of the prediction loss and model complexity with varying $\beta$, and how they depend on the dataset.
The techniques developed to address the question in the IB setting may also help us understand the two-term tradeoff in other learning objectives.

\paragraph{Contributions.}
In this work, we begin to address the above question in  IB settings. Specifically:

\begin{itemize}

\item We identify a \emph{qualitative} change of the IB loss landscape w.r.t. $p(z|x)$ for varying $\beta$ as IB phase transitions (Section \ref{sec:phase_definition}). 

\item Based on the definition, we introduce a quantity $G[p(z|x)]$ and use it to prove a theorem giving a practical condition for IB phase transitions.
We further reveal the connection between $G[p(z|x)]$ and the Fisher information matrix when $p(z|x)$ is parameterized by $\thetaa$ (Section \ref{sec:phase_definition}).

\item We reveal the close interplay between the IB objective, the dataset and the learned representation, by showing that in IB, each phase transition corresponds to learning a new nonlinear component of maximum correlation between $X$ and $Y$, orthogonal to the previously-learned $Z$, and each with decreasing strength (Section \ref{sec:phase_understanding}).

\end{itemize}

To the best of our knowledge, our work provides the first theoretical formula to address IB phase transitions in the most general setting. In addition, we present an algorithm for iteratively finding the IB phase transition points (Section \ref{sec:alg_for_discovering_transition}).
We show that our theory and algorithm give tight matches with the observed phase transitions in categorical datasets, predict the onset of learning new classes and class difficulty in MNIST, and predict prominent transitions in CIFAR10 experiments (Section \ref{sec:phase_experiments}).

\section{Related Work}

The Information Bottleneck Method~\citep{tishby2000information} provides a tabular method based on the Blahut-Arimoto (BA) Algorithm~\citep{blahut1972computation} to numerically solve the IB functional for the optimal encoder distribution $P(Z|X)$, given the trade-off parameter $\beta$ and the cardinality of the representation variable $Z$.
This work has been extended in a variety of directions, including to the case where all three variables $X,Y,Z$ are multivariate Gaussians~\citep{chechik2005information}, cases of variational bounds on the IB and related functionals for amortized learning~\citep{alemi2016deep,achille2018emergence,fischer2018the}, and a more generalized interpretation of the constraint on model complexity as a Kolmogorov Structure Function~\citep{achille2018dynamics}.
Previous theoretical analyses of IB include~\citet{rey2012meta}, which looks at IB through the lens of copula functions, and \citet{shamir2010learning}, which starts to tackle the question of how to bound generalization with IB.
We will make practical use of the original IB algorithm, as well as the amortized bounds of the Variational Informormation Bottleneck~\citep{alemi2016deep} and the Conditional Entropy Bottleneck~\citep{fischer2018the}.

Phase transitions, where key quantities change discontinuously with varying relative strength in the two-term trade-off, have been observed in many different learning domains, for multiple learning objectives.
In \cite{rezende2018taming}, the authors observe phase transitions in the latent representation of $\beta$-VAE for varying $\beta$.
\cite{strouse2017information} utilize the kink angle of the phase transitions in the Deterministic Information Bottleneck (DIB)~\citep{strouse2017deterministic} to determine the optimal number of clusters for geometric clustering. 
\citet{tegmark2019pareto} explicitly considers critical points in binary classification tasks using a discrete information bottleneck with a non-convex Pareto-optimal frontier.
In \cite{achille2018emergence}, the authors observe a transition on the tradeoff of $I(\theta;X,Y)$ vs. $H(Y|X,\theta)$ in InfoDropout.
Under IB settings, \cite{chechik2005information} study the Gaussian Information Bottleneck, and analytically solve the critical values $\beta_i^c=\frac{1}{1-\lambda_i}$, where $\lambda_i$ are eigenvalues of the matrix $\Sigma_{x|y}\Sigma_x^{-1}$, and $\Sigma_x$ is the covariance matrix.
This work provides valuable insights for IB, but is limited to the special case that $X$, $Y$ and $Z$ are jointly Gaussian. 
Phase transitions in the general IB setting have also been observed, which \citet{tishbyinfo} describes as ``information bifurcation''.
In \cite{wu2019learnabilityEntropy}, the authors study the first phase transition, i.e. the learnability phase transition, and provide insights on how the learnability depends on the dataset.
Our work is the first work that addresses all the IB phase transitions in the most general setting, and provides theoretical insights on the interplay between the IB objective, its phase transitions, the dataset, and the learned representation.

\section{Formula for IB phase transitions}
\label{sec:phase_definition}

\subsection{Definitions}
Let $X\in\mathcal{X}, Y\in\mathcal{Y}, Z\in\mathcal{Z}$ be random variables denoting the input, target and representation, respectively, having a joint probability distribution $p(X, Y, Z)$, with $\X\times\mathcal{Y}\times\mathcal{Z}$ its support. $X$, $Y$ and $Z$ satisfy the Markov chain $Z-X-Y$, i.e. $Y$ and $Z$ are conditionally independent given $X$. We assume that the integral (or summing if $X$, $Y$ or $Z$ are discrete random variables) is on $\X\times\mathcal{Y}\times\mathcal{Z}$. We use $x$, $y$ and $z$ to denote the instances of the respective random variables. The above settings are used throughout the paper. We can view the IB objective $\IB_\beta[p(z|x)]$ (Eq. \ref{eq:phase_IB_beta}) as a functional of the encoding distribution $p(z|x)$. To prepare for the introduction of IB phase transitions, we first define \emph{relative perturbation function} and  \emph{second variation}, as follows.

\begin{definition}
\textbf{Relative perturbation function:} For $p(z|x)$, its relative perturbation function $r(z|x)$ is a bounded function that maps $\mathcal{X}\times \mathcal{Z}$ to $\mathbb{R}$ and satisfies $\E_{z\sim p(z|x)}[r(z|x)]=0$. Formally, define $\QQ:=\{r(z|x): \mathcal{X}\times\mathcal{Z}\to \mathbb{R}\ \big|\ \E_{z\sim p(z|x)}[r(z|x)]=0, \text{and } \exists M>0 \text{ s.t. } \forall X\in\mathcal{X},Z\in\mathcal{Z}, |r(z|x)|\le M\}$. We have that $r(z|x)\in \QQ$ iff $r(z|x)$ is a relative perturbation function of $p(z|x)$.
The perturbed probability (density) is $p'(z|x)=p(z|x)\left(1+\epsilon\cdot r(z|x)\right)$ for some $\epsilon>0$.
\end{definition}

\begin{definition}
\textbf{Second variation:} Let functional $F[f(x)]$ be defined on some normed linear space $\mathscr{R}$. Let us add a perturbative function $\epsilon\cdot h(x)$ to $f(x)$, and now the functional 
$F[f(x)+\epsilon\cdot h(x)]$ can be expanded as

\begin{equation*}
\begin{aligned}
\Delta F[f(x)]&=F[f(x)+\epsilon \cdot h(x)]-F[f(x)]\\
&=\varphi_1[\e\cdot h(x)]+\varphi_2[\e\cdot h(x)]+ \varphi_r[\e\cdot h(x)]\norm{\e\cdot h(x)}^2
\end{aligned}
\end{equation*}

such that $\lim\limits_{\e\to0}\varphi_r[\e\cdot h(x)]=0$, where $\norm{\cdot}$ denotes the norm, $\varphi_1[\e\cdot h(x)]=\epsilon \frac{d F[f(x)]}{d\epsilon}$ is a linear functional of $\epsilon \cdot h(x)$, and is called the \emph{first variation}, denoted as $\delta F[f(x)]$. $\varphi_2[\e\cdot h(x)]=\frac{1}{2}\epsilon^2 \frac{d^2 F[f(x)]}{d\epsilon^2}$ is a quadratic functional of $\epsilon \cdot h(x)$, and is called the \emph{second variation}, denoted as $\delta^2 F[f(x)]$.
\end{definition}

We can think of the perturbation function $\e\cdot h(x)$ as an infinite-dimensional ``vector'' ($x$ being the indices), with $\e$ being its amplitude and $h(x)$ its direction. 
With the above preparations, we define the IB phase transition as a change in the local curvature on the global minimum of $\IB_\beta[p(z|x)]$.

\begin{definition}
\label{definition:phase_transition_0}
\textbf{IB phase transitions:} Let $r(z|x)\in \QQ$ be a perturbation function of $p(z|x)$, $\pstar$ denote the optimal solution of $\IB_\beta[p(z|x)]$ at $\beta$, where the IB functional $\IB[\cdot]$ is defined in Eq. (\ref{eq:phase_IB_beta}). The IB phase transitions $\beta_i^c$ are the $\beta$ values satisfying the following two conditions:

(1) $\forall r(z|x)\in \QQ$, $\delta^2\IB_\beta[p(z|x)]\big|_{p^*_{\beta}(z|x)}\ge0$;

(2) 
$\lim\limits_{\beta'\to\beta^+}\inf\limits_{r(z|x)\in \QQ}\delta^2\IB_{\beta'}[p(z|x)]\big|_{p^*_{\beta}(z|x)}=0^-$.

Here $\beta^+$ and $0^-$ denote one-sided limits.

\end{definition}

We can understand the $\delta^2\IB_\beta[p(z|x)]$ as a local ``curvature'' of the IB objective $\IB_\beta$ (Eq. \ref{eq:phase_IB_beta}) w.r.t. $p(z|x)$, along some relative perturbation $r(z|x)$.
A phase transition occurs when the convexity of $\IB_\beta[p(z|x)]$ w.r.t. $p(z|x)$ changes from a minimum to a saddle point in the neighborhood of its optimal solution $\pstar$ as $\beta$ increases from $\beta_c$ to $\beta_c+0^+$.
This means that there exists a perturbation to go downhill and find a better minimum.
We validate this definition empirically below.

\subsection{Condition for IB phase transitions}

The definition for IB phase transition (Definition \ref{definition:phase_transition_0}) indicates the important role  $\delta^2\IB_\beta[p(z|x)]$ plays on the optimal solution in providing the condition for phase transitions.
To concretize it and prepare for a more practical condition for IB phase transitions, we expand $\IB_\beta[p(z|x)(1+\epsilon\cdot r(z|x))]$ to the second order of $\epsilon$, giving:
\begin{lemma}
\label{lemma:second_order_variation}
For $\IBB_\beta[p(z|x)]$, the condition of $\ \forall r(z|x)\in \QQ$, $\delta^2\IBB_\beta[p(z|x)]\ge0$ is equivalent to $\beta\le G[p(z|x)]$.
The threshold function $G[p(z|x)]$ is given by:
\begingroup\makeatletter\def\f@size{9.8}\check@mathfonts
\begin{equation}
\begin{aligned}
\label{eq:G}
G[p(z|x)] &:= \inf_{r(z|x)\in\QQ} \G[r(z|x);p(z|x)] \\
\G[r(z|x);p(z|x)]&:= 
\frac {\E_{x,z\sim p(x,z)}[r^2(z|x)]-\E_{z\sim p(z)}\left[\left(\E_{x\sim p(x|z)}[r(z|x)]\right)^2\right]}
      {\E_{y,z\sim p(y,z)}\left[\left(\E_{x\sim p(x|y,z)}[r(z|x)]\right)^2\right]-\E_{z\sim p(z)}\left[\left(\E_{x\sim p(x|z)}[r(z|x)]\right)^2\right]}
\end{aligned}
\end{equation}
\endgroup
\end{lemma}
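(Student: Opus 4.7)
The plan is to compute $\delta^2 \IB_\beta[p(z|x)]$ explicitly by perturbing $p(z|x)\to p(z|x)(1+\epsilon\,r(z|x))$ with $r\in\QQ$ and expanding both $I(X;Z)$ and $I(Y;Z)$ to order $\epsilon^2$. The condition $\E_{z\sim p(z|x)}[r(z|x)]=0$ built into $\QQ$ is exactly what preserves normalization of the perturbed encoder. The first step is to derive the induced perturbations $p'(z) = p(z)\bigl(1+\epsilon\,\bar r(z)\bigr)$ and $p'(z|y) = p(z|y)\bigl(1+\epsilon\,\tilde r(z|y)\bigr)$ with $\bar r(z):=\E_{x\sim p(x|z)}[r(z|x)]$ and $\tilde r(z|y):=\E_{x\sim p(x|y,z)}[r(z|x)]$; the second formula uses the Markov condition $Z-X-Y$ to write $p(x|y,z)=p(x|y)\,p(z|x)/p(z|y)$.

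Next I would expand $I'(X;Z)=\iint p(x)\,p'(z|x)\log\bigl(p'(z|x)/p'(z)\bigr)\,dx\,dz$ using $\log(1+t) = t - t^2/2 + O(t^3)$ on both factors of the log-ratio, with $u = \epsilon r(z|x)$ and $v = \epsilon\bar r(z)$. Collecting by order of $\epsilon$ and using $\int p(z|x)\,r(z|x)\,dz=0$ to cancel several pieces, the three $O(\epsilon^2)$ contributions (from $u\cdot\log(p(z|x)/p(z))$, from the quadratic part of $L:=\log(1+u)-\log(1+v)$, and from the cross-term $u\cdot L$) combine to $\tfrac12\iint p(x,z)\,(r(z|x)-\bar r(z))^2\,dx\,dz$. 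Using the tower identity $\E_{x,z}[r(z|x)\,\bar r(z)] = \E_z[\bar r^2(z)]$ when expanding the square gives $\delta^2 I(X;Z) = \tfrac{\epsilon^2}{2}\bigl(\E_{x,z}[r^2(z|x)]-\E_z[\bar r^2(z)]\bigr)$. The $I(Y;Z)$ calculation is structurally identical after swapping $X\mapsto Y$ and $r\mapsto\tilde r$, noting that $\tilde r$ plays the role for the induced perturbation on $(Y,Z)$ that $r$ plays on $(X,Z)$, and that the shared $p'(z)$ produces the same $\bar r$ via $\bar r(z) = \E_{y|z}[\tilde r(z|y)]$; this yields $\delta^2 I(Y;Z) = \tfrac{\epsilon^2}{2}\bigl(\E_{y,z}[\tilde r^2(z|y)]-\E_z[\bar r^2(z)]\bigr)$.

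Combining, $\delta^2 \IB_\beta[p(z|x)] = \delta^2 I(X;Z) - \beta\,\delta^2 I(Y;Z)$ is a quadratic form in $r$ whose non-negativity for a given admissible $r$ is equivalent to $\beta\le\G[r;p(z|x)]$ with $\G$ as in Eq.~(\ref{eq:G}). Requiring $\delta^2 \IB_\beta\ge0$ for \emph{every} $r\in\QQ$ is therefore equivalent to $\beta\le\inf_r\G[r;p(z|x)]=G[p(z|x)]$. To make the infimum well-posed I would verify that both the numerator and the denominator of $\G$ are non-negative: one application of Jensen to $\bar r(z)=\E_{y|z}[\tilde r(z|y)]$ gives $\E_z[\bar r^2]\le\E_{y,z}[\tilde r^2]$, and a second application through $\tilde r=\E_{x|y,z}[r]$ gives $\E_{y,z}[\tilde r^2]\le\E_{x,z}[r^2]$, so both sides of $\G$ are non-negative and their ordering mirrors the data-processing inequality along $Z\gets X\to Y$.

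The main obstacle I expect is bookkeeping in the $\epsilon$-expansion: the cross-terms between $u$ (from $p'(z|x)$) and $v$ (from $p'(z)$) inside $(1+u)\log\bigl((1+u)/(1+v)\bigr)$ must be combined carefully so that the three $O(\epsilon^2)$ pieces identified above collapse to the centered form $(r-\bar r)^2$ rather than leaving a stray linear-in-$r$ remainder, and analogous care is required on the $(Y,Z)$ side. Once the two second-variation formulas are established, the claimed equivalence is a one-line sign argument on a scalar-parameterized quadratic form.
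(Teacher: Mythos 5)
Your proposal follows essentially the same route as the paper: expand $\IB_\beta[p(z|x)(1+\epsilon r(z|x))]$ to second order in the relative perturbation, derive $\delta^2 I(X;Z)=\tfrac{\epsilon^2}{2}\bigl(\E_{x,z}[r^2(z|x)]-\E_z[\bar r^2(z)]\bigr)$ and the analogous formula for $\delta^2 I(Y;Z)$, apply Jensen along the tower $\bar r=\E_{y|z}[\tilde r]=\E_{x|z}[r]$ to show both numerator and denominator of $\G$ are non-negative, and rearrange the resulting quadratic-form inequality into $\beta\le G[p(z|x)]$. The only bookkeeping item the paper makes explicit that you gloss over is the degenerate case $\E_{y,z}[\tilde r^2(z|y)]=\E_z[\bar r^2(z)]$ for some admissible $r$, in which case the second-variation inequality holds automatically and that $r$ imposes no constraint on $\beta$, so it may be excluded from the infimum.
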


The proof is given in Appendix \ref{app:lemma_G}, in which we also give Eq. (\ref{eq_app:G_empirical1}) for empirical estimation.
Note that Lemma \ref{lemma:second_order_variation} is very general and can be applied to any $p(z|x)$, not only at the optimal solution $\pstar$.

\paragraph{The Fisher Information matrix.}
In practice, the encoder $p_\thetaa(z|x)$ is usually parameterized by some parameter vector $\thetaa=(\theta_1,\theta_2,...\theta_k)^T\in\Theta$, e.g. weights and biases in a neural net, where $\Theta$ is the parameter field. An infinitesimal change of $\thetaa'\gets \thetaa +\Delta\thetaa$ induces a relative perturbation $\e\cdot r(z|x)\simeq\Delta\thetaa^T\frac{\partial \log p_\thetaa (z|x)}{\partial \thetaa}$ on $p_\thetaa(z|x)$, from which we can compute the threshold function $G_\Theta[p_\thetaa(z|x)]$:

\begin{lemma}
\label{lemma:G_theta}
For $\IBB_\beta[p_\thetaa(z|x)]$ objective, the condition of $\forall \Dthe\in\Theta$, $\delta^2\IBB_\beta[p_\thetaa(z|x)]\ge0$ is equivalent to $\beta\le G_\Theta[p_\thetaa(z|x)]$, where

\beq{eq_app:G_with_theta}
G_\Theta[p_{\thetaa}(z|x)]:=\inf_{\Delta\thetaa\in\Theta}\frac{\Delta\thetaa^T\left(\mathcal{I}_{Z|X}(\thetaa)-\mathcal{I}_{Z}(\thetaa)\right)\Delta\thetaa}{\Delta\thetaa^T\left(\mathcal{I}_{Z|Y}(\thetaa)-\mathcal{I}_{Z}(\thetaa)\right)\Delta\thetaa}=\lambda_\emph{max}^{-1}
\eeq

where $\mathcal{I}_{Z}(\thetaa):=\int dz p_\thetaa (z)\left(\frac{\partial \logg p_\thetaa(z)}{\partial \thetaa}\right)\left(\frac{\partial \logg p_\thetaa(z)}{\partial \thetaa}\right)^T$ is the Fisher information matrix of $\thetaa$ for $Z$, $\mathcal{I}_{Z|X}(\thetaa):= \int dxdz p(x) p_\thetaa(z|x)\left(\frac{\partial \logg p_\thetaa(z|x)}{\partial \thetaa}\right)\left(\frac{\partial \logg p_\thetaa(z|x)}{\partial \thetaa}\right)^T$,\\ $\mathcal{I}_{Z|Y}(\thetaa):= \int dydz p(y) p_\thetaa(z|y)\left(\frac{\partial \logg p_\thetaa(z|y)}{\partial \thetaa}\right)\left(\frac{\partial \logg p_\thetaa(z|y)}{\partial \thetaa}\right)^T$  are the conditional Fisher information matrix \citep{fisherproperty} of $\thetaa$ for $Z$ conditioned on $X$ and $Y$, respectively.
$\lambda_\emph{max}$ is the largest eigenvalue of $C^{-1}\left(\mathcal{I}_{Z|Y}(\thetaa)-\mathcal{I}_{Z}(\thetaa)\right)(C^T)^{-1}$ with $v_\emph{max}$ the corresponding eigenvector, where $CC^T$ is the Cholesky decomposition of the matrix $\mathcal{I}_{Z|X}(\thetaa)-\mathcal{I}_{Z}(\thetaa)$, and $v_\emph{max}$ is the eigenvector for $\lambda_\emph{max}$. The infimum is attained at $\Dthe=(C^T)^{-1}v_\emph{max}$.
\end{lemma}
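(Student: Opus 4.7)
The strategy is to reduce Lemma~\ref{lemma:G_theta} to Lemma~\ref{lemma:second_order_variation} by identifying the specific relative perturbation $r(z|x)$ that a parameter shift $\thetaa\to\thetaa+\Dthe$ induces, substituting into $\G[r(z|x);p(z|x)]$, recognizing the resulting quadratic forms as Fisher information matrices, and finally solving a generalized Rayleigh quotient by Cholesky diagonalization. I expect the main obstacle to be the denominator computation, because it requires carefully using the Markov chain $Z-X-Y$ to convert $\E_{x\sim p(x|y,z)}[r(z|x)]$ into a derivative of $\log p_\thetaa(z|y)$.

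\textbf{Step 1 (induced perturbation).} A first-order Taylor expansion of $p_{\thetaa+\Dthe}(z|x)$ around $\thetaa$ gives $p_{\thetaa+\Dthe}(z|x)=p_\thetaa(z|x)\bigl(1+\Dthe^T\partial_\thetaa\log p_\thetaa(z|x)+O(\|\Dthe\|^2)\bigr)$, so the induced relative perturbation is $r(z|x)=\Dthe^T\partial_\thetaa\log p_\thetaa(z|x)$. Because $\E_{z\sim p_\thetaa(z|x)}[\partial_\thetaa\log p_\thetaa(z|x)]=0$ (score identity), this $r$ lies in $\QQ$, so Lemma~\ref{lemma:second_order_variation} applies.

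\textbf{Step 2 (rewrite the three expectations).} Using $p_\thetaa(z)=\int p(x)p_\thetaa(z|x)dx$ and differentiating under the integral,
\[
\partial_\thetaa\log p_\thetaa(z)=\E_{x\sim p(x|z)}[\partial_\thetaa\log p_\thetaa(z|x)],
\]
so $\E_{x\sim p(x|z)}[r(z|x)]=\Dthe^T\partial_\thetaa\log p_\thetaa(z)$. Analogously, the Markov chain $Z-X-Y$ gives $p(x|y,z)=p(x|y)p_\thetaa(z|x)/p_\thetaa(z|y)$, whence
\[
\partial_\thetaa\log p_\thetaa(z|y)=\E_{x\sim p(x|y,z)}[\partial_\thetaa\log p_\thetaa(z|x)],
\]
and therefore $\E_{x\sim p(x|y,z)}[r(z|x)]=\Dthe^T\partial_\thetaa\log p_\thetaa(z|y)$. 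Substituting these three identities into Eq.~\eqref{eq:G} yields the quadratic forms
\[
\text{numerator}=\Dthe^T\bigl(\mathcal{I}_{Z|X}(\thetaa)-\mathcal{I}_Z(\thetaa)\bigr)\Dthe,\qquad \text{denominator}=\Dthe^T\bigl(\mathcal{I}_{Z|Y}(\thetaa)-\mathcal{I}_Z(\thetaa)\bigr)\Dthe,
\]
which establishes the first equality in \eqref{eq_app:G_with_theta}.

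\textbf{Step 3 (generalized eigenvalue reduction).} Both $\mathcal{I}_{Z|X}-\mathcal{I}_Z$ and $\mathcal{I}_{Z|Y}-\mathcal{I}_Z$ are positive semidefinite, as they can be written as covariances of $\partial_\thetaa\log p_\thetaa(z|x)-\partial_\thetaa\log p_\thetaa(z)$ and $\partial_\thetaa\log p_\thetaa(z|y)-\partial_\thetaa\log p_\thetaa(z)$ respectively (after applying the tower property). Assuming the encoder is not degenerate so that $\mathcal{I}_{Z|X}-\mathcal{I}_Z$ is positive definite, its Cholesky factorization $CC^T=\mathcal{I}_{Z|X}-\mathcal{I}_Z$ is well defined. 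With the change of variable $u=C^T\Dthe$, the ratio becomes
\[
\frac{\Dthe^T(\mathcal{I}_{Z|X}-\mathcal{I}_Z)\Dthe}{\Dthe^T(\mathcal{I}_{Z|Y}-\mathcal{I}_Z)\Dthe}=\frac{u^Tu}{u^T M u},\qquad M:=C^{-1}(\mathcal{I}_{Z|Y}-\mathcal{I}_Z)(C^T)^{-1}.
\]
Minimizing a Rayleigh quotient of this form over $u\ne 0$ gives $1/\lambda_{\max}(M)$, attained at the top eigenvector $v_{\max}$; pulling back, the minimizer is $\Dthe=(C^T)^{-1}v_{\max}$. This yields the second equality and completes the proof. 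The two places to be careful are the differentiation-under-the-integral step in Step~2 (which requires the usual regularity on $p_\thetaa(z|x)$) and the positive-definiteness assumption needed for the Cholesky step; if $\mathcal{I}_{Z|X}-\mathcal{I}_Z$ is merely PSD one can restrict to its range or use a pseudo-inverse variant without changing the conclusion.
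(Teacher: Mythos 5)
Your proof is correct and follows essentially the same route as the paper's: identify the induced relative perturbation $r(z|x)=\Dthe^T\frac{\partial}{\partial\thetaa}\log p_\thetaa(z|x)$, reduce to the general criterion of Lemma~\ref{lemma:second_order_variation} so that the three expectations become the Fisher-information quadratic forms, and solve the resulting generalized Rayleigh quotient via the Cholesky change of variables. The only inessential differences are that the paper also tracks the $O(\norm{\Dthe}^2)$ Taylor term of the induced perturbation through the first variation and explicitly notes that directions with vanishing denominator impose no constraint, whereas you pass directly to the first-order direction; neither point affects the conclusion.
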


The proof is in appendix \ref{app:G_theta}. We see that for parameterized encoders $p_\thetaa(z|x)$, each term of $G[p(z|x)]$ in Eq. (\ref{eq:G}) can be replaced by a bilinear form with the Fisher information matrix of the respective variables. 
Although this lemma is not required to understand the more general setting of Lemma~\ref{lemma:second_order_variation}, where the model is described in a functional space, Lemma~\ref{lemma:G_theta} helps understand $G[p(z|x)]$ for parameterized models, which permits directly linking the phase transitions to the model's parameters.

\paragraph{Phase Transitions.}
Now we introduce Theorem \ref{thm:phase_transition} that gives a concrete and practical condition for IB phase transitions, which is the core result of the paper:

\begin{theorem}
\label{thm:phase_transition}
The IB phase transition points $\{\beta_i^c\}$ as defined in Definition \ref{definition:phase_transition_0}  are given by the roots of the following equation:
\begin{equation}
\label{eq:phase_transition_root}
G[\pstar]=\beta
\end{equation}
where $G[p(z|x)]$ is given by Eq. (\ref{eq:G}) and $\pstar$ is the optimal solution of $\IB_\beta[p(z|x)]$ at $\beta$.

\end{theorem}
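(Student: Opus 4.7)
The plan is to apply Lemma \ref{lemma:second_order_variation} to interpret both conditions in Definition \ref{definition:phase_transition_0} at the optimum $p(z|x)=\pstar$. Condition (1), which asserts that $\delta^2\IB_\beta[p(z|x)]\big|_{\pstar}\ge 0$ for every $r\in\QQ$, is translated by Lemma \ref{lemma:second_order_variation} directly into the inequality $\beta \le G[\pstar]$. The remainder of the proof is to extract the reverse inequality from Condition (2), so that equality $G[\pstar]=\beta$ holds, and then to verify the converse implication.

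The key structural observation I would exploit is that $\delta^2\IB_{\beta'}[r;\pstar]$ is affine in $\beta'$ and quadratic in $r$. Writing $N[r;p]$ and $D[r;p]$ for the numerator and denominator appearing in $\mathcal{G}$ of Eq.~(\ref{eq:G}), one has (up to a positive factor)
\[
\delta^2\IB_{\beta'}[r;\pstar]\;\propto\; N[r;\pstar]-\beta'\,D[r;\pstar],
\]
with both $N,D\ge 0$ by the law of total variance. Consequently, over the normalized set $\{r\in\QQ : D[r;\pstar]=1\}$, the infimum of $\delta^2\IB_{\beta'}[\,\cdot\,;\pstar]$ equals $G[\pstar]-\beta'$. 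Condition (2) then demands $G[\pstar]-\beta'\to 0^-$ as $\beta'\searrow\beta$, which forces $G[\pstar]=\beta$; combined with Condition (1) this yields the equation in the theorem. The converse implication is immediate from the same calculation: if $G[\pstar]=\beta$, Lemma \ref{lemma:second_order_variation} gives Condition (1), and the normalized infimum $\beta-\beta'$ gives Condition (2).

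The main obstacle I anticipate is rigorously interpreting the $0^-$ limit in Condition (2). Because $\delta^2\IB_{\beta'}$ is a quadratic functional of $r$, its unrestricted infimum over $\QQ$ is either $0$ or $-\infty$, so the limit in Condition (2) is only meaningful after an implicit normalization of the perturbation direction. I would make this explicit by optimizing on $\{D[r;\pstar]=1\}$ (or any equivalent compact normalization), and separately verify that the infimum defining $G[\pstar]$ is attained; in the parameterized setting this attainment follows from the generalized-eigenvalue argument of Lemma \ref{lemma:G_theta}, and a Cholesky-type argument should extend to the functional setting under mild regularity of $\pstar$.
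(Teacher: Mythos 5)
Your proposal reaches the right conclusion and shares the paper's overall plan (invoke Lemma~\ref{lemma:second_order_variation} for Condition~(1), then squeeze Condition~(2) into the equality), but it handles Condition~(2) differently — and, I would argue, more carefully. The paper's proof introduces the scalar function $T_\beta(\beta'):=\inf_{r\in\QQ}\bigl[N[r;\pstar]-\beta'D[r;\pstar]\bigr]$ (your $N,D$), asserts the identity $T_\beta(\beta')-T_\beta(\beta)=-(\beta'-\beta)\bigl(\E_\beta[r^2(z|y)]-\E_\beta[r^2(z)]\bigr)$, and then extracts Lipschitz continuity in $\beta'$ from the bound $|r|\le M$. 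But because the infimum is over all of $\QQ$, which is scale-invariant (if $r\in\QQ$ then $cr\in\QQ$ for all $c$), $T_\beta(\beta')$ is in fact either $0$ or $-\infty$ — exactly the degeneracy you point out. The paper's per-perturbation bound $M$ is not uniform over $\QQ$, so the displayed identity and the Lipschitz estimate do not hold for the infimum as written; moreover the $0^-$ in Definition~\ref{definition:phase_transition_0}(2) is only meaningful after constraining the perturbation direction. Your fix — restricting to $\{D[r;\pstar]=1\}$ so the normalized infimum equals $G[\pstar]-\beta'$ — resolves this cleanly: scale invariance of $N/D$ gives $\inf_{D=1}N=G[\pstar]$ without needing attainment, and both the forward and converse implications then follow directly. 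The only cosmetic caveat is that this requires reading Definition~\ref{definition:phase_transition_0}(2) with the same normalization, which the paper leaves implicit; once that reading is adopted, your argument is a strictly tighter version of the paper's.
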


We can understand Eq. (\ref{eq:phase_transition_root}) as the condition when $\delta^2\IB_\beta[p(z|x)]$ is \emph{about} to be able to be negative at the optimal solution $\pstar$ for a given $\beta$.  The proof for Theorem \ref{thm:phase_transition} is given in Appendix \ref{app:phase_transition}. In Section \ref{sec:phase_understanding}, we will analyze Theorem \ref{thm:phase_transition} in detail.

\section{Understanding the formula for IB phase transitions}
\label{sec:phase_understanding}

In this section we set out to understand $G[p(z|x)]$ as given by Eq. (\ref{eq:G}) and the phase transition condition as given by Theorem \ref{thm:phase_transition}, from the perspectives of Jensen's inequality and representational maximum correlation.

\subsection{Jensen's Inequality}
\label{sec:Jensen}
The condition for IB phase transitions given by Theorem \ref{thm:phase_transition} involves $G[p(z|x)]=\inf\limits_{r(z|x)\in\QQ}\G[r(z|x);p(z|x)]$ which is in itself an optimization problem.
We can understand $G[p(z|x)]=\inf\limits_{r(z|x)\in\QQ}\frac{A-C}{B-C}$ in Eq. (\ref{eq:G}) using Jensen's inequality:

\beq{eq:Jensens}
\underbrace{\E_{x,z\sim p(x,z)}[r^2(z|x)]}_A\ge \underbrace{\E_{y,z\sim p(y,z)}\left[\left(\E_{x\sim p(x|y,z)}[r(z|x)]\right)^2\right]}_B\ge\underbrace{\E_{z\sim p(z)}\left[\left(\E_{x\sim p(x|z)}[r(z|x)]\right)^2\right]}_C
\eeq

The equality between $A$ and $B$ holds when the perturbation $r(z|x)$ is constant w.r.t. $x$ for any $z$; the equality between $B$ and $C$ holds when $\E_{x\sim p(x|y,z)}[r(z|x)]$ is constant w.r.t. $y$ for any $z$. 
Therefore, the minimization of $\frac{A-C}{B-C}$ encourages the relative perturbation function $r(z|x)$ to be as constant w.r.t. $x$ as possible (minimizing intra-class difference), but as different w.r.t. different $y$ as possible (maximizing inter-class difference), resulting in a \emph{clustering} of the values of $r(z|x)$ for different examples $x$ according to their class $y$. Because of this clustering property in classification problems, we conjecture that there are at most $|\Y|-1$ phase transitions, where $|\Y|$ is the number of classes, with each phase transition differentiating one or more classes. 

\subsection{Representational Maximum Correlation}
\label{sec:representational_maximum_correlation}

Under certain conditions we can further simplify $G[p(z|x)]$ and gain a deeper understanding of it.
Firstly, inspired by maximum correlation \citep{anantharam2013maximal}, we introduce two new concepts, \emph{representational maximum correlation} and \emph{conditional maximum correlation}, as follows.

\begin{definition}
\label{def:rho_r}
Given a joint distribution $p(X,Y)$, and a representation $Z$ satisfying the Markov chain $Z-X-Y$, the representational maximum correlation $\rho_r(X,Y;Z)$ is defined as
\begin{align}
\rho_r(X,Y;Z)&:=\sup_{\left(f(x,z),g(y,z)\right)\in\S_1}\E_{x,y,z\sim p(x,y,z)}[f(x,z)g(y,z)]
\end{align}

where $\S_1=\{(f:\X\times\Z\to\R, g:\Y\times\Z\to\R)\,\big|\,f, g\text{ bounded}, \text{and } \E_{x\sim p(x|z)}[f(x,z)]=\E_{y\sim p(y|z)}[g(y,z)]=0,\ \E_{x,z\sim p(x,z)}[f^2(x,z)]=\E_{y,z\sim p(y,z)}[g^2(y,z)]=1\}$.

The conditional maximum correlation $\rho_m(X,Y|Z)$ is defined as:
\begin{equation}
\rho_m(X,Y|Z):= \sup_{\left(f(x),g(y)\right)\in\S_2}\E_{x,y\sim p(x,y|z)}[f(x)g(y)]
\end{equation}
where $\S_2=\{(f:\X\to\R, g:\Y\to\R)\,\big|\,f, g\text{ bounded},\text{ and } \forall z\in\Z: \E_{x\sim p(x|z)}[f(x)]=\E_{y\sim p(y|z)}[g(y)]=0$, $\E_{x\sim p(x|z)}[f^2(x)]=\E_{y\sim p(y|z)}[g^2(y)]=1\}$.
\end{definition}

We prove the following Theorem \ref{thm:rho_r_property}, which expresses $G[p(z|x)]$ in terms of representational maximum correlation and related quantities, with proof given in Appendix \ref{app:representational_maximum_correlation}.

\begin{theorem}
\label{thm:rho_r_property}

Define $\QQA:=\{r(z|x): \mathcal{X}\times\mathcal{Z}\to \mathbb{R}\,\big|\,r \text{ bounded}\}$. If $\QQA$ and $\QQ$ satisfy: $\forall r(z|x)\in\QQA$, there exists\footnote{%
  For discrete $X$, $Z$ such that the cardinality $|\Z|\ge|\X|$, this is generally true since in this scenario, $h(x,z)$ and $s(z)$ have $|\X||\Z|+|\Z|$ unknown variables, but the condition has only $|\X||\Z|+|\X|$ linear equations. 
  The difference between $\QQ$ and $\QQA$ is that $\QQA$ does not have the requirement of $\E_{p(z|x)}[r(z|x)]=0$. Combined with Lemma \ref{lemma:G_invariance}, this condition allows us to replace $r(z|x)\in\QQ$ by $r(z|x)\in\QQA$ in Eq. (\ref{eq:G}).
} $r_1(z|x)\in\QQ$, $s(z)\in\{s(z):\Z\to\R\,|\,s \text{ bounded}\}$ s.t. $r(z|x)=r_1(z|x)+s(z)$, then we have:

\begin{enumerate}[label=(\roman*)]
\item The representation maximum correlation and $G$:
\begin{align}
\label{eq:G_simplified}
G[p(z|x)]=\frac{1}{\rho_r^2(X,Y;Z)}
\end{align}

\item The representational maximum correlation and conditional maximum correlation:
\begin{align}
\rho_r(X,Y;Z)&=\sup_{Z\in\mathcal{Z}}[\rho_m(X,Y|Z)]
\end{align}

\item When $Z$ is continuous, an optimal relative perturbation function $r(z|x)$ for $G[p(z|x)]$ is given by

\begin{equation}
r^*(z|x)=h^*(x)\sqrt{\frac{\delta(z-z^*)}{p(z)}}
\end{equation}
where $z^*=\argmax\limits_{z\in\mathcal{Z}}\rho_m(X,Y|Z=z)$, and $h^*(x)$ is the optimal solution for the learnability threshold function $h^*(x)=\argmin\limits_{h(x)\in\{h:\X\to\R\,|\,h\text{ bounded}\}} \beta_0[h(x)]$ with $p(X,Y|Z=z^*)$ ($\beta_0[h(x)]$ is given in Theorem 4 of \citet{wu2019learnabilityEntropy}).

\item For discrete $X$, $Y$ and $Z$, we have

\beq{eq:rho_r_svd}
\rho_r(X,Y;Z)=\max_{Z\in\Z}\sigma_2(Z)
\eeq

where $\sigma_2(Z)$ is the second largest singular value of the matrix $Q_{X,Y|Z}:=\left(\frac{p(x,y|z)}{\sqrt{p(x|z)p(y|z)}}\right)_{x,y}=\left(\frac{p(x,y)}{\sqrt{p(x)p(y)}}\sqrt{\frac{p(z|x)}{p(z|y)}}\right)_{x,y}$.

\end{enumerate}
\end{theorem}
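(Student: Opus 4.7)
My plan centers on the observation that the numerator and denominator of $\G[r(z|x);p(z|x)]$ are conditional variance quantities, which can be directly matched to the definition of $\rho_r(X,Y;Z)$ via a double application of Cauchy--Schwarz. The starting move is to use the decomposition hypothesis to enlarge the infimum from $\QQ$ to $\QQA$: for any $r \in \QQA$ split as $r = r_1 + s(z)$, the $s(z)$ piece adds the same constant to $\E_{x\sim p(x|z)}[r]$, $\E_{x\sim p(x|y,z)}[r]$ and to the diagonal in $\E_{x,z}[r^2]$, so it cancels inside both $A-C$ and $B-C$. Hence $G[p(z|x)] = \inf_{r\in\QQA}\G[r;p(z|x)]$, and without loss of generality we may assume only boundedness.

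For part (i), I will introduce the centered perturbation $\tilde{r}(z|x) := r(z|x) - \E_{x\sim p(x|z)}[r(z|x)]$ and verify the two variance identities $A-C = \E_{x,z}[\tilde{r}^2]$ and $B-C = \E_{y,z}[(\E_{x\sim p(x|y,z)}[\tilde{r}])^2]$ by a conditional mean--variance decomposition on $z$. Normalising by setting $f(x,z) := \tilde{r}(z|x)/\sqrt{\E_{x,z}[\tilde{r}^2]}$ places $f$ in the feasible set of $\rho_r$. To match with the supremum defining $\rho_r$, I will then show, using Cauchy--Schwarz conditionally on $z$ followed by Cauchy--Schwarz over $z$, that the optimal $g(y,z)$ in the $\rho_r$ problem must be proportional to $\E_{x\sim p(x|y,z)}[f(x,z)]$ (with a $z$-independent normalising constant). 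This identifies
\[
\rho_r^2(X,Y;Z) = \sup_{f}\,\E_{y,z}\!\left[\left(\E_{x\sim p(x|y,z)}[f(x,z)]\right)^2\right]
\]
over $f$ satisfying the constraints in $\S_1$, which is precisely the reciprocal of $G[p(z|x)]$.

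For parts (ii) and (iii), I will rewrite the rescaled objective as $\E_z[c(z)\,\E_{y\sim p(y|z)}[(\E_{x\sim p(x|y,z)}[\hat{f}(x,z)])^2]]$, where $c(z) := \E_{x\sim p(x|z)}[f^2(x,z)]$ and $\hat f$ is the per-$z$ normalisation. By the definition of conditional maximum correlation (again via Cauchy--Schwarz conditionally on $z$), the inner factor is bounded by $\rho_m^2(X,Y|Z=z)$, so the whole expression is at most $\sup_{z}\rho_m^2(X,Y|Z=z)$ since $\E_z[c(z)]=1$. For the matching lower bound, in the discrete case I will choose $f$ supported on $\{z=z^\ast\}$ with the inner shape given by the $\beta_0[\cdot]$-optimiser $h^\ast(x)$ from Theorem 4 of \citet{wu2019learnabilityEntropy}, which by the learnability analysis there coincides with the $\rho_m(X,Y|Z=z^\ast)$-optimiser; in the continuous case, the same construction is obtained as a limit using the delta-function concentration in (iii), which I will justify by approximating with narrow Gaussians around $z^\ast$.

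Part (iv) then follows from the Rényi/Hirschfeld--Gebelein characterisation of maximum correlation as the second largest singular value of the normalised joint-probability matrix, applied conditionally on each $z$ to the law $p(x,y|z)$; combined with (ii) this gives $\rho_r = \sup_z \sigma_2(Q_{X,Y|Z=z})$. The alternative expression for $Q_{X,Y|Z}$ is a one-line calculation using Bayes' rule together with the Markov chain $Z-X-Y$ to factor $p(x,y,z)=p(x,y)p(z|x)$ and $p(y,z)=p(y)p(z|y)$. The main obstacle I anticipate is the part (ii) matching lower bound: closing the two levels of Cauchy--Schwarz simultaneously requires showing the maximiser exists (or can be approximated) without contradicting the boundedness restriction in $\QQ$. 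I plan to handle this by working first in $\QQA$, then lifting back via the decomposition hypothesis, and passing to a limit when $Z$ is continuous to accommodate the delta-concentration in (iii).
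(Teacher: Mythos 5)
Your proposal is correct and follows essentially the same route as the paper's own proof: reduction of the infimum from $\QQ$ to $\QQA$ via the $s(z)$-invariance, centering/normalising to identify $G^{-1}$ with $\sup_f \E[(\E[f(X,Z)|Y,Z])^2]$, the Cauchy--Schwarz identification of the optimal $g(y,z)\propto\E_{x\sim p(x|y,z)}[f(x,z)]$, the two-stage supremum over $c(z)=\E_{x\sim p(x|z)}[f^2(x,z)]$ concentrating mass at $z^*$, the delta-concentrated construction with inner shape $h^*(x)$ for (iii), and the conditional SVD characterisation for (iv). Your extra care in justifying the Dirac-delta concentration by Gaussian approximation is a refinement the paper omits, but the argument is otherwise the same.
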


Theorem \ref{thm:rho_r_property} furthers our understanding of $G[p(z|x)]$ and the phase transition condition (Theorem \ref{thm:phase_transition}), which we elaborate as follows.

\paragraph{Discovering maximum correlation in the orthogonal space of a learned representation:} 
Intuitively, the representational maximum correlation measures the maximum linear correlation between $f(X,Z)$ and $g(Y,Z)$ among all real-valued functions $f,g$, under the constraint that $f(X,Z)$ is ``orthogonal'' to $p(X|Z)$ and $g(Y,Z)$ is ``orthogonal'' to $p(Y|Z)$.
Theorem \ref{thm:rho_r_property} (i) reveals that $G[p(z|x)]$ is the inverse square of this representational maximum correlation.
Theorem \ref{thm:rho_r_property} (ii) further shows that $G[p(z|x)]$ is finding a specific $z^*$ on which maximum (nonlinear) correlation between $X$ and $Y$ conditioned on $Z$ can be found.
Combined with Theorem \ref{thm:phase_transition}, we have that when we continuously increase $\beta$, for the optimal representation $Z^*_\beta$ given by $\pstar$ at $\beta$, $\rho_r(X,Y;Z^*_\beta)$ shall monotonically decrease due to that $X$ and $Y$ has to find their maximum correlation on the orthogonal space of an increasingly better representation $Z^*_\beta$ that captures more information about $X$.
A phase transition occurs when $\rho_r(X,Y;Z^*_\beta)$ reduces to $\frac{1}{\sqrt{\beta}}$, after which as $\beta$ continues to increase, $\rho_r(X,Y;Z^*_\beta)$ will try to find maximum correlation between $X$ and $Y$ orthogonal to the full previously learned representation.
This is reminiscent of canonical-correlation analysis (CCA) \citep{hotelling1992relations} in linear settings, where components with decreasing linear maximum correlation that are orthogonal to previous components are found one by one.
In comparison, we show that in IB, each phase transition corresponds to learning a new \emph{nonlinear} component of maximum correlation between $X$ and $Y$ in $Z$, orthogonal to the previously-learned $Z$. In the case of classification where different classes may have different difficulty (e.g. due to label noise or support overlap), we should expect that classes that are less difficult as measured by a larger maximum correlation between $X$ and $Y$ are  learned earlier.

\paragraph{Conspicuous subset conditioned on a single $z$:}
Furthermore, we show in (iii) that an optimal relative perturbation function $r(z|x)$ can be decomposed into a product of two factors, a $\sqrt{\frac{\delta(z-z^*)}{p(z)}}$ factor that only focus on perturbing a specific point $z^*$ in the representation space, and an $h^*(x)$ factor that is finding the ``conspicuous subset'' \citep{wu2019learnabilityEntropy}, i.e. the most confident, large, typical, and imbalanced subset in the $X$ space for the distribution $(X,Y)\sim p(X,Y|z^*)$.

\paragraph{Singular values} 
In categorical settings, (iv) reveals a connection between $G[p(z|x)]$ and the singular value of the $Q_{X,Y|Z}$ matrix. Due to the property of SVD, we know that the square of the singular values of $Q_{X,Y|Z}$ equals the non-negative eigenvalue of the matrix $Q^T_{X,Y|Z}Q_{X,Y|Z}$. Then the phase transition condition in Theorem \ref{thm:phase_transition} is equivalent to a (nonlinear) eigenvalue problem. This is resonant with previous analogy with CCA in linear settings, and is also reminiscent of the linear eigenvalue problem in Gaussian IB \citep{chechik2005information}.

\section{Algorithm for phase transitions discovery in classification}
\label{sec:alg_for_discovering_transition}

As a consequence of the theoretical analysis above, we are able to derive an algorithm to efficiently estimate the phase transitions for a given model architecture and dataset.
This algorithm also permits us to empirically confirm some of our theoretical results in Section~\ref{sec:phase_experiments}.

Typically, classification involves high-dimensional inputs $X$.
Without sweeping the full range of $\beta$ where at each $\beta$ it is a full learning problem, it is in general a difficult task to estimate the phase transitions.
In Algorithm~\ref{alg:coordinate_descent}, we present a two-stage approach.

In the first stage, we train a single maximum likelihood neural network $f_\thetaa$ with the same encoder architecture as in the (variational) IB to estimate $p(y|x)$, and obtain an $N\times C$ matrix $p(y|x)$, where $N$ is the number of examples in the dataset and $C$ is the number of classes.
In the second stage, we perform an iterative algorithm w.r.t. $G$ and $\beta$, alternatively, to converge to a phase transition point.

Specifically, for a given $\beta$, we use a Blahut-Arimoto type IB algorithm \citep{tishby2000information} to efficiently reach IB optimal $\pstar$ at $\beta$, then use SVD (with the formula given in Theorem \ref{thm:rho_r_property} (iv)) to efficiently estimate $G[\pstar]$ at $\beta$ (step 8).
We then use the $G[\pstar]$ value as the new $\beta$ and do it again (step 7 in the next iteration).
At convergence, we will reach the phase transition point given by $G[\pstar]=\beta$ (Theorem \ref{thm:phase_transition}).
After convergence as measured by patience parameter $K$, we slightly increase $\beta$ by $\delta$ (step 13), so that the algorithm can discover the subsequent phase transitions.

\begin{algorithm}[t]
   \caption{\textbf{Phase transitions discovery for IB}}
\label{alg:coordinate_descent}
\begin{algorithmic}
\STATE {\bfseries Require} $(X, Y)$: the dataset 
\STATE {\bfseries Require} $f_\thetaa$: a neural net with the same encoder architecture as the (variational) IB
\STATE {\bfseries Require} $K$: patience
\STATE {\bfseries Require} $\delta$: precision floor
\STATE {\bfseries Require} $R$: maximum ratio between $\betath$ and $\beta$.
\STATE // \textit{First stage: fit $p(y|x)$ using neural net $f_
\thetaa$:}
\STATE 1: $p(y|x)\gets$ fitting $(X,Y)$ using $f_\thetaa$ via maximum likelihood.
\STATE 2: $p(x)\gets \frac{1}{N}$
\STATE
\STATE // \textit{Second stage: coordinate descent using $G[p(z|x)]$ and IB algorithm:}
\STATE 3: $\beta_0^c\gets\betath(1)$
\STATE 4: $\mathbb{B}\gets\{\beta_0^c\}$   \ \ \  //\textit{$\mathbb{B}$ is a set collecting the phase transition points}
\STATE 5: $(\betanew,\beta, \text{count})\gets(\beta_0^c,1,0)$
\STATE 6: \textbf{while} $\frac{\betanew}{\beta}<R$ \textbf{do}:
\STATE 7: \ \ \ \ \ \ \ \ $\beta\gets\betanew$
\STATE 8: \ \ \ \ \ \ \ \ $\betanew\gets\betath(\beta)$
\STATE 9: \ \ \ \ \ \ \ \ \textbf{if} $\betanew-\beta<\delta$ \textbf{do}:
\STATE 10: \ \ \ \ \ \ \ \ \ \ \ \ \ \ $\text{count}\gets\text{count}+1$
\STATE 11: \ \ \ \ \ \ \ \ \ \ \ \ \ \ \textbf{if} $\text{count}>K$ \textbf{do}:
\STATE 12: \ \ \ \ \ \ \ \ \ \ \ \ \ \ \ \ \ \ \ \  $\mathbb{B}\gets \mathbb{B}\cup\{\betanew\}$
\STATE 13: \ \ \ \ \ \ \ \ \ \ \ \ \ \ \ \ \ \ \ \  $\betanew\gets\betanew+\delta$
\STATE 14: \ \ \ \ \ \ \ \ \ \ \ \ \ \ \textbf{end if} 
\STATE 15: \ \ \ \ \ \ \textbf{else}: $\text{count}\gets 0$ 
\STATE 16: \ \ \ \ \ \ \textbf{end if}
\STATE 17: \textbf{end while}
\STATE 18: \textbf{return} $\mathbb{B}$

\STATE
\STATE \textbf{subroutine} $\betath(\beta)$:
\STATE s1: Compute $\pstar$ using the IB algorithm \citep{tishby2000information}.
\STATE s2: $\betanew\gets G[\pstar]$  using SVD (Eq. \ref{eq:G_simplified} and \ref{eq:rho_r_svd}).
\STATE s3: \textbf{return} $\betanew$
\end{algorithmic}
\end{algorithm}

\section{Empirical study}
\label{sec:phase_experiments}
We quantitatively and qualitatively test the ability of our theory and Algorithm \ref{alg:coordinate_descent} to provide good predictions for IB phase transitions.
We first verify them in fully categorical settings, where $X,Y,Z$ are all discrete, and we show that the phase transitions can correspond to learning new classes as we increase $\beta$.
We then test our algorithm on versions of the MNIST and CIFAR10 datasets with added label noise.

\subsection{Categorical dataset}
For categorical datasets, $X$ and $Y$ are discrete, and $p(X)$ and $p(Y|X)$ are given.
To test Theorem \ref{thm:phase_transition}, we use the Blahut-Arimoto IB algorithm to compute the optimal $\pstar$ for each $\beta$.
$I(Y;Z^*)$ vs. $\beta$ is plotted in Fig. \ref{fig:beta_th_vs_beta} (a).
There are two phase transitions at $\beta_0^c$ and $\beta_1^c$.
For each $\beta$ and the corresponding $\pstar$, we use the SVD formula (Theorem \ref{thm:rho_r_property}) to compute $G[\pstar]$, shown in Fig. \ref{fig:beta_th_vs_beta} (b).
We see that $G[\pstar]=\beta$ at exactly the observed phase transition points $\beta_0^c$ and $\beta_1^c$. Moreover, starting at $\beta=1$, Alg. 1 converges to each phase transition points within few iterations.
Our other experiments with random categorical datasets show similarly tight matches.

Furthermore, in Appendix \ref{app:subset_separation} we show that the phase transitions correspond to the onset of separation of $p(z|x)$ for subsets of $X$ that correspond to different classes.
This supports our conjecture from Section \ref{sec:Jensen} that there are at most $|\Y|-1$ phase transitions in classification problems.

\begin{figure}[t]
\begin{center}
\begin{subfigure}[b]{0.34\textwidth}
\centering
\includegraphics[width=\textwidth]{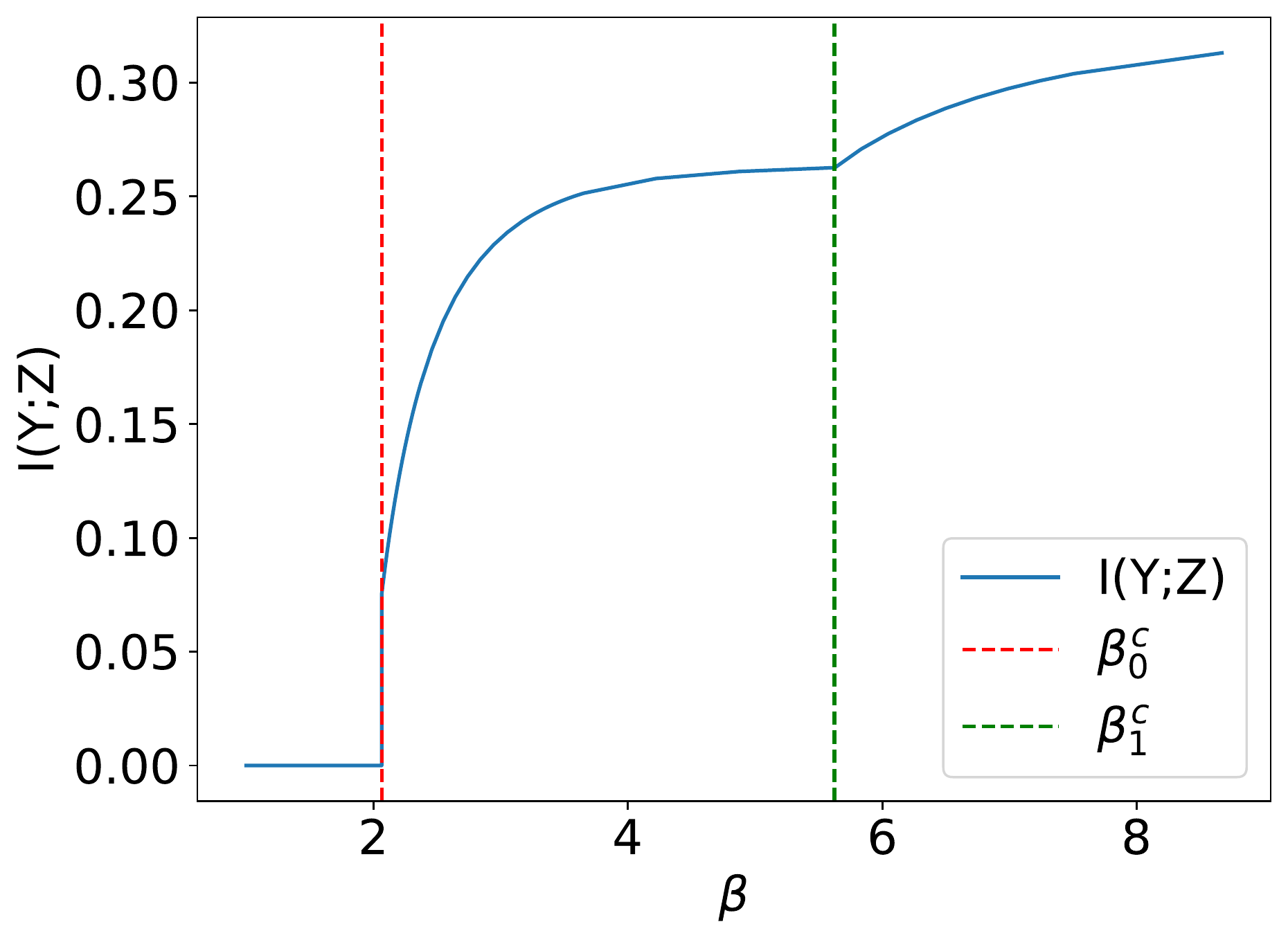}
\caption{}
\end{subfigure}
\begin{subfigure}[b]{0.37\textwidth}
\centering
\includegraphics[width=\textwidth]{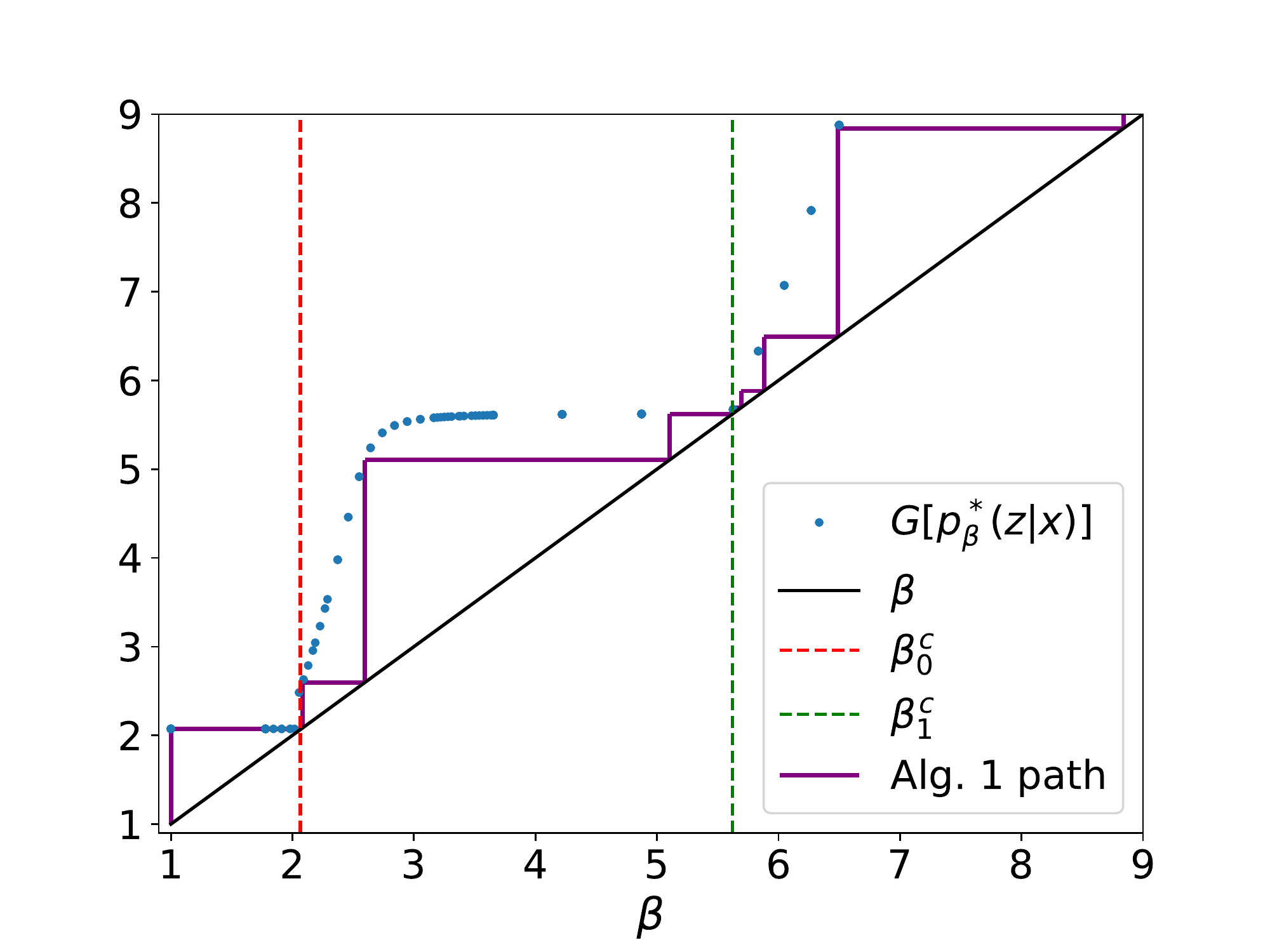}
\caption{}
\end{subfigure}
\end{center}
\caption{(a) $I(Y;Z^*)$ vs. $\beta$ for a categorical dataset with $|X|=|Y|=|Z|=3$, where $Z^*$ is given by $\pstar$, and the vertical lines are the experimentally discovered phase transition points $\beta_0^c$ and $\beta_1^c$. (b) $G[\pstar]$ vs. $\beta$ for the same dataset, and the path for Alg. \ref{alg:coordinate_descent}, with $\beta_0^c$ and $\beta_1^c$ in (a) also plotted. The dataset is given in Fig. \ref{fig:py_x}.}
\label{fig:beta_th_vs_beta}
\end{figure}

\subsection{MNIST dataset}
For continuous $X$, how does our algorithm perform, and will it reveal aspects of the dataset?
We first test our algorithm in a 4-class MNIST with noisy labels\footnote{%
  We use 4 classes since it is simpler than the full 10 classes, but still potentially possesses phase transitions. We use noisy label to mimic realistic settings where the data may be noisy and also to have controllable difficulty for different classes.
}, whose confusion matrix and experimental settings are given in Appendix \ref{app:MNIST_exp}.
Fig. \ref{fig:phase_mnist} (a) shows the path Alg. \ref{alg:coordinate_descent} takes.
We see again that in each phase Alg. \ref{alg:coordinate_descent} converges to the phase transition points within a few iterations, and it discovers in total 3 phase transition points.
Similar to the categorical case, we expect that each phase transition corresponds to the onset of learning a new class, and that the last class is much harder to learn due to a larger separation of $\beta$. Therefore, this class should have a much larger label noise so that it is hard to capture this component of maximum correlation between $X$ and $Y$, as analyzed in representational maximum correlation (Section \ref{sec:representational_maximum_correlation}).
Fig. \ref{fig:phase_mnist} (b) plots the per-class accuracy with increasing $\beta$ for running the Conditional Entropy Bottleneck \citep{fischer2018the} (another variational bound on IB).
We see that the first two predicted phase transition points $\beta_0^c$, $\beta_1^c$ closely match the observed onset of learning class 3 and class 0.
Class 1 is observed to learn earlier than expected, possibly due to the gap between the variational IB objective and the true IB objective in continuous settings. 
By looking at the confusion matrix for the label noise (Fig. \ref{fig:py_x_4}),
we see that the ordering of onset of learning: class 2, 3, 0, 1, corresponds exactly to the decreasing diagonal element $p(\tilde{y}=1|y=1)$ (increasing noise) of the classes, and as predicted, class 1 has a much smaller diagonal element $p(\tilde{y}=1|y=1)$ than the other three classes, which makes it much more difficult to learn.
This ordering of classes by difficulty is what our representational maximum correlation predicts.

\begin{figure}
    \centering
    \begin{subfigure}{.36\columnwidth}
    \includegraphics[scale=0.22]{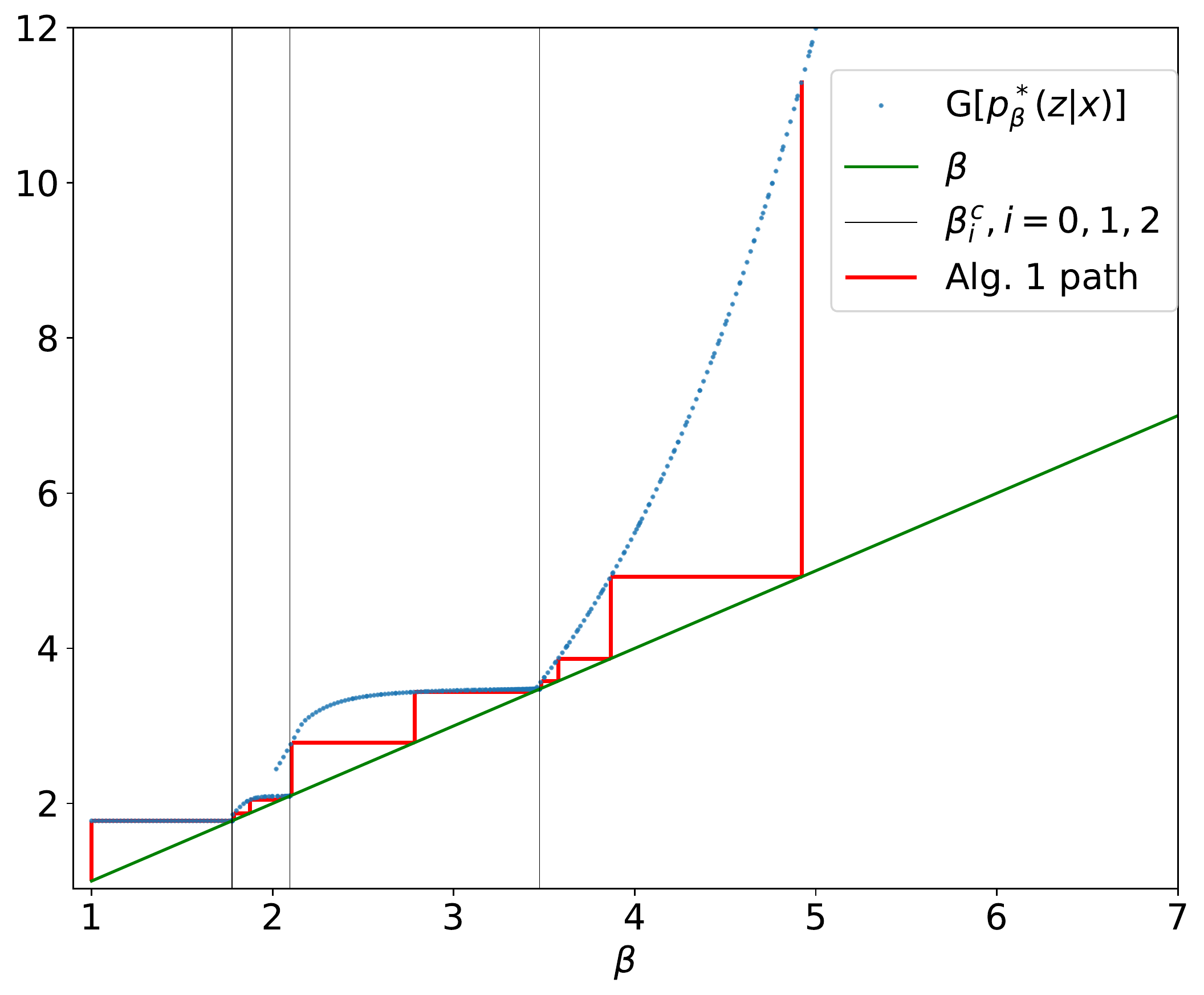}
    \caption{}
    \end{subfigure}
    \begin{subfigure}{.42\columnwidth}
    \includegraphics[scale=0.22]{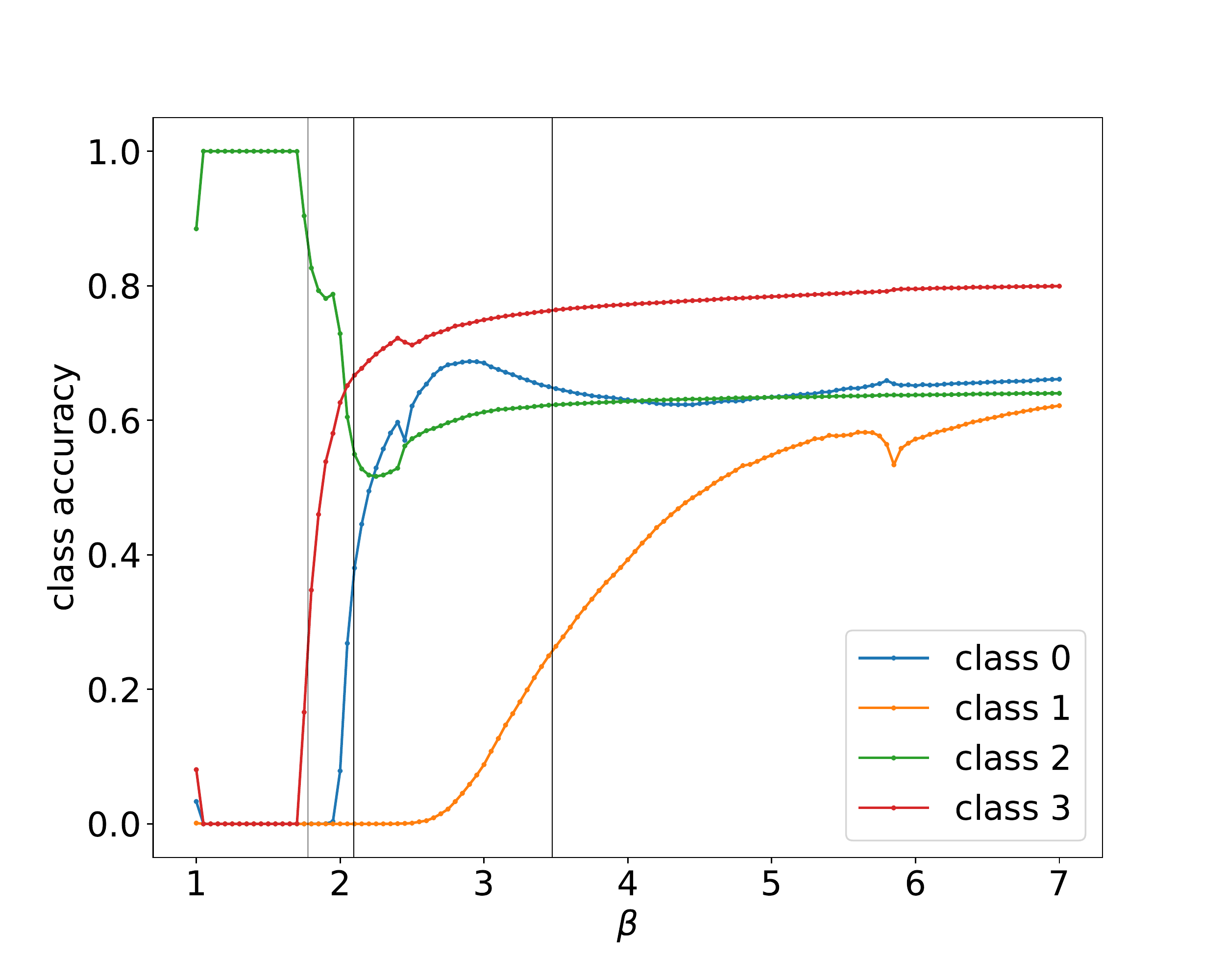}
    \caption{}
    \end{subfigure}
    \caption{%
    (a) Path of Alg. \ref{alg:coordinate_descent} starting with $\beta=1$, where the maximum likelihood model $f_\thetaa$ is using the same encoder architecture as in the CEB model. 
    This stairstep path shows that Alg. \ref{alg:coordinate_descent} is able to ignore very large regions of $\beta$, while quickly and precisely finding the phase transition points.
    Also plotted is an accumulation of $G[\pstar]$ vs. $\beta$ by running Alg. 1 with varying starting $\beta$ (blue dots).
    (b) Per-class accuracy vs. $\beta$, where the accuracy at each $\beta$ is from training an independent CEB model on the dataset.
    The per-class accuracy denotes the fraction of correctly predicted labels by the CEB model for the observed label $\tilde{y}$.
    }%
    \label{fig:phase_mnist}%
\end{figure}

\subsection{CIFAR10 dataset}

Finally, we investigate the CIFAR10 experiment from Section \ref{sec:phase_introduction}.
The details of the experimental setup are described in Appendix \ref{app:phase_cifar_details}.
This experiment stretches the current limits of our discrete approximation to the underlying continuous representation being learned by the models.
Nevertheless, we can see in Fig. \ref{fig:phase_cifar_exp} that many of the visible empirical phase transitions are tightly identified by Alg. \ref{alg:coordinate_descent}. Particularly, the onset of learning is predicted quite accurately; the large interval between the predicted $\beta_3=1.21$ and $\beta_4=1.61$ corresponds well to the continuous increase of $I(X;Z)$ and $I(Y;Z)$ at the same interval. And Alg. 1 is able to identify many dense transitions not obviously seen by just looking at $I(Y;Z)$ vs. $\beta$ curve alone.
Alg. 1 predicts 9 phase transitions, exactly equal to $|\Y|-1$ for CIFAR10.

\begin{figure}
\centering
\begin{subfigure}{.32\columnwidth}
\begin{center}
\includegraphics[scale=0.19]{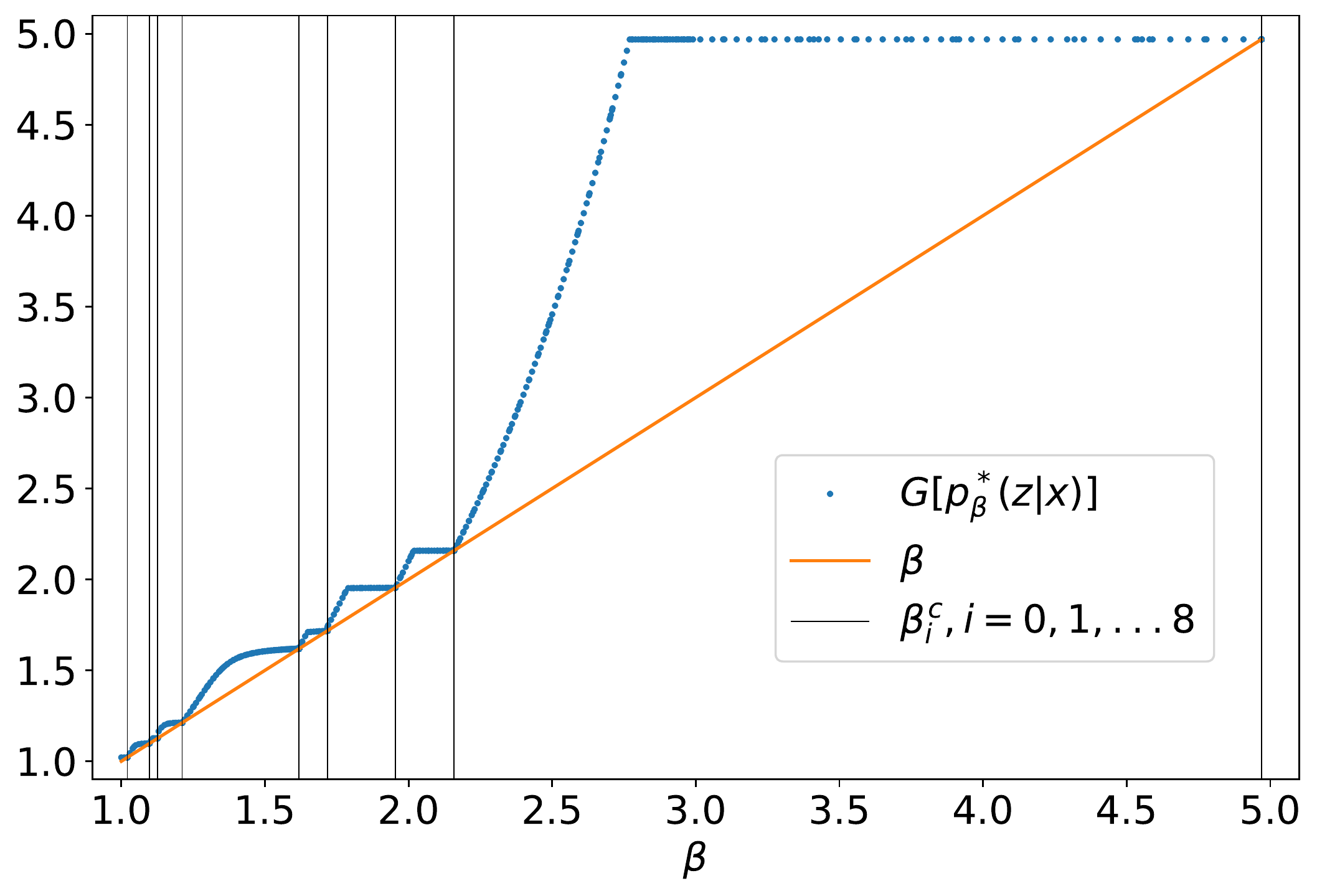}
\end{center}
\caption{}
\end{subfigure}
\begin{subfigure}{.32\columnwidth}
\includegraphics[scale=0.25]{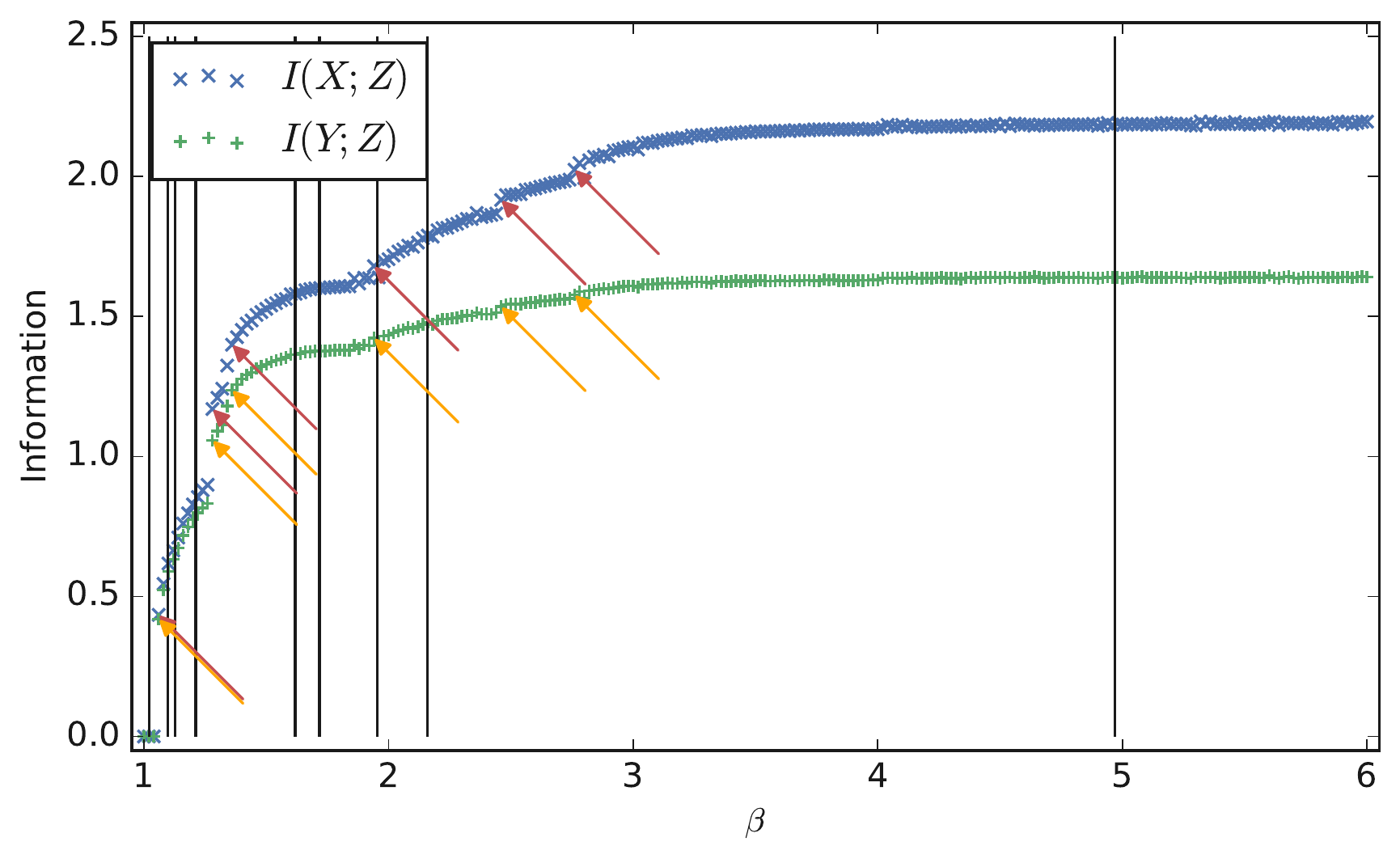}
\caption{}
\end{subfigure}
\hfill
\begin{subfigure}{.32\columnwidth}
\includegraphics[scale=0.26]{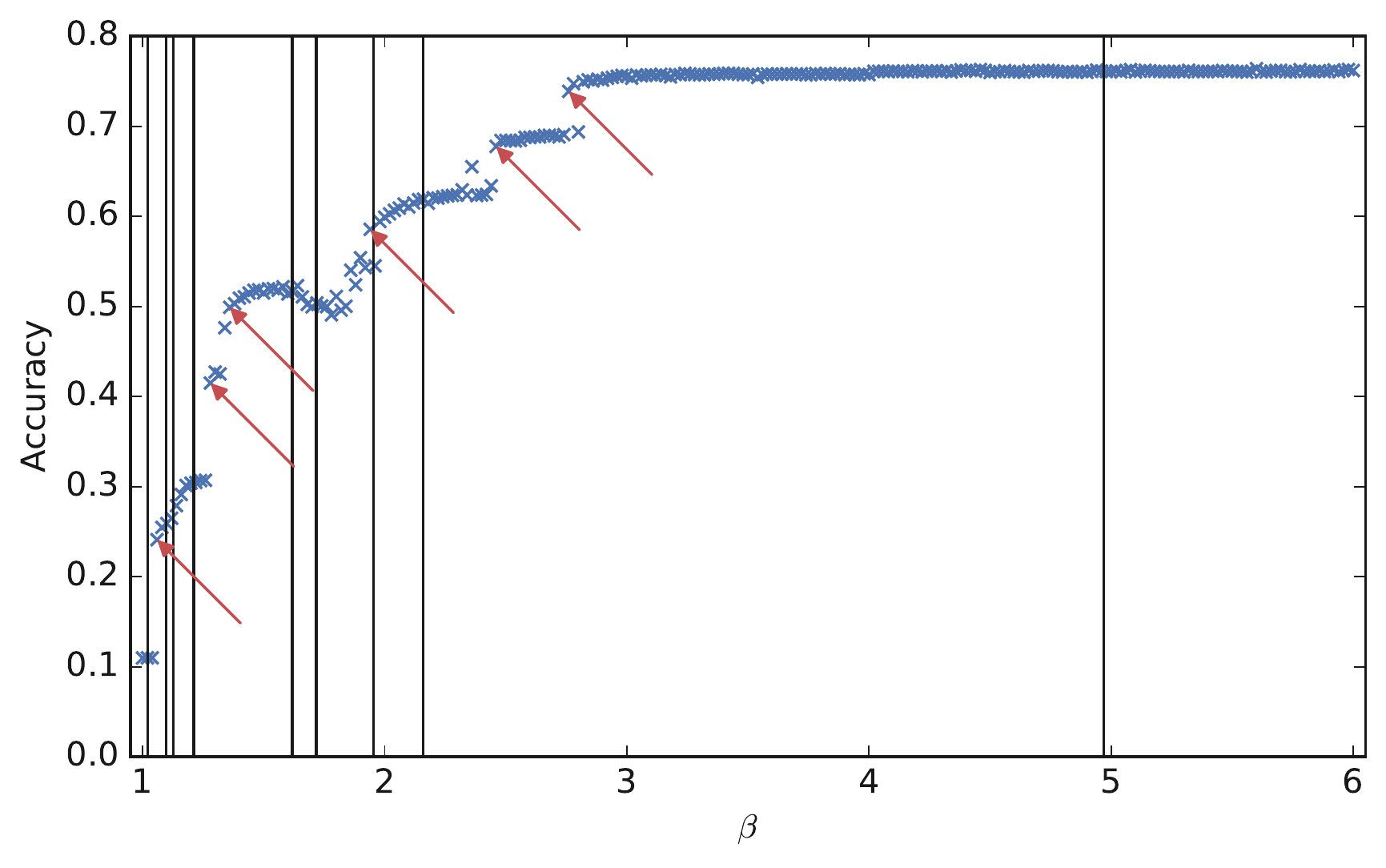}
\caption{}
\end{subfigure}
\caption{%
  \textbf{(a)} Accumulated $G[\pstar]$ vs. $\beta$ by running Alg. 1 with varying starting $\beta$ (blue dots).
  Also plotted are predicted phase transition points.
  \textbf{(b)} $I(X;Z)$ and $I(Y;Z)$ vs. $\beta$.
  The manually-identified phase transition points are labelled with arrows.
  The vertical black lines are the phase transitions identified by Alg. \ref{alg:coordinate_descent}, denoted as $\beta_0^c$, $\beta_1^c$,... $\beta_8^c$, from left to right.
  \textbf{(c)} Accuracy vs. $\beta$ with the same sets of points identified.
  The most interesting region is right before $\beta=2$, where accuracy \emph{decreases} with $\beta$.
  Alg. \ref{alg:coordinate_descent} identifies both sides of that region, as well as points at or near all of the early obvious phase transitions.
  It also seems to miss later transitions, possibly due to the gap between the variational IB objective and the true IB objective in continuous settings.
}%
\label{fig:phase_cifar_exp}%
\end{figure}

\section{Conclusion}

In this work, we observe and study the phase transitions in IB as we vary $\beta$.
We introduce the definition for $\IB$ phase transitions, and based on it derive a formula that gives a practical condition for $\IB$ phase transitions.
We further understand the formula via Jensen's inequality and representational maximum correlation. We reveal the close interplay between the IB objective, the dataset and the learned representation, as each phase transition is learning a nonlinear maximum correlation component in the orthogonal space of the learned representation. 
We present an algorithm for finding the phase transitions, and show that it gives tight matches with observed phase transitions in categorical datasets, predicts onset of learning new classes and class difficulty in MNIST, and predicts prominent transitions in CIFAR10 experiments.
This work is a first theoretical step towards a deeper understanding of the phenomenon of phase transitions in the Information Bottleneck. We believe our approach will be applicable to other ``trade-off'' objectives, like $\beta$-VAE~\citep{betavae} and InfoDropout~\citep{achille2018emergence}, where the model's ability to predict is balanced against a measure of complexity.

\chapter{Pareto-optimal data compression for binary classification tasks}
\label{chap5:distillation}

The\footnote{The paper ``Pareto-optimal data compression for binary classification tasks'' is Published at \emph{Entropy} 2020, 22(1), 7. Authors: Tegmark, Max and Tailin Wu. \cite{tegmark2019pareto}}\footnote{The code is open-sourced at \href{https://github.com/tailintalent/distillation}{github.com/tailintalent/distillation}.} goal of lossy data compression is to reduce the storage cost of a data set $X$ while retaining as much information as possible about something ($Y$) that you care about. For example, what aspects of an image $X$ contain the most information about whether it depicts a cat?
Mathematically, this corresponds to finding a mapping $X\to Z\equiv f(X)$ that 
maximizes the mutual information $I(Z,Y)$ while the entropy $H(Z)$ is kept below some fixed threshold. 
We present a new method for mapping out the Pareto frontier for classification tasks, reflecting the tradeoff between retained entropy and class information. We first show how a random variable $X$ (an image, say) drawn from a class $Y\in\{1,...,n\}$ can be distilled into a vector $W=f(X)\in \mathbb{R}^{n-1}$ losslessly, so that $I(W,Y)=I(X,Y)$; for example, for a binary classification task of cats and dogs, each image $X$ is mapped into a single real number $W$ retaining all information that helps distinguish cats from dogs. For the $n=2$ case of binary classification, we then show how $W$ can be further compressed into a discrete variable $Z=g_\beta(W)\in\{1,...,m_\beta\}$ by binning $W$ into $m_\beta$ bins, in such a way that varying the parameter $\beta$ sweeps out the full Pareto frontier, solving a generalization of the Discrete Information Bottleneck (DIB) problem.
We argue that the most interesting points on this frontier are ``corners" maximizing $I(Z,Y)$ for a fixed number of bins $m=2,3...$ which can be conveniently be found without multiobjective optimization. We apply this method to the CIFAR-10, MNIST and Fashion-MNIST datasets, illustrating how it can be interpreted as an information-theoretically optimal image clustering algorithm. We find that these Pareto frontiers are not concave, and that recently reported DIB phase transitions correspond to transitions between these corners, changing the number of clusters.

\section{Introduction}

A core challenge in science, and in life quite generally, is data distillation: 
keeping only a manageably small fraction of our available data $X$ while retaining as much information as possible about something ($Y$) that we care about. For example, what aspects of an image contain the most information about whether it depicts a cat ($Y=1$) rather than a dog ($Y=2$)?
Mathematically, this motivates finding a mapping $X\to Z\equiv g(X)$ that 
maximizes the mutual information $I(Z,Y)$ while the entropy $H(Z)$ is kept below some fixed threshold. 
The tradeoff between $H_*=H(Z)$ (bits stored) and $I_*=I(Z,Y)$ (useful bits) is described by a Pareto frontier, defined as  
\beq{ParetoDefEq}
I_*(H_*) \equiv\sup_{\{g: H[g(X)]\le H_*\}} I[g(X),Y],
\eeq
and illustrated in \fig{paretoAnalyticFig} (this is for a toy example described below; we compute the Pareto frontier for our cat/dog example in \Sec{ResultsSec}).
The shaded region is impossible because 
$I(Z,Y)\le I(X,Y)$ 
and  $I(Z,Y)\le H(Z)$.
The colored dots correspond to random likelihood binnings into various numbers of bins, as described in the next section, and the upper envelope of all attainable points define the Pareto frontier. Its ``corners'', which are marked by black dots and maximize $I(Z,Y)$ for $M$ bins ($M=1,2,...$), are seen to lie close to the vertical dashed lines 
$H(Z)=\log M$, corresponding to all bins having equal size. We plot the $H$-axis flipped to conform with the tradition that up and to the right are more desirable.
The core goal of this paper is to present a method for computing such Pareto frontiers.

\begin{figure}[h!]
\begin{center}
\vskip-5.4mm
\includegraphics[width=0.8\columnwidth]{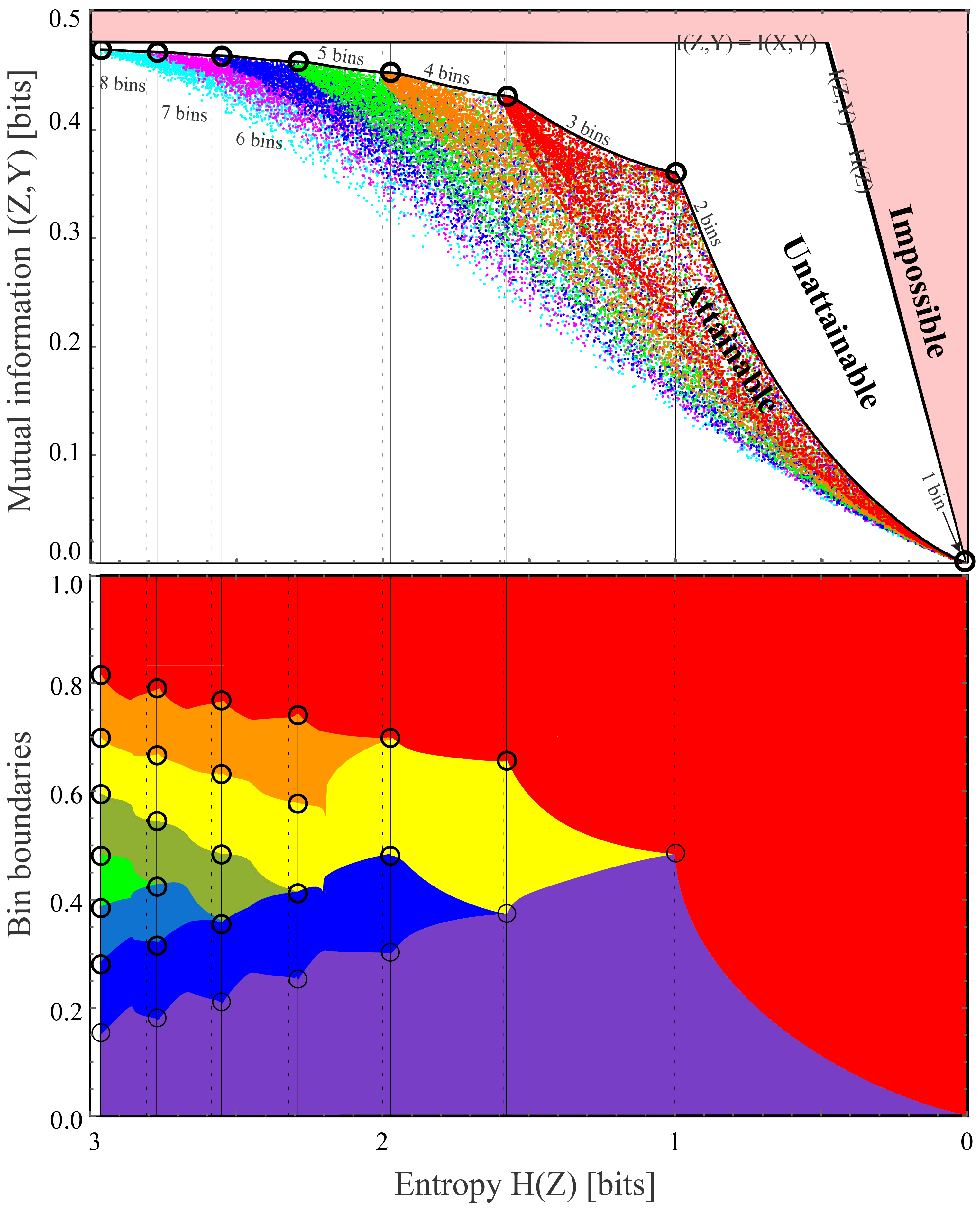}
\end{center}
\vskip-8mm
\caption{The Pareto frontier (top panel) for compressed versions $Z=g(X)$ of our warmup dataset $X)\in[0,1]^2$ with classes $Y\in\{1,2\}$, showing the maximum attainable class information $I(Z,Y)$ for a given entropy $H(Z)$, mapped using the method described in this paper using the likelihood binning in the bottom panel.
}
\label{paretoAnalyticFig}
\end{figure}

\subsection{Objectives \& relation to prior work}

In other words, the goal of this paper is to analyze soft rather than hard classifiers:
not to make the most accurate classifier, but rather to compute the Pareto frontier that reveals the 
most accurate (in an information-theoretic sense) classifier $Z$ given a constraint on its bit content $H(Z)$.
This Pareto frontier challenge is part of the broader quest for data distillation:
lossy data compression that retains as much as possible of the information that is useful to us. 
Ideally, the information can be partitioned into a set of independent chunks and sorted from most to least useful, enabling us to select the number of chunks to retain so as to optimize our tradeoff between utility and data size.
Consider two random variables $X$ and $Y$ which may each be vectors or scalars.
For simplicity, consider them to be discrete with finite entropy\footnote{The discreteness restriction loses us no generality in practice, since  since we can always discretize real numbers by rounding them to some very large number of significant digits.}.
For prediction tasks, we might interpret $Y$ as the future state of a dynamical system that we wish to predict from the present state $X$. For classification tasks, we might interpret $Y$ as a class label that we wish to predict from an image, sound, video or text string $X$.  Let us now consider various forms of ideal data distillation, as summarized in Table~\ref{ComparisonTable}.

\begin{table}[h!]
\begin{tabular}{|c|l|l|l|}
\hline
Random		&What is				&\multicolumn{2}{c|}{Probability distribution}\\
\cline{3-4}
vectors		&distilled?			&Gaussian			&Non-Gaussian\\
\hline
1			&Entropy				&PCA				&Autoencoder\\
			&$H(X)=\sum_i H(Z_i)$	&$\z=\F\x$			&$Z=f(X)$\\
\hline
2			&Mutual information		&CCA				&Latent reps\\
			&$I(X,Y)=\sum_i I(Z_i,Z'_i)$	&$\z=\F\x$			&$Z=f(X)$\\
			&					&$\z'=\G\y$			&$Z'=g(Y)$\\
\hline
\end{tabular}
\caption{Data distillation: the relationship between Principal Component Analysis (PCA), Canonical Correlation Analysis (CCA), nonlinear autoencoders and nonlinear latent representations.
\label{ComparisonTable}
}
\end{table}

If we distill $X$ as a whole, then we would ideally like to find a function $f$ such that
the so-called latent representation $Z=f(X)$ retains the full entropy
$H(X)=H(Z)=\sum H(Z_i)$, decomposed into independent\footnote{When implementing any distillation algorithm in practice, there is always a one-parameter tradeoff between compression and information retention which defines a Pareto frontier. A key advantage of the latent variables (or variable pairs) being statistically independent is that this allows the Pareto frontier to be trivially computed, by simply sorting them by decreasing information content and varying the number retained.
}
parts with vanishing mutual infomation: 
$I(Z_i,Z_j)=\delta_{ij}H(Z_i).$
For the special case where $X=\x$ is a vector with a multivariate Gaussian distribution, 
the optimal solution is 
Principal Component Analysis (PCA) \cite{pearsonPCA1901}, which
has long been a workhorse of statistical physics and many other disciplines: here $f$ is simply a linear function mapping into the eigenbasis of the covariance matrix of $\x$.
The general case remains unsolved, and it is easy to see that it is hard: 
if $X=c(Z)$ where $c$ implements some state-of-the-art cryptographic code, then finding $f=c^{-1}$ (to recover the independent pieces of information and discard the useless parts) would generically require breaking the code. Great progress has nonetheless been made for many special cases, using techniques such as nonlinear autoencoders \cite{vincent2008extracting} and Generative Adversarial Networks (GANs) \cite{goodfellow2014generative}.

Now consider the case where we wish to distill $X$ and $Y$ separately,
into $Z\equiv f(X)$ and $Z'=g(Y)$, retaining the mutual information between the two parts. 
Then we ideally have
$I(X,Y)=\sum_i I(Z_i,Z'_i)$,
$I(Z_i,Z_j)=\delta_{ij}H(Z_i),$
$I(Z'_i,Z'_j)=\delta_{ij}H(Z'_i),$
$I(Z_i,Z'_j)=\delta_{ij}I(Z_i,Z'_j).$
This problem has attracted great interest, especially for time series where $X=\u_i$ and $Y=\u_j$ for some sequence of states $\u_k$ ($k=0,1, 2, ...$) in physics or other fields, 
where one typically maps the state vectors $\u_i$ into some lower-dimensional 
vectors $f(\u_i)$, after which the prediction is carried out in this latent space. 
For the special case where $X$ has a multivariate Gaussian distribution, 
the optimal solution is 
Canonical Correlation Analysis (CCA) \cite{hotellingCCA1936}: here both $f$ and $g$ are linear functions, computed via a  
singular-value decomposition (SVD) \cite{eckart1936SVD} of the cross-correlation matrix after prewhitening $X$ and $Y$.
The general case remains unsolved, and is obviously even harder than the above-mentioned 1-vector autoencoding problem. 
The recent work \cite{oord2018representation,clark2019unsupervised} review the state-of-the art as well as presenting  Contrastive Predictive Coding and Dynamic Component Analysis, powerful new distillation techniques for time series, following the long tradition of setting $f=g$ even though this is provably not optimal for the Gaussian case as shown in \cite{tegmark2019optimal}.

The goal of this paper is to make progress in the lower right quadrant of Table~\ref{ComparisonTable}. 
We will first show that if $Y\in \{1,2\}$ (as in binary classification tasks) and we can successfully train a classifier that correctly predicts the conditional probability distribution $p(Y|X)$, then it can be used to provide an exact solution to the distillation problem,
losslessly distilling $X$ into a single real variable $W=f(X)$. 
We will generalize this to an arbitrary classification problem $Y\in\{1,...,n\}$ by losslessly distilling $X$ into a vector $W=f(X)\in \mathbb{R}^{n-1}$, although in this case, the components of the vector $W$ may not be independent.
We will then return to the binary classification case and provide a family of binnings that map $W$ into an integer $Z$, allowing us to scan the full Pareto frontier reflecting the tradeoff between retained entropy and class information, illustrating the end-to-end procedure with the CIFAR-10, MNIST and Fashion-MNIST datasets. 
This is related to the work of \cite{kurkoski2014quantization} which maximizes $I(Z,Y)$ for a fixed number of bins (instead of for a fixed entropy), which corresponds to the ``corners'' seen in \fig{paretoAnalyticFig}.

This work is closely related to the Information Bottleneck (IB) method \cite{tishby2000information}, which provides an insightful, principled approach for balancing compression against prediction \cite{tan2019renormalization}. 
Just as in our work, the IB method aims to find a random variable $Z=f(X)$ that loosely speaking retains as much information as possible about $Y$ and as little other information as possible.
The IB method implements this by maximizing the IB-objective
\beq{IBeq}
{\cal L}_{\rm IB}= I(Z,Y)-\beta I(Z,X)
\eeq
where the Lagrange multiplier $\beta$ tunes the balance between knowing about $Y$ and forgetting about $X$. 
\cite{strouse2017deterministic} considered the alternative Deterministic Information Bottleneck (DIB) objective 
\beq{IBeq2}
{\cal L}_{\rm DIB}= I(Z,Y)-\beta H(Z),
\eeq
to close the loophole where $Z$ retains random information that is independent of both $X$ and $Y$ (which is possible if $f$ is function that contains random components rather than fully deterministic\footnote{If $Z=f(X)$ for some deterministic function $f$, 
which is typically not the case in the popular 
variational IB-implementation \cite{alemi2016deep,chalk2016relevant,fischer2018the},
then $H(Z|X)=0$, so $I(Z,X) \equiv H(Z)-H(Z|X) = H(Z)$, 
which means the two objectives\eqn{IBeq} 
and\eqn{IBeq2} are identical.}).
However, there is a well-known problem with this objective that occurs when $Z\in \mathbb{R}^n$ is continuous \cite{amjad2019learning}: 
$H(Z)$ is strictly speaking infinite, since it requires an infinite amount of information to store the infinitely many decimals of a generic real number. 
Although this infinity is normally regularized away by only defining $H(Z)$ up to an additive constant, which is irrelevant when minimizing \eqn{IBeq2}, the problem is that we can define a new rescaled random variable
\beq{RescalingEq}
Z'=aZ
\eeq
for a constant $a\ne 0$ and obtain\footnote{Throughout this paper, we take $\log$ to denote the logarithm in base $2$, so that entropy and mutual information are measured in bits.}
\beq{IscalingEq}
I(Z',X)=I(Z,X)
\eeq
and
\beq{HscalingEq}
H(Z')=H(Z)+n\log|a|.
\eeq
This means that by choosing $|a|\ll 1$, we can make $H(Z')$ arbitrarily negative while keeping $I(Z',X)$ unchanged, thus making ${\cal L}_{\rm DIB}$ arbitrarily negative.
The objective ${\cal L}_{\rm DIB}$ is therefore not bounded from below, and trying to minimize it will not produce an interesting result.
We will eliminate this $Z$-rescaling problem by making $Z$ discrete rather than continuous, so that $H(Z)$ is always well-defined and finite. 
Another challenge with the DIB objective of \eq{IBeq2}, which we will also overcome, is that it maximizes a linear combination of the two axes in \fig{paretoAnalyticFig}, and can therefore  only discover concave parts of the Pareto frontier, not convex ones (which are seen to dominate in \fig{paretoAnalyticFig}).

The rest of this paper is organized as follows:
In \Sec{MethodSecA}, we will provide an exact solution for the binary classification problem where $Y\in\{1,2\}$ by losslessly distilling $X$ into a single real variable $Z=f(X)$. We also generalize this to an arbitrary classification problem  $Y\in\{1,...,n\}$ by losslessly distilling $X$ into a vector $W=f(X)\in \mathbb{R}^{n-1}$, although the components of the vector $W$ may not be independent.
In \Sec{MethodSecB}, we return to the binary classification case and provide a family a binnings that map $Z$ into an integer, allowing us to scan the full Pareto frontier reflecting the tradeoff between retained entropy and class information. We apply our method to various image datasets in \Sec{ResultsSec} and discuss our conclusions in \Sec{ConclusionsSec}

\section{Method}
\label{LatentSec}

Our algorithm for mapping the Pareto frontier transforms our original data set $X$ in a series of steps which will be describe in turn below:
\beq{MappingsEq}
X\overset{w}{\mapsto} W\mapsto W_{\rm uniform}\mapsto W_{\rm binned}\mapsto W_{\rm sorted}\overset{B}{\mapsto} Z.
\eeq
As we will show, the first, second and fourth transformations retain all mutual information with the label $Y$, and the information loss about $Y$ can be kept arbitrarily small in the third step. In contrast, the last step treats the information loss as a tuneable parameter that parameterizes the Pareto frontier.

\subsection{Lossless distillation for classification tasks}
\label{MethodSecA}

Our first step is to compress $X$ (an image, say) into $W$, a set of $n-1$ real numbers, in such a way that no class information is lost about 
$Y\in\{1,...,n\}$. 

\begin{theorem}
\label{Wtheorem}
{\bf (Lossless Distillation Theorem):} For an arbitrary random variable $X$ and a categorical random variable $Y\in\{1,...,n\}$,
we have 
\beq{Theorem1Eq}
P(Y|X) = P(Y|W),
\eeq
where $W\equiv w(X)\in \mathbb{R}^{n-1}$
is defined by\footnote{Note that we ignore the $n^{\rm th}$ component since it is redundant:
$w_n(X)=1-\sum_i^{n-1} w_i(X)$.}
\beq{wDefEq}
w_i(X) \equiv P(Y=i | X).
\eeq
\end{theorem}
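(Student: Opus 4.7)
The plan is to exploit the fact that $W = w(X)$ is, by construction, a deterministic function of $X$ that records exactly the conditional law of $Y$ given $X$. This makes the theorem essentially tautological once the right measurability observation is in place, so the proof should be quite short.

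First I would reformulate the definition $w_i(X) \equiv P(Y=i\mid X)$ as a conditional expectation, $w_i(X) = E[\mathbbm{1}\{Y=i\}\mid X]$, which is $\sigma(X)$-measurable by construction. The key observation is that, because $W = (w_1(X),\ldots,w_{n-1}(X))$, each $w_i(X)$ is the $i$-th coordinate of $W$ and is therefore also $\sigma(W)$-measurable. Since $W$ is a function of $X$, we also have the inclusion $\sigma(W) \subseteq \sigma(X)$, which sets up the tower property.

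Next I would apply the tower property of conditional expectation to obtain, for each $i \in \{1,\ldots,n-1\}$,
$$P(Y=i\mid W) = E[\mathbbm{1}\{Y=i\}\mid W] = E\bigl[E[\mathbbm{1}\{Y=i\}\mid X]\,\big|\,W\bigr] = E[w_i(X)\mid W] = w_i(X),$$
where the last equality uses that $w_i(X)$ is itself $\sigma(W)$-measurable and therefore comes out of the inner conditional expectation. Since $w_i(X) = P(Y=i\mid X)$, this gives the claimed identity for $i = 1,\ldots,n-1$. The case $i=n$ then follows automatically from the fact that both sides must sum to one over $i$.

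The main ``obstacle'' is really just keeping the measurability step clean; there is no combinatorial or analytic difficulty. For discrete $X$ the argument collapses to the trivial statement that all $x$ in the fiber $w^{-1}(w)$ share the same value $P(Y=i\mid X=x) = w_i$, so averaging over that fiber returns $w_i$. One subtlety worth flagging in the final write-up is that the map $w$ is typically highly non-injective --- many distinct $X$'s collapse to the same $W$ --- which is exactly why $W$ can serve as a genuine compression of $X$, even though Theorem~\ref{Wtheorem} guarantees that no information about $Y$ is lost in the process.
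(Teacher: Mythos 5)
Your proof is correct and rests on the same essential observation as the paper's: that $P(Y\mid X)$ factors through $W$, i.e.\ is $\sigma(W)$-measurable, because each $w_i(X)$ is literally a coordinate of $W$. The difference lies in the formalization. The paper argues by explicitly partitioning the domain of $X$ into level sets $s(W)=\{x:w(x)=W\}$, reparameterizing $X$ as the pair $(W,X_W)$ where $X_W$ picks out the position within a fiber, and then noting that $P(Y\mid W,X_W)=P(Y\mid W)$ because $P(Y\mid X)$ is constant on each fiber. You instead invoke the tower property of conditional expectation, $P(Y=i\mid W)=E\bigl[E[\mathbbm{1}\{Y=i\}\mid X]\mid W\bigr]=E[w_i(X)\mid W]=w_i(X)$, using $\sigma(W)\subseteq\sigma(X)$ and $\sigma(W)$-measurability of $w_i(X)$. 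This buys some rigor: the paper's reparameterization $X\mapsto(W,X_W)$ is intuitively clear but somewhat informal in the continuous case (one would need to specify the measure structure carried by $X_W$), whereas your identity holds verbatim for discrete or continuous $X$ without any disintegration bookkeeping. Your closing remark that in the discrete case the argument collapses to averaging $P(Y=i\mid X=x)$ over the fiber $w^{-1}(w)$ is precisely the paper's proof in disguise, so the two are two renderings of the same idea, with yours being the measure-theoretically cleaner one.
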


\begin{proof}
Let $S$ denote the domain of $X$, \ie, $X\in S$, and define the set-valued function 
$$s(W)\equiv \{x\in S: w(x)=W\}.$$
These sets $s(W)$ form a partition of $S$ parameterized by $W$, since they are disjoint and 
\beq{UnionEq}
\cup_{W\in\mathbb{R}^{n-1}} \>s(W) = S.
\eeq
For example, if $S=\mathbb{R}^2$ and $n=2$, then the sets $s(W)$ are simply contour curves of the 
conditional probability $W\equiv P(Y=1|X)\in \mathbb{R}$.
This partition enables us to uniquely specify $X$ as  the pair $\{W,X_W\}$ by first specifying which set $s[f(X)]$ it belongs to
(determined by $W=f(X)$), and then specifying the particular element within that set, which we denote
$X_W\in S(W)$.
This implies that 
\beq{Proof1Eq}
P(Y|X) = P(Y | W,X_W) = P(Y|W),
\eeq
completing the proof. 
The last equal sign follows from the fact that the conditional probability $P(Y|X)$ is independent of $X_W$, since it is by definition constant throughout the set $s(W)$.
\end{proof}

The following corollary implies that $W$ is an optimal distillation of the information $X$ has about $Y$, in the sense that it constitutes a lossless compression of said information:
$I(W,Y)=I(X,Y)$ as shown, and the total information content (entropy) in $W$ cannot exceed that of $X$ since it is a deterministic function thereof.
\begin{corollary}
With the same notation as above, we have
\beq{Theorem2Eq}
I(X,Y) = I(W,Y).
\eeq
\end{corollary}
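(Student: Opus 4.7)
The plan is to obtain the equality by sandwiching it between two inequalities, the first of which is immediate from the data processing inequality and the second of which will follow directly from Theorem~\ref{Wtheorem}.

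First, since $W = w(X)$ is a deterministic function of $X$, the random variables $Y$, $X$, $W$ form a Markov chain $Y - X - W$ (equivalently, $W$ is conditionally independent of $Y$ given $X$). The data processing inequality then yields $I(W,Y) \le I(X,Y)$ as the easy direction, requiring no use of the special structure of $w$.

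Second, for the reverse inequality $I(X,Y) \le I(W,Y)$, I would invoke Theorem~\ref{Wtheorem}, which states $P(Y|X) = P(Y|W)$. From this identity one concludes that the conditional entropies coincide:
\begin{equation*}
H(Y|X) \;=\; -\sum_{x,y} P(x,y)\log P(y|x) \;=\; -\sum_{x,y} P(x,y)\log P(y|w(x)) \;=\; H(Y|W),
\end{equation*}
where the last equality uses that summing over $x$ with $w(x)$ fixed reduces to summing over $W$. Then
\begin{equation*}
I(X,Y) \;=\; H(Y) - H(Y|X) \;=\; H(Y) - H(Y|W) \;=\; I(W,Y),
\end{equation*}
which gives the reverse inequality (in fact, directly the equality). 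Combining the two bounds yields \eqref{Theorem2Eq}.

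\textbf{Main obstacle.} There is essentially no hard step: Theorem~\ref{Wtheorem} has already been proved, and the rest is a one-line manipulation of conditional entropies. The only subtlety worth flagging is ensuring that the equality $H(Y|X) = H(Y|W)$ is written without implicitly assuming $W$ is an injective function of $X$ (it typically is not, since different $X$'s can have identical conditional class probabilities); the calculation above handles this correctly because the integrand $P(y|x) = P(y|w(x))$ depends on $x$ only through $w(x)$, so marginalization over the fibers $s(W)$ is automatic.
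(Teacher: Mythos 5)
Your proof is correct and the core argument is the same as the paper's: rewrite $I(X,Y)=H(Y)-H(Y|X)$, substitute $P(y|x)=P(y|w(x))$ from Theorem~\ref{Wtheorem}, and observe that the resulting integrand depends on $x$ only through $w(x)$, so averaging over the fibers of $w$ collapses $H(Y|X)$ to $H(Y|W)$. The preliminary data-processing-inequality step is harmless but redundant, as you yourself note; the conditional-entropy calculation already delivers the equality in one shot, which is exactly what the paper does.
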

\begin{proof}
For any two random variables, we have the identity $I(U,V)=H(V)-H(V|U)$, where 
 $I(U,V)$ is their mutual information and $H(V|U)$ denotes conditional entropy.
We thus obtain 
\beqa{Proof2Eq}
I(X,Y)&=&H(Y)-H(Y|X) = H(Y)+\expec{\log P(Y|X)}_{X,Y}\nonumber\\
	&=&H(Y)+\expec{\log P(Y|W)}_{W,X_W,Y}\nonumber\\
	&=&H(Y)+\expec{\log P(Y|W)}_{W,Y} \nonumber\\
	&=&H(Y)-H(Y|W) =I(W,Y),
\eeqa
which completes the proof.
We obtain the second line by using $P(Y|X) = P(Y|W)$ from Theorem~1 and specifying $X$ by $W$ and $X_W$, and the third line since $P(Y|W)$ is independent of $X_W$, as above.
\end{proof}

In most situations of practical interest, the conditional probability distribution $P(Y|X)$ is not precisely known, but can be approximated by training a neural-network-based classifier that outputs the probability distribution for $Y$ given any input $X$. We present such examples in \Sec{ResultsSec}. 
The better the classifier, the smaller the information loss $I(X,Y)-I(W,Y)$ will be, approaching zero in the limit of an optimal classifier.

\subsection{Pareto-optimal compression for binary classification tasks}
\label{MethodSecB}

 \begin{figure}[ht!]
\begin{center}
\includegraphics[width=0.55\columnwidth]{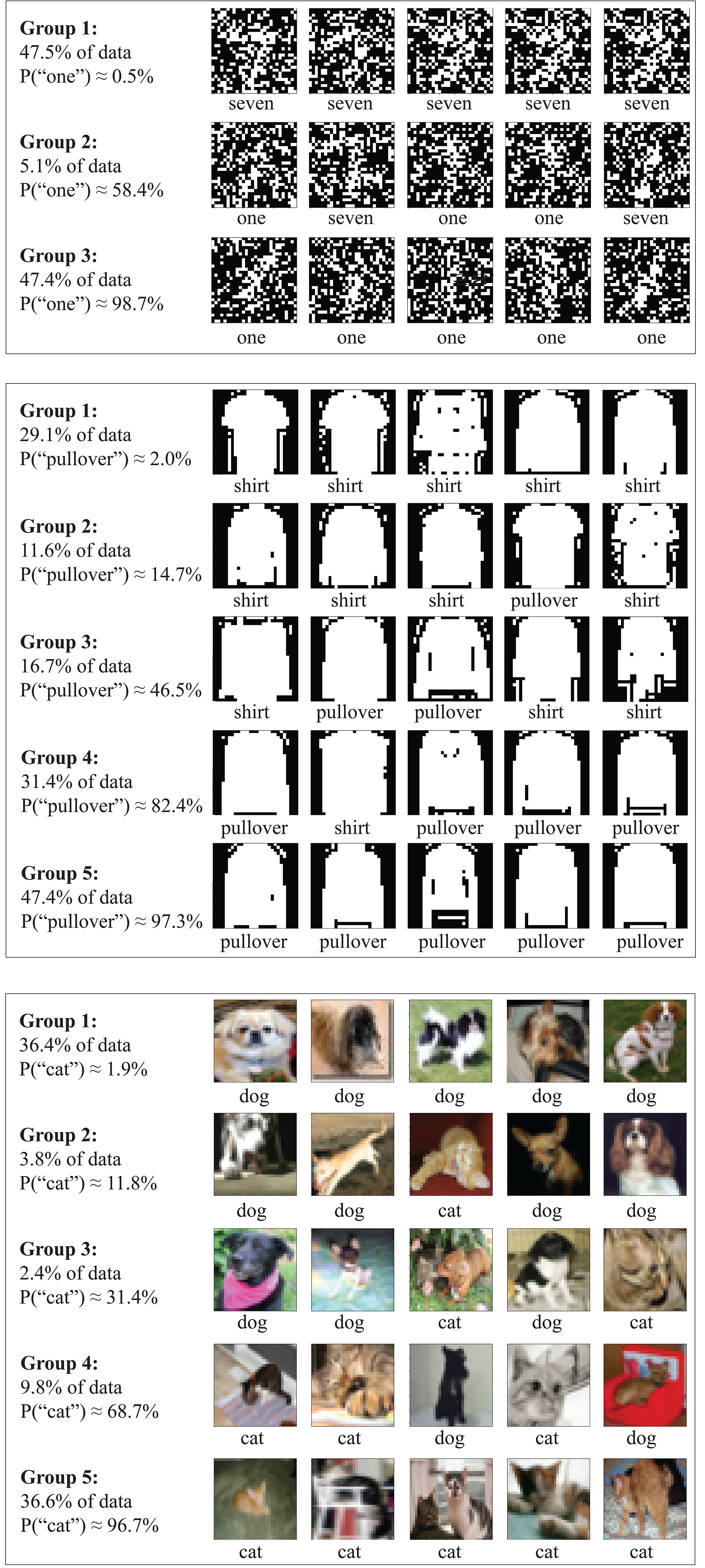}
\end{center}
\vskip-5mm
\caption{Sample data from \protect\Sec{ResultsSec}.
Images from MMNIST (top), Fashion-MNIST (middle) and CIFAR-10 are 
mapped into integers (group labels) $Z=f(X)$ retaining maximum mutual information with the class variable $Y$ (ones/sevens, shirts/pullovers and cats/dogs, respectively) for 
3, 5 and 5 groups, respectively. These mappings $f$ correspond to Pareto frontier ``corners". 
}
\label{classesFig}
\end{figure}

Let us now focus on the special case where $n=2$, \ie, binary classification tasks. 
For example, $X$ may correspond to images of equal numbers of felines and canines to be classified despite challenges with variable lighting, occlusion, {\etc} as in \fig{classesFig}, and $Y\in\{1,2\}$ may correspond to the labels ``cat" and ``dog".
In this case, $Y$ contains $H(Y)=1$ bit of information of which $I(X,Y)\le 1$ bit is contained in $X$.
Theorem~\ref{Wtheorem} shows that for this case, all of this 
information about whether an image contains a cat or a dog can be compressed into a single number $W$ which is not a bit like $Y$, but a real number between zero and one. 

The goal of this section is find a class of functions $g$ that perform Pareto-optimal lossy compression of $W$, mapping it into an 
integer $Z\equiv g(W)$ that maximizes $I(Z,Y)$ for a fixed entropy $H(Z)$.\footnote{Throughout this paper, we will use the term ``Pareto-optimal" or ``optimal" in this sense, \ie, maximizing $I(X,Y)$ for a fixed $H(Z)$.}
The only input we need for our work in this section is the joint probability distribution $f_i(w)=P(Y\hbox{=}i,W\hbox{=}w)$, whose marginal distributions are the discrete probability distribution for $P^Y_i$ for
$Y$ and the probability distribution $f$ for $W$, which we will henceforth assume to be continuous:
\beqa{WYmargEq1}
f(w)&\equiv&\sum_{i=1}^2 f_i(w),\\
P^Y_i\equiv P(Y\hbox{=}i)&=&\int_0^1 f_i(w)dw.
\eeqa

\subsubsection{Uniformization of $W$}

For convenience and without loss of generality, we will henceforth assume that $f(w)=1$, \ie, that $W$ has a uniform distribution on the unit interval $[0,1]$. We can do this because if $W$ were not uniformly distributed, we could make it so by using the standard statistical technique of applying its cumulative probability distribution function to it:
\beq{UniformizationEq}
W \mapsto W' \equiv F(W),\quad F(w)\equiv\int_0^w f(w')dw',
\eeq
retaining all information ---  $I(W',Y)=I(W,Y)$ ---
since this procedure is invertible almost everywhere.

\begin{figure}[t]
\begin{center}
\includegraphics[width=0.7\columnwidth]{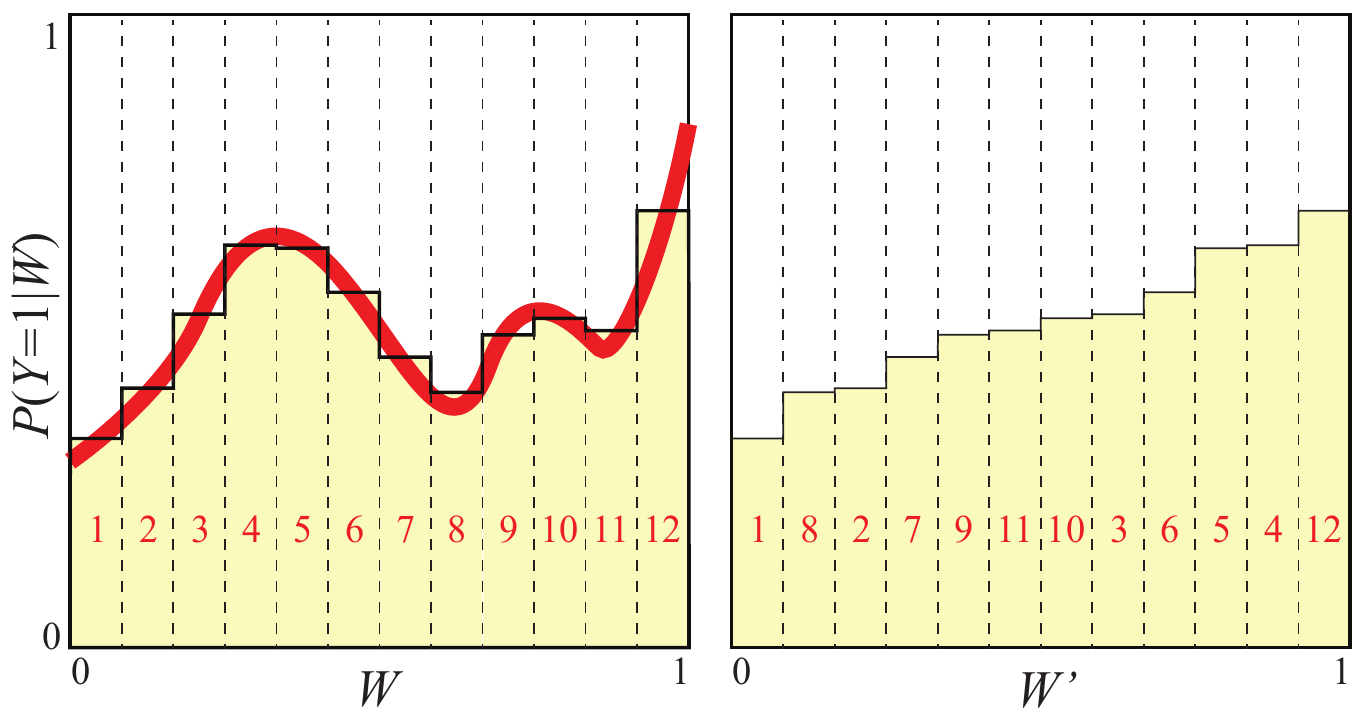}
\end{center}
\caption{Essentially all information about $Y$ is retained if $W$ is binned into sufficiently narrow bins.
Sorting the bins (left) to make the conditional probability monotonically increasing (right) changes neither this information not the entropy.
}
\label{fineBinningFig}
\end{figure}

\subsubsection{Binning $W$}
Given a set of bin boundaries $b_1<b_2<...<b_{n-1}$ grouped into a vector $\b$,
we define the integer-value {\it contiguous binning function} 
\beq{BinningFuncDef}
B(x,\b)\equiv
\left\{
\begin{tabular}{l}
$1$\quad if $x<b_1$\\
$k$\quad if $b_{k-1}<x\le b_k$\\
$n$\quad if $x\ge b_{N-1}$\\
\end{tabular}
\right.
\eeq
$B(x,\b)$ can thus be interpreted as the ID of the bin into which $x$ falls. 
Note that $B$ is a monotonically increasing piecewise constant function of $x$ that is shaped like an $N$-level staircase with $n-1$ steps at $b_1,...,b_{N-1}$. 

Let us now bin $W$ into $N$ equispaced bins, by
mapping it into an integer $W'\in\{1,...,N\}$ (the bin ID) defined by 
\beq{WprimeDefEq}
W'\equiv W_{\rm binned}\equiv B(W,\b_N),
\eeq
where $\b$ is the vector with elements $b_j=j/N$, $j=1,...,N-1$.
As illustrated visually in \fig{fineBinningFig} and mathematically in Appendix~\ref{LosslessBinningAppendix}, binning $W\mapsto W'$ corresponds
to creating a new random variable for which the conditional distribution
$p_1(w)=P(Y\hbox{=}1|W\hbox{=}w)$ is replaced by a piecewise constant function $\pbar_1(w)$, 
replacing the values in each bin by their average.
The binned variable $W'$ thus retains only information about which bin $W$ falls into, discarding all information about the precise location within that bin.
In the $N\to\infty$ limit of infinitesimal bins,  
$\pbar_1(w)\to p_1(w)$, and we expect the above-mentioned discarded information to become negligible.
This intuition is formalized by 
\ref{LosslessBinningTheorem} in Appendix~\ref{LosslessBinningAppendix}, which under mild smoothness assumptions ensuring that $p_1(w)$ is not pathological 
shows that  
\beq{binningLimitEq}
I(W',Y) \to I(W,Y)\quad\hbox{as}\quad N\to\infty,
\eeq
\ie, that we can make the binned data $W'$ retain essentially all the class information from 
$W$ as long as we use enough bins. 

In practice, such as for the numerical experiments that we will present in Section~\ref{ResultsSec}, training data is never infinite and the conditional probability function $p_1(w)$ is never known to perfect accuracy.
This means that the pedantic distinction between 
$I(W',Y)=I(W,Y)$ and $I(W',Y)\approx I(W,Y)$ for very large $N$ is completely irrelevant in practice. In the rest of this paper, we will therefore work with the
unbinned ($W$) and binned ($W'$) data somewhat interchangeably below for convenience, occasionally dropping the apostrophy $'$ from $W'$ when no confusion is caused. 

\subsubsection{Making the conditional probabilty monotonic}

For convenience and without loss of generality, we can assume that the conditional probability distribution $\pbar_1(w)$ is a monotonically increasing function.
We can do this because if this were not the case, we could make it so by sorting the bins by increasing conditional probability, as illustrated in 
\fig{fineBinningFig}, because both the entropy $H(W')$ and the mutual information $I(W',Y)$ are left invariant by this renumbering/relabeling of the bins. The ``cat" probability $P(Y\hbox{=}1)$ (the total shaded area in \fig{fineBinningFig}) is of course also left unchanged by both this sorting and by the above-mentioned binning.

\subsubsection{Proof that Pareto frontier is spanned by contiguous binnings}

We are now finally ready to tackle the core goal of this paper: mapping the Pareto frontier $(H_*,I_*)$ of optimal data compression $X\mapsto Z$ that reflects the tradeoff between $H(Z)$ and $I(Z,Y)$.
While fine-grained binning has no effect on the entropy $H(Y)$ and negligible effect on $I(W,Y)$, it dramatically reduces the entropy of our data.
Whereas $H(W)=\infty$ since $W$ is continuous\footnote{While this infinity, which reflects the infinite number of bits required to describe a single generic real number, is customarily eliminated by defining  entropy only up to an overall additive constant, we will not follow that custom here, for the reason explained in the introduction.}, $H(W')=\log N$ is finite, approaching infinity only in the limit of infinitely many infinitesimal bins.
Taken together, these scalings of $I$ and $H$ imply that the leftmost part of the Pareto frontier $I_*(H_*)$, defined by \eq{ParetoDefEq} and illustrated in \fig{paretoAnalyticFig}, asymptotes  to a horizontal line of height $I_*=I(X,Y)$ as $H_*\to\infty$.

To reach the interesting parts of the Pareto frontier further to the right, we must destroy some information
about $Y$. We do this by defining
\beq{ZdefEq}
Z = g(W'),
\eeq
where the function $g$ groups the tiny bins indexed by $W' \in\{1,...,N\}$ into fewer ones indexed by
$Z\in \{1,...,M\}$, $M<N$. 
There are vast numbers of such possible groupings, since each group corresponds to one of the
$2^N-2$ nontrivial subsets of the tiny bins. Fortunately, as we will now prove, we need only consider 
the $\mathcal{O}(N^M)$ {\it contiguous} groupings, since non-contiguous ones are inferior and cannot lie on the Pareto frontier. Indeed, we will see that for the examples in \Sec{ResultsSec}, $M\simlt 5$ 
suffices to capture the most interesting information. 

\begin{figure}[ht]
\begin{center}
\includegraphics[width=0.5\columnwidth]{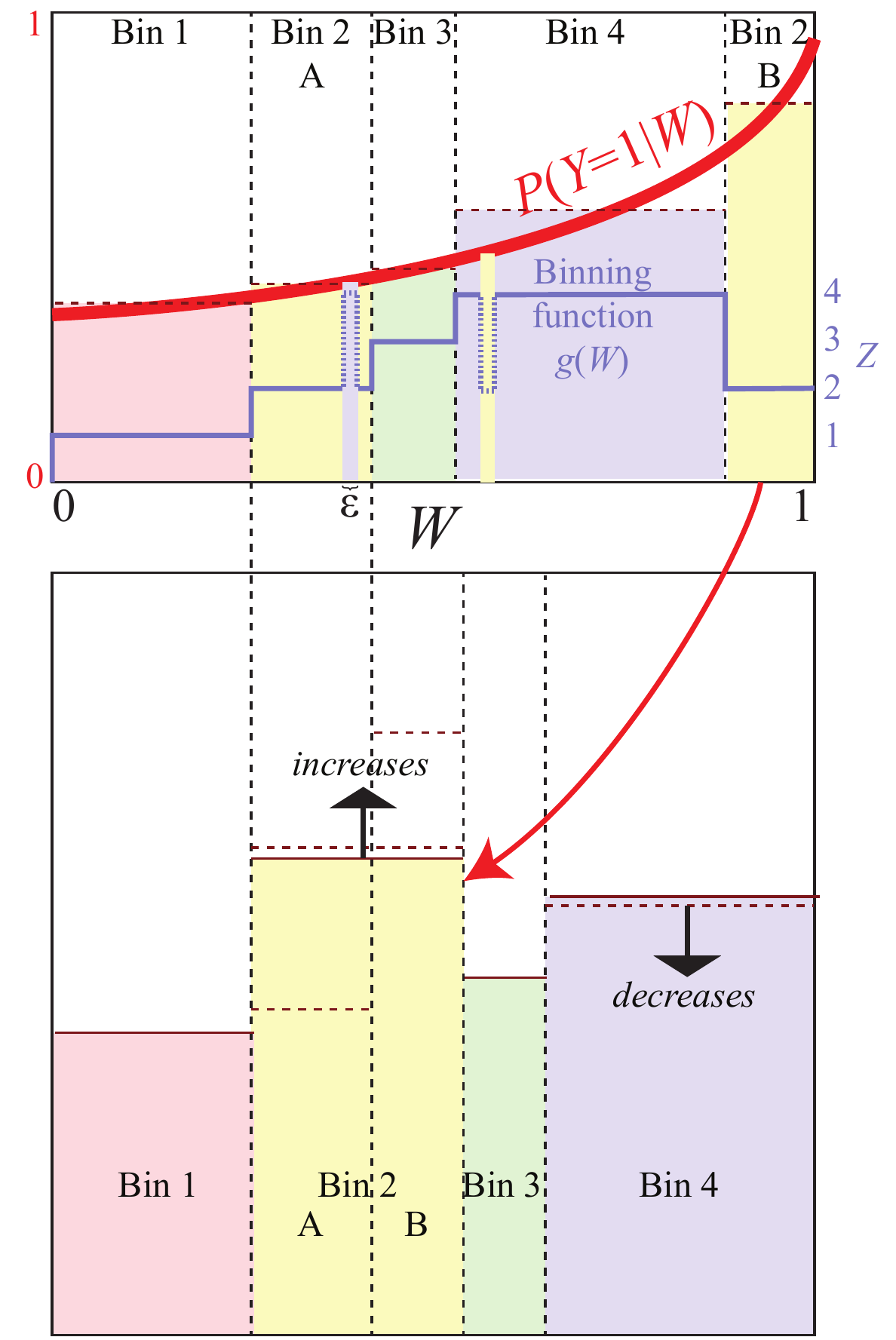}
\end{center}
\caption{The reason that the Pareto frontier can never be reached using non-contiguous bins is that a swapping 
parts of them against parts of an intermediate bin can increase $I(Z,X)$ while keeping $H(Z)$ constant. 
In this example, the binning function $g$ assigns two separate $W$-intervals (top panel) to the same bin (bin 2) as seen is the bottom panel. The shaded rectangles have widths $P_i$, heights $p_i$ and areas
$P_{i1}=P_i p_1$. In the upper panel, the conditional probabilities $p_i$ are monotonically increasing because they are averages of the monotonically increasing curve $p_1(w)$.
}
\label{binningFig}
\end{figure}

\begin{theorem}
\label{ContiguousBinningTheorem}
{\bf (Contiguous Binning Theorem):} If $W$ has a uniform distribution and the conditional probability distribution $P(W|Y\hbox{=}1)$ is monotonically increasing,
then all points $(H_*,I_*)$ on the Pareto frontier correspond to binning $W$ into contiguous intervals, \ie, if
\beq{IstarDefEq}
I(H_*) \equiv\sup_{\{g: H[g(W)]\le H_*\}} I[g(W),Y],
\eeq
then there exists a set of bin boundaries $b_1<...<b_{n-1}$
such that the binned variable $Z\equiv B(W,\b)\in\{1,...,M\}$ satisfies $H(Z)=H_*$ and $I(Z,Y)=I_*$.
\end{theorem}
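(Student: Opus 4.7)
My plan is to argue by contradiction via a local mass-swapping (rearrangement) argument, exploiting the strict concavity of the binary entropy function $h(p) \equiv -p\log p - (1-p)\log(1-p)$ together with the assumed monotonicity of $p_1(w) \equiv P(Y=1\mid W=w)$. First, I would reformulate the optimisation: for any binning $g : [0,1] \to \{1,\ldots,M\}$, let $S_z \equiv g^{-1}(z)$, $q_z \equiv |S_z|$ (Lebesgue measure, since $W$ is uniform), and $\pi_z \equiv (1/q_z)\int_{S_z} p_1(w)\,dw$. Then $H(Z) = -\sum_z q_z \log q_z$ depends only on $(q_1,\ldots,q_M)$, while $I(Z,Y) = H(Y) - \sum_{z} q_z\, h(\pi_z)$. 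Consequently the theorem reduces to showing that among all binnings with a prescribed measure vector $(q_z)$, the contiguous binning (sorted so that higher bin indices correspond to higher-$w$ intervals) minimises $\Phi(g) \equiv \sum_z q_z\, h(\pi_z)$; the resulting $Z$ is of the desired form $B(W,\mathbf{b})$.

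Second, I would suppose for contradiction that some optimal binning is not contiguous. Then there exist indices $j \neq k$ and points $w_- < w^* < w_+$ with $w_-,w_+ \in S_k$ and $w^* \in S_j$. For small $\delta > 0$, consider two candidate local swaps: (A) move a slice of mass $\delta$ around $w_+$ from $S_k$ into $S_j$ and simultaneously move a slice of mass $\delta$ around $w^*$ from $S_j$ into $S_k$; or (B) do the analogous swap using $w_-$ in place of $w_+$. Both preserve $q_j$ and $q_k$, hence $H(Z)$. A first-order calculation yields
\begin{equation*}
\Delta\Phi_{(A)} = \delta\bigl(p_1(w_+) - p_1(w^*)\bigr)\bigl[h'(\pi_j) - h'(\pi_k)\bigr] + O(\delta^2),
\end{equation*}
and an analogous expression for (B) with the (negative) factor $p_1(w_-) - p_1(w^*)$ in place of the positive $p_1(w_+) - p_1(w^*)$. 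Since $h'(p) = \log((1-p)/p)$ is strictly decreasing, the sign of $h'(\pi_j) - h'(\pi_k)$ is opposite to that of $\pi_j - \pi_k$, so whenever $\pi_j \neq \pi_k$ exactly one of the two swaps satisfies $\Delta\Phi < 0$, contradicting optimality.

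The main obstacle I anticipate is the degenerate case $\pi_j = \pi_k$, where the first-order term vanishes. Here I would invoke strict concavity of $h$: since $\Delta\pi_j$ and $\Delta\pi_k$ are nonzero and of opposite sign, the second-order expansion contributes $\tfrac{1}{2}\bigl[q_j h''(\pi_j)(\Delta\pi_j)^2 + q_k h''(\pi_k)(\Delta\pi_k)^2\bigr] < 0$ (as $h'' < 0$), so $\Delta\Phi < 0$ for sufficiently small $\delta$. A secondary technicality is replacing pointwise perturbations by positive-measure slices and handling the case in which $p_1$ is constant on a positive-measure subinterval containing the swap locations; there the ordering of bins on that subinterval is informationally irrelevant, and the binning can be re-chosen to be contiguous without altering either $I(Z,Y)$ or $H(Z)$. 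Finally, since the set of contiguous $M$-bin partitions is parameterised by $\mathbf{b}\in[0,1]^{M-1}$ and $\Phi$ depends continuously on $\mathbf{b}$, a compactness argument yields an optimising contiguous binning achieving the frontier point $(H_*,I_*)$.
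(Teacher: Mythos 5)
Your proposal takes essentially the same route as the paper: a contradiction argument built on a local two-bin mass swap that preserves the bin-measure vector (hence $H(Z)$ and the $Y$-marginal) and strictly increases $I(Z,Y)$, with the sign of the improvement governed by the monotonicity of $p_1$. The only real difference is bookkeeping: you expand $\Phi=\sum_z q_z h(\pi_z)$ directly in $h'$ and $h''$ (invoking strict concavity for the degenerate case $\pi_j=\pi_k$), whereas the paper isolates the information increase as a separate lemma (Theorem~\ref{informationTheorem}) stating that pushing two conditional probabilities further apart at fixed marginals raises $I(Z,Y)$, and handles the $p_k=p_l$ degeneracy by simply rearranging to remove the discontiguity and iterating; both treatments of the flat-$p_1$ edge case are at the same informal level.
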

\begin{proof}
We prove this by contradiction:
we will assume that there is a point 
$(H_*,I_*)$
on the Pareto frontier to which we can come arbitrarily close with
$\left(H(Z),I(Z,Y)\right)$ for $Z\equiv g(X)$ for a 
compression function $g: \mathbb{R}\mapsto \{1,...,M\}$
that is not a contiguous binning function, and obtain a contradiction by using $g$ to construct another
compression function $g'(W)$ lying above the Pareto frontier, 
with $H[g'(W)]=H_*$ and $I[g'(W),Y])>I_*$.
The joint probability distribution $P_{ij}$ for the $Z$ and $Y$ is given by the Lebesgue integral
\beq{LebesgueEq}
P_{ij}\equiv P(Z\hbox{=}i,Y\hbox{=}j) = \int f_j d\mu_i,
\eeq
where $f_j(w)$ is the  joint probability distribution for $W$ and $Y$ introduced earlier and 
$\mu_j$ is the set
$\mu\equiv\{w\in [0,1]: g(w)=i\}$, \ie, the set of $w$-values that are grouped together into the
$i^{\rm th}$ large bin.
We define the marginal and conditional probabilities
\beq{MargCondProbDef}
P_i\equiv P(Z\hbox{=}i)=P_{i1}+P_{i2}, \quad	p_i\equiv P(Y\hbox{=}1|Z\hbox{=}i)={P_{i1}\over P_i}.
\eeq
\fig{binningFig} illustrates the case where the binning function $g$ corresponds to $M=4$ large bins, the second of which consists of two non-contiguous regions that are grouped together; the shaded rectangles in the bottom panel have width $P_i$, height $p_i$ and area $P_{ij}=P_i p_i$.

According to Theorem~\ref{informationTheorem} in the Appendix, we obtain the contradiction required to complete our proof
(an alternative compression $Z'\equiv g'(W)$ above the Pareto frontier with $H(Z')=H_*$
and $I(Z',Y)>I_*$) if there are two different conditional probabilities $p_k\ne p_l$, 
and we can change $g$ into $g'$ so that the joint distribution $P'_{ij}$ of $Z'$ and $Y$ changes in the following way:
\begin{enumerate}
\item Only $P_{kj}$ and $P_{lj}$ change,
\item both marginal distributions remain the same,
\item the new conditional probabilities $p'_k$ and $p'_l$ are further apart.
\end{enumerate}
\fig{binningFig} shows how this can be accomplished for non-contiguous binning:
let $k$ be a bin with non-contiguous support set $\mu_k$ (bin 2 in the illustrated example), let $l$ be a bin whose support $\mu_l$ (bin 4 in the example) contains a positive measure subset 
$\mu_l^{\rm mid}\subset\mu_l$ within two parts $\mu_k^{\rm left}$ and $\mu_k^{\rm right}$ of $\mu_k$, 
and define a new binning function $g'(w)$ that differs from $g(w)$ only by swapping a set $\mu^\epsilon\subset \mu_l^{\rm mid}$ against a subset of either $\mu_k^{\rm left}$ or $\mu_k^{\rm right}$ of measure $\epsilon$ (in the illustrated example, the binning function change implementing this subset is shown with dotted lines).
This swap leaves the total measure of both bins (and hence the marginal distribution $P_i$) unchanged, and also leaves $P(Y\hbox{=}1)$ unchanged. 
If $p_k<p_l$, we perform this swap between $\mu_l^{\rm mid}$ an $\mu_k^{\rm right}$ (as in the figure), 
and if $p_k>p_l$, we instead perform this swap between $\mu_l^{\rm mid}$ an $\mu_k^{\rm left}$, in both cases guaranteeing that $p_l$ and $p_k$ move further apart (since $p(w)$ is monotonically increasing).
This completes our proof by contradiction except for the case where $p_k=p_l$; in this case, we swap to entirely eliminate the discontiguity, and repeat our swapping procedure between other bins until we increase the entropy (again obtaining a contradiction) or end up with a fully contiguous binning (if needed, $g(w)'$ can be changed to eliminate any measure-zero subsets that ruin contiguity, since they leave the Lebesgue integral in \eq{LebesgueEq} unchanged.)
\end{proof}

\subsection{Mapping the frontier}

Theorem~\ref{ContiguousBinningTheorem} implies that we can in practice find the Pareto frontier for any random variable $X$ by searching the space of contiguous binnings of $W=w(X)$ after uniformization, binning and sorting. 
In practice, we can first try the 2-bin case by scanning the bin boundary $0<b_1<1$, 
then trying the 3-bin case by trying bin boundaries $0<b_1<b_2<1$, then trying the 4-bin case, {\etc}, as illustrated in \fig{paretoAnalyticFig}.
Each of these cases corresponds to a standard multi-objective optimization problem aiming to maximize the two objectives $I(Z,Y)$ and $H(Z)$. We perform this optimization numerically with the AWS algorithm of \cite{kim2005adaptive} as described in the next section. 

Although the uniformization, binning and sorting procedures are helpful in practice as well as for for simplifying proofs, they are not necessary in practice.
Since what we really care about is grouping into integrals containing similar conditional probabilities $p_1(w)$, not similar $w$-values,
it is easy to see that binning horizontally after sorting is equivalent to binning vertically before sorting. 
In other words, we can eliminate the binning and sorting steps if we replace ``horizontal" binning
$g(W) = B(W,\b)$
by ``vertical" binning 
\beq{verticalBinningEq}
g(W) = B[p_1(W),\b],
\eeq
where $p_1$ denotes the conditional probability as before.

\section{Results}
\label{ResultsSec}

We will now test our algorithm for Pareto frontier mapping using some well-known datasets: 
the CIFAR-10 image database \cite{krizhevsky2014cifar}, the MNIST database of hand-written digits \cite{lecun2010mnist} and the Fashion-MNIST database of garment images \cite{xiao2017fashion}. Before doing this, however, let us build intuition for how it works by testing on a much simpler toy model that is analytically solvable, where the accuracy of all approximations can be exactly determined.

\subsection{Analytic warmup example}

Let the random variables $X=(x_1,x_2)\in[0,1]^2$ and 
$Y\in\{1,2\}$ be defined by the bivariate probability distribution
\beq{xProbDistEq}
f(X,Y) =  
\left\{
\begin{tabular}{ll}
$2 x_1 x_2$				&if $Y=1$,\\
$2(1-x_1)(1-x_2)$			&if $Y=2$,
\end{tabular}
\right.
\eeq
which corresponds to $x_1$ and $x_2$ being two independent and identically distributed random variables with triangle distribution $f(x_i)=x_i$ if $Y=1$, but flipped $x_i\mapsto 1-x_i$ if $Y=2$.
This gives $H(Y)=1$ bit and mutual information
\beq{AnalyticIeq}
I(X,Y)=1-{\pi^2-4\over 16\ln 2}\approx 0.4707\>\hbox{bits}.
\eeq
The compressed random variable $W=w(X)\in\mathbb{R}$ defined by \eq{wDefEq} is thus
\beq{analyticgEq}
W=
P(Y\hbox{=}1|X) 
= {x_1 x_2\over x_1 x_2+(1-x_1)(1-x_2)}.
\eeq
After defining $Z\equiv B(W,\b)$ for a vector $\b$ of bin boundaries, 
a straightforward calculation shows that the joint probability distribution of $Y$ and the binned variable $Z$ is given by
\beq{AnalyticPeq}
P_{ij}\equiv P(Z\hbox{=}i,Y\hbox{=}j) = 
F_j(b_{i+1})-F_j(b_i),
\eeq
where the cumulative distribution function
$F_j(w)\equiv P(W\hbox{$<$}w,Y\hbox{=}j)$
is given by
\beqa{analyticFwEq}
F_1(w)&=&{w^2\left[(2 w - 1) (5 - 4 w) + 
     2 (1 - w^2) \log(w^{-1} - 1)\right]\over 2 (2 w-1)^4},\nonumber\\
     F_2(w)&=&{1\over 2} - F_1(1-w).
\eeqa   
Computing $I(W,Y)$ using this probability distribution recovers exactly the same mutual information $I\approx 0.4707\>$bits as in \eq{AnalyticIeq}, as we proved in Theorem~\ref{Wtheorem}.

\subsection{The Pareto frontier}

Given any binning vector $\b$, we can plot a corresponding point 
$(H[Z],I[Z,Y])$ in \fig{paretoAnalyticFig} by computing 
$I(Z,Y)=H(Z)+H(Y)-H(Z,Y)$,\\
$H(Z,Y)=-\sum P_{ij}\log P_{ij}$, {\etc},
where $P_{ij}$ is given by \eq{AnalyticPeq}.

The figure shows 6,000 random binnings each for $M=3,...,8$ bins; as we have proven, the upper envelope of points corresponding to all possible (contiguos) binnings defines the Pareto frontier.
The Pareto frontier begins with the black dot at $(0,0)$ (the lower right corner), since
$M=1$ bin obviously destroys all information. The $M=2$ bin case corresponds to a 1-dimensional closed curve parametrized by the single parameter $b_1$ that specifies the boundary between the two bins: 
it runs from $(0,0)$ when $b_1=1$, moves to the left until $H(Z)=1$ when $b_1=0.5$, and returns to $(0,0)$ when $b_1=1$.  The $b_1<0.5$ and $b_1>0.5$ branches are indistinguishable in \fig{paretoAnalyticFig} because of the symmetry of our warmup problem, but in generic cases, a closed loop can be seen where only the upper part defines the Pareto frontier.

More generally, we see that the set of all binnings into $M>2$ bins maps the vector $\b$ of $M-1$ bin boundaries into a contiguous region in \fig{paretoAnalyticFig}. The inferior white region region below  can also be reached if we use non-contiguous binnings.

The Pareto Frontier is seen to resemble the  top of a circus tent, with convex segments separated by ``corners" where the derivative vanishes, corresponding to a change in the number of bins. We can understand the origin of these corners by considering what happens when adding a new bin of infinitesimal size $\epsilon$. 
As long as $p_i(w)$ is continuous, this changes all probabilites $P_{ij}$ by amounts 
$\delta P_{ij}=\mathcal{O}(\epsilon)$, and the probabilities corresponding to the new bin
(which used to vanish) will now be ${O}(\epsilon)$.
The function $\epsilon\log\epsilon$ has infinite derivative at $\epsilon=0$, blowing up as 
$\mathcal{O}(\log\epsilon)$, which implies that 
the entropy increase $\delta H(Z)=\mathcal{O}(-\log\epsilon)$.
In contrast, a straightforward calculation shows that all $\log\epsilon$-terms cancel when computing the mutual information, which changes only by 
$\delta I(Z,Y)=\mathcal{O}(\epsilon)$. As we birth a new bin and move leftward from one of the black dots in \fig{paretoAnalyticFig}, the initial slope of the Pareto frontier is thus
\beq{ParetoSlopeEq}
\lim_{\epsilon\to 0}\>{\delta I(Z,Y)\over \delta H(Z)}=0.
\eeq
In other words, the Pareto frontier starts out {\it horizontally} to the left of each of its corners in \fig{paretoAnalyticFig}. Indeed, the corners are ``soft" in the sense that the derivative of the Pareto Frontier is continuous and vanishes at the corners: for a given number of bins, $I(X,Z)$ by definition takes its global maximum at the corresponding corner, so the derivative $\partial I(Z,Y)/\partial H(Z)$ vanishes also as we approach the corner from the right.\footnote{The first corner (the transition from 2 to 3 bins) can nonetheless look fairly sharp because the 2-bin curve turns around rather abruptly, and the right derivative does not vanish in the limit where a symmetry causes the upper and lower parts of the 2-bin loop to coincide.}

Our theorems imply that in the $M\to\infty$ limit of infinitely many bins, 
successive corners become gradually less pronounced (with ever smaller derivative discontinuities), because the left asymptote of the Pareto frontier simply approaches the horizontal line $I_*=I(Y,Z)$.

\subsubsection{Approximating $w(X)$}

For our toy example, we knew the conditional probability distribution $P(Y|X)$ and could therefore compute $W=w(X)=P(Y\hbox{=}1|X)$ exactly. For practical examples where this is not the case, we can instead train a neural network to implement a function $\what(X)$ that approximates 
$P(Y\hbox{=}1|X)$. For our toy example, we train a fully connected feedforward neural network to predict $Y$ from $X$ using cross-entropy loss;  it has 2 hidden layers, each with 256 neurons with ReLU activation, and a final linear layer with softmax activation, whose first neuron defines $\what(X)$.
A illustrated in \fig{contourFig}, the network prediction for the conditional probability 
$\what(X)\equiv P(Y\hbox{=}1)$ is fairly accurate, but slightly over-confident, 
tending to err on the side of predicting more extreme probabilities (further from $0.5$). The average KL-divergence between the predicted and actual conditional probability distribution $P(Y|X)$ is  about $0.004$, which causes negligible loss of information about $Y$.

\begin{figure}[t]
\begin{center}
\includegraphics[width=0.5\columnwidth]{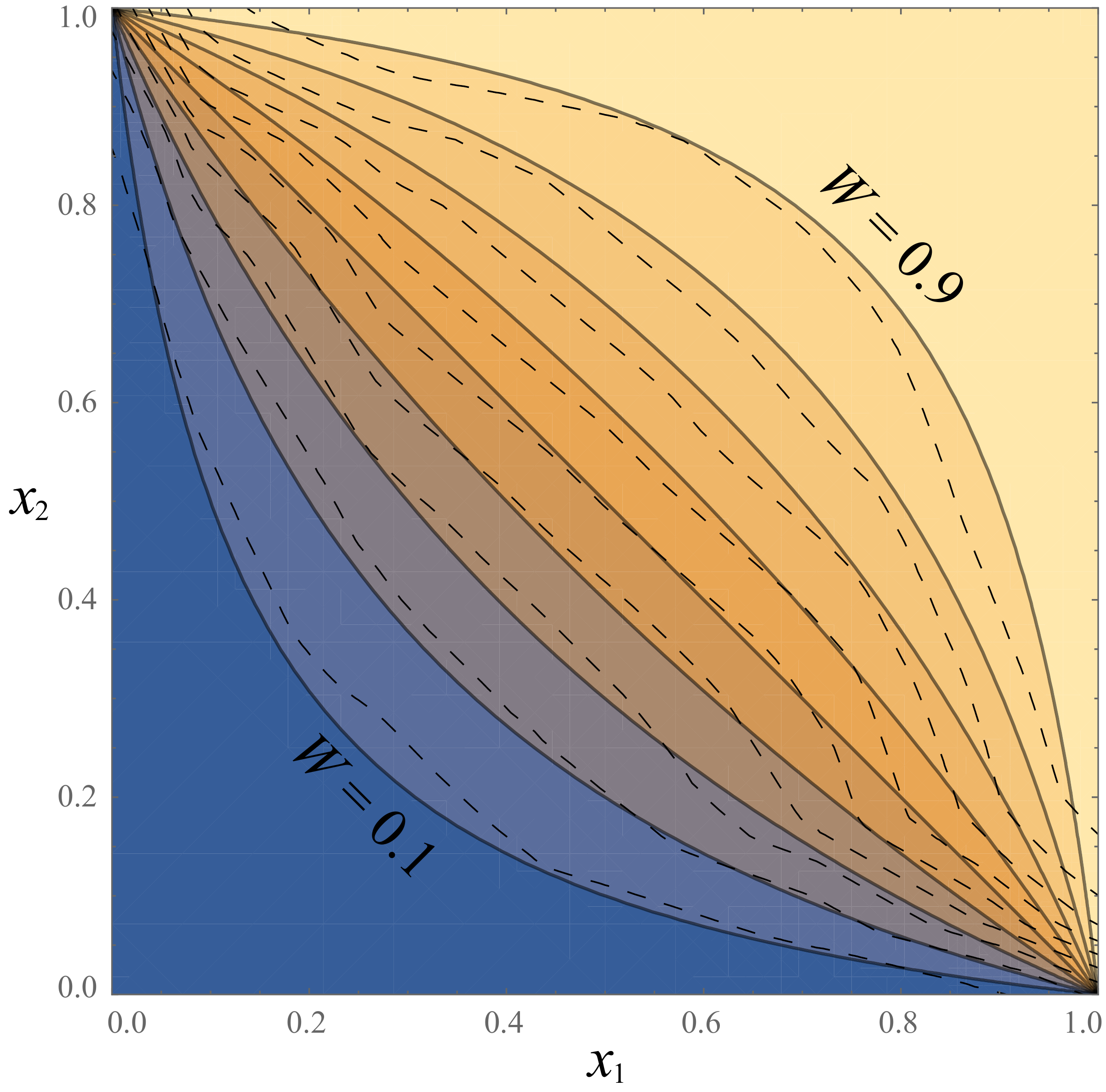}
\end{center}
\vskip-5mm
\caption{Contour plot of the function $W(x_1,x_2)$ computed both exactly using 
equation~\ref{analyticgEq} (solid curves) and approximately using a neural network (dashed curves).
}
\label{contourFig}
\end{figure}

\subsubsection{Approximating $f_1(W)$}

For practical examples where the conditional joint probability distribution $P(W,Y)$ cannot be computed analytically, we need to estimate it from the observed distribution of $W$-values output by the neural network. 
For our examples, we do this by fitting 
each probability distribution by a beta-distribution times the exponential of a polynomial of degree $d$: 
\beq{parametrizedFuncDefEq}
f(w,\a)\equiv 
\exp\left[\sum_{k=0}^d a_k x^k\right]x^{a_{d+1}}(1-x)^{a_{d+2}}, 
\eeq
where the coefficient $a_0$ is fixed by the normalization requirement $\int_0^1 f(w,\a)dw=1$.
We use this simple parametrization because it 
can fit any smooth distribution arbitrarily well for sufficiently large $d$, and 
provides accurate fits for the probability distributions in our examples using quite modest $d$; for example, $d=3$ gives 
$d_{KL}[ f_1(w),f(w,\a)]\approx 0.002$
for
\beqa{pFitLikelihoodEq}
\a&\equiv&\underset{\a'}{\rm argmin}\>\>d_{\rm KL}[f_1(w),f(w,\a')]\\
    &=&(-1.010, 2.319, -5.579, 4.887, 0.308, -0.307),\nonumber
\eeqa
which causes rather negligible loss of information about $Y$.
For our examples below where we do not  know the exact distribution $f_1(w)$ and merely have samples $W_i$ drawn from it, one for each element of the data set, we
instead perform the fitting by the standard technique of minimizing the cross entropy loss, \ie, 
\beq{pFitLikelihoodEq2}
\a\equiv \underset{\a'}{\rm argmin} \>\>-\sum_{k=1}^{n}\log f(W_k,\a').
\eeq
Table~\ref{ProbFitTable} lists the fitting coefficients used, and \fig{ProbFitsFig} illustrates the fitting accuracy.

\begin{table*}[]
{\footnotesize
\begin{tabular}{|l|l|r|r|r|r|r|r|r|}
\hline
Experiment   & Y &$a_0$& $a_1$ & $a_2$ & $a_3$  &$a_4$ &$a_5$&$a_6$\\
\hline                            
Analytic		&1	&0.0668	&-4.7685	&16.8993	&-25.0849		&13.758	&0.5797	&-0.2700\\
			&2	&0.4841	&-5.0106	&5.7863	&-1.5697		&-1.7180	&-0.3313	&-0.0030\\
\hline
Fashion-MNIST	&Pullover	&0.2878	&-12.9596 &44.9217	&-68.0105 &37.3126	&0.3547	&-0.2838\\
			&Shirt	&1.0821	&-23.8350	&81.6655	&-112.2720	&53.9602	&-0.4068	&0.4552\\
\hline
CIFAR-10		&Cat		&0.9230	&0.2165	&0.0859	&6.0013	&-1.0037	&0.8499\\
			&		&		&0.6795	&0.0511	&0.6838	&-1.0138	&0.9061\\ 
  			&Dog	&0.8970	&0.2132	&0.0806	&6.0013	&-1.0039	&0.8500\\
			&		&		&0.7872	&0.0144	&0.7974	&-0.9440	&0.7237\\			
\hline
MNIST		&One	&3.1188	&-65.224	&231.4	&-320.054	&150.779	&1.1226	&-0.6856\\
			&Seven	&-1.0325	&-47.5411	&189.895	&-269.28	&127.363	&-0.8219	&0.1284\\
\hline
\end{tabular}
\caption{Fits to the conditional probability distributions $P(W|Y)$ for our experiments, 
in terms of the parameters $a_i$ defined by \protect\eq{parametrizedFuncDefEq}.
\label{ProbFitTable}
}
}
\end{table*}

\subsection{MNIST, Fashion-MNIST and CIFAR-10}

The MNIST database consists of 28x28 pixel greyscale images of handwritten digits: 
60,000 training images and 10,000 testing images \cite{lecun2010mnist}.
We use the digits 1 and 7, since they are the two that are most frequently confused, relabeled as $Y=1$ (ones) and $Y=2$ (sevens). To increase difficulty, we inject 30\% of pixel noise, i.e., randomly flip each pixel with 30\% probability (see examples in \fig{classesFig}). For easy comparison with the other cases, we use the same number of samples for each class. 

The Fashion-MNIST database has the exact same format (60,000 + 10,000 28x28 pixel greyscale images), depicting not digits but 10 classes of clothing \cite{xiao2017fashion}. Here we again use the two most easily confused classes: pullovers ($Y=1$) and shirts ($Y=2$);  see \fig{classesFig} for examples.

The architecture of the neural network classifier we train on the above two datasets is adapted from here\footnote{We use the neural network architecture from \href{https://github.com/pytorch/examples/blob/master/mnist/main.py}{github.com/pytorch/examples/blob/master/mnist/main.py}; the only difference in architecture is that our output number of neurons is 2 rather than 10.}: two convolutional layers (kernel size 5, stride 1, ReLU activation) with 20 and 50 features, respectively, each of which is followed by max-pooling with kernel size 2. This is followed by a fully connected layer with 500 ReLU neurons and finally a softmax layer that produces the predicted probabilities for the two classes.  After training, we apply the trained model to the test set to obtain $W_i=P(Y|X_i)$ for each dataset.

CIFAR-10 \cite{cifar} is one of the most widely used datasets for machine learning research, and contains 60,000 $32\times 32$ color images in 10 different classes.
We use only the cat ($Y=1$) and dog ($Y=2$) classes, which are the two that are empirically hardest to discriminate;  see \fig{classesFig} for examples.
We use a ResNet18 architecture\footnote{The architecture is adapted from \href{https://github.com/kuangliu/pytorch-cifar}{github.com/kuangliu/pytorch-cifar}, for which we use its ResNet18 model; the only difference in architecture is that we use 2 rather than 10 output neurons.} \cite{he2016deep}. We train with a learning rate of 0.01 for the first 150 epochs, 0.001 for the next 100, and 0.0001 for the final 100 epochs; we keep all other settings the same as in the original repository.

\begin{figure}[ht]
\begin{center}
\includegraphics[width=0.6\columnwidth]{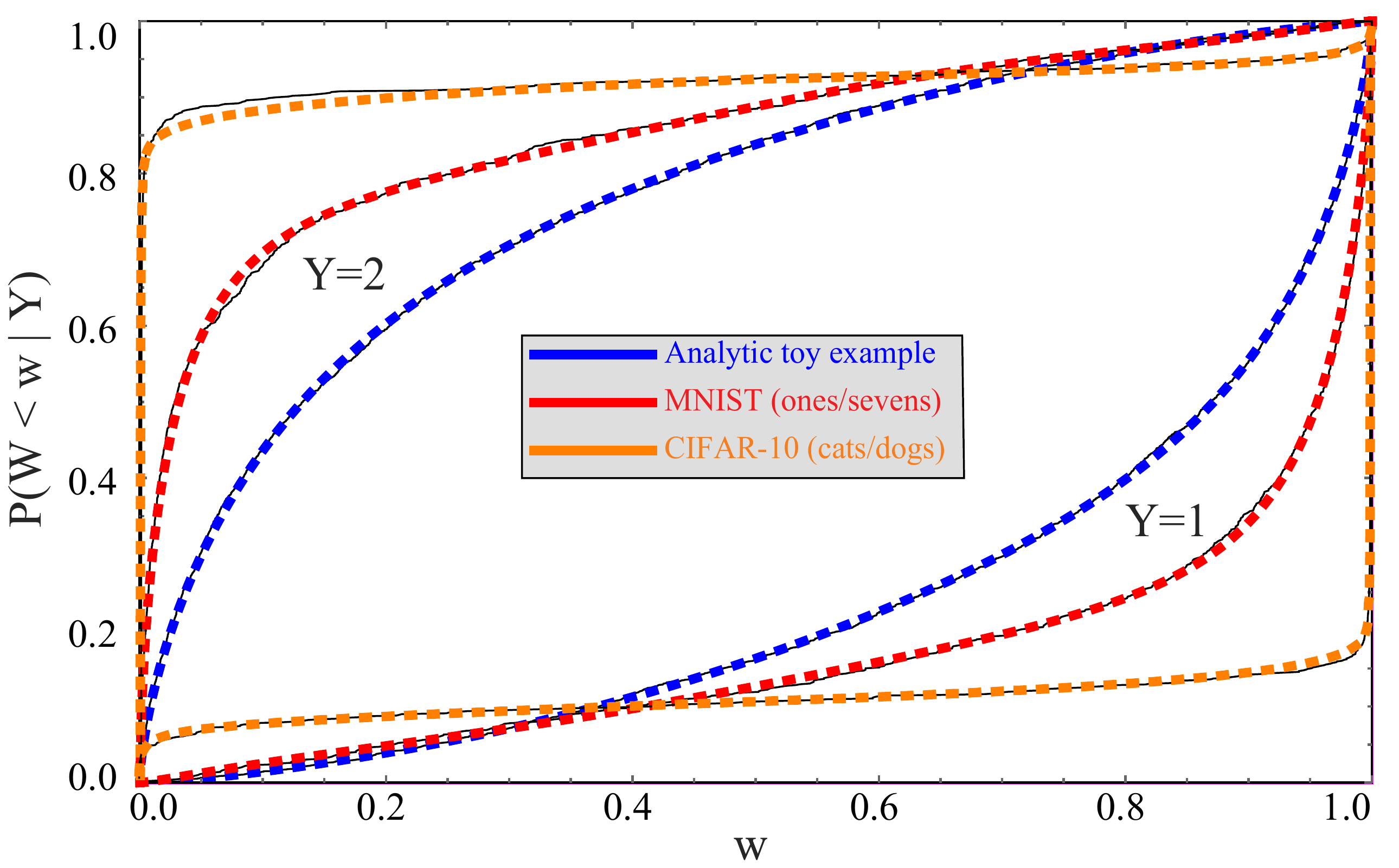}
\end{center}
\vskip-5mm
\caption{Cumulative distributions $F_i(w)\equiv P(W\hbox{$<$}w|Y\hbox{$=$}i)$ are shown for the analytic (blue/dark grey), Fashion-MNIST (red/grey) and CIFAR-10 (orange/light grey) examples. Solid curves show the observed cumulative histograms of $W$ from the neural network, and dashed curves show the fits defined by \eq{parametrizedFuncDefEq} and
Table~\ref{ProbFitTable}.
}
\label{ProbFitsFig}
\end{figure}

\fig{ProbFitsFig} shows observed cumulative distribution functions $F_i(w)$ (solid curves) for the $W_i=P(Y=1|X_i)$ generated by the neural network classifiers, together with our above-mentioned analytic fits (dashed curves).\footnote{In the case of CIFAR-10, the observed distribution $f(w)$ was so extremely peaked near the endpoints that we replaced \eq{parametrizedFuncDefEq} by the more accurate fit 
\beqa{CIFARfitEq1}
f(w)&\equiv		& F'(w),\\
F(w)&\equiv		&\left\{
\begin{tabular}{lc}
$\a^A_0 F_*[w, \a^A]$			&if $w<1/2$,\\
$1 - (1 - \a^A_0) F_*[1 - w, \a^B]]$	&otherwise,
\end{tabular}
\right.\\
F_*(x)&\equiv&G\left[{(2 x)^{a_1}\over 2}\right],\\
G(x)&\equiv&\left[\left({x\over a_2}\right)^{a_3 a_4} + (a_5 + a_6 x)^{a_4}\right]^{1/a_4},\\
a_6&\equiv&2\left[(1 - (2 a_2)^{-a_3 a_4})^{1/a_4} - a_5\right],
\eeqa
where the parameters vectors $\a^A$ and $\a^B$ are given in Table~\ref{ProbFitTable} for both cats and dogs. For the cat case, this fit gives not $f(w)$ but $f(1-w)$.
Note that $F_*(0) = 0$, $F_*(1/2)=1$.
}
\fig{CondProbFig} shows the corresponding conditional probability curves $P(Y=1|W)$ after remapping $W$ to have a uniform distribution as described above.  \fig{ProbFitsFig} shows that the original $W$-distributions are strongly peaked around $W\approx 0$ and $W\approx 1$, so this remapping stretches the $W$-axis so as to shift probability toward more central values.
  
 \begin{figure}[h!]
\begin{center}
\includegraphics[width=0.6\columnwidth]{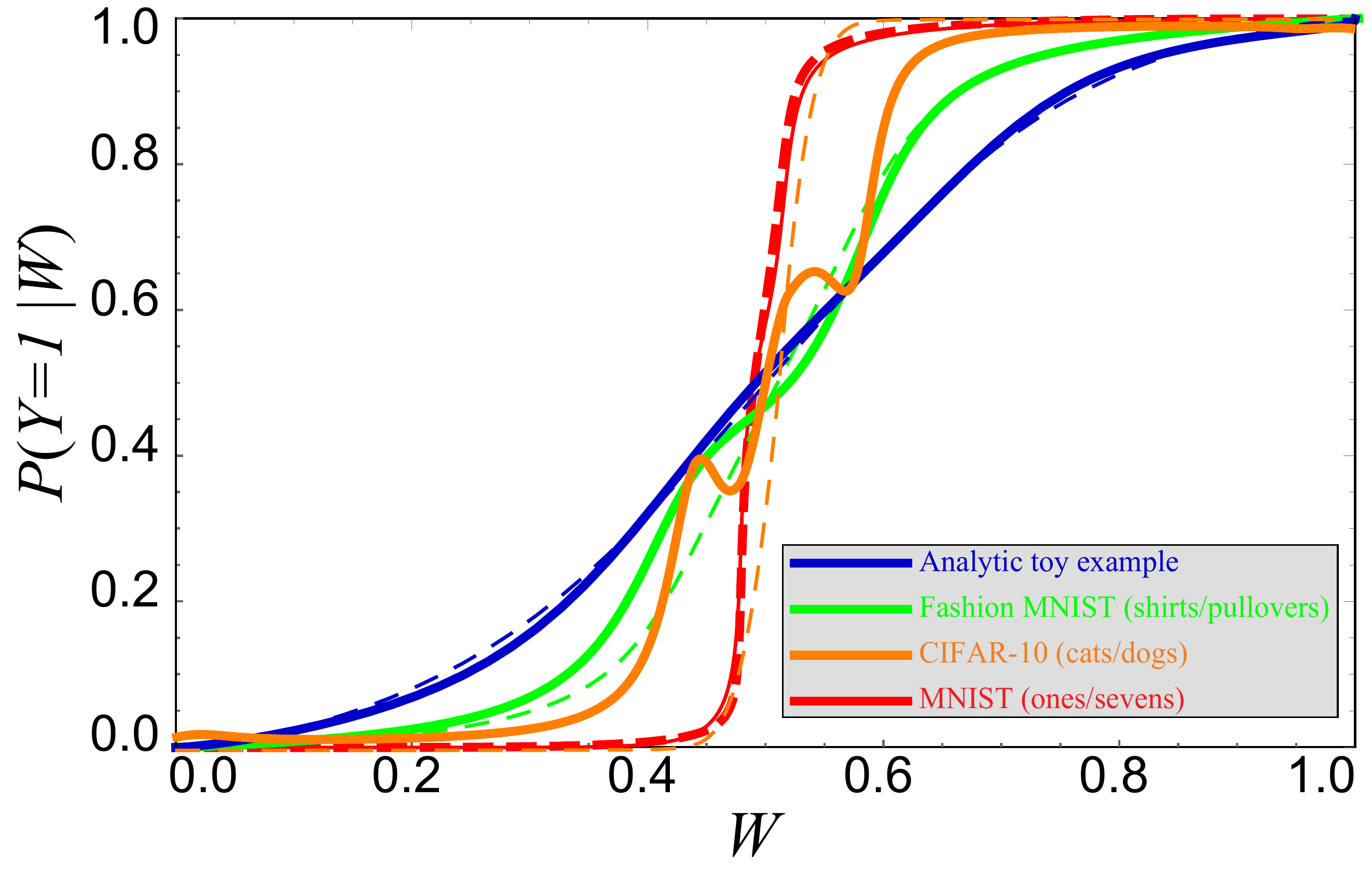}
\end{center}
\vskip-4mm
\caption{The solid curves show the actual conditional probability $P(Y\hbox{=}1|W)$ for CIFAR-10 (where the labels Y=1 and 2 correspond to ``cat" and ``dog") and MNIST with 20\% label noise (where the labels Y=1 and 2 correspond to ``1" and ``7") , respectively.
The color-matched dashed curves show the conditional probabilities predicted by the neural network; the reason that they are not diagonal lines  $P(Y\hbox{=}1|W)=W$ is that $W$ has been reparametrized to have a uniform distribution.
If the neural network classifiers were optimal, then solid and dashed curves would coincide.}
\label{CondProbFig}
\end{figure}

\begin{figure}[ht]
\begin{center}
\includegraphics[width=0.7\columnwidth]{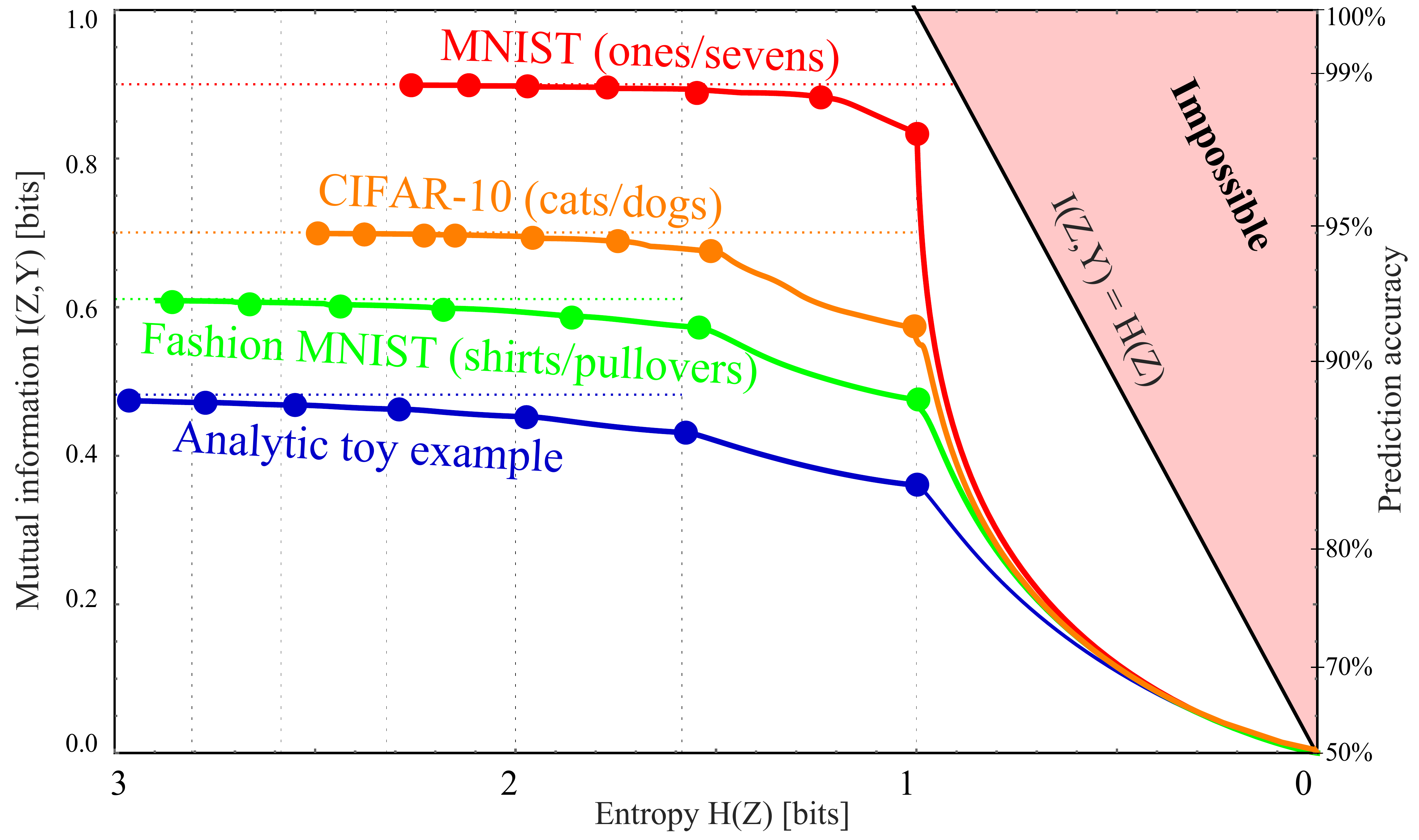}
\end{center}
\vskip-5mm
\caption{The Pareto frontier for compressed versions $Z=g(X)$ of our four datasets $X$, showing the maximum attainable class information $I(Z,Y)$ for a given entropy $H(Z)$.
The ``corners" (dots) correspond to the maximum $I(Z,Y)$ attainable when binning the likelihood $W$ into a given number of bins (2, 3, ...,8 from right to left). The horizontal dotted lines show the maximum available information $I(X,Y)$ for each case, reflecting that there is simply less to learn in some examples than in others.
}
\label{paretoFig}
\end{figure}

The final result of our calculations is shown in \Fig{paretoFig}: the Pareto frontiers for our four datasets, computed using our method.

\subsection{Interpretation of our results}

To build intuition for our results, let us consider our CIFAR-10 example of images $X$ depicting cats ($Y\hbox{=}1$) and dogs  ($Y\hbox{=}2$) as in \fig{classesFig} and ask what aspects $Z=g(X)$ of an image $X$ capture the most information about the species $Y$.
Above, we estimated that 
$I(X,Y)\approx 0.69692$ 
bits, so what $Z$ captures the largest fraction of this information for a fixed entropy?
Given a good neural network classifier, a natural guess might be the single bit $Z$ containing its best guess, say ``it's probably a cat".
This corresponds to defining $Z=1$ if $P(Y\hbox{=}1|X)>0.5$, $Z=2$ otherwise, and gives the joint distribution 
$P_{ij}\equiv P(Y\hbox{=}i,Z\hbox{=}j$)
$$
\P=\left(
\begin{tabular}{cc}
0.454555		&0.045445\\
0.042725		&0.457275
\end{tabular}
\right)
$$
corresponding to $(Z,Y)\approx 0.56971$ bits.
But our results show that we can improve things in two separate ways.

First of all, if we only want to store one bit $Z$, then we can do better, corresponding to the first ``corner" in \fig{paretoFig}:
moving the likelihood cutoff from $0.5$ to $0.51$, \ie, redefining $Z=1$ if $P(Y|X)>0.51$,
increases the mutual information to $I(Z,Y)\approx 0.56974$ bits.

More importantly, we are still falling far short of the $0.69692$ bits of information we had without data compression, capturing only 88\% of the available species information.
Our Theorem~\ref{Wtheorem} showed that we can retain {\it all} this information if we instead define $Z$ as the cat probability itself:
$Z\equiv W\equiv P(Y|X)$. For example, a given image might be compressed not into {\it ``It's probably a cat"} 
but into {\it ``I'm 94.2477796\% sure it's a cat"}.
However, it is clearly impractical to report the infinitely many decimals required to retain all the species information, which would make $H(Z)$ infinite.
Our results can be loosely speaking interpreted as the optimal way to round $Z$, so that the information  $H(Z)$ required to store it becomes finite.
We found that simply rounding to a fixed number of decimals is suboptimal; for example, if we pick 2 decimals and say
{\it ``I'm 94.25\% sure it's a cat"}, then we have effectively binned the probability $W$ into 10,000 bins of equal size, even though we can often do much better with bins of unequal size, as illustrated in the bottom panel of 
\fig{paretoAnalyticFig}. Moreover, when the probability $W$ is approximated by a neural network, we found that what should be optimally binned is not $W$ but the conditional probability $P(Y\hbox{=}1|W)$ illustrated in \fig{CondProbFig}
(``vertical binning").

It is convenient to interpret our Pareto-optimal data compression $X\mapsto Z$ as {\it clustering}, \ie, as a method of grouping our images or other data $X_i$ into clusters based on what information they contain about $Y$.
For example, \fig{classesFig} illustrates CIFAR-10 images clustered by their degree of ``cattiness"
into 5 groups $Z=1,...,5$ that might be nicknamed ``1.9\% cat", ``11.8\% cat", ``31.4\% cat", ``68.7\% cat" and ``96.7\% cat".  
This gives the joint distribution $P_{ij}\equiv P(Y\hbox{=}i,Z\hbox{=}j$) where
$$
\P=\left(
\begin{tabular}{ccccc}
0.350685	&0.053337	&0.054679	&0.034542	&0.006756\\
0.007794	 &0.006618	&0.032516	&0.069236	&0.383836
\end{tabular}
\right)
$$
and gives $I(Z,Y)\approx 0.6882$, thus increasing the fraction of species information retained from 
82\% to 99\%.

This is a striking result: we can group the images into merely five groups and discard all information about all images except which group they are in, yet retain 99\% of the information we cared about.
Such grouping may be helpful in many contexts. For example, given a large sample of labeled medical images of potential tumors, they can be used to define say five optimal clusters, after which future images can be classified into five degrees of cancer risk that collectively retain virtually all the malignancy information in the original images.

Given that the Pareto Frontier is continuous and corresponds to an infinite family of possible clusterings, which one is most useful in practice? Just as in more general multi-objective optimization problems, the most interesting points on the frontier are arguably its ``corners", indicated by dots in \fig{paretoFig}, where we do notably well on both criteria. 
This point was also made in the important paper \cite{strouse2019information} in the context of the DIB-frontier discussed below.
We see that the parts of the frontier between corners tend to be convex and thus rather unappealing, since any weighted average of $-H(Z)$ and $I(Z,Y)$ will be maximized at a corner.
Our results show that these corners can conveniently be computed without numerically tedious multiobjective optimization, by simply maximizing the mutual information $I(Z,Y)$ for $m=2, 3, 4, ...$ bins. The first corner, at $H(Z)=1$bit, corresponds to the learnability phase transition for DIB, \ie, the largest $\beta$ for which DIB is able to learn a non-trivial representation. In contrast to the IB learnability phase transition \citep{wu2019learnabilityEntropy} where $I(Z,Y)$ increases continuously from 0, here the $I(Y;Z)$ has a jump from 0 to a positive value, due to the non-concave nature of the Pareto frontier.

Moreover, all the examples in \fig{paretoFig} are seen to get quite close to the $m\to\infty$ asymptote 
$I(Z,Y)\to I(X,Y)$ for $m\simgt 5$, so the most interesting points on the Pareto frontier are simply the first handful of corners. For these examples, we also see that the greater the mutual information is, the fewer bins are needed to capture most of it. 

An alternative way if interpreting the Pareto plane in \fig{paretoFig} 
is as a traveoff between two evils: 
\beqa{InfoBloatLossEq}
\hbox{\bf Information bloat:~} H(Z|Y)&\equiv&H(Z)-I(Z,Y)\ge 0,\nonumber\\
\hbox{\bf Information loss:~~~~~~~~$\>$} \Delta I&\equiv&I(X,Y)-I(Z,Y)\ge 0.\nonumber
\eeqa
What we are calling the {\it information bloat} has also been called ``causal waste" \cite{thompson2018causal}. It is simply the conditional entropy of $Z$ given $Y$, and represents the excess bits we need to store in order to retain the desired information about $Y$. 
Geometrically, it is the horizontal distance to the impossible region to the right in \fig{paretoFig}, and we see that for MNIST, it takes local minima at the corners for both 1 and 2 bins. The {\it information loss} is simply the information discarded by our lossy compression of $X$. Geometrically, it is the vertical distance to the impossible region at the top of \fig{paretoAnalyticFig} (and, in \fig{paretoFig}, it is the vertical distance to the corresponding dotted horizontal line). As we move from corner to corner adding more bins, we typically reduce the information loss at the cost of increased information bloat. 
For the examples in \fig{paretoFig}, we see that going beyond a handful of bins essentially just adds bloat without significantly reducing the information loss.

\subsection{Real-world issues}

We just discussed how lossy compression is a tradeoff between information bloat and information loss. Let us now elaborate on the latter, for the real-world situation where 
$W\equiv P(Y\hbox{=}1|X)$ is approximated by a neural network. 

If the neural network learns to become perfect, then the function $w$ that it implements will be such that $W\equiv w(X)$ satisfies 
$P(Y=1|W)=W$, which corresponds to the dashed curves in \fig{CondProbFig} being identical to the solid curves. Although we see that this is close to being the case for the analytic and MNIST examples, the neural networks are further from optimal for Fashion-MNIST and CIFAR-10.
The figure illustrates that the general trend is for these neural networks to overfit and therefore be overconfident, predicting probabilities that are too extreme.

This fact that $P(Y\hbox{=}1|W)\ne W$ probably indicates that our Fashion-MNIST and CIFAR-10 classifiers $W=w(X)$ destroy information about $X$, but it does not prove this, because if we had a perfect lossless classifier $W\equiv w(X)$ satisfying $P(Y\hbox{=}1|W)=W$, then we could define an overconfident lossless classifier by an invertible (and hence information-preserving) reparameterization such as $W'\equiv W^2$ that violates the condition $P(Y\hbox{=}1|W')=W'$.

So how much information does $X$ contain about $Y$? One way to lower-bound $I(X;Y)$ uses the classification accuracy:
if we have a classification problem where $P(Y\hbox{=}1)=1/2$ and compress $X$ into a single classification bit $Z$ (corresponding to a binning of $W$ into two bins), then we can write the 
joint probability distribution for $Y$ and the guessed class $Z$ as
$$
P=\left(
\begin{tabular}{cc}
${1\over 2}-\epsilon_1$&$\epsilon_1$\\
$\epsilon_2$	&${1\over 2}-\epsilon_2$
\end{tabular}
\right).
$$
For a fixed total error rate $\epsilon\equiv\epsilon_1+\epsilon_2$,
Fano's Inequality implies that the  mutual information takes a minimum
\beq{PredictionAccuracyEq}
I(Z,Y)=1 + \epsilon\log\epsilon + (1 - \epsilon)\log(1 - \epsilon)
\eeq
when $\epsilon_1=\epsilon_2=\epsilon/2$, 
so if we can train a classifier that gives an error rate $\epsilon$, 
then the right-hand-side of \eq{PredictionAccuracyEq} places a lower bound on the mutual information $I(X,Y)$.
The prediction accuracy $1-\epsilon$ is shown for reference on the right side of \fig{paretoFig}. Note that getting close to one bit of mutual information requires extremely high accuracy; for example, 99\% prediction accuracy corresponds to only 0.92 bits of mutual information.
 
We can obtain a stronger estimated lower bound on $I(X,Y)$ from the cross-entropy loss function $\Ell$ used to train our classifiers:
\beq{LossEq}
\expec{\Ell} = -\left<\log P(Y\hbox{=}Y_i|X\hbox{=}X_i)\right> = H(Y|X) + \dKL,
\eeq
where $\dKL\ge 0$ denotes the average KL-divergence between true and predicted conditional probability distributions, and $\expec{\cdot}$ denotes ensemble averaging over data points, which implies that
\beqa{IboundEq}
I(X,Y)&=&H(Y) - H(Y|X) = H(Y) - \expec{\Ell} -\dKL\nonumber\\
	&\ge& H(Y) - \expec{\Ell}.
\eeqa
If $P(Y\hbox{=}1|W)\ne W$ as we discussed above, then $\dKL$ and hence the loss can be further reduced be recalibrating $W$ as we have done, which increases the information bound from 
\eq{IboundEq} up to the the value computed directly from the observed joint distribution $P(W,Y)$.

Unfortunately, without knowing the true probability $p(Y|X)$, there is no rigorous and practically useful {\it upper} bound on the mutual information other than the trivial inequality $I(X,Y)<H(Y)=1$ bit, as the following simple counterexample shows: suppose our images $X$ are encrypted with some encryption algorithm that is extremely time-consuming to crack, rendering the images for all practical purposes indistinguishable from random noise. Then any reasonable neural network will produce a useless classifier giving $I(W,Y)\approx 0$ even though the true mutual information $I(X,Y)$ could be as large as one bit. In other words, we generally cannot know the true information loss caused by compressing $X\mapsto W$, so the best we can do in practice is to pick a corner reasonably close to the upper asymptote in \fig{paretoFig}.

\subsection{Performance compared with Blahut-Arimoto method}

\begin{figure}[t]
\begin{center}
\includegraphics[width=0.5\columnwidth]{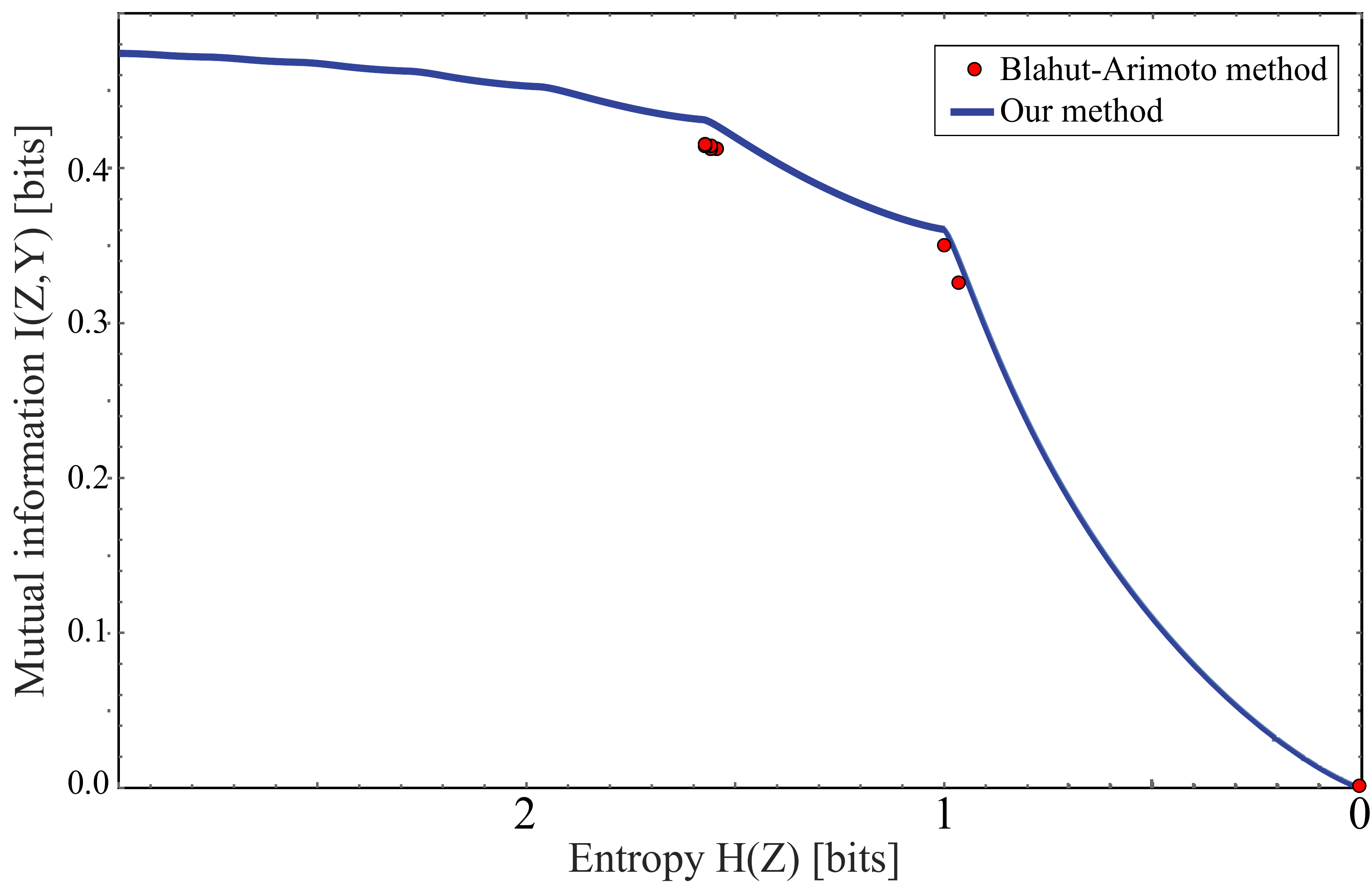}
\end{center}
\vskip-5mm
\caption{The Pareto frontier our analytic example is computed exactly with our method (solid curve) and approximately with the Blahut-Arimoto method (dots).}
\label{BlahutArimotoFig}
\end{figure}

The most commonly used technique to date for finding the Pareto frontier is the Blahut-Arimoto (BA) method 
\cite{blahut1972computation,arimoto1972algorithm}
applied to the DIB objective of \eq{IBeq2} as described in  \cite{strouse2017deterministic}. 
\fig{BlahutArimotoFig} shows the BA method implemented as in \cite{strouse2019information}, applied to our above-mentioned analytic toy example, after binning using 2,000 equispaced $W$-bins and $Z\in{1,...,8}$, scanning the $\beta$-parameter
from \eq{IBeq2} from $10^{-10}$ to $1$ in 20,000 logarithmically equispaced steps. Our method is seen to improve on 
the BA method in two ways. First, our method finds the entire continuous frontier, whereas the BA method finds only discrete disconnected points. This is because the BA-method tries to maximize the the DIB-objective from \eq{IBeq2} and thus cannot discover points where the Pareto frontier is convex as discussed above.
Second, our method finds the exact frontier, whereas the BA-method finds only approximations, which are seen to all lie below the true frontier.

\section{Conclusions}
\label{ConclusionsSec}

We have presented a method for mapping out the Pareto frontier for classification tasks (as in \fig{paretoFig}), reflecting the tradeoff between retained entropy and class information. 
We first showed how a random variable $X$ (an image, say) drawn from a class $Y\in\{1,...,n\}$ can be distilled into a vector $W=f(X)\in \mathbb{R}^{n-1}$ losslessly, so that $I(W,Y)=I(X,Y)$.
For the $n=2$ case of binary classification, we then showed how the Pareto frontier is swept out by a one-parameter family of binnings of $W$ into a discrete variable $Z=g_\beta(W)\in\{1,...,m_\beta\}$ that corresponds to binning $W$ into $m_\beta=2,3,...$ bins, such that $I(Z,Y)$ is maximized for each fixed entropy $H(Z)$.
Our method efficiently finds the exact Pareto frontier, significantly outperforming the 
Blahut-Arimoto (BA) method \cite{blahut1972computation,arimoto1972algorithm}.

\subsection{Relation to Information Bottleneck}

As mentioned in the introduction, the Discrete Information Bottleneck (DIB) method \cite{strouse2017deterministic}
maximizes a linear combination $I(Z,Y)-\beta H(Z)$ of the two axes in \fig{paretoFig}.
We have presented a method solving a generalization of the DIB problem.
The generalization lies in 
switching the objective  from \eq{IBeq2} to \eq{ParetoDefEq}, which has the advantage of discovering the full Pareto frontier in \fig{paretoFig} instead of merely the corners and concave parts (as mentioned, the DIB objective cannot discover convex parts of the frontier).
The solution lies in our proof that the frontier is spanned by binnings of the likelihood into $2, 3, 4, ...$ bins, which enables it to be computed more efficiently than with the
iterative/variational method of \cite{strouse2017deterministic}.

The popular original Information Bottleneck (IB) method \cite{tishby2000information} generalizes DIB by allowing the compression function $g(X)$ to be non-deterministic, thus adding noise that is independent of $X$. Starting with a Pareto-optimal $Z\equiv g(X)$ and adding such noise will simply shift us straight to the left in \fig{paretoFig}, away from the frontier (which is by definition monotonically decreasing) and into the Pareto-suboptimal region in the $I(Y;Z)$ vs. $H(Z)$ plane. As shown in \cite{strouse2017deterministic}, IB-compressions tend to altogether avoid the rightmost part of \fig{paretoFig}, with an entropy $H(Z)$ that never drops below some fixed value independent of $\beta$.

 \subsection{Relation to phase transitions in DIB learning}

Recent work has revealed interesting phase transitions that occur during information bottleneck learning \cite{chechik2005information,strouse2017deterministic,wu2019learnabilityEntropy},
as well as phase transitions in other objectives, e.g. $\beta$-VAE \cite{rezende2018taming}, infoDropout \cite{achille2018emergence}.
Specifically, when the $\beta$-parameter that controls the tradeoff between information retention and model simplicity is continuously adjusted, the resulting point in the IB-plane can sometimes ``get stuck" or make discontinuous jumps.
For the DIB case,  our results provide an intuitive understanding of these phase transitions in terms of the geometry of the Pareto frontier.

Let us consider \fig{paretoAnalyticFig} as an example. 
The DIB maximiziation of $I(Z,Y)-\beta H(Z)$ geometrically corresponds to finding a tangent line of the Pareto frontier of slope $-\beta$. 

If the Pareto frontier $I_*(H)$ were everywhere continuous and concave, so that $I_*''(H)<0$,
then its slope would range from some steepest value $-\beta_*$ 
at the right endpoint $H=0$ and continuously flatten out as we move leftward, asymptotically approaching 
zero slope as $H\to\infty$.
The learnability phase transition studied in \cite{wu2019learnabilityEntropy} would then occur when $\beta=\beta_*$:
for and $\beta\ge\beta_*$, the DIB method learns nothing, \eg, discovers as optimal the point 
$(H,I)=(0,0)$ where $Z$ retains no information whatsoever about $Y$.
As $\beta\le\beta_*$ is continuously reduced, the DIB-discovered point would then continuously move up and to the left along the Pareto frontier.

This was for the case of an everywhere concave frontier, but Figures~\ref{paretoAnalyticFig} and~\ref{paretoFig} show that actual Pareto frontiers need {\it not} be concave --- indeed, none of the frontiers that we have computed are concave. Instead, they are seen to consist of long convex segments joint together by short concave pieces near the ``corners".  
This means that as $\beta$ is continuously increased, the DIB solution exhibits first-order phase transitions, making discontinuous jumps from corner to corner at certain critical $\beta$-values; these phase transitions correspond to increasing the number of clusters into which the data $X$ is grouped.

\subsection{Outlook}

Our results suggest a number of opportunities for further work, ranging from information theory to machine learning, neuroscience and physics.

As to information theory, it will be interesting to try to generalize our method from binary classification into classification into more than two classes. Also, one can ask if there is a way of pushing 
the general information distillation problem all the way to bits.
It is easy to show that a discrete random variable $Z\in\{1,...,m\}$ can always be encoded as 
$m-1$ independent random bits (Bernoulli variables) $B_1,...,B_{m-1}\in\{0,1\}$, defined by\footnote{The mapping $z$ from bit strings $\B$ to integers $Z\equiv z(\B)$
is defined so that $z(\B)$ is the position of the last bit that equals one when $\B$ is preceded by a one. For example, for $m=4$, the mapping from length-3 bit strings 
$\B\in\{0,1\}^3$ to integers $Z\in\{1,...,4\}$
is 
$z(001)=z(011)=z(101)=z(111)=4$, 
$z(010)=z(110)=3$,
$z(100)=2$,
$z(000)=1$.
}
\beq{BitProbEq}
P(B_k\hbox{=}1)=P(Z\hbox{=}k+1)/P(Z\le k+1),
\eeq
although this generically requires some information bloat.
So in the spirit of the introduction, is there some useful way of generalizing 
PCA, autoencoders, CCA and/or the method we have presented so that the quantities
$Z_i$ and $Z'_i$ in Table~\ref{ComparisonTable} are not real numbers but bits?

As to neural networks, it is interesting to explore novel classifier architectures that reduce the overfitting and resulting overconfidence revealed by \fig{CondProbFig}, as this might significantly increase the amount of information we can distill into our compressed data. It is important not to complacently declare victory just because classification accuracy is high; as mentioned, even 99\% binary classification accuracy can waste 8\% of the information.

As to neuroscience, our discovery of optimal ``corner" binnings begs the question of whether
evolution may have implemented such categorization in brains.
For example, if some binary variable $Y$ that can be inferred from visual imagery is evolutionarily important for a given species (say, whether potential food items are edible), might our method help predict how many distinct colors $m$ their brains have evolved to classify hues into? In this example, $X$ might be a triplet of real numbers corresponding to light intensity recorded by three types of retinal photoreceptors, and the integer $Z$ might end up corresponding so some definitions of yellow, orange, {\etc}
A similar question can be asked for other cases where brains define finite numbers of categories, for example categories defined by distinct words. 

As to physics, it has been known even since the introduction of Maxwell's Demon that a physical system can use information about its environment to extract work from it. If we view an evolved life form as an intelligent agent seeking to perform such work extraction, then it faces a tradeoff between retaining too little relevant infomation (consequently extrating less work) and retaining too much (wasting energy on information processing and storage). 
Susanne Still recently proved the remarkable physics result \cite{still2017thermodynamic} that the lossy data compression optimizing such work extraction efficiency is precisely that prescribed by the above-mentioned Information Bottleneck method \cite{tishby2000information}. As she puts it, an intelligent data representation strategy emerges from the optimization of a fundamental physical limit to information processing.
This derivation made minimal and reasonable seeming assumptions about the physical system, but did not include an energy cost for information encoding. We conjecture that this can be done such that an extra Shannon coding term proportional to $H(Z)$ gets added to the loss function, which means that when this term dominates, the generalized Still criterion would instead prefer the Deterministic Information Bottleneck or one of our Pareto-optimal data compressions. 

Although noise-adding IB-style data compression may turn out to be commonplace in many biological settings, it is striking that the types of data compression we typically associate with human perception intelligence appears more deterministic, in the spirit of DIB and our work. For example, when we compress visual input into ``this is a probably a cat", we do not typically add noise by deliberately flipping our memory to ``this is probably a dog". Similarly, 
the popular jpeg image compression algorithm dramatically reduces image sizes while retaining essentially all information that we humans find relevant, and does so deterministically, without adding noise.

It is striking that simple information-theoretical principles such as IB, DIB and Pareto-optimality appear relevant
across the spectrum of known intelligence, ranging from extremely simple 
physical systems as in Still's work all the way up to high-level human perception and cognition.
This motivates further work on the exciting quest for a deeper understanding of Pareto-optimal data compression 
and its relation to neuroscience and physics.

\chapter{Learning causal relations from observations}
\label{chap6:causal}

Identifying\footnote{The paper \href{http://roseyu.com/time-series-workshop/submissions/2019/timeseries-ICML19_paper_43.pdf}{``Nonlinear Causal Discovery with Minimum Predictive Information Regularization"} was presented at ICML 2019 Time Series workshop \cite{wu2020discovering}, and awarded the \emph{Best Poster Award}. Authors: Tailin Wu, Thomas Breuel, Michael Skuhersky and Jan Kautzin.}\footnote{The code for the methods and experiments is open-sourced at \href{https://github.com/tailintalent/causal}{github.com/tailintalent/causal}.} the underlying directional/causal relations from observational time series with nonlinear interactions and complex relational structures is key to a wide range of applications, yet remains a hard problem. In this chapter, we introduce a novel minimum predictive information regularization method to infer directional relations from time series, allowing deep learning models to discover nonlinear relations. Our method substantially outperforms other methods for learning nonlinear relations in synthetic datasets, and discovers the directional relations in a video game environment and a heart-rate vs. breath-rate dataset.

\section{Introduction and Related Work}

Imagine a dataset with tens or hundreds of observational time series. There may exist interesting directional relations between the time series which we want to uncover, but their relation graph may be complicated, and the relation may be nonlinear as we do not know its functional form. How can we discover the underlying relations of those challenging scenarios in an efficient way, or at least identify candidate relations that are worth further investigation by a researcher? Problems of this type are omnipresent and important in a variety of scientific endeavors and applications, e.g., gene regulatory networks \cite{lozano2009grouped}, neuroscience \cite{neves2008synaptic,seth2015granger}, economics \cite{granger1969investigating,stock1989interpreting} and finance \cite{hiemstra1994testing, granger2000bivariate}.

To address this question, the field of causal learning has proposed a large class of methods to discover or quantify causal relations. These methods have certain limitations in regards to capability of handling nonlinearity, and/or scalability and efficiency to large numbers of time series.
Pearl \cite{pearl2002causality,pearl2009causality,pearl2009causal} defines causality in terms of intervention and structural dependence, under the structural equation models (SEM). However, in our problem, where only observational time series is available, Pearl's definition may not be applicable. In his seminal work, Granger \cite{granger1969investigating,granger1980testing} defines causality via prediction: if the prediction of the future Y via a linear model can be improved by including the information of X, then X causes Y in the Granger sense. The original Granger causality is limited to linear causal models. Although later works also extend Granger causality to kernel methods \cite{ancona2004radial,marinazzo2008kernel, marinazzo2008kernel2, sindhwani2012scalable}, they may still be insufficient to model and discover the nonlinear causal relations in real data. On the other hand, the causal measures of transfer entropy \cite{schreiber2000measuring} and causal influence \cite{janzing2013quantifying} are in theory able to handle any nonlinearity. However, both measures require density estimation of the joint distribution for the full $N$ time series ($N$ is the number of time series), which is difficult and data-hungry when $N$ is large. Constraint-based methods \cite{spirtes2000causation, harris2013pc, pearl2002causality, spirtes2000causation} require repetitive conditional independence tests, where the number of tests will grow large when $N$ is large and the underlying causal graph is dense. Score-based methods search for the structure that yields the optimal score w.r.t. the data, generally using greedy search methods, for example GES \cite{chickering2002optimal}, rankGES \cite{nandy2018high} and GIES \cite{hauser2012characterization}. This in general requires $\Theta(N^2)$ steps, and the number of neighboring states may grow very large at each step. Another closely related field is sparse learning/feature selection methods. Some important classes are Lasso \cite{tibshirani1996regression} and elastic net \cite{zou2005regularization}, which are effective but subject to the limitations of linear models. For nonlinear models, although L1 and group L1 regularization \cite{meier2008group,scardapane2017group,tank2018neural} can induce sparsity in the model parameters, they are model and input dependent. 

To handle the nonlinear relations in time series, a promising tool is neural nets. Not only are neural nets universal function approximators \cite{hornik1991approximation}, a deep neural net also provides exponentially large expressive power \cite{rolnick2018the}, making it particularly suitable for modeling the unknown nonlinear relations in time series. Recently there has been an increasing amount of work on learning the dynamic models of interacting systems \cite{battaglia2016interaction,chang2016compositional,guttenberg2016permutation,watters2017visual,hoshen2017vain,van2018relational}. However, their main focus is to make better predictions, using implicit interaction models (e.g. using fully connected graph networks). In this paper, we are mainly interested in discovering the underlying directional relations in an explicit form, utilizing the expressive power of neural nets.

To discover nonlinear directional relations from potentially large number of time series in an efficient way, the contribution of our work is as follows:
\begin{itemize}[topsep=0pt,partopsep=0pt,leftmargin=*]

\item We introduce a novel relational learning with Minimum Predictive Information Regularization (MPIR) method for exploratory discovery of nonlinear directional relations from observational time series. It is based on minimizing a mutual information-regularized risk with learnable input noise of a prediction model, which allows function approximators such as neural nets to learn nonlinear relations, combining the benefits of the Granger causality paradigm with deep learning models. At the minimization of the objective, the minimum predictive information term quantifies the directional predictive strength between each pair of time series given other time series. For discovering the directional relations among $N$ time series, it only has to learn $N$ models, and does not requires density estimation for the joint $N$ time series.

\item We prove that the minimum predictive information is able to differentiate dependence or independence between pairs of time series, which allows for statistical test. Moreover, we prove that the minimum predictive information is invariant to the scaling of input and reparameterization of the model. We further provide intuition that under certain conditions, our method is likely to discover direct relations instead of indirect associations.

\item We demonstrate on nonlinear synthetic datasets that our method outperforms other methods in discovering true causal relations with larger $N$, and discovers the directional relations in video game environment and real-world heart-rate vs. breath-rate datasets.

\end{itemize}

\section{Method}

\subsection{Problem setup}
\label{sec:problem_definition}

We consider $N$ time series $x^{(1)}, x^{(2)}, ...x^{(N)}$, where each time series $x^{(i)}=(x^{(i)}_1,x^{(i)}_2,...x^{(i)}_t,...)$ and each $x^{(i)}_t\in \R^M$ is an $M$-dimensional vector. Denote $X_{t-1}^{(i)}=(x^{(i)}_{t-K},x^{(i)}_{t-K+1},...x^{(i)}_{t-1})$ with maximum time horizon of $K$, and $\mathbf{X}_{t-1}=\{X_{t-1}^{(i)}\}, i=1,2,...N$. We also denote $\mathbf{X}_{t-1}^{(\hat{j})}=\mathbf{X}_{t-1}\texttt{\textbackslash} X^{(j)}_{t-1}$ ($\mathbf{X}_{t-1}$ excluding $X^{(j)}_{t-1}$).
We assume that $x^{(1)}, x^{(2)}, ...x^{(N)}$ are generated by stationary response functions $h_i$ that are unknown to the learner:
\begin{equation}
\label{eq:response_function}
    \begin{cases}
      x^{(1)}_t:=h_1(\mathbf{X}_{t-1},u_1) \\
      x^{(2)}_t:=h_2(\mathbf{X}_{t-1},u_2) \\
     ...\\
      x^{(N)}_t:=h_N(\mathbf{X}_{t-1},u_N) 
    \end{cases}
  \end{equation}
for $t=K+1,K+2,...$ . 
Here $u_i\in \R^M, i=1,2,...N$ are noise variables that are mutually independent, are independent of any $X^{(i)}_{t-1}, x^{(i)}_t$, $i\in\{1,2,...N\}$.
For any $i,j\in\{1,2,...N\}$, we assume that the variables $(\mathbf{X}_{t-1}^{(\hat{j})}, X_{t-1}^{(j)}, x_{t}^{(i)})$ have probability density function $P(\mathbf{X}_{t-1}^{(\hat{j})}, X_{t-1}^{(j)}, x_{t}^{(i)})$.

Our method is inspired by Granger causality \cite{granger1969investigating,granger1980testing}, which defines causality via predictions, making it especially suitable for relational inference of observational time series. Adapting to our notation:

\textbf{Granger causality} \cite{granger1980testing}:\textit{
Assuming causal sufficiency \cite{peters2017elements}, we say $X^{(j)}_{t-1}, j\neq i$ does not Granger-cause $x^{(i)}_{t}$, if $P(x^{(i)}_{t}|X^{(j)}_{t-1}, \mathbf{X}_{t-1}^{(\hat{j})})=P(x^{(i)}_{t}|\mathbf{X}_{t-1}^{(\hat{j})})$. Otherwise, we say $X^{(j)}_{t-1}$ Granger-causes $x^{(i)}_{t}$.
}

In practice, we say that time series $j$ Granger-causes time series $i$, if it can be shown via significance tests that the null hypothesis of $P(x^{(i)}_{t}|X^{(j)}_{t-1}, \mathbf{X}_{t-1}^{(\hat{j})})=P(x^{(i)}_{t}|\mathbf{X}_{t-1}^{(\hat{j})})$ is rejected, i.e. $X_{t-1}^{(j)}$ provides statistically significant information for predicting $x_t^{(i)}$.

In his original work, Granger \cite{granger1969investigating} investigates causality with linear function predictors. Later works have extended it to kernel methods \cite{ancona2004radial, marinazzo2008kernel, marinazzo2008kernel2, sindhwani2012scalable}, which essentially estimate linear Granger causality on the feature space of the kernel. To learn potentially highly nonlinear response functions, it may be desirable to use expressive and universal function approximators \cite{hornik1991approximation} such as neural nets. Neural nets are much more flexible than linear models, and do not require kernel selection as in kernel methods.

\subsection{Our method}
\label{sec:our_method}
Based on the definition of Granger causality, a naïve way to combine it with neural net is: for each $j\to i$, train two neural nets, one predicting $x_t^{(i)}$ based on $\X_{t-1}^{(\hat{j})}$, another predicting $x_t^{(i)}$ based on the full $\X_{t-1}=(\X_{t-1}^{(\hat{j})}, X_{t-1}^{(j)})$, and test whether former MSE is significantly larger than the latter. This method suffers from two major drawbacks: (1) instability: different training of the neural net may end up in different local minima, so that the two MSEs have large variance, which is observed in our initial explorations; (2) inefficiency: to discover the relations among $N$ time series, it has to train at least $N^2$ models (for each $x^{(i)}_t$, train $N-1$ models with one $X_{t-1}^{(j)}$ removed, and $Q$ models ($Q\ge1$) with full $\X_{t-1}$ for accumulating statistics). On the other hand, these two drawbacks exactly inspire our method. Instead of predicting $x^{(i)}_t$ with one $X_{t-1}^{(j)}$ missing at a time, what if we let each $X_{t-1}^{(j)}$ have \textit{learnable} corruption, and encourage each $X_{t-1}^{(j)}$ to provide as little information to $x^{(i)}_t$ as possible while maintaining good prediction? In this way, we have a \emph{single} shared model that can span the full product space of $[\text{total corruption}, \text{no corruption}]^{\bigotimes N}$ for $N$ input time series, which is more stable and efficient than the removing one $X_{t-1}^{(j)}$ at a time and training $N$ models. To achieve this, we add independent noise with learnable amplitudes to each input $X_{t-1}^{(j)}$, and measure the corruption by the mutual information between the input and the corrupted input. We then define the following risk:

\begin{equation}
\begin{aligned}
\label{eq:learnable_risk}
R_{\mathbf{X},x^{(i)}}[f_\theta,&\boldsymbol{\eta}]=\mathbb{E}_{\mathbf{X}_{t-1},x_t^{(i)},\boldsymbol{\epsilon}}\left[\left(x_t^{(i)}-f_\theta(\tilde{\mathbf{X}}^{(\boldsymbol{\eta})}_{t-1})\right)^2\right]+\lambda\cdot \sum_{j=1}^{N}I(\tilde{X}^{(j)(\eta_j)}_{t-1};X^{(j)}_{t-1})
\end{aligned}
\end{equation}
where $\tilde{\mathbf{X}}^{(\boldsymbol{\eta})}_{t-1}:=\mathbf{X}_{t-1}+\boldsymbol{\eta}\odot\boldsymbol{\epsilon}$ (or element-wise, $\tilde{X}_{t-1}^{(j)(\eta_j)}:=X_{t-1}^{(j)}+\eta_j\cdot \epsilon_j$, $j=1,2,...N$) is the noise-corrupted inputs with \emph{learnable} noise amplitudes $\eta_j\in \R^{KM}$, and $\epsilon_j\sim N(\mathbf{0},\mathbf{I})$. $\lambda >0$ is a positive hyperparameter for the mutual information $I(\cdot, \cdot)$. 
Intuitively, the minimization of the second term $I(\tilde{X}^{(j)(\eta_j)}_{t-1};X^{(j)}_{t-1})$ requires the noise amplitude $\eta_j$ to go up. The minimization of the first term requires the noise amplitude $\eta_j$ to go down, and the larger causal strength from $X_{t-1}^{(j)}$ to $x_t^{(i)}$, the larger this force. The minimization of the two terms strikes a balance, at which point the $I(\tilde{X}^{(j)(\eta_j)}_{t-1};X^{(j)}_{t-1})$ measures the \emph{minimum} number of bits of information the time series $j$ need to provide to the learner, without compromising the prediction. 
\vskip -0.05in
At the minimization of $R_{\mathbf{X},x^{(i)}}[f_\theta,\boldsymbol{\eta}]$, we define $W_{ji}=I\left(\tilde{X}^{(j)(\eta_j^*)}_{t-1};X^{(j)}_{t-1}\right)$, which we term \emph{minimum predictive information}, where $(f_{\theta^*},\boldsymbol{\eta}^*)=\text{argmin}_{(f_{\theta},\boldsymbol{\eta})}R_{\mathbf{X},x^{(i)}}[f_\theta,\boldsymbol{\eta}]$.
Essentially, $W_{ji}$ measures the \emph{predictive strength} of time series $j$ for predicting time series $i$, \textit{conditioned} on all the other observed time series. We have that
$W_{ji}$ satisfies the following properties: 

\begin{enumerate}[label={(\arabic*)}]
\item If $x^{(j)}\independent x^{(i)}$, then $W_{ji}=0$.
\item $W_{ji}$ is invariant to affine transformation of each individual $X_{t-1}^{(k)},k=1,2,...N$.
\item $W_{ji}$ is invariant to reparameterization of $\theta$ in $f_\theta$ (the mapping remains the same).
\end{enumerate}

The proofs are provided in Appendix \ref{app:W_proof}. Property 1 shows that $W_{ji}$ is able to differentiate time series that are dependent or independent with the target time series $i$. Empirically, to perform statistical tests, we can let the null hypothesis be  $x^{(j)}\independent x^{(i)}$. Before training, we append to $\X_{t-1}$ some fake time series $v^{(s)}_{t-1},s=1,2,...S$ (e.g. by randomly permuting $X_{t-1}^{(j)}$) so that $v^{(s)}_{t-1}\independent x^{(i)}_t$. After optimizing w.r.t. to the augmented dataset, the values of $W_{si}$ between $v^{(s)}_{t-1}$ and $x_t^{(i)}$ form a distribution for which we know that the null hypothesis is true. Then if certain $W_{ji}$ is greater than the $1-\alpha$ quantile (e.g. $\alpha=0.05$) of the distribution, we can reject the null hypothesis of independence. Properties 2 and 3 show the benefit of our method which essentially regularizes the \textit{input information}, compared with L1 and group L1 \cite{meier2008group,scardapane2017group,tank2018neural} which regularize the \textit{model} and thus do not satisfy these two properties.

Moreover, in Appendix \ref{app:W_proof} we further provide intuition that under certain conditions, $W_{ji}$ is likely to favor the time series  that \textit{directly} causes time series $i$, compared with the time series that relate to $i$ via the direct causal connections. 
Note that our method is not \emph{guaranteed} to identify \emph{direct causal} relations (in Granger \cite{granger1980testing} or Pearl \cite{pearl2002causality} sense), which is a very hard problem given the potential large number of time series and nonlinearity present. However, our method provides an effective data exploratory tool to identify time series that are \emph{predictive} of one another, \emph{conditioned} on all the other observed time series, whose identified directional relations can be investigated further by a researcher. As stated above, under certain conditions, our method does favor the direct causal relations. And in the experiment section, we will compare the estimated $W_{ji}$ with true causal relations if available.

\begin{algorithm}[t]
   \caption{\textbf{Relational Learning with Minimum Predictive Information Regularization}}
\label{alg:learnable_noise}
\begin{algorithmic}
   \STATE {\bfseries Require} $x^{(i)}_{t}, \mathbf{X}_{t-1}$, for $i\in\{1,2,...N\},t\in \mathbf{T}=\{K+1,K+2,...\}$.
   \STATE {\bfseries Require $\eta_0$}: a small value for initialization of $\boldsymbol{\eta}$.
   \STATE {\bfseries Require $\lambda$}: coefficient for the mutual information term.
   \STATE {\bfseries Require $S$}: number of fake time series.
   \STATE {\bfseries Require $\alpha$}: significance level.
\STATE 1:\ \ \  Randomly select $S$ indices $i_1,i_2,...i_S$ from $\{1,2,...N\}$
\STATE 2:\ \ \  $v_{t-1}^{(s)}\gets \text{Permute-examples}_t(X_{t-1}^{(i_s)})$ \textbf{for} $s=1,2,...S$    \ \ \ \ // \textit{Permuting on the example dimension}
\STATE 3:\ \ \  $\X^{(\text{aug})}_{t-1}\gets[\X_{t-1}, \mathbf{v}_{t-1}]$, where $\mathbf{v}_{t-1}=[v_{t-1}^{(1)},...v_{t-1}^{(S)}]$ and $[\cdot,...,\cdot]$ denotes  
\STATE \ \ \ \ \ \ \ \ \ concatenation along the dimension of $N$ (thus $\X^{(\text{aug})}_{t-1}$ consists of $N+S$ time series)
\STATE 4:\ \ \ \textbf{for} $i$ in $\{1,2,...N\}\ \textbf{do:}$
\STATE 5:\ \ \ \ \ \ \ \ \ \ Initialize function approximator $f_\theta$.
\STATE 6:\ \ \ \ \ \ \ \ \ \ Initialize $\boldsymbol{\eta}=(\eta_1,\eta_2,...\eta_N)=(\eta_0\boldsymbol{1},\eta_0\boldsymbol{1},...\eta_0\boldsymbol{1})$, where each element $\eta_0\boldsymbol{1}$ is a    
 \STATE \ \  \ \ \ \ \ \ \ \ \ \ \ \ \ \ $KM$-dimensional vector, same dimension as $X_{t-1}^{(j)}$.
\STATE 7:\ \ \ \ \ \ \ \ \ \  $(f_{\theta^*},\boldsymbol{\eta}^*)\gets\text{Minimize}_{(f_\theta,\boldsymbol{\eta})}\hat{R}_{\X^{(\text{aug})},x^{(i)},\boldsymbol{\epsilon}}[f_\theta,\boldsymbol{\eta}]$ (Eq. \ref{eq:empirical}) with e.g. gradient descent.
\STATE 8:\ \ \ \ \ \ \ \ \ \  $W_{ji}\gets I(\tilde{X}^{(j)(\eta_j^*)}_{t-1};X^{(j)}_{t-1})$, for $j=1,2,...N, N+1,...N+S$.
\STATE 9:\ \ \ \textbf{end for}
\STATE 10: Accumulate the values of $W_{si}$ between all $v_{t-1}^{(s)},s=1,2,...S$ and $x_t^{(i)},i=1,2,...N$,  
\STATE \ \ \ \ \ \  and obtain the $1-\alpha$ quantile as the threshold.
\STATE 11: Zero the $W_{ji}$ elements ($j,i=1,2,...N$) whose value are below the threshold.
\STATE 12: \textbf{return} $W$ \ \ // \textit{Return the main $N\times N$ matrix}
\end{algorithmic}
\end{algorithm}

Empirically, we minimize the following empirical risk:
\begin{equation}
\label{eq:empirical}
\begin{aligned}
\hat{R}_{\mathbf{X},x^{(i)},\boldsymbol{\epsilon}}[f_\theta,&\boldsymbol{\eta}]=\frac{1}{|\mathbf{T}|}\sum_{t\in \mathbf{T}}\left(x_t^{(i)}-f_\theta(\tilde{\mathbf{X}}^{(\boldsymbol{\eta})}_{t-1})\right)^2+\lambda \sum_{j=1}^{N}I(\tilde{X}^{(j)(\eta_j)}_{t-1};X^{(j)}_{t-1})
\end{aligned}
\end{equation}

In general, it may be inefficient to estimate the mutual information $I(\tilde{X}^{(j)(\eta_j)}_{t-1};X^{(j)}_{t-1})$ with large dimension of $X^{(j)}_{t-1}$ such that the expression is also differentiable w.r.t. $\eta_j$. Utilizing the property of Gaussian channels, in Appendix \ref{app:Gaussian_channel_upper_bound} we prove that $I(\tilde{X}^{(j)(\eta_j)}_{t-1};X^{(j)}_{t-1})\leq \frac{1}{2}\sum_{l=1}^{KM}\text{log}\left(1+\frac{\text{Var}(X_{t-1,l}^{(j)})}{\eta_{j,l}^2}\right)$, where $l$ denotes the $l^{\text{th}}$ element of a vector, and $\text{Var}(X_{t-1,l}^{(j)})$ is the variance of $X_{t-1,l}^{(j)}$ across $t$. Therefore, in practice to improve efficiency, we can optimize an \emph{upper bound} of the risk:
\begin{equation}
\label{eq:empirical_upper_bound}
\begin{aligned}
\hat{R}^{\text{upper}}_{\mathbf{X},x^{(i)},\boldsymbol{\epsilon}}[f_\theta,&\boldsymbol{\eta}]=\frac{1}{|\mathbf{T}|}\sum_{t\in \mathbf{T}}\left(x_t^{(i)}-f_\theta(\tilde{\mathbf{X}}^{(\boldsymbol{\eta})}_{t-1})\right)^2+\frac{\lambda}{2} \sum_{j=1}^{N}\sum_{l=1}^{KM}\text{log}\left(1+\frac{\text{Var}(X_{t-1,l}^{(j)})}{\eta_{j,l}^2}\right)
\end{aligned}
\end{equation}

When the dimension of $X_{t-1}^{(j)}$ is large, a differentiable estimate of the mutual information (e.g. MINE \cite{belghazi2018mine}) can be applied. We provide Algorithm~\ref{alg:learnable_noise} to empirically estimate $W_{ji}$, which we term relational learning with Minimum Predictive Information Regularization (MPIR). The steps 1-3 construct fake input time series $v_{t-1}^{(s)}, s=1,2,...S$ (which we know the null hypothesis of $v_{t-1}^{(s)}\independent{}x_t^{(i)}$ is true) to append to $\X_{t-1}$. Steps 4-9 optimize the objective w.r.t. the augmented dataset, and obtain a $(N+S)\times N$ matrix $W_{ji}$. Steps 10-11 performs significance test and only preserve the $W_{ji}$ values in the main $N\times N$ matrix that are statistically significant. Finally the main matrix is returned.

To select an appropriate hyperparameter $\lambda$, we can additionally append to the target $x_t^{(i)}$ a few time series $w_t$ constructed from $\X_{t-1}$. We then select $\lambda$ such that the estimated causal strength between $\X_{t-1}$ and $w_t$ (for which we know the causal relations) is at least $4\sigma$ away from the estimated causal strength between $v_{t-1}$ and $w_t$ (for which we know that they are independent). See Appendix \ref{app:select_lambda} for details.

\section{Experiments}
\label{sec:causal_experiments}

To demonstrate that our proposed method is able to discover interesting underlying directional (possibly causal) relations, we test it on both synthetic and real datasets. We first use synthetic datasets, where we know the underlying causal structure and compare with other methods. We then test whether our algorithm can infer directional relations among trajectories of objects from watching an agent playing video games. Finally, we apply our algorithm to a real-world heart-rate vs. breath-rate dataset and a rat EEG dataset to test its effectiveness. We use the $\hat{R}^{\text{upper}}_{\mathbf{X},x^{(i)},\boldsymbol{\epsilon}}[f_\theta,\boldsymbol{\eta}]$ (Eq. \ref{eq:empirical_upper_bound}) for optimization for all experiments.

\subsection{Synthetic experiment with log-normal causal strengths}
\label{sec:synthetic}

In this experiment, we evaluate our method together with other methods using a nonlinear synthetic dataset generated to have a known causal structure (hidden to the methods being compared). We study performance with varying number $N$ of time series, with $N$ up to $30$. To generate the data, we let each $x_t^{(i)}$ have dimension $M=1$, and also set the maximum time horizon $K=3$, so each $X_{t-1}^{(j)}$ is a $K\times M=3\times 1$ matrix. We use the following realization of the response function $h_i$ in Eq.~(\ref{eq:response_function}):
\begin{equation}
\label{eq:synthetic}
\begin{aligned}
    x_t^{(i)} = h_i(\mathbf{X}_{t-1}, u_t)=&\text{H}_1\left(\sum_{j=1}^N\left[A_{ji} \odot\text{H}_2(B_j\odot X_{t-1}^{(j)})\right]\right)+u_t,i=1,2,...N
\end{aligned}
\end{equation}
where $u_t\sim N(\mathbf{0},\mathbf{I})\in \R^M$, $\odot$ denotes element-wise multiplication, and $\text{H}_1$ and $\text{H}_2$ are two nonlinear functions to make the response functions nonlinear. In this experiment, we use $\text{H}_1(x)=\text{softplus}(x)=\text{log}(1+e^x)$, and $\text{H}_2(x)=\text{tanh}(x)$. $B_j$ is a $K\times M$ random matrix, whose element is sampled from $U[-1, 1]$. $A_{ji}$ is a $K\times M$ matrix, with 0.5 probability of being a zero matrix and 0.5 probability of being a nonzero random matrix, characterizing the underlying causal strength from $j$ to $i$. Crucially, to reflect that the causal strength may span different orders of magnitude, if $A_{ji}$ is sampled to be a nonzero matrix, then the amplitude of each of its element is sampled from a log-normal distribution with $\mu=1,\sigma=0$, their sign sampling from $U\{-1,1\}$. Denote $\mathbbm{1}(A)$ as the 0-1 indicator matrix of causality ($\mathbbm{1}(A)_{ji}=1\ \text{if}\ |A_{ji}|>0; 0\ \text{otherwise}$). The goal of each algorithm being evaluated is to produce an $N\times N$ score matrix $\tilde{A}$, where each entry $\tilde{A}_{ji}$ characterizes the directional strength from $j$ to $i$. Then the flattened $\tilde{A}$ is evaluated against the flattened $\mathbbm{1}(A)$ (excluding diagonal elements of the matrices) via different metrics. Fig. S\ref{fig:synthetic_example_figure} in Appendix \ref{app:synthetic_exp} shows example snapshots of the time series.

In general, for a large $N$, the number of possible causal graphs grows double exponentially: there are $2^{N^2}$ possible matrix of $\mathbbm{1}(A)$. To give an estimate, for $N=3,4,5,8,10,20,30$, there are $512, 6.6\times10^4, 3.3\times10^7,1.8\times10^{19}, 1.2\times10^{30}, 2.6\times 10^{120}, 8.5\times10^{270}$ number of possible graphs, respectively. Therefore, estimating the underlying causal graph is in general a non-trivial task when $N$ is large. We compare our algorithm with previous methods including transfer entropy \cite{schreiber2000measuring}, causal influence \cite{janzing2013quantifying}, linear Granger causality \cite{granger1969investigating, ding2006granger}, kernel Granger causality \cite{marinazzo2008kernel,marinazzo2008kernel2}, and three baselines: (1) mutual information $\tilde{A}_{ji}=I(X_{t-1}^{(j)};x_t^{(i)})$ (which gives $\tilde{A}_{ji}=\tilde{A}_{ij}$), (2) a sparse feature selection method, elastic net \cite{zou2005regularization}, and (3) a random matrix, each element of which is drawn from a standard Gaussian distribution. For each $N$, we sample 10 datasets with different $A_{ji}$ and $B_j$ matrices, and compare each method's average performance over 10 datasets together with their standard deviation. The implementation details for each method and each experiment are provided in Appendix \ref{app:algorithm_implementation} and \ref{app:synthetic_exp}, respectively. Since many of the methods do not provide a threshold or significance test, we use the standard metrics of area under the precision-recall curve (AUC-PR) \cite{davis2006relationship} (Table \ref{table:synthetic_larger_N_AUC_PR} below) and area under the ROC curve (AUC-ROC) (Table S\ref{table:synthetic_larger_N_AUC_ROC} in Appendix \ref{app:synthetic_ROC_AUC}) to compare their performance.

\begin{table*}[t]
\caption{Mean and standard deviation of AUC-PR (\%) vs. $N$, over 10 random sampling of datasets. Bold font marks the top method for each $N$.}
\resizebox{1\linewidth}{!}{%
\begin{tabular}{p{3cm}p{1cm}p{1cm}p{1cm}p{1cm}p{1cm}p{1cm}p{1cm}p{1cm}}
\toprule
 \ \ \ \ \ \ \ \ \ \ \ \ \ \ \ \ \ \ \ \ \ \ \ \ \ \ $N$ &     3  &    4  &    5  &    8  &    10 &    15 &    20 &    30 \\
method             &        &       &       &       &       &       &       &       \\
\midrule
\textbf{MPIR (ours)}        &   97.5{\tiny$\pm$5.3} &  98.4{\tiny$\pm$2.5} &  \textbf{97.6}{\tiny$\pm$2.7} &  \textbf{96.1}{\tiny$\pm$2.4} &  \textbf{93.5}{\tiny$\pm$3.7} &  \textbf{91.3}{\tiny$\pm$3.0} &  \textbf{85.9}{\tiny$\pm$2.4} &  \textbf{76.3}{\tiny$\pm$1.5} \\
Mutual Information &   90.5{\tiny$\pm$13.7} &  93.3{\tiny$\pm$3.8} &  90.0{\tiny$\pm$4.3} &  82.4{\tiny$\pm$5.1} &  76.9{\tiny$\pm$9.3} &  76.8{\tiny$\pm$4.8} &  71.9{\tiny$\pm$3.8} &  70.6{\tiny$\pm$3.1} \\
Transfer Entropy   &   93.5{\tiny$\pm$7.7} &  97.3{\tiny$\pm$3.3} &  91.6{\tiny$\pm$8.2} &  83.7{\tiny$\pm$7.2} &  76.2{\tiny$\pm$5.7} &  67.1{\tiny$\pm$4.2} &  61.2{\tiny$\pm$4.3} &   55.7{\tiny$\pm$2.5} \\
Linear Granger     &   \textbf{99.4}{\tiny$\pm$1.8} &  97.8{\tiny$\pm$2.5} &  92.0{\tiny$\pm$8.3} &  83.1{\tiny$\pm$8.8} &  79.4{\tiny$\pm$9.2} &  71.0{\tiny$\pm$10.0} &  63.7{\tiny$\pm$8.8} &  52.4{\tiny$\pm$1.7} \\
Kernel Granger     &   99.3{\tiny$\pm$2.3} &  \textbf{99.3}{\tiny$\pm$1.5} &  96.5{\tiny$\pm$4.8} &  92.5{\tiny$\pm$3.4} &  90.0{\tiny$\pm$3.3} &  86.0{\tiny$\pm$2.4} &  81.0{\tiny$\pm$4.0} &  73.1{\tiny$\pm$1.8} \\
Elastic Net        &   99.1{\tiny$\pm$2.9} &  98.5{\tiny$\pm$2.0} &  95.7{\tiny$\pm$4.2} &  88.9{\tiny$\pm$6.2} &  83.6{\tiny$\pm$4.6} &  79.1{\tiny$\pm$3.0} &  75.3{\tiny$\pm$3.6} &  69.1{\tiny$\pm$5.8} \\
Causal Influence   &   67.5{\tiny$\pm$26.7} &  60.2{\tiny$\pm$24.1} &  59.3{\tiny$\pm$15.3} &  44.1{\tiny$\pm$8.9} &  42.7{\tiny$\pm$7.8} &  47.0{\tiny$\pm$3.1} &  44.5{\tiny$\pm$4.1} &   44.6{\tiny$\pm$2.1} \\
Gaussian random    &   60.0{\tiny$\pm$14.7} &  57.9{\tiny$\pm$12.9} &  51.6{\tiny$\pm$8.0} &  44.5{\tiny$\pm$5.6} &  41.3{\tiny$\pm$6.2} &  44.6{\tiny$\pm$4.0} &  44.0{\tiny$\pm$2.4} &  44.3{\tiny$\pm$2.4} \\
\bottomrule
\end{tabular}
}
\label{table:synthetic_larger_N_AUC_PR}
\end{table*}

We see that for smaller $N$ ($N\le4$), methods with smaller expressivity (linear Granger, kernel Granger) performs slightly better. However, as $N$ becomes larger, our method outperforms other methods with increasing margin, demonstrating our method's capability to infer complex relational structures from interacting time series. Particularly, although two linear methods, linear Granger and elastic net, have relatively strong performance with $N\leq5$, they quickly degrade with larger $N$ due to more nonlinearity present in the data. With the help of kernels, kernel Granger degrades slower, but can not compete in larger $N$ with our method which allows expressive neural nets to model complex nonlinear interactions. For the Causal Influence method, although it has very good mathematical properties, it may be impractical in practice, as is also shown in the table. This is due to that it is defined as the KL-divergence between $(\mathbf{X}_{t-1}, x_{t-1}^{(i)})$ and its counterpart (whose causal arrows to and from time series $j$ are cut), each of which is an $(NK+1)M-$dimensional vector, which can quickly go to high dimensions, where density estimation required to calculate KL-divergence is in general data-hungry and difficult. In comparison, our method that estimates predictive strength via minimizing prediction errors is comparatively easier in high dimensions.

\subsection{Experiments with video games}
\label{sec:video_games}

\begin{figure}[ht]
    \centering
    \begin{subfigure}{.24\linewidth}
        \includegraphics[scale=0.14]{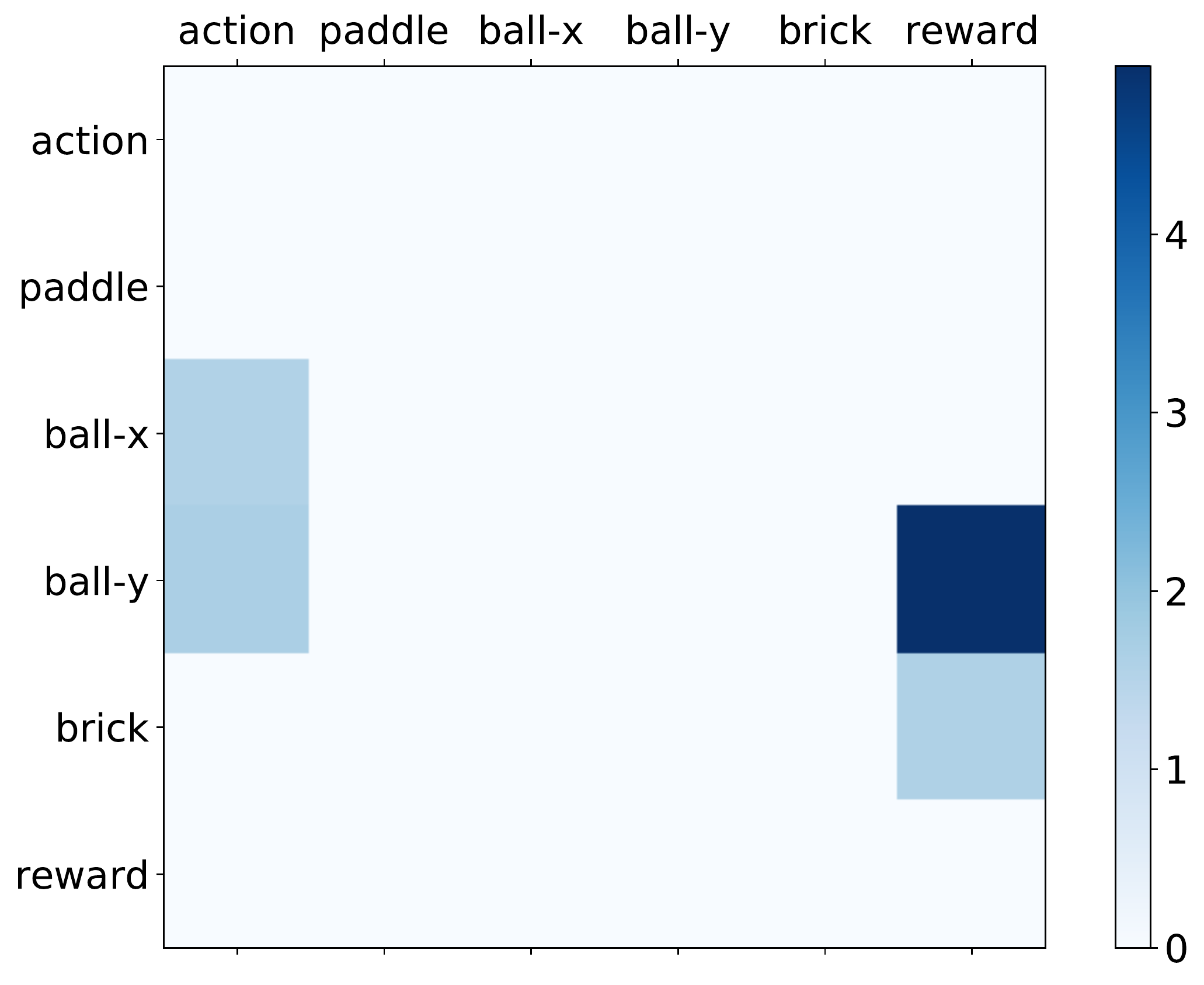}
        \caption{}
    \end{subfigure}
    \rulesep
    \begin{subfigure}{.24\linewidth}
        \includegraphics[scale=0.14]{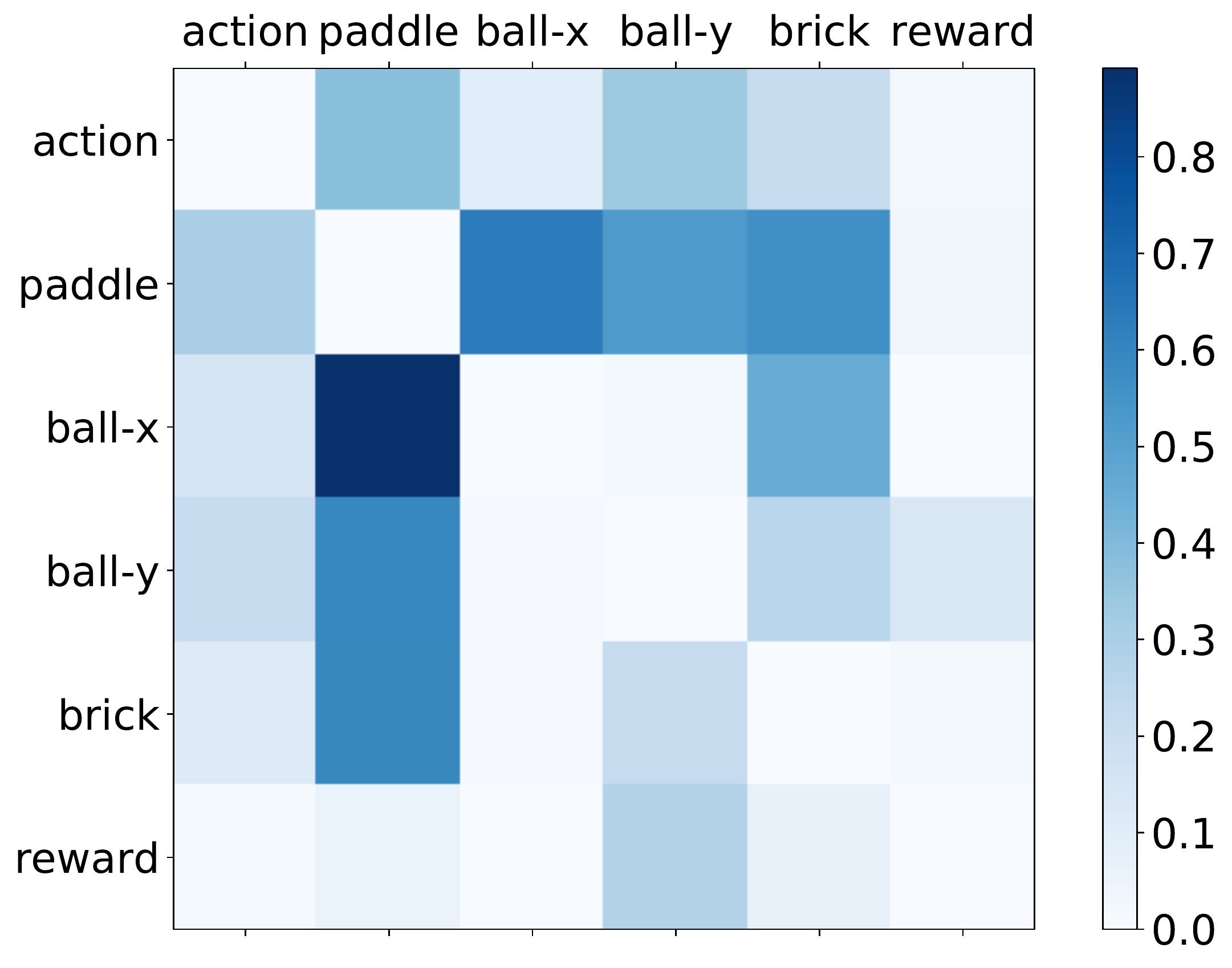}
        \caption{}
    \end{subfigure}
    \begin{subfigure}{.24\linewidth}
        \includegraphics[scale=0.14]{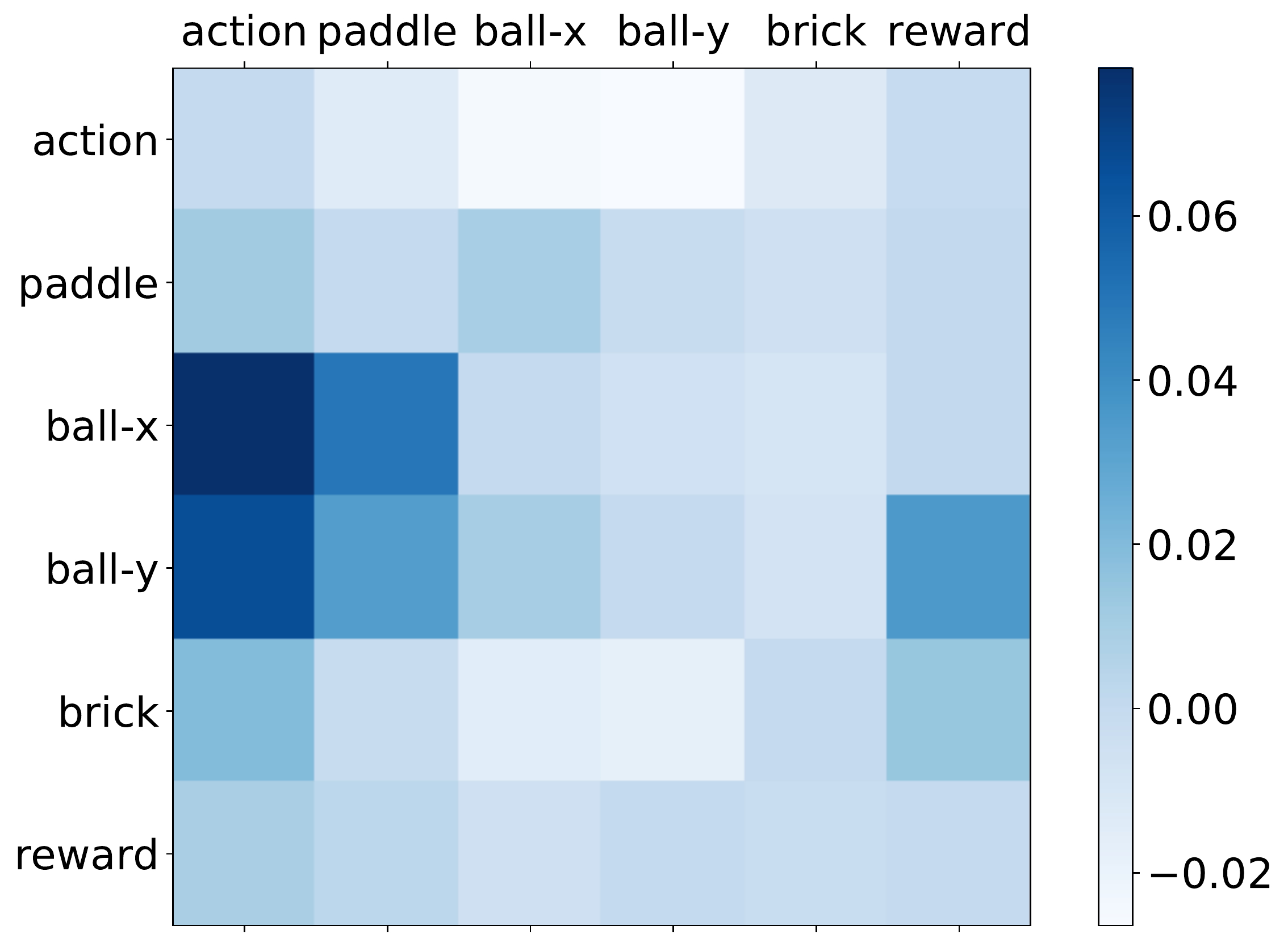}
        \caption{}
    \end{subfigure}
    \begin{subfigure}{.24\linewidth}
        \includegraphics[scale=0.14]{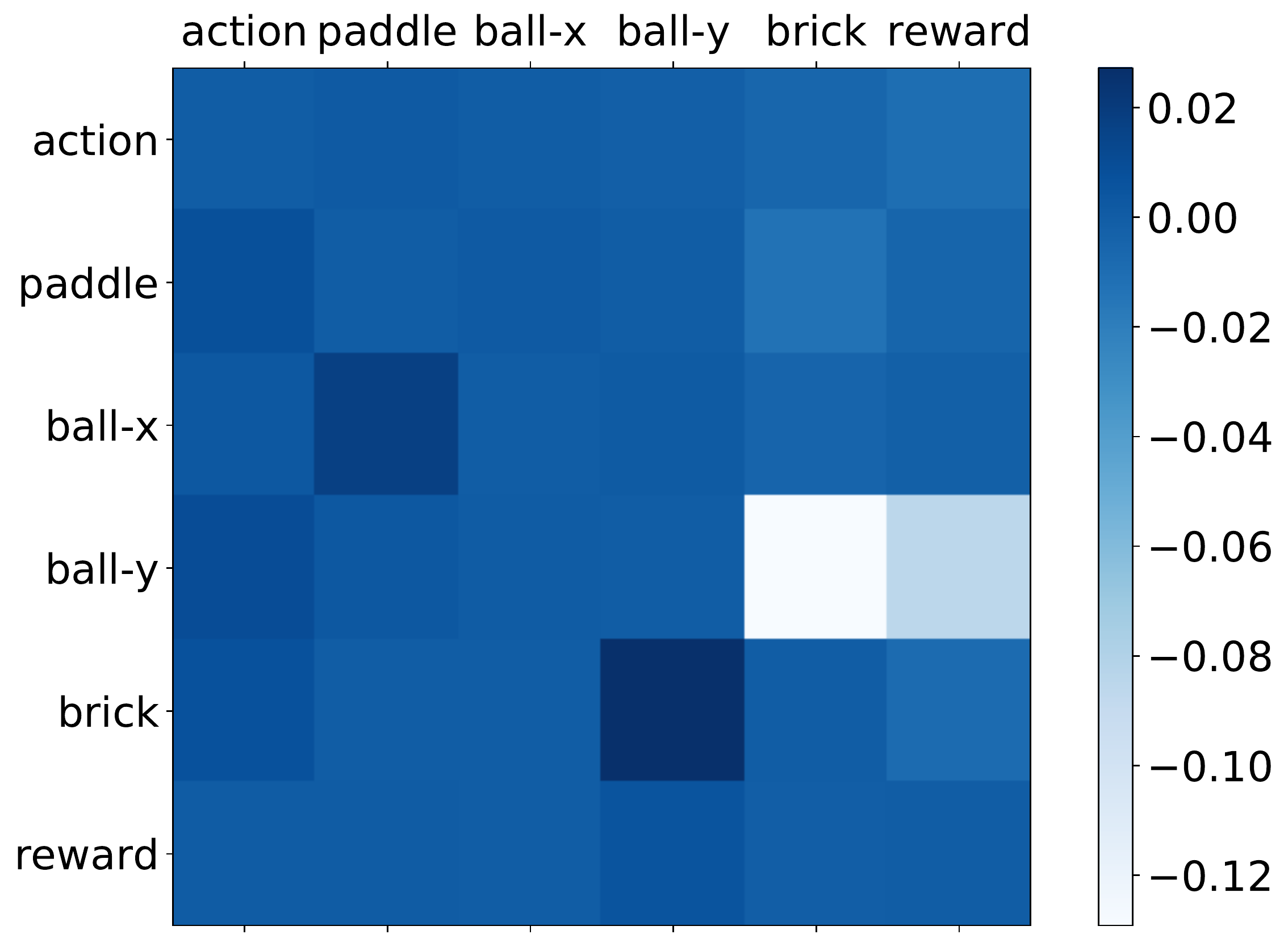}
        \caption{}
    \end{subfigure}
    \begin{subfigure}{.24\linewidth}
        \includegraphics[scale=0.14]{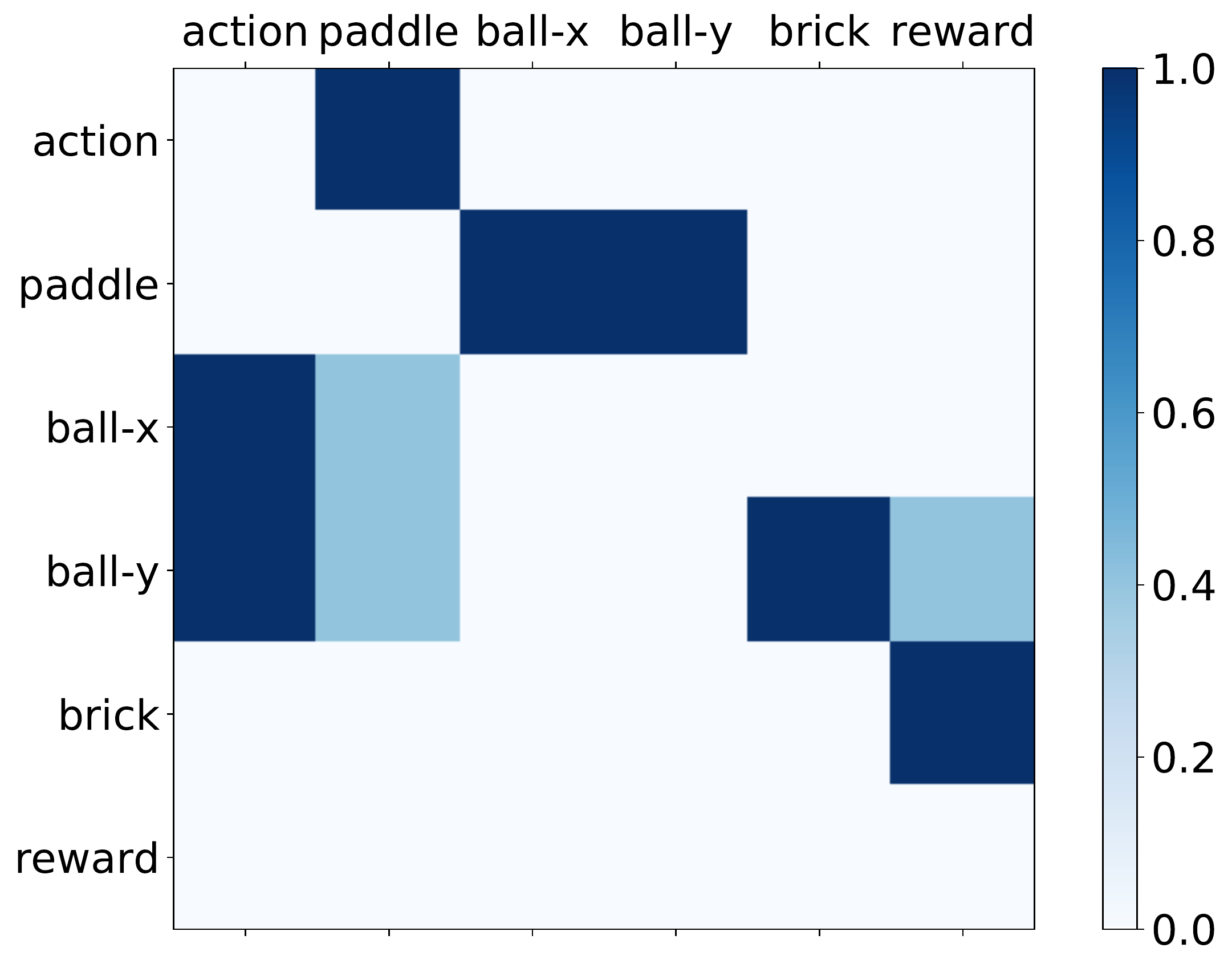}
        \caption{}
    \end{subfigure}
    \rulesep
    \begin{subfigure}{.24\linewidth}
        \includegraphics[scale=0.14]{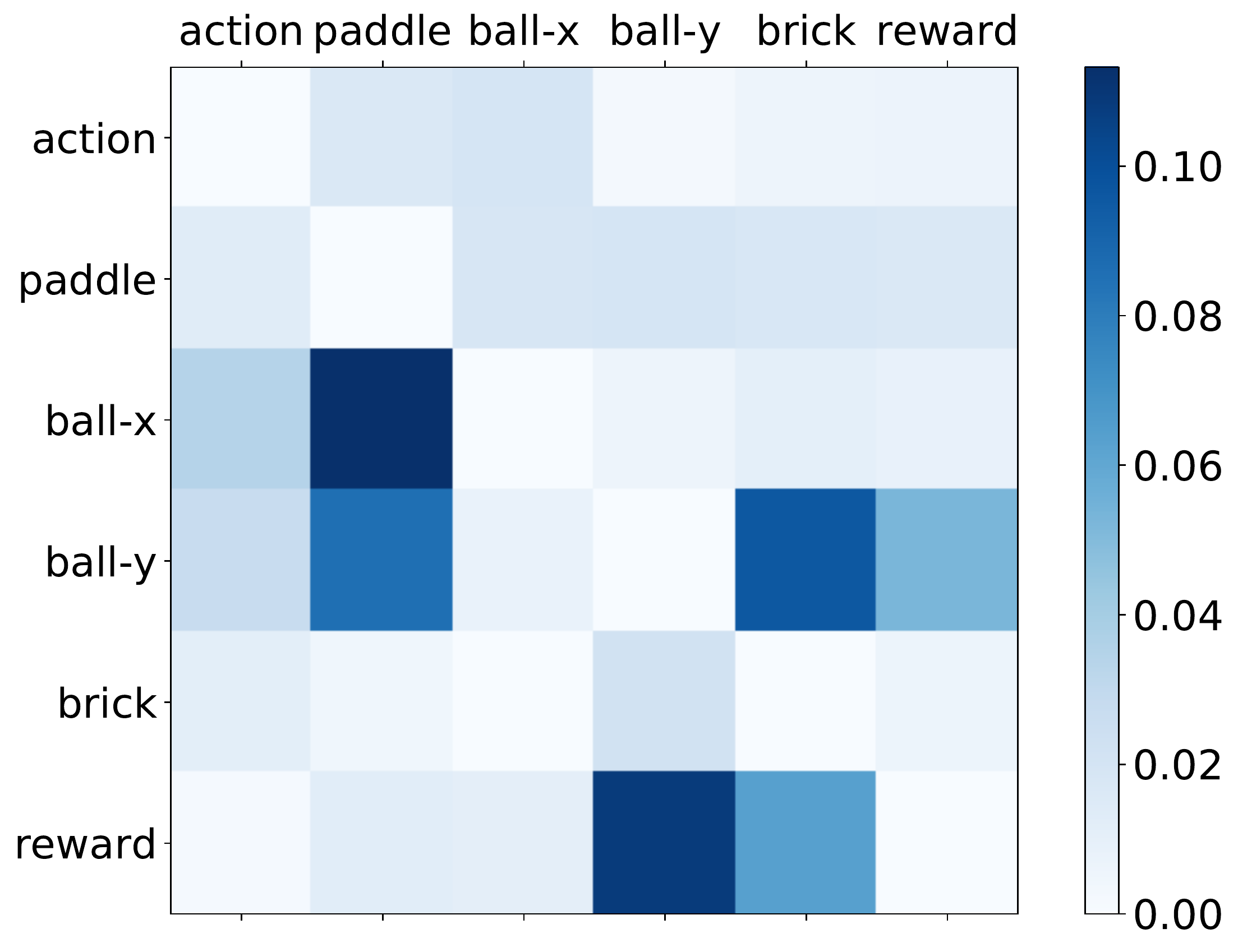}
        \caption{}
    \end{subfigure}
    \begin{subfigure}{.24\linewidth}
        \includegraphics[scale=0.14]{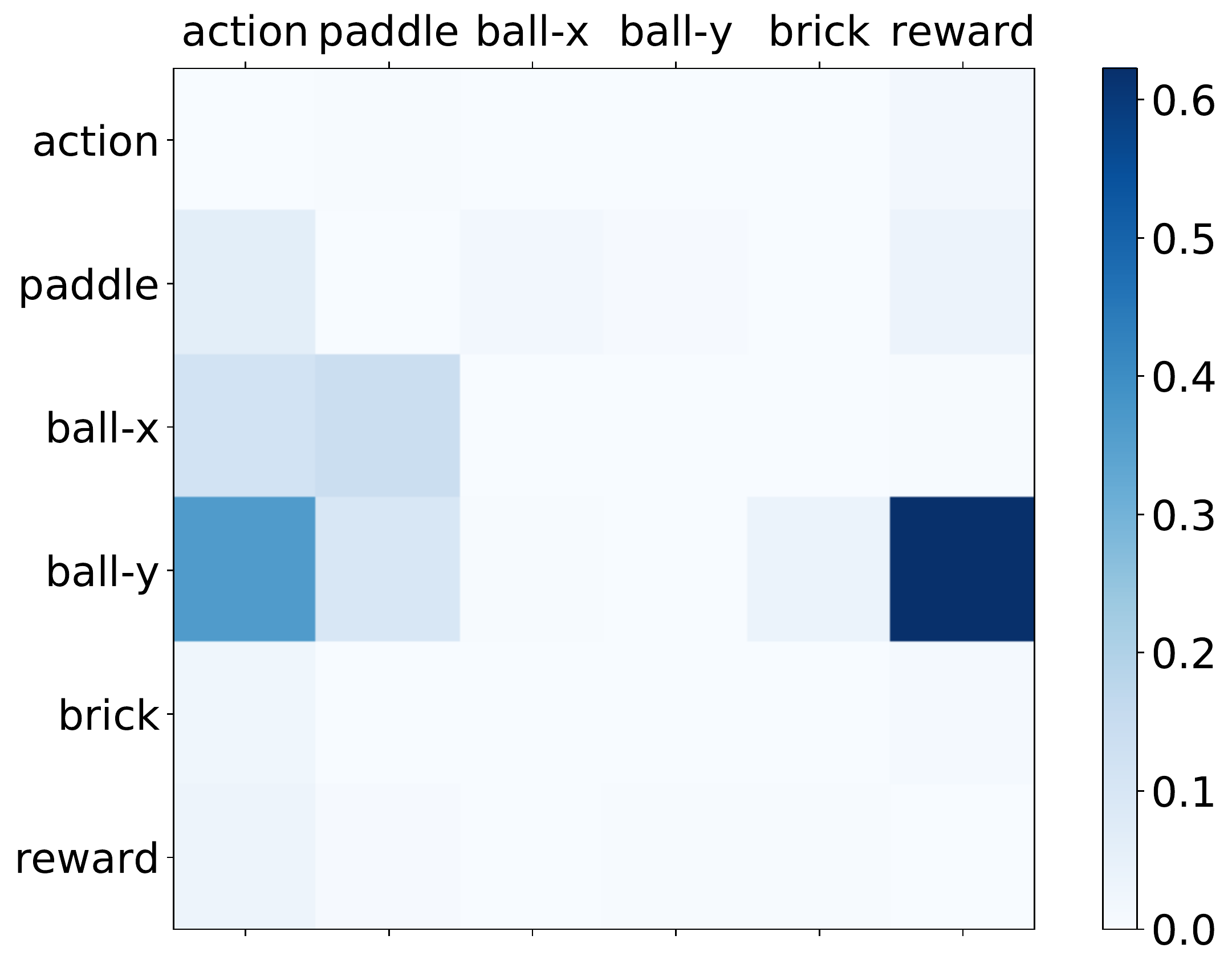}
        \caption{}
    \end{subfigure}
    \begin{subfigure}{.24\linewidth}
        \includegraphics[scale=0.14]{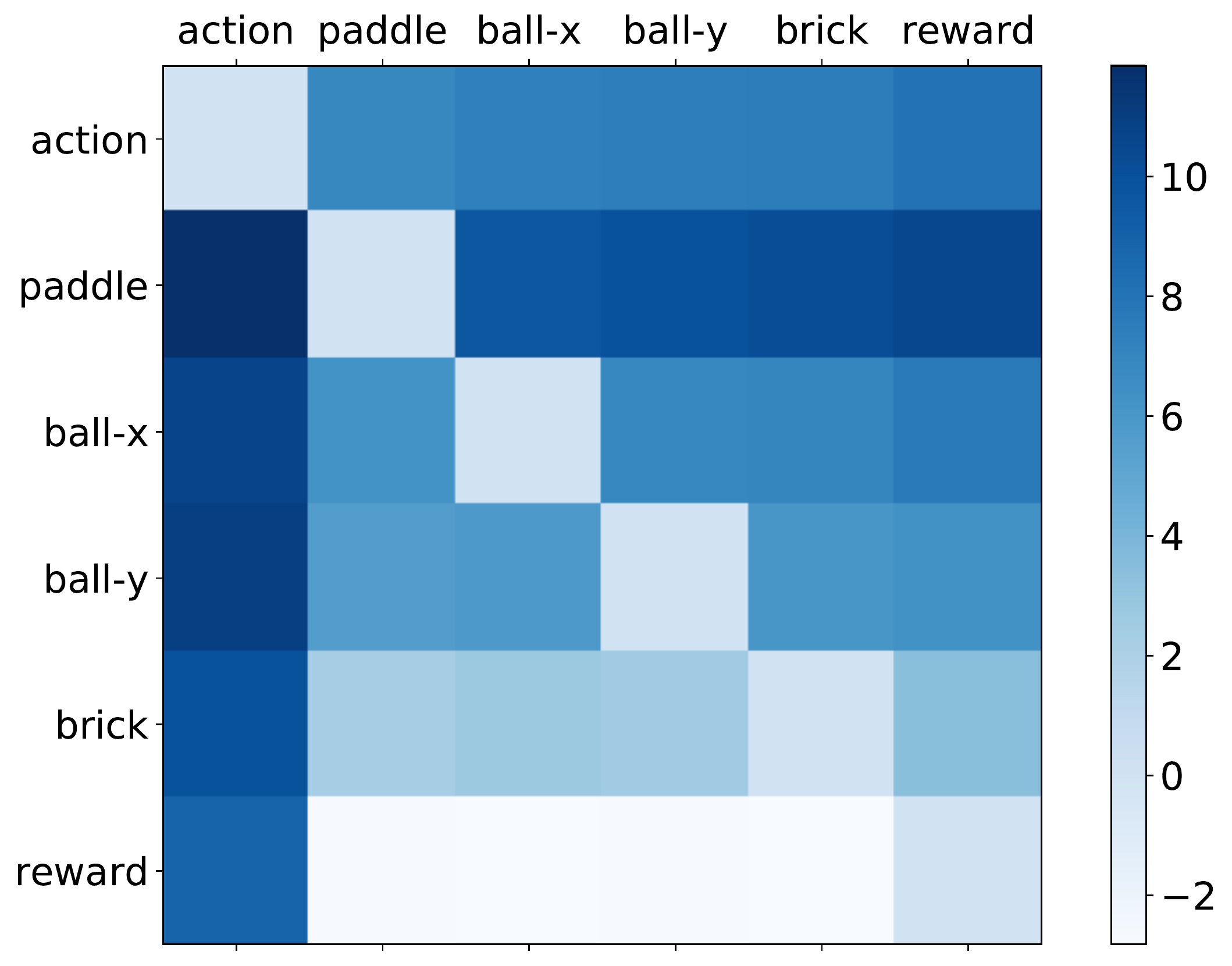}
        \caption{}
    \end{subfigure}
    \caption{(a) Predictive strength $W_{ji}$ inferred by our method in Section \ref{sec:video_games}. The $(j,i)$ element denotes the inferred causal strength from $j$ to $i$. (e) True underlying causal relations are marked dark, with light color marking competing causal relations that are indistinguishable from data. Other subfigures are: directional strength inferred by (b) mutual information (c) transfer entropy (d) linear Granger (f) kernel Granger (g) elastic net (h) causal influence.}
\label{fig:breakout_comparison}
\end{figure}

To see how our method can discover the directional (possibly causal) relations in real video games, and potentially improve reinforcement learning (RL) or imitation learning (IL), we apply our method to the relational inference between the trajectories of different objects from a trained CNN RL-agent playing Atari Breakout games (\cite{bellemare2013arcade}, implementation details see Appendix \ref{app:breakout}). Fig.~\ref{fig:breakout_comparison} shows the inferred $W_{ji}$ matrix for our method and compared methods, respectively. The true underlying causal chain is marked in dark color in Fig.~\ref{fig:breakout_comparison}e, with light color marking the competing causal relations that are indistinguishable from data (e.g. decrease of bricks and increase of reward happen at the same time step, so we cannot distinguish ball-y$\to$brick and ball-y$\to$reward). 
Compared with other methods, we see that our method is able to discover comparatively most of the causal relations without finding false positives. Specifically, it correctly discovers a prominent causal direction from the ball's $y$ position to the reward, as well as brick $\to$ reward, ball-x $\to$ action, ball-y $\to$ action. The latter two show that the ball's $x$ and $y$ positions also have influences on the trained agent's action: in order that the ball does not fall to the bottom, the agent has to position itself at the right position depending on the $x$ and $y$ positions of the ball. 

In comparison, mutual information (Fig. \ref{fig:breakout_comparison}b) gives a symmetric matrix that does not differentiate the two possible directions, and also misses the arrows ball-y$\to$brick$\to$reward. For transfer entropy (Fig. \ref{fig:breakout_comparison}c), although it correctly discovers a number of  causal arrows, it also gives relatively high scores for some incorrect arrows: brick$\to$ action, ball-y$\to$ball-x. For kernel Granger (Fig. \ref{fig:breakout_comparison}f), although it correctly discovers four causal  relations, it also incorrectly finds reward$\to$ball-y and reward$\to$brick. For elastic net (Fig. \ref{fig:breakout_comparison}g), it correctly discovers two prominent causal relations: ball-y$\to$action and ball-y$\to$reward, but misses a few others. Linear Granger (Fig. \ref{fig:breakout_comparison}d) and causal influence (Fig. \ref{fig:breakout_comparison}h) fail to discover useful causal arrows.

\subsection{Experiment with heart-rate vs. breath-rate and rat brain EEG datasets}
\label{sec:heart_rate}

\begin{figure}[hb]
\centering
\begin{subfigure}[b]{.32\linewidth}
\includegraphics[scale=0.24]{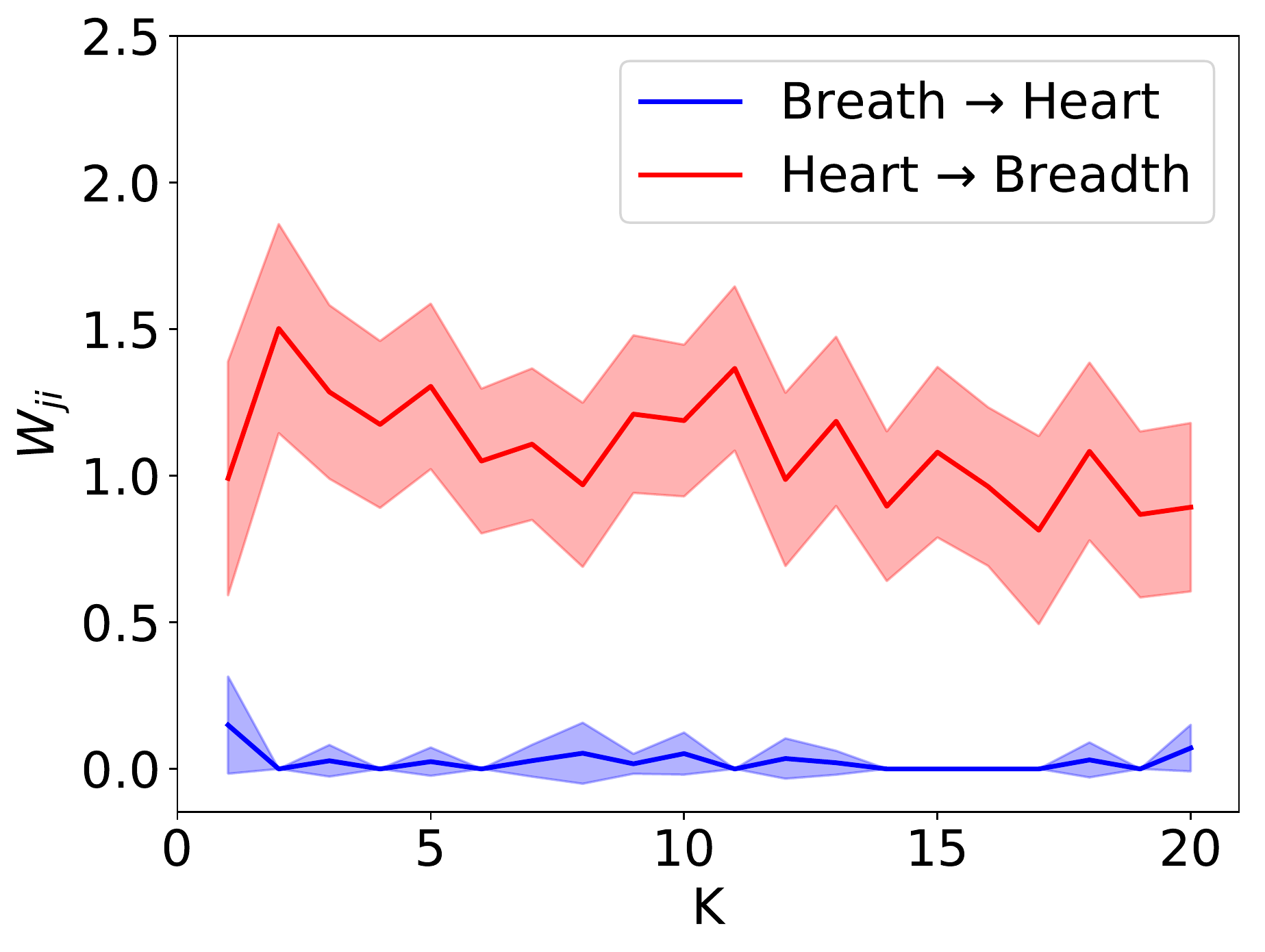}
\vspace{-15pt}\caption{}
\end{subfigure}
\begin{subfigure}[b]{.32\linewidth}
\includegraphics[scale=0.34]{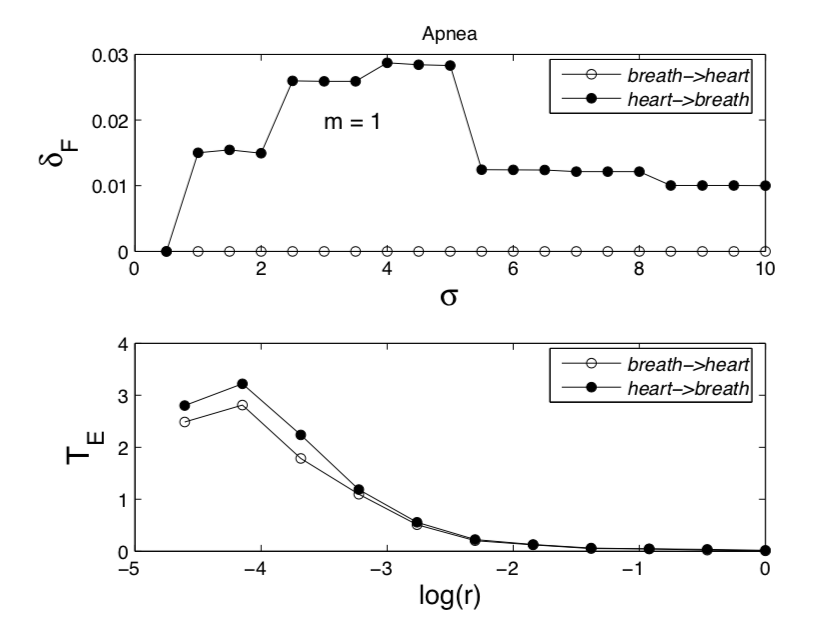}
\vspace{-15pt}\caption{}
\end{subfigure}
\begin{subfigure}[b]{.32\linewidth}
\includegraphics[scale=0.34]{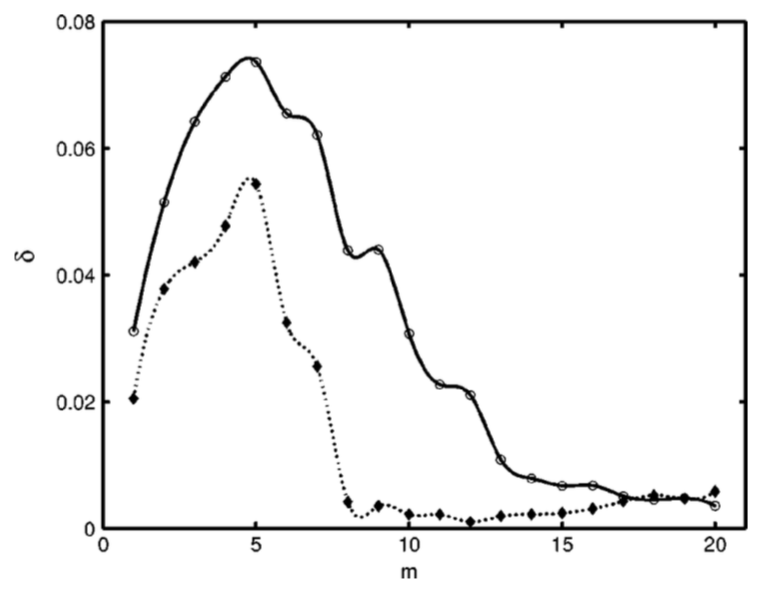}
\vspace{-15pt}\caption{}
\end{subfigure}
\vspace{-4pt}
\caption{(a) Predictive strength $W_{ji}$ inferred by our method with the heart-rate vs. breath-rate dataset, averaged over 50 initializations of $f_\theta$. The shaded areas are the 95\% confidence interval. (b) Upper: the filtered causality index  vs. varying width of Gaussian kernel $\sigma$ \cite{marinazzo2008kernel}; lower: transfer entropy vs. $r$, the length scale \cite{schreiber2000measuring}; (c) The causality index for breath$\to$heart (lower) and heart$\to$breath (upper) in \cite{ancona2004radial}, where $m$ is the maximum time lag (equivalent to our $K$).}
\label{fig:apnea}%
\vskip -0.08in
\end{figure}

Now we test our algorithm with real-world datasets. As a common dataset studied in previous causal works, we use the time-series of the breath rate and instantaneous heart rate of a sleeping patient suffering from sleep apnea (samples 2350-3550 of data set B from Santa Fe Institute time series contest held in 1991, available in \cite{Physionet}). We apply our method to infer the directional relations between the breath rate and heart rate, with different maximum time horizon $K$. The result is shown in Fig.~\ref{fig:apnea}. We see that the predictive strength $W_{ji}$ from heart to breath is significantly higher than the reverse direction that is basically 0, consistent with the results from previous causal inference methods \cite{schreiber2000measuring, ancona2004radial,marinazzo2008kernel} as also shown in Fig.~\ref{fig:apnea}(b)(c). Notably, the $W_{ji}$ from heart to breath estimated by our method remains at roughly the same level for different $K$s, in contrast to the decaying causality index w.r.t. increasing history length in (\cite{ancona2004radial}, Fig. \ref{fig:apnea} (c)), showing a merit of our method in estimating directional strength across different time-horizons, aided by the flexibility of neural nets in extracting the right information to predict the future. The implementation details are provided in Appendix \ref{app:real_dataset}. In addition, in Appendix \ref{app:rat_EEG_experiment} we test our algorithm on a rat EEG dataset, and obtain consistent result with previous works.

\section{Discussion and conclusion}

In this paper, we have introduced a novel relational learning with Minimum Predictive Information Regularization (MPIR) method for exploratory discovery of nonlinear directional relations from observational time series. It allows functional approximators like neural nets to learn complex directional relations from time series data. We prove its three theoretical properties,
and provide intuition that it favors variables that directly cause the variable of interest. We demonstrate in synthetic datasets, a video game environment and heart-rate vs. breath-rate datasets, that our method has better capability to handle nonlinearity, and can scale to large numbers of time series. We believe our work endows practitioners with a useful tool for deciphering the directional relations in complex systems, and are excited to see it in broader applications.

\chapter{Meta-learning autoencoders for few-shot prediction}
\label{chap7:mela}

Compared\footnote{The paper ``Meta-learning autoencoders for few-shot prediction'' is under review. Authors: Tailin Wu, John Peurifoy, Isaac L. Chuang, Max Tegmark \cite{wu2018meta}. } to humans, machine learning models generally require significantly more training examples and fail to extrapolate from experience to solve previously unseen physical prediction tasks.  To help close this performance gap, we augment single-task neural networks with a meta-recognition model which learns a succinct model code via its autoencoder structure, using just a few informative examples. The model code is then employed by a meta-generative model to construct parameters for the task-specific model. We demonstrate that for previously unseen tasks, without additional training, this {\it Meta-Learning Autoencoder} (MeLA) framework can build models that closely match the true underlying models, with losses significantly lower than fine-tuned baseline networks, and performance that compares favorably with state-of-the-art meta-learning algorithms.  MeLA also adds the ability to identify influential training examples and predict which additional unseen data will be most valuable to improve model prediction.

\section{Introduction}
Physical reasoning with few examples is an essential part of human-like intelligence. Humans are not only able to develop physical models that can describe the dynamics of objects in a single environment, we can generalize to a continuum of models in unseen environments with few observations (known as few-shot learning). For example, after seeing several moving objects accelerated by different forces in different environments,
humans are able to develop a meta-model that can generalize to a continuum of unseen accelerated objects. Upon arriving at a new environment and seeing an object moving for only a short time, s/he can quickly propose a new model for the moving object, including a good estimate of its acceleration. Incorporating this ability of generalization beyond the initial environments (training data) for quick recognition from few examples remains an important challenge in machine learning.

Great progress has been made in recent years towards developing machine learning models for physical systems. For example, disentangling recognition and dynamics models\cite{NIPS2017_69510}, visual de-animation\cite{NIPS2017_6620}, modeling physical interactions\cite{chang2016compositional,battaglia2016interaction,NIPS2017_7040}. However, most works aim to learn models that perform well in a single environment, without considering generalization to unseen environments where the dynamics or the environment constraint is novel. Hence when encountering new environments with different dynamics or environment constraints, the model has to be relearned. Moreover, the training of the models usually requires a large number of examples, posing a performance gap compared with humans.

In this work, we tackle the above problems by proposing a novel class of neural network architecture for few-shot/meta-learning of physical models, which learns to generalize across environments so that the learning of the dynamics in an unseen environment only requires a few examples. Although much progress has been made in recent years in few-shot/meta-learning, a large number of works are specifically designed for classification \cite{NIPS2016_6385,koch2015siamese,snell2017prototypical,edwards2016towards}, and the regression benchmark is only a simple trigonometric sine regression problem. Our work fills this gap, by introducing a \emph{Meta-Learning Autoencoders} (MeLA) architecture, designed for few-shot physical prediction/regression. 

At its core, MeLA consists of a learnable meta-recognition model that can for each (unseen) task distill a few input-output examples into a model code vector parameterizing the task's functional relationship, and a learnable meta-generative model that maps this model code into the weight and bias parameters of a neural network implementing this function. This architecture forces the meta-recognition model to discover and encode the important variations of the functional mappings for different tasks, and the meta-generative model to decode the model codes to corresponding task-specific models with a common model-generating network. 

This brings the key innovation of MeLA: for a class of tasks, MeLA does not attempt to learn a \emph{single} good initialization for multiple tasks\cite{pmlr-v70-finn17a}, or learn an update function\cite{schmidhuber1987evolutionary,bengio1992optimization,andrychowicz2016learning}, or learn an update function together with a \emph{single} good initialization\cite{li2017meta,ravi2016optimization}. Instead, it learns to map the few examples from different datasets into \emph{different} models, which not only allows for more diverse model parameters tailored for each individual tasks, but also obviates the need for fine-tuning.
Moreover, by encoding each function as a vector in a single low-dimensional latent space, MeLA is able to generalize beyond the training datasets, by both interpolating between and extrapolating beyond learned models into a continuum of models. 
We will demonstrate that the meta-learning autoencoder has the following 3 important capabilities:
\begin{enumerate}
    \item \textbf{Augmented model recognition:} MeLA strategically builds on a pre-existing, single-task trained network for physical prediction, augmenting it with a second network used for meta-recognition. It achieves lower loss in unseen environments at zero and few gradient steps, compared with both the original architecture upon which it is based and state-of-the-art meta-learning algorithms.
    \item \textbf{Influence identification:} MeLA can identify which examples are most useful for determining the model
  (for example, a rectangle's vertices have far greater influence in determining its position and size than its inferior points).
  
    \item \textbf{Interactive learning:} MeLA can actively request new samples which maximize its ability to learn models.
\end{enumerate}
\section{Methods}
\label{sec:Methods}

\subsection{Meta-learning problem setup}

We are interested in modeling a set of vector-valued functions $\h_{\alpha}$ (which we will refer to as {\it models}), that each map an $m$-dimensional input vector $\x$ into an $n$-dimensional output vector $\y$.
Let's first consider the case for a single dataset. Given many input-output pairs 
$\y_i=\h_\alpha(\x_i)$ linked by the same function $\h_{\alpha}$, we group the corresponding vectors into matrices 
$\XX$ and $\YY$ whose $i^{th}$ rows are the vectors $\x_i$ and $\y_i$. Since we focus on physical prediction/regression, the target $\y\in\R^n$ is continuous. This class of problems includes a wide range of scenarios, {\eg}, modeling time series data, learning physics and dynamics, and frame-to-frame prediction of videos.

The meta-learning problem we tackle is as follows. Suppose that we are given an ensemble of datasets 
$\{\D^\alpha\}=\{(\XX^\alpha,\YY^\alpha)\}$, 
$\alpha=1,2,...$, each of which is generated by a corresponding function $\h_\alpha$. 
In the single-task scenario, we want to train a model $\f$ 
that predicts all output vectors for the corresponding input vectors from a single dataset, 
to minimize some loss function $\ell$ that quantifies the prediction errors. 
The meta-learning goal is, after training on an ensemble of training datasets 
$\D^1$,...,$\D^a$,
to be able to quickly learn from few examples from held-out datasets 
$\D^{a+1}$,...,
and obtain a low loss on them.


\begin{figure}[t]
\begin{center}
\centerline{\includegraphics[width=0.8\columnwidth]{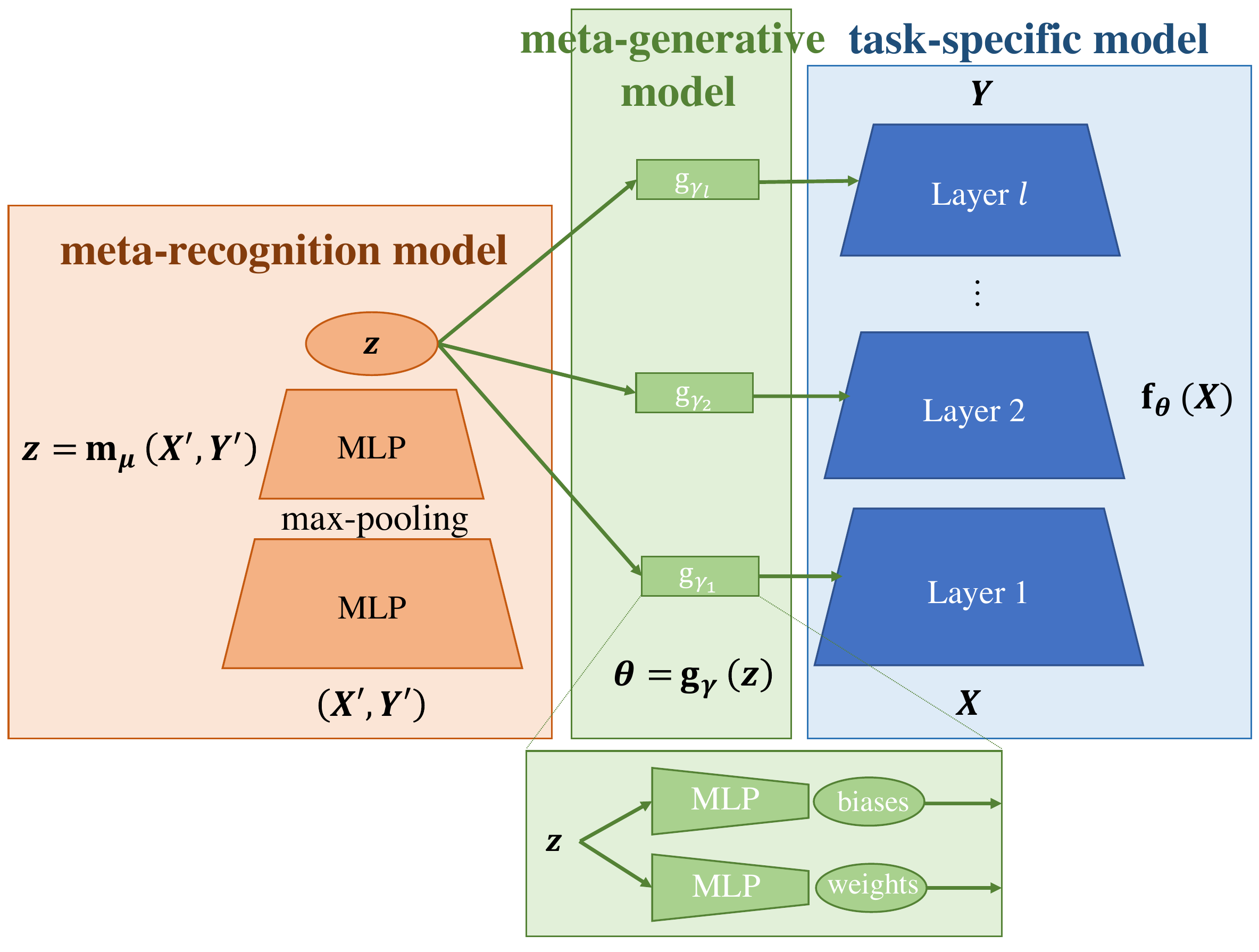}}
\caption{Architecture of our Meta-Learning Autoencoder (MeLA). MeLA augments a pre-existing neural network architecture $\f_{\phivec}$ (right) with a meta-recognition model (left) that generates the model code $\z$ based on a few examples $\XX^{\prime}$, $\YY^{\prime}$, and a meta-generative model (middle) that generates the parameters $\phivec$ of model $\f_{\phivec}$ based on the model code. $\f_{\phivec}$, $\g_{\gammavec}$ and $\m_{\muvec}$ are implemented as multilayer perceptron (MLP).}
\label{ArchitectureFig}
\end{center}
\vskip -0.15in
\end{figure}

\subsection{Meta-learning autoencoder architecture}
\label{sec:architecture}

The architecture of our Meta-Learning Autoencoder (MeLA) is illustrated in \fig{ArchitectureFig}.
It is defined by three vector-valued functions $\f_{\phivec}$, $\g_{\gammavec}$ and $\m_{\muvec}$ that are defined by feedforward neural networks parametrized by vectors $\phivec$, $\gammavec$ and $\muvec$, respectively.
In contrast to prior methods for learning to quickly adapt to different datasets \cite{pmlr-v70-finn17a} or using memory-augmented setup \cite{pmlr-v48-santoro16,NIPS2016_6385}, the MeLA takes full advantage of the prior that the datasets are generated by a hidden model class, where the functions $\h$ lie in a relatively low-dimensional submanifold of the space of all functions. Based on this prior, we use a meta-recognition model $\m_{\muvec}$ that maps a whole dataset $\D=(\XX,\YY)$ to a model code vector $\z$, and a meta-generative model $\g_{\gammavec}$ that maps $\z$ to the parameters vector $\phivec$ of the network implementing the function $\f_{\phivec}$. In other words, $\phivec=\g_{\gammavec}(\z)$, and for a specific dataset $(\XX,\YY)$, $\f_{\phivec}$ can be instantiated by 
\begin{equation}
{\f_{\phivec}}
={\f_{{\g_{\gammavec}}(\z)}}
=\f_{{\g_{\gammavec}}({\m_{\muvec}}(\XX,\YY))}.
\end{equation}


This architecture is designed so that it can easily transform a neural network that is originally intended to learn from a single task into an architecture that can perform meta/few-shot learning on a number of tasks, combining the knowledge of individual task architectures with MeLA's meta-learning power. If the original single-task model is $\f_{\phivec}$, then without changing the architecture of $\f_{\phivec}$, we can simply attach a meta-recognition model $\m_{\muvec}$ and a meta-generative model $\g_{\gammavec}$ that generates the parameters of $\f_{\phivec}$, and train on an ensemble of tasks. 

\paragraph{Network architecture examples} 

Although the MeLA architecture described above can be implemented with any 
choices whatsoever for the three feedforward neural networks that define the functions 
$\f_{\phivec}$, $\g_{\gammavec}$ and $\m_{\muvec}$,
let us consider simple specific implementations to build intuition and get ready for the numerical experiments.

Suppose we implement the main model $\f_{\phivec}$ as a network with two hidden layers with $s_1$ and $s_2$ neurons, respectively. Its input size is $s_0$ and its output size is $\sout$. 
The meta-recognition model $\m_{\muvec}$ takes as input $\XX$ and $\YY$ concatenated horizontally into a single
$N\times(s_0+\sout)$ matrix, where $N$ is the number of training examples at hand. 
The feedforward neural network implementing $\m_{\muvec}$ has 
two parts: the first is a series of layers that collectively transform the $N\times(s_0+\sout)$ input matrix into an $N\times \spool$ matrix, where $\spool$ is the number of output neurons in this first block (we typically use 200 to 400 below). Then a max-pooling operation is applied over the $N$ examples, transforming 
this $N\times \spool$ matrix into a single vector of length $\spool$. 
The meta-recognition model $\m_{\muvec}$ is thus defined independently of the number of  training examples $N$.
As will be explained in the ``Influence identification" subsection below, max-pooling is key to MeLA, forcing the meta-recognition model to learn to capture key characteristics in a few representative examples. 
The second block of the $\m_{\muvec}$ network is a multilayer feedforward neural network, which takes as input the max-pooled vector, and 
transforms it into a 
$\slat$-dimensional model code $\z$ that parametrizes the functional relationship between $\x$ and $\y$.

The meta-generative model $\g_{\gammavec}$ takes as input the model code $\z$, 
and for each layer in the main model $\f_{\phivec}$, it has two separate neural networks that map $\z$ to all the weight and bias parameters of that layer. 
We typically implement each of these subnetworks of $\g_{\gammavec}$ using 2-3 hidden layers with 60 neurons each. The number of parameters for the new model $\f_{\g_\gamma(z)}$ is linear w.r.t. the number of parameters for the original model $\f_{\phivec}$, independent of the number of tasks.

\subsection{MeLA's meta-training and evaluation}

The extension from the training on a single-task $\f_{\phivec}$ to MeLA is straightforward. Suppose that the loss function for the single-task is $\ell(\yhat,\y)$, with expected risk
$R_{\ell,\D_k}(\f_{\phivec})\equiv\mathbb{E}_{(\XX,\YY)\sim \D_k}\left[\ell(\f_{\phivec}(\XX),\YY)\right]$. Then the meta-expected risk for MeLA is

\begin{equation}
\label{eqn:meta_risk}
    R_{\ell,p(\D)}(\m_{\muvec}, \g_{\gammavec})=\mathbb{E}_{\D_k\sim p(\D)}\left[\mathbb{E}_{(\XX,\YY)\sim \D_k}\left[\ell(\f_{\g_{\gammavec}(\z)}(\XX),\YY)\right]\right]
\end{equation}
where $p(\D)$ is the distribution for datasets $\{\D_k\}$ generated by the hidden model class $\h$. The goal of meta-training is to learn the parameters for the meta-recognition model $\m_{\muvec}$ and meta-generative model $\g_{\gammavec}$ such that $R_{\ell,p(\D)}(\m_{\muvec},\g_{\gammavec})$ is minimized:

\begin{equation}
\label{eqn:minimize_meta_risk}
(\muvec,\gammavec)=\argmin{(\muvec,\gammavec)}R_{\ell,p(D)}(\m_{\muvec}, \g_{\gammavec})
\end{equation}

\begin{algorithm}[t]
   \caption{\textbf{Meta-Training for MeLA}}
   \label{alg:standard}
\begin{algorithmic}
   \STATE {\bfseries Require} datasets $\{\D^\alpha\}=\{(\XX^\alpha, \YY^\alpha)\}$, $\alpha=1,2,...a$ 
   \STATE {\bfseries Require $n$}: number of meta-iterations
   \STATE {\bfseries Require $\beta$}: learning rate hyperparameter
   \STATE 1: Initialize random parameters for $\m_{\muvec}, \g_{\gammavec}$.
   \STATE 2: $i\gets 0$
   \STATE 3: \textbf{while} $i<n$:
   \STATE 4: \ \ \ \ \ \ $\{\D^{\alpha^\prime}\}\gets \text{permute}(\{\D^\alpha\})$   //\textit{Randomly permute the order of datasets.}
   \STATE 5: \ \ \ \ \ \ \textbf{for} $\D_j$ \textbf{in} $\{\D^{\alpha^\prime}\}$ \textbf{do}
   \STATE 6: \ \ \ \ \ \ \ \ \ \ \ \ Split $\D_j=(\XX_j,\YY_j)$ into training examples $(\XX_j^{\rm train},\YY_j^{\rm train})$ and \STATE \ \ \ \ \ \ \ \ \ \ \ \ \ \ \ testing examples $(\XX_j^{\rm test},\YY_j^{\rm test})$
   \STATE 7: \ \ \ \ \ \ \ \ \ \ \ \ $\z\gets \m_{\muvec}(\XX_j^{\rm train},\YY_j^{\rm train})$
   \STATE 8: \ \ \ \ \ \ \ \ \ \ \ \ $\phivec\gets \g_{\gammavec}(\z)$
   \STATE 9: \ \ \ \ \ \ \ \ \ \ \ \ Update $\muvec\gets\muvec-\beta\nabla_{\muvec}\ell\left[\f_{\g_{\gammavec}(\z)}(\XX_j^{\rm test}),\YY_j^{\rm test}\right]$ 
   \STATE \ \ \ \ \ \ \ \ \ \ \ \ \ \ \ \ \ \ \ \ \ \ \ \ \ \  $\gammavec\gets\gammavec-\beta\nabla_{\gammavec}\ell\left[\f_{\g_{\gammavec}(\z)}(\XX_j^{\rm test}),\YY_j^{\rm test}\right]$ 
   \STATE 10: \ \ \ \ \ \textbf{end for}
   \STATE 11: \ \ \ \ \ $i\gets i+1$
   \STATE 12: \textbf{end while} 
\end{algorithmic}
\end{algorithm}

Algorithm \ref{alg:standard} illustrates the step-by-step meta-training process for MeLA implementing an empirical meta-risk minimization for Eq. (\ref{eqn:minimize_meta_risk}). In each iteration, the training dataset ensemble is randomly permuted, from which each dataset is selected once for inner-loop task-specific training. Inside the task-specific training, the training examples for each dataset are used for calculating the model code $\z=\m_{\mu}(\XX^{\rm train},\YY^{\rm train})$, after which the model parameter vector $\phivec=\g_{\gammavec}(\z)$ and the testing examples are used to calculate the task-specific testing loss $\ell(\f_{\g_{\gammavec}(\z)}(\XX^{\rm test}),\YY^{\rm test})$, from which the gradients w.r.t.~$\muvec$ and $\gammavec$ are computed and used for one-step of gradient descent for the meta-recognition model and meta-generative model. Note that here the task-specific testing loss in the training datasets serves as the training loss in the meta-training.

During the evaluation of MeLA, we use the held-out datasets unseen during the meta-training. For each held-out dataset, we split it into training and testing examples. The training examples is fed to MeLA and a task-specific model is generated without any gradient descent. Then we evaluate the task-specific model on the testing examples in the held-out datasets. We also evaluate whether the task-specific model can further improve with a few more steps of gradient descent.

\subsection{Influence identification}
\label{sec:influence}

The max-pooling over examples in the meta-recognition model $\m_{\muvec}$ is key to MeLA, and also provides a natural way to identify the influence of each example on the model $\f_{\phivec}$. 
Typically, some examples are more useful than others in in determining the model. 
For example, suppose that we try to learn a function $\f_{\phivec}$ defined on $\R^2$ that equals 1 inside a polygon and 0 outside, 
with different polygons corresponding to different models parameterized by $\phivec$.
Then data points near the polygon vertices carry far more information about $\phivec$ than do points in the deep interior,
and the max-pooling over the dimension of examples forces the meta-recognition model to recognize those influential points, and based on them perform computation that returns a model code that determines the whole polygon. 
Recall that max-pooling compresses $N\times\spool$ numbers into merely $\spool$, which means that for each column 
of the $N\times\spool$ matrix, only one of the $N$ examples takes the maximum value and hence contributes to this feature.
We therefore define the \emph{influence} of an example as

\begin{equation}
\label{eqn:influence}
\text{Influence}=\frac{\text{Number of columns where the example is maximal}}{\spool}
\end{equation}

The influence of each example can be interpreted as a percentage, since it lies in $[0,1]$, and the influences sum to 1 for all the examples in the dataset fed to the meta-recognition model.

\subsection{Interactive learning}
\label{sec:propose}

In some situations, measurements are hard or costly to obtain. 
It is then helpful if we can do better than merely acquiring random examples, and instead determine in advance at which data points $\x_i$ to collect measurements $\y_i$ to glean as much information as possible about the correct function $\f$.
Specifically, suppose that we want to predict $\hat{\y}^*=\f_{\phivec}(\x^*)$ as accurately as possible at a given input point $\x^*$ where we have no training data. If before making our prediction, we have the option to measure $\y$ at one of 
several candidate points $\x_1^{\prime},\x_2^{\prime}, ...\x_n^{\prime}$, then which point shall we choose?

The MeLA architecture provides a natural way to answer this question. 
We can first use $\f_{\phivec}$ to calculate the current predictions for $\y_1^{\prime},\y_2^{\prime}, ...\y_n^{\prime}$ at $\x_1^{\prime},\x_2^{\prime}, ...\x_n^{\prime}$ based on current model generated by $\phivec=\g_{\gammavec}(\z)$ and $\z=\m_{\muvec}(\XX^{\prime},\YY^{\prime})$, where $\XX^{\prime}$ and $\YY^{\prime}$ are the examples that are already given. Then we can fix the meta-parameters $\muvec$ and $\gammavec$, and calculate the \emph{sensitivity matrix} of $\y^*$ w.r.t.~each current prediction $\y_i^\prime$:


\begin{equation}
\label{eqn:sensitivity}
\frac{\partial\y^*}{\partial \y^\prime_i}=
\J
\frac{\partial\z}{\partial \y^\prime_i},
\quad\hbox{where}
\quad\J\equiv\frac{\partial \f_{\g_{\gammavec}(\z)}(\x^*)}{\partial\z}.
\end{equation}

We can select the candidate point whose sensitivity matrix has the largest determinant, \ie, the point for which the measured data
carries the most information about the answer $\y^*$ that we want:
\begin{equation}
    \y_i^\prime=\argmax{y_i^\prime}\left|\frac{\partial\y^*}{\partial \y^\prime_i}\right|
\end{equation}
If we model our uncertainty about $\y^*$ as a multivariate Gaussian distribution, then this criterion maximizes the entropy reduction, \ie, the number of bits of information learned about 
$\y^*$ from the new measurement.
Note that with $\gammavec$ fixed and for a given $\x^*$, the Jacobian matrix $\J$
is independent of the different candidate inquiry inputs $\x_1^{\prime},\x_2^{\prime}, ...\x_n^{\prime}$. 
This means that  we can simply select the  candidate point that has the largest ``projection" of $\left|\J\frac{\partial\z}{\partial \y^\prime_i}\right|$ onto $\J$, requiring in total only one forward and one backward pass for all the candidate examples to obtain the gradient. This factorization emerges naturally from MeLA's architecture.
\section{Related work}

MeLA addresses few-shot learning of physical dynamics models. Past works of learning physical dynamics models generally consider learning in a single environment, where the training and testing tasks have the same dynamics or environment constraint\cite{NIPS2017_69510,NIPS2017_6620,chang2016compositional,battaglia2016interaction,NIPS2017_7040}. Our MeLA architecture is complementary in that it can boost the architecture designed for single-task learning into one that can quickly adapt to new tasks with few examples.

MeLA addresses few-shot/meta-learning \cite{thrun2012learning,schmidhuber1987evolutionary,287172}, whose goal is to quickly adapt to new tasks with one-shot or few-shot examples. A recent innovative meta-learning method MAML\cite{pmlr-v70-finn17a} optimizes the parameters of the model so that it is easy to fine-tune to individual tasks in a few gradient steps. Another class of methods focuses on learning a learning rule or update functions\cite{schmidhuber1987evolutionary,bengio1992optimization,andrychowicz2016learning}, or learning an update function from a single good initialization\cite{li2017meta,ravi2016optimization}. Compared to these methods that only learn a \emph{single} good initialization point or how to update from a single initialization point, our method learns recognition and generative models that can quickly determine the model code for the model, and directly propose the appropriate neural network parameters tailored for each task without the need of fine-tuning.

Another interesting class of few-shot learning methods uses memory-augmented networks. \citet{NIPS2016_6385} proposes matching nets for one-shot classification, which generates the probability distribution for the test example based on the support set using attention mechanisms, essentially learning a ``similarity" metric between the test example and the support set. \citet{pmlr-v48-santoro16} utilizes a neural Turing machines for few-shot learning, and  \citet{duan2016rl,wang2016learning} learn fast reinforcement learning agents with recurrent policies using memory-augmented nets. In contrast to memory-augmented approaches, our model learns to distill features from representative examples and produces a model code, based on which it directly generates the parameters of the main model. This eliminates the need to store the examples for the support set, and allows a continuous generation of models, which is especially suitable for generating a continuum of regression models. Other few-shot learning techniques include using Siamese structures \cite{koch2015siamese} and evolutionary methods \cite{mengistu2016evolvability}.

Autoencoders are typically used for representation learning in a single dataset, and have only recently been applied to multiple datasets. The recent neural statistician work  \cite{edwards2016towards} applies the variational autoencoder approach to the encoding and generation of datasets. Compared to their work, our MeLA differs in the following aspects. Firstly, the problem is different. While in neural statistician, each example in the dataset is an instance of a class, in MeLA, we are dealing with datasets whose examples are $(\x,\y)$ pairs, where we don't know \emph{a priori} where the input $\x$ will be in testing time. Therefore, direct autoencoding of datasets is not enough for prediction, especially for regression tasks. Therefore, instead of using autoencoding to generate the \emph{dataset}, our MeLA uses autoencoding to generate the \emph{model} that can generate the dataset given test inputs $\X$, which is a more compact way to express the relationship between $\X$ and $\Y$.

\section{Experiments}
\label{Experiments}
Here we examine the core value of MeLA: can it transform a model that is originally intended for single-task learning into one that can quickly adapt to new tasks with few examples without training, and continue to improve with a few gradient steps?
The baseline we compare with is a single network pretrained to fit to all tasks, which during testing is fine-tuned to each individual task through further training. MeLA has the same main network architecture $\f_{\phivec}$ as this baseline network, supplemented by the meta-recognition and meta-generative models trained via Algorithm \ref{alg:standard}. 
We also compare with the state-of-the-art meta-learning algorithm MAML\cite{pmlr-v70-finn17a}, with the same network architecture $\f_{\phivec}$.
In addition, we explore the two other MeLA capabilities: influence identification and interactive learning.

For all experiments, the true model $\h$ and its parameters are hidden from all algorithms, except for an {\it oracle} model which ``cheats" by getting access to 
the true model parameters for each example, thus providing an  upper bound on performance. The performance of each algorithm is then evaluated on previously unseen test datasets. 
For all experiments, the Adam optimizer\cite{kingma2014adam} with default parameters is used for training and fine-tuning during evaluation. \footnote[1]{The code for MeLA, the dataset and experiments will be open-sourced upon publication of the paper.}

\subsection{Simple regression problem}

We first demonstrate the 3 capabilities of MeLA via the same  simple regression benchmark previously studied with MAML \cite{pmlr-v70-finn17a}, where
the hidden function class is
$h(x)=c_1\sin(x + c_2)$,
and the parameters $c_1\sim U[0.1,5.0]$, $c_2\sim U[0,\pi]$
are randomly generated for each dataset. For each dataset, 10 input points $x_i$ are sampled from $U[-5,5]$ as training examples and another 10 are sampled as testing examples. 
100 such datasets are presented for the algorithms during training. The baseline model is a 3-layer network where each hidden layer has 40 neurons with leakyReLU activation.

\begin{figure}[ht] 
  \centering
   \includegraphics[width=0.9\linewidth]{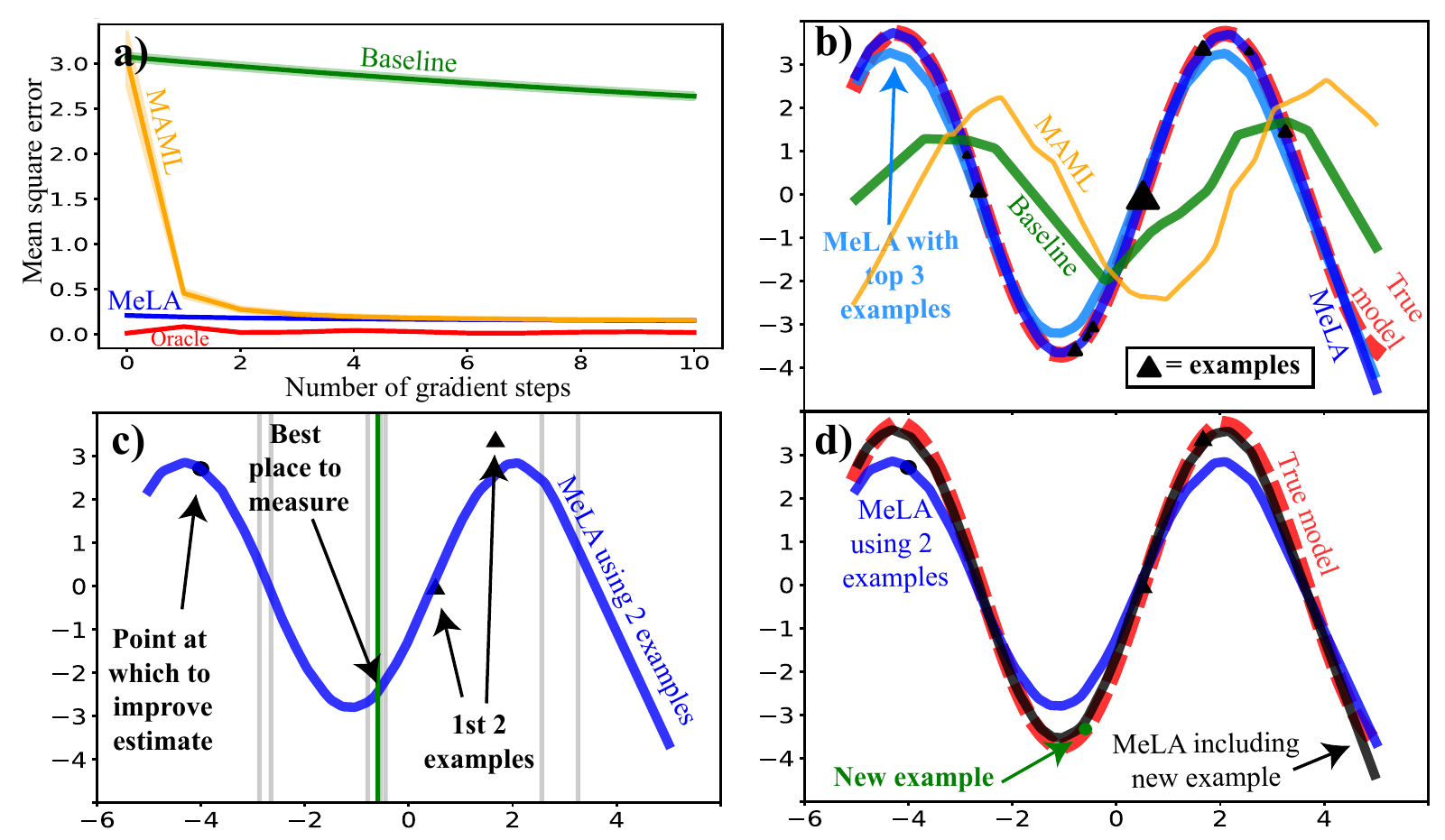} 
  \caption{(a) 
  MSE vs. number of gradient steps (with learning rate = 0.001, Adam optimizer) on 20,000 randomly sampled testing datasets, for MeLA, MAML, baseline (pretrained) and oracle. MeLA starts at MSE of 0.208 and gets down to 0.129 after 10  steps, while MAML starts at 3.05 and gets down to 0.208 after 5 steps. (b) Predictions after 0 gradient steps for an example test dataset (MAML is after 1 gradient step). The markers' size is proportional to the influence identified by MeLA. Also plotted is MeLA's prediction given only the top 3 influential examples. (c) To get a better prediction at $x^*=-4$ using only two examples at hand, MeLA requests the example at $x=-0.593$ from 8 candidate positions (vertical lines). (d) Improved estimate at $x^*=-4$ after obtaining the requested example. }
  \label{fig:sin} 
\end{figure}

The results are shown in Fig.~\ref{fig:sin}. Panel a) plots the mean squared error vs.~number of gradient steps on unseen randomly generated testing datasets, showing that MeLA outclasses the baseline model at all stages. 
It also shows that MeLA asymptotes to the same performance as MAML but learns much faster, starting with a low loss that MAML needs 5 gradient steps to surpass. 
Panel b) compares predictions with 0 gradient steps. MeLA not only proposes a model that accurately matches the true model, but also identifies each examples' influence on the model generation, and obtains good prediction if only the top 3 influential examples are given. Panels c) and d) show MeLA's capability of actively requesting informative examples by predicting which additional example will help improve the prediction the most.

\subsection{Ball bouncing with state representation}
\label{sec:bounce-states}

Next, we test MeLA's capability in simple but challenging physical
environments, where it is desirable that an algorithm quickly adapts to each new environment with few observations of states or frames. This is a much harder task than the previous simple ``Sin" regression benchmark in few-shot/meta-learning, and is another contribution to the community. Each environment, implemented as a custom Gym environment\cite{1606.01540}, consists of a room with 4 walls, whose frictionless floor is a random 4-sided convex polygon inside the  2-dimensional unit square $[0,1]\times[0,1]$ (Fig. \ref{fig:bounce_visualize}(a)), and a ball of radius 0.075
that bounces elastically off of these walls and otherwise moves with constant velocity. 
Because the different room geometries give the ball conflicting bouncing dynamics in different environments, a model trained well in one environment may not necessarily perform well in another, providing an ideal test bed for few-shot/meta-learning. During training, all models take as input 3 consecutive time steps of ball's state 
 ($x-$ and $y-$ coordinates), recorded every time it has moved a distance 0.1.
The oracle model is also given as input the coordinates of the floor's 4 corners.

\begin{figure}[t]
\centering
\begin{subfigure}{.48\textwidth}
  \centering
  \includegraphics[width=0.85\linewidth]{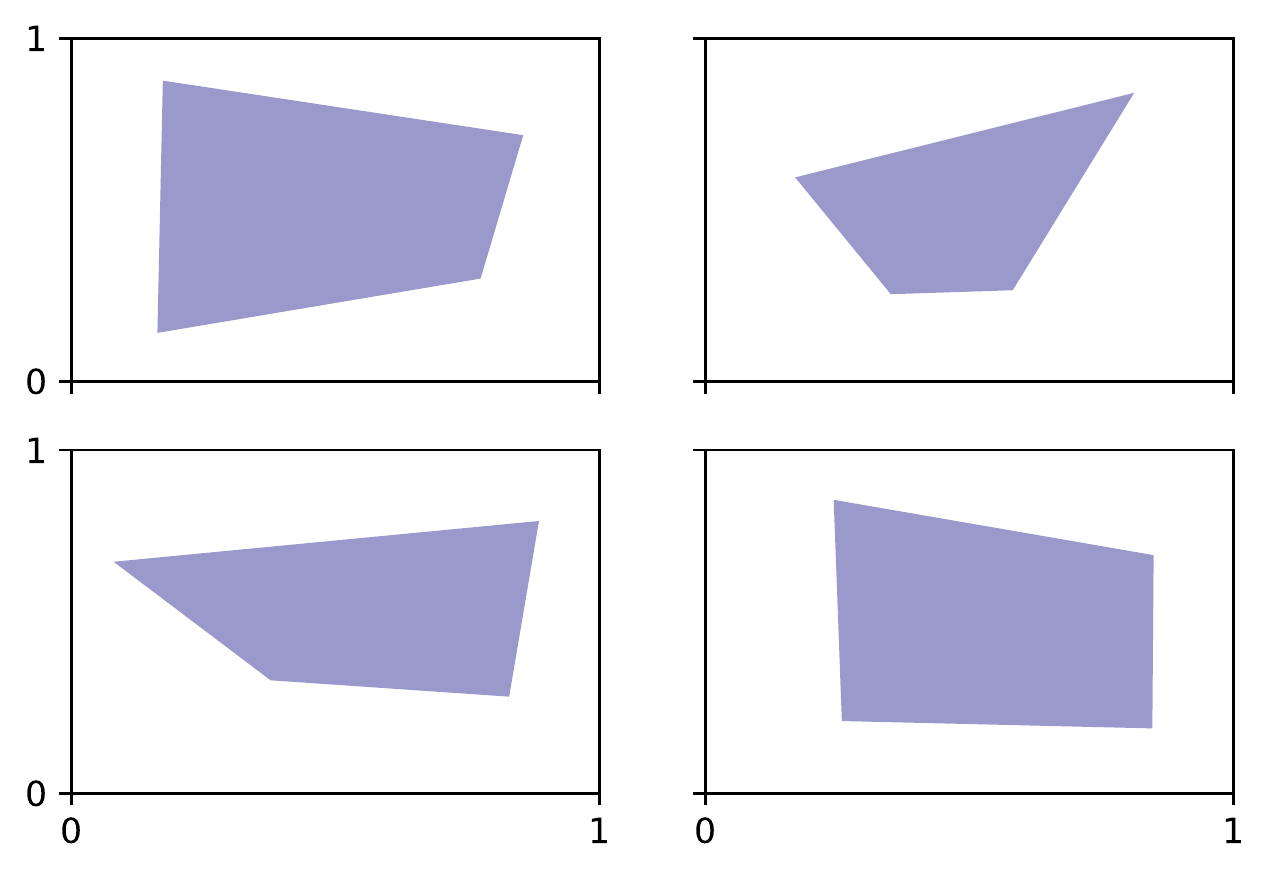}
  \caption{}
  \label{fig:bounce_visualize_1}
\end{subfigure}%
\begin{subfigure}{.52\textwidth}
  \centering
  \includegraphics[width=1.05\linewidth]{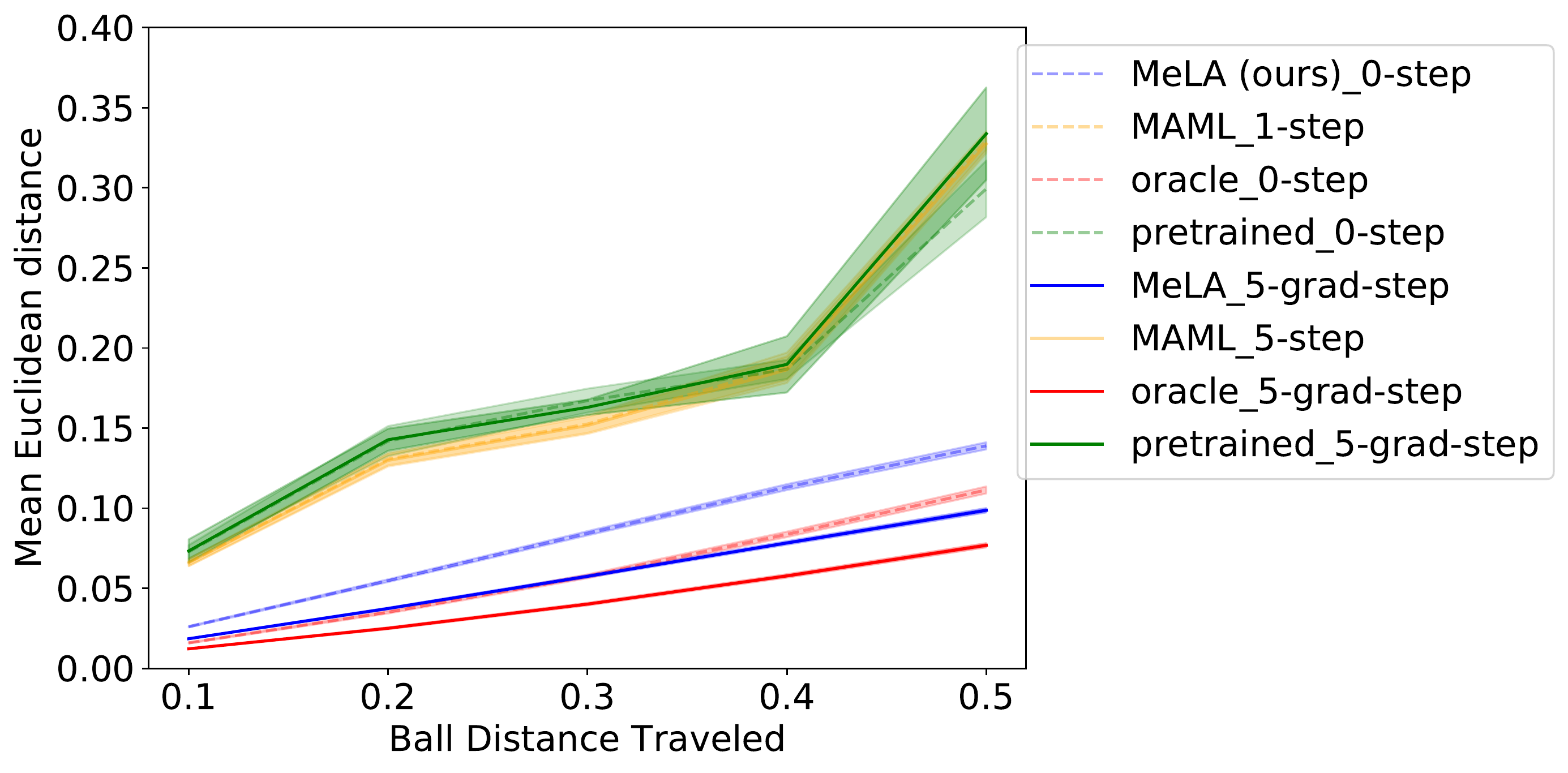}
  \caption{}
  \label{fig:bounce_visualize_2}
\end{subfigure}
\caption{(a) Examples of the polygon "bouncy-house" environments. (b) Mean Euclidean distance between target and prediction vs. rollout distance traveled 
on 1000 randomly generated testing environments.}
\label{fig:bounce_visualize}
\end{figure}

Fig. \ref{fig:bounce_visualize} (b) plots the mean Euclidean distance of the models' predictions vs. rollout distance traveled. We can see that MeLA outperforms pretrained and MAML for both 0 and 5 gradient steps. Moreover, what MeLA identifies as influential examples (Fig. \ref{fig:bounce_influential}) lies near the vertices of the polygon, showing that MeLA essentially learns to capture the convex hull of all the trajectories when proposing the model.

\begin{figure}[t]
\begin{center}
\vskip-3mm
\centerline{\includegraphics[width=0.6\columnwidth]{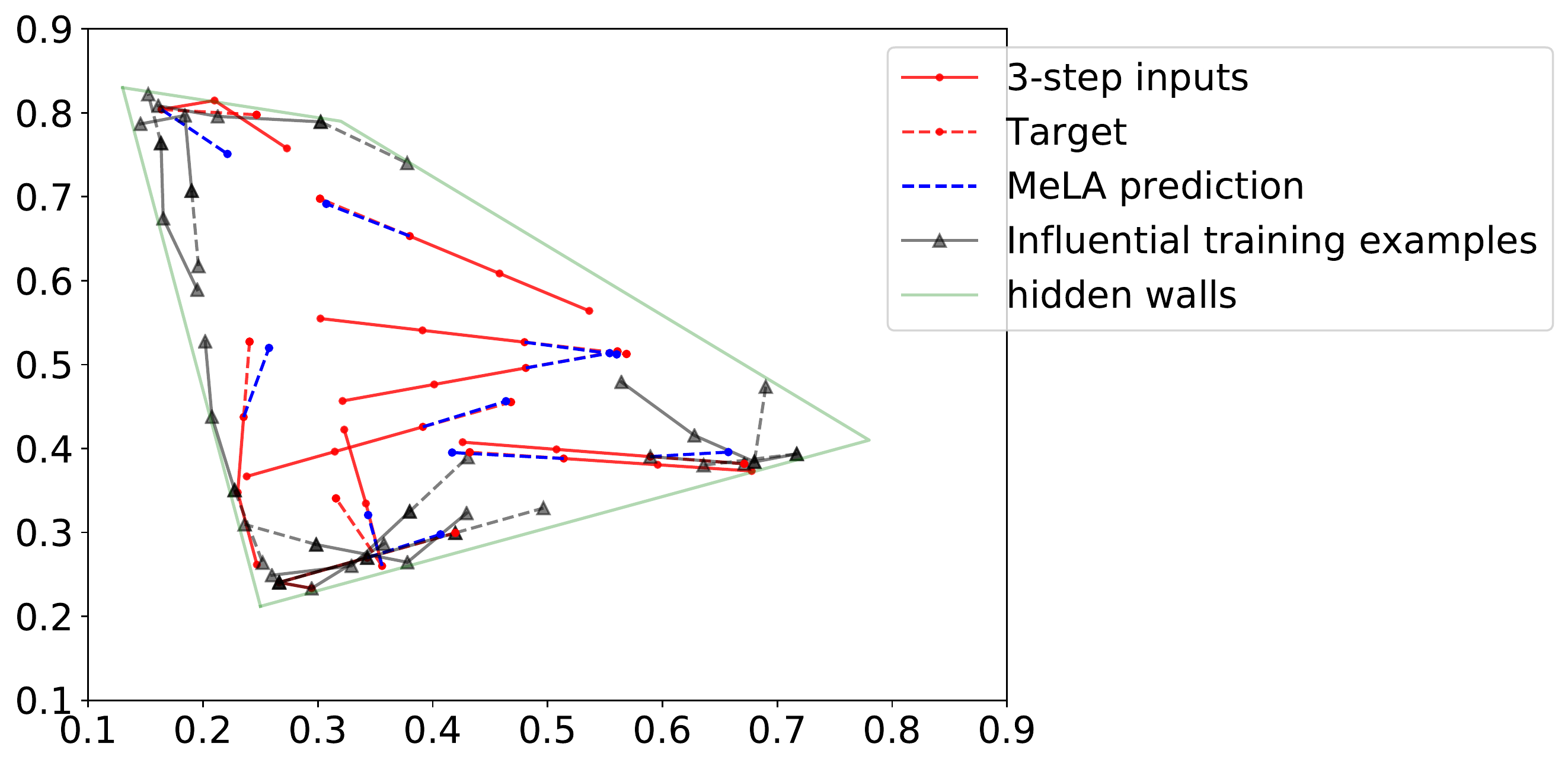}}
\vskip -0.05in
\caption{Ball bouncing prediction by MeLA for an example testing dataset. Also plotted are the top 10 most influential training  trajectories identified by MeLA, which are all near the vertices.}
\label{fig:bounce_influential}
\end{center}
\end{figure}

\subsection{Video prediction}

To test MeLA's ability to integrate into other end-to-end architectures that deal with high-dimensional inputs, we present it with an ensemble of video prediction tasks, each of which has a ball bouncing inside randomly generated polygon walls. The environment setup is the same as in section \ref{sec:bounce-states}, except that the inputs are 3 consecutive frames of 39 x 39 pixel snapshots, and the target is a 39 x 39 snapshot of the next time step. For all the models, a convolutional autoencoder is used for autoencoding the frames, and the models differ only in the latent dynamics model that predicts the future latent variable based on the 3 steps of latent variables encoded by the autoencoder. For the pretrained model, a single 4-layer network with 40 neurons in each hidden layer is used for the latent dynamics model, training on all tasks. MAML and MeLA also have/generate the same architecture for the latent dynamics model. For the oracle model, the coordinates of the vertices are concatenated with the latent variables as inputs.

Fig.~\ref{fig:bounce-images_perform} plots the mean Euclidean distance of the center of mass of the models' predictions vs. rollout distance. We see that MeLA again greatly reduces the prediction error compared to the baseline model which has to use a single model to predict the trajectory in all environments. MeLA's accuracy is seen to be near that of the oracle, demonstrating that MeLA is learning to quickly recognize and model each environment and propose reasonable models.

\begin{figure}[t]
\centering
\vskip -0.1in
\begin{subfigure}{.5\textwidth}
  \centering
  \includegraphics[width=0.98\linewidth]{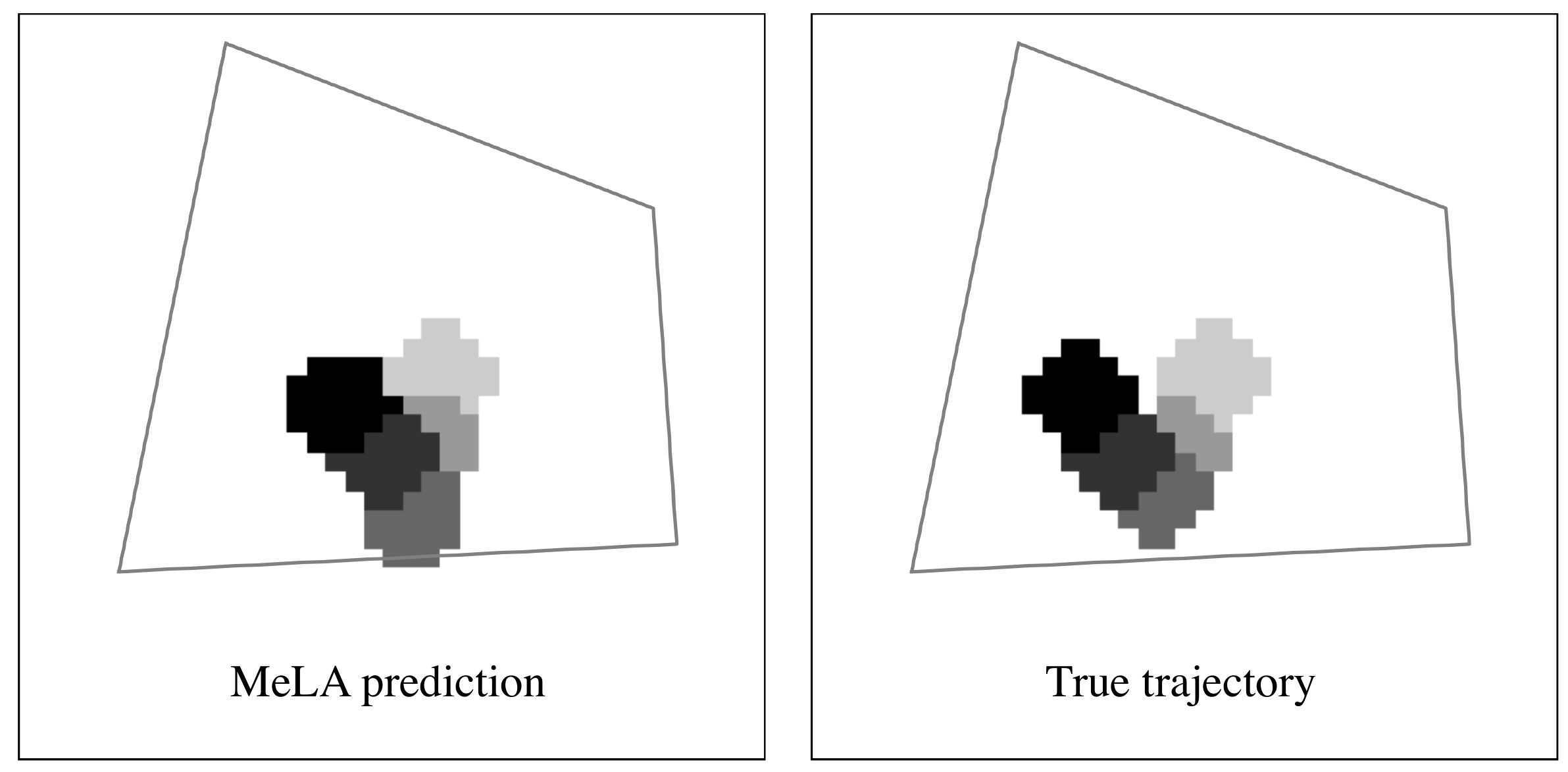}
  \caption{}
  \label{fig:sub1}
\end{subfigure}%
\begin{subfigure}{.5\textwidth}
  \centering
  \includegraphics[width=0.98\linewidth]{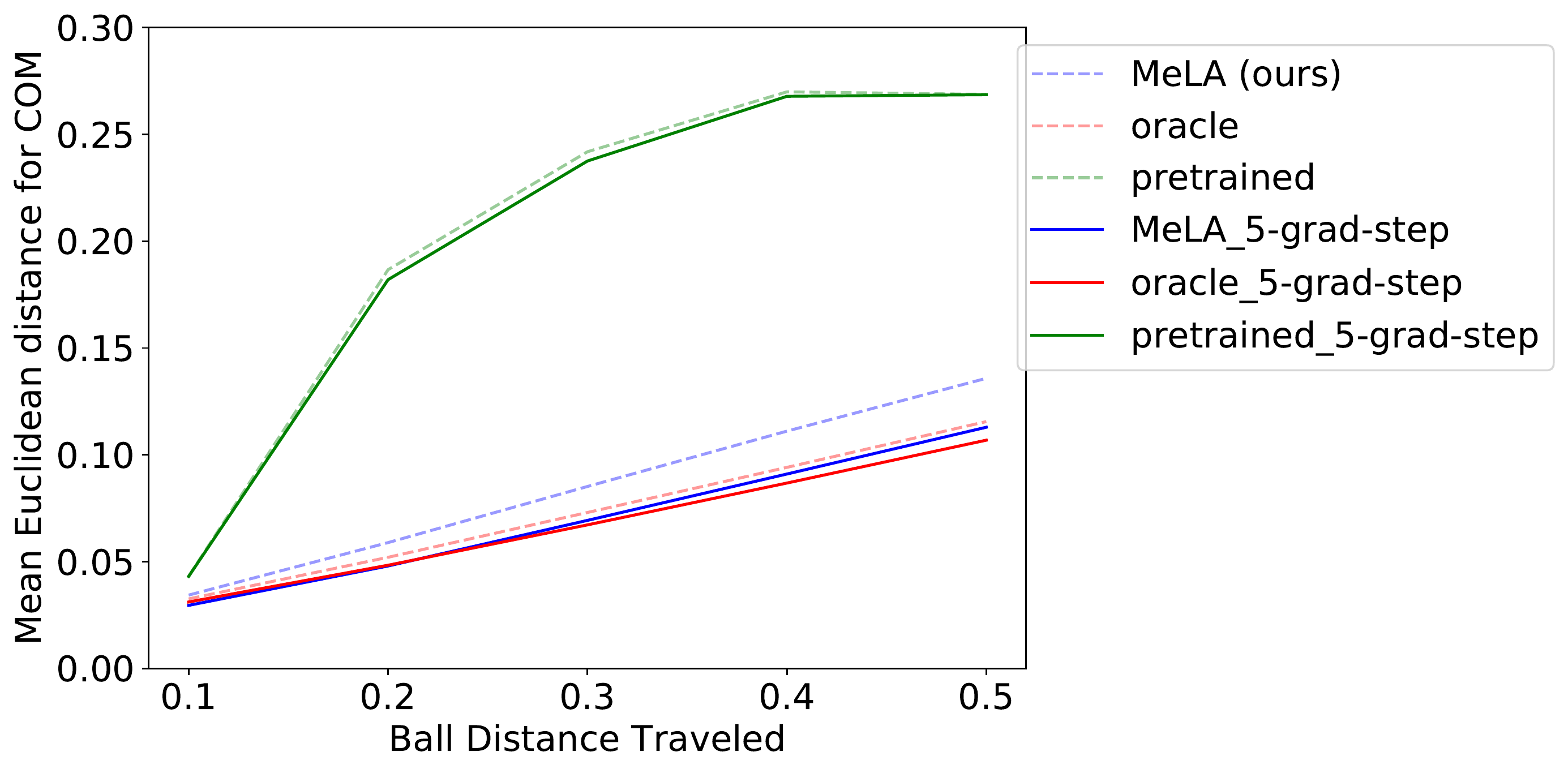}
  \caption{}
  \label{fig:bounce-images_perform}
\end{subfigure}
\caption{(a) Example MeLA prediction vs. true trajectory for 5 rollout steps, with an unseen testing environment without gradient steps. (b) Mean Euclidean distance between the center of mass (COM) of true trajectory and prediction vs. rollout distance traveled for MeLA, pretrained and oracle on 100 randomly generated testing environments.}
\label{fig:bounce_image_quicklearn}
\end{figure}

\section{Conclusions}

In this paper, we have introduced MeLA, an algorithm for rapid recognition and determination of physical models in few-shot/meta-learning. We demonstrate that MeLA can transform a model intended for single-task learning into one that can quickly adapt with a few examples to a new task. Further, we show that MeLA allows the model to improve with a few gradient steps, for fast few-shot learning. We demonstrate how MeLA learns more accurately and with fewer examples than both the original model it is based on, and the state-of-the-art meta-learning algorithm MAML. We also demonstrate two additional capabilities of MeLA: its ability to identify influential examples, and how MeLA can interactively request informative examples to optimize learning.

A core enabler of human's skill to handle novel tasks is
our ability to quickly recognize and propose models in new environments, based on previously learned physical models. We believe that by incorporating this ability, machine learning models will become more adaptive and capable for new environments and unsolved problems.



\chapter{Rank Pruning for robust learning with noisy labels}
\label{chap8:rankpruning}

$\tilde{P}\tilde{N}$ learning\footnote{The paper ``Learning with Confident Examples: Rank Pruning for Robust Classification with Noisy Labels'' is published at Conference on Uncertainty in Artificial Intelligence
(\emph{UAI} 2017) \cite{northcutt2017learning}. Authors: Curtis G. Northcutt*, Tailin Wu*, Isaac L. Chuang, where * dentes equal contributions.} is the problem of binary classification when training examples may be mislabeled (flipped) uniformly with noise rate $\rho_1$ for positive examples and $\rho_0$ for negative examples. We propose Rank Pruning (RP) to solve $\tilde{P}\tilde{N}$ learning and the open problem of estimating the noise rates, i.e. the fraction of wrong positive and negative labels. 
Unlike prior solutions, RP is time-efficient and general, requiring $\mathcal{O}(T)$ for any unrestricted choice of probabilistic classifier with $T$ fitting time. We prove RP has consistent noise estimation and equivalent expected risk as learning with uncorrupted labels in ideal conditions, and derive closed-form solutions when conditions are non-ideal. RP achieves state-of-the-art noise estimation and F1, error, and AUC-PR for both MNIST and CIFAR datasets, regardless of the amount of noise and performs similarly impressively when a large portion of training examples are noise drawn from a third distribution. To highlight, RP with a CNN classifier can predict if an MNIST digit is a \emph{one} or \emph{not} with only $0.25\%$ error, and $0.46\%$ error across all digits, even when 50\% of positive examples are mislabeled and 50\% of observed positive labels are mislabeled negative examples.

\section{Introduction}
\label{sec:intro}

Consider a student with no knowledge of animals tasked with learning to classify whether a picture contains a dog. A teacher shows the student example pictures of lone four-legged animals, stating whether the image contains a dog or not. Unfortunately, the teacher may often make mistakes, asymmetrically, with a significantly large false positive rate, $\rho_1 \in [0,1]$, and significantly large false negative rate, $\rho_0 \in [0,1]$. The teacher may also include ``white noise" images with a uniformly random label. This information is unknown to the student, who only knows of the images and corrupted labels, but suspects that the teacher may make mistakes. Can the student (1) estimate the mistake rates, $\rho_1$ and $\rho_0$, (2) learn to classify pictures with dogs accurately, and (3) do so efficiently (e.g. less than an hour for 50 images)? This allegory clarifies the challenges of $\tilde{P}\tilde{N}$ learning for any classifier trained with corrupted labels, perhaps with intermixed noise examples. We elect the notation $\tilde{P}\tilde{N}$ to emphasize that both the positive and negative sets may contain mislabeled examples, reserving $P$ and $N$ for uncorrupted sets.

This example illustrates a fundamental reliance of supervised learning on training labels \citep{Michalski:1986:MLA:21934}. Traditional learning performance degrades monotonically with label noise \citep{Aha1991, Nettleton2010}, necessitating semi-supervised approaches \citep{Blanchard:2010:SND:1756006.1953028}. Examples of noisy datasets are medical \citep{raviv_heart}, human-labeled \citep{mech_turk_quality}, and sensor \citep{lane2010survey} datasets. The problem of uncovering the same classifications as if the data was not mislabeled is our fundamental goal.

Towards this goal, we introduce Rank Pruning\footnote{ Rank Pruning is open-source and available at \href{http://bit.ly/2pbtjR6}{https://github.com/cgnorthcutt/rankpruning}}, an algorithm for $\tilde{P}\tilde{N}$ learning composed of two sequential parts: (1) estimation of the asymmetric noise rates $\rho_1$ and $\rho_0$ and (2) removal of mislabeled examples prior to training. The fundamental mantra of Rank Pruning is \emph{learning with confident examples}, i.e. examples with a predicted probability of being positive \emph{near} $1$ when the label is positive or $0$ when the label is negative. If we imagine non-confident examples as a noise class, separate from the confident positive and negative classes, then their removal should unveil a subset of the uncorrupted data. 

An ancillary mantra of Rank Pruning is \emph{removal by rank} which elegantly exploits ranking without sorting. Instead of pruning non-confident examples by predicted probability, we estimate the number of mislabeled examples in each class. We then remove the $k^{th}$-most or $k^{th}$-least examples, \emph{ranked} by predicted probability, via the BFPRT algorithm \citep{Blum:1973:TBS:1739940.1740109} in $\mathcal{O}(n)$ time, where $n$ is the number of training examples. \emph{Removal by rank} mitigates sensitivity to probability estimation and exploits the reduced complexity of learning to rank over probability estimation \citep{Menon2012PredictingAP}. Together, \emph{learning with confident examples} and \emph{removal by rank} enable robustness, i.e. invariance to erroneous input deviation.

Beyond prediction, confident examples help estimate $\rho_1$ and $\rho_0$. Typical approaches require averaging predicted probabilities on a holdout set \citep{Liu:2016:CNL:2914183.2914328, Elkan:2008:LCO:1401890.1401920} tying noise estimation to the accuracy of the predicted probabilities, which in practice may be confounded by added noise or poor model selection. Instead, we estimate $\rho_1$ and $\rho_0$ as a fraction of the predicted counts of confident examples in each class, encouraging robustness for variation in probability estimation.

\subsection{Related Work}

Rank Pruning bridges framework, nomenclature, and application across $PU$ and $\tilde{P}\tilde{N}$ learning. In this section, we consider the contributions of Rank Pruning in both.

\subsubsection{\texorpdfstring{$PU$}{PU} Learning}

Positive-unlabeled ($PU$) learning is a binary classification task in which a subset of positive training examples are labeled, and the rest are unlabeled. For example, co-training \citep{Blum:1998:CLU:279943.279962, Nigam00understandingthe} with labeled and unlabeled examples can be framed as a $PU$ learning problem by assigning all unlabeled examples the label `0'. $PU$ learning methods often assume corrupted negative labels for the unlabeled examples $U$ such that $PU$ learning is $\tilde{P}\tilde{N}$ learning with no mislabeled examples in $P$, hence their naming conventions.

Early approaches to $PU$ learning modified the loss functions via weighted logistic regression \citep{lee2003PUlearning_weightedlogreg} and biased SVM \citep{Liu:2003:BTC:951949.952139} to penalize more when positive examples are predicted incorrectly. Bagging SVM \citep{Mordelet:2014:BSL:2565612.2565683} and RESVM \citep{Claesen201573} extended biased SVM to instead use an ensemble of classifiers trained by resampling $U$ (and $P$ for RESVM) to improve robustness \citep{Breiman:1996:BP:231986.231989}. RESVM claims state-of-the-art for $PU$ learning, but is impractically inefficient for large datasets because it requires optimization of five parameters and suffers from the pitfalls of SVM model selection \citep{Chapelle:1999:MSS:3009657.3009690}. \cite{Elkan:2008:LCO:1401890.1401920} introduce a formative time-efficient probabilistic approach (denoted \emph{Elk08}) for $PU$ learning that directly estimates $1 - \rho_1$ by averaging predicted probabilities of a holdout set and dividing all predicted probabilities by $1 - \rho_1$. On the SwissProt database, \emph{Elk08} was 621 times faster than biased SVM, which only requires two parameter optimization. However, \emph{Elk08} noise rate estimation is sensitive to inexact probability estimation and both RESVM and \emph{Elk08} assume $P$ = $\tilde{P}$ and do not generalize to $\tilde{P}\tilde{N}$ learning.  Rank Pruning leverages \emph{Elk08} to initialize $\rho_1$, but then re-estimates $\rho_1$ using confident examples for both robustness (RESVM) and efficiency (\emph{Elk08}).

\begin{table*}[t]
\caption{Variable definitions and descriptions for $\tilde{P}\tilde{N}$ learning and  PU learning. Related work contains a prominent author using each variable. $\rho_1$ is also referred to as \emph{contamination} in PU learning literature.}
 \vskip -0.15in
\label{t:definitions}
\begin{center}
\begin{small}
\begin{sc}

\resizebox{\textwidth}{!}{
\begin{tabular}{lcccr}
\toprule
\abovestrut{0.13in}\belowstrut{0.05in}
\textbf{Variable} & \textbf{Conditional} & \textbf{Description} & \textbf{Domain} & \textbf{Related} Work \\
\midrule
\abovestrut{0.1in}

$\rho_0$ & $P(s=1|y=0)$ & Fraction of $N$ examples mislabeled as positive & $\tilde{P}\tilde{N}$ & Liu \\
$\rho_1$ & $P(s=0|y=1)$ & Fraction of $P$ examples mislabeled as negative & $\tilde{P}\tilde{N}$, PU & Liu, Claesen \\
$\pi_0$ & $P(y=1|s=0)$ & Fraction of mislabeled examples in $\tilde{N}$ & $\tilde{P}\tilde{N}$ & Scott \\
$\pi_1$ & $P(y=0|s=1)$ & Fraction of mislabeled examples in $\tilde{P}$ & $\tilde{P}\tilde{N}$ & Scott \\
$c = 1- \rho_1$ & $P(s=1|y=1)$ & Fraction of correctly labeled $P$ if $P(y=1|s=1) = 1$ & PU & Elkan \\

\bottomrule
\end{tabular}
}
\end{sc}
\end{small}
\end{center}
 \vskip -0.25in
\end{table*}

\begin{table*}[b]
\renewcommand{\arraystretch}{.5}
\caption{Summary of state-of-the-art and selected general solutions to $\tilde{P}\tilde{N}$ and $PU$ learning.}
\vskip -.17in
\label{t:related_works}
\begin{center}
\begin{small}
\begin{sc}
\resizebox{\textwidth}{!}{
\begin{tabular}{lcccccccc}
\toprule
\abovestrut{0.05in}\belowstrut{0.05in}
\textbf{Related Work} & \textbf{Noise} & \textbf{$\tilde{P}\tilde{N}$} & \textbf{$PU$} & \textbf{Any Prob.} & \textbf{Prob Estim.} &                                     \textbf{Time}         & \textbf{Theory} & \textbf{Added} \\
                      & \textbf{Estim.} &                               &               & \textbf{Classifier} & \textbf{Robustness} &               \textbf{Efficient} & \textbf{Support} & \textbf{Noise} \\
\midrule
\abovestrut{0.1in}

\cite{Elkan:2008:LCO:1401890.1401920} & \checkmark &            & \checkmark & \checkmark &            & \checkmark & \checkmark & \\
\midrule
\cite{Claesen201573} &            &            & \checkmark &            & \checkmark &            &   &         \\
\midrule
\cite{ScottBH13} & \checkmark &            &            & \checkmark & \checkmark &            & \checkmark &\\
\midrule
\cite{NIPS2013_5073} &            & \checkmark & \checkmark & \checkmark & \checkmark & \checkmark & \checkmark &\\
\midrule
\cite{Liu:2016:CNL:2914183.2914328} &            & \checkmark & \checkmark & \checkmark &            & \checkmark & \checkmark &\\
\midrule
\midrule
\textbf{Rank Pruning} & \checkmark & \checkmark & \checkmark & \checkmark & \checkmark & \checkmark & \checkmark & \checkmark \\

\bottomrule
\end{tabular}
}
\end{sc}
\end{small}
\end{center}
\end{table*}

\subsubsection{\texorpdfstring{$\tilde{P}\tilde{N}$}{Noisy PN} Learning}

Theoretical approaches for $\tilde{P}\tilde{N}$ learning often have two steps: (1) estimate the noise rates, $\rho_1$, $\rho_0$, and (2) use $\rho_1$, $\rho_0$ for prediction. To our knowledge, Rank Pruning is the only time-efficient solution for the open problem \mbox{\citep{Liu:2016:CNL:2914183.2914328, ICML2012Yang_127}} of noise estimation.

We first consider relevant work in noise rate estimation. \cite{ScottBH13} established a lower bound method for estimating the \emph{inversed} noise rates $\pi_1$ and $\pi_0$ (defined in Table \ref{t:definitions}). However, the method can be intractable due to unbounded convergence and assumes that the positive and negative distributions are mutually irreducible. Under additional assumptions, \cite{scott2015rate} proposed a time-efficient method for noise rate estimation, but \cite{Liu:2016:CNL:2914183.2914328} reported poor performance. \cite{Liu:2016:CNL:2914183.2914328} used the minimum predicted probabilities as the noise rates, which often yields futile estimates of min = $0$. \cite{NIPS2013_5073} provide no method for estimation and view the noise rates as parameters optimized with cross-validation, inducing a sacrificial accuracy, efficiency trade-off. In comparison, Rank Pruning noise rate estimation is time-efficient, consistent in ideal conditions, and robust to imperfect probability estimation.

\cite{NIPS2013_5073} developed two methods for prediction in the $\tilde{P}\tilde{N}$ setting which modify the loss function. The first method constructs an unbiased estimator of the loss function for the true distribution from the noisy distribution, but the estimator may be non-convex even if the original loss function is convex. If the classifier's loss function cannot be modified directly, this method requires splitting each example in two with class-conditional weights and ensuring split examples are in the same batch during optimization. For these reasons, we instead compare Rank Pruning with their second method (\emph{Nat13}), which constructs a label-dependent loss function such that for 0-1 loss, the minimizers of \emph{Nat13}'s risk and the risk for the true distribution are equivalent.

\cite{Liu:2016:CNL:2914183.2914328} generalized \emph{Elk08} to the $\tilde{P}\tilde{N}$ learning setting by modifying the loss function with per-example importance reweighting (\emph{Liu16}), but reweighting terms are derived from predicted probabilities which may be sensitive to inexact estimation. To mitigate sensitivity, \cite{Liu:2016:CNL:2914183.2914328} examine the use of density ratio estimation \citep{Sugiyama:2012:DRE:2181148}. Instead, Rank Pruning mitigates sensitivity by learning from confident examples selected by rank order, not predicted probability. For fairness of comparison across methods, we compare Rank Pruning with their probability-based approach.

Assuming perfect estimation of $\rho_1$ and $\rho_0$, we, \cite{NIPS2013_5073}, and \cite{Liu:2016:CNL:2914183.2914328} all prove that the expected risk for the modified loss function is equivalent to the expected risk for the perfectly labeled dataset. However, both \cite{NIPS2013_5073} and \cite{Liu:2016:CNL:2914183.2914328} effectively "flip" example labels in the construction of their loss function, providing no benefit for added random noise. In comparison, Rank Pruning will also remove added random noise because noise drawn from a third distribution is unlikely to appear confidently positive or negative. Table \ref{t:related_works} summarizes our comparison of $\tilde{P}\tilde{N}$ and $PU$ learning methods.

Procedural efforts have improved robustness to mislabeling in the context of machine vision \citep{Xiao2015LearningFM}, neural networks \citep{noisy_boostrapping_google}, and face recognition \citep{angelova2005pruning}. Though promising, these methods are restricted in theoretical justification and generality, motivating the need for Rank Pruning.

\subsection{Contributions}

In this paper, we describe the Rank Pruning algorithm for binary classification with imperfectly labeled training data. In particular, we:

\vskip -0.05in
\begin{itemize}
\vskip -0.05in
\itemsep.05in 
        \item Develop a robust, time-efficient, general solution for both $\tilde{P}\tilde{N}$ learning, i.e. binary classification with noisy labels, and estimation of the fraction of mislabeling in both the positive and negative training sets.
        \item Introduce the \emph{learning with confident examples} mantra as a new way to think about robust classification and estimation with mislabeled training data.
        \item Prove that under assumptions, Rank Pruning achieves perfect noise estimation and equivalent expected risk as learning with correct labels. We provide closed-form solutions when those assumptions are relaxed.
        \item Demonstrate that Rank Pruning performance generalizes across the number of training examples, feature dimension, fraction of mislabeling, and fraction of added noise examples drawn from a third distribution.
        \item Improve the state-of-the-art of $\tilde{P}\tilde{N}$ learning across F1 score, AUC-PR, and Error. In many cases, Rank Pruning achieves nearly the same F1 score as learning with correct labels when 50\% of positive examples are mislabeled and 50\% of observed positive labels are mislabeled negative examples.
    \end{itemize}

\section{Framing the \texorpdfstring{$\tilde{P}\tilde{N}$}{Noisy PN} Learning Problem }
\label{sec:framing}

\vskip -0.04in

In this section, we formalize the foundational definitions, assumptions, and goals of the $\tilde{P}\tilde{N}$ learning problem illustrated by the student-teacher motivational example.

Given $n$ observed training examples $x \in \mathcal{R}^D$ with associated observed corrupted labels $s \in \{0,1\}$ and unobserved true labels $y \in \{0,1\}$, we seek a binary classifier $f$ that estimates the mapping $x\to y$. Unfortunately, if we fit the classifier using observed $(x, s)$ pairs, we estimate the mapping $x\to s$ and obtain $g(x)=P(\hat{s}=1|x)$. 

We define the observed noisy positive and negative sets as $\tilde{P} = \{x| s = 1\}, \tilde{N} = \{x| s = 0\}$ and the unobserved true positive and negative sets as $P = \{x| y = 1\}, N = \{x| y = 0\}$. Define the hidden training data as $D=\{(x_1, y_1), (x_2, y_2), ..., (x_n, y_n)\}$, drawn i.i.d. from some true distribution $\mathcal{D}$. 
We assume that a class-conditional Classification Noise Process (CNP) \citep{angluin1988learning} maps $y$ true labels to $s$ observed labels such that each label in $P$ is flipped independently with probability $\rho_1$ and each label in $N$ is flipped independently with probability $\rho_0$ ($s \leftarrow CNP(y, \rho_1, \rho_0)$). 
The resulting observed, corrupted dataset is $D_{\rho}=\{(x_1, s_1), (x_2, s_2), ..., (x_n, s_n)\}$. Therefore, $(s \independent x) | y$ and $P(s=s|y=y,x) = P(s=s|y=y)$. In recent work, CNP is referred to as the random noise classification (RCN) noise model \citep{Liu:2016:CNL:2914183.2914328, NIPS2013_5073}.

The noise rate $\rho_1 = P(s=0|y=1)$ is the fraction of $P$ examples mislabeled as negative and the noise rate $\rho_0 = P(s=1|y=0)$ is the fraction of $N$ examples mislabeled as positive. Note that $\rho_1 + \rho_0 < 1$ is a necessary condition, otherwise more examples would be mislabeled than labeled correctly. Thus, $\rho_0 < 1- \rho_1$. We elect a subscript of ``0" to refer to the negative set and a subscript of ``1" to refer to the positive set. Additionally, let $p_{s1} = P(s=1)$ be the fraction of corrupted labels that are positive and  $p_{y1} = P(y=1)$ be the fraction of true labels that are positive. It follows that the inversed noise rates are $\pi_1 = P(y=0|s=1) =  \frac{\rho_0 (1 - p_{y1})}{p_{s1}}$ and $\pi_0 = P(y=1|s=0) =  \frac{\rho_1 p_{y1}}{(1 - p_{s1})}$. Combining these relations, given any pair in $\{(\rho_0, \rho_1),(\rho_1, \pi_1),(\rho_0, \pi_0), (\pi_0, \pi_1)\}$, the remaining two and $p_{y1}$ are known. 

We consider five levels of assumptions for $P$, $N$, and $g$: \\
\textbf{Perfect Condition}: $g$ is a ``perfect" probability estimator iff $g(x) = g^*(x)$ where $g^*(x) = P(s=1|x)$. Equivalently, let $g(x) = P(s=1|x)+\Delta g(x)$. Then $g(x)$ is ``perfect" when $\Delta g(x) = 0$ and ``imperfect" when $\Delta g(x) \neq 0$. $g$ may be imperfect due to the method of estimation or due to added uniformly randomly labeled examples drawn from a third noise distribution.\\
\textbf{Non-overlapping Condition}: $P$ and $N$ have ``non-overlapping support" if $P(y=1|x) = \indicator{y=1}$, where the indicator function $\indicator{a}$ is $1$ if the $a$ is true, else $0$. \\
\textbf{Ideal Condition}\footnote{ Eq. (\ref{eq1}) is first derived in \citep{Elkan:2008:LCO:1401890.1401920} .}: $g$ is ``ideal" when both perfect and non-overlapping conditions hold and $(s \independent x) | y$ such that

\vskip -0.25in
\begin{equation} \label{eq1}
\begin{split}
g(x) = & g^*(x) = P(s=1|x) \\ 
 = & P(s=1|y=1,x) \cdot P(y=1|x) + P(s=1|y=0,x) \cdot P(y=0|x) \\
 = & (1 - \rho_1) \cdot \mathbbm{1}[[y=1]] + \rho_0 \cdot \mathbbm{1}[[y=0]]
\end{split}
\end{equation}
\vskip -0.1in

\textbf{Range Separability Condition} $g$ range separates $P$ and $N$ iff $\forall x_1\in P$ and $\forall x_2 \in N$, we have $g(x_1) >g(x_2)$. \\
\textbf{Unassuming Condition}: $g$ is ``unassuming" when perfect and/or non-overlapping conditions may not be true. 

Their relationship is:
$\textbf{Unassuming}\supset\textbf{Range}$ $\textbf{Separability}$ $\supset\textbf{Ideal}=\textbf{Perfect}\cap\textbf{Non-overlapping}$. 

We can now state the two goals of Rank Pruning for $\tilde{P}\tilde{N}$ learning. \textbf{Goal 1} is to perfectly estimate $\hat{\rho}_1 \estimates \rho_1$ and $\hat{\rho}_0  \estimates \rho_0$ when $g$ is ideal. When $g$ is not ideal, to our knowledge perfect estimation of $\rho_1$ and $\rho_0$ is impossible and at best \textbf{Goal 1} is to provide exact expressions for $\hat{\rho}_1$ and $\hat{\rho}_0$ w.r.t. $\rho_1$ and $\rho_0$. \textbf{Goal 2} is to use $\hat{\rho}_1$ and $\hat{\rho}_0$ to uncover the classifications of $f$ from $g$. Both tasks must be accomplished given only observed ($x,s$) pairs. $y, \rho_1, \rho_0, \pi_1$, and $\pi_0$ are hidden.

\section{Rank Pruning}\label{methodology}

We develop the Rank Pruning algorithm to address our two goals. In Section \ref{method:noise}, we propose a method for noise rate estimation and prove consistency when $g$ is ideal. An estimator is ``consistent" if it achieves perfect estimation in the expectation of infinite examples. In Section \ref{method:noise_non_ideal}, we derive exact expressions for $\hat{\rho}_1$ and $\hat{\rho}_0$ when $g$ is unassuming. In Section \ref{method:rp}, we provide the entire algorithm, and in Section \ref{method:risk}, prove that Rank Pruning has equivalent expected risk as learning with uncorrupted labels for both ideal $g$ and non-ideal $g$ with weaker assumptions. Throughout, we assume $n \rightarrow \infty$ so that $P$ and $N$ are the hidden distributions, each with infinite examples. This is a necessary condition for Theorems. \ref{thm:noise_ideal}, \ref{rho_conf_robust} and Lemmas \ref{thm:lemma1}, \ref{lemma3}.

\subsection{Deriving Noise Rate Estimators \texorpdfstring{$\hat{\rho}_1^{conf}$ and $\hat{\rho}_0^{conf}$}{}} \label{method:noise}

We propose the \emph{confident counts} estimators $\hat{\rho}_1^{conf}$ and $\hat{\rho}_0^{conf}$ to estimate $\rho_1$ and $\rho_0$ as a fraction of the predicted counts of confident examples in each class, encouraging robustness for variation in probability estimation. To estimate $\rho_1=P(s=0|y=1)$, we count the number of examples with label $s=0$ that we are ``confident" have label $y=1$ and divide it by the total number of examples that we are ``confident" have label $y = 1$. More formally, 

\vskip -0.2in
\begin{equation}\label{define_rho_conf}
\hat{\rho}_1^{conf}:=\frac{|\tilde{N}_{y=1}|}{|\tilde{N}_{y=1}|+|\tilde{P}_{y=1}|}, \quad
\hat{\rho}_0^{conf}:=\frac{|\tilde{P}_{y=0}|}{|\tilde{P}_{y=0}|+|\tilde{N}_{y=0}|}
\end{equation}
\vskip -0.1in

such that

\vskip -0.05in
\begin{equation}\label{define_threshold}
\begin{cases}
\tilde{P}_{y=1}=\{x\in\tilde{P}\mid g(x)\geq LB_{y=1}\}\\
\tilde{N}_{y=1}=\{x\in\tilde{N}\mid g(x)\geq LB_{y=1}\}\\
\tilde{P}_{y=0}=\{x\in\tilde{P}\mid g(x)\leq UB_{y=0}\}\\
\tilde{N}_{y=0}=\{x\in\tilde{N}\mid g(x)\leq UB_{y=0}\}
\end{cases}
\end{equation}
\vskip -0.1in

where $g$ is fit to the corrupted training set $D_{\rho}$ to obtain $g(x)=P(\hat{s}=1|x)$. The threshold $LB_{y=1}$ is the predicted probability in $g(x)$ above which we guess that an example $x$ has hidden label $y = 1$, and similarly for upper bound $UB_{y=0}$. $LB_{y=1}$ and $UB_{y=0}$ partition $\tilde{P}$ and $\tilde{N}$ into four sets representing a \emph{best guess} of a \emph{subset} of examples having labels (1) $s=1, y=0$, (2) $s=1, y=1$, (3) $s=0, y=0$, (4) $s=0, y=1$. The threshold values are defined as

\vskip -0.15in
\begin{equation*}
\begin{cases}
LB_{y=1}:=P(\hat{s}=1\mid s=1)=E_{x\in\tilde{P}}[g(x)]\\
UB_{y=0}:=P(\hat{s}=1\mid s=0)=E_{x\in\tilde{N}}[g(x)]\\
\end{cases}
\end{equation*}
\vskip -0.1in

where $\hat{s}$ is the predicted label from a classifier fit to the observed data. $|\tilde{P}_{y=1}|$ counts examples with label $s=1$ that are \emph{most} likely to be correctly labeled ($y=1$) because $LB_{y=1} = P(\hat{s}=1|s=1)$. The three other terms in Eq. (\ref{define_threshold}) follow similar reasoning. Importantly, the four terms do not sum to $n$, i.e. $|N| + |P|$, but $\hat{\rho}_1^{conf}$ and $\hat{\rho}_0^{conf}$ are valid estimates because mislabeling noise is assumed to be uniformly random. 
The choice of threshold values relies on the following two important equations:

\vskip -0.25in
\begin{align*}
LB_{y=1}=&E_{x\in\tilde{P}}[g(x)]=E_{x\in\tilde{P}}[P(s=1|x)]\\
=&E_{x\in\tilde{P}}[P(s=1|x,y=1)P(y=1|x)+P(s=1|x,y=0)P(y=0|x)]\\
=&E_{x\in\tilde{P}}[P(s=1|y=1)P(y=1|x)+P(s=1|y=0)P(y=0|x)]\\
=&(1-\rho_1)(1-\pi_1)+\rho_0\pi_1\numberthis \label{rh1_prob_eq}
\end{align*}
\vskip -0.05in

Similarly, we have

\vskip -0.15in
\begin{equation}\label{rh0_prob_eq}
UB_{y=0}=(1-\rho_1)\pi_0+\rho_0(1-\pi_0)
\end{equation}
\vskip -0.05in

To our knowledge, although simple, this is the first time that the relationship in Eq. (\ref{rh1_prob_eq}) (\ref{rh0_prob_eq}) has been published, linking the work of \cite{Elkan:2008:LCO:1401890.1401920}, \cite{Liu:2016:CNL:2914183.2914328}, \cite{ScottBH13} and \cite{NIPS2013_5073}. From Eq. (\ref{rh1_prob_eq}) (\ref{rh0_prob_eq}), we observe that $LB_{y=1}$ and $UB_{y=0}$ are linear interpolations of $1-\rho_1$ and $\rho_0$ and since $\rho_0 < 1- \rho_1$, we have that $\rho_0 < LB_{y=1} \leq 1-\rho_1$ and $\rho_0 \leq UB_{y=0} < 1-\rho_1$. When $g$ is ideal we have that $g(x)=(1-\rho_1)$, if $x \in P$ and $g(x)=\rho_0$, if $x \in N$. Thus  when $g$ is ideal, the thresholds $LB_{y=1}$ and  $UB_{y=0}$ in Eq. (\ref{define_threshold}) will perfectly separate $P$ and $N$ examples within each of $\tilde{P}$ and $\tilde{N}$. Lemma \ref{thm:lemma1} immediately follows.

\begin{lemma}\label{thm:lemma1}
\vskip -.1in
When $g$ is ideal,
\vskip -0.25in

\begin{align*}
\tilde{P}_{y=1} &= \{x\in P\mid s=1\},\quad
\tilde{N}_{y=1} = \{x\in P\mid s=0\},\\
\tilde{P}_{y=0} &= \{x\in N\mid s=1\},\quad
\tilde{N}_{y=0} = \{x\in N\mid s=0\}\numberthis
\end{align*}
\vskip -0.3in
\end{lemma}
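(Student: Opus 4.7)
The plan is to reduce the lemma to a four-case check using only two facts: under the ideal condition, $g$ is two-valued, and the thresholds $LB_{y=1}$, $UB_{y=0}$ lie strictly between those two values. The main obstacle is merely keeping straight which inequality is strict vs.\ weak; there is no analytic work.

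First I would extract the two-value structure of $g$. By the non-overlapping condition $P(y{=}1|x)=\mathbbm{1}[y=1]$, every $x$ satisfies either $P(y{=}1|x)=1$ or $P(y{=}0|x)=1$. Substituting into the derivation leading to Eq.~\eqref{eq1} (which is already assumed from the excerpt), the perfect-estimation hypothesis gives
\[
g(x) \;=\; \begin{cases} 1-\rho_1, & x\in P,\\ \rho_0, & x\in N.\end{cases}
\]
So the range of $g$ on the full population is the two-point set $\{\rho_0,\,1-\rho_1\}$.

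Second, I would place $LB_{y=1}$ and $UB_{y=0}$ inside this two-point range. From Eqs.~\eqref{rh1_prob_eq}--\eqref{rh0_prob_eq}, both thresholds are convex combinations of $\rho_0$ and $1-\rho_1$ with weights $\pi_1,1-\pi_1$ and $\pi_0,1-\pi_0$, respectively. Combined with the necessary condition $\rho_1+\rho_0<1$, i.e.\ $\rho_0<1-\rho_1$, this is exactly the chain of inequalities already noted in the excerpt:
\[
\rho_0 \;<\; LB_{y=1} \;\le\; 1-\rho_1, \qquad \rho_0 \;\le\; UB_{y=0} \;<\; 1-\rho_1.
\]
(The degenerate cases $\pi_1=0$ or $\pi_0=1$ make one side non-strict, but they correspond to an empty subpopulation and cause no trouble below.)

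Third, I would do the four-case verification. For $\tilde{P}_{y=1}=\{x\in\tilde{P}:g(x)\ge LB_{y=1}\}$: an $x\in\tilde{P}$ with $y=1$ has $g(x)=1-\rho_1\ge LB_{y=1}$ and is therefore included, while an $x\in\tilde{P}$ with $y=0$ has $g(x)=\rho_0<LB_{y=1}$ and is excluded. Hence $\tilde{P}_{y=1}=\tilde{P}\cap P=\{x\in P:s=1\}$. The other three sets are handled by the same dichotomy: for $\tilde{N}_{y=1}$ the threshold $LB_{y=1}$ again separates $g=1-\rho_1$ from $g=\rho_0$, so $\tilde{N}_{y=1}=\tilde{N}\cap P=\{x\in P:s=0\}$; for $\tilde{P}_{y=0}$ and $\tilde{N}_{y=0}$ the threshold $UB_{y=0}<1-\rho_1$ excludes every $x$ with $y=1$ and (since $UB_{y=0}\ge\rho_0$) includes every $x$ with $y=0$, giving $\tilde{P}_{y=0}=\{x\in N:s=1\}$ and $\tilde{N}_{y=0}=\{x\in N:s=0\}$. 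Taking $n\to\infty$ as standing assumption in Sec.~\ref{method:noise} makes the population identities exact, completing the proof.
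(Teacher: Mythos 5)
Your proof is correct and follows essentially the same route as the paper's Appendix proof of Lemma~1: establish the two-valued form of $g$ via the ideal condition, place the thresholds $LB_{y=1}, UB_{y=0}$ strictly between $\rho_0$ and $1-\rho_1$, and then resolve each of the four sets by comparing $g(x)$ against the thresholds. One minor slip: the degenerate cases the paper identifies are $\pi_1=0$ (making $LB_{y=1}\le 1-\rho_1$ non-strict) and $\pi_0=0$ (making $\rho_0\le UB_{y=0}$ non-strict), not $\pi_0=1$, which the constraint $\rho_1+\rho_0<1$ excludes; but since your argument only uses the non-strict side for the ``include'' direction and the strict side for the ``exclude'' direction, this does not affect the conclusion.
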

\vskip -0.15in

Thus, when $g$ is ideal, the thresholds in Eq. (\ref{define_threshold}) partition the training set such that $\tilde{P}_{y=1}$ and $\tilde{N}_{y=0}$ contain the correctly labeled examples and $\tilde{P}_{y=0}$ and $\tilde{N}_{y=1}$ contain the mislabeled examples. Theorem \ref{thm:noise_ideal} follows (for brevity, proofs of all theorems/lemmas are in Appendix \ref{sec:A1}-\ref{sec:A5}).

\begin{theorem} \label{thm:noise_ideal}

When $g$ is ideal,
\vskip -0.15in
\begin{equation}
\hat{\rho}_1^{conf}=\rho_1,\quad\hat{\rho}_0^{conf}=\rho_0
\end{equation}
\end{theorem}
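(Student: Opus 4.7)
The plan is to derive both identities directly from Lemma~\ref{thm:lemma1} by converting the set cardinalities in the definitions of $\hat{\rho}_1^{conf}$ and $\hat{\rho}_0^{conf}$ into expectations under the CNP assumption, exploiting the infinite-sample regime $n\to\infty$ so that empirical frequencies coincide with true probabilities.

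First I would substitute the four set identifications from Lemma~\ref{thm:lemma1} into \eqref{define_rho_conf}. Concretely, when $g$ is ideal, $\tilde{N}_{y=1}=\{x\in P\mid s=0\}$ and $\tilde{P}_{y=1}=\{x\in P\mid s=1\}$, so that $|\tilde{N}_{y=1}|+|\tilde{P}_{y=1}|=|P|$. By the class-conditional noise model, each $x\in P$ is independently flipped to $s=0$ with probability $\rho_1$, so in the limit $n\to\infty$ the strong law of large numbers gives $|\tilde{N}_{y=1}|/|P|\to\rho_1$ and $|\tilde{P}_{y=1}|/|P|\to 1-\rho_1$. Hence
\begin{equation*}
\hat{\rho}_1^{conf}=\frac{|\tilde{N}_{y=1}|}{|\tilde{N}_{y=1}|+|\tilde{P}_{y=1}|}\longrightarrow\frac{\rho_1|P|}{\rho_1|P|+(1-\rho_1)|P|}=\rho_1.
\end{equation*}

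The identical argument applied to $\tilde{P}_{y=0}=\{x\in N\mid s=1\}$ and $\tilde{N}_{y=0}=\{x\in N\mid s=0\}$, together with $P(s=1\mid y=0)=\rho_0$, yields $\hat{\rho}_0^{conf}=\rho_0$. The only subtle point is justifying that Lemma~\ref{thm:lemma1} actually applies as a clean partition of $\tilde{P}$ and $\tilde{N}$; this follows because under the ideal condition $g$ takes only the two values $1-\rho_1$ and $\rho_0$, and since $\rho_0<1-\rho_1$, one checks from \eqref{rh1_prob_eq}--\eqref{rh0_prob_eq} (which express $LB_{y=1}$ and $UB_{y=0}$ as convex combinations of these two values) that the thresholds strictly separate $P$ and $N$ within both $\tilde{P}$ and $\tilde{N}$. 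I would include a one-line verification of this strict inequality $\rho_0<LB_{y=1}\le 1-\rho_1$ (and analogously for $UB_{y=0}$) to make the reduction to Lemma~\ref{thm:lemma1} airtight.

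The main obstacle is conceptual rather than technical: one must be careful that the denominators in \eqref{define_rho_conf} are the counts of confident examples per guessed class and not the total class sizes, and that the CNP independence assumption $(s\independent x)\mid y$ is what makes the fractions within the confident subsets reproduce the global noise rates. Once the partition from Lemma~\ref{thm:lemma1} is in hand and the $n\to\infty$ limit is invoked, the arithmetic collapses in one line for each noise rate, so no further calculation is needed.
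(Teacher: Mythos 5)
Your proposal is correct and follows essentially the same route as the paper's proof: substitute the four set identifications from Lemma~\ref{thm:lemma1} into the definitions in \eqref{define_rho_conf}, so the denominators collapse to $|P|$ and $|N|$, and then invoke the class-conditional noise rates in the infinite-sample limit to conclude $\hat{\rho}_1^{conf}=\rho_1$ and $\hat{\rho}_0^{conf}=\rho_0$. Your extra one-line check that $\rho_0<LB_{y=1}\le 1-\rho_1$ is exactly the content already established in the paper's proof of Lemma~\ref{thm:lemma1}, so nothing is missing.
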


Thus, when $g$ is ideal, the \emph{confident counts} estimators $\hat{\rho}_1^{conf}$ and $\hat{\rho}_0^{conf}$ are consistent estimators for $\rho_1$ and $\rho_0$ and we set $\hat{\rho}_1:=\hat{\rho}_1^{conf}, \hat{\rho}_0:=\hat{\rho}_0^{conf}$. These steps comprise Rank Pruning noise rate estimation (see Alg. \ref{alg:rp}). There are two practical observations. First, for any $g$ with $T$ fitting time, computing $\hat{\rho}_1^{conf}$ and $\hat{\rho}_0^{conf}$ is $\mathcal{O}(T)$. Second, $\hat{\rho}_1$ and $\hat{\rho}_0$ should be estimated out-of-sample to avoid over-fitting, resulting in sample variations. In our experiments, we use 3-fold cross-validation, requiring at most $2T=\mathcal{O}(T)$. 

\subsection{Noise Estimation: Unassuming Case} \label{method:noise_non_ideal}
 
Theorem \ref{thm:noise_ideal} states that $\hat{\rho}_i^{conf} = \rho_i$, $\forall i \in \{0,1\}$ when $g$ is ideal. Though theoretically constructive, in practice this is unlikely. Next, we derive expressions for the estimators when $g$ is unassuming, i.e. $g$ may not be perfect and $P$ and $N$ may have overlapping support.

Define $\Delta p_o:=\frac{|P \cap N|}{|P \cup N|}$ as the fraction of overlapping examples in $\mathcal{D}$ and remember that $\Delta g(x) := g(x)-g^*(x)$. Denote $ LB_{y=1}^*=(1-\rho_1)(1-\pi_1)+\rho_0\pi_1, UB_{y=0}^*=(1-\rho_1)\pi_0+\rho_0(1-\pi_0)$. We have

\begin{lemma}\label{lemma3}
When $g$ is unassuming, we have

\vskip -0.1in
\begin{equation}\label{rho_imperfect_condition}
\begin{cases}
LB_{y=1}=LB_{y=1}^*+E_{x\in\tilde{P}}[\Delta g(x)]-\frac{(1-\rho_1-\rho_0)^2}{p_{s1}}\Delta p_o\\
UB_{y=0}=UB_{y=0}^*+E_{x\in\tilde{N}}[\Delta g(x)]+\frac{(1-\rho_1-\rho_0)^2}{1-p_{s1}}\Delta p_o\\
\hat{\rho}_1^{conf}=\rho_1+\frac{1-\rho_1-\rho_0}{|P|-|\Delta P_1| + |\Delta N_1|}|\Delta N_1|\\
\hat{\rho}_0^{conf}=\rho_0+\frac{1-\rho_1-\rho_0}{|N|-|\Delta N_0| + |\Delta P_0|}|\Delta P_0|
\end{cases}
\end{equation}
\vskip -0.1in

where

\vskip -0.15in
\begin{equation*}
\begin{cases}
\Delta P_1=\{x\in P\mid g(x)< LB_{y=1}\}\\
\Delta N_1=\{x\in N\mid g(x)\geq LB_{y=1}\}\\
\Delta P_0=\{x\in P\mid g(x)\leq UB_{y=0}\}\\
\Delta N_0=\{x\in N\mid g(x)> UB_{y=0}\}\\
\end{cases}
\end{equation*}
\vskip -0.35in
\end{lemma}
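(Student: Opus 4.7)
My plan is to prove the four identities of Lemma \ref{lemma3} in two groups: the combinatorial expressions for $\hat{\rho}_1^{conf}$ and $\hat{\rho}_0^{conf}$, which follow from the CNP noise model applied to the partitioning induced by the thresholds, and the analytic expressions for $LB_{y=1}$ and $UB_{y=0}$, which require computing $E_{x\in\tilde{P}}[g^*(x)]$ and $E_{x\in\tilde{N}}[g^*(x)]$ under overlapping support. Throughout, I will make repeated use of the key identity $g^*(x) = P(s{=}1|x) = (1-\rho_1)P(y{=}1|x)+\rho_0 P(y{=}0|x)$, which follows directly from $(s\independent x)\mid y$.

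For the noise estimators, I would first decompose the sets by true class: using the CNP assumption, the observed label $s$ is drawn independently of $x$ given $y$, so a true positive contributes to $\tilde{P}_{y=1}$ with probability $1-\rho_1$ and to $\tilde{N}_{y=1}$ with probability $\rho_1$, and analogously for true negatives with rates $\rho_0$ and $1-\rho_0$. Since $\{x\in P : g(x)\geq LB_{y=1}\}$ has cardinality $|P|-|\Delta P_1|$ and $\{x\in N : g(x)\geq LB_{y=1}\}$ has cardinality $|\Delta N_1|$, I obtain
\begin{equation*}
|\tilde{P}_{y=1}| = (1-\rho_1)(|P|-|\Delta P_1|) + \rho_0\,|\Delta N_1|,\quad |\tilde{N}_{y=1}| = \rho_1(|P|-|\Delta P_1|) + (1-\rho_0)\,|\Delta N_1|.
\end{equation*}
The denominator in the definition of $\hat{\rho}_1^{conf}$ collapses to $|P|-|\Delta P_1|+|\Delta N_1|$, and rewriting the numerator as $\rho_1$ times this denominator plus $(1-\rho_1-\rho_0)|\Delta N_1|$ yields the claimed formula. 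The derivation for $\hat{\rho}_0^{conf}$ is entirely symmetric, swapping the roles of $P \leftrightarrow N$ and $\rho_0 \leftrightarrow \rho_1$.

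For the threshold formulas, I would first split $LB_{y=1} = E_{x\in\tilde{P}}[g^*(x)] + E_{x\in\tilde{P}}[\Delta g(x)]$ by linearity. In the non-overlapping (ideal) case, $P(y{=}1|x)=\mathbbm{1}[x\in P]$, which recovers $LB_{y=1}^* = (1-\rho_1)(1-\pi_1)+\rho_0\pi_1$ via the identity derived in Eq.~(\ref{rh1_prob_eq}). In the unassuming case, points in the overlap region $P\cap N$ contribute $(1-\rho_1)P(y{=}1|x)+\rho_0(1-P(y{=}1|x))$ rather than cleanly $1-\rho_1$ or $\rho_0$, and conditioning on $\tilde{P}$ reweights these contributions by a factor of $1/p_{s1}$. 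Tracking the overlap contribution carefully, one finds that each overlap example reduces $E_{x\in\tilde{P}}[g^*(x)]$ relative to the ideal value by $(1-\rho_1-\rho_0)^2$ times its overlap weight, producing the $-\frac{(1-\rho_1-\rho_0)^2}{p_{s1}}\Delta p_o$ correction. The $UB_{y=0}$ formula follows from the same argument applied to $\tilde{N}$, with $p_{s1}\mapsto 1-p_{s1}$ and the sign reversed because the overlap pulls the negative conditional expectation upward.

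The main obstacle will be making the overlap-contribution calculation rigorous: the quantity $\Delta p_o \equiv |P\cap N|/|P\cup N|$ is defined as a ratio of sets and must be translated into an integral against the mixture density so that the $(1-\rho_1-\rho_0)^2$ signal-gap factor emerges naturally. I expect that the cleanest route is to write $P(y{=}1|x)=\mathbbm{1}[x\in P\setminus N] + P(y{=}1|x)\mathbbm{1}[x\in P\cap N]$, substitute into $g^*(x)$, and then collect the overlap terms: the coefficient $1-\rho_1-\rho_0$ appears twice because the overlap simultaneously reduces the positive-class contribution (scaled by $1-\rho_1$) and inflates the negative-class contribution (scaled by $\rho_0$), so its deviation from the non-overlapping case is proportional to $(1-\rho_1)-\rho_0 = 1-\rho_1-\rho_0$, and averaging over $\tilde{P}$ introduces a second such factor through the CNP weighting on the overlap region. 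Once this bookkeeping is done carefully, the four identities in Eq.~(\ref{rho_imperfect_condition}) follow directly.
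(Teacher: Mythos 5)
Your derivation of the two noise-estimator identities is correct and arguably cleaner than the paper's. You go directly to the cardinalities
\begin{equation*}
|\tilde{P}_{y=1}| = (1-\rho_1)(|P|-|\Delta P_1|) + \rho_0\,|\Delta N_1|,\qquad |\tilde{N}_{y=1}| = \rho_1(|P|-|\Delta P_1|) + (1-\rho_0)\,|\Delta N_1|,
\end{equation*}
using CNP independence on the sets above and below the threshold, then rewrite the numerator; the paper instead introduces the auxiliary partition $PP,PN,NP,NN$ and their $\Delta$-sets, and derives the same result via the ratios $\frac{|PN|-|\Delta_{PN_1}|}{|P|-|\Delta P_1|}=\rho_1$ and $\frac{|\Delta_{NN_1}|}{|\Delta N_1|}=1-\rho_0$. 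These are equivalent in the infinite-sample limit; your phrasing is a shortcut through the same $(s\independent x)\mid y$ fact.

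The threshold identities are where your plan is underdeveloped. You correctly note the split $LB_{y=1}=E_{x\in\tilde{P}}[g^*(x)]+E_{x\in\tilde{P}}[\Delta g(x)]$ and correctly attribute the $(1-\rho_1-\rho_0)^2$ factor to two multiplications by $1-\rho_1-\rho_0$. But your proposed indicator decomposition $P(y{=}1|x)=\mathbbm{1}[x\in P\setminus N]+P(y{=}1|x)\mathbbm{1}[x\in P\cap N]$ substituted into $g^*(x)$ leaves you with the term $E_{x\in\tilde{P}}[\mathbbm{1}[x\in P\cap N]\,P(y{=}1|x)]$, and it is not visible from your sketch how this collapses to $\Delta p_o/p_{s1}$ with the right coefficients, nor how the defined quantity $\Delta p_o=\frac{|P\cap N|}{|P\cup N|}$ even enters an expectation over $\tilde{P}$. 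The paper resolves this by reinterpreting $\Delta p_o$ as a joint probability involving the \emph{perfect classifier's} label $\hat{y}$ (with $P(\hat{y}{=}1|x)=P(y{=}1|x)$), namely $\Delta p_o=P(\hat{y}{=}1,y{=}0)=P(\hat{y}{=}0,y{=}1)$. That lets it write $E_{x\in\tilde{P}}[P(y{=}1|x)]=P(\hat{y}{=}1\mid s{=}1)$ and expand over $y$:
\begin{equation*}
P(\hat{y}{=}1\mid s{=}1)=\frac{(1-\rho_1)(p_{y1}-\Delta p_o)+\rho_0\,\Delta p_o}{p_{s1}}=(1-\pi_1)-\frac{1-\rho_1-\rho_0}{p_{s1}}\Delta p_o,
\end{equation*}
from which the $1/p_{s1}$ normalization, one factor of $1-\rho_1-\rho_0$, and the $\pi_1$-interpolation structure all appear at once; the second factor of $1-\rho_1-\rho_0$ then comes from recombining with the $(1-\rho_1)$ and $\rho_0$ weights on $P(\hat{y}{=}1|s{=}1)$ and $P(\hat{y}{=}0|s{=}1)$. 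Your informal statement that ``averaging over $\tilde{P}$ introduces a second such factor through the CNP weighting'' gestures at this but does not deliver the $1/p_{s1}$ denominator or the exact bookkeeping; without the $\hat{y}$-device (or an equivalent route to the joint $(\hat{y},y,s)$ distribution), the calculation does not close. The $UB_{y=0}$ case then follows symmetrically once $LB_{y=1}$ is established.
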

\vskip -0.1in

The second term on the R.H.S. of the $\hat{\rho}_i^{conf}$ expressions captures the deviation of $\hat{\rho}_i^{conf}$ from $\rho_i$, $i=0,1$. This term results from both imperfect $g(x)$ and overlapping support. Because the term is non-negative, $\hat{\rho}_i^{conf} \geq \rho_i$, $i=0,1$ in the limit of infinite examples. In other words, $\hat{\rho}_i^{conf}$ is an \emph{upper bound} for the noise rates $\rho_i$, $i=0,1$. From Lemma \ref{lemma3}, it also follows:

\begin{theorem}\label{rho_conf_robust}
Given non-overlapping support condition,

\text{If} $\forall x\in N, \Delta g(x)<LB_{y=1}-\rho_0$, then $\hat{\rho}_1^{conf}=\rho_1$.

If $\forall x\in P, \Delta g(x)>-(1-\rho_1-UB_{y=0}$), then $\hat{\rho}_0^{conf}=\rho_0$.
\end{theorem}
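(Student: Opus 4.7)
The proof proposal is to deduce Theorem~\ref{rho_conf_robust} as a direct corollary of Lemma~\ref{lemma3}, by showing that the stated conditions on $\Delta g(x)$ force the ``error sets'' $\Delta N_1$ and $\Delta P_0$ to be empty, which zeros out the extra term in the expression for $\hat{\rho}_i^{conf}$. The key observation is that under the non-overlapping support condition, $g^*(x) = P(s=1|x)$ collapses to the two-valued function $g^*(x) = (1-\rho_1)\mathbbm{1}[x\in P] + \rho_0\,\mathbbm{1}[x\in N]$ (the same computation as in Eq.~\eqref{eq1}), so that $g(x) = \rho_0 + \Delta g(x)$ for $x\in N$ and $g(x) = (1-\rho_1) + \Delta g(x)$ for $x\in P$.

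First, I would prove the statement for $\hat{\rho}_1^{conf}$. From Lemma~\ref{lemma3},
\begin{equation*}
\hat{\rho}_1^{conf} = \rho_1 + \frac{1-\rho_1-\rho_0}{|P| - |\Delta P_1| + |\Delta N_1|}\,|\Delta N_1|,
\end{equation*}
so it suffices to show $|\Delta N_1| = 0$. By definition, $\Delta N_1 = \{x\in N \mid g(x) \ge LB_{y=1}\}$. Substituting $g(x) = \rho_0 + \Delta g(x)$ for $x\in N$, membership in $\Delta N_1$ becomes $\Delta g(x) \ge LB_{y=1} - \rho_0$, which is ruled out by the hypothesis $\Delta g(x) < LB_{y=1} - \rho_0$ for all $x\in N$. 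Hence $\Delta N_1 = \emptyset$, and $\hat{\rho}_1^{conf} = \rho_1$. The second statement follows symmetrically: under the hypothesis $\Delta g(x) > -(1-\rho_1-UB_{y=0})$ for all $x\in P$, rewriting $g(x) = (1-\rho_1) + \Delta g(x)$ for $x\in P$ shows that $\Delta P_0 = \{x\in P \mid g(x) \le UB_{y=0}\}$ is empty, so Lemma~\ref{lemma3}'s formula collapses to $\hat{\rho}_0^{conf} = \rho_0$.

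I do not anticipate a major technical obstacle here, since Lemma~\ref{lemma3} has already absorbed all of the bookkeeping; the theorem is essentially a transparent sufficient condition for the error term to vanish. The only subtlety worth flagging is the edge case $|P| - |\Delta P_1| + |\Delta N_1| = 0$ (respectively for the negative class) in the denominator of Lemma~\ref{lemma3}'s expression, but this cannot occur in the infinite-sample limit assumed in Section~\ref{methodology} because $|P|$ is infinite while $|\Delta P_1|$ is bounded above by $|P|$ with the gap controlled by $g$; moreover, when $|\Delta N_1| = 0$ the whole second term is $0$ regardless of the denominator, so this edge case is moot. The non-overlapping support assumption is used \emph{only} to pin $g^*(x)$ to the two-valued function above; without it, $g^*(x)$ would be a convex combination indexed by $P(y=1|x)$ and the clean substitutions would no longer be available.
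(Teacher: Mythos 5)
Your proposal is correct and follows essentially the same route as the paper's own proof: both invoke the closed-form expressions for $\hat{\rho}_i^{conf}$ from Lemma~\ref{lemma3}, use the non-overlapping support condition to reduce $g^*$ to the two-valued function $(1-\rho_1)\indicator{y=1}+\rho_0\indicator{y=0}$, and then observe that the hypotheses on $\Delta g$ force $\Delta N_1$ (resp. $\Delta P_0$) to be empty, killing the correction term. The edge-case discussion about the denominator is not in the paper and is a welcome, if slightly informal, extra care.
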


Theorem \ref{rho_conf_robust} shows that $\hat{\rho}_1^{conf}$ and $\hat{\rho}_0^{conf}$ are robust to imperfect probability estimation. As long as $\Delta g(x)$ does not exceed the distance between the threshold in Eq. (\ref{define_threshold}) and the perfect $g^*(x)$ value, $\hat{\rho}_1^{conf}$ and $\hat{\rho}_0^{conf}$ are consistent estimators for $\rho_1$ and $\rho_0$. Our numerical experiments in Section \ref{sec:experimental} suggest this is reasonable for $\Delta g(x)$. The average $|\Delta g(x)|$ for the MNIST training dataset across different ($\rho_1$, $\pi_1$) varies between 0.01 and 0.08 for a logistic regression classifier, 0.01$\sim$0.03 for a CNN classifier, and 0.05$\sim$0.10 for the CIFAR dataset with a CNN classifier. Thus, when  $LB_{y=1}-\rho_0$ and $1-\rho_1-UB_{y=0}$ are above 0.1 for these datasets, from Theorem  \ref{rho_conf_robust} we see that $\hat{\rho}_i^{conf}$ still accurately estimates $\rho_i$.

\subsection{The Rank Pruning Algorithm} \label{method:rp}

Using $\hat{\rho}_1$ and $\hat{\rho}_0$, we must uncover the classifications of $f$ from $g$. In this section, we describe how Rank Pruning selects confident examples, removes the rest, and trains on the pruned set using a reweighted loss function. 

First, we obtain the inverse noise rates $\hat{\pi}_1$, $\hat{\pi}_0$ from $\hat{\rho}_1$, $\hat{\rho}_0$:

\vskip -0.13in
\begin{equation}
\hat{\pi}_1=\frac{\hat{\rho}_0}{p_{s1}}\frac{1-p_{s1}-\hat{\rho}_1}{1-\hat{\rho}_1-\hat{\rho}_0},\quad
\hat{\pi}_0=\frac{\hat{\rho}_1}{1-p_{s1}}\frac{p_{s1}-\hat{\rho}_0}{1-\hat{\rho}_1-\hat{\rho}_0}
\end{equation}
\vskip -0.07in

Next, we prune the $\hat{\pi}_1|\tilde{P}|$ examples in $\tilde{P}$ with smallest $g(x)$ and the $\hat{\pi}_0|\tilde{N}|$ examples in $\tilde{N}$ with highest $g(x)$ and denote the pruned sets $\tilde{P}_{conf}$ and $\tilde{N}_{conf}$. To prune, we define $k_1$ as the $(\hat{\pi}_1|\tilde{P}|)^{th}$ smallest $g(x)$ for $x \in \tilde{P}$ and $k_0$ as the $(\hat{\pi}_0|\tilde{N}|)^{th}$ largest $g(x)$ for $x \in \tilde{N}$. BFPRT ($\mathcal{O}(n)$) \citep{Blum:1973:TBS:1739940.1740109} is used to compute $k_1$ and $k_0$ and pruning is reduced to the following $\mathcal{O}(n)$ filter:

\vskip -0.13in
\begin{equation}
\tilde{P}_{conf} := \{x \in \tilde{P} \mid g(x) \geq k_1 \},\quad 
\tilde{N}_{conf} := \{x \in \tilde{N} \mid g(x) \leq k_0 \}
\end{equation}
\vskip -0.05in

Lastly, we refit the classifier to $X_{conf} = \tilde{P}_{conf} \cup \tilde{N}_{conf}$ by class-conditionally reweighting the loss function for examples in $\tilde{P}_{conf}$ with weight $\frac{1}{1-\hat{\rho}_1}$ and examples in $\tilde{N}_{conf}$ with weight $\frac{1}{1-\hat{\rho}_0}$ to recover the estimated balance of positive and negative examples. The entire Rank Pruning algorithm is presented in Alg. \ref{alg:rp} and illustrated step-by-step on a synthetic dataset in Fig. \ref{rankpruning_illustration}.

We conclude this section with a formal discussion of the loss function and efficiency of Rank Pruning.  Define $\hat{y_i}$ as the predicted label of example $i$ for the classifier fit to $X_{conf}, s_{conf}$ and let $l(\hat{y_i}, s_i)$ be the original loss function for $x_i \in D_\rho$. Then the loss function for Rank Pruning is simply the original loss function exerted on the pruned $X_{conf}$, with class-conditional weighting:

\vskip -0.25in
\begin{align*}  \label{rank_pruning_loss_function}
\tilde{l}(\hat{y_i}, s_i)=&\frac{1}{1-\hat{\rho}_1}l(\hat{y_i}, s_i)\cdot\indicator{x_i\in \tilde{P}_{conf}}+ \frac{1}{1-\hat{\rho}_0}l(\hat{y_i}, s_i)\cdot\indicator{x_i\in \tilde{N}_{conf}}\numberthis 
\end{align*}

\medskip

Effectively this loss function uses a zero-weight for pruned examples. Other than potentially fewer examples, the only difference in the loss function for Rank Pruning and the original loss function is the class-conditional weights. These constant factors do not increase the complexity of the minimization of the original loss function. In other words, we can fairly report the running time of Rank Pruning in terms of the running time ($\mathcal{O}(T)$) of the choice of probabilistic estimator. Combining noise estimation ($\mathcal{O}(T)$), pruning ($\mathcal{O}(n)$), and the final fitting ($\mathcal{O}(T)$), Rank Pruning has a running time of $\mathcal{O}(T) + \mathcal{O}(n)$, which is $\mathcal{O}(T)$ for typical classifiers.

\subsection{Rank Pruning: A simple summary}

Recognizing that formalization can create obfuscation, in this section we describe the entire algorithm in a few sentences. Rank Pruning takes as input training examples $X$, noisy labels $s$, and a probabilistic classifier $clf$ and finds a subset of $X, s$ that is likely to be correctly labeled, i.e. a subset of $X, y$. To do this, we first find two thresholds, $LB_{y=1}$ and $UB_{y=0}$, to \emph{confidently} guess the correctly and incorrectly labeled examples in each of $\tilde{P}$ and $\tilde{N}$, forming four sets, then use the set sizes to estimate the noise rates $\rho_1 = P(s = 0 | y = 1)$ and $\rho_0 = P(s = 1 | y = 0)$. We then use the noise rates to estimate the number of examples with observed label $s = 1$ and hidden label $y = 0$ and remove that number of examples from $\tilde{P}$ by removing those with lowest predicted probability $g(x)$. We prune $\tilde{N}$ similarly. Finally, the classifier is fit to the pruned set, which is intended to represent a subset of the correctly labeled data.

\begin{algorithm}[t]
   \caption{\textbf{Rank Pruning}}
   \label{alg:rp}
\begin{algorithmic}
   \STATE {\bfseries Input:} Examples $X$, corrupted labels $s$, classifier clf
   \STATE \textbf{Part 1. Estimating Noise Rates:}
   \STATE (1.1)\ \ clf.fit($X$,$s$)
   \STATE \ \ \ \ \ \ \ \ \ $g(x)\leftarrow$clf.predict\_crossval\_probability($\hat{s}=1|x$)
   \STATE \ \ \ \ \ \  \ \ \ $p_{s1}=\frac{\text{count}(s=1)}{\text{count}(s=0 \lor s=1)}$
   \STATE \ \ \ \ \ \  \ \ \ $LB_{y=1}=E_{x\in\tilde{P}}[g(x)]$, $UB_{y=0}=E_{x\in\tilde{N}}[g(x)]$
   \STATE (1.2)\  $\hat{\rho}_1=\hat{\rho}_1^{conf}=\frac{|\tilde{N}_{y=1}|}{|\tilde{N}_{y=1}|+|\tilde{P}_{y=1}|}$, $\hat{\rho}_0=\hat{\rho}_0^{conf}=\frac{|\tilde{P}_{y=0}|}{|\tilde{P}_{y=0}|+|\tilde{N}_{y=0}|}$
   \STATE \ \ \ \ \ \ \ \ \ $\hat{\pi}_1=\frac{\hat{\rho}_0}{p_{s1}}\frac{1-p_{s1}-\hat{\rho}_1}{1-\hat{\rho}_1-\hat{\rho}_0}$, $\hat{\pi}_0=\frac{\hat{\rho}_1}{1-p_{s1}}\frac{p_{s1}-\hat{\rho}_0}{1-\hat{\rho}_1-\hat{\rho}_0}$
   \vskip .1in
   \STATE \textbf{Part 2. Prune Inconsistent Examples:}
   \STATE (2.1) Remove $\hat{\pi}_1|\tilde{P}|$ examples in $\tilde{P}$ with least $g(x)$, Remove $\hat{\pi}_0|\tilde{N}|$ examples in $\tilde{N}$ with greatest $g(x)$,
   \STATE \ \ \ \ \ \ \ \ \ Denote the remaining training set ($X_{conf}$, $s_{conf}$)
   \STATE (2.2) clf.fit($X_{conf}$, $s_{conf}$), with sample weight 
   $w(x)=\frac{1}{1-\hat{\rho}_1}\indicator{s_{conf}=1}$+$\frac{1}{1-\hat{\rho}_0}\indicator{s_{conf}=0}$
   \STATE {\bfseries Output:} clf
\end{algorithmic}
\end{algorithm}

\subsection{Expected Risk Evaluation} 
\label{method:risk}

In this section, we prove Rank Pruning exactly uncovers the classifier $f$ fit to hidden $y$ labels when $g$ range separates $P$ and $N$ and $\rho_1$ and $\rho_0$ are given.

Denote $f_{\theta}\in\mathcal{F}: x\to \hat{y}$ as a classifier's prediction function belonging to some function space $\mathcal{F}$, where $\theta$ represents the classifier's parameters. $f_{\theta}$ represents $f$, but without $\theta$ necessarily fit to the training data. $\hat{f}$ is the Rank Pruning estimate of $f$.

Denote the empirical risk of $f_{\theta}$ w.r.t. the loss function $\tilde{l}$ and corrupted data $D_{\rho}$ as $\hat{R}_{\tilde{l}, D_{\rho}}(f_{\theta})=\frac{1}{n}\sum_{i=1}^n\tilde{l}(f_{\theta}(x_i), s_i)$, and the expected risk of $f_{\theta}$ w.r.t. the corrupted distribution $\mathcal{D}_{\rho}$ as $R_{\tilde{l},\mathcal{D}_{\rho}}(f_{\theta})=E_{(x,s)\sim \mathcal{D}_{\rho}}[\hat{R}_{\tilde{l},\mathcal{D}_{\rho}}(f_{\theta})]$. Similarly, denote $R_{l,\mathcal{D}}(f_{\theta})$ as the expected risk of $f_{\theta}$ w.r.t. the hidden distribution $\mathcal{D}$ and loss function $l$. We show that using Rank Pruning, a classifier $\hat{f}$ can be learned for the hidden data $D$, given the corrupted data $D_{\rho}$, by minimizing the empirical risk:

\vskip -0.12in
\begin{equation}\label{risk_minimization}
\hat{f}=\argmin\limits_{f_{\theta}\in \mathcal{F}} \hat{R}_{\tilde{l},D_{\rho}}(f_{\theta})=\argmin\limits_{f_{\theta}\in \mathcal{F}} \frac{1}{n}\sum_{i=1}^n\tilde{l}(f_{\theta}(x_i), s_i)
\end{equation}

Under the \emph{range separability} condition, we have

\begin{figure}[t]
\begin{center}
\centerline{\includegraphics[width=1.0\columnwidth]{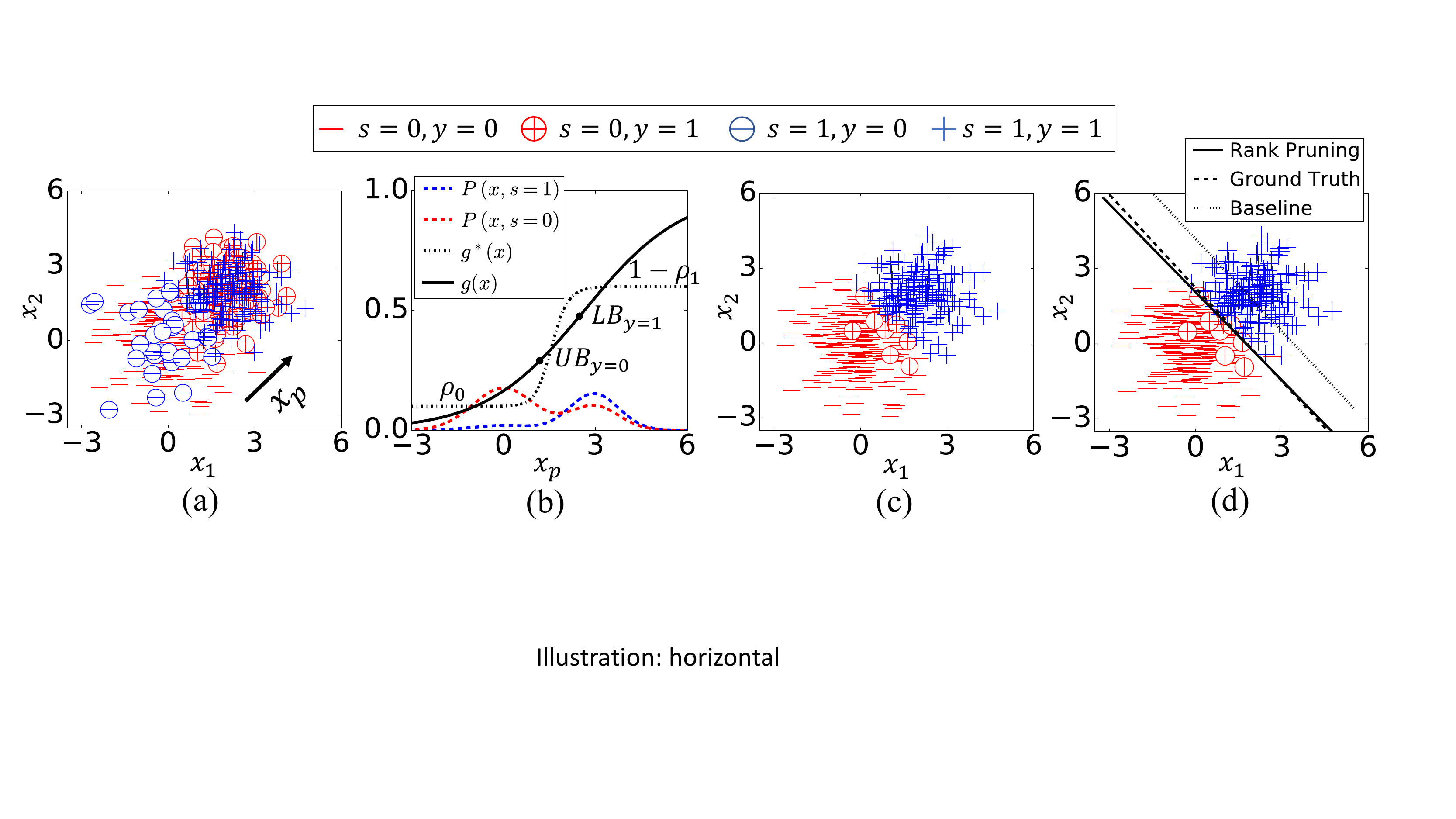}}
\caption{Illustration of Rank Pruning with a logistic regression classifier ($\mathcal{LR}_\theta$).  \textbf{(a)}: The corrupted training set $D_\rho$ with noise rates $\rho_1=0.4$ and $\rho_0=0.1$. Corrupted colored labels (\textcolor{blue}{$s=1$},\textcolor{red}{$s=0$}) are observed. $y$ ($+$,$-$) is hidden. \textbf{(b)}: The marginal distribution of $D_\rho$ projected onto the $x_p$ axis (indicated in (a)), and the $\mathcal{LR}_\theta$'s estimated $g(x)$, from which $\hat{\rho}_1^{conf}=0.4237$, $\hat{\rho}_0^{conf}=0.1144$ are estimated. \textbf{(c)}: The pruned $X_{conf}, s_{conf}$. \textbf{(d)}: The classification result by Rank Pruning ($\hat{f}$ = $\mathcal{LR}_\theta.\text{fit}(X_{conf}, s_{conf})$),
ground truth classifier ($f$ = $\mathcal{LR}_\theta.\text{fit}(X,y)$),
and baseline classifier ($g$ = $\mathcal{LR}_\theta.\text{fit}(X, s)$), with an accuracy of $94.16\%$, $94.16\%$ and $78.83\%$, respectively.}
\label{rankpruning_illustration}
\end{center}
\vskip -0.15in
\end{figure}

\begin{theorem}\label{Rank Pruning_with_range_separability}
If $g$ range separates $P$ and $N$ and $\hat{\rho}_i=\rho_i$, $i=0,1$, then for any classifier $f_{\theta}$ and any bounded loss function $l(\hat{y}_i,y_i)$, we have

\vskip -0.12in
\begin{equation}
R_{\tilde{l},\mathcal{D}_{\rho}}(f_{\theta})=R_{l,\mathcal{D}}(f_{\theta})
\end{equation}
\vskip -0.06in

where $\tilde{l}(\hat{y}_i,s_i)$ is Rank Pruning's loss function (Eq. \ref{rank_pruning_loss_function}).
\end{theorem}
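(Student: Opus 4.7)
Proof proposal for Theorem \ref{Rank Pruning_with_range_separability}.

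The plan is to break the argument into two stages: first, show that under range separability and perfect noise rate estimates the pruning step is exact, i.e.\ it removes precisely the mislabeled examples in each observed class; second, expand the reweighted empirical (and expected) risk over the pruned sets and use the CNP structure together with the $\tfrac{1}{1-\rho_i}$ reweighting to collapse the expectation into the clean expected risk $R_{l,\mathcal{D}}(f_\theta)$.

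For the first stage, I would argue as follows. The observed positive set decomposes as $\tilde{P}=(P\cap\tilde{P})\cup(N\cap\tilde{P})$, with $|N\cap\tilde{P}|/|\tilde{P}|=\pi_1$ by definition. Range separability implies that for every $x_1\in P\cap\tilde{P}$ and $x_2\in N\cap\tilde{P}$ we still have $g(x_1)>g(x_2)$, so the $\pi_1|\tilde{P}|$ smallest-$g$ examples in $\tilde{P}$ are exactly $N\cap\tilde{P}$. Because $\hat{\rho}_i=\rho_i$ implies $\hat{\pi}_1=\pi_1$, the Rank Pruning step in Alg.~\ref{alg:rp} deletes precisely this set, leaving $\tilde{P}_{conf}=\{x\in P:s=1\}$, so that every $x\in\tilde{P}_{conf}$ satisfies $s=y=1$. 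Symmetrically, $\tilde{N}_{conf}=\{x\in N:s=0\}$ with $s=y=0$ on it.

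For the second stage, I would plug these characterizations into
\begin{equation*}
R_{\tilde{l},\mathcal{D}_\rho}(f_\theta) = \tfrac{1}{1-\rho_1}\,\mathbb{E}_{(x,s)\sim\mathcal{D}_\rho}\!\left[l(f_\theta(x),s)\,\mathbbm{1}[x\in\tilde{P}_{conf}]\right] + \tfrac{1}{1-\rho_0}\,\mathbb{E}_{(x,s)\sim\mathcal{D}_\rho}\!\left[l(f_\theta(x),s)\,\mathbbm{1}[x\in\tilde{N}_{conf}]\right].
\end{equation*}
Using the CNP identity $P(s=1\mid y=1,x)=1-\rho_1$ and the fact that $s=y$ on $\tilde{P}_{conf}$, the first term becomes
\begin{equation*}
\tfrac{1}{1-\rho_1}\,\mathbb{E}_{x,y}\!\left[l(f_\theta(x),1)\,\mathbbm{1}[y=1]\,P(s=1\mid y=1)\right] = \mathbb{E}_{x,y}\!\left[l(f_\theta(x),y)\,\mathbbm{1}[y=1]\right].
\end{equation*}
An identical computation, invoking $P(s=0\mid y=0,x)=1-\rho_0$, turns the second term into $\mathbb{E}_{x,y}[l(f_\theta(x),y)\,\mathbbm{1}[y=0]]$. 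Summing the two and using boundedness of $l$ to justify interchanging indicators and expectations yields $\mathbb{E}_{(x,y)\sim\mathcal{D}}[l(f_\theta(x),y)]=R_{l,\mathcal{D}}(f_\theta)$, which is the claim.

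The only nontrivial step is the first stage: showing that the quantile-based pruning lines up exactly with the partition induced by range separability. That in turn rests on two ingredients already in hand, namely the strict ordering $g(x_1)>g(x_2)$ across $P$ and $N$ (so ties across classes cannot occur and the rank cut is sharp) and the consistency $\hat{\pi}_i=\pi_i$, which fixes the cut at precisely the right fraction. Once these are combined, everything else is a bookkeeping expansion of the reweighted loss using the CNP factorization $P(s\mid y,x)=P(s\mid y)$.
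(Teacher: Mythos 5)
Your proposal is correct and follows essentially the same route as the paper's proof: both use range separability together with $\hat\pi_i=\pi_i$ to show the quantile cut exactly isolates $\tilde{P}_{conf}=\{x\in P: s=1\}$ and $\tilde{N}_{conf}=\{x\in N: s=0\}$, then expand the reweighted expectation and use the CNP factorization $P(s\mid y,x)=P(s\mid y)$ so that the $\tfrac{1}{1-\rho_i}$ factors cancel the downsampling probabilities. The paper phrases the final step in terms of $\tilde{P}_{conf}$ being a uniform $(1-\rho_1)$-downsample of $P$, whereas you condition on $y$ directly, but these are the same computation.
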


The proof of Theorem \ref{Rank Pruning_with_range_separability} is in Appendix \ref{sec:A5}. Intuitively, Theorem \ref{Rank Pruning_with_range_separability} tells us that if $g$ range separates $P$ and $N$, then given exact noise rate estimates, Rank Pruning will exactly prune out the positive examples in $\tilde{N}$ and negative examples in $\tilde{P}$, leading to the same expected risk as learning from uncorrupted labels. Thus, Rank Pruning can exactly uncover the classifications of $f$ (with infinite examples) because the expected risk is equivalent for any $f_{\theta}$. Note Theorem \ref{Rank Pruning_with_range_separability} also holds when $g$ is ideal, since \emph{ideal} $\subset$ \emph{range separability}. In practice, \emph{range separability} encompasses a wide range of imperfect $g(x)$ scenarios, e.g. $g(x)$ can have large fluctuation in both $P$ and $N$ or have systematic drift w.r.t. to $g^*(x)$ due to underfitting.

\section{Experimental Results}
\label{sec:experimental}

In Section \ref{methodology}, we developed a theoretical framework for Rank Pruning, proved exact noise estimation and equivalent expected risk when conditions are ideal, and derived closed-form solutions when conditions are non-ideal. Our theory suggests that, in practice, Rank Pruning should (1) accurately estimate $\rho_1$ and $\rho_0$, (2) typically achieve as good or better F1, error and AUC-PR \citep{Davis:2006:RPR:1143844.1143874} as state-of-the-art methods, and (3) be robust to both mislabeling and added noise. 

In this section, we support these claims with an evaluation of the comparative performance of Rank Pruning in non-ideal conditions across thousands of scenarios. These include less complex (MNIST) and more complex (CIFAR) datasets,  simple (logistic regression) and complex (CNN) classifiers, the range of noise rates, added random noise, separability of $P$ and $N$, input dimension, and number of training examples to ensure that Rank Pruning is a general, agnostic solution for $\tilde{P}\tilde{N}$ learning.

\begin{figure}[t]
\begin{center}
\centerline{\includegraphics[width=1.0\columnwidth]{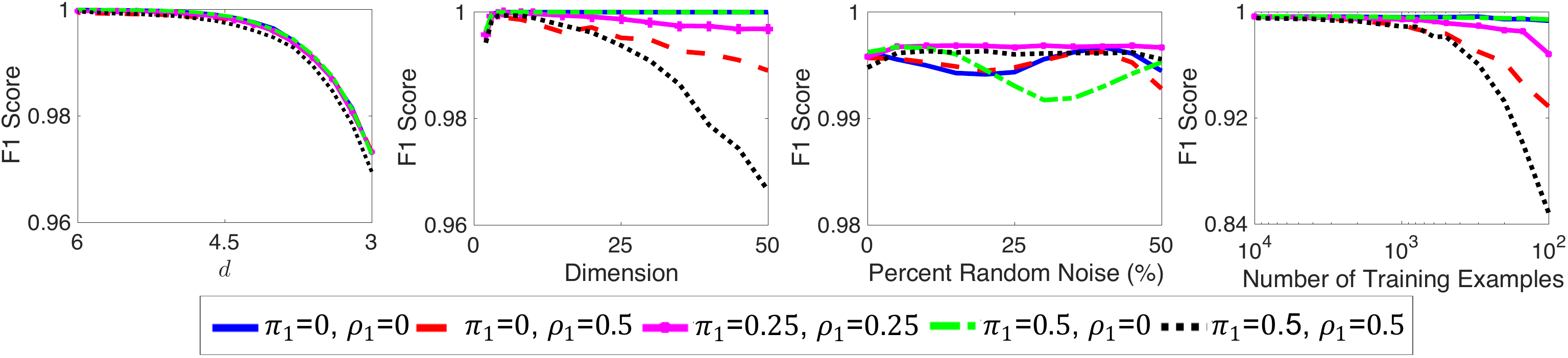}}
\caption{Comparison of Rank Pruning with different noise ratios $(\pi_1, \rho_1)$ on a synthetic dataset for varying separability $d$, dimension, added random noise and number of training examples. Default settings for Fig. \ref{synthetic_F1}, \ref{synthetic_diff_tp} and \ref{synthetic_comparison}: $d=4$, 2-dimension, $0\%$ random noise, and 5000 training examples with $p_{y1}=0.2$. The lines are an average of 200 trials.}
\label{synthetic_F1}
\end{center}
\vskip -0.3in
\end{figure}

\begin{figure}[t]
\begin{center}
\centerline{\includegraphics[width=0.75\columnwidth]{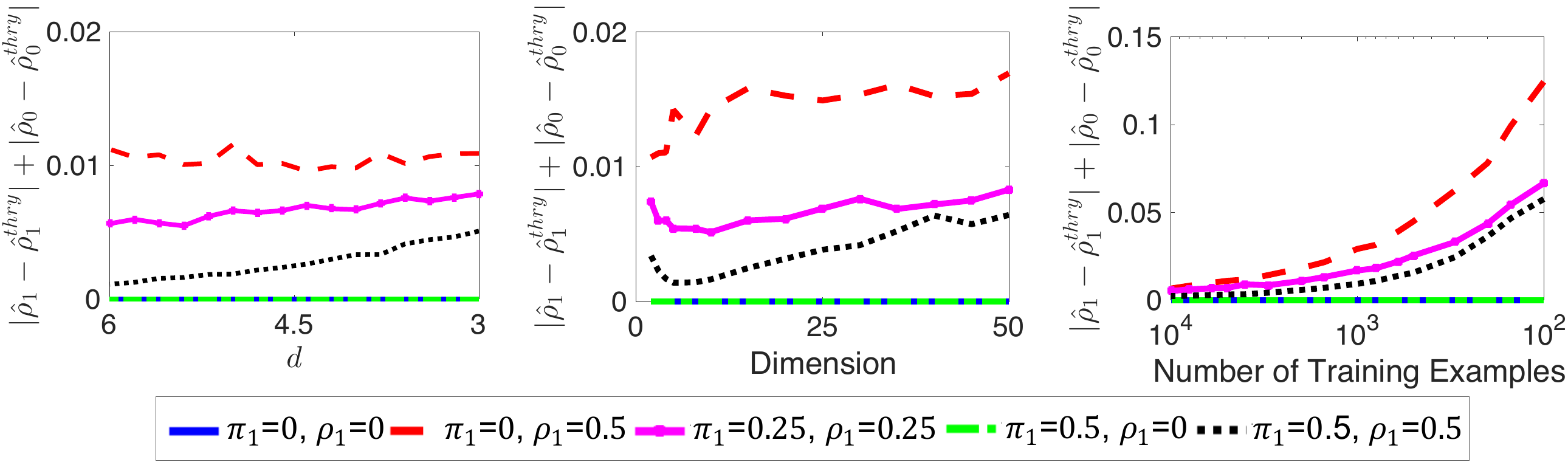}}
\caption{Sum of absolute difference between theoretically estimated $\hat{\rho}_i^{thry}$ and empirical $\hat{\rho}_i$, $i=0,1$, with five different $(\pi_1, \rho_1)$, for varying separability $d$, dimension, and number of training examples. Note that no figure exists for percent random noise because the theoretical estimates in Eq. (\ref{rho_imperfect_condition}) do not address added noise examples.}
\label{synthetic_diff_tp}
\end{center}
\vskip -0.3in
\end{figure}

In our experiments, we adjust $\pi_1$ instead of $\rho_0$ because binary noisy classification problems (e.g. detection and recognition tasks) often have that $|P| \ll |N|$. This choice allows us to adjust both noise rates with respect to $P$, i.e. the fraction of true positive examples that are mislabeled as negative ($\rho_1$) and the fraction of observed positive labels that are actually mislabeled negative examples ($\pi_1$). The $\tilde{P}\tilde{N}$ learning algorithms are trained with corrupted labels $s$, and tested on an unseen test set by comparing predictions $\hat{y}$ with the true test labels $y$ using F1 score, error, and AUC-PR metrics. We include all three to emphasize our apathy toward tuning results to any single metric. We provide F1 scores in this section with error and AUC-PR scores in Appendix \ref{sec:tables_appendix}.

\subsection{Synthetic Dataset}
The synthetic dataset is comprised of a Guassian positive class and a Guassian negative classes such that negative examples ($y=0$) obey an $m$-dimensional Gaussian distribution $N(\mathbf{0}, \mathbf{I})$ with unit variance $\mathbf{I}=diag(1,1,...1)$, and positive examples obey $N(d\mathbf{1}, 0.8\mathbf{I})$, where $d\mathbf{1}=(d,d,...d)$ is an $m$-dimensional vector, and $d$ measures the separability of the positive and negative set.

We test Rank Pruning by varying 4 different settings of the environment: separability $d$, dimension, number of training examples $n$, and percent (of $n$) added random noise drawn from a uniform distribution $U([-10,10]^m)$. In each scenario, we test 5 different $(\pi_1,\rho_1)$ pairs: $(\pi_1,\rho_1)\in\{(0,0),(0,0.5),(0.25,0.25)$, $(0.5,0),(0.5,0.5)\}$. From Fig. \ref{synthetic_F1}, we observe that across these settings, the F1 score for Rank Pruning is fairly agnostic to magnitude of mislabeling (noise rates). As a validation step, in Fig. \ref{synthetic_diff_tp} we measure how closely our empirical estimates match our theoretical solutions in Eq. (\ref{rho_imperfect_condition}) and find near equivalence except when the number of training examples approaches zero.

For significant mislabeling ($\rho_1=0.5$, $\pi_1=0.5$), Rank Pruning often outperforms other methods (Fig. \ref{synthetic_comparison}). In the scenario of different separability $d$, it achieves nearly the same F1 score as the ground truth classifier. Remarkably, from Fig. \ref{synthetic_F1} and Fig. \ref{synthetic_comparison}, we observe that when added random noise comprises $50\%$ of total training examples, Rank Pruning still achieves F1 $>$ 0.85, compared with F1 $<$ 0.5 for all other methods. This emphasizes a unique feature of Rank Pruning, it will also remove added random noise because noise drawn from a third distribution is unlikely to appear confidently positive or negative.

\subsection{MNIST and CIFAR Datasets}

We consider the binary classification tasks of one-vs-rest for the MNIST \citep{lecun-mnisthandwrittendigit-2010} and CIFAR-10 (\cite{cifar10}) datasets, e.g. the ``car vs rest" task in CIFAR is to predict if an image is a ``car" or ``not". $\rho_1$ and $\pi_1$ are given to all $\tilde{P}\tilde{N}$ learning methods for fair comparison, except for $RP_\rho$ which is Rank Pruning including noise rate estimation. $RP_\rho$ metrics measure our performance on the unadulterated $\tilde{P}\tilde{N}$ learning problem.

\begin{figure}[t]
\begin{center}
\centerline{\includegraphics[width=1.0\columnwidth]{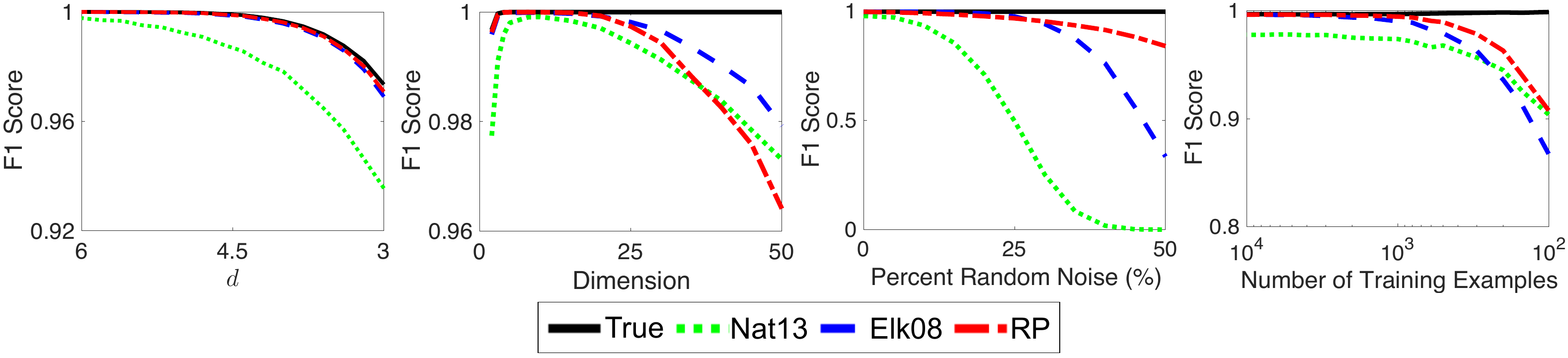}}
\caption{Comparison of $\tilde{P}\tilde{N}$ methods for varying separability $d$, dimension, added random noise, and number of training examples for $\pi_1=0.5$, $\rho_1=0.5$ (given to all methods).
}
\label{synthetic_comparison}
\end{center}
\vskip -0.15in
\end{figure}

\begin{figure}[b!]
\begin{center}
\centerline{\includegraphics[width=1.0\columnwidth]{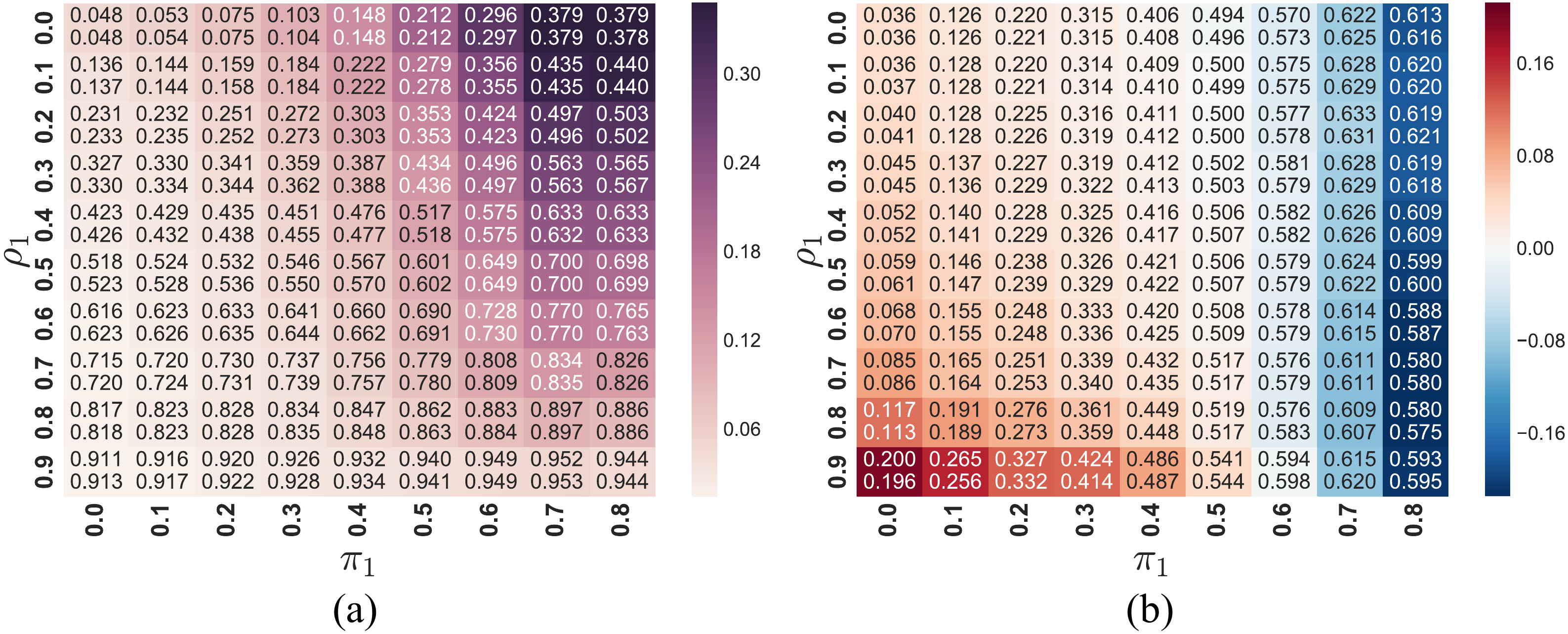}}
\vskip -0.08in
\caption{Rank Pruning $\hat{\rho}_1$ and $\hat{\pi}_1$ estimation consistency, averaged over all digits in MNIST. \textbf{(a)} Color depicts $\hat{\rho}_1-\rho_1$ with $\hat{\rho}_1$ (upper) and theoretical $\hat{\rho}_1^{thry}$ (lower) in each block. \textbf{(b)} Color depicts $\hat{\pi}_1-\pi_1$ with $\hat{\pi}_1$ (upper) and $\hat{\pi}_1^{thry}$ (lower) in each block. }
\label{rho1_pi1_mnist_logreg}
\end{center}
\vskip -0.1in
\end{figure} 

As evidence that Rank Pruning is dataset and classifier agnostic, we demonstrate its superiority with both (1) a linear logistic regression model with unit L2 regularization and (2) an AlexNet CNN variant with max pooling and dropout, modified to have a two-class output. The CNN structure is adapted from \cite{mnist_cnn_structure} for MNIST and \cite{cifar_cnn_structure} for CIFAR. CNN training ends when a 10\% holdout set shows no loss decrease for 10 epochs (max 50 for MNIST and 150 for CIFAR).

We consider noise rates \begin{small}{$\pi_1, \rho_1\in\{(0, 0.5),(0.25,0.25),$
$(0.5,0),(0.5,0.5)\}$}\end{small} for both MNIST and CIFAR, with additional settings for MNIST in Table \ref{table:mnist_cifar_logreg_f1} to emphasize Rank Pruning performance is noise rate agnostic. The $\rho_1=0$, $\pi_1=0$ case is omitted because when given $\rho_1$, $\pi_1$, all methods have the same loss function as the ground truth classifier, resulting in nearly identical F1 scores. Note that in general, Rank Pruning does not require perfect probability estimation to achieve perfect F1-score. As an example, this occurs when $P$ and $N$ are range-separable, and the rank order of the sorted $g(x)$ probabilities in $P$ and $N$ is consistent with the rank of the perfect probabilities, regardless of the actual values of $g(x)$.

For MNIST using logistic regression, we evaluate the consistency of our noise rate estimates with actual noise rates and theoretical estimates (Eq. \ref{rho_imperfect_condition}) across $\pi_1\in[0, 0.8] \times \rho_1\in[0,0.9]$. The computing time for one setting was $\sim10$ minutes on a single CPU core. The results for $\hat{\rho}_1$ and $\hat{\pi}_1$ (Fig. \ref{rho1_pi1_mnist_logreg}) are satisfyingly consistent, with mean absolute difference MD$_{\hat{\rho}_1, \rho_1}=0.105$ and  MD$_{\hat{\pi}_1, \pi_1}=0.062$, and validate our theoretical solutions (MD$_{\hat{\rho}_1,\hat{\rho}_1^{thry}}=0.0028$, MD$_{\hat{\pi}_1,\hat{\pi}_1^{thry}}=0.0058$). The deviation of the theoretical and empirical estimates reflects the assumption that we have infinite examples, whereas empirically, the number of examples is finite.

We emphasize two observations from our analysis on CIFAR and MNIST. First, Rank Pruning performs well in nearly every scenario and boasts the most dramatic improvement over prior state-of-the-art in the presence of extreme noise ($\pi_1=0.5$, $\rho_1=0.5$). This is easily observed in the right-most quadrant of Table \ref{table:mnist_cifar_cnn_f1}. The $\pi_1=0.5$, $\rho_1=0$ quadrant is nearest to $\pi_1=0$, $\rho_1=0$ and mostly captures CNN prediction variation because $|\tilde{P}| \ll |\tilde{N}|$.

Second, RP$_{\rho}$ often achieves equivalent (MNIST in Table \ref{table:mnist_cifar_cnn_f1}) or significantly higher (CIFAR in Tables \ref{table:mnist_cifar_logreg_f1} and \ref{table:mnist_cifar_cnn_f1}) F1 score than Rank Pruning when $\rho_1$ and $\pi_1$ are provided, particularly when noise rates are large. This effect is exacerbated for harder problems (lower F1 score for the ground truth classifier) like the ``cat” in CIFAR or the ``9” digit in MNIST likely because these problems are more complex, resulting in less confident predictions, and therefore more pruning. 

Remember that $\rho^{conf}_1$ and $\rho^{conf}_0$ are upper bounds when $g$ is unassuming. Noise rate overestimation accounts for the complexity of harder problems. As a downside, Rank Pruning may remove correctly labeled examples that ``confuse” the classifier, instead fitting only the confident examples in each class. We observe this on CIFAR in Table \ref{table:mnist_cifar_logreg_f1} where logistic regression severely underfits so that RP$_{\rho}$ has significantly higher F1 score than the ground truth classifier. Although Rank Pruning with noisy labels seemingly outperforms the ground truth model, if we lower the classification threshold to 0.3 instead of 0.5, the performance difference goes away by accounting for the lower probability predictions.

\begin{table*}[t]

\setlength\tabcolsep{2pt} 
\renewcommand{\arraystretch}{0.9}
\caption{Comparison of F1 score for one-vs-rest MNIST and CIFAR-10 (averaged over all digits/images) using logistic regression. Except for $RP_\rho$, $\rho_1$, $\rho_0$ are given to all methods. Top model scores are in bold with $RP_\rho$ in red if greater than non-RP models. Due to sensitivity to imperfect $g(x)$, \emph{Liu16} often
predicts the same label for all examples.
} 
\vskip -0.17in
\label{table:mnist_cifar_logreg_f1}
\begin{center}
\begin{small}
\begin{sc}

\resizebox{\textwidth}{!}{ 
\begin{tabular}{l|cccc|ccc|cccc|cccc|cccc}
\toprule

\multicolumn{0}{r|}{\textbf{Dataset}} & \multicolumn{4}{c|}{\textbf{CIFAR}} & \multicolumn{15}{c}{\textbf{MNIST}} \\

\multicolumn{0}{r|}{$\pi_1=$} & \textbf{0.0} & \textbf{0.25} & \textbf{0.5} & \textbf{0.5} &
\multicolumn{3}{c}{$\pi_1$\textbf{ = 0.0}}  &   
\multicolumn{4}{c}{$\pi_1$\textbf{ = 0.25}}  & 
\multicolumn{4}{c}{$\pi_1$\textbf{ = 0.5}}  & 
\multicolumn{4}{c}{$\pi_1$\textbf{ = 0.75}} 
\\

\textbf{Model,$\rho_1 = $} &   \textbf{0.5} &    \textbf{0.25} &    \textbf{0.0} &   \textbf{0.5} &  \textbf{0.25} &    \textbf{0.5} &   \textbf{0.75} &   \textbf{0.0} &    \textbf{0.25} &    \textbf{0.5} &   \textbf{0.75} &   \textbf{0.0} &    \textbf{0.25} &    \textbf{0.5} &   \textbf{0.75} &   \textbf{0.0} &    \textbf{0.25} &    \textbf{0.5} &   \textbf{0.75}    \\

\midrule

\textbf{True}   &  0.248 & 0.248 & 0.248 & 0.248 & 0.894 &  0.894 &  0.894 &  0.894 &  0.894 &  0.894 &  0.894 &  0.894 &  0.894 &  0.894 &  0.894 &  0.894 &  0.894 &  0.894 &  0.894 \\
\textbf{RP}$_{\rho}$ &  \textcolor{red}{\textbf{0.301}}  & \textcolor{red}{\textbf{0.316}}  & \textcolor{red}{\textbf{0.308}}  & \textcolor{red}{\textbf{0.261}}  &  \textcolor{red}{\textbf{0.883}} &  \textcolor{red}{\textbf{0.874}} &  \textcolor{red}{\textbf{0.843}} &  \textcolor{red}{\textbf{0.881}} &  \textcolor{red}{\textbf{0.876}} &  \textcolor{red}{\textbf{0.863}} &  \textcolor{red}{\textbf{0.799}} &  0.823 &  0.831 &  \textcolor{red}{\textbf{0.819}} &  \textcolor{red}{\textbf{0.762}} &  0.583 &  0.603 &  0.587 &  0.532 \\
\textbf{RP} & \textbf{0.256} & \textbf{0.262} & \textbf{0.244} & 0.209 & \textbf{0.885} &  \textbf{0.873} &  \textbf{0.839} &  \textbf{0.890} &  \textbf{0.879} &  \textbf{0.863} &  \textbf{0.812} &  \textbf{0.879} &  \textbf{0.862} &  \textbf{0.838} &  \textbf{0.770} &  \textbf{0.855} &  \textbf{0.814} &  \textbf{0.766} &  0.617 \\
\textbf{Nat13} & 0.226 & 0.219 & 0.194 & 0.195 &  0.860 &  0.830 &  0.774 &  0.865 &  0.836 &  0.802 &  0.748 &  0.839 &  0.810 &  0.777 &  0.721 &  0.809 &  0.776 &  0.736 &  \textbf{0.640} \\
\textbf{Elk08} & 0.221  & 0.226  & 0.228  & \textbf{0.210}  &  0.862 &  0.830 &  0.771 &  0.864 &  0.847 &  0.819 &  0.762 &  0.843 &  0.835 &  0.814 &  0.736 &  0.674 &  0.669 &  0.599 &  0.473 \\
\textbf{Liu16} & 0.182 & 0.182 & 0.000 & 0.182 &   0.021 &  0.000 &  0.000 &  0.000 &  0.147 &  0.147 &  0.073 &  0.000 &  0.164 &  0.163 &  0.163 &  0.047 &  0.158 &  0.145 &  0.164 \\

\bottomrule
\end{tabular}
}
\end{sc}
\end{small}
\end{center}
\vskip -0.1in
\end{table*}

\begin{table*}[t]

\setlength\tabcolsep{1pt} 
\renewcommand{\arraystretch}{0.85}
\caption{F1 score comparison on MNIST and CIFAR-10 using a CNN. Except for $RP_\rho$, $\rho_1$, $\rho_0$ are given to all methods.
} 
\vskip -0.17in
\label{table:mnist_cifar_cnn_f1}
\begin{center}
\begin{small}
\begin{sc}

\resizebox{\textwidth}{!}{
\begin{tabular}{l|c|ccccc|ccccc|ccccc|ccccc}
\toprule

\multicolumn{2}{l|}{\textbf{MNIST/CIFAR}} &  
\multicolumn{5}{c|}{$\pi_1$\textbf{ = 0.0}}   &   
\multicolumn{5}{c|}{$\pi_1$\textbf{ = 0.25}}  & 
\multicolumn{10}{c}{$\pi_1$\textbf{ = 0.5}}   \\

\multicolumn{1}{l}{\textbf{IMAGE}} &  
\multicolumn{1}{c|}{} & 
\multicolumn{5}{c|}{$\rho_1$\textbf{ = 0.5}} & 
\multicolumn{5}{c|}{$\rho_1$\textbf{ = 0.25}}  & 
\multicolumn{5}{c}{$\rho_1$\textbf{ = 0.0}} & 
\multicolumn{5}{c}{$\rho_1$\textbf{ = 0.5}}    \\

{\textbf{CLASS}} &   \textbf{True} & \textbf{RP}$_{\rho}$ &    \textbf{RP} & \textbf{Nat13} & \textbf{Elk08} & \textbf{Liu16} & \textbf{RP}$_{\rho}$ &    \textbf{RP} & \textbf{Nat13} & \textbf{Elk08} & \textbf{Liu16} & \textbf{RP}$_{\rho}$ &    \textbf{RP} & \textbf{Nat13} & \textbf{Elk08} & \textbf{Liu16} & \textbf{RP}$_{\rho}$ &    \textbf{RP} & \textbf{Nat13} & \textbf{Elk08} & \textbf{Liu16} \\
\midrule
\textbf{0}     &  0.993 &  \textcolor{red}{\textbf{0.991}} &  \textbf{0.988} &  0.977 &  0.976 &  0.179 &  \textcolor{red}{\textbf{0.991}} &  \textbf{0.992} &  0.982 &  0.981 &  0.179 &  \textcolor{red}{\textbf{0.991}} &  \textbf{0.992} &  0.984 &  0.987 &  0.985 &  \textcolor{red}{\textbf{0.989}} &  \textbf{0.989} &  0.937 &  0.964 &  0.179 \\
\textbf{1}     &  0.993 &  \textcolor{red}{\textbf{0.990}} &  \textbf{0.991} &  0.989 &  0.985 &  0.204 &  \textcolor{red}{\textbf{0.992}} &  \textbf{0.992} &  0.984 &  0.987 &  0.204 &  0.990 &  0.991 &  0.992 &  \textbf{0.993} &  0.990 &  \textcolor{red}{\textbf{0.989}} &  \textbf{0.989} &  0.984 &  0.988 &  0.204 \\
\textbf{2}     &  0.987 &  \textcolor{red}{\textbf{0.973}} &  \textbf{0.976} &  0.972 &  0.969 &  0.187 &  \textcolor{red}{\textbf{0.984}} &  \textbf{0.983} &  0.978 &  0.975 &  0.187 &  0.985 &  0.986 &  0.985 &  0.986 &  \textbf{0.988} &  \textcolor{red}{\textbf{0.971}} &  \textbf{0.975} &  0.968 &  0.959 &  0.187 \\
\textbf{3}     &  0.990 &  \textcolor{red}{\textbf{0.984}} &  \textbf{0.984} &  0.972 &  0.981 &  0.183 &  \textcolor{red}{\textbf{0.986}} &  \textbf{0.986} &  0.978 &  0.978 &  0.183 &  \textcolor{red}{\textbf{0.990}} &  0.987 &  \textbf{0.989} &  \textbf{0.989} &  0.984 &  \textcolor{red}{\textbf{0.981}} &  \textbf{0.979} &  0.957 &  0.971 &  0.183 \\
\textbf{4}     &  0.994 &  \textcolor{red}{\textbf{0.981}} &  0.979 &  \textbf{0.981} &  0.977 &  0.179 &  \textcolor{red}{\textbf{0.985}} &  \textbf{0.987} &  0.971 &  0.964 &  0.179 &  0.987 &  \textbf{0.990} &  \textbf{0.990} &  0.989 &  0.985 &  \textcolor{red}{\textbf{0.977}} &  \textbf{0.982} &  0.955 &  0.961 &  0.179 \\
\textbf{5}     &  0.989 &  \textcolor{red}{\textbf{0.982}} &  \textbf{0.980} &  0.978 &  0.979 &  0.164 &  \textcolor{red}{\textbf{0.985}} &  \textbf{0.982} &  0.964 &  0.965 &  0.164 &  \textcolor{red}{\textbf{0.988}} &  \textbf{0.987} &  \textbf{0.987} &  0.984 &  \textbf{0.987} &  \textcolor{red}{\textbf{0.965}} &  \textbf{0.968} &  0.962 &  0.957 &  0.164 \\
\textbf{6}     &  0.989 &  \textcolor{red}{\textbf{0.986}} &  \textbf{0.985} &  0.972 &  0.982 &  0.175 &  \textcolor{red}{\textbf{0.985}} &  \textbf{0.987} &  0.978 &  0.981 &  0.175 &  0.985 &  0.985 &  \textbf{0.988} &  0.987 &  0.985 &  \textcolor{red}{\textbf{0.983}} &  \textbf{0.982} &  0.946 &  0.959 &  0.175 \\
\textbf{7}     &  0.987 &  \textcolor{red}{\textbf{0.981}} &  \textbf{0.980} &  0.967 &  0.948 &  0.186 &  \textcolor{red}{\textbf{0.976}} &  \textbf{0.975} &  0.971 &  0.971 &  0.186 &  0.976 &  0.980 &  \textbf{0.985} &  0.982 &  0.983 &  \textcolor{red}{\textbf{0.973}} &  \textbf{0.968} &  0.942 &  0.958 &  0.186 \\
\textbf{8}     &  0.989 &  \textcolor{red}{\textbf{0.975}} &  \textbf{0.978} &  0.943 &  0.967 &  0.178 &  \textcolor{red}{\textbf{0.982}} &  \textbf{0.981} &  0.967 &  0.951 &  0.178 &  0.982 &  \textbf{0.984} &  0.982 &  0.979 &  0.983 &  \textcolor{red}{\textbf{0.977}} &  \textbf{0.975} &  0.864 &  0.959 &  0.178 \\
\textbf{9}     &  0.982 &  0.966 &  \textbf{0.974} &  0.972 &  0.935 &  0.183 &  \textcolor{red}{\textbf{0.976}} &  \textbf{0.974} &  0.967 &  0.967 &  0.183 &  0.976 &  0.975 &  0.974 &  \textbf{0.978} &  0.970 &  \textcolor{red}{\textbf{0.959}} &  0.940 &  0.931 &  \textbf{0.942} &  0.183 \\
\midrule
\textbf{AVG$_{MN}$} &  0.989 &  \textcolor{red}{\textbf{0.981}} &  \textbf{0.981} &  0.972 &  0.970 &  0.182 &  \textcolor{red}{\textbf{0.984}} &  \textbf{0.984} &  0.974 &  0.972 &  0.182 &  0.985 &  \textbf{0.986} &  \textbf{0.986} &  0.985 &  0.984 &  \textcolor{red}{\textbf{0.976}} &  \textbf{0.975} &  0.945 &  0.962 &  0.182 \\

\midrule

\textbf{plane}   &  0.755 &  \textcolor{red}{\textbf{0.689}} &  \textbf{0.634} &  0.619 &  0.585 &  0.182 &  \textcolor{red}{\textbf{0.695}} &  \textbf{0.702} &  0.671 &  0.640 &  0.182 &  \textcolor{red}{\textbf{0.757}} &  \textbf{0.746} &  0.716 &  0.735 &    0.000 &  \textcolor{red}{\textbf{0.628}} &  \textbf{0.635} &  0.459 &  0.598 &  0.182 \\
\textbf{auto} &  0.891 &  \textcolor{red}{\textbf{0.791}} &  \textbf{0.785} &  0.761 &  0.768 &  0.000 &  \textcolor{red}{\textbf{0.832}} &  \textbf{0.824} &  0.771 &  0.783 &  0.182 &  0.862 &  0.866 &  \textbf{0.869} &  0.865 &    0.000 &  \textcolor{red}{\textbf{0.749}} &  \textbf{0.720} &  0.582 &  0.501 &  0.182 \\
\textbf{bird}       &  0.669 &  \textcolor{red}{\textbf{0.504}} &  \textbf{0.483} &  0.445 &  0.389 &  0.182 &  \textcolor{red}{\textbf{0.543}} &  \textbf{0.515} &  0.469 &  0.426 &  0.182 &  \textcolor{red}{\textbf{0.577}} &  \textbf{0.619} &  0.543 &  0.551 &    0.000 &  \textcolor{red}{\textbf{0.447}} &  \textbf{0.409} &  0.366 &  0.387 &  0.182 \\
\textbf{cat}        &  0.487 &  \textcolor{red}{\textbf{0.350}} &  0.279 &  0.310 &  \textbf{0.313} &  0.000 &  \textcolor{red}{\textbf{0.426}} &  0.317 &  \textbf{0.350} &  0.345 &  0.182 &  \textcolor{red}{\textbf{0.489}} &  \textbf{0.433} &  0.426 &  0.347 &    0.000 &  \textcolor{red}{\textbf{0.394}} &  0.282 &  0.240 &  \textbf{0.313} &  0.182 \\
\textbf{deer}       &  0.726 &  \textcolor{red}{\textbf{0.593}} &  \textbf{0.540} &  0.455 &  0.522 &  0.182 &  \textcolor{red}{\textbf{0.585}} &  0.554 &  0.480 &  \textbf{0.569} &  0.182 &  0.614 &  0.630 &  \textbf{0.643} &  0.633 &    0.000 &  \textcolor{red}{\textbf{0.458}} &  0.375 &  0.310 &  \textbf{0.383} &  0.182 \\
\textbf{dog}        &  0.569 &  \textcolor{red}{\textbf{0.544}} &  \textbf{0.577} &  0.429 &  0.456 &  0.000 &  \textcolor{red}{\textbf{0.579}} &  0.559 &  0.569 &  \textbf{0.576} &  0.182 &  0.647 &  0.637 &  \textbf{0.667} &  0.630 &    0.000 &  \textcolor{red}{\textbf{0.516}} &  0.461 &  0.412 &  \textbf{0.465} &  0.182 \\
\textbf{frog}       &  0.815 &  \textcolor{red}{\textbf{0.746}} &  0.727 &  \textbf{0.733} &  0.718 &  0.000 &  \textcolor{red}{\textbf{0.729}} &  \textbf{0.750} &  0.630 &  0.584 &  0.182 &  0.767 &  \textbf{0.782} &  0.777 &  0.770 &    0.000 &  \textcolor{red}{\textbf{0.635}} &  \textbf{0.615} &  0.589 &  0.524 &  0.182 \\
\textbf{horse}      &  0.805 &  \textcolor{red}{\textbf{0.690}} &  0.670 &  0.624 &  \textbf{0.672} &  0.182 &  \textcolor{red}{\textbf{0.710}} &  0.669 &  \textbf{0.683} &  0.627 &  0.182 &  0.761 &  \textbf{0.776} &  0.769 &  0.753 &    0.000 &  \textcolor{red}{\textbf{0.672}} &  \textbf{0.569} &  0.551 &  0.461 &  0.182 \\
\textbf{ship}       &  0.851 &  \textcolor{red}{\textbf{0.791}} &  \textbf{0.783} &  0.719 &  0.758 &  0.182 &  \textcolor{red}{\textbf{0.810}} &  \textbf{0.801} &  0.758 &  0.723 &  0.182 &  0.816 &  0.822 &  0.830 &  \textbf{0.831} &    0.000 &  \textcolor{red}{\textbf{0.715}} &  \textbf{0.738} &  0.569 &  0.632 &  0.182 \\
\textbf{truck}      &  0.861 &  \textcolor{red}{\textbf{0.744}} &  \textbf{0.722} &  0.655 &  0.665 &  0.182 &  \textcolor{red}{\textbf{0.814}} &  \textbf{0.826} &  0.798 &  0.774 &  0.182 &  0.812 &  0.830 &  \textbf{0.826} &  0.824 &    0.000 &  \textcolor{red}{\textbf{0.654}} &  0.543 &  0.575 &  \textbf{0.584} &  0.182 \\
\midrule
\textbf{AVG$_{CF}$}      &  0.743 &  \textcolor{red}{\textbf{0.644}} &  \textbf{0.620} &  0.575 &  0.585 &  0.109 &  \textcolor{red}{\textbf{0.672}} &  \textbf{0.652} &  0.618 &  0.605 &  0.182 &  \textcolor{red}{\textbf{0.710}} &  \textbf{0.714} &  0.707 &  0.694 &    0.000 &  \textcolor{red}{\textbf{0.587}} &  \textbf{0.535} &  0.465 &  0.485 &  0.182 \\
\bottomrule
\end{tabular}
}
\end{sc}
\end{small}
\end{center}
\vskip -0.1in
\end{table*}

\section{Discussion}

To our knowledge, Rank Pruning is the first time-efficient algorithm, w.r.t. classifier fitting time, for $\tilde{P}\tilde{N}$ learning that achieves similar or better F1, error, and AUC-PR than current state-of-the-art methods across practical scenarios for synthetic, MNIST, and CIFAR datasets, with logistic regression and CNN classifiers, across all noise rates, $\rho_1, \rho_0$, for varying added noise, dimension, separability, and number of training examples. By \emph{learning with confident examples}, we discover provably consistent estimators for noise rates, $\rho_1$, $\rho_0$, derive theoretical solutions when $g$ is unassuming, and accurately uncover the classifications of $f$ fit to hidden labels, perfectly when $g$ range separates $P$ and $N$.

We recognize that disambiguating whether we are in the unassuming or range separability condition may be desirable. Although knowing $g^*(x)$ and thus $\Delta g(x)$ is impossible, if we assume randomly uniform noise, and toggling the $LB_{y=1}$ threshold does not change $\rho^{conf}_1$, then $g$ range separates $P$ and $N$. When $g$ is unassuming, Rank Pruning is still robust to imperfect $g(x)$ within a range separable subset of $P$ and $N$ by training with confident examples even when noise rate estimates are inexact.

An important contribution of Rank Pruning is generality, both in classifier and implementation. The use of logistic regression and a generic CNN in our experiments emphasizes that our findings are not dependent on model complexity. We evaluate thousands of scenarios to avoid findings that are an artifact of problem setup. A key point of Rank Pruning is that we only report the simplest, non-parametric version. For example, we use 3-fold cross-validation to compute $g(x)$ even though we achieved improved performance with larger folds. We tried many variants of pruning and achieved significant higher F1 for MNIST and CIFAR, but to maintain generality, we present only the basic model.

At its core, Rank Pruning is a simple, robust, and general solution for noisy binary classification by \emph{learning with confident examples}, but it also challenges how we think about training data. For example, SVM showed how a decision boundary can be recovered from only support vectors. Yet, when training data contains significant mislabeling, confident examples, many of which are far from the boundary, are informative for uncovering the true relationship $P(y=1|x)$. Although modern affordances of ``big data" emphasize the value of \emph{more} examples for training, through Rank Pruning we instead encourage a rethinking of learning with \emph{confident} examples.

\chapter{Conclusion and Prospects}
\label{chap9:conclusion}

\section{Conclusions}
In this thesis, I have addressed several key aspects of intelligence: few-shot learning, representation learning, causal learning, lifelong learning, improving robustness, and improving intelligibility. Different works usually involve more than one aspect, and use one aspect to improve some others. Also, physics and information play pivotal roles (Fig. \ref{fig:thesis_for_aspects}). In AI Physicist (Chapter \ref{chap2:AI_physicist}), we apply four physicist strategies to build an agent that has the capability of predicting the future, few-shot learning, lifelong learning, and interpretability. The agent we develop can in turn also learn physics theories in prototypical environments. In MeLA (Chapter \ref{chap7:mela}), we show that learning good representation can help predicting the future in a few-shot way. To understand the two-term tradeoff in representation learning, inspired by phase transitions in physics, we study the phase transitions in the Information Bottleneck (IB) (Chapter \ref{chap3:IB} and Chapter \ref{chap4:IB_phase_transition}). We  derive formulas for giving the condition for IB phase transitions. Based on the formulas, we show the close interplay between the information objective, the dataset and the learned representation, by revealing that each phase transition corresponds to learning of a new component of nonlinear maximum correlation between the input and the target. We also show how the phase transitions depend on the model capacity. In addition, for binary classification, we study the mutual information with target vs. entropy of the representation tradeoff (Chapter \ref{chap5:distillation}), by proving that we can reach the Pareto frontier by binning a uniformized sorted probability of the target given the input, and illustrate how it can be interpreted as an information-theoretically optimal image clustering algorithm. To enable machines to understand causality from observations, we introduce an algorithm that combines predicting the future with minimizing information from the input, for exploratory causal discovery of observational time series, and demonstrate its effectiveness in synthetic, video game, breath rate vs. heart rate and \emph{C.elegans} datasets. To improve robustness of classifiers to noisy labels, we introduce Rank Pruning for learning under noisy labels. Under mild assumptions, we prove that it can achieve the same accuracy as if the labels are not corrupted. We also demonstrate that it improves the state-of-the-art in noisy label classification. I believe this progress will bring us one step closer to building intelligent machines that can make sense of the world and become better at learning.

\section{Prospects}

Looking ahead, 
I believe there are vast opportunities ahead. The graph in Fig. \ref{fig:thesis_for_aspects} is far from a complete graph. And I believe an ideal graph will look like Fig. \ref{fig:prospects_aspects_combined}, where all the aspects work integrally together. As a first step, any edge, and with either of the two directions, implies an opportunity. Take some aspects which have not been addressed by my thesis for example: learning good representations can improve robustness; learning good representation can help lifelong learning; causal learning can improve robustness, improving interpretability can help few-shot learning; to name just a few. Besides, there exists many other aspects either mentioned or not mentioned in Section \ref{sec:what_is_intelligence}, which we can also draw many edges with. Suppose that there are $N$ aspects. Then we have $N(N-1)$ directed edges above. Moreover, we can try to combine three or more aspects together, just like my AI Physicist work, which implies on the order of $N^3$ or larger number of combinations.

Ultimately, we want an agent that can combine all the aspects integrally together in an elegant way. Although daunting, I think it is possible, and it may not be so far away. As we have seen, physics and information have provided valuable tools underlying my thesis and many other works, and I believe they will certainly play a central role, in combining theses aspects integrally together, to build an agent that can make sense of the world, and help solve many problems in the society.

\begin{figure}[pbt]
\centerline{\includegraphics[width=1\columnwidth]{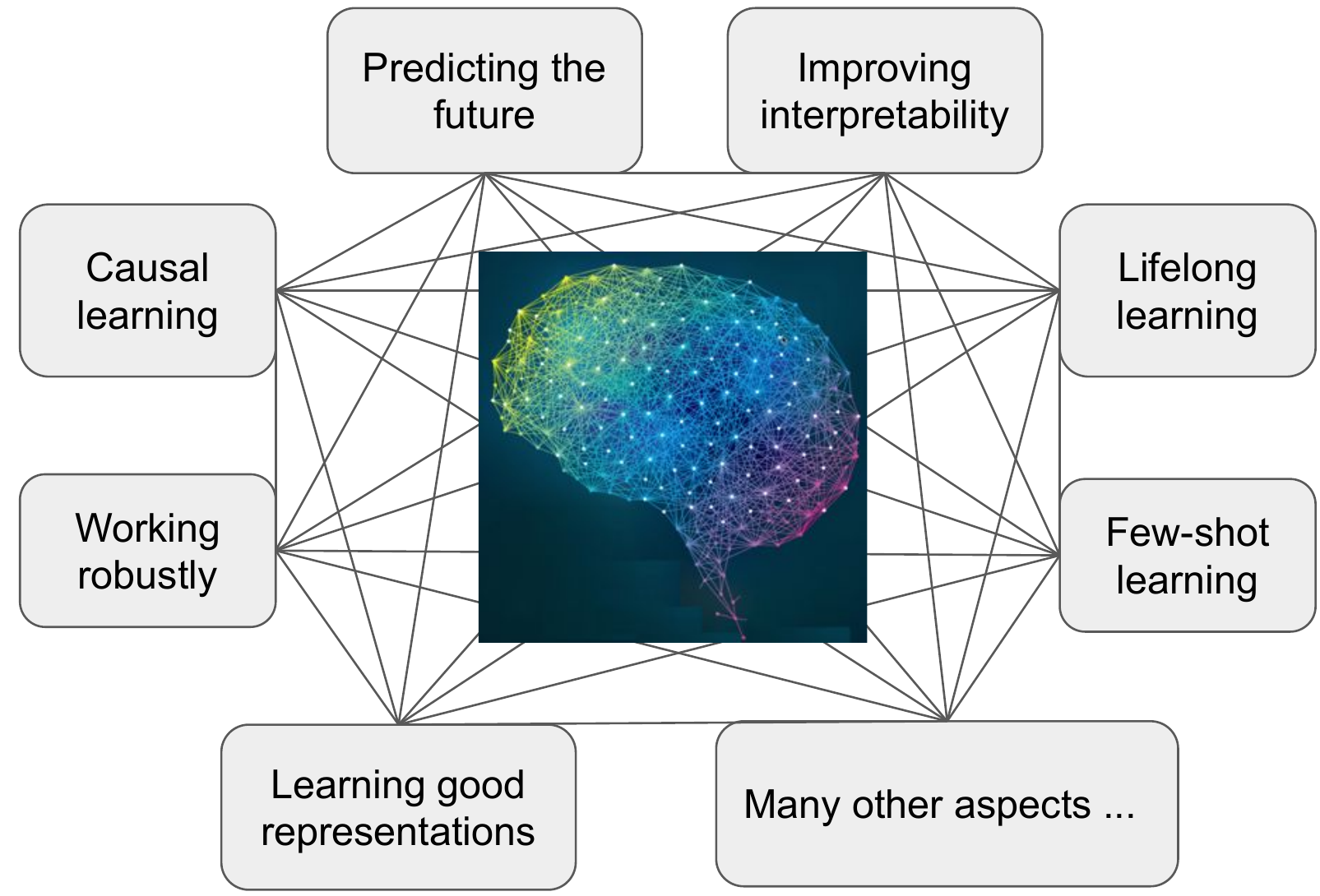}}
\caption{
Prospect of different aspects combined integrally together.
\label{fig:prospects_aspects_combined}
}
\end{figure}

\appendix
\chapter{Appendix}
\section{Appendix for Chapter \ref{chap2:AI_physicist}}

\subsection{AI Physicist Algorithm}
\label{AIphysicistAlgo}

The detailed AI Physicist algorithm is presented\footnote{The full code is open-sourced at \href{https://github.com/tailintalent/AI_physicist}{github.com/tailintalent/AI\_physicist}.} in Algorithm~\ref{alg:overall_algorithm}, with links to each of the individual sub-algorithms.
Like most numerical methods, the algorithm contains a number of hyperparameters that can be tuned to optimize performance; Table~\ref{hyperparameter_table} lists them and their settings for our numerical experiments.

\begin{algorithm}[h!]
\caption{\textbf{AI Physicist: Overall algorithm}}
\label{alg:overall_algorithm}
\begin{algorithmic}
\STATE Given observations $D=\{(\x_t, \y_t)\}$ from new environment:
\STATE 1: $\mathbfcal{T}_{M_0}\gets \text{\textbf{Hub}.propose-theories}(D, M_0)$ (Alg.~\ref{alg:theory_propose})
\STATE 2: $\mathbfcal{T}\gets\text{differentiable-divide-and-conquer}(D, \mathbfcal{T}_{M_0})$(Alg.~\ref{alg:divide_and_conquer})
\STATE 3: \textbf{Hub}.add-theories($\mathbfcal{T},D$) (Alg.~\ref{alg:add_theory})
\STATE
\STATE Organizing theory hub:
\STATE $\mathbfcal{T}\gets$\textbf{Hub}.Occam's-Razor-with-MDL($\mathbfcal{T},D$) (Alg.~\ref{alg:MDL_simplification})
\STATE $\mathbfcal{T}\gets$\textbf{Hub}.unify($\mathbfcal{T}$) (Alg.~\ref{alg:unification})
\end{algorithmic}
\end{algorithm}

\subsection{The Differentiable Divide-and-Conquer (DDAC) Algorithm}
\label{DivideAndConquerAlgo}

\begin{algorithm}[h!]
\caption{\textbf{AI Physicist: Differentiable Divide-and-Conquer with Harmonic Loss}}
\label{alg:divide_and_conquer}
\begin{algorithmic}
   \STATE {\bfseries Require} Dataset $\D=\{(\x_t, \y_t)\}$
   \STATE {\bfseries Require $M$}: number of initial total theories for training
   \STATE {\bfseries Require $\mathbfcal{T}_{M_0}=\{(\f_i,c_i)\}, i=1,...,M_0, 0\leq M_0\leq M$}: theories proposed from theory hub
   \STATE {\bfseries Require $K$}: number of gradient iterations
   \STATE {\bfseries Require $\beta_\f, \beta_\c$}: learning rates
   \STATE {\bfseries Require $\epsilon_0$}: initial precision floor
   \STATE
   \STATE 1: Randomly initialize $M-M_0$ theories $\T_i, i=M_0+1,$
   \STATE \ \ \ \ $...,M$. Denote $\mathbfcal{T}=(\T_1,,...,\T_M)$, $\mathbf{f_\vtheta}=(\f_1,...,\f_M)$,
   \STATE \ \ \ \ $\mathbf{c_\vphi}=(c_1,...c_M)$ with learnable parameters $\vtheta$ and $\vphi$.
  \STATE
  \STATE // \textit{Harmonic training with DL loss:}
   \STATE 2: $\epsilon\gets\epsilon_0$
   \STATE 3: \textbf{for} $\mathit{k}$ \textbf{in} $\{1,2,3,4,5\}$ \textbf{do}:
   \STATE 4: \ \ \ \ $\mathbfcal{T}\gets \textbf{IterativeTrain}(\mathbfcal{T}, \D, \ell_{\DL, \epsilon}, \Ell_{-1})$, where
   \STATE \ \ \ \ \ \ \ \  $\Ell_{-1}\equiv\sum_t\left(\frac{1}{M}\sum_{i=1}^M \ell[\f_i(\x_t), \y_t]^{-1}\right)^{-1}$ (Eq. \ref{generalized_mean_risk_empirical})
 \STATE 5: \ \ \ \  $\epsilon\gets\text{set\_epsilon}(\mathbfcal{T},\D)$ //
   \textit{median prediction error}
    \STATE 6: \textbf{end for}
  \STATE
   \STATE // \textit{Fine-tune each theory and its domain:}
   \STATE 7: \textbf{for} $\mathit{k}$ \textbf{in} $\{1,2\}$ \textbf{do}:
   \STATE 8: \ \ \ \ $\mathbfcal{T}\gets \textbf{IterativeTrain}(\mathbfcal{T}, \D, \ell_{\DL, \epsilon}, \Ell_\text{dom})$, where
   \STATE \ \ \ \ \ \ \ \ $\Ell_{\text{dom}}\equiv\sum_t \ell[\f_{i_t}(\x_t), \y_t]$ with $i_t=\text{arg max}_i\c_i(\x_t)$ 
 \STATE 9: \ \ \ \  $\epsilon\gets\text{set\_epsilon}(\mathbfcal{T},\D)$ //
   \textit{median prediction error}
    \STATE 10: \textbf{end for}
   \STATE 11: \textbf{return} $\mathbfcal{T}$
   \STATE
   \STATE \textbf{subroutine IterativeTrain}$(\mathbfcal{T}, \D, \ell, \Ell):$
   
   \STATE s1: \textbf{for} $k$ in $\{1,...,K\}$ \textbf{do}:
   \STATE \textit{\ \ \ \ \ \ \ //   Gradient descent on $\mathbf{f_\vtheta}$ with loss $\Ell$:}
   \STATE s2: \ \ \ $\g_\f\gets\nabla_\vtheta\Ell[\mathbfcal{T},D,\ell]$
   \STATE s3:\ \ \ \ \ Update $\vtheta$ using gradients $\mathbf{g}_\f$ (e.g. Adam \cite{kingma2014adam} or SGD)
   \STATE \textit{\ \ \ \ \ \ \ // Gradient descent on $\c_\vphi$ with the best performing}
   \STATE \textit{\ \ \ \ \ \ \ \ \ \ \ theory index as target:}
   \STATE s4: \ \ \ $b_t\gets \argmin_i\{\ell[\f_i(\x_t), \y_t]\}, \forall t$
   \STATE s5: \ \ \ \ $\g_\c\gets \nabla_\vphi\sum_{(\x_t, \cdot)\in \D}\text{CrossEntropy}[\text{softmax}(\c_\vphi(\x_t)), b_t]$
   \STATE s6:\ \ \ \ \ Update $\vphi$ using gradients $\g_\c$ (e.g. Adam \cite{kingma2014adam} or SGD)
   \STATE s7: \textbf{end for}
   \STATE s8: $\mathbfcal{T}\gets \text{AddTheories}(\mathbfcal{T},D,\ell,\Ell)$\ //Optional
   \STATE s9: $\mathbfcal{T}\gets \text{DeleteTheories}(\mathbfcal{T},D,\ell)$\ //Optional
   \STATE s10: \textbf{return} $\mathbfcal{T}$
\end{algorithmic}
\end{algorithm}

Here we elaborate on our differentiable divide-and-conquer (DDAC) algorithm with generalized-mean loss $\Ell_\gamma$ (Eq. (\ref{generalized_mean_risk_empirical})). This loss with $\gamma <0$ works with a broad range of distance functions $\l$ satisfying Theorem~\ref{thm:theorem_gradient}. Since the goal of our AI Physicist is to minimize the overall description length (DL) from \eq{description_length}, we choose $\l$ to be the DL loss function of \eq{RealDLeq} together with $\gamma=-1$ (harmonic loss), which works quite well in practice.

Algorithm~\ref{alg:divide_and_conquer} describes our differentiable divide-and-conquer implementation, which consists of two stages. In the first stage (steps 2-6), it applies the subroutine $\text{IterativeTrain}(\mathbfcal{T},D,\ell_{\DL,\epsilon},\Ell_{-1})$ with harmonic loss $\Ell_{-1}$ to train the theories $\mathbfcal{T}$ a few times with the precision floor $\epsilon$ gradually lowered according to the following annealing schedule. We set the initial precision floor $\epsilon$ to be quite large so that $\ell$ initially approximates an MSE loss function.
After each successive iteration, we reset $\epsilon$ to the median prediction error.

The DL loss function from \eq{RealDLeq} is theoretically desirable but tricky to train,  both because it is non-convex and because it is quite flat and uninformative far from its minimum. 
Our annealing schedule helps overcome both problems: initially when $\epsilon$ is large, it approximates MSE-loss which is convex and guides the training to a good approximate minimum, which further training accurately pinpoints as $\epsilon$ is reduced.

The subroutine $\text{IterativeTrain}$ forms the core of the algorithm. In the first stage (steps 2-6), it uses the harmonic mean of the DL-loss of multiple prediction functions $\f_\vtheta=(\f_1,...,\f_M)$
(\ie, \eq{generalized_mean_risk_empirical} with $\gamma=-1$ and $\ell=$DL)
to simultaneously train these functions, encouraging them to each specialize in the domains where they predict best (as proven by Theorem \ref{thm:theorem_gradient}), and simultaneously trains the domain classifier $\c_\vphi=(c_1,...,c_M)$ using each example's best-performing prediction function as target, with categorical cross-entropy loss. 
After several rounds of $\text{IterativeTrain}$ with successively lower precision floors, each prediction function typically becomes good at predicting part of the dataset, and the domain classifier becomes good at predicting for each example which prediction function will predict best. 

AddTheories($\mathbfcal{T}, D,\ell,\Ell$) inspects each theory $\mathcal{T}_i$ describing at least a large fraction $\eta_{\rm insp}$ (we use $30\%$) of the examples to see if a non-negligible proportion $\eta_{\rm split}$ of examples (we use a threshold of $5\%$) of the examples inside its domain have MSE larger than a certain limit $\epsilon_{\rm add}$ (we use $2\times10^{-6}$) and thus warrant splitting off into a separate domain.

If so, it uses those examples to initialize a new theory $\mathcal{T}_{M+1}$, and performs tentative training together with other theories using IterativeTrain without steps s8 and s9 (it is also possible to allow steps s8 and s9 in this recursive calling of IterativeTrain, which will enable a recursive adding of theories for not-well-explained data, and may enable a more powerful DDAC algorithm). If the resulting loss $\Ell$ is smaller than before adding the new theory, $\mathcal{T}_{M+1}$ is accepted and retained, otherwise it is rejected and training reverts to the checkpoint before adding the theory. 
DeleteTheories($\mathbfcal{T},D,\ell$) deletes theories whose domain or best-predicted examples cover a negligible fraction of the examples (we use a delete threshold $\eta_{\rm del}=0.5\%$).

In the second stage (steps 7-10), the IterativeTrain is applied again, but the loss for each example $(\x_t,\y_t)$ is using only the theory that the domain classifier $\c_\vphi=(c_1,c_2,...c_M)$ predicts (having the largest logit). In this way, we iteratively fine-tune the prediction functions $\{\f_i\}$ w.r.t. each of its domain, and fine-tune the domain to the best performing theory at each point.
The reason that we assign examples to domains using our 
domain classifier rather than prediction accuracy is that the trained domains are likely to be simpler and more contiguous, thus generalizing better to unseen examples than, \eg, the nearest neighbor algorithm.

We now specify the default hyperparameters used for Algorithm \ref{thm:theorem_gradient} in our experiments (unless otherwise specified). We set the initial total number of theories $M=4$, from which $M_0=2$ theories are proposed from the theory hub. The initial precision floor $\epsilon_0=10$ and the number of gradient iterations $K=10000$. We use the Adam \cite{kingma2014adam} optimizer with default parameters for the optimization of both the prediction function and the domain classifier. We randomly split each dataset $D$ into $D_{train}$ and $D_{test}$ with 4:1 ratio. The $D_{test}$ is used only for evaluation of performance. The batch size is set to min(2000, $|D_{train}|$). We set the initial learning rate $\beta_\f=5\times 10^{-3}$ for the prediction functions $\f_\vtheta$ and
 $\beta_\c=10^{-3}$ for the domain classifier $\c_\vphi$. We also use a learning rate scheduler that monitors the validation loss every 10 epochs, 
 and divides the learning rate by 10 if the validation loss has failed to decrease after 40 monitoring points and stops training early if there is no decrease after 200 epochs --- or if 
the entire MSE loss for all the theories in their respective domains drops below $10^{-12}$.

To the main harmonic loss $\Ell_\gamma$, we add two regularization terms. One is $L_1$ loss whose strength increases quadratically from 0 to $\epsilon_{L1}$ during the first 5000 epochs and remains constant thereafter.
The second regularization term is a very small MSE loss of strength $\epsilon_{MSE}$, to encourage the prediction functions to remain not too far away from the target outside their domain.

\subsection{Occam's Razor with MDL Algorithm}
\label{appendix:OccamsRazorAlgo}

Pushing on after the DDAC algorithm with harmonic loss that minimizes the $\sum_t \DL(\u_t)$ term in Eq. (\ref{description_length}), the AI Physicist then strives to minimize the $\DL(\mathbfcal{T})$ term, which can be decomposed as $\DL(\mathbfcal{T})=\DL(\mathbf{f}_\vtheta)+\DL(\c_\vphi)$, where $\mathbf{f}_\vtheta=(\f_1,...\f_M)$ and $\c_\vphi=(c_1,...c_M)$. We focus on minimizing $\DL(\mathbf{f}_\vtheta)$, since in different environments the prediction functions $\f_i$ can often be reused, while the domains may differ. 
As mentioned, we define $\DL(\f_\vtheta)$ simply as the sum of the description lengths of the numbers parameterizing $\f_\vtheta$:
\beq{DL_for_f}
\DL(\mathbf{f}_\vtheta)=\sum_j \DL(\theta_j).
\eeq
This means that 
$\DL(\mathbf{f}_\vtheta)$ can be significantly reduced if an irrational parameter is replaced by a simpler rational number.

\begin{algorithm}[hp!]
\caption{\textbf{AI Physicist: Occam's Razor with MDL}}
\label{alg:MDL_simplification}
\begin{algorithmic}
   \STATE {\bfseries Require} Dataset $\D=\{(\x_t, \y_t)\}$
   \STATE {\bfseries Require $\mathbfcal{T}=\{(\f_i,c_i)\}, i=1,...,M$}: theories trained after Alg.~\ref{alg:divide_and_conquer}
   \STATE {\bfseries Require $\epsilon$}: Precision floor for $\ell_{\DL,\epsilon}$
   \STATE 1: \textbf{for} $i$ in $\{1,...,M\}$ \textbf{do}:
  \STATE 2: \ \ \ \ \ $\D^{(i)}\gets\{(\x_t,\y_t)|\argmax_j\{c_j(\x_t)\}=i\}$
  \STATE 3:\ \ \ \ \ \  $\f_i\gets\textbf{MinimizeDL}(\text{collapseLayers},\f_i, D^{(i)},\epsilon)$
  \STATE 4:\ \ \ \ \ \   $\f_i\gets\textbf{MinimizeDL}(\text{localSnap},\f_i, D^{(i)},\epsilon)$
  \STATE 5:\ \ \ \ \ \  $\f_i\gets\textbf{MinimizeDL}(\text{integerSnap},\f_i, D^{(i)},\epsilon)$
  \STATE 6:\ \ \ \ \ \  $\f_i\gets\textbf{MinimizeDL}(\text{rationalSnap},\f_i, D^{(i)},\epsilon)$
  \STATE 7:\ \ \ \ \ \  $\f_i\gets\textbf{MinimizeDL}(\text{toSymbolic},\f_i, D^{(i)},\epsilon)$
  \STATE 8: \textbf{end for}
  \STATE 9: \textbf{return} $\mathbfcal{T}$
  \STATE
  \STATE \textbf{subroutine MinimizeDL}(transformation, $\f_i$, $D^{(i)}$,$\epsilon$):
  \STATE s1:\ \textbf{while} transformation.is\_applicable($\f_i$) \textbf{do}:
  \STATE s2:\ \ \ \ \ \  $\text{dl}_0\gets\DL(\f_i)+\sum_{(\x_t,\y_t)\in D^{(i)}}\ell_{\DL,\epsilon}[\f_i(\x_t), \y_t]$
  \STATE s3:\ \ \ \ \ \  $f_{\text{clone}}\gets \f_i$  // \textit{clone $\f_i$ 
  in case transformation fails}
  \STATE s4:\ \ \ \ \ \  $\f_i\gets \text{transformation}(\f_i)$
  \STATE s5:\ \ \ \ \ \  $\f_i\gets\text{Minimize}_{\f_i}\sum_{(\x_t, \y_t)\in \D^{(i)}}\ell_{\DL,\epsilon}[\f_i(\x_t),\y_t]$
  \STATE s6:\ \ \ \ \ \  $\text{dl}_1\gets\DL(\f_i)+\sum_{(\x_t,\y_t)\in D^{(i)}}\ell_{\DL,\epsilon}[\f_i(\x_t), \y_t]$ 
\STATE s7:\ \ \ \ \ \  \textbf{if} $\text{dl}_1>\text{dl}_0$ $\textbf{return}$ $f_{\text{clone}}$
  \STATE s8:\ \textbf{end while}
  \STATE s9:\ \textbf{return} $\f_i$
\end{algorithmic}
\end{algorithm}

If a physics experiment or neural net training produces a parameter $p=1.999942$, it would be natural to interpret this as a hint, and to check if $p=2$ gives an equally acceptable fit to the data. We formalize this by replacing any real-valued parameter $p_i$ in our theory $\T$ by its nearest integer if this reduces the total description length in \eq{description_length}, as detailed below. We start this search for  integer candidates with the parameter that is closest to an integer, refitting for the other parameters after each successful ``integer snap".
 
 What if we instead observe a parameter $p=1.5000017$? Whereas generic real numbers have a closest integer, they lack a closest rational number. Moreover, as illustrated in \fig{RationalComplexityFig}, we care not only about closeness (to avoid increasing the second term in \eq{description_length}), but also about simplicity (to reduce the first term).  
  To rapidly find the best ``rational snap" candidates (dots in \fig{RationalComplexityFig} that lie both near $p$ and far down), we perform a continued fraction expansion of $p$ and use each series truncation as a rational candidate.    
We repeat this for all parameters in the theory $\T$, again accepting only those snaps that reduce the total description length. 
We again wish to try the most promising snap candidates first; to rapidly identify promising candidates without having to recompute the second term in \eq{description_length}, we evaluate all truncations of all parameters as in \fig{NumberMDLfig}, comparing the description length of the rational approximation $q=m/n$ with the description length of the approximation error $|p-q|$. The most promising candidate minimizes their sum, \ie, lies furthest down to the left of the diagonal in the figure. The figure illustrates how, given the parameter vector $\p=\{\pi,\sqrt{2},3.43180632382353\}$, the first snap to be attempted will replace the third parameter by $53/17$.

We propose Algorithm \ref{alg:MDL_simplification} to implement the above minimization of $\DL(\f_\vtheta)$ without increasing $\DL(\mathbfcal{T},D)$ (Eq. \ref{description_length}). For each theory $\T_i=(\f_i, c_i)$, we first extract the examples $D^{(i)}$ inside its domain, then perform a series of tentative transformations (simplifications) of the prediction function $\f_i$ using the MinimizeDL subroutine. This subroutine takes $\f_i$, the transformation, and $D^{(i)}$ as inputs and repeatedly applies the transformation to $\f_i$. After each such transformation, it fine-tunes the fit of $\f_i$ to $D^{(i)}$ using gradient descent. For determining whether to accept the transformation, Algorithm \ref{alg:MDL_simplification} presents the simplest 0-step patience implementation: if the description length $\text{dl}=\DL(\f_i)+\sum_{(\x_t,\y_t)\in D^{(i)}}\ell_{\DL,\epsilon}(\f_i(\x_t), \y_t)$ for theory $i$ decreases, then apply the transformation again if possible, otherwise exit the loop. In general, to allow for temporary increase of DL during the transformations, a non-zero patience can be adopted: at each step, save the best performing model as the pivot model, and if DL does not decrease during $n$ consecutive transformations inside MinimizeDL, exit the loop. In our implementation, we use a 4-step patience.

We now detail the five transformations used in Algorithm \ref{alg:MDL_simplification}. The collapseLayer transformation finds all successive layers of a neural net where the lower layer has linear activation, and combines them into one. The toSymbolic transformation transforms $\f_i$ from the form of a neural net into a symbolic expression (in our implementation, from a PyTorch net to a SymPy symbolic lambda expression). These two transformations are one-time transformations (for example, once $\f_i$ has been transformed to a symbolic expression,  toSymbolic cannot be applied to it again.) The localSnap transformation successively sets the incoming weights in the first layer to 0, thus favoring inputs that are closer to the current time step. 
The integerSnap transformation finds the (non-snapped) parameters in $\f_i$ that is closest to an integer, and snaps it to that integer. The rationalSnap transformation finds the (non-snapped) parameter in $\f_i$ that has the lowest bit sum when replaced by a rational number, as described in section \ref{sec:Occams_Razor}, and snaps it to that rational number. The latter three transformations can be applied multiple times to $\f_i$, until there are no more parameters to snap in $\f_i$, or the transformation followed by fine-tuning fails to reduce the description length.

In the bigger picture, Algorithm~ \ref{alg:MDL_simplification} is an implementation of minimizing the $\text{DL}(\f_\vtheta)$ without increasing the total $\text{DL}(\mathbfcal{T},D)$, if the description length of $\f_\vtheta$ is given by Eq. (\ref{DL_for_f}). There can be other ways to encode $\mathbfcal{T}$ with a different formula for $\text{DL}(\mathbfcal{T})$, in which case the transformations for decreasing $\text{DL}(\mathbfcal{T})$ may be different. But the structure of the Algorithm \ref{alg:MDL_simplification} remains the same, with the goal of minimizing $\text{DL}(\f_\vtheta)$ without increasing $\text{DL}(\mathbfcal{T},D)$ w.r.t. whatever DL formula it is based on.

In the still bigger picture, Algorithm~ \ref{alg:MDL_simplification} is a computationally efficient approximate implementation of the MDL formalism, involving the following two approximations:
\begin{enumerate}
    \item The description lengths $\DL(x)$ for various types of numbers are approximate, for convenience. For example, the length of the shortest self-terminating bit-string encoding an arbitrary natural number $n$ grows slightly faster than our approximation $\log_2 n$, because self-termination requires storing not only the binary digits of the integer, but also the length of said bit string, recursively, requiring 
    $\log_2 n + \log_2\log_2 n + \log_2\log_2 n + ...$, where only the positive terms are included 
       \cite{rissanen1983universal}. Slight additional overhead is required to upgrade the encodings to actual programs in some suitable language, including encoding of whether bits encode integers, rational numbers, floating-point numbers, {\etc}. 
    \item If the above-mentioned $\DL(x)$-formulas were made exact, they would be mere {\it upper bounds} on the true minimum description length. For example, our algorithm gives a gigabyte description length for $\sqrt{2}$ with precision $\epsilon=256^{-10^9}$, even though it can be computed by a rather short program, and there is no simple algorithm for determining which numbers can be accurately approximated by algebraic numbers. Computing the true minimum description length is a famous numerically intractable problem.
\end{enumerate}

\subsection{Unification Algorithm}
\label{UnificationAlgo}

\begin{algorithm}[t]
\caption{\textbf{AI Physicist: Theory Unification}}
\label{alg:unification}
\begin{algorithmic}
\STATE {\bfseries Require \textbf{Hub}}: theory hub
\STATE {\bfseries Require $C$}: initial number of clusters
\STATE 1:\ \textbf{for} $(\f_i,c_i)$ \textbf{in} \textbf{Hub}.all-symbolic-theories \textbf{do}:
\STATE 2:\ \ \ \ \ \ $\text{dl}^{(i)}\gets \DL(\f_i)$
\STATE 3:\ \textbf{end for}
\STATE 4:\ $\{S_k\}\gets$Cluster $\{\f_i\}$ into $C$ clusters based on $\text{dl}^{(i)}$
\STATE 5:\ \textbf{for} $S_k$ \textbf{in} $\{S_k\}$ \textbf{do}:
\STATE 6:\ \ \ \ \ $(\g_{i_k},\h_{i_k})\gets\textbf{Canonicalize}(\f_{i_k})$, $\forall \f_{i_k}\in S_k$
\STATE 7:\ \ \ \ \ $\h_k^*\gets$ Mode of $\{\h_{i_k}|\f_{i_k}\in S_k\}$.
\STATE 8:\ \ \ \ \ $G_k\gets\{\g_{i_k}|\h_{i_k}=\h_k^*\}$
\STATE 9:\ \ \ \ \ \ \ $\g_{\p_k}\gets$Traverse all $\g_{i_k}\in G_k$ with synchronized steps, 
\STATE \ \ \ \ \ \ \ \ \ \ \ \ \ \ \ \ \ replacing the coefficient by a $\p_{j_k}$ when not all  
\STATE \ \ \ \ \ \ \ \ \ \ \ \ \ \ \ \ coefficients at the same position are identical.
\STATE 10:\ \ \ \ $\f_{\p_k}\gets \text{toPlainForm}(\g_{\p_k})$
\STATE 11:\ \textbf{end for}
\STATE 12:\ $\mathscr{T}\gets \{(\f_{\p_k},\cdot)\}$, $k=1,2,...C$
\STATE 13:\ $\mathscr{T}\gets\text{MergeSameForm}(\mathscr{T})$
\STATE 14:\ \textbf{return} $\mathscr{T}$

\STATE
\STATE \textbf{subroutine Canonicalize}($\f_i$):
\STATE s1:\ $\g_i\gets\text{ToTreeForm}(\f_i)$
\STATE s2:\ $\h_i\gets\text{Replace all non-input coefficient by a symbol $s$}$
\STATE \textbf{return} $(\g_i,\h_i)$

\end{algorithmic}
\end{algorithm}

\bigskip

The unification process takes as input the symbolic prediction functions $\{(\f_i,\cdot)\}$, and outputs master theories $\mathscr{T}=\{(\f_\p,\cdot)\}$ such that by varying each $\p$ in $\f_\p$, we can generate a continuum of prediction functions $\f_i$ within a certain class of prediction functions. The symbolic expression consists of 3 building blocks: operators (e.g. $+$,$-$,$\times$,$/$), input variables (e.g. $x_1, x_2$), and coefficients that can be either a rational number or irrational number. The unification algorithm first calculates the DL $\text{dl}^{(i)}$ of each prediction function, then clusters them into $K$ clusters using e.g. K-means clustering. Within each cluster $S_k$, it first canonicalizes each $\f_{i_k}\in S_k$ into a 2-tuple $(\g_{i_k},\h_{i_k})$, where $\g_{i_k}$ is a tree-form expression of $\f_{i_k}$ where each internal node is an operator, and each leaf is an input variable or a coefficient. When multiple orderings are equivalent (e.g. $x_1+x_2+x_3$ vs. $x_1+x_3+x_2$), it always uses a predefined partial ordering. $\h_{i_k}$ is the structure of $\g_{i_k}$ where all coefficients are replaced by an $s$ symbol. Then the algorithm obtains a set of $\g_{i_k}$ that has the same structure $\h_{i_k}$ with the largest cardinality (steps 7-8). This will eliminate some expressions within the cluster that might interfere with the following unification process. Step 9 is the core part, where it traverses each $\g_{i_k}\in G_k$ with synchronized steps using e.g. depth-first search or breath-first search. This is possible since each $\g_{i_k}\in G_k$ has the same tree structure $h_k^*$. During traversing, whenever encountering a coefficient and not all coefficients across $G_k$ at this position are the same, replace the coefficients by some symbol $\p_{j_k}$ that has not been used before. Essentially, we are turning all coefficients that varies across $G_k$ into a parameter, and the coefficients that do not vary stay as they are. In this way, we obtain a master prediction function $\f_{\p_k}$. Finally, at step 13, the algorithm merges the master prediction functions in $\mathscr{T}=\{(\f_{\p_k},\cdot)\}$ that have the exact same form, and return $\mathscr{T}$. The domain classifier is neglected during the unification process, since at different environments, each prediction function can have vastly different spacial domains. It is the prediction function (which characterizes the equation of motion) that is important for generalization.

\subsection{Adding and Proposing Theories}
\label{appendix:theory_proposal_adding}

Here we detail the algorithms adding theories to the hub and proposing them for use in new environments.
Alg.~\ref{alg:theory_propose} provides a simplest version of the theory proposing algorithm. Given a new dataset $D$, the theory hub inspects all theories $i$, and for each one, 
counts the number $n_i$ of data points
where it outperforms all other theories. The top $M_0$ theories with largest $n_i$ 
are then proposed. 

For theory adding after training with DDAC (Alg.~\ref{alg:divide_and_conquer}), each theory $i$ calculates its description length $\text{dl}^{(i)}$ inside its domain. If its $\text{dl}^{(i)}$ is smaller than a threshold $\eta$, then the theory $(\f_i,c_i)$ with its corresponding examples $D^{(i)}$ are added to the theory hub. The reason why the data $D^{(i)}$ are also added to the hub is that $D^{(i)}$ gives a reference for how the theory $(\f_i,c_i)$ was trained, and is also needed in the Occam's razor algorithm.

\begin{algorithm}[t]
\caption{\textbf{AI Physicist: Theory Proposing from Hub}}
\label{alg:theory_propose}
\begin{algorithmic}
\STATE {\bfseries Require \textbf{Hub}}: theory hub
\STATE {\bfseries Require} Dataset $\D=\{(\x_t, \y_t)\}$
\STATE {\bfseries Require $M_0$}: number of theories to propose from the hub

\STATE 1: $\{(\f_i,c_i)\}\gets \textbf{Hub}.\text{retrieve-all-theories()}$
\STATE 2: $D_{\text{best}}^{(i)}\gets\{(\x_t, \y_t)|\text{argmin}_j\ell_{\DL,\epsilon}[\f_j(\x_t), \y_t]=i\}$, $\forall i$ 
\STATE 3: $\mathbfcal{T}_{M_0}\gets\big\{(\f_i, c_i)\big| D_{\text{best}}^{(i)}\text{ ranks among } M_0\ \text{largest sets in}\ \{D_{\text{best}}^{(i)}\}\big\}$
\STATE 4:\ \textbf{return} $\mathbfcal{T}_{M_0}$
\end{algorithmic}
\end{algorithm}

\begin{algorithm}[t]
\caption{\textbf{AI Physicist: Adding Theories to Hub}}
\label{alg:add_theory}
\begin{algorithmic}
\STATE {\bfseries Require \textbf{Hub}}: theory hub
\STATE {\bfseries Require $\mathbfcal{T}=\{(\f_i,c_i\}$}: Trained theories from Alg.~\ref{alg:divide_and_conquer}
\STATE {\bfseries Require} Dataset $\D=\{(\x_t, \y_t)\}$
\STATE {\bfseries Require $\eta$}: DL threshold for adding theories to hub
\STATE 1: $D^{(i)}\gets\{(\x_t,\y_t)|\argmax_j\{c_j(\x_t)\}=i\}, \forall i$
\STATE 2: $\text{dl}^{(i)}\gets\frac{1}{|D^{(i)}|}\sum_{(\x_t,\y_t)\in D^{(i)}} \ell_{\DL,\epsilon}(\f_i(\x_t),\y_t), \forall i$
\STATE 3: \textbf{for} $i$ \textbf{in} $\{1,2,...|\mathbfcal{T}|\}$ \textbf{do}:
\STATE 4: \ \ \ \  \textbf{if} $\text{dl}^{(i)}<\eta$ \textbf{do}
\STATE 5: \ \ \ \ \ \ \ \ \  \textbf{Hub}.addIndividualTheory$((\f_i, c_i), D^{(i)})$ 
\STATE 6: \ \ \ \ \textbf{end if}
\STATE 7: \textbf{end for}
\end{algorithmic}
\end{algorithm}

\subsection{Time complexity}
\label{ComplexitySec}

\def\ndom{n_{\rm dom}}
\def\ndata{n_{\rm data}}
\def\ntheo{n_{\rm theo}}
\def\npar{n_{\rm par}}
\def\nmyst{n_{\rm myst}}

In this appendix, we list crude estimates of the time complexity of our AI physicist algorithm, \ie, of how its runtime scales with key parameters.

 \textbf{DDAC}, the differentiable divide-and-conquer algorithm, algorithm (Alg.~\ref{alg:divide_and_conquer}), is run once for each of the $\nmyst$ different mystery worlds,  with a total runtime scaling roughly as
$$\mathcal{O}(\nmyst\npar\ndata\ndom^2),$$
where  
$\npar$ is the average number of neural-network parameter in a theory, 
$\ndata$ is the average number of data points (time steps) per mystery
and
$\ndom$ is the number of discovered domains (in our case $\le 4$). The power of two for $\ndom$ appears because the time to evaluate the loss function 
scales as $\ndom$, and we need to perform of  order $\ndom$ training cycles to add the right number of theories. The $\npar$ scaling is due to that the forward and backward propagation of neural net involves successive matrix multiplied by a vector, which scales as $O(n^2)$ where $n\simeq \sqrt{\npar/N_\text{lay}^f}$ is the matrix dimension for each layer
and $N_\text{lay}^f$ is the number of layers.
Accumulating all layers we have $N_\text{lay}^f n^2=\npar$. We make no attempt to model how the number of training epochs needed to attain the desired accuracy depends on parameters.

Our \textbf{Lifelong learning} algorithm is also run once per mystery, with a time cost dominated by that for proposing new theories (Alg.~\ref{alg:theory_propose}), which scales as
$$\mathcal{O}(\nmyst\ndata\ntheo).$$
Here $\ntheo$ is the number of theories in theory hub.

In contrast, our {\bf Occam's Razor} algorithm (Alg. \ref{alg:MDL_simplification}) and {\bf unification} algorithm (Alg.~\ref{alg:unification}) are run once per learned theory, not once per mystery.
For Occam's Razor, the total runtime is dominated by that for snapping to rational numbers, 
which scales as
$$\mathcal{O}(\npar\ndata\ntheo).$$
For the unification, the total runtime scales as 
$\mathcal{O}(\npar\ntheo)$, which can be neglected relative to the cost of Occam's razor.

We note that all these algorithms have merely polynomial time complexity. 
The DDAC algorithm dominates the time cost; our mystery worlds were typically solved in about 1 hour each on a single CPU.
If vast amounts of data are available, it may suffice to analyze a random subset of much smaller size.

\subsection{Proof of Theorem 1 and Corollary}
\label{proof_theorem_gradient}
Here we give the proof for Theorem \ref{thm:theorem_gradient}, restated here for convenience.

\textbf{Theorem 1}\ 
\textit{Let $\hat{\y}^{(i)}_t\equiv \f_i(\x_t)$ denote the prediction of the target $\y_t$ by the function $\f_i$, $i=1,2,...M$.
Suppose that $\gamma<0$ and $\ell(\hat{\y}_t, \y_t) = \ell(|\hat{\y}_t - \y_t|)$ for a monotonically increasing function 
$\ell(u)$ that vanishes on $[0,u_0]$ for some $u_0\geq 0$, with $\ell(u)^\gamma$ differentiable and strictly convex for $u>u_0$. \\
Then if $0<\ell(\hat{\y}_t^{(i)}, \y_t) < \ell(\hat{\y}_t^{(j)}, \y_t)$, we have 
\beq{gradient_greater_appendix}
\left|\frac{\partial\Ell_\gamma}{\partial u^{(i)}_t}\right| > \left|\frac{\partial \Ell_\gamma}{\partial u^{(j)}_t}\right|,
\eeq
 where $u^{(i)}_t\equiv|\hat{\y}_t^{(i)} - \y_t|$}.

\textit{Proof}.
Since $u^{(i)}_t\equiv|\hat{\y}_t^{(i)} - \y_t|$ and $\ell(\hat{\y}_t, \y_t) = \ell(|\hat{\y}_t - \y_t|)$, 
the generalized mean loss $L_\gamma$ as defined in Eq. \ref{gradient_greater} can be rewritten as 
\beq{L_gamma_loss}
\Ell_\gamma=\sum_t\left(\frac{1}{M}\sum_{k=1}^M \ell(u^{(k)}_t)^{\gamma}\right)^{\frac{1}{\gamma}},
\eeq
which implies that
\begin{equation*}
\begin{aligned}
\left|\frac{\partial \Ell_\gamma}{\partial u^{(i)}_t}\right|&=\left|\frac{1}{\gamma M}\left(\frac{1}{M}\sum_{k=1}^M \ell(u^{(k)}_t)^{\gamma}\right)^{\frac{1}{\gamma}-1}\>\frac{d\ell(u_t^{(i)})^{\gamma}}{du_t^{(i)}}\right|\\
&=
\frac{1}{|\gamma| M}\left(\frac{1}{M}\sum_{k=1}^M \ell(u^{(k)}_t)^{\gamma}\right)^{\frac{1}{\gamma}-1}\>\left|\frac{d\ell(u_t^{(i)})^{\gamma}}{du_t^{(i)}}\right|.
\end{aligned}
\end{equation*}
Since only the last factor depends on $i$, 
proving \eq{gradient_greater_appendix} 
is equivalent to proving that
\beq{ProofEq2}
\left|\frac{d \ell(u_t^{(i)})^{\gamma}}{d u_t^{(i)}}\right|>\left|\frac{d \ell(u_t^{(j)})^{\gamma}}{d u_t^{(j)}}\right|.
\eeq

Let us henceforth consider only the case $u>u_0$, since the conditions $\ell(u_t^{(j)})>\ell(u_t^{(i)})>0$ imply
$u_t^{(j)}>u_t^{(i)}>u_0$.
Since $\gamma<0$, $\ell(u)>0$ and
$\ell'(u)\geq 0$, we have
$\frac{d \ell(u)^{\gamma}}{d u}=\gamma \ell(u)^{\gamma-1}\ell'(u)\leq0$,
so that 
$\left|\frac{d \ell(u)^{\gamma}}{d u}\right|=-\frac{d \ell(u)^{\gamma}}{d u}$.
Because $\ell(u)^\gamma$ is differentiable and strictly convex, its derivative $\frac{d \ell(u)^{\gamma}}{d u}$ is monotonically increasing,
implying that $\left|\frac{d \ell(u)^{\gamma}}{d u}\right|=-\frac{d \ell(u)^{\gamma}}{d u}$ is monotonically decreasing. 
Thus  $\left|\frac{d\ell(u_1)^{\gamma}}{du_1}\right|>\left|\frac{d\ell(u_2)^{\gamma}}{du_2}\right|$
whenever $u_1<u_2$.
Setting $u_1=|\hat{\y}_t^{(i)}-\y_t|$ and $u_2=|\hat{\y}_t^{(j)}-\y_t|$ therefore implies \eq{ProofEq2}, which completes the proof.

\bigskip

The following corollary \ref{corollary:loss_satisfy} demonstrates that the theorem applies to several popular loss functions as well as our two description-length loss functions.
\begin{corollary}
\label{corollary:loss_satisfy}
Defining $u\equiv |\hat{\y}-\y|$, the following loss functions which depend only on $u$ satisfy the conditions for Theorem~\ref{thm:theorem_gradient}:
\begin{enumerate}
    \item
    $\ell(u)=u^r$ for any $r>0$, which includes MSE loss ($r=2$) and mean-absolute-error loss ($r=1$).
    \item Huber loss: $\ell_\delta(u)=\begin{cases}
    \frac{1}{2}u^2, &  u\in[0,\delta]\\
    \delta(u-\frac{\delta}{2}),  & \text{otherwise},
    \end{cases}$\\
    where $\delta>0$.
    \item Description length loss\\
    $\ell_{\DL,\epsilon}(u)=\frac{1}{2}\log_2\left(1+\left(\frac{u}{\epsilon}\right)^2\right)$.
    \item Hard description length loss\\
     $\ell_{\text{DLhard},\epsilon}(u)=\log_2\max\left(1,\frac{u}{\epsilon}\right)$.
\end{enumerate}
\end{corollary}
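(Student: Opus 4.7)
The plan is to verify, for each of the four loss functions, the four hypotheses of Theorem~\ref{thm:theorem_gradient}: (a) $\ell$ depends only on $u = |\hat{\y}-\y|$, (b) $\ell$ vanishes on an interval $[0,u_0]$, (c) $\ell$ is monotonically increasing, and (d) $\ell(u)^\gamma$ is differentiable and strictly convex on $(u_0,\infty)$ for every $\gamma<0$. Properties (a), (b), (c) are immediate in all four cases from the definitions, so the real content is (d).

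For case~1, with $u_0=0$ we have $\ell(u)^\gamma = u^{r\gamma}$, whose second derivative $r\gamma(r\gamma-1)u^{r\gamma-2}$ is strictly positive because $r\gamma<0$ makes both $r\gamma$ and $r\gamma-1$ negative. Case~2 (Huber) splits at $u=\delta$: on $(0,\delta)$ it reduces to case~1 with $r=2$ up to a positive factor, and on $(\delta,\infty)$ it becomes a shifted pure power $\delta^\gamma(u-\delta/2)^\gamma$, strictly convex by the same argument. The only care needed at the join $u=\delta$ is to note that $\ell'(\delta^-)=\ell'(\delta^+)=\delta$ makes $\ell$ (hence $\ell^\gamma$) $C^1$, and that $(\ell^\gamma)'$ is strictly monotone across $\delta$, which gives strict convexity throughout $(0,\infty)$.

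Case~3 (soft DL loss) is the computational centerpiece. Setting $w=u/\epsilon$ and $g(w)=\ln(1+w^2)$, we have $\ell(u)^\gamma\propto g(w)^\gamma$ and
\[
(g^\gamma)''(w) \;=\; \gamma\,g^{\gamma-2}\bigl[(\gamma-1)(g')^{2}+g\,g''\bigr].
\]
Since $\gamma<0$, strict convexity is equivalent to $(1-\gamma)(g')^{2}>g\,g''$ for $w>0$. Substituting $g'=2w/(1+w^2)$ and $g''=2(1-w^2)/(1+w^2)^2$ and clearing denominators, the inequality reduces to
\[
2(1-\gamma)w^{2} \;>\; (1-w^{2})(1+w^{2})\,\ln(1+w^{2}).
\]
For $w\geq 1$ the right side is nonpositive while the left is positive, and for $0<w<1$ we use $\ln(1+w^{2})\leq w^{2}$ to reduce the claim to an elementary polynomial inequality. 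I expect this case to be the main obstacle, not because the idea is hard but because the algebra around $w=1$ requires some care.

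Finally, in case~4, on $(\epsilon,\infty)$ we have $\ell(u)^\gamma=(\ln 2)^{-\gamma}\bigl(\ln(u/\epsilon)\bigr)^\gamma$, and a direct computation with $v=\ln(u/\epsilon)>0$ gives
\[
\frac{d^{2}}{du^{2}}\bigl(v^{\gamma}\bigr) \;=\; \gamma\,v^{\gamma-2}\,u^{-2}\bigl[(\gamma-1)-v\bigr].
\]
Since $\gamma<0$, the bracket $(\gamma-1)-v$ is strictly negative, so the whole expression is a product of two negatives times a positive, hence strictly positive, and differentiability on $(\epsilon,\infty)$ is automatic. This case turns out to be the quickest once the exponent manipulation is written down cleanly.
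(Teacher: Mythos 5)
Your proposal is correct and follows essentially the same route as the paper: check monotonicity and the vanishing interval directly, then establish strict convexity of $\ell(u)^\gamma$ case by case from the sign of its second derivative, with the soft description-length loss being the only nontrivial computation. The differences are cosmetic — the paper finishes case~3 by lower-bounding the factor $2u^2(1-\gamma)+(u^2-1)\ln(1+u^2)$ with an auxiliary function $\chi(u)$ satisfying $\chi(0)=0$ and $\chi'(u)>0$, whereas you split at $w=1$ and invoke $\ln(1+w^2)\le w^2$; note only that your cleared-denominator inequality carries a spurious extra factor of $(1+w^2)$ on the right (both $(g')^2$ and $g\,g''$ share the denominator $(1+w^2)^2$, so the correct reduction is $2(1-\gamma)w^2>(1-w^2)\ln(1+w^2)$), though your two-regime argument goes through unchanged for the corrected inequality.
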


\textit{Proof}.
We have $u_0=0$ for (1), (2), (3), and $u_0=\epsilon$ for (4). All  four functions $\l$ are monotonically increasing, satisfy $\ell(0)=0$ and are differentiable for $u>u_0$, so all that remains to be shown is that 
$\ell(u)^\gamma$ is strictly convex for $u>u_0$, \ie, that $\frac{d^2 \ell(u)^\gamma}{d u^2}>0$ when $u>u_0$.

(1) For $\ell(u)=u^r$ and $u>0$,
we have $\frac{d^2 \ell(u)^\gamma}{d u^2}=\gamma r(\gamma r -1)u^{\gamma r - 2}>0$, since
$\gamma<0$ and $r>0$ implies that  $\gamma r<0$ and $\gamma r-1<0$, so 
$\ell(u)^\gamma$ is strictly convex for $u>0$.

(2) The Huber loss $\ell_\delta(u)$ is continuous with a continuous derivative.
It satisfies
$\frac{d^2 \ell(u)^\gamma}{d u^2}>0$ both for $0<u<\delta$ and for $\delta<u$ according to the above proof of (1), since 
$\ell_\delta(u)$ is proportional to $\ell^r$ in these two intervals with $r=2$ and $r=1$, respectively.
At the transition point $u=\delta$, this second derivative is discontinuous, but takes positive value approaching both from the left and from the right, so  $\ell(u)^\gamma$ is strictly convex.
More generally, any function $\ell(u)$ built by smoothly connecting functions $\ell_i(u)$ in different intervals will satisfy our theorem if the functions $\ell_i(u)$ do.

 (3) Proving strict convexity of $\ell(u)^\gamma$ when $\ell$ is the description length loss $\ell_{\DL,\epsilon}(u)=\frac{1}{2}\text{log}_2\left[1+\left(\frac{u}{\epsilon}\right)^2\right]$
 is equivalent to proving it when $\ell(u)=\ro(u)\equiv \ln(1+u^2)$, since convexity is invariant under horizontal and vertical scaling.
We thus need to prove that  
$$\frac{d^2 \ro(u)^\gamma}{d u^2}
=-\frac{2\gamma[\ln(1+u^2)]^{\gamma-2}}{(1+u^2)^2 [2u^2(1-\gamma)+(u^2-1)\ln(1+u^2)]}$$ 
is positive when $u>0$.
The factor 
$\frac{-2\gamma[\ln(1+u^2)]^{\gamma-2}}{(1+u^2)^2}$ is always positive. The other factor 
$$2u^2(1-\gamma)+(u^2-1)\text{log}(1+u^2)>2u^2+(u^2-1)\text{log}(1+u^2),$$ 
since $\gamma<0$. Now we only have to prove that the function
$$\chi(u)\equiv2u^2+(u^2-1)\text{log}(1+u^2)>0$$ 
when $u>0$. We have $\chi(0)=0$ and 
$$\chi'(u)=2u\left[\frac{1+3u^2}{1+u^2}+\text{log}(1+u^2)\right]>0$$ 
when $u>0$. Therefore $\chi(u)=\chi(0)+\int_0^u\chi'(u')du'>0$ when $u>0$, which completes the proof that $\ell_{\DL,\epsilon}(u)^\gamma$ is strictly convex for $u>0$.

(4) For the hard description length loss $\ell_{\text{DLhard},\epsilon}(u)=\log_2\max\left(1,\frac{u_t}{\epsilon}\right)$, 
we have $u_0=\epsilon$. 
When $u>\epsilon$, we have 
$\ell'_{\text{DLhard},\epsilon}(u)=\frac{1}{u\text{ln}2}>0$ and 
$$\frac{d^2 \ell^\gamma_{\text{DLhard},\epsilon}(u)}{d u^2}=
{\gamma\over\ln 2}\left(-1+\gamma-\text{ln}\frac{u}{\epsilon}\right)\frac{\left(\ln{u\over\epsilon}\right)^{\gamma-2}}{u^2} .$$
For $u>\epsilon$, the factor $\frac{\left(\ln{u\over\epsilon}\right)^{\gamma-2}}{u^2}$ is always positive, as is the factor  $\gamma(-1+\gamma-\text{ln}\frac{u}{\epsilon})$, since $\gamma<0$.
$\ell^\gamma_{\text{DLhard},\epsilon}(u)$ is therefore strictly convex for $u>\epsilon$.

\subsection{Eliminating Transition Domains}
\label{DomainElimination}

In this appendix, we show how the only hard problem our AI Physicist need solve is to determine the laws of motion far from domain boundaries, because once this is done, the exact boundaries and transition regions can be determined automatically.

Our AI Physicist tries to predict the next position vector $\y_t\in R^d$ from the
concatenation $\x_t =(\y_{t-T},...,\y_{t-1})$  of the last $T$ positions vectors. 
Consider the example shown in \fig{BoundaryPointFig}, where motion is predicted from the last $T=3$ positions in a space with $d=2$ dimensions containing $n=2$ domains with different physics (an electromagnetic field in the upper left quadrant and free motion elsewhere), as well as perfectly reflective boundaries. Although there are only two physics domains in the 2-dimensional space, there are many more types of domains in the $Td=6$-dimensional space of $\x_t$ from which the AI Physicist makes its predictions of $\y_t$. When a trajectory crosses the boundary between the two spatial regions, there can be instances where $\x_t$ contains 3, 2, 1 or 0 points in the first domain and correspondingly 0, 1, 2 or 3 points in the second domain. Similarly, when the ball bounces, there can be instances where $\x_t$ contains 3, 2, 1 or 0 points before the bounce and correspondingly 0, 1, 2 or 3 points after. Each of these situations involves a different function $\x_t\mapsto\y_t$ and a corresponding 6-dimensional domain of validity for the AI Physicist to learn.

Our numerical experiments showed that the AI Physicist typically solves the big domains (where all vectors in $\x_t$ lie in the same spatial region), but occasionally fails to find an accurate solution in some of the many small transition domains involving boundary crossings or bounces, where data is insufficient. Fortunately, simple post-processing can automatically eliminate these annoying transition domains with an algorithm that we will now describe.

\begin{figure}[pbt]
\centerline{\includegraphics[width=88mm]{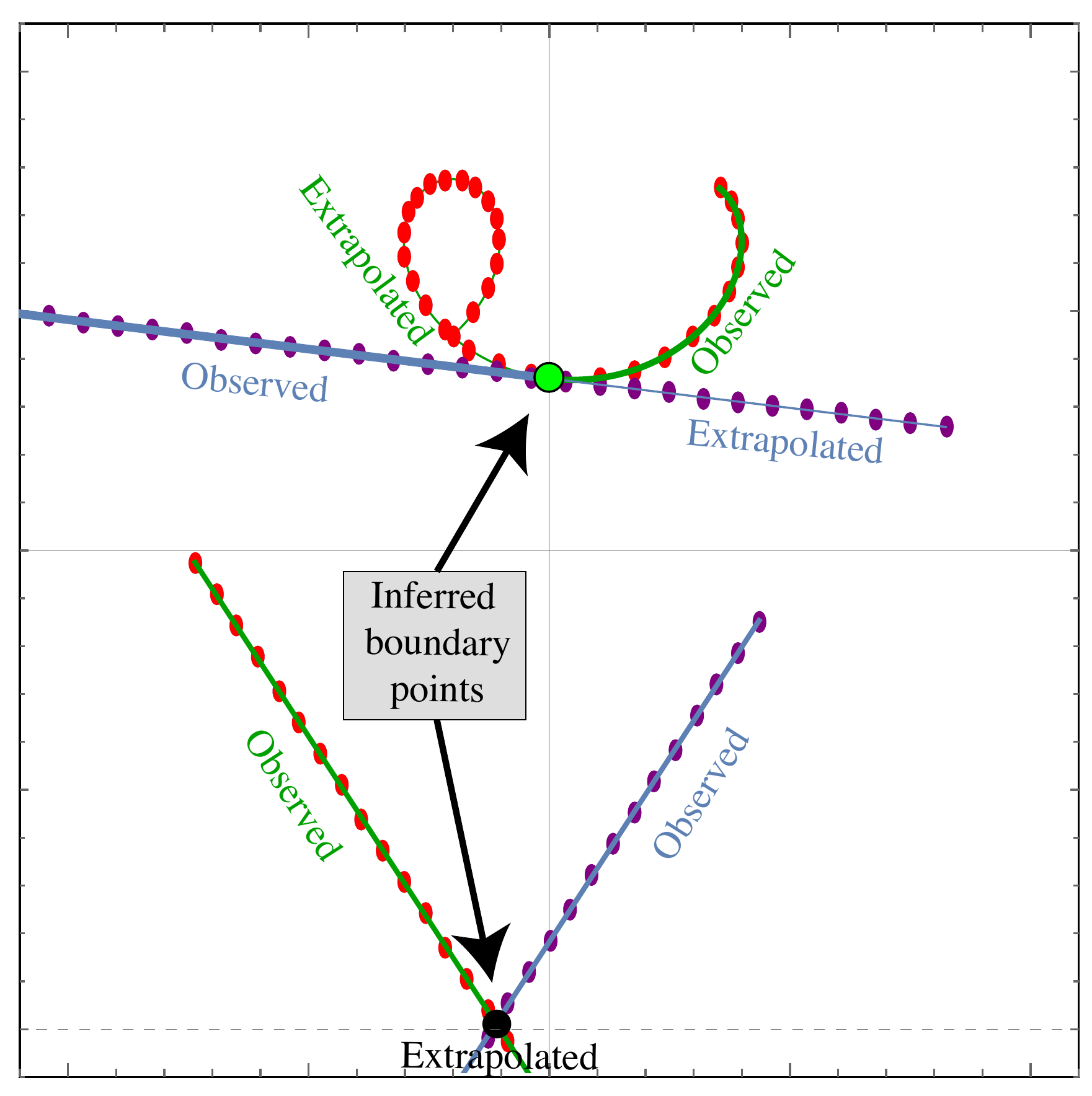}}
\caption{Points where forward and backward extrapolations agree (large black dots) are boundary points. The tangent vectors agree for region boundaries (upper example), but not for bounce boundaries (lower example). 
\label{BoundaryPointFig}
}
\end{figure}

The first step of the algorithm is illustrated in \fig{BoundaryPointFig}.
For each big domain where our AI Physicist has discovered the future-predicting function 
$\x_t\mapsto\y_t$, we determine the corresponding function that predicts the {\it past}
($\x_t\mapsto\y_{t-T-1}$) by fitting to forward trajectories generated with random initial conditions.
Now whenever a trajectory passes from a big domain through a transition region into another big domain, two different extrapolations can be performed: forward in time from the first big domain or backward in time from the second big domain. Using cubic spline interpolation, we fit continuous functions $\y_f(t)$ and $\y_b(t)$ (smooth curves in \fig{BoundaryPointFig}) to these forward-extrapolated and backward-extrapolated trajectories, and numerically find the time
\beq{BoundarpointEq}
t_*\equiv\argmin |\y_f(t)-\y_b(t)|
\eeq
when the distance between the two predicted ball positions is minimized.
If this minimum is numerically consistent with zero, so that $\y_f(t_*)\approx \y_b(t_*)$,
then we record this as being a boundary point.
If both extrapolations have the same derivative there, \ie, 
if $\y'_f(t_*)\approx \y'_b(t_*)$, then it is an interior boundary point between two different regions (\fig{BoundaryPointFig}, top), otherwise it is an external boundary point where the ball bounces (\fig{BoundaryPointFig}, bottom). 

\fig{BoundaryPointsFig} show these two types of automatically computed boundary points in green and black, respectively. These can now be used to retrain the domain classifiers to extend the big domains to their full extent, eliminating the transition regions.

Occasionally the boundary point determinations fill fail because of multiple transitions within $T$ time steps, \Fig{BoundaryPointsFig} illustrates that these failures (red dots) forces us to discard merely a tiny fraction of all cases, thus having a negligible affect on the ability to fit for the domain boundaries.

\begin{figure}[phbt]
\centerline{\includegraphics[width=88mm]{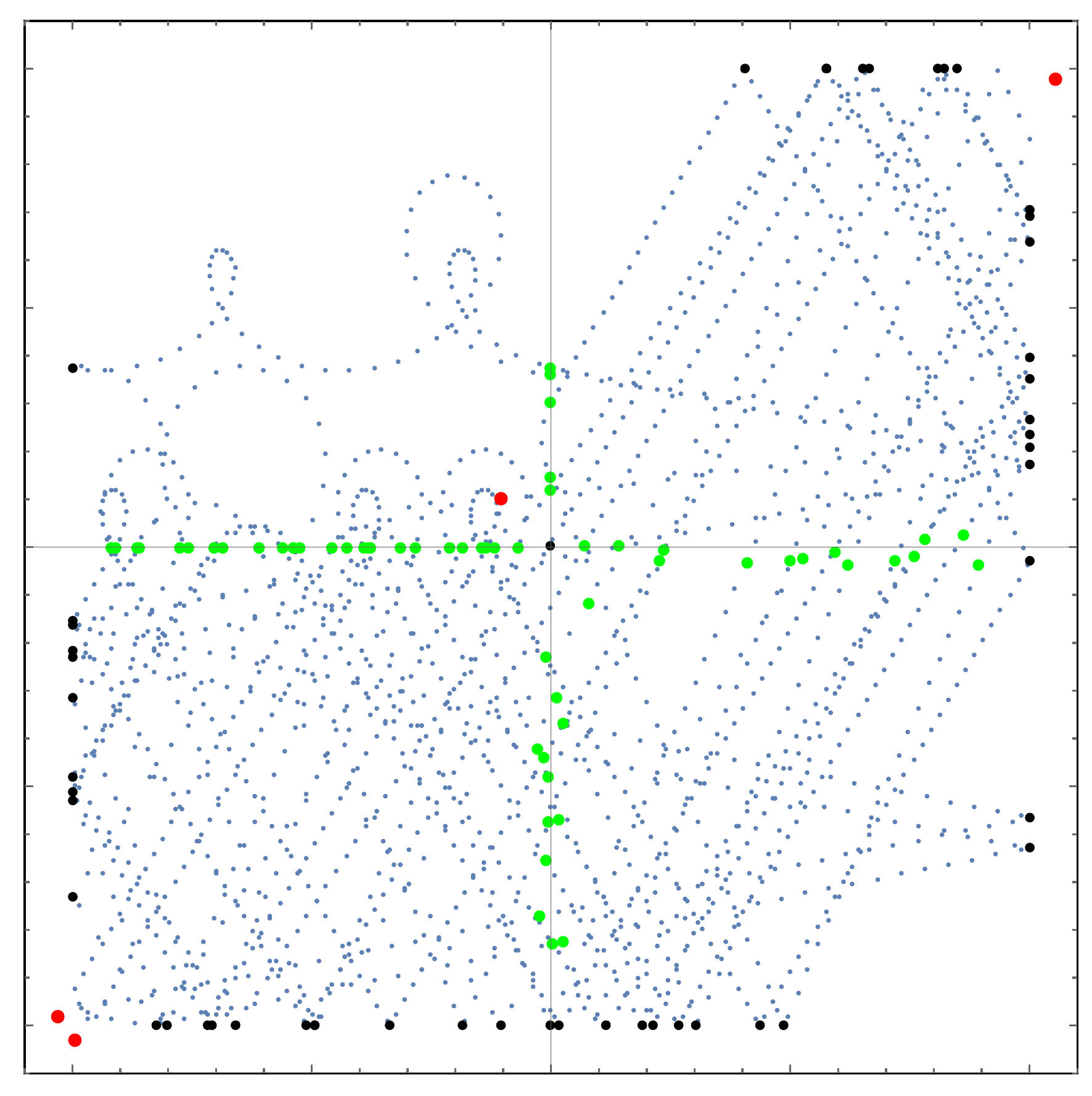}}
\caption{Example of automatically determined boundary points, for region boundary points (green), bounce boundary points (black) and failed cases (red).
\label{BoundaryPointsFig}
}
\end{figure}

\subsection{Numerical Experiment Details}
\label{DetailedResultsSec}

In this appendix, we provide supplementary details on our benchmark problems.

\subsubsection{Mystery Worlds}
\label{appendix:mystery_worlds}

\textbf{World generation} Our mystery worlds consist of a ball elastically bouncing against the square boundary of the
two-dimensional spatial region where  $|x|\le 2$ and $|y|\le 2$ (see \fig{WorldExampleFig}).
In each of the four quadrants, one of the following laws of physics are selected, together with their parameters sampled from distributions as follows:
\begin{enumerate}
\item Free motion.
\item A uniform gravitational field ${\bf g}=(g_x,g_y,0)$ with $g_x, g_y$ drawn from a uniform distribution: $g_x, g_y\sim U[-5,5]$.
\item Harmonic motion with frequency $\omega$ around a line a distance $a$ from the origin, making an angle $\phi$ with the $x$-axis; 
$\omega\sim U[1,4]$, 
$a\sim U[0.2, 0.5]$,
$\phi\sim U[0, 2\pi]$.
\item A uniform electric field $\E=(E_x,E_y,0)$ and magnetic field $\B=(0,0,B_z)$; $E_x$, $E_y\sim U[-5,5]$, $B_z\sim U[0,10]$.
\end{enumerate}
To control the difficulty of the tasks and avoid near-degenerate scenarios, we keep only mystery worlds satisfying the following two criteria: (1) At least 0.01 separation between all equations of motion (EOM) in the same world, defined as the Euclidean distance between the vectors of coefficients specifying the EOM difference equations, and
(2)  at least 0.0015 of any non-integer parameter from its nearest integer.

\textbf{Trajectory simulation}
Within each world, we initialize the ball with a random position $(x, y)\sim U[-1,1]^2$ and velocity $(v_0\cos\theta_0, v_0 \sin\theta_0, 0)$; $v_0\sim U[0.1,0.5]$, $\theta_0\sim U[0,2\pi]$. We then compute its position for $N=4,000$ times steps $t=1,2,...,N$ with time interval 0.05.

Although the above-mentioned laws of physics are linear, the mapping
from past points $(\y_{t-T},...,\y_{t-1})$ to the next points $\y_t$ is generally non-linear because of region boundaries where the ball either bounces or transitions to a different physics region. An exception is when three successive points lie within the same region (with the same physics), which happens far from boundaries: in this case, 
the mapping from  $(\y_{t-2},\y_{t-1})\mapsto\y_t$ is deterministic and linear thanks to the differential equations of motion being second-order and linear.

\textbf{Architecture} For the Newborn and AI Physicist agents, each prediction function $\f_i$ is implemented as a $N_{lay}^f$-layer neural network with linear activation, with $N_{neur}^f$-neuron hidden layers (we use $N_{lay}^f=3$, $N_{neur}^f=8$ for our main experiments; see Table~\ref{hyperparameter_table}).
Each domain sub-classifier $c_i$ is implemented as a $N_{lay}^c$-layer neural net, with two hidden $N_{neur}^c$-neuron layers with leakyReLU activation
$\sigma(x)=\max\{0.3x,x\}$,
and the output layer having linear activation (we use $N_{lay}^c=3$, $N_{neur}^c=8$ for our main experiments). The baseline model is implemented as a single $N_{lay}^f$-layer neural net with two hidden 16-neuron layers with leakyReLU activation followed by a linear output layer.
Note that for a fair comparison, the baseline model has more hidden neurons, to roughly compensate for the Newborn and AI Physicist agents typically having multiple theories. The baseline network is nonlinear to boost its expressive power for  modeling the nonlinear prediction function of each world as a whole. For the domain classifier $\c=(c_1,c_2,...c_M)$, it is a $N_{lay}^c$-layer neural net where each hidden layer has $N_{neur}^c=8$ neurons and leakyReLU activation. The last layer has linear activation. See Table \ref{hyperparameter_table} for a list of hyperparameters.

\textbf{Evaluation} The unsupervised classification accuracy is defined as the fraction of correctly classified points, using the permutation of the learned domain labels that best matches the hidden ground truth domain labels. It is ``unsupervised" in the sense that there is no external supervision signal as to which domain label each point should be assigned to: the AI Physicist has to figure out the number of domains and their boundaries and assign each point to a domain, which is a difficult task.

We define a domain as {\it solved} if the agent discovers the its law of motion as difference equation 
 (prediction function) within the following stringent tolerance: all rational coefficients in the difference equation
 are exactly matched, and all irrational coeffients agree to an accuracy better than $10^{-4}$. 
Because of the nature of the physics problems, some of these difference equation coefficients take on the values $0$, $-1$, or $2$, so solving a region requires successful integer snapping as described in Section~\ref{sec:Occams_Razor}.
To make the problem even harder, we also fine-tune the magnetic field in five of the electromagnetic regions to make some of the coefficients simple fractions such as $1/3$ and $1/4$, thus making solving those regions contingent on successful rational snapping as described in Section~\ref{sec:Occams_Razor}. Domain solving can fail either by 
``undersnapping" (failing to approximate a floating-point number by a rational number) or
 `oversnapping" (mistakenly rounding to a rational number).
All our mystery worlds can be downloaded at \href{http://space.mit.edu/home/tegmark/aiphysicist.html}{space.mit.edu/home/tegmark/aiphysicist.html}.

As shown in Appendix \ref{DomainElimination}, the only hard problem our AI Physicist or other algorithms need to solve is to determine the laws of motion away from domain boundaries. Therefore, we evaluate, tabulate and compare the performance of the algorithms only on interior points, \ie, excluding data points $(\x_t,\y_t)$ straddling a boundary encounter. 

\begin{table*}
\begin{center}
\resizebox{1\linewidth}{!}{%
\begin{tabular}{|l|rrr|rrr|rrr|rrr|}
\hline
&\multicolumn{3}{c|}{$\log_{10}$ MSE}
&\multicolumn{3}{c|}{Classification accuracy}
&\multicolumn{3}{c|}{Unsolved domains}
&\multicolumn{3}{c|}{Description length}\\
Regions&Base-&New-&AI&Base-&New-&AI&Base-&New-&AI&Base-&New-&AI\\
&line&born&phys&line&born&phys&line&born&phys&line&born&phys\\
\hline
Free + gravity		       &	      -4.12 &  -14.07 &      -14.08 &	    72.88\%&  100.00\%&      100.00\%&       2 &     0 &	 0 &		11310.4 &    59.4 &	   73.5 \\
Free + gravity		       &	      -4.21 &  -14.02 &      -14.04 &	    88.59\%&  100.00\%&      100.00\%&       2 &     0 &	 0 &		11271.5 &    60.3 &	   60.3 \\
Free + gravity		       &	      -3.69 &  -14.03 &      -14.03 &	    67.65\%&  100.00\%&      100.00\%&       2 &     0 &	 0 &		11364.2 &    60.2 &	   41.9 \\
Free + gravity		       &	      -4.18 &  -13.98 &      -13.98 &	    80.98\%&  100.00\%&      100.00\%&       2 &     0 &	 0 &		11341.7 &    60.6 &	   57.6 \\
Free + gravity		       &	      -4.51 &  -14.06 &      -14.07 &	    87.66\%&  100.00\%&      100.00\%&       2 &     0 &	 0 &		11289.3 &     5.2 &	   59.8 \\
Free + harmonic 	       &	      -3.77 &  -13.99 &      -13.94 &	    73.54\%&  100.00\%&      100.00\%&       2 &     0 &	 0 &		11333.8 &    94.4 &	  139.9 \\
Free + harmonic 	       &	      -3.60 &  -14.05 &      -13.89 &	    66.92\%&  100.00\%&      100.00\%&       2 &     0 &	 0 &		11337.4 &   173.0 &	  172.8 \\
Free + harmonic 	       &	      -3.77 &  -14.04 &      -13.95 &	    59.46\%&  100.00\%&      100.00\%&       2 &     0 &	 0 &		11317.5 &   156.0 &	  173.8 \\
Free + harmonic 	       &	      -5.32 &  -10.48 &      -13.14 &	    80.29\%&  100.00\%&      100.00\%&       2 &     1 &	 0 &		11219.5 &    91.6 &	   90.5 \\
Free + harmonic 	       &	      -3.64 &  -14.00 &      -13.89 &	    71.70\%&  100.00\%&      100.00\%&       2 &     0 &	 0 &		11369.6 &   143.7 &	  136.6 \\
Free + EM		       &	      -3.62 &  -13.95 &      -13.96 &	    82.77\%&  100.00\%&      100.00\%&       2 &     0 &	 0 &		11397.5 &   142.8 &	  284.9 \\
Free + EM		       &	      -4.13 &  -13.82 &      -13.67 &	    76.55\%&  100.00\%&      100.00\%&       2 &     0 &	 0 &		11283.0 &   306.2 &	  306.2 \\
Free + EM		       &	      -4.03 &  -13.45 &      -13.47 &	    74.56\%&   99.97\%&       99.97\%&       2 &     0 &	 0 &		11388.1 &   305.9 &	  307.9 \\
Free + EM		       &	      -4.31 &  -13.77 &      -13.62 &	    86.68\%&   99.91\%&       99.91\%&       2 &     0 &	 0 &		11257.7 &   152.0 &	  133.5 \\
Free + EM		       &	      -4.32 &  -14.00 &      -14.05 &	    84.55\%&  100.00\%&      100.00\%&       2 &     0 &	 0 &		11258.9 &   303.7 &	  303.8 \\
Free + EM rational	       &	      -3.45 &  -13.96 &      -13.95 &	    77.88\%&   99.96\%&       99.93\%&       2 &     0 &	 0 &		11414.9 &   194.2 &	  195.8 \\
Free + EM rational	       &	      -3.90 &  -13.96 &      -13.91 &	    71.13\%&  100.00\%&      100.00\%&       2 &     0 &	 0 &		11340.0 &   199.0 &	  199.0 \\
Free + EM rational	       &	      -4.12 &  -13.97 &      -13.90 &	    72.78\%&  100.00\%&      100.00\%&       2 &     0 &	 0 &		11330.7 &   198.8 &	  198.8 \\
Free + EM rational	       &	      -4.02 &  -14.07 &      -14.00 &	    77.92\%&  100.00\%&      100.00\%&       2 &     0 &	 0 &		11323.5 &   197.8 &	  197.8 \\
Free + EM rational	       &	      -4.83 &  -13.87 &      -13.86 &	    91.14\%&  100.00\%&      100.00\%&       2 &     0 &	 0 &		11247.1 &    10.3 &	   13.9 \\
Free + gravity + harmonic      &	      -4.08 &  -14.03 &      -13.95 &	    60.08\%&  100.00\%&      100.00\%&       3 &     0 &	 0 &		11269.0 &   191.8 &	  191.9 \\
Free + gravity + harmonic      &	      -4.31 &  -14.02 &      -13.66 &	    63.01\%&  100.00\%&      100.00\%&       3 &     0 &	 0 &		11334.2 &   170.4 &	   83.1 \\
Free + gravity + harmonic      &	      -4.01 &  -14.01 &      -13.99 &	    67.48\%&  100.00\%&      100.00\%&       3 &     0 &	 0 &		11351.0 &   168.7 &	  198.9 \\
Free + gravity + harmonic      &	      -3.64 &  -13.97 &      -13.88 &	    60.02\%&   99.97\%&       99.93\%&       3 &     0 &	 0 &		11374.6 &   225.7 &	  225.7 \\
Free + gravity + harmonic      &	      -4.11 &	-7.42 &       -7.43 &	    51.63\%&  100.00\%&       99.97\%&       3 &     1 &	 1 &		11313.7 &   193.5 &	  179.2 \\
Free + gravity + EM	       &	      -3.79 &  -13.93 &      -13.47 &	    57.89\%&  100.00\%&      100.00\%&       3 &     0 &	 0 &		11334.0 &   323.9 &	  346.8 \\
Free + gravity + EM	       &	      -4.18 &  -14.00 &      -14.00 &	    77.16\%&  100.00\%&      100.00\%&       3 &     1 &	 1 &		11301.0 &   277.9 &	   96.2 \\
Free + gravity + EM	       &	      -3.38 &  -13.58 &      -13.87 &	    53.33\%&  100.00\%&       99.96\%&       3 &     0 &	 0 &		11381.4 &   360.4 &	  364.0 \\
Free + gravity + EM	       &	      -3.46 &  -13.87 &      -13.89 &	    49.08\%&  100.00\%&      100.00\%&       3 &     0 &	 0 &		11370.1 &   354.0 &	  350.4 \\
Free + gravity + EM	       &	      -3.54 &  -13.69 &      -13.83 &	    51.28\%&  100.00\%&      100.00\%&       3 &     0 &	 0 &		11370.3 &   331.1 &	  320.7 \\
Free + harmonic + EM	       &	      -3.87 &  -13.82 &      -13.55 &	    67.27\%&  100.00\%&      100.00\%&       3 &     0 &	 0 &		11404.0 &   267.1 &	  275.4 \\
Free + harmonic + EM	       &	      -3.69 &  -13.87 &      -10.93 &	    56.02\%&   99.97\%&       99.94\%&       3 &     0 &	 0 &		11413.4 &   468.5 &	  464.9 \\
Free + harmonic + EM	       &	      -4.06 &  -13.39 &      -13.56 &	    70.87\%&  100.00\%&      100.00\%&       3 &     0 &	 0 &		11340.0 &   452.3 &	  452.3 \\
Free + harmonic + EM	       &	      -3.46 &  -13.94 &      -10.51 &	    59.02\%&   99.97\%&       99.93\%&       3 &     0 &	 0 &		11416.0 &   475.5 &	  471.9 \\
Free + harmonic + EM	       &	      -3.70 &  -13.75 &      -13.82 &	    61.67\%&  100.00\%&      100.00\%&       3 &     0 &	 0 &		11354.9 &   466.8 &	  466.8 \\
Free + gravity + harmonic + EM &	      -3.76 &  -13.82 &       -9.48 &	    27.93\%&  100.00\%&       99.94\%&       4 &     0 &	 0 &		11358.8 &   526.9 &	  530.4 \\
Free + gravity + harmonic + EM &	      -3.74 &  -13.00 &      -13.18 &	    40.80\%&  100.00\%&       99.97\%&       4 &     1 &	 1 &		11284.8 &   418.5 &	  389.1 \\
Free + gravity + harmonic + EM &	      -4.09 &  -13.97 &      -13.75 &	    35.69\%&  100.00\%&      100.00\%&       4 &     0 &	 0 &		11297.4 &   504.6 &	  504.6 \\
Free + gravity + harmonic + EM &	      -3.63 &  -13.80 &       -9.99 &	    31.61\%&  100.00\%&       99.97\%&       4 &     0 &	 0 &		11407.4 &   526.3 &	  526.2 \\
Free + gravity + harmonic + EM &	      -3.51 &	-6.37 &      -13.52 &	    32.97\%&  100.00\%&      100.00\%&       4 &     0 &	 0 &		11445.8 &   527.4 &	  527.5 \\
\hline
Median                         &	      -3.89 &  -13.95 &      -13.88 &	    67.56\%&  100.00\%&	     100.00\%&	    2.5&     0.00 &	  0.00 &	     11338.7 &   198.9 &       198.9 \\
Mean                           &	      -3.94 &  -13.44 &      -13.29 &	    65.51\%&   99.99\%&	      99.99\%&	    2.6&     0.10 &	  0.07 &	     11337.9 &   253.7 &       252.9 \\
\hline
\end{tabular}
}
\end{center}
\caption{Results for each of our first 40 mystery world benchmarks, as described in the section \ref{appendix:mystery_worlds}. Each number is the best out of ten trials with random initializations (using seeds 0, 30, 60, 90, 120, 150, 180, 210, 240, 270), and refers to big domains only. Based on the ``Unsolved domain" column, we count out of 40 worlds what's the percentage Baseline, Newborn and AI Physicist completely solve (has unsolved domain of 0), which goes to the ``Fraction of worlds solved" row in Table \ref{ResultsSummaryTable}.
\label{DetailedResultsTable}
}
\end{table*}

\begin{table*}
\begin{center}
\resizebox{1\linewidth}{!}{%

\begin{tabular}{|l|rrr|rrr|rrr|rrr|}
\hline
&\multicolumn{3}{c|}{Epochs to $10^{-2}$}
&\multicolumn{3}{c|}{Epochs to $10^{-4}$}
&\multicolumn{3}{c|}{Epochs to $10^{-6}$}
&\multicolumn{3}{c|}{Epochs to $10^{-8}$}\\
Regions&Base-&New-&AI-&Base-&New-&AI&Base-&New-&AI&Base-&New-&AI\\
&line&born&phys&line&born&physi&line&born&phys&line&born&phys\\
\hline
Free+gravity                   &                100 &      85 &          85 &                 8440 &     120 &         120 &                $\infty$ &    4175 &        3625 &             $\infty$ &    6315 &        4890 \\
Free+gravity                   &                100 &      70 &          10 &                 4680 &     190 &          35 &                $\infty$ &    2900 &        4650 &             $\infty$ &    2995 &        6500 \\
Free+gravity                   &                 85 &     100 &          15 &                  $\infty$ &     135 &          30 &                $\infty$ &    8205 &        3815 &             $\infty$ &    9620 &        6455 \\
Free+gravity                   &                 95 &      75 &          20 &                 7495 &     140 &          25 &                $\infty$ &    6735 &        1785 &             $\infty$ &    8040 &        2860 \\
Free+gravity                   &                110 &      75 &           0 &                 1770 &     295 &          35 &                $\infty$ &    3740 &        3240 &             $\infty$ &    7030 &        3460 \\
Free + harmonic                &                 80 &      75 &          20 &                  $\infty$ &     145 &          25 &                $\infty$ &    2725 &        4050 &             $\infty$ &    2830 &        6145 \\
Free + harmonic                &                 85 &      75 &          20 &                  $\infty$ &      80 &          25 &                $\infty$ &    7965 &        1690 &             $\infty$ &   10000 &        3400 \\
Free + harmonic                &                 95 &      75 &          30 &                  $\infty$ &     110 &          30 &                $\infty$ &    1805 &        3895 &             $\infty$ &    1855 &        3900 \\
Free + harmonic                &                 25 &      20 &           5 &                 1285 &     460 &          10 &                $\infty$ &    5390 &        1060 &             $\infty$ &    7225 &        6385 \\
Free + harmonic                &                 80 &      95 &           5 &                  $\infty$ &     110 &          20 &                $\infty$ &    4380 &        3300 &             $\infty$ &    4800 &        4035 \\
Free + EM                      &                 90 &      85 &          20 &                  $\infty$ &    1190 &         115 &                $\infty$ &    6305 &        3380 &             $\infty$ &    6590 &        3435 \\
Free + EM                      &                125 &     120 &           0 &                 6240 &     885 &          70 &                $\infty$ &    7310 &        1865 &             $\infty$ &    7565 &        1865 \\
Free + EM                      &                115 &     115 &          15 &                15260 &     600 &          70 &                $\infty$ &    2430 &        1225 &             $\infty$ &    2845 &        4435 \\
Free + EM                      &                145 &      90 &           0 &                 6650 &     140 &           0 &                $\infty$ &    3000 &        5205 &             $\infty$ &    4530 &        8735 \\
Free + EM                      &                 80 &      80 &          10 &                  965 &     200 &          25 &                $\infty$ &    4635 &        1970 &             $\infty$ &    4690 &        2870 \\
Free + EM rational           &                 80 &      75 &           0 &                  $\infty$ &     580 &          70 &                $\infty$ &    5415 &        4150 &             $\infty$ &    5445 &        4175 \\
Free + EM rational           &                100 &     100 &          10 &                  $\infty$ &     460 &          45 &                $\infty$ &    2560 &         965 &             $\infty$ &    2575 &        5760 \\
Free + EM rational           &                140 &      95 &          10 &                11050 &     455 &          65 &                $\infty$ &    1960 &        1150 &             $\infty$ &    6295 &        4005 \\
Free + EM rational           &                120 &     100 &           5 &                13315 &     325 &         175 &                $\infty$ &    3970 &        1290 &             $\infty$ &    4335 &        3560 \\
Free + EM rational           &                 35 &      30 &          35 &                 1155 &     335 &          35 &                $\infty$ &    3245 &        2130 &             $\infty$ &    5115 &        5610 \\
Free + gravity + harmonic      &                150 &      75 &          25 &                 9085 &     130 &          30 &                $\infty$ &    3870 &        6145 &             $\infty$ &    5555 &        6185 \\
Free + gravity + harmonic      &                145 &      90 &           5 &                 6915 &     140 &          25 &                $\infty$ &    4525 &        3720 &             $\infty$ &   10275 &        4430 \\
Free + gravity + harmonic      &                105 &     100 &          15 &                 6925 &     155 &          40 &                $\infty$ &    6665 &        6560 &             $\infty$ &    8915 &        6845 \\
Free + gravity + harmonic      &                 95 &      95 &           5 &                  $\infty$ &     120 &          30 &                $\infty$ &    5790 &       10915 &             $\infty$ &   18450 &       13125 \\
Free + gravity + harmonic      &                135 &      95 &          15 &                 7970 &     190 &          45 &                $\infty$ &   13125 &        7045 &             $\infty$ &     $\infty$ &         $\infty$ \\
Free + gravity + EM            &                130 &     100 &          20 &                  $\infty$ &     575 &          40 &                $\infty$ &    3215 &        5095 &             $\infty$ &    3215 &        5100 \\
Free + gravity + EM            &                125 &     110 &          15 &                 5650 &     160 &          30 &                $\infty$ &    6085 &        4720 &             $\infty$ &    8025 &        4980 \\
Free + gravity + EM            &                 80 &      65 &          15 &                  $\infty$ &     630 &         120 &                $\infty$ &    4100 &        6250 &             $\infty$ &    4100 &        6570 \\
Free + gravity + EM            &                 80 &      75 &           5 &                  $\infty$ &      90 &          45 &                $\infty$ &    5910 &        5815 &             $\infty$ &    7295 &        6090 \\
Free + gravity + EM            &                 80 &      85 &          20 &                  $\infty$ &    1380 &         465 &                $\infty$ &    2390 &       11425 &             $\infty$ &    7450 &       11510 \\
Free + harmonic + EM           &                 85 &      75 &          25 &                  $\infty$ &     600 &         150 &                $\infty$ &    3775 &        4525 &             $\infty$ &    4675 &        5070 \\
Free + harmonic + EM           &                 85 &      90 &          25 &                  $\infty$ &    1245 &         200 &                $\infty$ &    6225 &        2340 &             $\infty$ &    6390 &        3180 \\
Free + harmonic + EM           &                115 &      85 &          15 &                16600 &     190 &          35 &                $\infty$ &    6035 &        1515 &             $\infty$ &   10065 &        2110 \\
Free + harmonic + EM           &                 80 &      70 &          35 &                  $\infty$ &     720 &         195 &                $\infty$ &    6990 &        3895 &             $\infty$ &    6995 &        6115 \\
Free + harmonic + EM           &                 85 &      65 &          10 &                  $\infty$ &     985 &         165 &                $\infty$ &    5660 &        1670 &             $\infty$ &    5820 &        1820 \\
Free + gravity + harmonic + EM &                 90 &      75 &           0 &                  $\infty$ &     540 &         255 &                $\infty$ &    8320 &        7390 &             $\infty$ &    9770 &        7590 \\
Free + gravity + harmonic + EM &                 95 &      80 &          15 &                  $\infty$ &    1265 &         635 &                $\infty$ &    6520 &        6365 &             $\infty$ &    8475 &        6475 \\
Free + gravity + harmonic + EM &                130 &      85 &          10 &                 8620 &     575 &         105 &                $\infty$ &    6320 &        4035 &             $\infty$ &    9705 &        7685 \\
Free + gravity + harmonic + EM &                 75 &      80 &           0 &                  $\infty$ &     815 &         425 &                $\infty$ &    7575 &        8405 &             $\infty$ &   10440 &        8620 \\
Free + gravity + harmonic + EM &                 80 &      65 &          20 &                  $\infty$ &     735 &         280 &                $\infty$ &    6715 &        4555 &             $\infty$ &   12495 &        8495 \\
\hline
Median                           &                 95 &      83 &          15 &                $\infty$ &     330 &          45 &                $\infty$ &    5403 &        3895 &            $\infty$&    6590 &        5100 \\
Mean                             &                 98 &      82 &          15 &                 $\infty$ &     455 &         109 &                $\infty$ &    5217 &        4171 &            $\infty$ &    6892 &        5499 \\
\hline
\end{tabular}
}
\end{center}
\caption{Same as previous table, but showing number of training epochs required to reach various MSE prediction accuracies. We record the metrics every 5 epochs, so all the epochs are multiples of 5. Note that the AI Physicist has superseded $10^{-2}$ MSE already by 0 epochs for some environments, showing that thanks to the lifelong learning strategy which proposes previously learned theories in novel environments,  reasonably good predictions can sometimes be achieved even without gradient descent training.
\label{DetailedResultsTable2}
}
\end{table*}

\subsubsection{Double Pendulum}
\label{appendix:double_pendulum}

Our double pendulum is implemented as two connected pendulums with massless rods of length 1 and that each have a point charge of 1 at their end. As illustrated in \fig{PendulumFig}, the system state is fully determined by the 4-tuple
$\y=(\theta_1, \dot{\theta}_1,\theta_2, \dot{\theta}_2)$ and immersed in a piecewise constant electric field $\E$:
$\E=(0, -E_1)$ in the upper half plane $y\geq-1.05$, and 
$\E=(0, E_2)$
in the lower half plane $y<-1.05$, using coordinates where $y$ increases vertically and the origin is at the pivot point of the upper rod.

We generate 7 environments by 
setting $(E_1, E_2)$ equal to $(E_0,2E_0)$, $(E_0,1.5E_0)$, $(E_0,E_0)$, $(E_0,0.5E_0)$, $(2E_0,E_0)$, $(1.5E_0,E_0)$, and $(0.5E_0,E_0)$, where $E_0=9.8$.
We see that there are two different EOMs for the double pendulum system depending on which of the two fields the lower charge is in (the upper charge is always in $E_1$).
We use Runge-Kutta numerical integration to simulate $\y=(\theta_1, \dot{\theta}_1,\theta_2, \dot{\theta}_2)$ for 10,000 time steps with interval of 0.05, and the algorithms' task is to predict the future $(\y_{t+1}$) based on the past 
($\x_t\equiv\y_t$; history length $T= 1$), and simultaneously discover the two domains and their different EOMs unsupervised. 

In this experiment, we implement prediction function of the Baseline and Newborn both as a $N_{lay}^f$-layer neural net (we use $N_{lay}^f=6$) during DDAC. For the Newborn, each hidden layer has $N_{neur}^f=160$ neurons with hyperbolic tangent (tanh) activation, and for the Baseline, each hidden layer has $N_{neur}^f=320$ neurons with tanh activation for a fair comparison. For the Newborn, the optional AddTheories($\mathbfcal{T},D$) (step s8 in Alg. \ref{alg:divide_and_conquer}) is turned off to prevent unlimited adding of theories. The initial number $M$ of theories for Newborn is set to $M=2$ and $M=3$, each run with 10 instances with random initialization. Its domain classifier $\c=(c_1,c_2,...c_M)$ is a $N_{lay}^c$-layer neural net (we use $N_{lay}^c=3$) where each hidden layer has $N_{neur}^c=6$ neurons and leakyReLU activation. The last layer has linear activation.

\begin{table*}
\begin{center}
\resizebox{1\linewidth}{!}{%
\begin{tabular}{| l|l | l| c c c |}
\hline
&Hyperparameter&Environments& Baseline & Newborn & AI Physicist \\
\hline
$\gamma$ & Generalized-mean-loss exponent & All & -1 & -1 & -1\\
$\beta_\f$& Initial learning rate for $\f_\theta$&All& 0.005 & 0.005 & 0.005 \\
$\beta_c$&Initial learning rate for $\c_\phi$ &All& 0.001 & 0.001 & 0.001 \\
$K$& Number of gradient iterations &All& 10000  & 10000 & 10000\\
$\sigma_c$ & Hidden layer activation function in $\c_\phi$ & All & - & leakyReLU & leakyReLU\\
$N_{\rm lay}^c$& Number of layers in $\c_\phi$ &All & - & 3 & 3 \\
$C$ &Initial number of clusters in theory unification  & All & 4 & 4 & 4\\
$\epsilon_{MSE}$ & MSE regularization strength  & All & $10^{-7}$  & $10^{-7}$ & $10^{-7}$ \\
\hline
$\epsilon_{L1}$ & Final $L_1$ regularization strength  & Mystery worlds & $10^{-8}$  & $10^{-8}$ & $10^{-8}$ \\
 &   & Double Pendulum & $10^{-7}$  & $10^{-7}$ & $10^{-7}$ \\
\hline
$N_{\rm lay}^f$ & Number of layers in $\f_\theta$ &Mystery worlds & 3 & 3 & 3 \\
 &  &Double Pendulum & 6 & 6 & 6 \\
\hline
$N_{\rm neur}^f$&
Number of neurons in $\f_\theta$ &Mystery worlds & 16 & 8 & 8 \\
&						&Double Pendulum & 320 & 160 & - \\
\hline
$N_{\rm neur}^c$&
Number of neurons in $\c_\phi$ &Mystery worlds & - & 8 & 8 \\
&						&Double Pendulum & - & 6 & - \\
\hline
$T$&
Maximum time horizon for input &Mystery worlds & 2 & 2 & 2 \\
&						&Double Pendulum & 1 & 1 & 1 \\
\hline
$\sigma_f$& Hidden layer activation function in $\f_\theta$ &Mystery worlds  & leakyReLU  & linear  & linear \\
&						&Double Pendulum  & tanh  & tanh  & - \\
\hline
$M_0$&
Initial number of theories&Mystery worlds &1 &2&2\\
&					&Double Pendulum&1 &2 \& 3&-\\
\hline
$M$& Maximum number of theories&Mystery worlds & 1 & 4 & 4\\
&					&Double Pendulum& 1 & 2 \& 3 & - \\
\hline
$\epsilon_{\rm add}$& MSE threshold for theory adding &Mystery worlds & - & $2 \times 10 ^ {-6}$ & $2 \times 10^{-6}$\\
&					&Double Pendulum& - & $\infty$ & - \\
\hline
$\eta_{\rm insp}$& Inspection threshold for theory adding &Mystery worlds & - & 30\% & 30\%\\
&					&Double Pendulum& - & $\infty$ & - \\
\hline
$\eta_{\rm split}$& Splitting threshold for theory adding &Mystery worlds & - & 5\% & 5\%\\
&					&Double Pendulum& - & $\infty$ & - \\
\hline
$\eta_{\rm del}$& Fraction threshold for theory deletion &Mystery worlds & - & 0.5\% & 0.5\%\\
&					&Double Pendulum& - & 100\% & - \\
\hline
\end{tabular}
}
\caption{
Hyperparameter settings in the numerical experiments. For a fair comparison between Baseline and the other agents that can have up to 4 theories, the number of neurons in each layer of Baseline is larger  so that the total number of parameters is roughly the same for all agents. The Baseline agent in Mystery worlds has leakyReLU activation to be able to able to account for different domains.
\label{hyperparameter_table}
}
\end{center}
\end{table*}

\section{Appendix for Chapter \ref{chap3:IB}}

The structure of the Appendix is as follows.
In Section~\ref{app:variation_background}, we provide preliminaries for the first-order and second-order variations on functionals. We prove Theorem \ref{thm:homo_learnability} and Theorem \ref{thm:beta_monotonic} in Section \ref{app:homo_learnability} and \ref{app:beta_monotonic}, respectively.
In Section~\ref{app:suff_1}, we prove Theorem \ref{thm:suff_1}, the sufficient condition 1 for IB-Learnability.
In Section~\ref{app:first_second_variations}, we calculate the first and second variations of $\IB_\beta[p(z|x)]$ at the trivial representation $p(z|x)=p(z)$, which is used in proving the Sufficient Condition 2 for $\IB_\beta$-learnability (Section~\ref{app:suff_2}). In Appendix \ref{app:what_first_learns}, we prove Eq. (\ref{eq:what_first_learns}) at the onset of learning.
After these preparations, we prove the key result of this paper, Theorem~\ref{thm:suff_3}, in Section~\ref{app:suff_3}.
Then two important corollaries \ref{corollary:suff_3_class_conditional}, \ref{corollary:suff_3_2} are proved in Section~\ref{app:corollaries}.
In Section~\ref{app:maximum_corr} we explore the deep relation between $\beta_0$, $\beta_0[h(x)]$, the hypercontractivity coefficient, contraction coefficient and maximum correlation.
Finally in Section~\ref{app:experiment}, we provide details for the experiments.

\subsection{Preliminaries: first-order and second-order variations}
\label{app:variation_background}

Let functional $F[f(x)]$ be defined on some normed linear space $\mathscr{R}$. Let us add a perturbative function $\epsilon\cdot h(x)$ to $f(x)$, and now the functional 
$F[f(x)+\epsilon\cdot h(x)]$ can be expanded as

\begin{equation*}
\begin{aligned}
\Delta F[f(x)]&=F[f(x)+\epsilon \cdot h(x)]-F[f(x)]\\
&=\varphi_1[f(x)]+\varphi_2[f(x)]+\O(\epsilon^3 ||h||^2)
\end{aligned}
\end{equation*}

where $||h||$ denotes the norm of $h$, $\varphi_1[f(x)]=\epsilon \frac{d F[f(x)]}{d\epsilon}$ is a linear functional of $\epsilon \cdot h(x)$, and is called the \emph{first-order variation}, denoted as $\delta F[f(x)]$. $\varphi_2[f(x)]=\frac{1}{2}\epsilon^2 \frac{d^2 F[f(x)]}{d\epsilon^2}$ is a quadratic functional of $\epsilon \cdot h(x)$, and is called the \emph{second-order variation}, denoted as $\delta^2 F[f(x)]$.

If $\delta F[f(x)]=0$, we call $f(x)$ a stationary solution for the functional $F[\cdot]$.

If $\Delta F[f(x)]\geq0$ for all $h(x)$ such that $f(x)+\epsilon\cdot h(x)$ is at the neighborhood of $f(x)$, we call $f(x)$ a (local) minimum of $F[\cdot]$.

\subsection{Proof of Lemma \ref{thm:homo_learnability}}
\label{app:homo_learnability}
\begin{proof}
If $(X,Y)$ is $\IB_\beta$-learnable, then there exists $Z\in\mathcal{Z}$ given by some $p_1(z|x)$ such that $\IB_\beta(X,Y;Z)<\IB(X,Y;Z_{trivial})=0$, where $Z_{trivial}$ satisfies $p(z|x)=p(z)$. Since $X'=g(X)$ is a invertible map (if $X$ is continuous variable, $g$ is additionally required to be continuous), and mutual information is invariant under such an invertible map (\cite{kraskov2004estimating}), we have that $\IB_\beta(X',Y;Z)=I(X';Z)-\beta I(Y;Z)=I(X;Z)-\beta I(Y;Z)=\IB_\beta(X,Y;Z)<0=\IB(X',Y;Z_{trivial})$, so $(X',Y)$ is $\IB_\beta$-learnable. On the other hand, if $(X,Y)$ is not $\IB_\beta$-learnable, then $\forall Z$, we have $\IB_\beta(X,Y;Z)\ge\IB(X,Y;Z_{trivial})=0$. Again using mutual information's invariance under $g$, we have for all $Z$, $\IB_\beta(X',Y;Z)=\IB_\beta(X,Y;Z)\ge\IB(X,Y;Z_{trivial})=0$, leading to that $(X',Y)$ is not $\IB_\beta$-learnable. Therefore, we have that $(X,Y)$ and $(X',Y)$ have the same $\IB_\beta$-learnability. 

\end{proof}

\subsection{Proof of Theorem \ref{thm:beta_monotonic}}
\label{app:beta_monotonic}

\begin{proof}
At the trivial representation $p(z|x)=p(z)$, we have $I(X;Z)=0$, and $I(Y;Z)=0$ due to the Markov chain, so $\IB_\beta(X,Y;Z)\rvert_{p(z|x)=p(z)} = 0$ for any $\beta$.
Since $(X,Y)$ is $\IB_{\beta_1}$-learnable, there exists a $Z$ given by a $p_1(z|x)$ such that $\IB_{\beta_1}(X,Y;Z)\rvert_{p_1(z|x)} < 0$.
Since $\beta_2 > \beta_1$, and $I(Y;Z) \geq 0$, we have $\IB_{\beta_2}(X,Y;Z)\rvert_{p_1(z|x)} \leq \IB_{\beta_1}(X,Y;Z)\rvert_{p_1(z|x)} < 0 = \IB_{\beta_2}(X,Y;Z)\rvert_{p(z|x)=p(z)}$.
Therefore, $(X,Y)$ is $\IB_{\beta_2}$-learnable.
\end{proof}

\subsection{Proof of Theorem \ref{thm:suff_1}}
\label{app:suff_1}

\begin{proof}
To prove Theorem \ref{thm:suff_1}, we use the Theorem 1 of Chapter 5 of \citet{gelfand2000calculus} which gives a necessary condition for $F[f(x)]$ to have a minimum at $f_0(x)$.
Adapting to our notation, we have:

\begin{theorem}[\cite{gelfand2000calculus}]
\label{thm:necessary_minimum}
A necessary condition for the functional $F[f(x)]$ to have a minimum at $f(x)=f_0(x)$ is that for $f(x)=f_0(x)$ and all admissible $\epsilon\cdot h(x)$, 
$$\delta^2 F[f(x)]\geq0$$.
\end{theorem}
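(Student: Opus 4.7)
The plan is to prove this standard calculus-of-variations result by a contradiction-style argument built on the Taylor-type expansion already introduced in Section~\ref{app:variation_background}. Recall that for admissible $\epsilon \cdot h(x)$ we have
\[
F[f_0(x) + \epsilon \cdot h(x)] - F[f_0(x)] = \delta F[f_0(x)] + \delta^2 F[f_0(x)] + \varphi_r[\epsilon\cdot h(x)]\,\|\epsilon\cdot h(x)\|^2,
\]
where $\delta F$ is linear in $\epsilon \cdot h$, $\delta^2 F$ is quadratic in $\epsilon \cdot h$, and $\varphi_r \to 0$ as $\epsilon \to 0$. The minimum hypothesis says this quantity is nonnegative for all sufficiently small $|\epsilon|$ and every admissible direction $h(x)$. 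The strategy is to use this nonnegativity on \emph{both} signs of $\epsilon$ to kill the first-order term, then to divide by $\epsilon^2$ and let $\epsilon \to 0$ to extract the sign of the second-order term.

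First I would establish the preliminary necessary condition $\delta F[f_0(x)] = 0$. Fix an admissible direction $h(x)$. Since $\delta F[f_0(x)]$ (when viewed as the linear functional $\varphi_1[\epsilon \cdot h(x)]$) scales linearly in $\epsilon$ while $\delta^2 F[f_0(x)]$ and the remainder scale like $\epsilon^2$, the sign of $F[f_0 + \epsilon h] - F[f_0]$ for very small $|\epsilon|$ is controlled by the first-order term. If $\delta F[f_0(x)] \neq 0$ for some admissible $h$, then choosing the sign of $\epsilon$ opposite to that of $\delta F[f_0(x)]/\epsilon$ makes the difference negative for all sufficiently small $|\epsilon|$, contradicting the minimum hypothesis. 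Hence $\delta F[f_0(x)] = 0$ for every admissible $h$.

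With the first-order term eliminated, the expansion reduces to
\[
F[f_0(x) + \epsilon\cdot h(x)] - F[f_0(x)] = \delta^2 F[f_0(x)] + \varphi_r[\epsilon\cdot h(x)]\,\|\epsilon\cdot h(x)\|^2 \geq 0.
\]
Since $\delta^2 F[f_0(x)] = \tfrac{1}{2}\epsilon^2 \, d^2 F[f_0(x)]/d\epsilon^2$ and the remainder is $o(\epsilon^2)$, dividing by $\epsilon^2 > 0$ and letting $\epsilon \to 0$ yields
\[
\tfrac{1}{2}\, \frac{d^2 F[f_0(x) + \epsilon \cdot h(x)]}{d\epsilon^2}\bigg|_{\epsilon=0} \geq 0,
\]
i.e.\ $\delta^2 F[f_0(x)] \geq 0$ for every admissible $h(x)$, which is the claim. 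The main obstacle is mostly technical rather than conceptual: one has to be careful about what ``admissible'' means in the relevant normed linear space (so that both $\pm \epsilon \cdot h(x)$ are valid perturbations keeping $f_0 + \epsilon h$ in the domain of $F$) and about the uniformity of the $o(\epsilon^2)$ remainder in $h$. For the IB application this is harmless because the perturbations $h(z|x)$ live in an unconstrained linear space once we only require $\int h(z|x)\, dz = 0$, so both the sign-flip and the $\epsilon \to 0$ limit are legitimate.
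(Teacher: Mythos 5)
Your proof is correct and is essentially the canonical Gelfand--Fomin argument. Note, however, that the paper itself does not prove this statement: it is cited directly from Gelfand and Fomin (Theorem~1, Chapter~5 of \cite{gelfand2000calculus}) and used as a black box in the proof of Theorem~\ref{thm:suff_1}. Your write-up therefore supplies the textbook argument behind the citation rather than reproducing a proof the paper contains.

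On the content itself: the two-step structure (first kill the linear term via a sign-flip in $\epsilon$, then divide the surviving quadratic-plus-remainder inequality by $\epsilon^2$ and pass to the limit) is exactly right, and the caveats you raise at the end are the real ones. Two small remarks. First, the uniformity of the $o(\epsilon^2)$ remainder in $h$ is not actually an obstacle here: for each \emph{fixed} admissible $h$ you only need $\varphi_r[\epsilon\cdot h]\to 0$ as $\epsilon\to 0$, which is built into the definition of second variation in \Sec{app:variation_background}; no uniform control over $h$ is required because the conclusion is pointwise in $h$. Second, when you conclude from $\tfrac12\,d^2F/d\epsilon^2\big|_{\epsilon=0}\ge 0$ that $\delta^2 F[f_0]\ge 0$, it is worth making explicit that $\delta^2F[f_0]=\tfrac12\epsilon^2\,d^2F/d\epsilon^2\big|_{\epsilon=0}$ is a nonnegative prefactor $\epsilon^2$ times a quantity you have just shown is nonnegative and independent of $\epsilon$, so the inequality holds for all $\epsilon$, not just in the limit. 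With those clarifications your argument is complete.
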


Applying to our functional $\IB_\beta[p(z|x)]$, an immediate result of Theorem \ref{thm:necessary_minimum} is that, if at $p(z|x)=p(z)$, there exists an $\epsilon \cdot h(z|x)$ such that $\delta^2 \IB_\beta[p(z|x)]<0$, then $p(z|x)=p(z)$ is not a minimum for $\IB_\beta[p(z|x)]$. Using the definition of $\IB_\beta$ learnability, we have that $(X,Y)$ is $\IB_\beta$-learnable.

\end{proof}

\subsection{First- and second-order variations of \texorpdfstring{$IB_\beta[p(z|x)]$}{Lg}}
\label{app:first_second_variations}

In this section, we derive the first- and second-order variations of $\IB_\beta[p(z|x)]$, which are needed for proving Lemma \ref{lemma:stationary} and Theorem \ref{thm:suff_2}.

\begin{lemma}
\label{lemma:first_second_variation_IB}
\text{Using perturbative function $h(z|x)$, we have}

\begin{equation*}
\begin{aligned}
&\delta \IB_\beta[p(z|x)]=\int dx dz p(x) h(z|x)\emph{\log}\frac{p(z|x)}{p(z)}-\beta \int dx dy dz p(x,y)h(z|x)\emph{\log}\frac{p(z|y)}{p(z)}\\
&\delta^2 \IB_\beta[p(z|x)]=\\
&\frac{1}{2}\bigg[\int dxdz\frac{p(x)^2}{p(x,z)}h(z|x)^2-\beta\int dx dx' dy dz\frac{p(x,y)p(x',y)}{p(y,z)}h(z|x)h(z|x')+\\
&(\beta-1)\int dx dx' dz \frac{p(x)p(x')}{p(z)}h(z|x)h(z|x')\bigg]
\end{aligned}
\end{equation*}
\end{lemma}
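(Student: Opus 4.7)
The plan is to compute both variations by directly Taylor-expanding $\IB_\beta$ around $p(z|x)$ under the perturbation $p'(z|x)=p(z|x)+\epsilon\cdot h(z|x)$, identifying the coefficient of $\epsilon$ as $\delta\IB_\beta$ and the coefficient of $\epsilon^2/2$ as $\delta^2\IB_\beta$. I will first handle $I(X;Z)$ and $I(Y;Z)$ separately, and then combine them with the $-\beta$ factor.

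The key setup is to notice that the perturbation of $p(z|x)$ induces two derived perturbations by marginalization: $p'(z)=p(z)+\epsilon\,h(z)$ with $h(z):=\int dx\,p(x)h(z|x)$, and $p'(z|y)=p(z|y)+\epsilon\,h(z|y)$ with $h(z|y):=\int dx\,p(x|y)h(z|x)$, where the latter follows from the Markov chain $Z\leftarrow X\leftrightarrow Y$, i.e., $p(z|x,y)=p(z|x)$. Writing $I(X;Z)=\int dxdz\,p(x)p(z|x)\log\tfrac{p(z|x)}{p(z)}$, I will then expand the log via $\log\tfrac{p(z|x)+\epsilon h(z|x)}{p(z)+\epsilon h(z)}=\log\tfrac{p(z|x)}{p(z)}+\epsilon\bigl(\tfrac{h(z|x)}{p(z|x)}-\tfrac{h(z)}{p(z)}\bigr)-\tfrac{\epsilon^2}{2}\bigl(\tfrac{h(z|x)^2}{p(z|x)^2}-\tfrac{h(z)^2}{p(z)^2}\bigr)+\mathcal{O}(\epsilon^3)$, and multiply it against $p(x)(p(z|x)+\epsilon h(z|x))$.

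Collecting the $\mathcal{O}(\epsilon)$ pieces, the terms proportional to $\bigl(\tfrac{h(z|x)}{p(z|x)}-\tfrac{h(z)}{p(z)}\bigr)$ integrate to zero because $\int dx\,p(x)h(z|x)=h(z)$, leaving the first variation $\int dxdz\,p(x)h(z|x)\log\tfrac{p(z|x)}{p(z)}$. Repeating the same expansion for $I(Y;Z)=\int dydz\,p(y)p(z|y)\log\tfrac{p(z|y)}{p(z)}$ and rewriting $\int dy\,p(y)h(z|y)(\cdot)=\int dxdy\,p(x,y)h(z|x)(\cdot)$, I get the second first-order term and hence $\delta\IB_\beta$ in the stated form.

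For the second variation, collecting the $\mathcal{O}(\epsilon^2)$ pieces in $I(X;Z)$ gives $\tfrac12\int dxdz\,\tfrac{p(x)h(z|x)^2}{p(z|x)}-\tfrac12\int dz\,\tfrac{h(z)^2}{p(z)}$, which I will then rewrite using $p(x)/p(z|x)=p(x)^2/p(x,z)$ and expanding $h(z)^2=\int dxdx'\,p(x)p(x')h(z|x)h(z|x')$ to obtain the first and third terms of the stated formula. The analogous $\mathcal{O}(\epsilon^2)$ collection for $I(Y;Z)$ produces $\tfrac12\int dydz\,\tfrac{p(y)h(z|y)^2}{p(z|y)}-\tfrac12\int dz\,\tfrac{h(z)^2}{p(z)}$; the main bookkeeping step — and the one place to be careful — is converting $p(y)h(z|y)^2/p(z|y)$ into $\int dxdx'\,\tfrac{p(x,y)p(x',y)}{p(y,z)}h(z|x)h(z|x')$ using $h(z|y)=\int dx\,p(x|y)h(z|x)$ and $p(z|y)=p(y,z)/p(y)$. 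Assembling $\delta^2\IB_\beta=\delta^2 I(X;Z)-\beta\,\delta^2 I(Y;Z)$ and merging the two $\int dz\,h(z)^2/p(z)$ pieces yields the $(\beta-1)$ coefficient on the last term, matching the stated expression and completing the proof.
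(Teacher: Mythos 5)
Your proposal is correct and follows essentially the same route as the paper's proof: directly Taylor-expand $I(X;Z)$ and $I(Y;Z)$ in $\epsilon$ under the perturbation $p(z|x)\mapsto p(z|x)+\epsilon h(z|x)$, use $\int dx\,p(x)h(z|x)=h(z)$ (and its $y$-conditioned analogue) to cancel the spurious first-order pieces, and rewrite $p(x)/p(z|x)=p(x)^2/p(x,z)$ together with $h(z)^2=\int dx\,dx'\,p(x)p(x')h(z|x)h(z|x')$ to land on the stated quadratic forms. The one organizational nicety you add — treating $I(Y;Z)$ as a functional of the induced perturbation $h(z|y)=\int dx\,p(x|y)h(z|x)$ so that the computation literally mirrors the $I(X;Z)$ case before converting back to $x$-integrals — is equivalent to the paper's direct substitution into the nested integrals $p(y,z)=\int p(z|x')p(x',y)\,dx'$ and $p(z)=\int p(z|x'')p(x'')\,dx''$, just cleaner bookkeeping rather than a different argument.
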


\begin{proof}
Since $\IB_\beta[p(z|x)]=I(X;Z)-\beta I(Y;Z)$, let us calculate the first and second-order variation of $I(X;Z)$ and $I(Y;Z)$ w.r.t. $p(z|x)$, respectively. Through this derivation, we use $\epsilon\cdot h(z|x)$ as a perturbative function, for ease of deciding different orders of variations. We assume that $h(z|x)$ is continuous, and there exists a constant $M$ such that $\big|\frac{h(z|x)}{p(z|x)}\big|<M$, $\forall (x,z)\in\X\times\mathcal{Z}$. We will finally absorb $\epsilon$ into $h(z|x)$.

Denote $I(X;Z)=F_1[p(z|x)]$. We have
\begin{equation*}
F_1[p(z|x)]=I(X;Z)=\int dx dz p(z|x)p(x)\log\frac{p(z|x)}{p(z)}
\end{equation*}

In this paper, we implicitly assume that the integral (or summing) are only on the support of $p(x,y,z)$.

Since
$$p(z)=\int p(z|x)p(x)dx$$
We have
$$p(z)\rvert_{p(z|x)+\epsilon h(z|x)}=p(z)\rvert_{p(z|x)}+\epsilon\int h(z|x)p(x)dx$$

Expanding $F_1[p(z|x)+\epsilon h(z|x)]$ to the second order of $\epsilon$, we have

\begin{equation*}
\begin{aligned}
&F_1[p(z|x)+\epsilon h(z|x)]\\
&=\int dx dz p(x)[p(z|x)+\epsilon h(z|x)]\log\frac{p(z|x)+\epsilon h(z|x)}{p(z)+\epsilon\int h(z|x')p(x')dx'}\\
&=\int dx dz p(x)p(z|x)\left(1+\epsilon\frac{ h(z|x)}{p(z|x)}\right)\log\frac{p(z|x)\left(1+\epsilon\frac{h(z|x)}{p(z|x)}\right)}{p(z)\left(1+\epsilon\frac{\int h(z|x')p(x')dx'}{p(z)}\right)}\\
&=\int dx dz p(x)p(z|x)\left(1+\epsilon\frac{ h(z|x)}{p(z|x)}\right)\log\bigg[\frac{p(z|x)}{p(z)}\bigg(1+\epsilon\frac{h(z|x)}{p(z|x)}\bigg)\bigg(1-\epsilon\frac{\int h(z|x')p(x')dx'}{p(z)}\\
&+\epsilon^2\bigg(\frac{\int h(z|x')p(x')dx'}{p(z)}\bigg)^2\bigg)\bigg]+\O(\epsilon^3)\\
&=\int dx dz p(x)p(z|x)\left(1+\epsilon\frac{ h(z|x)}{p(z|x)}\right)\log\bigg[\frac{p(z|x)}{p(z)}\bigg(1+\epsilon\bigg(\frac{h(z|x)}{p(z|x)}-\frac{\int h(z|x')p(x')dx'}{p(z)}\bigg)\\
&+\epsilon^2\left(\frac{\int h(z|x')p(x')dx'}{p(z)}\right)^2-\epsilon^2\frac{h(z|x)}{p(z|x)}\frac{\int h(z|x')p(x')dx'}{p(z)}\bigg)\bigg]+\O(\epsilon^3)\\
&=\int dx dz p(x)p(z|x)\left(1+\epsilon\frac{ h(z|x)}{p(z|x)}\right)\bigg[\log\frac{p(z|x)}{p(z)}+\epsilon\bigg(\frac{h(z|x)}{p(z|x)}-\frac{\int h(z|x')p(x')dx'}{p(z)}\bigg)\\
&+\epsilon^2\left(\frac{\int h(z|x')p(x')dx'}{p(z)}\right)^2-\epsilon^2\frac{h(z|x)}{p(z|x)}\frac{\int h(z|x')p(x')dx'}{p(z)}-\frac{1}{2}\epsilon^2\bigg(\frac{h(z|x)}{p(z|x)}-\frac{\int h(z|x')p(x')dx'}{p(z)}\bigg)^2\bigg]+\O(\epsilon^3)\\
\end{aligned}
\end{equation*}

Collecting the first order terms of $\epsilon$, we have
\begin{equation*}
\begin{aligned}
&\delta F_1[p(z|x)]\\
&=\epsilon\int dx dz p(x)p(z|x)\bigg(\frac{h(z|x)}{p(z|x)}-\frac{\int h(z|x')p(x')dx'}{p(z)}\bigg)+\epsilon\int dx dz p(x)p(z|x)\frac{ h(z|x)}{p(z|x)}\log\frac{p(z|x)}{p(z)}\\
&=\epsilon\int dx dz p(x)h(z|x)-\epsilon\int dx' dz p(x')h(z|x')+\epsilon\int dx dz p(x) h(z|x)\log\frac{p(z|x)}{p(z)}\\
&=\epsilon\int dx dz p(x) h(z|x)\log\frac{p(z|x)}{p(z)}\\
\end{aligned}
\end{equation*}

Collecting the second order terms of $\epsilon^2$, we have

\begin{equation*}
\begin{aligned}
&\delta^2 F_1[p(z|x)]\\
&=\epsilon^2\int dx dz p(x)p(z|x)\bigg[\left(\frac{\int h(z|x')p(x')dx'}{p(z)}\right)^2-\frac{h(z|x)}{p(z|x)}\frac{\int h(z|x')p(x')dx'}{p(z)}\\
&-\frac{1}{2}\bigg(\frac{h(z|x)}{p(z|x)}-\frac{\int h(z|x')p(x')dx'}{p(z)}\bigg)^2\bigg]\\
&+\epsilon^2\int dx dz p(x)p(z|x)\frac{ h(z|x)}{p(z|x)}\bigg(\frac{h(z|x)}{p(z|x)}-\frac{\int h(z|x')p(x')dx'}{p(z)}\bigg)\\
&=\frac{\epsilon^2}{2}\int dxdz\frac{p(x)^2}{p(x,z)}h(z|x)^2-\frac{\epsilon^2}{2}\int dx dx' dz \frac{p(x)p(x')}{p(z)}h(z|x)h(z|x')
\end{aligned}
\end{equation*}

Now let us calculate the first and second-order variation of $F_2[p(z|x)]=I(Z;Y)$. We have
\begin{equation*}
F_2[p(z|x)]=I(Y;Z)=\int dy dz p(z|y)p(y)\log\frac{p(y,z)}{p(y)p(z)}=\int dx dy dz p(z|y)p(x,y)\log\frac{p(y,z)}{p(y)p(z)}
\end{equation*}
Using the Markov chain $Z\gets X\leftrightarrow Y$, we have
$$p(y,z)=\int p(z|x)p(x,y)dx$$
Hence
$$p(y,z)\rvert_{p(z|x)+\epsilon h(z|x)}=p(y,z)\rvert_{p(z|x)}+\epsilon\int h(z|x)p(x,y)dx$$

Then expanding $F_2[p(z|x)+\epsilon h(z|x)]$ to the second order of $\epsilon$, we have

\begin{equation*}
\begin{aligned}
&F_2[p(z|x)+\epsilon h(z|x)]\\
&=\int dx dy dz p(x,y)p(z|x)\left(1+\epsilon\frac{ h(z|x)}{p(z|x)}\right)\log\frac{p(y,z)\left(1+\epsilon\frac{\int h(z|x')p(x',y)dx'}{p(y,z)}\right)}{p(y)p(z)(1+\epsilon\frac{\int h(z|x'')p(x'')dx''}{p(z)})}\\
&=\int dx dy dz p(x,y)p(z|x)\left(1+\epsilon\frac{ h(z|x)}{p(z|x)}\right)\bigg\{\log\frac{p(y,z)}{p(y)p(z)}\\
&+\epsilon\bigg(\frac{\int h(z|x')p(x',y)dx'}{p(y,z)}-\frac{\int h(z|x')p(x')dx'}{p(z)}\bigg)\\
&+\epsilon^2\bigg[\bigg(\frac{\int h(z|x')p(x')dx'}{p(z)}\bigg)^2-\frac{\int h(z|x')p(x',y)dx'}{p(y,z)}\frac{\int h(z|x'')p(x'')dx''}{p(z)}\\
&-\frac{1}{2}\bigg(\frac{\int h(z|x')p(x',y)dx'}{p(y,z)}-\frac{\int h(z|x')p(x')dx'}{p(z)}\bigg)^2\bigg]\bigg\}\\
&+\O(\epsilon^3)\\
\end{aligned}
\end{equation*}

Collecting the first order terms of $\epsilon$, we have
\begin{equation*}
\begin{aligned}
&\delta F_2[p(z|x)]\\
&=\epsilon\int dx dy dz p(x,y)h(z|x)\log\frac{p(y,z)}{p(y)p(z)}+\epsilon\int dx dy dz p(x,y)p(z|x)\frac{\int h(z|x')p(x',y)dx'}{p(y,z)}\\
&-\epsilon\int dx dy dz p(x,y)p(z|x)\frac{\int h(z|x')p(x')dx'}{p(z)}\\
&=\epsilon\int dx dy dz p(x,y)h(z|x)\log\frac{p(y,z)}{p(y)p(z)}+\epsilon\int dx'dy dz h(z|x')p(x',y)-\epsilon\int dz h(z|x')p(x')dx'\\
&=\epsilon\int dx dy dz p(x,y)h(z|x)\log\frac{p(z|y)}{p(z)}
\end{aligned}
\end{equation*}

Collecting the second order terms of $\epsilon$, we have

\begin{equation*}
\begin{aligned}
&\delta^2 F_2[p(z|x)]\\
&=\epsilon^2\int dx dy dz p(x,y)p(z|x)\bigg[\bigg(\frac{\int h(z|x')p(x')dx'}{p(z)}\bigg)^2-\frac{\int h(z|x')p(x',y)dx'}{p(y,z)}\frac{\int h(z|x'')p(x'')dx''}{p(z)}\bigg]\\
&-\frac{\epsilon^2}{2}\int dx dy dz p(x,y)p(z|x)\bigg(\frac{\int h(z|x')p(x',y)dx'}{p(y,z)}-\frac{\int h(z|x')p(x')dx'}{p(z)}\bigg)^2\\
&+\epsilon^2\int dx dy dz p(x,y)p(z|x)\frac{ h(z|x)}{p(z|x)}\bigg(\frac{\int h(z|x')p(x',y)dx'}{p(y,z)}-\frac{\int h(z|x')p(x')dx'}{p(z)}\bigg)\\
&=\frac{\epsilon^2}{2}\int dx dx' dy dz\frac{p(x,y)p(x',y)}{p(y,z)}h(z|x)h(z|x')-\frac{\epsilon^2}{2}\int dx dx' dz \frac{p(x)p(x')}{p(z)}h(z|x)h(z|x')
\end{aligned}
\end{equation*}

Finally, we have
\begin{equation}
\label{eq:delta_IB}
\begin{aligned}
\delta \IB_\beta[p(z|x)]&=\delta F_1[p(z|x)]-\beta \cdot\delta F_2[p(z|x)]\\
&=\epsilon\bigg(\int dx dz p(x) h(z|x)\log\frac{p(z|x)}{p(z)}-\beta \int dx dy dz p(x,y)h(z|x)\log\frac{p(z|y)}{p(z)}\bigg)
\end{aligned}
\end{equation}

\begin{equation*}
\begin{aligned}
\delta^2 \IB_\beta[p(z|x)]=&\delta^2 F_1[p(z|x)]-\beta \cdot\delta^2 F_2[p(z|x)]\\
=&\frac{\epsilon^2}{2}\int dxdz\frac{p(x)^2}{p(x,z)}h(z|x)^2-\frac{\epsilon^2}{2}\int dx dx' dz \frac{p(x)p(x')}{p(z)}h(z|x)h(z|x')\\
&-\beta\epsilon^2\bigg[\frac{1}{2}\int dx dx' dy dz\frac{p(x,y)p(x',y)}{p(y,z)}h(z|x)h(z|x')\\
&-\frac{1}{2}\int dx dx' dz \frac{p(x)p(x')}{p(z)}h(z|x)h(z|x')\bigg]\\
=&\frac{\epsilon^2}{2}\bigg[\int dxdz\frac{p(x)^2}{p(x,z)}h(z|x)^2\\
&-\beta\int dx dx' dy dz\frac{p(x,y)p(x',y)}{p(y,z)}h(z|x)h(z|x')\\
&+(\beta-1)\int dx dx' dz \frac{p(x)p(x')}{p(z)}h(z|x)h(z|x')\bigg]
\end{aligned}
\end{equation*}

Absorb $\epsilon$ into $h(z|x)$, we get rid of the $\epsilon$ factor and obtain the final expression in Lemma \ref{lemma:first_second_variation_IB}.

\end{proof}

\subsection{Proof of Lemma \ref{lemma:stationary}}
\label{app:stationary}

\begin{proof}
Using Lemma \ref{lemma:first_second_variation_IB}, we have
\begin{equation*}
\begin{aligned}
\delta \IB_\beta[p(z|x)]=\int dx dz p(x) h(z|x)\log\frac{p(z|x)}{p(z)}-\beta \int dx dy dz p(x,y)h(z|x)\log\frac{p(z|y)}{p(z)}
\end{aligned}
\end{equation*}
Let $p(z|x)=p(z)$ (the trivial representation), we have that $\log\frac{p(z|x)}{p(z)}\equiv0$. Therefore, the two integrals are both 0. Hence,
\begin{equation*}
\begin{aligned}
\delta \IB_\beta[p(z|x)]\big\rvert_{p(z|x)=p(z)}\equiv0
\end{aligned}
\end{equation*}
Therefore, the $p(z|x)=p(z)$ is a stationary solution for $\IB_\beta[p(z|x)]$.

\end{proof}

\subsection{Proof of Theorem \ref{thm:suff_2}}
\label{app:suff_2}

\begin{proof}

Firstly, from the necessary condition of $\beta>1$ in Section \ref{sec:learnability}, we have that any sufficient condition for $\IB_\beta$-learnability should be able to deduce $\beta>1$.

Now using Theorem \ref{thm:suff_1}, a sufficient condition for $(X,Y)$ to be $\IB_\beta$-learnable is that there exists $h(z|x)$ with $\int h(z|x)dx=0$ such that $\delta^2\IB_\beta[p(z|x)]<0$ at $p(z|x)=p(x)$.

At the trivial representation, $p(z|x)=p(z)$ and hence $p(x,z)=p(x)p(z)$. Due to the Markov chain $Z\gets X\leftrightarrow Y$, we have $p(y,z)=p(y)p(z)$. Substituting them into the $\delta^2\IB_\beta[p(z|x)]$ in Lemma \ref{lemma:first_second_variation_IB}, the condition becomes: there exists $h(z|x)$ with $\int h(z|x)dz=0$, such that
\begin{equation}
\label{eq:suff_2_app1}
\begin{aligned}
&0>\delta^2 \IB_\beta[p(z|x)]=\\
&\frac{1}{2}\bigg[\int dxdz\frac{p(x)^2}{p(x)p(z)}h(z|x)^2-\beta\int dx dx' dy dz\frac{p(x,y)p(x',y)}{p(y)p(z)}h(z|x)h(z|x')\\
&+(\beta-1)\int dx dx' dz \frac{p(x)p(x')}{p(z)}h(z|x)h(z|x')\bigg]
\end{aligned}
\end{equation}

Rearranging terms and simplifying, we have
\begin{equation*}
\begin{aligned}
\int \frac{dz}{p(z)}G[h(z|x)]=&\int \frac{dz}{p(z)}\bigg[\int dx h(z|x)^2p(x)-\beta\int\frac{dy}{p(y)}\bigg(\int dx h(z|x)p(x)p(y|x)\bigg)^2\\
&+(\beta-1)\bigg(\int dx h(z|x)p(x)\bigg)^2\bigg]<0
\end{aligned}
\end{equation*}

where 
$$G[h(x)]=\int dx h(x)^2p(x)-\beta\int\frac{dy}{p(y)}\bigg(\int dx h(x)p(x)p(y|x)\bigg)^2+(\beta-1)\bigg(\int dx h(x)p(x)\bigg)^2$$

Now we prove that the condition that $\exists h(z|x)$ s.t. $\int\frac{dz}{p(z)}G[h(z|x)]<0$ is equivalent to the condition that $\exists h(x)$ s.t. $G[h(x)]<0$.

If $\forall h(z|x)$, $G[h(z|x)]\ge0$, then we have $\forall h(z|x)$, $\int \frac{dz}{p(z)}G[h(z|x)]\ge0$. Therefore, if $\exists h(z|x)$ s.t. $\int\frac{dz}{p(z)}G[h(z|x)]<0$, we have that $\exists h(z|x)$ s.t. $G[h(z|x)]<0$. 
Since the functional $G[h(z|x)]$ does not contain integration over $z$, we can treat the $z$ in $G[h(z|x)]$ as a parameter and we have that $\exists h(x)$ s.t. $G[h(x)]<0$.

Conversely, if there exists an certain function $h(x)$ such that $G[h(x)]<0$, we can find some $h_2(z)$ such that $\int h_2(z)dz=0$ and $\int\frac{h^2_2(z)}{p(z)}dz>0$, and let $h_1(z|x)=h(x)h_2(z)$. Now we have

$$\int\frac{dz}{p(z)}G[h(z|x)]=\int\frac{h_2^2(z)dz}{p(z)}G[h(x)]=G[h(x)]\int\frac{h_2^2(z)dz}{p(z)}<0$$

In other words, the condition Eq. (\ref{eq:suff_2_app1}) is equivalent to requiring that there exists an $h(x)$ such that $G[h(x)]<0$
. Hence, a sufficient condition for $\IB_\beta$-learnability is that there exists an $h(x)$ such that

\begin{equation}
\label{eq:suff_2_app_0}
\begin{aligned}
G[h(x)]=\int dx h(x)^2p(x)-\beta\int\frac{dy}{p(y)}\bigg(\int dx h(x)p(x)p(y|x)\bigg)^2+(\beta-1)\bigg(\int dx h(x)p(x)\bigg)^2<0
\end{aligned}
\end{equation}

When $h(x)=C=\text{constant}$ in the entire input space $\X$, Eq. (\ref{eq:suff_2_app_0}) becomes:

\begin{equation*}
\begin{aligned}
C^2-\beta C^2+(\beta-1)C^2<0
\end{aligned}
\end{equation*}

which cannot be true. Therefore, $h(x)=\text{constant}$ cannot satisfy Eq. (\ref{eq:suff_2_app_0}).

Rearranging terms and simplifying, we have

\begin{equation}
\label{eq:suff_2_app_1}
\begin{aligned}
\beta\bigg[\int\frac{dy}{p(y)}\left(\int dx h(x)p(x)p(y|x)\right)^2-\left(\int dx h(x)p(x)\right)^2\bigg]>\int dx h(x)^2 p(x)-\left(\int dx h(x)p(x)\right)^2
\end{aligned}
\end{equation}

Written in the form of expectations, we have

\begin{equation}
\begin{aligned}
\label{eq:suff_2_app_2}
\beta\cdot \left(\E_{y \sim p(y)}\left[\left(\E_{x \sim p(x|y)} [h(x)]\right)^2\right] - \left(\E_{x\sim p(x)} [h(x)]\right)^2\right)>\E_{x \sim p(x)} [h(x)^2] - \left(\E_{x\sim p(x)} [h(x)]\right)^2
\end{aligned}
\end{equation}

Since the square function is convex, using Jensen's inequality on the L.H.S. of Eq. \ref{eq:suff_2_app_2}, we have

\begin{equation*}
\begin{aligned}
\E_{y\sim p(y)}\bigg[\bigg(\E_{x\sim p(x|y)} [h(x)]\bigg)^2\bigg]\ge\bigg(\E_{y\sim p(y)}\bigg[\E_{x\sim p(x|y)} [h(x)]\bigg]\bigg)^2 = \left(\E_{x\sim p(x)}[h(x)]\right)^2
\end{aligned}
\end{equation*}

The equality holds iff $\E_{x\sim p(x|y)} [h(x)]$ is constant w.r.t. $y$, i.e. $Y$ is independent of $X$. Therefore, in order for Eq. (\ref{eq:suff_2_app_2}) to hold, we require that $Y$ is not independent of $X$.

Using Jensen's inequality on the innter expectation on the L.H.S. of Eq. (\ref{eq:suff_2_app_2}), we have

\begin{equation}
\label{eq:suff_2_app_22}
\begin{aligned}
\E_{y\sim p(y)}\bigg[\bigg(\E_{x\sim p(x|y)} [h(x)]\bigg)^2\bigg]\le\E_{y\sim p(y)}\big[\E_{x\sim p(x|y)} [h(x)^2]\big] = \E_{x\sim p(x)}[h(x)^2]
\end{aligned}
\end{equation}

The equality holds when $h(x)$ is a constant. Since we require that $h(x)$ is not a constant, we have that the equality cannot be reached.

Similarly, using Jensen's inequality on the R.H.S. of Eq. \ref{eq:suff_2_app_2}, we have that $$\E_{x \sim p(x)} [h(x)^2] > \left(\E_{x\sim p(x)} [h(x)]\right)^2$$ 

where we have used the requirement that $h(x)$ cannot be constant.

Under the constraint that $Y$ is not independent of $X$, we can divide both sides of Eq. \ref{eq:suff_2_app_2}, and obtain the condition: there exists an $h(x)$ such that

\begin{equation*}
\begin{aligned}
\beta>\frac{\E_{x \sim p(x)} [h(x)^2] - \left(\E_{x\sim p(x)} [h(x)]\right)^2}{\E_{y \sim p(y)}\left[\left(\E_{x \sim p(x|y)} [h(x)]\right)^2\right] - \left(\E_{x\sim p(x)} [h(x)]\right)^2}
\end{aligned}
\end{equation*}

i.e.

\begin{equation*}
\begin{aligned}
\beta>\inf_{h(x)}\frac{\E_{x \sim p(x)} [h(x)^2] - \left(\E_{x\sim p(x)} [h(x)]\right)^2}{\E_{y \sim p(y)}\left[\left(\E_{x \sim p(x|y)} [h(x)]\right)^2\right] - \left(\E_{x\sim p(x)} [h(x)]\right)^2}
\end{aligned}
\end{equation*}

which proves the condition of Theorem \ref{thm:suff_2}. 

Furthermore, from Eq. (\ref{eq:suff_2_app_22}) we have

\begin{equation*}
\begin{aligned}
\beta_0[h(x)]>1
\end{aligned}
\end{equation*}

for $h(x)\not\equiv$ const, which satisfies the necessary condition of $\beta>1$ in Section \ref{sec:learnability}.

\textbf{Proof of lower bound of slope of the Pareto frontier at the origin:} 

Now we prove the second statement of Theorem \ref{thm:suff_2}. Since $\delta I(X;Z)=0$ and $\delta I(Y;Z)=0$ according to Lemma \ref{lemma:stationary}, we have $\left(\frac{\Delta I(Y;Z)}{\Delta I(X;Z)}\right)^{-1}=\left(\frac{\delta^2 I(Y;Z)}{\delta^2 I(X;Z)}\right)^{-1}$. Substituting into the expression of $\delta^2 I(Y;Z)$ and $\delta^2 I(X;Z)$ from Lemma \ref{lemma:first_second_variation_IB}, we have 

\begin{equation*}
\begin{aligned}
&\left(\frac{\Delta I(Y;Z)}{\Delta I(X;Z)}\right)^{-1}\\
&=\left(\frac{\delta^2 I(Y;Z)}{\delta^2 I(X;Z)}\right)^{-1}\\
&=\frac{\frac{\epsilon^2}{2}\int dxdz\frac{p(x)^2}{p(x)p(z)}h(z|x)^2-\frac{\epsilon^2}{2}\int dx dx' dz \frac{p(x)p(x')}{p(z)}h(z|x)h(z|x')}{\frac{\epsilon^2}{2}\int dx dx' dy dz\frac{p(x,y)p(x',y)}{p(y)p(z)}h(z|x)h(z|x')-\frac{\epsilon^2}{2}\int dx dx' dz \frac{p(x)p(x')}{p(z)}h(z|x)h(z|x')}\\
&=\frac{\left(\int dx p(x)h(x)^2-\int dx dx'  p(x)p(x')h(x)h(z|x')\right)\int \frac{h_2(z)^2}{p(z)}dz}{\left(\int dx dx' dy \frac{p(x,y)p(x',y)}{p(y)}h(x)h(z|x')-\int dx dx' p(x)p(x')h(x)h(z|x')\right)\int \frac{h_2(z)^2}{p(z)}dz}\\
&=\frac{\int dx p(x)h(x)^2-\int dx dx'  p(x)p(x')h(x)h(z|x')}{\int dx dx' dy \frac{p(x,y)p(x',y)}{p(y)}h(x)h(z|x')-\int dx dx' p(x)p(x')h(x)h(z|x')}\\
&=\frac{\E_{x\sim p(x)}[ h(x)^2]-\left(\E_{x\sim p(x)} [h(x)]\right)^2}{\E_{y\sim p(y)}\big[\left(\E_{x\sim p(x|y)} [h(x)]\right)^2\big]-\left(\E_{x\sim p(x)} [h(x)]\right)^2}\\
&=\frac{\frac{\E_{x\sim p(x)}[ h(x)^2]}{\left(\E_{x\sim p(x)} [h(x)]\right)^2}-1}{\E_{y\sim p(y)}\bigg[\left(\frac{\E_{x\sim p(x|y)} [h(x)]}{\E_{x\sim p(x)}[h(x)]}\right)^2\bigg]-1}\\
&=\beta_0[h(x)]
\end{aligned}
\end{equation*}

Therefore, $\left(\inf_{h(x)}\beta_0[h(x)]\right)^{-1}$ gives the largest slope of $\Delta I(Y;Z)$ vs. $\Delta I(X;Z)$ for perturbation function of the form $h_1(z|x)=h(x)h_2(z)$ satisfying $\int h_2(z)dz=0$ and $\int\frac{h_2^2(z)}{p(z)}dz>0$, which is a lower bound of slope of $\Delta I(Y;Z)$ vs. $\Delta I(X;Z)$ for all possible perturbation function $h_1(z|x)$. The latter is the slope of the Pareto frontier of the $I(Y;Z)$ vs. $I(X;Z)$ curve at the origin.

\textbf{Inflection point for general $Z$:} If we \emph{do not} assume that $Z$ is at the origin of the information plane, but at some general stationary solution $Z^*$ with $p(z|x)$, we define

\begin{equation*}
\begin{aligned}
\beta^{(2)}[h(x)]&=\left(\frac{\delta^2 I(Y;Z)}{\delta^2 I(X;Z)}\right)^{-1}\\
&=\frac{\frac{\epsilon^2}{2}\int dxdz\frac{p(x)^2}{p(x,z)}h(z|x)^2-\frac{\epsilon^2}{2}\int dx dx' dz \frac{p(x)p(x')}{p(z)}h(z|x)h(z|x')}{\frac{\epsilon^2}{2}\int dx dx' dy dz\frac{p(x,y)p(x',y)}{p(y,z)}h(z|x)h(z|x')-\frac{\epsilon^2}{2}\int dx dx' dz \frac{p(x)p(x')}{p(z)}h(z|x)h(z|x')}\\
&=\frac{\int dxdz\frac{p(x)^2}{p(x,z)}h(z|x)^2-\int dx dx' dz \frac{p(x)p(x')}{p(z)}h(z|x)h(z|x')}{\int dx dx' dy dz\frac{p(x,y)p(x',y)}{p(y,z)}h(z|x)h(z|x')-\int dx dx' dz \frac{p(x)p(x')}{p(z)}h(z|x)h(z|x')}\\
&=\frac{\int \frac{dz}{p(z)}\left[\int dx\frac{p(x)^2}{p(x|z)}h(z|x)^2-\left(\int dx p(x)h(z|x)\right)^2\right]}{\int\frac{dz}{p(z)}\left[\int \frac{dy}{p(y|z)} \left(\int dx p(x,y)h(z|x)\right)^2-\left(\int dx p(x)h(z|x)\right)^2\right]}\\
&=\frac{\int \frac{dz}{p(z)}\left[\frac{\int dx\frac{p(x)^2}{p(x|z)}h(z|x)^2}{\left(\int dx p(x)h(z|x)\right)^2}-1\right]}{\int\frac{dz}{p(z)}\left[\frac{\int \frac{dy}{p(y|z)} \left(\int dx p(x,y)h(z|x)\right)^2}{\left(\int dx p(x)h(z|x)\right)^2}-1\right]}\\
&=\frac{\int dz\left[\frac{\int dx\frac{p(x)}{p(z|x)}h(z|x)^2}{\left(\int dx p(x)h(z|x)\right)^2}-\frac{1}{p(z)}\right]}{\int dz\left[\frac{\int \frac{dy}{p(z|y)p(y)} \left(\int dx p(x,y)h(z|x)\right)^2}{\left(\int dx p(x)h(z|x)\right)^2}-\frac{1}{p(z)}\right]}\\
&=\frac{\int dz\left[\int dx\frac{p(x)}{p(z|x)}h(z|x)^2-\frac{1}{p(z)}(\int dx p(x)h(z|x))^2\right]}{\int dz\left[\int \frac{dy}{p(z|y)p(y)} \left(\int dx p(x,y)h(z|x)\right)^2-\frac{1}{p(z)}\left(\int dx p(x)h(z|x)\right)^2\right]}\\
\end{aligned}
\end{equation*}

which reduces to $\beta_0[h(x)]$ when $p(z|x)=p(z)$.
When
\begin{equation}
\label{eq:general_beta}
\beta>\inf_{h(z|x)}\beta^{(2)}[h(z|x)]
\end{equation}
it becomes a non-stable solution (non-minimum), and we will have other $Z$ that achieves a better $\IB_\beta(X,Y;Z)$ than the current $Z^*$. 

\end{proof}

\subsection{What IB first learns at its onset of learning}
\label{app:what_first_learns}

In this section, we prove that at the onset of learning, if letting $h(z|x)=h^*(x)h_2(z)$, we have

\begin{equation}
p_\beta(y|x)=p(y)+\epsilon^2 C_z (h^*(x)-\overline{h}^*_x)\int p(x,y)(h^*(x)-\overline{h}^*_x)dx
\end{equation}

where $p_\beta(y|x)$ is the estimated $p(y|x)$ by IB for a certain $\beta$, $h^*(x)=\inf_{h(x)}\beta_0[h(x)]$, $\overline{h}^*_x=\int h^*(x)p(x)dx$, $C_z=\int\frac{h_2^2(z)}{p(z)}dz$ is a constant.

\begin{proof}
In IB, we use $p_\beta(z|x)$ to obtain $Z$ from $X$, then obtain the prediction of $Y$ from $Z$ using $p_\beta(y|z)$. Here we use subscript $\beta$ to denote the probability (density) at the optimum of $\IB_\beta[p(z|x)]$ at a specific $\beta$. We have
\begin{equation*}
\begin{aligned}
p_\beta(y|x)&=\int p_\beta(y|z) p_\beta(z|x)dz \\
&=\int dz \frac{p_\beta(y,z) p_\beta(z|x)}{p_\beta(z)}\\
&=\int dz \frac{p_\beta(z|x)}{p_\beta(z)}\int p(x',y)p_\beta(z|x')dx'
\end{aligned}
\end{equation*}

When we have a small perturbation $\epsilon\cdot h(z|x)$ at the trivial representation, $p_\beta(z|x)=p_{\beta_0}(z)+\epsilon\cdot h(z|x)$, we have $p_\beta(z)=p_{\beta_0}(z)+\epsilon\cdot\int h(z|x'')p(x'')dx''$. Substituting, we have

\begin{equation*}
\begin{aligned}
p_\beta(y|x)&=\int dz \frac{p_{\beta_0}(z)\left(1+\epsilon\cdot\frac{h(z|x)}{p_{\beta_0}(z)}\right)}{p_{\beta_0}(z)\left(1+\epsilon\cdot\frac{\int h(z|x'')p(x'')dx''}{p_{\beta_0}(z)}\right)}\int p(x',y)p_{\beta_0}(z)\left(1+\epsilon\cdot\frac{h(z|x')}{p_{\beta_0}(z)}\right)dx'\\
&=\int dz \frac{1+\epsilon\cdot\frac{h(z|x)}{p_{\beta_0}(z)}}{1+\epsilon\cdot\frac{\int h(z|x'')p(x'')dx''}{p_{\beta_0}(z)}}\int p(x',y)p_{\beta_0}(z)\left(1+\epsilon\cdot\frac{h(z|x')}{p_{\beta_0}(z)}\right)dx'\\
\end{aligned}
\end{equation*}

The $0^{\text{th}}$-order term is $\int dz dx' p(x',y)p_{\beta_0}(z)=p(y)$. The first-order term is

\begin{equation*}
\begin{aligned}
\delta p_\beta(z|x)=&\epsilon\cdot\int dzdx'\left(h(z|x) + h(z|x')-\int h(z|x'')p(x'')dx''\right)p(x',y)\\
=&\epsilon\cdot\int dx' \left(\int dz h(z|x)+\int dz h(z|x')\right)-\epsilon\cdot\int dx'dx''p(x',y)p(x'')\int dz h(z|x'')\\
=&0-0\\
=&0
\end{aligned}
\end{equation*}

since we have $\int h(z|x)dz=0$ for any $x$.

For the second-order term, using $h(z|x)=h^*(x)h_2(z)$ and $C_z=\int\frac{dz}{p_{\beta_0}(z)}h_2^2(z)$, it is
\begin{equation*}
\begin{aligned}
\delta^2 p_\beta(y|x)=&\epsilon^2\cdot\int dz\left(\frac{\int h(z|x'')p(x'')dx''}{p_{\beta_0}(z)}\right)^2 \int p(x',y)p_{\beta_0}(z)dx'\\
&-\epsilon^2\cdot \int dz\frac{h(z|x)\int h(z|x'')p(x'')dx''}{(p_{\beta_0}(z))^2} \int p(x',y)p_{\beta_0}(z)dx'\\
&+\epsilon^2 \int dz\left(h(z|x)-\int h(z|x'')p(x'')dx\right)\int p(x',y)\frac{h(z|x')}{p_{\beta_0}(z)}dx'\\
=&\epsilon^2 C_z\cdot\left(\int h^*(x'')p(x'')dx''\right)^2 p(y)\\
&-\epsilon^2 C_z\cdot h^*(x)\int h^*(x'')p(x'')dx'' p(y)\\
&+\epsilon^2 C_z\cdot h^*(x)\int p(x',y)h^*(x')dx'\\
&-\epsilon^2 C_z\cdot\int h^*(x'')p(x'')dx\int p(x',y)h^*(x')dx'\\
=&\epsilon^2 C_z(h^*(x)-\overline{h}^*_x)\left[\left(\int p(x',y)h^*(x')dx'\right)-\overline{h}^*_x p(y)\right]\\
=&\epsilon^2 C_z(h^*(x)-\overline{h}^*_x)\int p(x',y)\left(h^*(x')-\overline{h}^*_x\right)dx'
\end{aligned}
\end{equation*}

where $\overline{h}^*_x=\int h^*(x)p(x)dx$. Combining everything, we have up to the second order,

\begin{equation*}
p_\beta(y|x)=p(y)+\epsilon^2 C_z (h^*(x)-\overline{h}^*_x)\int p(x,y)(h^*(x)-\overline{h}^*_x)dx
\end{equation*}
\end{proof}

\subsection{Proof of Theorem \ref{thm:suff_3}}
\label{app:suff_3}

\begin{proof}
According to Theorem \ref{thm:suff_2}, a sufficient condition for $(X,Y)$ to be $\IB_\beta$-learnable is that $X$ and $Y$ are not independent, and
\begin{equation}
\label{eq:suff_3_app_1}
\begin{aligned}
\beta>\inf_{h(x)}\frac{\frac{\E_{x\sim p(x)}[ h(x)^2]}{\left(\E_{x\sim p(x)} [h(x)]\right)^2}-1}{\E_{y\sim p(y)}\bigg[\left(\frac{\E_{x\sim p(x|y)} [h(x)]}{\E_{x\sim p(x)}[h(x)]}\right)^2\bigg]-1}
\end{aligned}
\end{equation}

We can assume a specific form of $h(x)$, and obtain a (potentially stronger) sufficient condition. Specifically, we let

\begin{equation}
\label{eq:suff_3_app_2}
\begin{aligned}
h(x)=\begin{cases}
1, x\in\Omega_x\\
0, \text{otherwise}
\end{cases}
\end{aligned}
\end{equation}

for certain $\Omega_x\subset \X$. Substituting into Eq. (\ref{eq:suff_3_app_2}), we have that a sufficient condition for $(X,Y)$ to be $\IB_\beta$-learnable is

\begin{equation}
\label{eq:suff_3_app_3}
\begin{aligned}
\beta>\inf_{\Omega_x\subset\X}\frac{\frac{p(\Omega_x)}{p(\Omega_x)^2}-1}{\int dy p(y)\left(\frac{\int_{x\in\Omega_x} dx p(x|y) dx}{p(\Omega_x)}\right)^2-1}>0
\end{aligned}
\end{equation}

where $p(\Omega_x)=\int_{x\in\Omega_x}p(x)dx$.

The denominator of Eq. (\ref{eq:suff_3_app_3}) is

\begin{equation*}
\begin{aligned}
&\int dy p(y)\left(\frac{\int_{x\in\Omega_x} dx p(x|y)dx}{p(\Omega_x)}\right)^2-1\\
&=\int dy p(y)\bigg( \frac{p(\Omega_x|y)}{p(\Omega_x)}\bigg)^2-1\\
&=\int dy  \frac{p(y|\Omega_x)^2}{p(y)}-1\\
&=\E_{y\sim p(y|\Omega_x)}\bigg[\frac{p(y|\Omega_x)}{p(y)}-1\bigg]
\end{aligned}
\end{equation*}

Using the inequality $x-1\ge \log\ x$, we have
\begin{equation*}
\begin{aligned}
&\E_{y\sim p(y|\Omega_x)}\bigg[\frac{p(y|\Omega_x)}{p(y)}-1\bigg]\ge \E_{y\sim p(y|\Omega_x)}\bigg[\log\frac{p(y|\Omega_x)}{p(y)}\bigg]\ge 0
\end{aligned}
\end{equation*}

Both equalities hold iff $p(y|\Omega_x)\equiv p(y)$, at which the denominator of Eq. (\ref{eq:suff_3_app_3}) is equal to 0 and the expression inside the infimum diverge, which will not contribute to the infimum. Except this scenario, the denominator is greater than 0. Substituting into Eq. (\ref{eq:suff_3_app_3}), we have that a sufficient condition for $(X,Y)$ to be $\IB_\beta$-learnable is

\begin{equation}
\label{eq:suff_3_app_4}
\begin{aligned}
\beta>\inf_{\Omega_x\subset\X}\frac{\frac{p(\Omega_x)}{p(\Omega_x)^2}-1}{\E_{y\sim p(y|\Omega_x)}\left[\frac{p(y|\Omega_x)}{p(y)}-1\right]}
\end{aligned}
\end{equation}

Since $\Omega_x$ is a subset of $\X$, by the definition of $h(x)$ in Eq. (\ref{eq:suff_3_app_2}), $h(x)$ is not a constant in the entire $\X$. Hence the numerator of Eq. (\ref{eq:suff_3_app_4}) is positive. Since its denominator is also positive, we can then neglect the ``$>0$", and obtain the condition in Theorem \ref{thm:suff_3}.

Since the $h(x)$ used in this theorem is a subset of the $h(x)$ used in Theorem \ref{thm:suff_2}, the infimum for Eq. (\ref{eq:suff_3}) is greater than or equal to the infimum in Eq. (\ref{eq:suff_2}). Therefore, according to the second statement of Theorem \ref{thm:suff_2}, we have that the $\left(\inf_{\Omega_x\subset\X}\beta_0(\Omega_x)\right)^{-1}$ is also a lower bound of the slope for the Pareto frontier of $I(Y;Z)$ vs. $I(X;Z)$ curve.

Now we prove that the condition Eq. (\ref{eq:suff_3}) is invariant to invertible mappings of $X$. In fact, if $X'=g(X)$ is a uniquely invertible map (if $X$ is continuous, $g$ is additionally required to be continuous), let $\X'=\{g(x)|x\in\Omega_x\}$, and denote $g(\Omega_x)\equiv\{g(x)|x\in\Omega_x\}$ for any $\Omega_x\subset \X$, we have $p(g(\Omega_x))=p(\Omega_x)$, and $p(y|g(\Omega_x))=p(y|\Omega_x)$.  Then for dataset $(X,Y)$, let $\Omega_x'=g(\Omega_x)$, we have
\begin{equation}
\begin{aligned}
&\frac{\frac{1}{p(\Omega_x')} - 1}{\E_{y \sim p(y|\Omega_x')} \bigg[ \frac{p(y|\Omega_x')}{p(y)} - 1 \bigg]}=\frac{\frac{1}{p(\Omega_x)} - 1}{\E_{y \sim p(y|\Omega_x)} \bigg[ \frac{p(y|\Omega_x)}{p(y)} - 1 \bigg]}
\end{aligned}
\end{equation}

Additionally we have $\X'=g(\X)$. Then

\begin{equation}
\begin{aligned}
\label{eq:suf_2_app_3}
\inf_{\Omega_x'\subset \X'}\frac{\frac{1}{p(\Omega_x')} - 1}{\E_{y \sim p(y|\Omega_x')} \bigg[ \frac{p(y|\Omega_x')}{p(y)} - 1 \bigg]}=\inf_{\Omega_x\subset \X}\frac{\frac{1}{p(\Omega_x)} - 1}{\E_{y \sim p(y|\Omega_x)} \bigg[ \frac{p(y|\Omega_x)}{p(y)} - 1 \bigg]}
\end{aligned}
\end{equation}

For dataset $(X',Y)=(g(X),Y)$, applying Theorem \ref{thm:suff_3} we have that a sufficient condition for it to be $\IB_\beta$-learnable is 

\begin{equation}
\begin{aligned}
\beta>\inf_{\Omega_x'\subset \X'}\frac{\frac{1}{p(\Omega_x')} - 1}{\E_{y \sim p(y|\Omega_x')} \bigg[ \frac{p(y|\Omega_x')}{p(y)} - 1 \bigg]}=\inf_{\Omega_x\subset \X}\frac{\frac{1}{p(\Omega_x)} - 1}{\E_{y \sim p(y|\Omega_x)} \bigg[ \frac{p(y|\Omega_x)}{p(y)} - 1 \bigg]}
\end{aligned}
\end{equation}

where the equality is due to Eq. (\ref{eq:suf_2_app_3}). Comparing with the condition for $\IB_\beta$-learnability for $(X,Y)$ (Eq. (\ref{eq:suff_3})), we see that they are the same. Therefore, the condition given by Theorem \ref{thm:suff_3} is invariant to invertible mapping of $X$.

\end{proof}

\subsection{Proof of Corollary \ref{corollary:suff_3_class_conditional} and Corollary \ref{corollary:suff_3_2}}
\label{app:corollaries}

\subsubsection{Proof of Corollary \ref{corollary:suff_3_class_conditional}}

\begin{proof}
We use Theorem \ref{thm:suff_3}. Let $\Omega_x$ contain all elements $x$ whose true class is $y^*$ for some certain $y^*$, and 0 otherwise. Then we obtain a (potentially stronger) sufficient condition. Since the probability $p(y|y^*,x)=p(y|y^*)$ is class-conditional, we have

\begin{equation*}
\begin{aligned}
&\inf_{\Omega_x\subset\X}\frac{\frac{1}{p(\Omega_x)} - 1}{\E_{y \sim p(y|\Omega_x)} \bigg[ \frac{p(y|\Omega_x)}{p(y)} - 1 \bigg]}\\
=&\inf_{y^*}\frac{\frac{1}{p(y^*)} - 1}{\E_{y \sim p(y|y^*)} \bigg[ \frac{p(y|y^*)}{p(y)} - 1 \bigg]}
\end{aligned}
\end{equation*}

By requiring $\beta>\inf_{y^*}\frac{\frac{1}{p(y^*)} - 1}{\E_{y \sim p(y|y^*)} \big[ \frac{p(y|y^*)}{p(y)} - 1 \big]}$, we obtain a sufficient condition for $\IB_\beta$ learnability.
\end{proof}

\subsubsection{Proof of Corollary \ref{corollary:suff_3_2}}

\begin{proof}
We again use Theorem \ref{thm:suff_3}. Since $Y$ is a deterministic function of $X$, let $Y=f(X)$. By the assumption that $Y$ contains at least one value $y$ such that its probability $p(y)>0$, we let $\Omega_x$ contain only $x$ such that $f(x)=y$. Substituting into Eq. (\ref{eq:suff_3}), we have

\begin{equation*}
\begin{aligned}
&\frac{\frac{1}{p(\Omega_x)} - 1}{\E_{y \sim p(y|\Omega_x)} \bigg[ \frac{p(y|\Omega_x)}{p(y)} - 1 \bigg]}\\
=&\frac{\frac{1}{p(y)} - 1}{\E_{y \sim p(y|\Omega_x)} \bigg[ \frac{1}{p(y)} - 1 \bigg]}\\
=&\frac{\frac{1}{p(y)} - 1}{ \frac{1}{p(y)} - 1 }\\
=&1
\end{aligned}
\end{equation*}
\end{proof}
Therefore, the sufficient condition becomes $\beta>1$.

\subsection{\texorpdfstring{$\beta_0$}, hypercontractivity coefficient, contraction coefficient, \texorpdfstring{$\beta_0[h(x)]$}, and maximum correlation}
\label{app:maximum_corr}

In this section, we prove the relations between the IB-Learnability threshold $\beta_0$, the hypercontractivity coefficient $\xi(X;Y)$, the contraction coefficient $\eta_\text{KL}(p(y|x),p(x))$, $\beta_0[h(x)]$ in Eq. (\ref{eq:suff_2}), and maximum correlation $\rho_m(X,Y)$, as follows:

\begin{align}
\frac{1}{\beta_0} = \xi(X;Y)=\eta_\text{KL}(p(y|x),p(x))\ge \sup_{h(x)}\frac{1}{\beta_0[h(x)]} = \rho_m^2(X;Y)
\end{align}

\begin{proof}
The hypercontractivity coefficient $\xi$ is defined as \citep{anantharam2013maximal}:
$$\xi(X;Y)\equiv\sup_{Z-X-Y}\frac{I(Y;Z)}{I(X;Z)}$$

By our definition of IB-learnability, ($X$, $Y$) is IB-Learnable iff there exists $Z$ obeying the Markov chain $Z-X-Y$, such that

$$I(X;Z)-\beta\cdot I(Y;Z)<0=IB_\beta(X,Y;Z)|_{p(z|x)=p(z)}$$

Or equivalently there exists $Z$ obeying the Markov chain $Z-X-Y$ such that 
\begin{equation}
\label{eq:relation_11}
0<\frac{1}{\beta}<\frac{I(Y;Z)}{I(X;Z)}
\end{equation}

By Theorem \ref{thm:beta_monotonic}, the IB-Learnability region for $\beta$ is $(\beta_0, +\infty)$, or equivalently the IB-Learnability region for $1/\beta$ is
\begin{equation}
\label{eq:relation_12}
0<\frac{1}{\beta}<\frac{1}{\beta_0}
\end{equation}

Comparing Eq. (\ref{eq:relation_11}) and Eq. (\ref{eq:relation_12}), we have that

\begin{equation}
\label{eq:relation_13}
\frac{1}{\beta_0} = \sup_{Z-X-Y}\frac{I(Y;Z)}{I(X;Z)}=\xi(X;Y)
\end{equation}

In \citet{anantharam2013maximal}, the authors prove that 

\begin{equation}
\xi(X;Y) =\eta_\text{KL}(p(y|x),p(x))
\end{equation}

where the contraction coefficient $\eta_\text{KL}(p(y|x),p(x))$ is defined as

\begin{equation*}
\eta_\text{KL}(p(y|x),p(x))=\sup_{r(x)\neq p(x)}\frac{\mathbb{D}_\text{KL}(r(y)||p(y))}{\mathbb{D}_\text{KL}(r(x)||p(x))}
\end{equation*}

where $p(y)=\mathbb{E}_{x\sim p(x)}[p(y|x)]$ and $r(y)=\mathbb{E}_{x\sim r(x)}[p(y|x)]$.
Treating $p(y|x)$ as a channel, the contraction coefficient measures how much the two distributions $r(x)$ and $p(x)$ becomes ``nearer" (as measured by the KL-divergence) after passing through the channel.

In \citet{anantharam2013maximal}, the authors also provide a counterexample to an earlier result by \citet{erkip1998efficiency} that incorrectly proved $\xi(X;Y)=\rho_m^2(X;Y)$.
In the specific counterexample \citet{anantharam2013maximal} design, $\xi(X;Y)>\rho_m^2(X;Y)$.

The maximum correlation is defined as
$\rho_m(X;Y)\equiv\max_{f,g} \mathbb{E}[f(X)g(Y)]$ where $f(X)$ and $g(Y)$ are real-valued random variables such that $\mathbb{E}[f(X)]=\mathbb{E}[g(Y)]=0$ and $\mathbb{E}[f^2(X)]=\mathbb{E}[g^2(Y)]=1$ \citep{hirschfeld1935connection,gebelein1941statistische}.

Now we prove $\xi(X;Y)\ge\rho_m^2(X;Y)$, based on Theorem \ref{thm:suff_2}. To see this, we use the alternate characterization of $\rho_m(X;Y)$ by \citet{renyi1959measures}:

\begin{equation}
\label{eq:renyi}
\rho_m^2(X;Y)=\max_{f(X):\mathbb{E}[f(X)]=0,\mathbb{E}[f^2(X)]=1}{\mathbb{E}[\left(\mathbb{E}[f(X)|Y]\right)^2]}
\end{equation}

Denoting $\overline{h}=\mathbb{E}_{p(x)}[h(x)]$, we can transform  $\beta_0[h(x)]$ in Eq. (\ref{eq:suff_2}) as follows:

\begin{equation*}
\begin{aligned}
\beta_0[h(x)]&=\frac{\E_{x \sim p(x)} [h(x)^2] - \left(\E_{x\sim p(x)} [h(x)]\right)^2}{\E_{y \sim p(y)}\left[\left(\E_{x \sim p(x|y)} [h(x)]\right)^2\right] - \left(\E_{x\sim p(x)} [h(x)]\right)^2}\\
&=\frac{\E_{x \sim p(x)} [h(x)^2] - \overline{h}^2}{\E_{y \sim p(y)}\left[\left(\E_{x \sim p(x|y)} [h(x)]\right)^2\right] - \overline{h}^2}\\
&=\frac{\E_{x \sim p(x)} [(h(x)-\overline{h})^2]}{\E_{y \sim p(y)}\left[\left(\E_{x \sim p(x|y)} [h(x)-\overline{h}]\right)^2\right]}\\
&=\frac{1}{\E_{y \sim p(y)}\left[\left(\E_{x \sim p(x|y)} [f(x)]\right)^2\right]}\\
&=\frac{1}{\mathbb{E}[\left(\mathbb{E}[f(X)|Y]\right)^2]}
\end{aligned}
\end{equation*}

where we denote $f(x)=\frac{h(x)-\overline{h}}{\left(\E_{x \sim p(x)} [(h(x)-\overline{h})^2]\right)^{1/2}}$, so that $\mathbb{E}[f(X)]=0$ and $\mathbb{E}[f^2(X)]=1$.

Combined with Eq. (\ref{eq:renyi}), we have

\begin{equation}
\label{eq:relations_14}
\sup_{h(x)}\frac{1}{\beta_0[h(x)]}=\rho_m^2(X;Y) 
\end{equation}

Our Theorem \ref{thm:suff_2} states that

\begin{equation}
\label{eq:relations_15}
\sup_{h(x)}\frac{1}{\beta_0[h(x)]}\leq\frac{1}{\beta_0}
\end{equation}

Combining Eqs. (\ref{eq:relation_12}), (\ref{eq:relations_14}) and Eq. (\ref{eq:relations_15}), we have

\begin{equation}
\label{eq:relations_16}
\rho_m^2(X;Y)\leq\xi(X;Y)
\end{equation}

In summary, the relations among the quantities are:

\begin{equation}
\label{eq:relations_summary}
\frac{1}{\beta_0}=\xi(X;Y)=\eta_\text{KL}(p(y|x),p(x))\ge
\sup_{h(x)}\frac{1}{\beta_0[h(x)]}=\rho_m^2(X;Y)
\end{equation}
\end{proof}

\subsection{Experiment Details}
\label{app:experiment}

We use the Variational Information Bottleneck (VIB) objective from \cite{alemi2016deep}.
For the synthetic experiment, the latent $Z$ has dimension of 2.
The encoder is a neural net with 2 hidden layers, each of which has 128 neurons with ReLU activation. 
The last layer has linear activation and 4 output neurons; the first two parameterize the mean of a Gaussian and the last two parameterize the log variance.
The decoder is a neural net with 1 hidden layer with 128 neurons and ReLU activation. 
Its last layer has linear activation and outputs the logit for the class labels. 
It uses a mixture of Gaussian prior with 500 components (for the experiment with class overlap, 256 components), each of which is a 2D Gaussian with learnable mean and log variance, and the weights for the components are also learnable. 
For the MNIST experiment, the architecture is mostly the same, except the following: (1) for $Z$, we let it have dimension of 256. 
For the prior, we use standard Gaussian with diagonal covariance matrix. 

For all experiments, we use Adam (\cite{kingma2014adam}) optimizer with default parameters. 
We do not add any explicit regularization. 
We use learning rate of $10^{-4}$ and have a learning rate decay of $\frac{1}{1+0.01 \times \text{epoch}}$. 
We train in total $2000$ epochs with mini-batch size of 500. 

For estimation of the observed $\beta_{0}$ in Fig. \ref{fig:gauss_noise_beta}, in the $I(X;Z)$ vs. $\beta_i$ curve ($\beta_i$ denotes the $i^{\text{th}}$ $\beta$), we take the mean and standard deviation of $I(X;Z)$ for the lowest 5 $\beta_i$ values, denoting as $\mu_\beta$, $\sigma_\beta$ ($I(Y;Z)$ has similar behavior, but since we are minimizing $I(X;Z)-\beta \cdot I(Y;Z)$, the onset of nonzero $I(X;Z)$ is less prone to noise).
When $I(X;Z)$ is greater than $\mu_\beta$ + 3$\sigma_\beta$, we regard it as learning a non-trivial representation, and take the average of $\beta_i$ and $\beta_{i-1}$ as the experimentally estimated onset of learning. 
We also inspect manually and confirm that it is consistent with human intuition.

For estimating $\beta_{0}$ using Alg. \ref{alg:estimating_beta},  at step 6 we use the following discrete search algorithm.
We fix $i_\text{left}=1$ and gradually narrow down the range $[a,b]$ of $i_\text{right}$, starting from $[1,N]$.
At each iteration, we set a tentative new range $[a',b']$, where $a'=0.8a+0.2b$, $b'=0.2a+0.8b$, and calculate $\tilde{\beta}_{0,a'}=\textbf{Get}\boldsymbol{\beta}(P_{y|x},p_y,\Omega_{a'})$, $\tilde{\beta}_{0,b'}=\textbf{Get}\boldsymbol{\beta}(P_{y|x},p_y,\Omega_{b'})$ where $\Omega_{a'} =\{1,2,...a'\}$ and $\Omega_{b'} =\{1,2,...b'\}$.
If $\tilde{\beta}_{0,a'}<\tilde{\beta}_{0,a}$, let $a\gets a'$. If $\tilde{\beta}_{0,b'}<\tilde{\beta}_{0,b}$, let $b\gets b'$.
In other words, we narrow down the range of $i_\text{right}$ if we find that the $\Omega$ given by the left or right boundary gives a lower $\tilde{\beta}_0$ value.
The process stops when both $\tilde{\beta}_{0,a'}$ and $\tilde{\beta}_{0,b'}$ stop improving (which we find always happens when $b'=a'+1$), and we return the smaller of the final $\tilde{\beta}_{0,a'}$ and $\tilde{\beta}_{0,b'}$ as $\tilde{\beta}_0$.

For estimation of $p(y|x)$ for (2$'$) Alg. \ref{alg:estimating_beta} and (3$'$) $\hat{\eta}_{\text{KL}}$ for both synthetic and MNIST experiments, we use a 3-layer neuron net where each hidden layer has 128 neurons and ReLU activation. The last layer has linear activation. The objective is cross-entropy loss. We use Adam \citep{kingma2014adam} optimizer with a learning rate of $10^{-4}$, and train for 100 epochs (after which the validation loss does not go down).

For estimating $\beta_0$ via (3$'$) $\hat{\eta}_\text{KL}$ by the algorithm in \citep{kim2017discovering}, we use the code from the GitHub repository provided by the paper\footnote{%
At \href{https://github.com/wgao9/hypercontractivity}{https://github.com/wgao9/hypercontractivity}.
}, using the same $p(y|x)$ employed for (2$'$) Alg. \ref{alg:estimating_beta}. Since our datasets are classification tasks, we use $A_{ij}=p(y_j|x_i)/p(y_j)$ instead of the kernel density for estimating matrix $A$; we take the maximum of 10 runs as estimation of $\mu$.

\begin{table}[th]
\begin{center}
\caption{
Class confusion matrix used in CIFAR10 experiments.
The value in row $i$, column $j$ means for class $i$, the probability of labeling it as class $j$. The mean confusion across the classes is 20\%.
}
\label{tab:cifar_confusion}
\vskip 0.1in
\setlength{\tabcolsep}{4pt}  
\resizebox{1\linewidth}{!}{%
\begin{tabular}{r | c c c c c c c c c c }
\small
& Plane & Auto. & Bird & Cat & Deer & Dog & Frog & Horse & Ship & Truck \\ [0.2ex]
\hline
\hline\noalign{\smallskip}
Plane &
0.82232 & 0.00238 & 0.021   & 0.00069 & 0.00108 & 0       & 0.00017 & 0.00019 & 0.1473  & 0.00489 \\ [0.2ex]
Auto. &
0.00233 & 0.83419 & 0.00009 & 0.00011 & 0       & 0.00001 & 0.00002 & 0       & 0.00946 & 0.15379 \\ [0.2ex]
Bird &
0.03139 & 0.00026 & 0.76082 & 0.0095  & 0.07764 & 0.01389 & 0.1031  & 0.00309 & 0.00031 & 0       \\ [0.2ex]
Cat &
0.00096 & 0.0001  & 0.00273 & 0.69325 & 0.00557 & 0.28067 & 0.01471 & 0.00191 & 0.00002 & 0.0001  \\ [0.2ex]
Deer &
0.00199 & 0       & 0.03866 & 0.00542 & 0.83435 & 0.01273 & 0.02567 & 0.08066 & 0.00052 & 0.00001 \\ [0.2ex]
Dog &
0       & 0.00004 & 0.00391 & 0.2498  & 0.00531 & 0.73191 & 0.00477 & 0.00423 & 0.00001 & 0       \\ [0.2ex]
Frog &
0.00067 & 0.00008 & 0.06303 & 0.05025 & 0.0337  & 0.00842 & 0.8433  & 0       & 0.00054 & 0       \\ [0.2ex]
Horse &
0.00157 & 0.00006 & 0.00649 & 0.00295 & 0.13058 & 0.02287 & 0       & 0.83328 & 0.00023 & 0.00196 \\ [0.2ex]
Ship &
0.1288  & 0.01668 & 0.00029 & 0.00002 & 0.00164 & 0.00006 & 0.00027 & 0.00017 & 0.83385 & 0.01822 \\ [0.2ex]
Truck &
0.01007 & 0.15107 & 0       & 0.00015 & 0.00001 & 0.00001 & 0       & 0.00048 & 0.02549 & 0.81273 \\ [0.2ex]
\hline
\end{tabular}
}
\end{center}
\end{table}

\subsubsection{CIFAR10 Details}
\label{app:cifar_details}

We trained a deterministic 28x10 wide resnet~\citep{resnet,wideresnet}, using the open source implementation from~\citet{autoaugment}.
However, we extended the final 10 dimensional logits of that model through another 3 layer MLP classifier, in order to keep the inference network architecture identical between this model and the VIB models we describe below.
During training, we dynamically added label noise according to the class confusion matrix in Tab. \ref{tab:cifar_confusion}.
The mean label noise averaged across the 10 classes is 20\%.
After that model had converged, we used it to estimate $\beta_0$ with Alg.~\ref{alg:estimating_beta}.
Even with 20\% label noise, $\beta_0$ was estimated to be 1.0483.

We then trained 73 different VIB models using the same 28x10 wide resnet architecture for the encoder, parameterizing the mean of a 10-dimensional unit variance Gaussian.
Samples from the encoder distribution were fed to the same 3 layer MLP classifier architecture used in the deterministic model.
The marginal distributions were mixtures of 500 fully covariate 10-dimensional Gaussians, all parameters of which are trained.
The VIB models had $\beta$ ranging from 1.02 to 2.0 by steps of 0.02, plus an extra set ranging from 1.04 to 1.06 by steps of 0.001 to ensure we captured the empirical $\beta_0$ with high precision.

However, this particular VIB architecture does not start learning until $\beta > 2.5$, so none of these models would train as described.\footnote{%
A given architecture trained using maximum likelihood and with no stochastic layers will tend to have higher effective capacity than the same architecture with a stochastic layer that has a fixed but non-trivial variance, even though those two architectures have exactly the same number of learnable parameters.
}
Instead, we started them all at $\beta = 100$, and annealed $\beta$ down to the corresponding target over 10,000 training gradient steps.
The models continued to train for another 200,000 gradient steps after that.
In all cases, the models converged to essentially their final accuracy within 20,000 additional gradient steps after annealing was completed.
They were stable over the remaining $\sim 180,000$ gradient steps.

\section{Appendix for Chapter \ref{chap4:IB_phase_transition}}

\subsection{Calculus of variations at any order of  \titlemath{$\IB_\beta[p(z|x)]$}}
\label{app:expand_to_order_n}

Here we prove the Lemma \ref{lemma:expand_to_order_n}, which will be crucial in the lemmas and theorems in this paper that follows.

\begin{lemma}
\label{lemma:expand_to_order_n}
For a relative perturbation function $r(z|x)\in\QQ$ for a $p(z|x)$, where $r(z|x)$ satisfies $\E_{z\sim p(z|x)}[r(z|x)]=0$, we have that the IB objective can be expanded as

\begin{align}
\label{eq_app:expand_to_n_order}
&\IBB_\beta[p(z|x)(1+\epsilon \cdot r(z|x))]\nonumber\\
=&\IBB_\beta[p(z|x)]+\epsilon\cdot\left(\E_{x,z\sim p(x,z)}\left[r(z|x)\logg\frac{p(z|x)}{p(z)}\right]-\beta\cdot \E_{y,z\sim p(y,z)}\left[r(z|y)\logg\frac{p(z|y)}{p(z)}\right]\right)\nonumber\\
&+\sum_{n=2}^{\infty}\frac{(-1)^n \epsilon^n}{n(n-1)}\left\{\left(\E[r^n(z|x)]-\E[r^n(z)]\right)-\beta\cdot\left(\E[r^n(z|y)]-\E[r^n(z)]\right)\right\}\nonumber\\
=&\IBB_\beta[p(z|x)]+\epsilon\cdot\left(\E_{x,z\sim p(x,z)}\left[r(z|x)\logg\frac{p(z|x)}{p(z)}\right]-\beta\cdot \E_{y,z\sim p(y,z)}\left[r(z|y)\logg\frac{p(z|y)}{p(z)}\right]\right)\nonumber\\
&+\frac{\epsilon^2}{1\cdot2}\left\{\left(\E[r^2(z|x)]-\E[r^2(z)]\right)-\beta\cdot\left(\E[r^2(z|y)]-\E[r^2(z)]\right)\right\}\nonumber\\
&-\frac{\epsilon^3}{2\cdot3}\left\{\left(\E[r^3(z|x)]-\E[r^3(z)]\right)-\beta\cdot\left(\E[r^3(z|y)]-\E[r^3(z)]\right)\right\}\nonumber\\
&+\frac{\epsilon^4}{3\cdot4}\left\{\left(\E[r^4(z|x)]-\E[r^4(z)]\right)-\beta\cdot\left(\E[r^4(z|y)]-\E[r^4(z)]\right)\right\}\nonumber\\
&-...\nonumber\\
\end{align}

where $r(z|y)=\E_{x\sim p(x|y,z)}[r(z|x)]$ and $r(z)=\E_{x\sim p(x|z)}[r(z|x)]$. The expectations in the equations are all w.r.t. all variables. For example $\E[r^2(z|x)]=\E_{x,z\sim p(x,z)}[r^2(z|x)]$.
\end{lemma}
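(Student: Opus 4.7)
The plan is to expand both $I'(X;Z) := I(X;Z)|_{p'(z|x)}$ and $I'(Y;Z) := I(Y;Z)|_{p'(z|x)}$ separately as power series in $\epsilon$ with $p'(z|x) = p(z|x)(1+\epsilon r(z|x))$, and then combine them. The boundedness of $r$ (by $M$, per the definition of $\QQ$) lets me choose $|\epsilon| < 1/M$ so every logarithm series converges absolutely and I can exchange sums with integrals freely.

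First I would compute the perturbed marginals. Integrating out gives $p'(z) = p(z)\bigl(1+\epsilon\,r(z)\bigr)$ with $r(z) := \E_{x\sim p(x|z)}[r(z|x)]$, and, crucially using the Markov chain $Z-X-Y$, $p'(y,z) = p(y,z)\bigl(1+\epsilon\,r(z|y)\bigr)$ with $r(z|y) := \E_{x\sim p(x|y,z)}[r(z|x)]$; while $p'(y)=p(y)$. From $\E_{z\sim p(z|x)}[r(z|x)]=0$ I also record the two moment vanishings I will use repeatedly: $\E_{x,z\sim p(x,z)}[r(z|x)] = 0$, and by the tower property $\E_{z\sim p(z)}[r(z)] = 0 = \E_{y,z\sim p(y,z)}[r(z|y)]$. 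I will also use the identity $\E_{x\sim p(x|z)}[r(z|x)\,g(z)] = g(z)\,r(z)$ for any function $g$, and its analog for $y$, which gives $\E[r(z|x)\,r^n(z)] = \E[r^{n+1}(z)]$ and similarly for $y$.

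Next I would substitute into $I'(X;Z) = \iint p'(x,z)\log\tfrac{p'(z|x)}{p'(z)}\,dx\,dz$, using $\log(1+u)=\sum_{n\ge 1}\tfrac{(-1)^{n+1}}{n}u^n$ to write
\[
\log\tfrac{p'(z|x)}{p'(z)} = \log\tfrac{p(z|x)}{p(z)} + \sum_{n\ge 1}\tfrac{(-1)^{n+1}}{n}\epsilon^n\bigl(r^n(z|x)-r^n(z)\bigr).
\]
Multiplying by $p(x,z)(1+\epsilon r(z|x))$ and integrating, I get four groups: (i) $I(X;Z)$; (ii) the linear term $\epsilon\,\E[r(z|x)\log\tfrac{p(z|x)}{p(z)}]$; (iii) a pure-log series $\sum_{n\ge 1}\tfrac{(-1)^{n+1}}{n}\epsilon^n(\E[r^n(z|x)]-\E[r^n(z)])$ whose $n=1$ term vanishes by the moment identities above; and (iv) a cross series $\sum_{n\ge 1}\tfrac{(-1)^{n+1}}{n}\epsilon^{n+1}(\E[r^{n+1}(z|x)]-\E[r^{n+1}(z)])$, where the $r(z|x)r^n(z)$ integral collapses to $\E[r^{n+1}(z)]$ via the tower identity. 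Shifting the index in (iv) by $m=n+1$ and combining with (iii) produces, for each $n\ge 2$, a coefficient
\[
\tfrac{(-1)^{n+1}}{n}+\tfrac{(-1)^{n}}{n-1} \;=\; \tfrac{(-1)^n}{n(n-1)},
\]
which is the unique non-trivial algebraic step. An identical computation for $I'(Y;Z)$, with $x\mapsto y$ and relying on $\E_{y\sim p(y|z)}[r(z|y)] = r(z)$ (which itself follows from the Markov chain and the tower property), yields the same formula with $r(z|x)$ replaced by $r(z|y)$. Finally, taking $I'(X;Z)-\beta\,I'(Y;Z)$ gives the claimed expansion.

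The main obstacle, although it is more bookkeeping than conceptual, is correctly executing the two index shifts and the coefficient merge between the pure-log series and the cross series, while simultaneously checking that all $n=1$ contributions cancel so that the linear term is precisely the one displayed in the lemma. A secondary technical point is the justification of exchanging $\sum$ and $\iint$: the uniform bound $|r|\le M$ makes $|r(z|x)|,|r(z)|,|r(z|y)|\le M$, so for $|\epsilon|<1/M$ Fubini/Tonelli applies and the rearrangement is legitimate. No other step is more than a direct calculation.
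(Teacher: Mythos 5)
Your proposal is correct and follows essentially the same route as the paper's proof: compute the induced perturbations $p'(z)=p(z)(1+\epsilon r(z))$ and $p'(z|y)=p(z|y)(1+\epsilon r(z|y))$ (the latter relying on the Markov chain), expand the logarithms, and merge the pure-log series with the cross series coming from the prefactor $(1+\epsilon r(\cdot))$ via the identities $\E[r(z|x)r^{n-1}(z)]=\E[r^n(z)]$ and $\E[r(z|y)r^{n-1}(z)]=\E[r^n(z)]$, giving the coefficient $\tfrac{(-1)^{n-1}}{n}+\tfrac{(-1)^n}{n-1}=\tfrac{(-1)^n}{n(n-1)}$ for $n\ge 2$. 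Your explicit convergence check via $|r|\le M$ and $|\epsilon|<1/M$ is a small refinement the paper leaves implicit but otherwise the two arguments are the same calculation organized slightly differently.
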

\begin{proof}
Suppose that we perform a relative perturbation $r(z|x)$ on $p(z|x)$ such that the perturbed conditional probability is $p'(z|x)=p(z|x)\left(1+\epsilon\cdot r(z|x)\right)$, then we have
\begin{equation*}
\begin{aligned}
p'(z)&=\int p(x)p'(z|x)dx=\int dx p(x)p(z|x)\left(1+\epsilon\cdot r(z|x)\right)=p(z)+\e\cdot\int dx p(x) p(z|x)r(z|x)
\end{aligned}
\end{equation*}

Therefore, we can denote the corresponding relative perturbation $r(z)$ on $p(z)$ as 
\begin{equation*}
\begin{aligned}
r(z)\equiv\frac{1}{\epsilon}\frac{p'(z)-p(z)}{p(z)}=\frac{1}{p(z)}\int dx p(x)p(z|x)r(z|x)=\E_{x\sim p(x|z)}[r(z|x)]
\end{aligned}
\end{equation*}

Similarly, we have
\begin{equation*}
\begin{aligned}
p'(z|y)=\frac{p'(y,z)}{p(y)}=\frac{1}{p(y)}\int dx p(x,y)p(z|x)\left(1+\epsilon\cdot r(z|x)\right)=p(z|y)+\epsilon\cdot \frac{1}{p(y)}\int dx p(x,y)p(z|x)r(z|x)
\end{aligned}
\end{equation*}

And we can denote the corresponding relative perturbation $r(z|y)$ on $p(z|y)$ as

\begin{equation*}
\begin{aligned}
r(z|y)\equiv \frac{1}{\e}\frac{p'(z|y)-p(z|y)}{p(z|y)}=\frac{1}{p(z|y)p(y)}\int dx p(x,y)p(z|x)r(z|x)=\E_{x\sim p(x|y,z)}[r(z|x)]
\end{aligned}
\end{equation*}

Since
\begin{equation*}
\begin{aligned}
\IB_\beta[p(z|x)]=I(X;Z)-\beta\cdot I(Y;Z)=\int dxdz p(x,z)\log{\frac{p(z|x)}{p(z)}}-\beta\cdot\int dydz p(y,z)\log{\frac{p(z
|y)}{p(z)}}
\end{aligned}
\end{equation*}

We have
\begin{equation*}
\begin{aligned}
&\IB_\beta[p'(z|x)]=\IB_\beta[p(z|x)(1+\epsilon\cdot r(z|x))]\\
=&\int dxdz p(x)p'(z|x)\log{\frac{p'(z|x)}{p'(z)}}-\beta\cdot\int dydz p(y)p'(z|y)\log{\frac{p'(z|y)}{p'(z)}}\\
=&\int dxdz p(x)p(z|x)(1+\e\cdot r(z|x))\log{\frac{p(z|x)(1+\e\cdot r(z|x))}{p(z)(1+\e\cdot r(z))}}\\
&-\beta\cdot\int dydz p(y)p(z|y)(1+\e\cdot r(z|y))\log{\frac{p(z|y)(1+\e\cdot r(z|y))}{p(z)(1+\e\cdot r(z))}}\\
=&\int dxdz p(x)p(z|x)(1+\e\cdot r(z|x))\left[\log{\frac{p(z|x)}{p(z)}}+\log\left(1+\e\cdot r(z|x)\right)-\log\left(1+\e\cdot r(z)\right)\right]\\
&-\beta\cdot\int dydz p(y)p(z|y)(1+\e\cdot r(z|y))\left[\log{\frac{p(z|y)}{p(z)}}+\log\left(1+\e\cdot r(z|y)\right)-\log\left(1+\e\cdot r(z)\right)\right]\\
=&\int dxdz p(x)p(z|x)(1+\e\cdot r(z|x))\left[\log{\frac{p(z|x)}{p(z)}}+\sum_{n=1}^{\infty}(-1)^{n-1}\frac{\e^n}{n}\left(r(z|x)-r(z)\right)\right]\\
&-\beta\cdot\int dydz p(y)p(z|y)(1+\e\cdot r(z|y))\left[\log{\frac{p(z|y)}{p(z)}}+\sum_{n=1}^{\infty}(-1)^{n-1}\frac{\e^n}{n}\left(r(z|y)-r(z)\right)\right]\\
\end{aligned}
\end{equation*}

The $0^\text{th}$-order term is simply $\IB_\beta[p(z|x)]$. The first order term is $$\delta\IB_\beta[p(z|x)]=\epsilon\cdot\left(\E_{x,z\sim p(x,z)}\left[r(z|x)\log\frac{p(z|x)}{p(z)}\right]-\beta\cdot \E_{y,z\sim p(y,z)}\left[r(z|y)\log\frac{p(z|y)}{p(z)}\right]\right)$$

The $n^\text{th}$-order term for $n\ge2$ is
\begin{equation*}
\begin{aligned}
&\delta^n\IB_\beta[p(z|x)]\\
=&(-1)^n\e^n\int dxdz p(x)p(z|x)\left(-\frac{1}{n}\left[r^n(z|x)-r^n(z)\right]+r(z|x)\frac{1}{n-1}\left[r^{n-1}(z|x)-r^n(z)\right]\right)\\
&-\beta\cdot (-1)^n\e^n\int dydz p(y)p(z|y)\left(-\frac{1}{n}\left[r^n(z|y)-r^n(z)\right]+r(z|y)\frac{1}{n-1}\left[r^{n-1}(z|y)-r^n(z)\right]\right)\\
=&\frac{(-1)^n\e^n}{n(n-1)}\left(\E_{x,z\sim p(x,z)}[r^n(z|x)]-n\E_{x,z\sim p(x,z)}[r(z|x)r^{n-1}(z)]-(n-1)\E_{z\sim p(z)}[r^n(z)]\right)\\
&-\beta\cdot\frac{(-1)^n\e^n}{n(n-1)}\left(\E_{y,z\sim p(y,z)}[r^n(z|y)]-n\E_{y,z\sim p(y,z)}[r(z|y)r^{n-1}(z)]-(n-1)\E_{z\sim p(z)}[r^n(z)]\right)\\
=&\frac{(-1)^n \e^n}{n(n-1)}\left\{\left(\E[r^n(z|x)]-\E[r^n(z)]\right)-\beta\cdot\left(\E[r^n(z|y)]-\E[r^n(z)]\right)\right\}
\end{aligned}
\end{equation*}

In the last equality we have used $$\E_{x,z\sim p(x,z)}[r(z|x)r^{n-1}(z)]=\E_{z\sim p(z)}[r^{n-1}(z)\E_{x\sim p(x|z)}[r(z|x)]]=\E_{z\sim p(z)}[r^{n-1}(z)r(z)]=\E_{z\sim p(z)}[r^n(z)]$$

Combining the terms with all orders, we have
\begin{equation*}
\begin{aligned}
&\IB_\beta[p(z|x)(1+\epsilon \cdot r(z|x))]\\
=&\IB_\beta[p(z|x)]+\epsilon\cdot\left(\E_{x,z\sim p(x,z)}\left[r(z|x)\log\frac{p(z|x)}{p(z)}\right]-\beta\cdot \E_{y,z\sim p(y,z)}\left[r(z|y)\log\frac{p(z|y)}{p(z)}\right]\right)\\
&+\sum_{n=2}^{\infty}\frac{(-1)^n \epsilon^n}{n(n-1)}\left\{\left(\E[r^n(z|x)]-\E[r^n(z)]\right)-\beta\cdot\left(\E[r^n(z|y)]-\E[r^n(z)]\right)\right\}
\end{aligned}
\end{equation*}
\end{proof}

As a side note, the KL-divergence between $p'(z|x)=p(z|x)(1+\e\cdot r(z|x))$ and $p(z|x)$ is

\begin{equation*}
\begin{aligned}
\KL\left(p'(z|x)||p(z|x)\right)&=
\int dz p(z|x)(1+\e\cdot r(z|x))\log\frac{p(z|x)(1+\e\cdot r(z|x))}{p(z|x)}\\
&=\int dz p(z|x)(1+\e\cdot r(z|x))\left(\e\cdot r(z|x)-\frac{\e^2}{2}\cdot r^2(z|x))+O(\e^3)\right)\\
&=\e\cdot\int dz p(z|x) r(z|x)+\frac{\e^2}{2}\int dz p(z|x)r^2(z|x)+O(\e^3)\\
&=\frac{\e^2}{2}\E_{z\sim p(z|x)}[r^2(z|x)]+O(\e^3)
\end{aligned}
\end{equation*}

Therefore, to the second order, we have
\beq{eq_app:KL_r}
\E_{x\sim p(x)}\left[\KL\left(p'(z|x)||p(z|x)\right)\right]=\frac{\e^2}{2}\E[r^2(z|x)]
\eeq

Similarly, we have $\E_{x\sim p(x)}\left[\KL\left(p(z|x)||p'(z|x)\right)\right]=\frac{\e^2}{2}\E[r^2(z|x)]$ up to the second order. Using similar procedure, we have up to the second-order,

\begin{equation*}
\begin{aligned}
&\E_{y\sim p(y)}\left[\KL\left(p'(z|y)||p(z|y)\right)\right]=\E_{y\sim p(y)}\left[\KL\left(p(z|y)||p'(z|y)\right)\right]=\frac{\e^2}{2}\E[r^2(z|y)]\\
&\KL\left(p'(z)||p(z)\right)=\KL\left(p(z)||p'(z)\right)=\frac{\e^2}{2}\E[r^2(z)]\\
\end{aligned}
\end{equation*}

\subsection{Proof of Lemma \ref{lemma:second_order_variation}}
\label{app:lemma_G}
\begin{proof}
From Lemma \ref{lemma:expand_to_order_n}, we have
\beq{eq_app:IB_second_order}
\delta^2\IB_\beta[p(z|x)]=\frac{\epsilon^2}{2}\left\{\left(\E[r^2(z|x)]-\E[r^2(z)]\right)-\beta\cdot\left(\E[r^2(z|y)]-\E[r^2(z)]\right)\right\}
\eeq

The condition of 
\beq{eq:condition_second_order_app}
\forall r(z|x)\in\QQ, \delta^2\IB_\beta[p(z|x)]\ge0
\eeq
is equivalent to 
\beq{eq:condition_second_order_app_2}
\forall r(z|x)\in\QQ, \beta\cdot\left(\E[r^2(z|y)]-\E[r^2(z)]\right)\le\E[r^2(z|x)]-\E[r^2(z)]
\eeq
Using Jensen's inequality and the convexity of the square function, we have

\begin{equation*}
\begin{aligned}
\E[r^2(z|y)]&=\E_{y,z\sim p(y,z)}\left[\left(\E_{x\sim p(x|y,z)}[r(z|x)]\right)^2\right]\\
&=\E_{z\sim p(z)}\left[\E_{y\sim p(y|z)}\left[\left(\E_{x\sim p(x|y,z)}[r(z|x)]\right)^2\right]\right]\\
&\ge \E_{z\sim p(z)}\left[\left(\E_{y\sim p(y|z)}\left[\E_{x\sim p(x|y,z)}[r(z|x)]\right]\right)^2\right]\\
&=\E_{z\sim p(z)}\left[\left(\E_{x\sim p(x|z)}[r(z|x)]\right)^2\right]\\
&=\E[r^2(z)]
\end{aligned}
\end{equation*}

The equality holds iff $r(z|y)=\E_{x\sim p(x|y,z)}[r(z|x)]$ is constant w.r.t. $y$, for any $z$.

Using Jensen's inequality on $\E[r^2(z)]$, we have
$\E[r^2(z)]=\E_{z\sim p(z)}\left[\left(\E_{x\sim p(x|z)}[r(z|x)]\right)^2\right]\le\E_{z\sim p(z)}\left[\E_{x\sim p(x|z)}[r^2(z|x)]\right]=\E[r^2(z|x)]$, where the equality holds iff $r(z|x)$ is constant w.r.t. $x$ for any $z$. 

When $\E[r^2(z|y)]-\E[r^2(z)]>0$, we have that the condition Eq. (\ref{eq:condition_second_order_app_2}) is equivalent to $\forall r(z|x)\in\QQ$, $\beta\le\frac{\E[r^2(z|x)]-\E[r^2(z)]}{\E[r^2(z|y)]-\E[r^2(z)]}$, i.e.

\beq{eq_app:condition_second_order_app_3}
\beta\le G[p(z|x)]\equiv\inf_{r(z|x)\in\QQ}\frac{\E[r^2(z|x)]-\E[r^2(z)]}{\E[r^2(z|y)]-\E[r^2(z)]}
\eeq
where $r(z|y)=\E_{x\sim p(x|y,z)}[r(z|x)]$ and $r(z)=\E_{x\sim p(x|z)}[r(z|x)]$.

If $\E[r^2(z|y)]-\E[r^2(z)]=0$, substituting into Eq. (\ref{eq:condition_second_order_app_2}), we have

\beq{eq_app:condition_second_order_app_4}
\beta\cdot0\le\E[r^2(z|x)]-\E[r^2(z)]
\eeq

which is always true due to that $\E[r^2(z|x)]\ge\E[r^2(z)]$, and will be a looser condition than Eq. (\ref{eq_app:condition_second_order_app_3}) above. Above all, we have Eq. (\ref{eq_app:condition_second_order_app_3}).

\end{proof}

\paragraph{Empirical estimate of \texorpdfstring{$G[p(z|x)]$}:} 

To empirically estimate $G[p(z|x)]$ from a minibatch of $\{(x_i, y_i)\},i=1,2,...N$ and the encoder $p(z|x)$, we can make the following Monte Carlo importance sampling estimation, where we use the samples $\{x_j\}\sim p(x)$ and also get samples of $\{z_i\}\sim p(z)=p(x)p(z|x)$, and have:

\begin{equation*}
\begin{aligned}
\E_{x,z\sim p(x,z)}[r^2(z|x)]=&\int dxdzp(x)p(z) \frac{p(x,z)}{p(x)p(z)}r^2(z|x)\\
\simeq&\frac{1}{N^2}\sum_{i=1}^{N}\sum_{j=1}^{N} \frac{p(x_j,z_i)}{p(x_j)p(z_i)}r^2(z_i|x_j)\\
\E_{z\sim p(z)}[r^2(z)]=&\E_{z\sim p(z)}\left[\left(\E_{x\sim p(x|z)}[r(z|x)]\right)^2\right]\\
\simeq&\frac{1}{N}\sum_{i=1}^{N} \left(\int dx p(x|z_i) r(z_i|x)\right)^2\\
=&\frac{1}{N}\sum_{i=1}^{N} \left(\int dx p(x) \frac{p(z_i|x)}{p(z_i)} r(z_i|x)\right)^2\\
\simeq&\frac{1}{N}\sum_{i=1}^{N}\left(\frac{1}{N}\sum_{j=1}^{N}\frac{p(z_i|x_j)}{p(z_i)}r(z_i|x_j)\right)^2\\
\simeq&\frac{1}{N}\sum_{i=1}^{N}\left(\frac{1}{N}\sum_{j=1}^{N}\frac{p(z_i|x_j)}{\frac{1}{N}\sum_{k=1}^{N}p(z_i|x_k)}r(z_i|x_j)\right)^2\\
=&\frac{1}{N}\sum_{i=1}^{N}\left(\frac{\sum_{j=1}^{N}p(z_i|x_j)r(z_i|x_j)}{\sum_{j=1}^{N}p(z_i|x_j)}\right)^2\\
\end{aligned}
\end{equation*}

\begin{equation*}
\begin{aligned}
\E_{y,z\sim p(y,z)}[r^2(z|y)]=&\E_{y,z\sim p(y,z)}\left[\left(\E_{x\sim p(x|y,z)}[r(z|x)]\right)^2\right]\\
\simeq&\frac{1}{N}\sum_{i=1}^{N}\left(\int dx p(x|y_i,z_i)r(z_i|x)\right)^2\\
=&\frac{1}{N}\sum_{i=1}^{N}\left(\frac{1}{p(y_i,z_i)}\int dx p(y_i)p(x|y_i)p(z_i|x) r(z_i|x)\right)^2\\
=&\frac{1}{N}\sum_{i=1}^{N}\left(\frac{\int dx p(y_i)p(x|y_i)p(z_i|x) r(z_i|x)}{\int dx p(y_i)p(x|y_i)p(z_i|x)}\right)^2\\
\simeq&\frac{1}{N}\sum_{i=1}^{N}\left(\frac{\sum_{x_j\in \Omega_x(y_i)}p(z_i|x_j) r(z_i|x_j)}{\sum_{x_j\in \Omega_x(y_i)}p(z_i|x_j)}\right)^2\\
=&\frac{1}{N}\sum_{i=1}^{N}\left(\frac{\sum_{j=1}^N p(z_i|x_j) r(z_i|x_j)\mathbbm{1}\left[y_i=y_j\right]}{\sum_{j=1}^N p(z_i|x_j)\mathbbm{1}\left[y_i=y_j\right]}\right)^2\\
\end{aligned}
\end{equation*}

Here $\Omega_x(y_i)$ denotes the set of $x$ examples that has label of $y_i$, and $\mathbbm{1}[\cdot]$ is an indicator function that takes value 1 if its argument is true, 0 otherwise. 

The requirement of $\E_{z\sim p(z|x)}[r(z|x)]=0$ yields

\beq{eq_app:G_empirical_cond1}
0=\E_{z\sim p(z|x)}[r(z|x)]=\int dz p(z)\frac{p(z|x)}{p(z)}r(z|x)\simeq\frac{1}{N}\sum_{i=1}^N \frac{p(z_i|x_j)}{p(z_i)}r(z_i|x_j)
\eeq

for any $x_j$.

Combining all terms, we have that the empirical $\hat{G}[p(z|x)]$ is given by

\beq{eq_app:G_empirical1}
\hat{G}[p(z|x)]=\inf_{r(z|x)\in\QQ}\frac{\frac{1}{N}\sum_{i=1}^{N}\sum_{j=1}^{N} \frac{p(x_j,z_i)}{p(x_j)p(z_i)}r^2(z_i|x_j)-\sum_{i=1}^{N}\left(\frac{\sum_{j=1}^{N}p(z_i|x_j)r(z_i|x_j)}{\sum_{j=1}^{N}p(z_i|x_j)}\right)^2}{\sum_{i=1}^{N}\left(\frac{\sum_{j=1}^Np(z_i|x_j) r(z_i|x_j)\mathbbm{1}[y_i=y_j]}{\sum_{j=1}^N p(z_i|x_j)\mathbbm{1}[y_i=y_j]}\right)^2-\sum_{i=1}^{N}\left(\frac{\sum_{j=1}^{N}p(z_i|x_j)r(z_i|x_j)}{\sum_{j=1}^{N}p(z_i|x_j)}\right)^2}
\eeq

where $\{z_i\}\sim p(z)$ and $\{x_i\}\sim p(x)$. It is also possible to use different distributions for importance sampling, which will results in different formulas for empirical estimation of $G[p(z|x)]$.

\subsection{\titlemath{$G_\Theta[p_\thetaa(z|x)]$ for parameterized distribution $p_\thetaa(z|x)$}}
\label{app:G_theta}

\begin{proof}
For the parameterized\footnote{In this paper, $\thetaa=(\theta_1,\theta_2,...\theta_k)^T$ and $\frac{\partial\pthe}{\partial\thetaa}=\left(\frac{\partial \pthe}{\partial \theta_1},\frac{\partial \pthe}{\partial \theta_2},...\frac{\partial \pthe}{\partial \theta_k}\right)^T$ are all column vectors. $\frac{\partial^2\pthe}{\partial\thetaa^2}$ is a $k\times k$ matrix with $(i,j)$ element of $\frac{\partial^2\pthe}{\partial\theta_i\partial\theta_j}$.} 
$p_\thetaa(z|x)$ with $\thetaa\in\Theta$, after $\thetaa'\gets\thetaa+\Delta\thetaa$, where\footnote{Note that since $\Theta$ is a field, it is closed under subtraction, we have $\Dthe\in\Theta$.} $\Delta\thetaa\in\Theta$ is an infinitesimal perturbation on $\thetaa$, we have that the distribution changes from $p_\thetaa(z|x)$ to $p_{\thetaa+\Delta\thetaa}(z|x)$, and thus the relative perturbation on $p_\thetaa(z|x)$ is
\begin{equation*}
\begin{aligned}
&\e\cdot r(z|x)=\frac{p_{\thetaa+\Delta\thetaa}(z|x)-p_{\thetaa}(z|x)}{p_{\thetaa}(z|x)}\\
=&\frac{1}{\pthe}\left(\pthe+\Dthe^T\frac{\partial \pthe}{\partial\thetaa}+\frac{1}{2}\Dthe^T\frac{\partial^2 p_\thetaa(z|x)}{\partial \thetaa^2}\Dthe+O(\norm{\Dthe}^3)-\pthe\right)\\
\simeq&\Dthe^T\frac{\partial}{\partial\thetaa}\log\pthe+\frac{1}{2}\Dthe^T\frac{1}{\pthe}\frac{\partial^2 p_\thetaa(z|x)}{\partial \thetaa^2}\Dthe+O(\norm{\Dthe}^3)
\end{aligned}
\end{equation*}

where $\norm{\Dthe}$ is the norm of $\Dthe$ in the parameter field $\Theta$.

Similarly, we have

\begin{equation*}
\begin{aligned}
\e\cdot r(z|y)&=\Dthe^T\partialthe\log\, p_{\thetaa}(z|y)+\frac{1}{2}\Dthe^T\frac{1}{p_\thetaa(z|y)}\frac{\partial^2 p_\thetaa(z|y)}{\partial \thetaa^2}\Dthe+O(\norm{\Dthe}^3)\\
\e\cdot r(z)&=\Dthe^T\partialthe\log\, p_{\thetaa}(z)+\frac{1}{2}\Dthe^T\frac{1}{p_\thetaa(z)}\frac{\partial^2 p_\thetaa(z)}{\partial \thetaa^2}\Dthe+O(\norm{\Dthe}^3)\\
\end{aligned}
\end{equation*}

Substituting the above expressions into the expansion of $\IB_\beta[p(z|x)]$ in Eq. (\ref{eq_app:expand_to_n_order}), and preserving to the second order  $\norm{\Delta\thetaa}^2$, we have

\begin{equation*}
\begin{aligned}
&\IB_\beta[\pthe(1+\epsilon \cdot r(z|x))]\nonumber\\
=&\IB_\beta[\pthe]+\epsilon\cdot\left(\E_{x,z\sim p_\thetaa(x,z)}\left[r(z|x)\log\frac{\pthe}{p_\thetaa(z)}\right]-\beta\cdot \E_{y,z\sim p_\thetaa(y,z)}\left[r(z|y)\log\frac{p_\thetaa(z|y)}{p_\thetaa(z)}\right]\right)\nonumber\\
&+\frac{\epsilon^2}{1\cdot2}\left\{\left(\E_{x,z\sim p_\thetaa(x,z)}[r^2(z|x)]-\E_{z\sim p_\thetaa(z)}[r^2(z)]\right)-\beta\cdot\left(\E_{y,z\sim p_\thetaa(y,z)}[r^2(z|y)]-\E_{z\sim p_\thetaa(z)}[r^2(z)]\right)\right\}\nonumber\\
=&\IB_\beta[\pthe]+\E_{x,z\sim p_\thetaa(x,z)}\left[\left(\Dthe^T\frac{\partial}{\partial\thetaa}\log\,\pthe+\frac{1}{2}\Dthe^T\frac{1}{\pthe}\frac{\partial^2 p_\thetaa(z|x)}{\partial \thetaa^2}\Dthe\right)\log\frac{\pthe}{p_\thetaa(z)}\right]\\
&-\beta\cdot \E_{y,z\sim p_\thetaa(y,z)}\left[\left(\Dthe^T\partialthe\log\, p_{\thetaa}(z|y)+\frac{1}{2}\Dthe^T\frac{1}{p_\thetaa(z|y)}\frac{\partial^2 p_\thetaa(z|y)}{\partial \thetaa^2}\Dthe\right)\log\frac{p_\thetaa(z|y)}{p_\thetaa(z)}\right]\nonumber\\
&+\frac{1}{2}\left(\E_{x,z\sim p_\thetaa(x,z)}\left[\left(\Dthe^T\frac{\partial}{\partial\thetaa}\log\,\pthe\right)^2\right]-\E_{z\sim p_\thetaa(z)}\left[\left(\Dthe^T\frac{\partial}{\partial\thetaa}\log\, p_\thetaa(z)\right)^2\right]\right)\\
&-\frac{\beta}{2}\left(\E_{y,z\sim p_\thetaa(y,z)}\left[\left(\Dthe^T\frac{\partial}{\partial\thetaa}\log\, p_\thetaa(z|y)\right)^2\right]-\E_{z\sim p_\thetaa(z)}\left[\left(\Dthe^T\frac{\partial}{\partial\thetaa}\log\, p_\thetaa(z)\right)^2\right]\right)\nonumber\\
=&\IB_\beta[\pthe]+\Dthe^T\bigg\{\E_{x,z\sim p_\thetaa(x,z)}\left[\log\frac{\pthe}{p_\thetaa(z)}\frac{\partial}{\partial\thetaa}\log\pthe\right]\\
&-\beta\cdot \E_{x,z\sim p_\thetaa(x,z)}\left[\log\frac{\pthe}{p_\thetaa(z)}\frac{\partial}{\partial\thetaa}\log\pthe\right]\bigg\}\\
&+\frac{1}{2}\Delta\thetaa^T\left\{\left(\mathcal{I}_{Z|X}(\thetaa)-\mathcal{I}_{Z}(\thetaa)\right)-\beta\left(\mathcal{I}_{Z|X}(\thetaa)-\mathcal{I}_{Z}(\thetaa)\right)\right\}\Delta\thetaa\\
\end{aligned}
\end{equation*}

In the last equality we have used $\E_{x,z\sim p_\thetaa(x,z)}[\frac{1}{\pthe}\frac{\partial^2\pthe}{\partial\thetaa^2}]=\int dx p(x)\frac{\partial^2}{\partial\thetaa^2}\int dz\pthe=\int dx p(x)\frac{\partial^2}{\partial\thetaa^2}1=\mathbf{0}$, and similarly $\E_{y,z\sim p_\thetaa(y,z)}[\frac{1}{p_\thetaa(z|y)}\frac{\partial^2 p_\thetaa(z|y)}{\partial\thetaa^2}]=\mathbf{0}$. In other words, the $\norm{\Dthe}^2$ terms in the first-order variation $\delta \IB_\beta[\pthe]$ vanish, and the remaining $\norm{\Dthe}^2$ are all in $\delta^2 \IB_\beta[\pthe]$. Also in the last expression, $\mathcal{I}_{Z}(\thetaa)\equiv\int dz p_\thetaa (z)\left(\frac{\partial \log p_\thetaa(z)}{\partial \thetaa}\right)\left(\frac{\partial \log p_\thetaa(z)}{\partial \thetaa}\right)^T$ is the Fisher information matrix of $\thetaa$ for $Z$, $\mathcal{I}_{Z|X}(\thetaa)\equiv \int dxdz p(x) p_\thetaa(z|x)\left(\frac{\partial \log p_\thetaa(z|x)}{\partial \thetaa}\right)\left(\frac{\partial \log p_\thetaa(z|x)}{\partial \thetaa}\right)^T$, $\mathcal{I}_{Z|Y}(\thetaa)\equiv \int dydz p(y) p_\thetaa(z|y)\left(\frac{\partial \log p_\thetaa(z|y)}{\partial \thetaa}\right)\left(\frac{\partial \log p_\thetaa(z|y)}{\partial \thetaa}\right)^T$  are the conditional Fisher information matrix \citep{fisherproperty} of $\thetaa$ for $Z$ conditioned on $X$ and $Y$, respectively.

Let us look at
\beq{eq_app:IB_second_order_theta}
\delta^2\IB_\beta[\pthe]=\frac{1}{2}\Delta\thetaa^T\left\{\left(\mathcal{I}_{Z|X}(\thetaa)-\mathcal{I}_{Z}(\thetaa)\right)-\beta\left(\mathcal{I}_{Z|X}(\thetaa)-\mathcal{I}_{Z}(\thetaa)\right)\right\}\Delta\thetaa
\eeq

Firstly, note that $\delta^2\IB_\beta[\pthe]$ is a quadratic function of $\Dthe$, and the scale of $\Dthe$ does not change the sign of $\delta^2\IB_\beta[\pthe]$, so the condition of $\forall \Dthe\in\Theta$, $\delta^2\IB_\beta[\pthe]\ge0$ is invariant to the scale of $\Dthe$, and is describing the ``curvature'' in the infinitesimal neighborhood of $\thetaa$. Therefore, $\Dthe$ can explore any value in $\Theta$. Secondly, we see that Eq. (\ref{eq_app:IB_second_order_theta}) is a special case of Eq. (\ref{eq_app:IB_second_order}) with $\e\cdot r(z|x)=\Dthe^T\partialthe\log\,\pthe$. Therefore, The inequalities due to Jensen still hold:
$\e^2\left(\E[r^2(z|x)]-\E[r^2(z)]\right)=\Delta\thetaa^T\left(\mathcal{I}_{Z|X}(\thetaa)-\mathcal{I}_{Z}(\thetaa)\right)\Dthe\ge0$, $\e^2\left(\E[r^2(z|y)]-\E[r^2(z)]\right)=\Delta\thetaa^T\left(\mathcal{I}_{Z|Y}(\thetaa)-\mathcal{I}_{Z}(\thetaa)\right)\Dthe\ge0$. 
If $\Delta\thetaa^T\left(\mathcal{I}_{Z|Y}(\thetaa)-\mathcal{I}_{Z}(\thetaa)\right)\Dthe>0$, then the condition of $\forall \Dthe\in\Theta$, $\delta^2\IB_\beta[\pthe]\ge0$ is equivalent to $\forall \Dthe\in\Theta$,

$$\beta\le\frac{\Delta\thetaa^T\left(\mathcal{I}_{Z|X}(\thetaa)-\mathcal{I}_{Z}(\thetaa)\right)\Delta\thetaa}{\Delta\thetaa^T\left(\mathcal{I}_{Z|Y}(\thetaa)-\mathcal{I}_{Z}(\thetaa)\right)\Delta\thetaa}$$ 

i.e.

\beq{eq_app:G_theta}
\beta\le G_\Theta[\pthe]\equiv\inf_{\Delta\thetaa\in\Theta}\frac{\Delta\thetaa^T\left(\mathcal{I}_{Z|X}(\thetaa)-\mathcal{I}_{Z}(\thetaa)\right)\Delta\thetaa}{\Delta\thetaa^T\left(\mathcal{I}_{Z|Y}(\thetaa)-\mathcal{I}_{Z}(\thetaa)\right)\Delta\thetaa}
\eeq

If $\Delta\thetaa^T\left(\mathcal{I}_{Z|Y}(\thetaa)-\mathcal{I}_{Z}(\thetaa)\right)\Dthe=0$, we have that Eq. (\ref{eq_app:IB_second_order_theta}) always holds, which is a looser condition than Eq. (\ref{eq_app:G_theta}). Above all, we have that the condition of $\forall\Dthe\in\Theta$, $\delta^2\IB_\beta[\pthe]$ is equivalent to $\beta\le G_\Theta[\pthe]$.

Moreover, $\left(G_\Theta[\pthe]\right)^{-1}$ given by Eq. (\ref{eq_app:G_theta}) has the format of a generalized Rayleigh quotient $R(A,B;x)\equiv \frac{\Dthe^TA\Dthe}{\Dthe^TB\Dthe}$ where $A=\mathcal{I}_{Z|Y}(\thetaa)-\mathcal{I}_{Z}(\thetaa)$ and $B=\mathcal{I}_{Z|X}(\thetaa)-\mathcal{I}_{Z}(\thetaa)$ are both Hermitian matrices\footnote{Here all the Fisher information matrices are real symmetric, thus Hermitian.}, which can be reduced to Rayleigh quotient $R(D,C^T\Dthe)=\frac{(C^T\Dthe)^TD(C^T\Dthe)}{(C^T\Dthe)^T(C^T\Dthe)}$, with the transformation $D=C^{-1}A(C^T)^{-1}$ where $CC^T$ is the Cholesky decomposition of $B=\mathcal{I}_{Z|X}(\thetaa)-\mathcal{I}_{Z}(\thetaa)$. Moreover, we have that when  $G_\Theta[\pthe]$ attains its minimum value, the Reyleigh quotient $R(D,C^T\Dthe)$ attains its maximum value of $\lambda_\text{max}$ with $C^T\Dthe=v_\text{max}$, i.e. $\Dthe=(C^T)^{-1}v_\text{max}$, where $\lambda_\text{max}$ is the largest eigenvalue of $D$ and $v_\text{max}$ the corresponding eigenvector.

\end{proof}

\subsection{Proof of Theorem \ref{thm:phase_transition}}
\label{app:phase_transition}

\begin{proof}
Define 

\beq{eq:def_T_beta}
T_\beta(\beta'):=\inf\limits_{r(z|x)\in \QQ}\left[\left(\E_{\beta}[r^2(z|x)]-\E_{\beta}[r^2(z)]\right)-\beta'\cdot\left(\E_{\beta}[r^2(z|y)]-\E_{\beta}[r^2(z)]\right)\right]
\eeq
where $\E_\beta[\cdot]$ denotes taking expectation w.r.t. the optimal solution 
$p_{\beta}^*(x,y,z)=p(x,y)p_{\beta}^*(z|x)$ at $\beta$. Using Lemma \ref{lemma:expand_to_order_n}, we have that the IB phase transition as defined in Definition \ref{definition:phase_transition_0} corresponds to satisfying the following two equations:
\beq{eq:proof_theorem_1}
T_\beta(\beta')\rvert_{\beta'=\beta}\ge0\\
\eeq
\beq{eq:proof_theorem_1_eq20}
\lim\limits_{\beta'\to\beta^+} T_\beta(\beta')=0^-
\eeq

Now we prove that $T_\beta(\beta')$ is continuous at $\beta'=\beta$, i.e. $\forall \varepsilon>0$, $\exists \delta>0$ s.t. $\forall \beta\in(\beta-\delta,\beta+\delta)$, we have $|T_\beta(\beta')-T_\beta(\beta)|<\epsilon$.

From Eq. (\ref{eq:def_T_beta}), we have
$T_\beta(\beta')-T_\beta(\beta)=-(\beta'-\beta)\cdot\left(\E_{\beta}[r^2(z|y)]-\E_{\beta}[r^2(z)]\right)$. Since $r(z|x)$ is bounded, i.e. $\exists M>0 \text{ s.t. }\forall z\in\Z, x\in\X, |r(z|x)|\le M$, we have

\begin{equation*}
\begin{aligned}
\left\lvert\E_\beta\left[r^2(z|y)\right]\right\rvert=\left\lvert\E_\beta\left[\left(\E_{x\sim p(x|y,z)}\left[r(z|x)\right]\right)^2\right]\right\rvert\le &\left\lvert\E_\beta\left[\left(\E_{x\sim p(x|y,z)}\left[M\right]\right)^2\right]\right\rvert=M^2
\end{aligned}
\end{equation*}

Similarly, we have
\begin{equation*}
\begin{aligned}
\left\lvert\E_\beta\left[r^2(z)\right]\right\rvert=\left\lvert\E_\beta\left[\left(\E_{x\sim p(x|z)}\left[r(z|x)\right]\right)^2\right]\right\rvert\le&\left\lvert\E_\beta\left[\left(\E_{x\sim p(x|z)}\left[M\right]\right)^2\right]\right\rvert=M^2
\end{aligned}
\end{equation*}

Hence, $\left\lvert T_\beta(\beta')-T_\beta(\beta)\right\rvert= |\beta'-\beta|\left\lvert E_\beta[r^2(z|y)]-E_\beta[r^2(z)]\right\rvert\le 2|\beta'-\beta|M^2$. 

To prove that $T_\beta(\beta')$ is continuous at $\beta'=\beta$, we have $\forall\varepsilon>0$, $\exists \delta=\frac{\varepsilon}{2M^2}>0$, s.t. $\forall \beta'\in(\beta-\delta,\beta+\delta)$, we have
$$|T_\beta(\beta')-T_\beta(\beta)|\le 2|\beta'-\beta|M^2<2 \delta M^2=2\frac{\varepsilon}{2M^2}M^2=\varepsilon$$
Hence $T_\beta(\beta')$ is continuous at $\beta'=\beta$.

Combining the continuity of $T_\beta(\beta')$ at $\beta'=\beta$, and Eq. (\ref{eq:proof_theorem_1}) and (\ref{eq:proof_theorem_1_eq20}), we have $T_\beta(\beta)=0$, which is equivalent to $G[p_\beta^*(z|x)]=\beta$ after simple manipulation.

\end{proof}

\subsection{Invariance of \titlemath{$\G[r(z|x);p(z|x)]$} to addition of a global representation}
\label{app:invariance_to_sz}

Here we prove the following lemma:

\begin{lemma}
\label{lemma:G_invariance}
$\G[r(z|x);p(z|x)]$ defined in Lemma \ref{lemma:second_order_variation} is invariant to the transformation $r'(z|x)\gets r(z|x)+s(z)$.
\end{lemma}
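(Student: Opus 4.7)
The plan is to expand $\G[r';p(z|x)]$ with $r'(z|x) = r(z|x) + s(z)$ and show that the cross terms and the $s$-terms cancel separately in both the numerator and the denominator, leaving the original quotient. The key observation is that $s(z)$ depends only on $z$, so under each of the three conditional expectations appearing in $\G$ it comes out of the expectation unchanged.

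First I would introduce shorthand $r(z):=\E_{x\sim p(x|z)}[r(z|x)]$ and $r(z|y):=\E_{x\sim p(x|y,z)}[r(z|x)]$. Under the shift, these become $r'(z) = r(z) + s(z)$ and $r'(z|y) = r(z|y) + s(z)$, since $s(z)$ is independent of the conditioning variables. Then I expand the numerator of $\G$:
\begin{equation*}
\E[(r+s)^2(z|x)] - \E[(r+s)^2(z)] = \bigl(\E[r^2(z|x)] + 2A + B\bigr) - \bigl(\E[r^2(z)] + 2A + B\bigr),
\end{equation*}
where $A := \E_{z\sim p(z)}[s(z)r(z)]$ and $B := \E_{z\sim p(z)}[s^2(z)]$. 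The cancellation relies on the tower identity $\E_{x,z\sim p(x,z)}[r(z|x)s(z)] = \E_{z\sim p(z)}[s(z)\,\E_{x\sim p(x|z)}[r(z|x)]] = A$, which is exactly the cross term that appears after squaring $r(z) + s(z)$. Thus the numerator is unchanged.

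The same bookkeeping works for the denominator, with the one nontrivial check being that the cross term $\E_{y,z\sim p(y,z)}[r(z|y)s(z)]$ also equals $A$. This follows from the Markov structure: $\E_{y\sim p(y|z)}[r(z|y)] = \E_{y\sim p(y|z)}\E_{x\sim p(x|y,z)}[r(z|x)] = \E_{x\sim p(x|z)}[r(z|x)] = r(z)$, so the iterated expectation collapses to $\E_{z\sim p(z)}[s(z)r(z)] = A$ again. Then
\begin{equation*}
\E[(r+s)^2(z|y)] - \E[(r+s)^2(z)] = \bigl(\E[r^2(z|y)] + 2A + B\bigr) - \bigl(\E[r^2(z)] + 2A + B\bigr),
\end{equation*}
which equals the original denominator. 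Dividing, $\G[r';p(z|x)] = \G[r;p(z|x)]$.

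The main obstacle, such as it is, is simply verifying the cross-term identity in the denominator; everything else is direct quadratic expansion. No boundedness issues arise because we are manipulating finite moments that are already assumed to exist for $r\in\QQ$, and $s(z)$ is assumed bounded so that $r'$ remains bounded as well.
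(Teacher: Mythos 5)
Your proof is correct and follows essentially the same route as the paper's: expand the squares, observe that the cross and pure-$s$ terms cancel between each pair of expectations. You are slightly more careful than the paper at one point — the paper invokes the denominator cancellation "symmetrically," whereas you explicitly verify the needed tower identity $\E_{y\sim p(y|z)}[r(z|y)]=\E_{x\sim p(x|z)}[r(z|x)]=r(z)$, which is the one place the denominator argument is not literally symmetric with the numerator.
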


\begin{proof}
When we $r(z|x)$ is shifted by  a global transformation $r'(z|x)\gets r(z|x)+s(z)$, we have $r'(z)\gets\E_{x\sim p(x|z)}[r(z|x)+s(z)]=\E_{x\sim p(x|z)}[r(z|x)]+s(z)\E_{x\sim p(x|z)}[1]=r(z)+s(z)$, and similarly $r'(z|y)\gets r(z|y)+s(z)$.

The numerator of $\G[r(z|x);p(z|x)]$ is then

\begin{equation*}
\begin{aligned}
&\E_{x,z\sim p(x,z)}\left[\left(r'(z|x)\right)^2\right]-\E_{z\sim p(z)}\left[\left(r'(z)\right)^2\right]\\
=&\E_{x,z\sim p(x,z)}\left[\left(r(z|x)+s(z)\right)^2\right]-\E_{z\sim p(z)}\left[\left(r(z)+s(z)\right)^2\right]\\
=&\left(\E_{x,z\sim p(x,z)}\left[r^2(z|x)\right]+2\E_{x,z\sim p(x,z)}\left[r(z|x)s(z)\right]+\E_{x,z\sim p(x,z)}\left[s^2(z)\right]\right)\\
&-\left(\E_{z\sim p(z)}\left[r^2(z)\right]+2\E_{z\sim p(z)}\left[r(z)s(z)\right]+\E_{z\sim p(z)}\left[s^2(z)\right]\right)\\
=&\left(\E_{x,z\sim p(x,z)}\left[r^2(z|x)\right]+2\E_{z\sim p(z)}\left[s(z)\E_{x\sim p(x|z)}\left[r(z|x)\right]\right]+\E_{z\sim p(z)}\left[s^2(z)\right]\right)\\
&-\left(\E_{z\sim p(z)}\left[r^2(z)\right]+2\E_{z\sim p(z)}\left[r(z)s(z)\right]+\E_{z\sim p(z)}\left[s^2(z)\right]\right)\\
=&\left(\E_{x,z\sim p(x,z)}\left[r^2(z|x)\right]+2\E_{z\sim p(z)}\left[r(z)s(z)\right]+\E_{z\sim p(z)}\left[s^2(z)\right]\right)\\
&-\left(\E_{z\sim p(z)}\left[r^2(z)\right]+2\E_{z\sim p(z)}\left[r(z)s(z)\right]+\E_{z\sim p(z)}\left[s^2(z)\right]\right)\\
=&\E_{x,z\sim p(x,z)}\left[r^2(z|x)\right]-\E_{z\sim p(z)}\left[r^2(z)\right]
\end{aligned}
\end{equation*}

Symmetrically, we have
$$\E_{y,z\sim p(y,z)}\left[\left(r'(z|y)\right)^2\right]-\E_{z\sim p(z)}\left[\left(r'(z)\right)^2\right]=\E_{y,z\sim p(y,z)}\left[r^2(z|y)\right]-\E_{z\sim p(z)}\left[r^2(z)\right]$$

Therefore, $\G[r(z|x);p(z|x)]=\frac{\E_{x,z\sim p(x,z)}\left[r^2(z|x)\right]-\E_{z\sim p(z)}\left[r^2(z)\right]}{\E_{y,z\sim p(y,z)}\left[r^2(z|y)\right]-\E_{z\sim p(z)}\left[r^2(z)\right]}$ is invariant to $r'(z|x)\gets r(z|x)+s(z)$.
\end{proof}

\subsection{Proof of Theorem \ref{thm:rho_r_property}}
\label{app:representational_maximum_correlation}

\begin{proof}
Using the condition of the theorem, we have that $\forall r(z|x)\in\Q^0_{\Z|\X}$, there exists $r_1(z|x)\in\QQ$ and $s(z)\in\{s:\Z\to\R| s\text{ bounded}\}$ s.t. $r(z|x)=r_1(z|x)+s(z)$. Note that the only difference between $\QQ$ and $\QQA$ is that $\QQ$ requires $\E_{p(z|x)}[r_1(z|x)]=0$. Using Lemma \ref{lemma:G_invariance}, we have

$$\inf_{r(z|x)\in\QQA}\G[r(z|x);p(z|x)]=\inf_{r_1(z|x)\in\QQ}\G[r_1(z|x);p(z|x)]=G[p(z|x)]$$

where $r(z|x)$ doesn't have the constraint of $\E_{p(z|x)}[\cdot]=0$.

After dropping the constraint of $\E_{z\sim p(z|x)}[r(z|x)]=0$, again using Lemma \ref{lemma:G_invariance}, we can let $r(z)=\E_{x\sim p(x|z)}[r(z|x)]=0$ (since we can perform the transformation $r'(z|x)\gets r(z|x)-r(z)$, so that the new $r'(z)\equiv0$). Now we get a simpler formula for $G[p(z|x)]$, as follows:

\beq{eq:G_without_rz}
G[p(z|x)]=\inf_{r(z|x)\in\QQB}\frac{\E_{x,z\sim p(x,z)}[r^2(z|x)]}{\E_{y,z\sim p(y,z)}\left[\left(\E_{x\sim p(x|y,z)}[r(z|x)]\right)^2\right]}
\eeq

where $\QQB:=\{r:\X\times\Z\to\R\ \big|\E_{x\sim p(x|z)}[r(z|x)]=0, r\text{ bounded}\}$.

From Eq. (\ref{eq:G_without_rz}), we can further require that $\E_{x,z\sim p(x,z)}[r^2(z|x)]=1$. Define

\begin{equation}
\label{eq:rho_s}
\rho_s^2(X,Y;Z):=\sup_{f(X,Z)\in\QQC}\E[\left(\E[f(X,Z)|Y,Z]\right)^2]=\sup_{f(x,z)\in\QQC}\E_{y,z\sim p(y,z)}\left[\left(\E_{x\sim p(x|y,z)}[f(x,z)]\right)^2\right]
\end{equation}

where\footnote{In the definition of $\rho_r(X,Y;Z)$, we have used an equivalent format $f(x,z)$ instead of $r(z|x)$.} $\QQC:=\{r:\X\times\Z\to\R\ \big|\E_{x\sim p(x|z)}[r(z|x)]=0, \E_{x,z\sim p(x,z)}[r^2(z|x)]=1, r\text{ bounded}\}$. Comparing with Eq. (\ref{eq:G_without_rz}), it immediately follows that

$$G[p(z|x)]=\frac{1}{\rho_s^2(X,Y;Z)}$$

\textbf{(i)} We only have to prove that $\rho_s(X,Y;Z)=\rho_r(X,Y;Z)$, where $\rho_r(X,Y;Z)$ is defined in Definition \ref{def:rho_r}.

We have
\begin{equation*}
\begin{aligned}
&\E[f(X,Z)g(Y,Z)]\\
=&\int dxdydz p(x,y,z)f(x,z)g(y,z)\\
=&\int dydz p(y,z)g(y,z)\int dx p(x|y,z)f(x,z)\\
\equiv&\int dydz p(y,z)g(y,z)F(y,z)\\
\le&\sqrt{\int dydzp(y,z)g^2(y,z)}\cdot \sqrt{\int dydzp(y,z)F^2(y,z)}
\end{aligned}
\end{equation*}

where $F(y,z):= \int dx p(x|y,z)f(x,z)$. We have used Cauchy-Schwarz inequality, where the equality holds when $g(y,z)=\alpha F(y,z)$ for some $\alpha$. Since $\E[g^2(y,z)]=1$, we have $\alpha^2\E[F^2(y,z)]=1$. Taking the supremum of $(\E[f(X,Z)g(Y,Z)])^2$ w.r.t. $f$ and $g$, we have
\begin{equation*}
\begin{aligned}
\rho_r^2(X,Y;Z)
=&\sup_{(f(X,Z),g(Y,Z))\in\S_1}(\E[f(X,Z)g(Y,Z)])^2\\
=&\sup_{(f(x,z),g(y,z))\in\S_1}\int dydzp(y,z)g^2(y,z)\cdot \int dydzp(y,z)F^2(y,z)\\
=&\sup_{f(x,z)\in\QQC}\int dydzp(y,z)F^2(y,z)\\
=&\sup_{f(x,z)\in\QQC}\int dydzp(y,z)\left(\int dx p(x|y,z)f(x,z)\right)^2\\
=&\sup_{f(X,Z)\in\QQC}\E[\left(\E[f(X,Z)|Y,Z]\right)^2]\\
\equiv&\rho_s^2(X,Y;Z)
\end{aligned}
\end{equation*}

Here $\S_1$ is defined in Definition \ref{def:rho_r}. By definition both $\rho_r(X,Y;Z)$ and $\rho_s(X,Y;Z)$ take non-negative values. Therefore, 

\begin{equation}
\label{eq:rho_s_rho_r}
\rho_s(X,Y;Z)=\rho_r(X,Y;Z)
\end{equation}

\textbf{(ii)} 
Using the definition of $\rho_r(X,Y;Z)$, we have

\begin{equation*}
\begin{aligned}
&\rho_r^2(X,Y;Z)\\
\equiv&\sup_{f(x,z)\in\QQC}\int dydzp(y,z)\left(\int dx p(x|y,z)f(x,z)\right)^2\\
=&\sup_{f(x,z)\in\QQC}\int dzp(z)\int dy p(y|z)\left(\int dx p(x|y,z)f(x,z)\right)^2\\
\equiv&\sup_{f(x,z)\in\QQC}\int dzp(z)W[f(x,z)]\\
\end{aligned}
\end{equation*}

where $W[f(x,z)]:=\int dy p(y|z)\left(\int dx p(x|y,z)f(x,z)\right)^2$.

Denote $c(z):= p(z)\E_{x\sim p(x|z)}[f^2(x,z)]$, we have $\int c(z)dz=\E_{x,z\sim p(x,z)}[f^2(x,z)]=1$. Then the supremum $\rho_r^2(X,Y;Z)=\sup\limits_{f(x,z)\in\QQC}\int dzp(z)W[f(x,z)]$ is equivalent to the following two-stage supremum:

\beq{eq_ap:two_stage_supremum}
\rho_r^2(X,Y;Z)=\sup_{c(z):\int c(z)dz=1}\int dz p(z)\sup_{f(x,z)\in\QQD}W[f(x,z)]
\eeq

where $\QQD:=\{\X\times\Z\to\R\ \big|\ \E_{x\sim p(x|z)}[f^2(x,z)]=\frac{c(z)}{p(z)}, \E_{x\sim p(x|z)}[f(x,z)]=0, f\text{ bounded}\}$
We can think of the inner supremum $\sup_{f(x,z)\in\QQD}W[f(x,z)]$ as only w.r.t. $x$, for some given $z$.

Now let's consider another supremum:

\beq{eq:rho_r_x}
\sup_{h(x)\in\Q_{\X}^{(h)}}\int dy p(y|z)\left(\int dx p(x|y,z)h(x)\right)^2
\eeq

where $\Q_{\X}^{(h)}:=\{h:\X\to\R\big|\ \E_{p(x|z)}[h(x)]=0, \E_{p(x|z)}[h^2(x)]=1, h \text{ bounded}\}$. Using similar technique in (ii), it is easy to prove that it equals $\rho_m^2(X,Y|Z)$ as defined in Definition \ref{def:rho_r}.

Comparing Eq. (\ref{eq:rho_r_x}) and the supremum:
$$\sup_{f(x,z)\in\QQD}W[f(x,z)]$$
we see that the only difference is that in the latter $\E_{x\sim p(x|z)}[f^2(x,z)]$ equals $\frac{c(z)}{p(z)}$ instead of 1. Since $W[f(x,z)]$ is a quadratic functional of $f(x,z)$, we have 

$$\sup_{f(x,z)\in\QQD}W[f(x,z)]=\frac{c(z)}{p(z)}\rho_m^2(X,Y|Z)$$

Therefore,

\begin{equation*}
\begin{aligned}
\rho_r(X,Y;Z)&=\sup_{c(z):\int c(z)dz=1}\int dz\, p(z)\sup_{f(x,z)\in\QQD}W[f(x,z)]\\
&=\sup_{c(z):\int c(z)dz=1}\int dz \,p(z)\frac{c(z)}{p(z)}\rho_m^2(X,Y|Z)\\
&=\sup_{c(z):\int c(z)dz=1}\int dz \, c(z)\rho_m^2(X,Y|Z=z)\\
&=\sup_{Z\in\Z}\rho_m^2(X,Y|Z)\\
\end{aligned}
\end{equation*}

where in the last equality we have let $c(z)$ have ``mass'' only on the place where $\rho_m^2(X,Y|Z=z)$ attains supremum w.r.t. $z$.

\textbf{(iii)} When $Z$ is a continuous variable, let $f(x,z)=f_X(x)\sqrt{\frac{\delta(z-z_0)}{p(z)}}$, where $\delta(\cdot)$ is the Dirac-delta function, $z_0$ is a parameter, $f_X(x)\in\Q_{\X|\Z}^{(f)}$, with $\Q_{\X|\Z}^{(f)}:=\{f_X:\X\to\R\,\big|\,f_X\text{ bounded}; \forall Z\in\Z: \E_{X\sim p(X|Z)}[f_X(x)]=0, \E_{X\sim p(X|Z)}[f_X^2(x)]=1\}$. We have

\begin{equation*}
\begin{aligned}
\E_{x\sim p(x|z)}[f(x,z)]&=\int p(x|z)f(x,z)dx\\
&=\sqrt{\frac{\delta(z-z_0)}{p(z)}}\int p(x|z)f_X(x)dx\\
&=\sqrt{\frac{\delta(z-z_0)}{p(z)}}\cdot 0\\ 
&= 0
\end{aligned}
\end{equation*}

And 
\begin{equation*}
\begin{aligned}
\E[f^2(X,Z)]=&\int p(x,z)f^2(x,z)dxdz\\
=&\int p(x,z)f_X^2(x)\frac{\delta(z-z_0)}{p(z)}dxdz\\
=&\int dz \delta(z-z_0)\int dx p(x|z)f_X^2(x)dx\\
=&\int dz \delta(z-z_0)\cdot 1\\
=&1
\end{aligned}
\end{equation*}

Therefore, such constructed $f(x,z)=f_X(x)\sqrt{\frac{\delta(z-z_0)}{p(z)}}\in\QQC$, satisfying the requirement for $\rho_s(X,Y;Z)$ (which equals $\rho_r(X,Y;Z)$ by Eq. \ref{eq:rho_s_rho_r}).

Substituting in the special form of $f(x,z)$ into the expression of $\rho_s(X,Y;Z)$ in Eq. (\ref{eq:rho_s}), we have

\begin{equation*}
\begin{aligned}
&\sup_{f(x,z):f(x,z)=f_X(x)\sqrt{\frac{\delta(z-z_0)}{p(z)}},f_X(x)\in\QQf}\int dzp(z)\int dy p(y|z)\left(\int dx p(x|y,z)f(x,z)\right)^2\\
=&\sup_{f_X(x)\in\QQf,z_0\in\Z}\int dzp(z)\int dy p(y|z)\left(\int dx p(x|y,z)f_X(x)\sqrt{\frac{\delta(z-z_0)}{p(z)}}\right)^2\\
=&\sup_{z_0\in\Z}\int dzp(z)\frac{\delta(z-z_0)}{p(z)}\sup_{f_X(x)\in\QQf}\int dy p(y|z)\left(\int dx p(x|y,z)f_X(x)\right)^2\\
=&\sup_{z_0\in\Z}\int dz\delta(z-z_0)\sup_{f_X(X)\in\QQf}\E[(\E[f_X(X)|Y,Z=z])^2|Z=z]\\
=&\sup_{z_0\in\Z}\int dz\delta(z-z_0)\rho_m^2(X,Y|Z=z)\\
=&\sup_{z_0\in\Z}\rho_m^2(X,Y|Z=z_0)\\
=&\sup_{Z\in\Z}\rho_m^2(X,Y|Z)\\
\end{aligned}
\end{equation*}

We can identify $\sup_{f_X(X)\in\QQf}\E[(\E[f_X(X)|Y,Z=z])^2|Z=z]$ with $\rho_m^2(X,Y|Z=z)$ because $f_X(x)$ satisfies the requirement for conditional maximum correlation that $\E_{p(x|z)}[f_X(x)]=0$ and $\E_{p(x|z)}[f_X^2(x)]=1$, for any $z$, and using the same technique in (i), it is straightforward to prove that $\sup_{f_X(X)\in\QQf}\E[(\E[f_X(X)|Y,Z=z])^2|Z=z]$ equals the conditional maximum correlation as defined in Definition \ref{def:rho_r}.

Since the conditional maximum correlation can be viewed as the maximum correlation between $X$ and $Y$, where $X,Y\sim p(X,Y|Z)$, using the equality of $\left(\beta_0[h(x)]\right)^{-1}=\rho_m^2(X;Y)$ (Eq. 7 in \cite{wu2019learnabilityEntropy}), we can identify the $h(x)$ in $\beta_0[h(x)]$ with the $f_X(X)$ here, and an optimal $f_X^*(X)$ that maximizes $\rho_m^2(X,Y|Z)$ is also an optimal $h^*(x)$ that minimizes $\beta_0[h(x)]$.

\textbf{(iv)}
For discrete $X$, $Y$ and $Z$ and a given $Z=z$, let $Q_{X,Y|Z}:=\left(\frac{p(x,y|z)}{\sqrt{p(x|z)p(y|z)}}\right)_{x,y}=\left(\frac{p(x,y)}{\sqrt{p(x)p(y)}}\sqrt{\frac{p(z|x)}{p(z|y)}}\right)_{x,y}$, we first prove that its second largest singular value is $\rho_m^2(X,Y|Z)=\sup\limits_{(f,g)\in\S_2}\E_{x,y\sim p(x,y|z)}[f(x)g(y)]$ ($\S_2$ is defined in Definition \ref{def:rho_r}). 

Let column vectors $u_1=\sqrt{p(x|z)}$ and $v_1=\sqrt{p(y|z)}$ (note that $z$ is given and fixed). Also let $u_2=f(x)\sqrt{p(x|z)}$ and $v_2=g(y)\sqrt{p(y|z)}$. Denote inner product $\langle u,v\rangle\equiv \sum_i u_iv_i$, and the length of a vector as $||u||=\sqrt{\langle u,u\rangle}$. We have $||u_1||=||v_1||=1$ due to the normalization of probability, $||u_2||=||v_2||=1$ due to $\E_{x\sim p(x|z)}[f^2(x)]=\E_{y\sim p(y|z)}[g^2(y)]=1$, and $\langle u_1, u_2\rangle=\langle v_1, v_2\rangle=0$ due to $\E_{x\sim p(x|z)}[f(x)]=\E_{y\sim p(y|z)}[g(y)]=0$. Furthermore, we have

$$\sup_{(f,g)\in\S_2}\E_{x,y\sim p(x,y|z)}[f(x)g(y)]=\max_{u,v}u^T Q_{X,Y|Z} v$$

which is exactly the second largest singular value $\sigma_2(Z)$ of the matrix $Q_{X,Y|Z}$. Using the result in \textbf{(ii)}, we have that $\rho_r(X,Y;Z)=\max\limits_{Z\in\Z}\sigma_2(Z)$.

\end{proof}

\subsection{Subset separation at phase transitions}
\label{app:subset_separation}

\begin{figure}[b]
\begin{center}
\includegraphics[width=0.3\columnwidth]{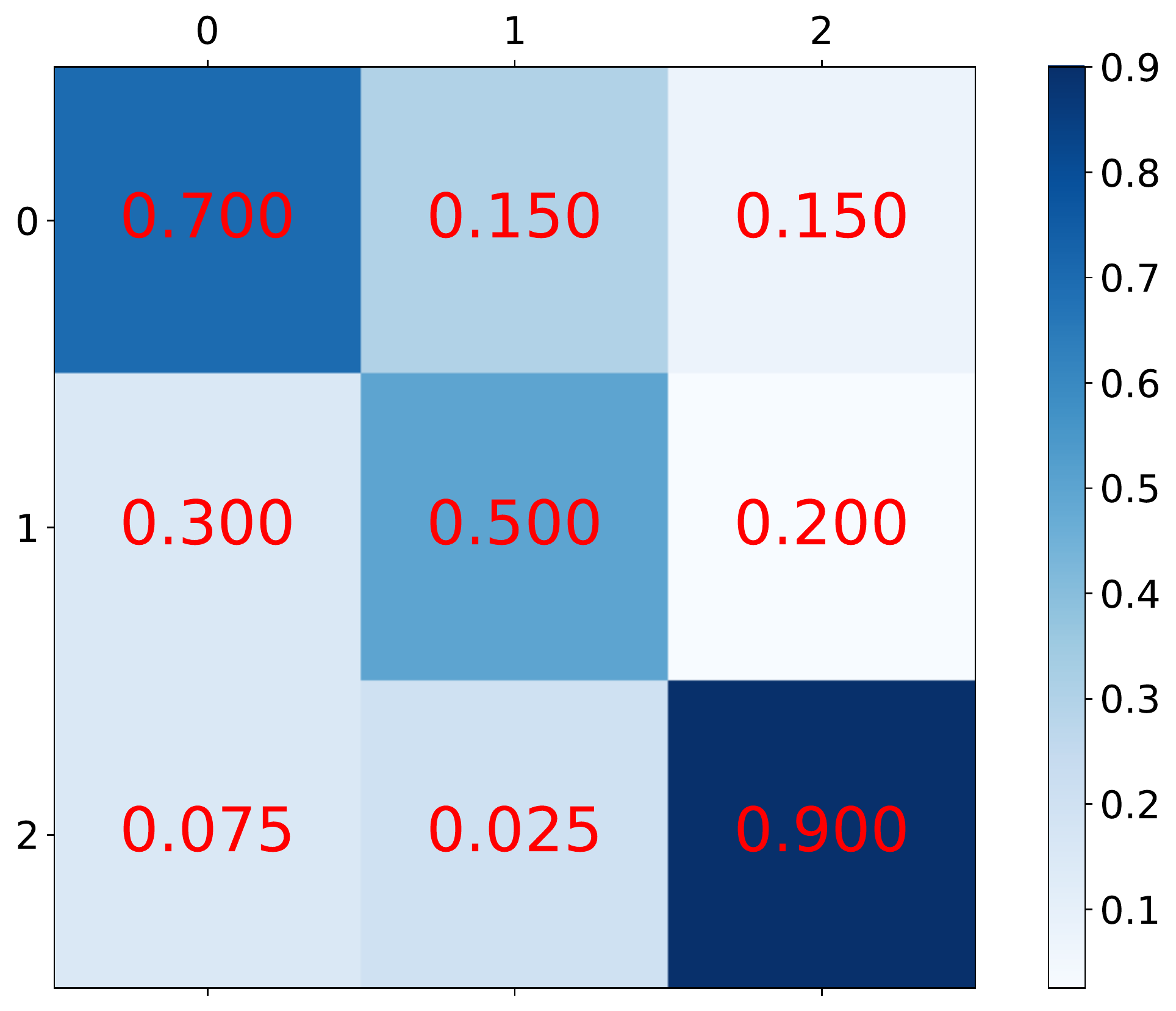}
\end{center}
\caption{$p(y|x)$ for the categorical dataset in Fig. \ref{fig:beta_th_vs_beta} and Fig. \ref{fig:subset_separation}. The value in $i^\text{th}$ row and $j^\text{th}$ column denotes $p(y=j|x=i)$. $p(x)$ is uniform.}
\label{fig:py_x}
\end{figure}

In this section we study the behavior of $p(z|x)$ on the phase transitions. We use the same categorical dataset (where $|X|=|Y|=|Z|=3$ and $p(x)$ is uniform, and $p(y|x)$ is given in Fig. \ref{fig:py_x}). In Fig. \ref{fig:subset_separation} we show the $p(z|x)$ on the simplex before and after each phase transition. We see that the first phase transition corresponds to the separation of $x=2$ (belonging to $y=2$) w.r.t. $x\in\{0,1\}$ (belonging to classes $y\in\{0,1\}$), on the $p(z|x)$ simplex. The second phase transition corresponds to the separation of $x=0$ with $x=1$. Therefore, each phase transition corresponds to the ability to distinguish subset of examples, and learning of new classes.

\begin{figure}[hp]
\begin{center}
\begin{subfigure}[b]{0.51\textwidth}
\centering
\includegraphics[width=\textwidth]{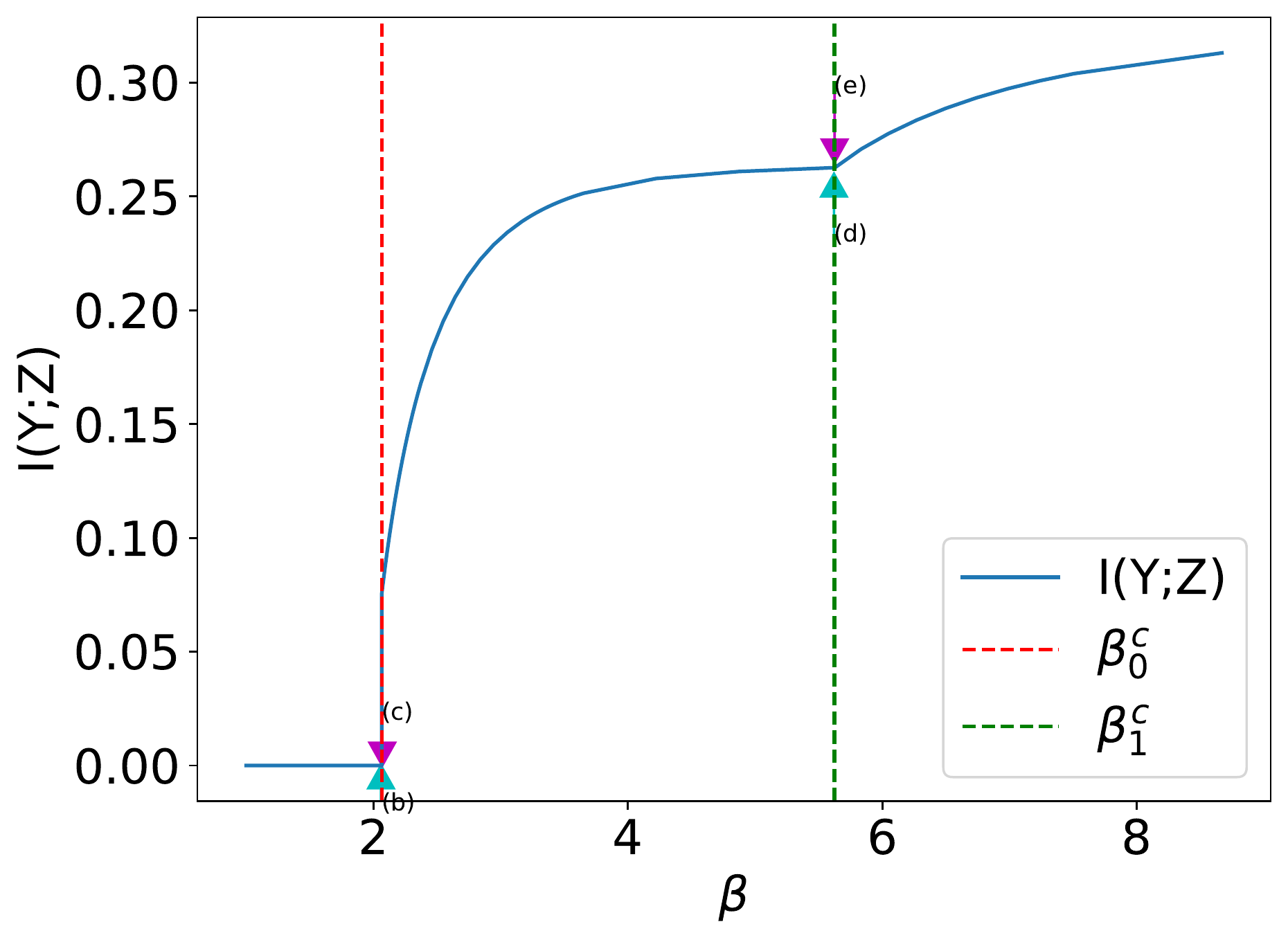}
\caption{}
\end{subfigure}
\begin{subfigure}[b]{0.49\textwidth}
\centering
\includegraphics[width=\textwidth]{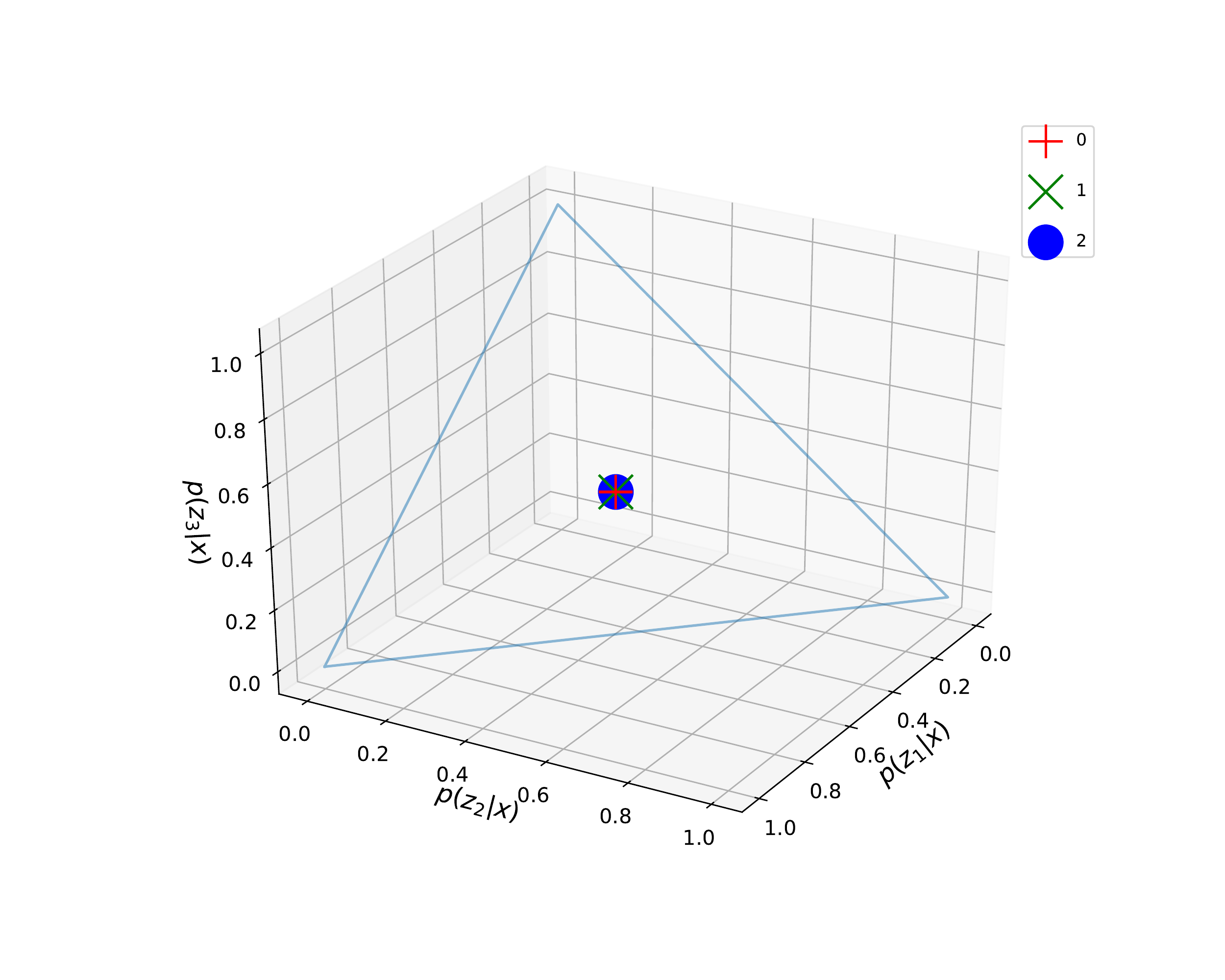}
\caption{}
\end{subfigure}
\begin{subfigure}[b]{0.49\textwidth}
\centering
\includegraphics[width=\textwidth]{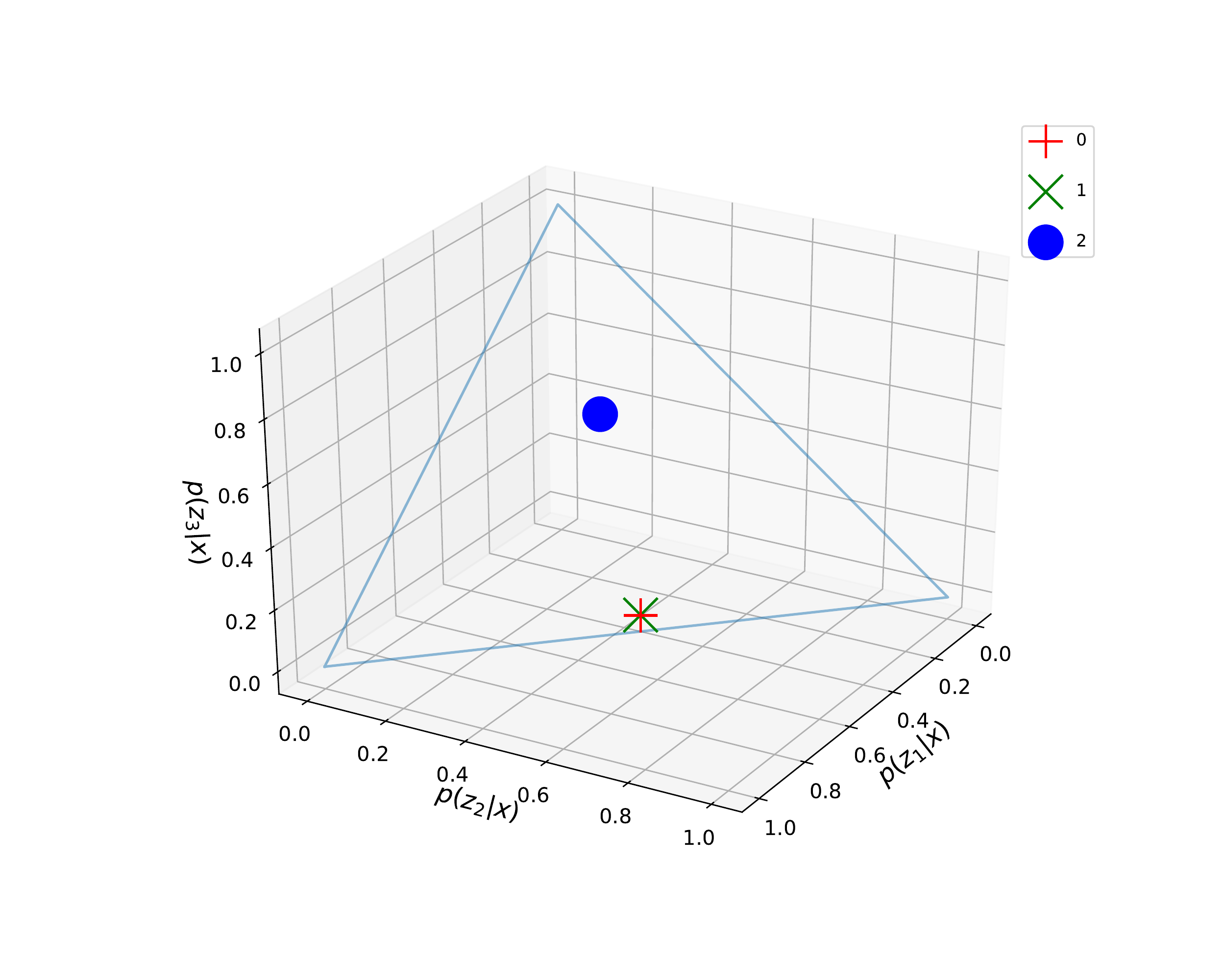}
\caption{}
\end{subfigure}
\begin{subfigure}[b]{0.49\textwidth}
\centering
\includegraphics[width=\textwidth]{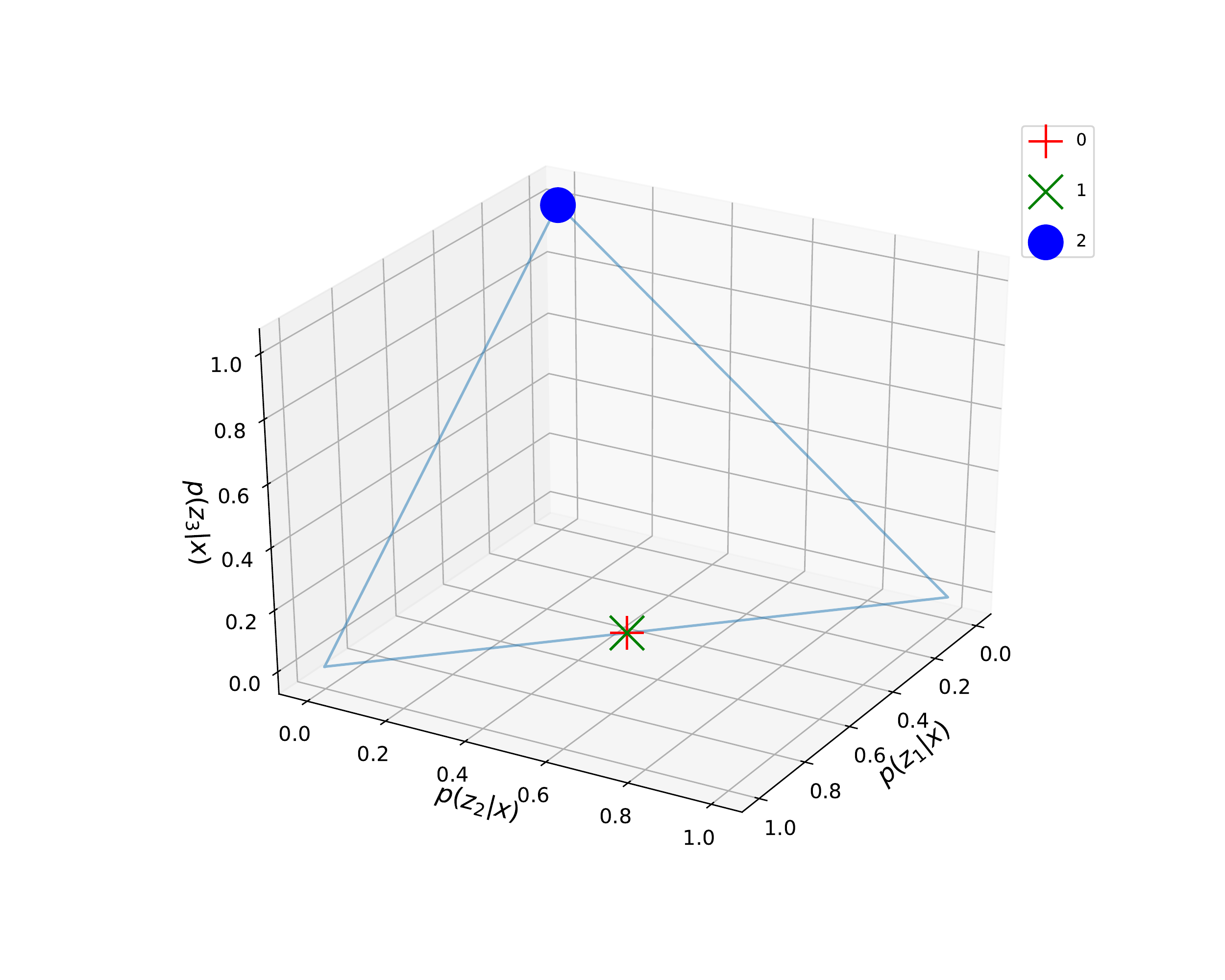}
\caption{}
\end{subfigure}
\begin{subfigure}[b]{0.49\textwidth}
\centering
\includegraphics[width=\textwidth]{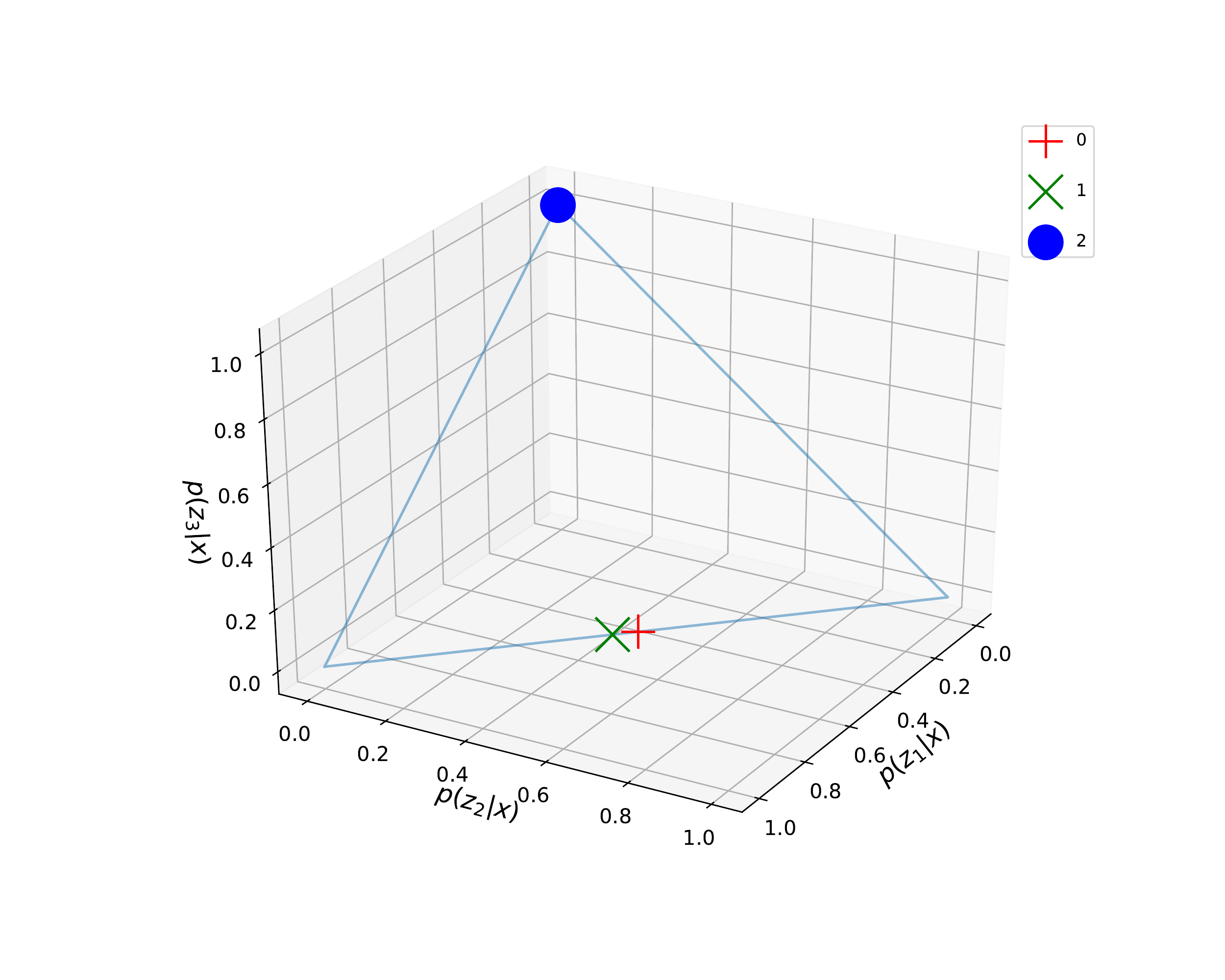}
\caption{}
\end{subfigure}
\end{center}
\caption{(a) $I(Y;Z)$ vs. $\beta$ for the dataset given in Fig. \ref{fig:py_x}. The phase transitions are marked with vertical dashed line, with $\beta_0^c=2.065571$ and $\beta_1^c=5.623333$. (b)-(e) Optimal $p_\beta^*(z|x)$ for four values of $\beta$, i.e. (b) $\beta=2.060$, (c) $\beta=2.070$, (d) $\beta=5.620$ (e) $\beta=5.625$ (their $\beta$ values are also marked in (a)), where each marker  denotes $p(z|x=i)$ for a given $i\in\{0,1,2\}$.
}
\label{fig:subset_separation}
\end{figure}

\subsection{MNIST Experiment Details}
\label{app:MNIST_exp}

\begin{figure}[!ht]
\begin{center}
\includegraphics[width=0.3\columnwidth]{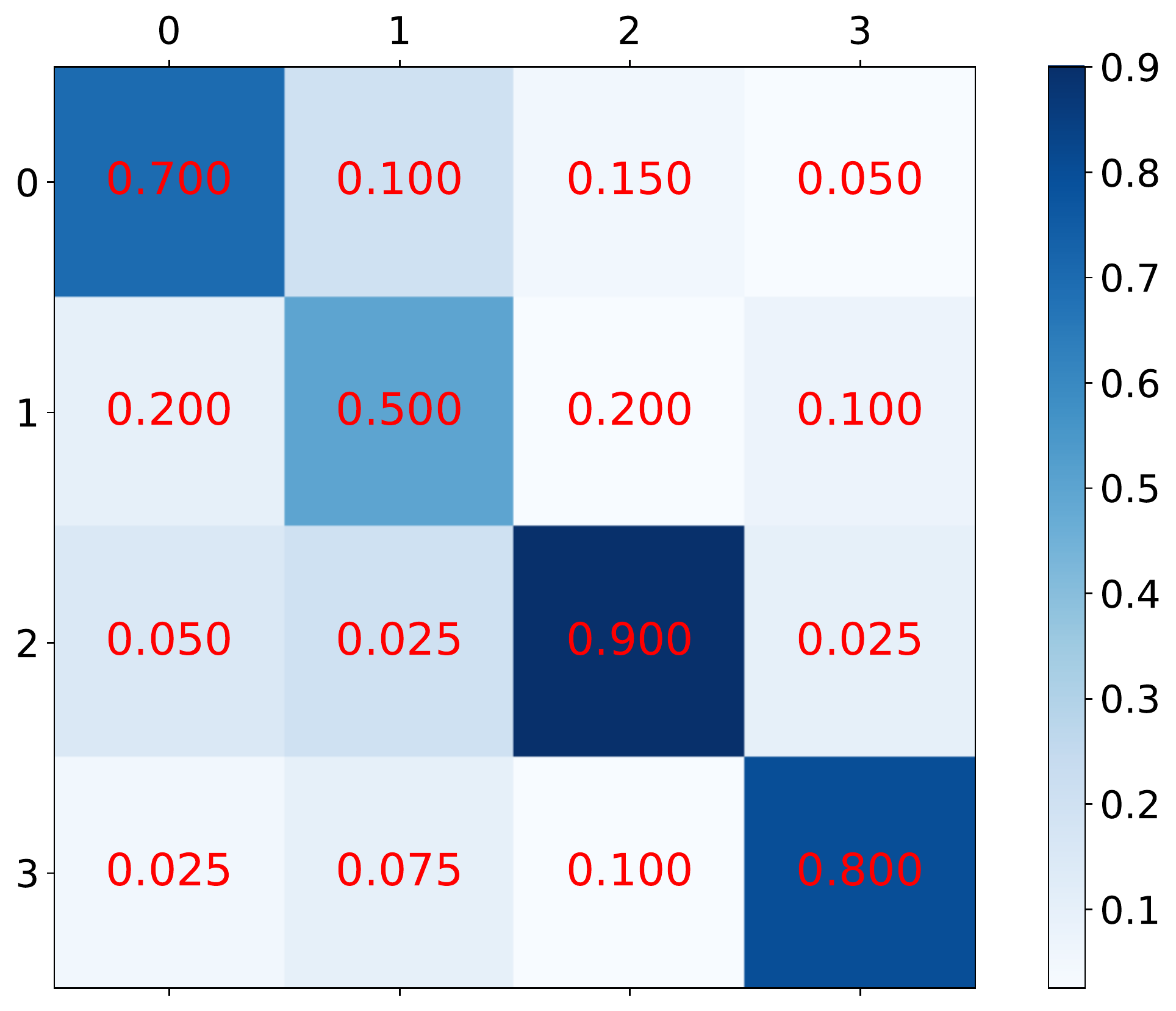}
\end{center}
\caption{Confusion matrix for MNIST experiment. The value in $i^\text{th}$ row and $j^\text{th}$ column denotes $p(\tilde{y}=j|y=i)$ for the label noise.}
\label{fig:py_x_4}
\end{figure}

We use the MNIST training examples with class $0, 1, 2, 3$, with a hidden label-noise matrix as given in Fig. \ref{fig:py_x_4}, based on which at each minibatch we dynamically sample the observed label. We use conditional entropy bottleneck (CEB) \citep{fischer2018the} as the variational IB objective, and run multiple independent instances with different the target $\beta$. We jump start learning by started training at $\beta=100$ for 100 epochs, annealing $\beta$ from 100 down to the target $\beta$ over 600 epochs, and continue to train at the target epoch for another 800 epochs. The encoder is a three-layer neural net, where each hidden layer has 512 neurons and leakyReLU activation, and the last layer has linear activation. The classifier $p(y|z)$ is a 2-layer neural net with a 128-neuron ReLU hidden layer. The backward encoder $p(z|y)$ is also a 2-layer neural net with a 128-neuron ReLU hidden layer. We trained with Adam \citep{kingma2013auto} at learning rate of $10^{-3}$, and  anneal down with factor $1/(1+0.01\cdot \text{epoch})$. For Alg. 1, for the $f_\thetaa$ we use the same architecture as the encoder of CEB, and use $|Z|=50$ in Alg. 1.

\subsection{CIFAR10 Experiment Details}
\label{app:phase_cifar_details}

We use the same CIFAR10 class confusion matrix provided in~\cite{wu2019learnabilityEntropy} to generate noisy labels with about 20\% label noise on average (reproduced in Table~\ref{table:cifar_confusion}).
We trained $28 \times 1$ Wide ResNet~\citep{resnet,wideresnet} models using the open source implementation from~\cite{autoaugment} as encoders for the Variational Information Bottleneck (VIB)~\citep{alemi2016deep}.
The 10 dimensional output of the encoder parameterized a mean-field Gaussian with unit covariance.
Samples from the encoder were passed to the classifier, a 2 layer MLP.
The marginal distributions were mixtures of 500 fully covariate 10-dimensional Gaussians, all parameters of which are trained.

With this standard model, we trained 251 different models at $\beta$ from 1.0 to 6.0 with step size of 0.02.
As in~\citet{wu2019learnability}, we jump-start learning by annealing $\beta$ from 100 down to the target $\beta$.
We do this over the first 4000 steps of training.
The models continued to train for another 56,000 gradient steps after that, a total of 600 epochs.
We trained with Adam~\citep{adam} at a base learning rate of $10^{-3}$, and reduced the learning rate by a factor of 0.5 at 300, 400, and 500 epochs.
The models converged to essentially their final accuracy within 40,000 gradient steps, and then remained stable.
The accuracies reported in Figure~\ref{fig:phase_cifar_exp} are averaged across five passes over the training set. We use $|Z|=50$ in Alg. 1.

\begin{table}[ht]
\begin{center}
\caption{
Class confusion matrix used in CIFAR10 experiments, reproduced from~\citep{wu2019learnabilityEntropy}.
The value in row $i$, column $j$ means for class $i$, the probability of labeling it as class $j$. The mean confusion across the classes is 20\%.
}
\label{table:cifar_confusion}
\vskip 0.1in
\setlength{\tabcolsep}{4pt}  
\begin{tabular}{r | c c c c c c c c c c }
\tiny
& Plane & Auto. & Bird & Cat & Deer & Dog & Frog & Horse & Ship & Truck \\ [0.2ex]
\hline
\hline\noalign{\smallskip}
Plane &
0.82232 & 0.00238 & 0.021   & 0.00069 & 0.00108 & 0       & 0.00017 & 0.00019 & 0.1473  & 0.00489 \\ [0.2ex]
Auto. &
0.00233 & 0.83419 & 0.00009 & 0.00011 & 0       & 0.00001 & 0.00002 & 0       & 0.00946 & 0.15379 \\ [0.2ex]
Bird &
0.03139 & 0.00026 & 0.76082 & 0.0095  & 0.07764 & 0.01389 & 0.1031  & 0.00309 & 0.00031 & 0       \\ [0.2ex]
Cat &
0.00096 & 0.0001  & 0.00273 & 0.69325 & 0.00557 & 0.28067 & 0.01471 & 0.00191 & 0.00002 & 0.0001  \\ [0.2ex]
Deer &
0.00199 & 0       & 0.03866 & 0.00542 & 0.83435 & 0.01273 & 0.02567 & 0.08066 & 0.00052 & 0.00001 \\ [0.2ex]
Dog &
0       & 0.00004 & 0.00391 & 0.2498  & 0.00531 & 0.73191 & 0.00477 & 0.00423 & 0.00001 & 0       \\ [0.2ex]
Frog &
0.00067 & 0.00008 & 0.06303 & 0.05025 & 0.0337  & 0.00842 & 0.8433  & 0       & 0.00054 & 0       \\ [0.2ex]
Horse &
0.00157 & 0.00006 & 0.00649 & 0.00295 & 0.13058 & 0.02287 & 0       & 0.83328 & 0.00023 & 0.00196 \\ [0.2ex]
Ship &
0.1288  & 0.01668 & 0.00029 & 0.00002 & 0.00164 & 0.00006 & 0.00027 & 0.00017 & 0.83385 & 0.01822 \\ [0.2ex]
Truck &
0.01007 & 0.15107 & 0       & 0.00015 & 0.00001 & 0.00001 & 0       & 0.00048 & 0.02549 & 0.81273 \\ [0.2ex]
\hline
\end{tabular}
\end{center}
\end{table}

\section{Appendix for Chapter \ref{chap5:distillation}}

\subsection{Binning can be practically lossless}
\label{LosslessBinningAppendix}

If the conditional probability distribution $p_1(w)\equiv P(Y\hbox{=}1|W\hbox{=}w)$
is a slowly varying function and the range of $W$ is divided into tiny bins, then 
$p_1(w)$ will be almost constant within each bin and so binning $W$ (discarding information about the exact position of $W$ within a bin) should destroy almost no information about $Y$.

This intuition is formalized by the following theorem, which says that a random variable $W$
 can be binned into a finite number of bins at the cost of losing arbitrarily little information about $Y$.
\begin{theorem}
\label{LosslessBinningTheorem}
Binning can be practically lossless: Given a random variable $Y\in\{1,2\}$ and
a uniformly distributed random variable $W \in [0,1]$
such that the conditional probability distribution 
$p_1(w)\equiv P(Y\hbox{=}1|W\hbox{=}w)$ is monotonic,
there exists for any real number $\epsilon>0$ 
a vector $\b\in\mathbb{R}^{N-1}$ of bin boundaries such 
that the information reduction
$$\Delta I\equiv I[W,Y]  - I[B(W,\b),Y] < \epsilon,$$
where $B$ is the binning function defined by
\eq{BinningFuncDef}.
\end{theorem}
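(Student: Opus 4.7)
The plan is to express the information loss directly as an average of differences of binary entropies, and then exploit uniform continuity of the binary entropy function together with the assumed monotonicity of $p_1$ to bound this average by $\epsilon$.

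First, I would rewrite the quantity of interest. Since $Z = B(W,\b)$ is a deterministic function of $W$, we have the Markov chain $Y - W - Z$, and $I(W,Y) - I(Z,Y) = H(Y|Z) - H(Y|W) \ge 0$. Writing $h(p) \equiv -p\log p - (1-p)\log(1-p)$ for the binary entropy, we have $H(Y|W) = \mathbb{E}_{W}[h(p_1(W))]$, and, because conditional on $Z=k$ the probability of $Y=1$ equals the bin average $\bar{p}_k = (b_k - b_{k-1})^{-1}\int_{b_{k-1}}^{b_k} p_1(w)\,dw$, we also have $H(Y|Z) = \mathbb{E}_W[h(\bar{p}(W))]$ where $\bar{p}(w) \equiv \bar{p}_k$ on the bin containing $w$. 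Hence
\[
\Delta I \;=\; \int_0^1 \bigl[h(\bar{p}(w)) - h(p_1(w))\bigr]\,dw,
\]
and the concavity of $h$ independently confirms that the integrand is non-negative.

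Next, I would use monotonicity to control the oscillation of $p_1$ within each bin. Since $p_1: [0,1]\to[0,1]$ is monotonic (WLOG increasing), its range $[p_1(0^+), p_1(1^-)]$ lies in $[0,1]$. Given $\delta > 0$, I would choose bin boundaries $0 = b_0 < b_1 < \dots < b_{N-1} < b_N = 1$ with $N = \lceil 1/\delta\rceil$ such that on each bin the oscillation $\sup_{w \in [b_{k-1},b_k]} p_1(w) - \inf_{w \in [b_{k-1},b_k]} p_1(w) \le \delta$. Such a choice exists because monotonicity lets us either (i) partition the $y$-axis $[0,1]$ into $N$ equal subintervals and take $b_k$ to be the preimages under a right-continuous inverse of $p_1$, or (ii) in degenerate cases where $p_1$ jumps, place $b_k$ directly at the jump. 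With this choice, for every $w$ in bin $k$ both $p_1(w)$ and the average $\bar{p}_k$ lie in an interval of length at most $\delta$, so $|\bar{p}(w) - p_1(w)| \le \delta$ everywhere.

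Finally, I would invoke uniform continuity. Because $h$ is continuous on the compact interval $[0,1]$ (with $h(0) = h(1) = 0$), it is uniformly continuous, so for every $\epsilon > 0$ there exists $\delta > 0$ with $|h(a) - h(b)| < \epsilon$ whenever $|a-b| < \delta$. Choosing this $\delta$ and the corresponding bins as above yields $|h(\bar{p}(w)) - h(p_1(w))| < \epsilon$ pointwise, hence $\Delta I < \epsilon$, completing the proof.

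\textbf{Main obstacle.} The only delicate point is the construction of the bin boundaries in step two: if $p_1$ has flat stretches or jumps, one cannot simply invert $p_1$ pointwise. I would handle this by appealing to the existence of a generalized (right-continuous) inverse of a monotonic function, which always partitions $[0,1]$ so that the oscillation on each piece is at most the mesh of the chosen partition of the range. Aside from this technicality, the estimate is a direct application of uniform continuity, which is why the non-Lipschitz behavior of $h$ near $0$ and $1$ causes no difficulty: we only need an $\epsilon$-$\delta$ bound, not a Lipschitz constant.
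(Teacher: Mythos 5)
Your proof is correct but takes a genuinely different route from the paper's. You fold the two terms $-p\log p$ and $-(1-p)\log(1-p)$ into the binary entropy $h$, write $\Delta I=\int_0^1\bigl[h(\bar p(w))-h(p_1(w))\bigr]\,dw$, choose bins so that the oscillation of the monotone $p_1$ on each bin is at most $\delta$, and then invoke uniform continuity of $h$ on $[0,1]$. The paper instead keeps the two terms separate, works with the single-term function $h(x)=-x\log x$, constructs its monotone rearrangement $h_*$ (by reflecting $h$ about its maximum at $x=1/e$), and then forms the auxiliary \emph{monotone} function $h_+(w)=h_*[p_1(w)]-h_*[p_2(w)]$; bin boundaries are placed so as to equispace the values of $h_+$, which yields the explicit quantitative bound $\Delta I<6/N$ for $N$ bins. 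That is what the extra machinery buys: your argument establishes existence only, because the $\delta$ produced by uniform continuity is not explicit, whereas the paper's argument tells you exactly how many bins suffice. Since the theorem as stated asks only for existence, your shorter proof is adequate. The technical point you flag about jumps and plateaus is the right one to worry about and is resolved exactly as you say: a monotone $p_1$ on $[0,1]$ has at most $\lfloor 1/\delta\rfloor$ jumps of size exceeding $\delta$, so one can place a finite number of additional boundaries at these jumps, and the remaining exceptional points form a set of Lebesgue measure zero that contributes nothing to the integral defining $\Delta I$.
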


\begin{proof}
The binned bivariate probability distribution is 
\beq{FineBinnedProbEq}
P_{ij}\equiv P(Z\hbox{=}j,Y=i) = \int_{b_{j-1}}^{b_j}p_i(w)dw
\eeq
with marginal distribution
\beq{binnedMarginalEq}
P^Z_j\equiv P(Z\hbox{=}j) = b_j-b_{j-1}.
\eeq
Let $\pbar_i(w)$ denote the piecewise constant function 
that in the $j^{\rm th}$ bin $b_{j-1}<w\le b_j$ takes the average value of $p_i(w)$ 
in that bin, \ie, 
\beq{pbarDefEq}
\pbar_i(w)\equiv {1\over b_j-b_{j-1}}\int_{b_{j-1}}^{b_j}p_i(w)dw={P_{ij}\over P^Z_j}.
\eeq
These definitions imply that
\beq{sumIntegralEq}
-\sum_{j=1}^N P_{ij}\log{P_{ij}\over P^Z_j}=\int_0^1 h\left[\pbar_i(w)\right]dw,
\eeq
where $h(x)\equiv -x\log x$.
Since $h(x)$ vanishes at $x=0$ and $x=1$ and takes its intermediate maximum 
value 
at $x=1/e$, the function 
\beq{hstarDefEq}
h_*(x)\equiv
\left\{
\begin{tabular}{ll}
$h(x)$			&if $x<e^{-1}$,\\
$2h(e^{-1})-h(x)$	&if $x\ge e^{-1}$\\
\end{tabular}
\right.
\eeq
is continuous and increases monotonically for
$x\in [0,1]$,
with $h'_*=|h'(x)|$.
This means that if we define the non-negative monotonic function
$$h_+(w)\equiv h_*[p_1(w)] - h_*[p_2(w)],$$
it changes at least as fast as either of its terms, so that 
for any $w_1$, $w_2\in [0,1]$, we have
\beqa{hChangeBoundEq}
\left|h\left[p_i(w_2)\right] - h\left[p_i(w_1)\right]\right|
&\le&\left|h_*\left[p_i(w_2)\right] - h_*\left[p_i(w_1)\right]\right|\nonumber\\
&\le&|h_+(w_2)-h_+(w_1)|.
\eeqa
We will exploit this bound to limit how much $h\left[p_i(w)\right]$ can vary within a bin.
Since $h_+(0)\ge 0$ and $h_+(1)\le 2h_*(1)=4/e\ln 2\approx 2.12 < 3$,
we pick $N-1$ bins boundaries $b_k$ implicitly defined by
\beq{binPlacementEq}
h_+(b_j)=h_+(0) + [h_+(1)-h_+(0)] {j\over N}
\eeq
for some integer $N\gg 1$.
Using \eq{hChangeBoundEq}, this implies that 
\beq{pbarBoundEq}
\left|h\left[\pbar_i(w)\right] - h\left[p_i(w)\right]\right|\le {h_+(1)-h_+(0)\over N}<{3\over N}.
\eeq

The mutual information between two variables is given by 
$I(Y,U)=H(Y)-H(Y|U)$, where the second term (the conditional entropy is given by the following expressions in the cases that we need: 
\beqa{BinnedConditionalEntropyEq}
H(Y|Z)&=&-\sum_{i=1}^N\sum_{j=1}^2 P_{ij}\log{P_{ij}\over P_i},\\
H(Y|W)&=&-\sum_{i=1}^2 \int_0^1 p_i(w)\log p_i(w)dw\label{HYWeq}.
\eeqa
The information loss caused by our binning is therefore
\beqa{DeltaIeq}
\Delta I&=&I(W,Y)  - I(Z,Y) = H(Y|Z)  - H(Y|W)\nonumber\\
           &=&-\sum_{i=1}^2\left( \sum_{j=1}^N P_{ij}\log{P_{ij}\over P_j^Z}+\int_0^1 h\left[p_i(w)\right]dw\right)\nonumber\\
           &=&\sum_{i=1}^2 \int_0^1 \left(h\left[\pbar_i(w)\right] - h\left[p_i(w)\right]\right)dw\nonumber\\
           &\le&\sum_{i=1}^2 \int_0^1 \left|h\left[\pbar_i(w)\right] - h\left[p_i(w)\right]\right|dw\nonumber\\
           &<&\sum_{i=1}^2 \int_0^1 {3\over N} = {6\over N},
\eeqa
where we used \eq{sumIntegralEq}  to obtain the $3^{\rm rd}$ row and
\eq{pbarBoundEq} to obtain the last row.
This means that however small an information loss tolerance $\epsilon$ we want, 
we can guarantee $\Delta I<\epsilon$ by choosing $N>6/\epsilon$ bins placed according to 
\eq{binPlacementEq}, which completes the proof.
\end{proof}

Note that the proof still holds if the function $p_i(w)$ is not monotonic, as long as the number of times $M$ that it changes direction is finite: in that case, we can simply repeat the above-mentioned binning procedure separately in the $M+1$ intervals where $p_i(w)$ is monotonic, using 
$N>6/\epsilon$ bins in each interval, \ie, a total of $N>6M/\epsilon$ bins.

\subsection{More varying conditional probability boosts mutual information}

Mutual information is loosely speaking a measure of how far a probability distribution $P_{ij}$ is from being separable,
\ie, a product of its two marginal distributions.\footnote{Specifically, the mutual information is the 
Kullback–Leibler divergence between the bivariate probability distribution and the product of its marginals.}
If all conditional probabilities for one variable $Y$ given the other variable $Z$ are identical, then the distribution is separable and the mutual information $I(Z,Y)$ vanishes, so one may intuitively expect that 
making conditional probabilities more different from each other will increase $I(Z,Y)$.
The following theorem formalizes this intuition in a way that enables Theorem~\ref{ContiguousBinningTheorem}.
\begin{theorem}
\label{informationTheorem}
Consider two discrete random variables $Z\in\{1,...,n\}$ and $Y\in\{1,2\}$ 
and define $P_i\equiv P(Z=i)$,\\
 $p_i\equiv P(Y=1|Z=i)$, so that the joint probability distribution
$P_{ij}\equiv P(Z=i,Y=j)$ is given by\\
$P_{i1} = P_i p_i$, $P_{i2} = P_i (1- p_i)$.
If two conditional probabilities $p_k$ and $p_l$ differ, then we increase the mutual information 
$I(Y,Z)$ if we bring them further apart by adjusting $P_{kj}$ and $P_{lj}$ in such a way that both marginal distributions remain unchanged.
\end{theorem}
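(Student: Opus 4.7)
The plan is to reduce the statement to a one-variable strict concavity argument using the binary entropy function. Writing the mutual information as $I(Y,Z) = H(Y) - H(Y|Z)$, I would first observe that since both marginal distributions are held fixed, $H(Y)$ is unchanged by the adjustment, so it suffices to show that $H(Y|Z)$ strictly decreases when $p_k$ and $p_l$ are pushed apart subject to the constraints. Expanding $H(Y|Z) = \sum_i P_i\,h(p_i)$ with $h(p)\equiv -p\log p - (1-p)\log(1-p)$ the binary entropy, and noting that only $P_{kj}$ and $P_{lj}$ change while $P_k$ and $P_l$ themselves are fixed (since the marginal of $Z$ is preserved), the only affected term is $P_k h(p_k) + P_l h(p_l)$.

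Next, I would pin down the constraint that the marginal of $Y$ is preserved. This forces $P_k p_k + P_l p_l$ to remain constant, since $P(Y=1) = \sum_i P_i p_i$ and only the $k$-th and $l$-th summands can change. Equivalently, writing $w_k \equiv P_k/(P_k+P_l)$ and $w_l \equiv P_l/(P_k+P_l)$, the weighted average $w_k p_k + w_l p_l$ stays fixed at some value $\bar p$, and what we must show is that the weighted entropy $w_k h(p_k) + w_l h(p_l)$ strictly decreases as the pair $(p_k,p_l)$ moves to a new pair $(p_k',p_l')$ with the same weighted mean $\bar p$ but strictly larger separation $|p_k'-p_l'| > |p_k-p_l|$.

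The core of the argument is then strict concavity of $h$ on $[0,1]$: for any two distinct points on a strictly concave curve, the value of $w_k h(p_k) + w_l h(p_l)$ equals the height at $\bar p$ of the chord joining $(p_k,h(p_k))$ and $(p_l,h(p_l))$, and by strict concavity this chord height is a strictly decreasing function of the chord's horizontal span when the chord's $\bar p$-intercept is fixed. Formally, parameterizing $p_k(t) = \bar p - w_l t$, $p_l(t) = \bar p + w_k t$ so that the weighted mean is preserved for all $t$, I would compute $\frac{d^2}{dt^2}\bigl[w_k h(p_k(t)) + w_l h(p_l(t))\bigr] = w_k w_l^2 h''(p_k) + w_l w_k^2 h''(p_l) < 0$ using $h''(p) = -1/[p(1-p)\ln 2] < 0$, and combine with the fact that the first derivative vanishes at $t=0$ (when $p_k=p_l=\bar p$) only in the degenerate case, while in general moving $|t|$ larger in the direction that increases $|p_k-p_l|$ strictly decreases the weighted entropy.

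The main obstacle, such as it is, is handling the sign/direction bookkeeping cleanly: the statement ``further apart'' does not prescribe which of $p_k,p_l$ moves in which direction, and the allowed moves live in a one-dimensional affine subspace (parameterized by $t$ above). The cleanest way I see is to show that along this one-dimensional family the weighted entropy is a strictly concave function of $t$ with a unique maximum at the point where $p_k = p_l$, so that \emph{any} change that strictly increases $|p_k - p_l|$ strictly decreases $P_k h(p_k) + P_l h(p_l)$, hence strictly decreases $H(Y|Z)$, hence strictly increases $I(Y,Z)$, completing the proof.
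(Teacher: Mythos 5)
Your proposal is correct. It identifies the same one-dimensional family of marginal-preserving perturbations that the paper uses, but the two arguments differ in how they close: the paper computes the \emph{first} derivative of $I(Z,Y)$ with respect to the perturbation parameter at the current configuration and shows it equals $\log\bigl[(1/p_k-1)/(1/p_l-1)\bigr]>0$ when $p_k<p_l$, so it only establishes the claim for a sufficiently small adjustment; you instead write $I(Y,Z)=H(Y)-H(Y|Z)$, isolate the affected term $P_k h(p_k)+P_l h(p_l)$, and show it is strictly concave along the family with its unique critical point at $p_k=p_l$. (The two are the same computation at heart, since the paper's derivative is exactly $h'(p_k)-h'(p_l)$ and its positivity is equivalent to $h'$ being strictly decreasing.) What your version buys is the global, finite-adjustment statement rather than the infinitesimal one, and it makes explicit that the mechanism is concavity of the binary entropy; the cost is the small bookkeeping issue you already flag, namely that ``further apart'' must be read as moving along the family away from $t=0$ on the same side (so $p_k$ and $p_l$ do not cross), since $g(t)=w_kh(p_k(t))+w_lh(p_l(t))$ is not symmetric in $t$ when $P_k\neq P_l$ — the same directional convention the paper adopts by fixing $p_k<p_l$ and taking $\epsilon>0$.
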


\begin{proof}
The only such change that keep the marginal distributions for both $Z$ and $Y$ unchanged takes the form 
$$
\left(
\begin{tabular}{llllll}
$P_1 p_1$		&$\cdots$		&$P_k p_k - \epsilon$			&$\cdots$		&$P_l p_l+ \epsilon$		&$\cdots$		\\
$P_1(1 - p_1)$		&$\cdots$		&$P_k(1 - p_k) + \epsilon$		&$\cdots$		&$P_l(1 - p_l) - \epsilon$		&$\cdots$		
\end{tabular}
\right)
$$
where the parameter $\epsilon$ that must be kept small enough for all probabilities to remain non-negative.
Without loss of generality, we can assume that $p_k<p_l$, so that we make the conditional probabilities
\beqa{conditionalProbEq}
P(Y=1|Z=k)={P_{k1}\over P_k} = p_k - \epsilon/P_k,\\
P(Y=1|Z=l)={P_{l1}\over P_l} = p_l + \epsilon/P_l
\eeqa
more different when increasing $\epsilon$ from zero.
Computing and differentiating the mutual information with respect to $\epsilon$, most terms cancel and we find that 
\beq{IderivativeEq}
{\partial I(Z,Y) \over\partial\epsilon} \bigg\rvert_{\epsilon=0}= 
\log\left[
{1/p_k-1  \over 1/p_l - 1}
\right] > 0
\eeq
which means that adjusting the probabilities with a sufficiently tiny $\epsilon>0$ will increase the mutual information, completing the proof.
\end{proof}

\section{Appendix for Chapter \ref{chap6:causal}}

\subsection{Hyperparameter \texorpdfstring{$\lambda$}{Lg} selection}
\label{app:select_lambda}

For selecting an appropriate hyperparameter $\lambda$, we run our experiments for the synthetic dataset with $\lambda=0.001,0.002,0.005,0.01,0.02,0.05$. For each experiment involving $N$ time series, we append $\ceil[\big]{N/2}$  independent time series $v_{t-1}^{(s)}$ ($s=1,2,...\ceil[\big]{N/2}$) to $\X_{t-1}$, generated by randomly sampling $\ceil[\big]{N/2}$ time series from $\X_{t-1}$ and performing random permutation across the examples. We also append $\ceil[\big]{N/2}$ time series $w_t^{(i)}, i=1,2,...\ceil[\big]{N/2}$ to $x_t^{(i)}$, such that $w_t^{(i)}=X_{t-1}^{(i)}\cdot Q$, where $Q$ is a fixed random  $K\times1$ matrix, so that we know $X_{t-1}^{(i)}$ causes $w_{t}^{(i)}$, and $v_{t-1}^{(s)}$ does not cause $w_{t}^{(i)}$ for any $i,s$. We apply Alg. \ref{alg:learnable_noise} to the augmented dataset, and produce the estimated predictive strength $W_{ji}$ from $[\X_{t-1},\mathbf{v}_{t-1}]$ to $[\mathbf{x}_t,\mathbf{w}_t]$. For each hyperparameter $\lambda$, we then fit a Gaussian distribution $G_{v\to w}$ to the estimated predictive strengths from $v_{t-1}^{(s)}$ to $w_{t}^{(i)}$ ($s=1,2,...\ceil[\big]{N/2};j=1,2,...\ceil[\big]{N/2}$), and fit another Gaussian distribution $G_{x\to w}$ to the estimated predictive strengths from $X_{t-1}^{(i)}$ to $w_{t}^{(i)}$, $i=1,2,...\ceil[\big]{N/2}$, and select the $\lambda$ such that the upper $4\sigma$ value of $G_{v\to w}$ is smaller than the lower $4\sigma$ value of $G_{x\to w}$. In this way, for the known causal and non-causal relations, they are sufficiently apart. We find that $\lambda=0.001$ and $\lambda=0.002$ satisfy this criterion, while larger $\lambda$ fails to satisfy. We then set $\lambda=0.002$ for all our experiments.

\subsection{Proof and analysis of the Minimum Predictive Information regularized risk}
\label{app:W_proof}

In this section we prove the three properties of $W_{ji}$ in Section \ref{sec:our_method}, and analyze why it is likely to select variables that directly causes the variable of interest.

Firstly we state the assumption that will be used throughout this section:

\begin{assumption}
\label{assump:f_theta}
Assume that $f_\theta\in\mathcal{F}$ is a continuous function and has enough capacity so that it can approximate any $\int dx^{(i)}_t P(x_t^{(i)}|\mathbf{X}_{t-1})x_t^{(i)}$.  Let $j\neq i$ and assume that $P(X^{(j)}_{t-1})$ has support with intrinsic dimension of $KM$.
\end{assumption}

Also we emphasize that in this paper, the expected risks (with symbol $R$) are w.r.t. the distributions, and the empirical risks (with symbol $\hat{R}$) are w.r.t. a dataset drawn from the distribution, with finite number of examples. The theorems in this paper are all proved w.r.t. distributions (assuming infinite number of examples). Sample complexity results will be left for future work.

Before going forward with the main proof, we first prove the following lemma.

\subsubsection{Proving a lemma}

\begin{lemma}
\label{lemma:argmin_risk}
Suppose that Assumption \ref{assump:f_theta} holds. Denote

$$R_{\mathbf{X},x^{(i)}}^{\text{MSE}}[f_{\theta}]=\mathbb{E}_{\mathbf{X}_{t-1},x_t^{(i)}}\left[\left(x_t^{(i)}-f_\theta(\mathbf{X}_{t-1})\right)^2\right]$$ as the standard MSE loss, we have
\begin{equation}
\text{argmin}_{f_\theta}R^{\text{MSE}}_{\mathbf{X},x^{(i)}}[f_{\theta}]=\int dx^{(i)}_t P(x_t^{(i)}|\mathbf{X}_{t-1})x_t^{(i)}
\end{equation}

and
\begin{equation}
\label{eq:min_mse_risk}
\text{min}_{f_\theta}R^{\text{MSE}}_{\mathbf{X},x^{(i)}}[f_{\theta}]=\mathbb{E}_{\mathbf{X}_{t-1},x_t^{(i)}}\left[\left(x_t^{(i)}-\int dx^{(i)}_t P(x_t^{(i)}|\mathbf{X}_{t-1})x_t^{(i)}\right)^2\right]
\end{equation}

In other words, for the MSE loss, its minimum is attained when $f_\theta(\mathbf{X}_{t-1})$ is the expectation of $x_t^{(i)}$ conditioned on  $\mathbf{X}_{t-1}$.

\end{lemma}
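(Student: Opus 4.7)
The plan is to prove this via the classical orthogonal decomposition of the mean squared error about the conditional expectation. Define the shorthand $g^*(\mathbf{X}_{t-1}) := \int dx^{(i)}_t\, P(x_t^{(i)}|\mathbf{X}_{t-1})\, x_t^{(i)} = \mathbb{E}[x_t^{(i)}\mid \mathbf{X}_{t-1}]$. For any candidate $f_\theta$, I would add and subtract $g^*(\mathbf{X}_{t-1})$ inside the square and expand:
\begin{equation*}
R^{\text{MSE}}_{\mathbf{X},x^{(i)}}[f_\theta] = \mathbb{E}\!\left[(x_t^{(i)} - g^*(\mathbf{X}_{t-1}))^2\right] + \mathbb{E}\!\left[(g^*(\mathbf{X}_{t-1}) - f_\theta(\mathbf{X}_{t-1}))^2\right] + 2\,\mathbb{E}\!\left[(x_t^{(i)} - g^*(\mathbf{X}_{t-1}))(g^*(\mathbf{X}_{t-1}) - f_\theta(\mathbf{X}_{t-1}))\right].
\end{equation*}

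The next step is to show the cross term vanishes. Conditioning on $\mathbf{X}_{t-1}$ using the tower property of expectation, the factor $(g^*(\mathbf{X}_{t-1}) - f_\theta(\mathbf{X}_{t-1}))$ becomes measurable with respect to $\mathbf{X}_{t-1}$ and therefore constant under the inner expectation, so
\begin{equation*}
\mathbb{E}\!\left[x_t^{(i)} - g^*(\mathbf{X}_{t-1}) \,\middle|\, \mathbf{X}_{t-1}\right] = g^*(\mathbf{X}_{t-1}) - g^*(\mathbf{X}_{t-1}) = 0,
\end{equation*}
killing the cross term. What remains is a non-negative term independent of $f_\theta$ plus a non-negative term $\mathbb{E}[(g^*(\mathbf{X}_{t-1}) - f_\theta(\mathbf{X}_{t-1}))^2]$ that vanishes precisely when $f_\theta \equiv g^*$ almost surely on the support of $P(\mathbf{X}_{t-1})$.

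Finally, I would invoke Assumption \ref{assump:f_theta}: since $f_\theta$ is continuous with enough capacity to approximate any $\int dx^{(i)}_t P(x_t^{(i)}|\mathbf{X}_{t-1})x_t^{(i)}$, the infimum is attained (or can be approached arbitrarily closely) by choosing $f_\theta = g^*$, yielding both $\mathrm{argmin}_{f_\theta} R^{\text{MSE}} = g^*$ and the minimum value in Eq.~(\ref{eq:min_mse_risk}). The main (and essentially only) subtlety is the realizability step: one must ensure the assumed function class is expressive enough so that the infimum is genuinely achieved rather than merely approached — this is exactly what Assumption \ref{assump:f_theta} is built to handle, which is why the support-dimension condition on $P(X^{(j)}_{t-1})$ is stated there. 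Aside from this, the proof is a routine application of the ``conditional expectation is the $L^2$ projection'' principle and requires no new machinery.
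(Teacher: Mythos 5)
Your proof is correct, but it takes a different route from the paper's. You use the classical $L^2$-projection argument: add and subtract the conditional mean $g^*(\mathbf{X}_{t-1})$, expand the square, and kill the cross term via the tower property, so that the risk decomposes into an $f_\theta$-independent term plus $\mathbb{E}\left[(g^*(\mathbf{X}_{t-1})-f_\theta(\mathbf{X}_{t-1}))^2\right]\ge 0$. The paper instead (following Papoulis) writes the risk as $\int d\mathbf{X}_{t-1}\,P(\mathbf{X}_{t-1})\,F(f_\theta(\mathbf{X}_{t-1}))$ and, for each fixed $\mathbf{X}_{t-1}$, treats $f_\theta(\mathbf{X}_{t-1})\in\mathbb{R}^M$ as a free vector, sets $\partial F/\partial f_\theta(\mathbf{X}_{t-1})=0$ to locate the unique stationary point, and verifies the Hessian $2\mathbf{I}$ is positive definite, so the inner integral is minimized pointwise at the conditional mean. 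Your orthogonality decomposition is arguably cleaner in that it exhibits the excess risk explicitly as a squared distance from $g^*$ (making uniqueness up to $P(\mathbf{X}_{t-1})$-null sets immediate) and avoids differentiating under the integral; the paper's pointwise calculus is more elementary and generalizes mechanically to other smooth per-point losses. Both proofs invoke Assumption 1 identically for the realizability step, and your handling of the vector-valued target (the squares being inner products in $\mathbb{R}^M$) goes through without change. No gaps.
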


\begin{proof}

The proof of the lemma is adapted from \cite{papoulis1985probability}. The risk
\begin{equation*}
\begin{aligned}
R^{\text{MSE}}_{\mathbf{X},x^{(i)}}[f_\theta]&=\mathbb{E}_{\mathbf{X}_{t-1},x_t^{(i)}}\left[\left(x_t^{(i)}-f_\theta(\mathbf{X}_{t-1})\right)^2\right]\\
&=\int d\mathbf{X}_{t-1} dx^{(i)}_t \cdot P(\mathbf{X}_{t-1},x^{(i)}_t) \left(x_t^{(i)}-f_\theta(\mathbf{X}_{t-1})\right)^2 \\
&=\int d\mathbf{X}_{t-1} P(\mathbf{X}_{t-1})\int dx^{(i)}_t P(x_t^{(i)}|\mathbf{X}_{t-1})\left(x_t^{(i)}-f_\theta(\mathbf{X}_{t-1})\right)^2
\end{aligned}
\end{equation*}

Note that here  $(x_t^{(i)}-f_\theta(\mathbf{X}_{t-1}))^2\equiv\big\langle x_t^{(i)}-f_\theta(\mathbf{X}_{t-1}), x_t^{(i)}-f_\theta(\mathbf{X}_{t-1}) \big\rangle$ is an inner product in $\R^M$.

For any $\mathbf{X}_{t-1}$, treating $f_\theta(\mathbf{X}_{t-1})\in \R^M$ as a vector, let's calculate its value such that the integral $F(f_\theta(\mathbf{X}_{t-1})):=\int dx^{(i)}_t P(x_t^{(i)}|\mathbf{X}_{t-1})\left(x_t^{(i)}-f_\theta(\mathbf{X}_{t-1})\right)^2$ attains its minimum.

Let
\begin{equation*}
\begin{aligned}
0&=\frac{\partial}{\partial f_{\theta}(\mathbf{X}_{t-1})}F(f_\theta(\mathbf{X}_{t-1}))\\
&=\frac{\partial}{\partial f_{\theta}(\mathbf{X}_{t-1})}\int dx^{(i)}_t P(x_t^{(i)}|\mathbf{X}_{t-1})\left(x_t^{(i)}-f_\theta(\mathbf{X}_{t-1})\right)^2 \\
&= -2\int dx^{(i)}_t P(x_t^{(i)}|\mathbf{X}_{t-1})\left(x_t^{(i)}-f_\theta(\mathbf{X}_{t-1})\right)
\end{aligned}
\end{equation*}

we have
\begin{equation*}
\begin{aligned}
\int dx^{(i)}_t P(x_t^{(i)}|\mathbf{X}_{t-1})x_t^{(i)}&=\int dx^{(i)}_t P(x_t^{(i)}|\mathbf{X}_{t-1})f_\theta(\mathbf{X}_{t-1})\\
&=f_\theta(\mathbf{X}_{t-1})\int dx^{(i)}_t P(x_t^{(i)}|\mathbf{X}_{t-1})\\
&=f_\theta(\mathbf{X}_{t-1})
\end{aligned}
\end{equation*}

Therefore, for any $\mathbf{X}_{t-1}$, $f_\theta(\mathbf{X}_{t-1})=\int dx^{(i)}_t P(x_t^{(i)}|\mathbf{X}_{t-1})x_t^{(i)}$ is the only stationary point for $F(f_\theta(\mathbf{X}_{t-1}))$.

Taking the second derivative, we have
\begin{equation*}
\begin{aligned}
&\frac{\partial^2}{(\partial f_{\theta}(\mathbf{X}_{t-1}))^2}F(f_\theta(\mathbf{X}_{t-1})) = 2\int dx^{(i)}_t P(x_t^{(i)}|\mathbf{X}_{t-1})\mathbf{I} =2\mathbf{I}
\end{aligned}
\end{equation*}
where $\mathbf{I}$ is an $M\times M$ identity matrix, which is always positive definite.

Therefore, for any $\mathbf{X}_{t-1}$, $f_\theta(\mathbf{X}_{t-1})=\int dx^{(i)}_t P(x_t^{(i)}|\mathbf{X}_{t-1})x_t^{(i)}$ is the only global minimum of $F(f_\theta(\mathbf{X}_{t-1}))$ w.r.t. $f_\theta(\mathbf{X}_{t-1})$.

Since 
\begin{equation*}
R^{\text{MSE}}_{\mathbf{X},x^{(i)}}[f_\theta]=\int d\mathbf{X}_{t-1} P(\mathbf{X}_{t-1})F(f_\theta(\mathbf{X}_{t-1}))
\end{equation*}

The minimum of the risk $R_{\mathbf{X},x^{(i)}}[f_\theta]$ is attained iff $F(f_\theta(\mathbf{X}_{t-1}))$ attains minimum at every $\mathbf{X}_{t-1}$, i.e.,
\begin{equation*}
    f_\theta(\mathbf{X}_{t-1})=\int dx^{(i)}_t P(x_t^{(i)}|\mathbf{X}_{t-1})x_t^{(i)}
\end{equation*}
is true for any $\mathbf{X}_{t-1}$. Given Assumption \ref{assump:f_theta}, we know that $f_\theta\in \mathcal{F}$ has enough capacity such that it can approximate any $\int dx^{(i)}_t P(x_t^{(i)}|\mathbf{X}_{t-1})x_t^{(i)}$. Therefore,
\begin{equation*}
\text{argmin}_{f_\theta}R^{\text{MSE}}_{\mathbf{X},x^{(i)}}[f_{\theta}]=\int dx^{(i)}_t P(x_t^{(i)}|\mathbf{X}_{t-1})x_t^{(i)}
\end{equation*}

and
\begin{equation*}
\text{min}_{f_\theta}R^{\text{MSE}}_{\mathbf{X},x^{(i)}}[f_{\theta}]=\mathbb{E}_{\mathbf{X}_{t-1},x_t^{(i)}}\left[\left(x_t^{(i)}-\int dx^{(i)}_t P(x_t^{(i)}|\mathbf{X}_{t-1})x_t^{(i)}\right)^2\right]
\end{equation*}
\end{proof}

\subsubsection{Proof of the three properties of $W_{ji}$}

The three properties are 
\begin{enumerate}[label={(\arabic*)}]
\item If $x^{(j)}\independent x^{(i)}$, then $W_{ji}=0$.
\item $W_{ji}$ is invariant to affine transformation of each individual $X_{t-1}^{(k)},k=1,2,...N$.
\item $W_{ji}$ is invariant to reparameterization of $\theta$ in $f_\theta$ (the mapping remains the same).
\end{enumerate}

\begin{proof}
\textbf{(1)} If $x^{(j)}\independent x^{(i)}$, then $X^{(j)}_{t-1}\independent x^{(i)}_t$. Since $\tilde{X}^{(j)(\eta_j)}_{t-1}=X^{(j)}_{t-1}+\eta_j\cdot\epsilon_j$ where $\epsilon_j\sim N(\mathbf{0},\mathbf{I})$, we have
$\tilde{X}^{(j)(\eta_j)}_{t-1}\independent x^{(i)}_t$. Recall Eq. (\ref{eq:learnable_risk}):
$$R_{\mathbf{X},x^{(i)}}[f_\theta,\boldsymbol{\eta}]=\mathbb{E}_{\mathbf{X}_{t-1},x_t^{(i)},\boldsymbol{\epsilon}}\left[\left(x_t^{(i)}-f_\theta(\tilde{\mathbf{X}}^{(\boldsymbol{\eta})}_{t-1})\right)^2\right]+\lambda\cdot \sum_{k=1}^{N}I(\tilde{X}^{(k)(\eta_k)}_{t-1};X^{(k)}_{t-1})$$
let $f_{\theta^*_{\boldsymbol{\eta}}}=\text{argmin}_{f_{\theta}}R_{\mathbf{X},x^{(i)}}[f_\theta,\boldsymbol{\eta}]$ given a certain $\boldsymbol{\eta}$, we have

\begin{equation*}
\begin{aligned}
f_{\theta^*_{\boldsymbol{\eta}}}(\tilde{\X}_{t-1}^{(\boldsymbol{\eta})})&=\text{argmin}_{f_{\theta}}R_{\mathbf{X},x^{(i)}}[f_\theta,\boldsymbol{\eta}]\\
&=\text{argmin}_{f_{\theta}}\mathbb{E}_{\tilde{\X}_{t-1}^{(\boldsymbol{\eta})},x_t^{(i)}}\left[\left(x_t^{(i)}-f_\theta(\tilde{\mathbf{X}}^{(\boldsymbol{\eta})}_{t-1})\right)^2\right]\\
&=\int dx^{(i)}_t P(x_t^{(i)}|\tilde{\X}_{t-1}^{(\boldsymbol{\eta})})x_t^{(i)}
\end{aligned}
\end{equation*}

where the second equality is due to that the mutual information term in $R_{\mathbf{X},x^{(i)}}[f_\theta,\boldsymbol{\eta}]$ does not depend on $f_\theta$, and the last equality is due to Lemma \ref{lemma:argmin_risk}. Let $\tilde{\X}_{t-1}^{(\boldsymbol{\eta})(\hat{j})}=\tilde{\X}_{t-1}^{(\boldsymbol{\eta})}\texttt{\textbackslash}\tilde{X}^{(j)(\eta_j)}_{t-1}$, since $\tilde{X}_{t-1}^{(j)(\eta_j)}\independent x^{(i)}_t$, we have

\begin{equation*}
\begin{aligned}
P(x_t^{(i)}|\tilde{\X}_{t-1}^{(\boldsymbol{\eta})})&\equiv P(x_t^{(i)}|\tilde{\X}_{t-1}^{(\boldsymbol{\eta})(\hat{j})},\tilde{X}_{t-1}^{(j)(\eta_j)})\\
&=P(x_t^{(i)}|\tilde{\X}_{t-1}^{(\boldsymbol{\eta})(\hat{j})})
\end{aligned}
\end{equation*}

Therefore,

\begin{equation*}
\begin{aligned}
f_{\theta^*_{\boldsymbol{\eta}}}(\tilde{\mathbf{X}}_{t-1}^{(\boldsymbol{\eta})})=\int dx^{(i)}_t P(x_t^{(i)}|\tilde{\X}_{t-1}^{(\boldsymbol{\eta})(\hat{j})})x_t^{(i)}
\end{aligned}
\end{equation*}

which \emph{does not} depend on $\tilde{X}_{t-1}^{(j)(\eta_j)}$. Finally, we have

\begin{equation*}
\begin{aligned}
&\text{min}_{(f_{\theta},\boldsymbol{\eta})}R_{\mathbf{X},x^{(i)}}[f_\theta,\boldsymbol{\eta}]\\
=&\text{min}_{\boldsymbol{\eta}}\left[R_{\mathbf{X},x^{(i)}}[f_{\theta^*_{\boldsymbol{\eta}}},\boldsymbol{\eta}]\right]\\
=&\text{min}_{\boldsymbol{\eta}}\left[\mathbb{E}_{\mathbf{X}_{t-1},x_t^{(i)},\boldsymbol{\epsilon}}\left[\left(x_t^{(i)}-f_{\theta_{\boldsymbol{\eta}}^*}(\tilde{\mathbf{X}}^{(\boldsymbol{\eta})}_{t-1})\right)^2\right]+\lambda\cdot \sum_{k=1}^{N}I(\tilde{X}^{(k)(\eta_k)}_{t-1};X^{(k)}_{t-1})\right]\\
=&\text{min}_{\boldsymbol{\eta}}\left[\left(\mathbb{E}_{\mathbf{X}_{t-1},x_t^{(i)},\boldsymbol{\epsilon}}\left[\left(x_t^{(i)}-f_{\theta_{\boldsymbol{\eta}}^*}(\tilde{\mathbf{X}}^{(\boldsymbol{\eta})(\hat{j})}_{t-1})\right)^2\right]+\lambda\cdot \sum_{k\neq j}I(\tilde{X}^{(k)(\eta_k)}_{t-1};X^{(k)}_{t-1})\right)+I(\tilde{X}^{(j)(\eta_j)}_{t-1};X^{(j)}_{t-1})\right]\\
\end{aligned}
\end{equation*}

For the last equality, the elements in the parenthesis $(\cdot)$ does not depend on $\tilde{X}_{t-1}^{(j)(\eta_j)}$, and only the $I(\tilde{X}^{(j)(\eta_j)}_{t-1};X^{(j)}_{t-1})$ term depends on $\tilde{X}_{t-1}^{(j)(\eta_j)}$. Therefore, at the minimization of the whole objective $R_{\mathbf{X},x^{(i)}}[f_\theta,\boldsymbol{\eta}]$, we have $I(\tilde{X}^{(j)(\eta_j)}_{t-1};X^{(j)}_{t-1})$ attains its minimum of 0, at which $\eta_j^*\to\infty$. By the definition of $W_{ji}$, we have $W_{ji}=I(\tilde{X}^{(j)(\eta_j^*)}_{t-1};X^{(j)}_{t-1})=0$. Proof completes.

In essence, the proof states that if $x^{(j)}\independent{}x^{(i)}$, then at the minimization of the whole objective, the MSE term does not depend on $X^{(j)}_{t-1}$ or $\tilde{X}^{(j)(\eta_j)}_{t-1}$, and the mutual information term $I(\tilde{X}^{(j)(\eta_j^*)}_{t-1};X^{(j)}_{t-1})$ w.r.t. time series $j$ can be independently minimized and approach 0.

\textbf{(2)} Suppose that we replace $X_{t-1}^{(j)}$ by $X_{t-1}^{'(j)}=a\cdot X_{t-1}^{(j)}+b$ where $a,b\in\mathbb{R}$. Let $\eta_j'=a\cdot \eta_j$. We have $\tilde{X}_{t-1}^{'(j)(\eta'_j)}=X_{t-1}^{'(j)}+\eta_j'\cdot\epsilon_j=a(X_{t-1}^{(j)}+\eta_j\cdot\epsilon_j) + b=a\cdot \tilde{X}_{t-1}^{(j)(\eta_j)}+b$, and therefore $I\left(\tilde{X}_{t-1}^{'(j)(\eta'_j)};X_{t-1}^{'(j)}\right)=I\left(a\cdot \tilde{X}_{t-1}^{(j)(\eta_j)}+b;a\cdot X_{t-1}^{(j)}+b\right)=I\left(\tilde{X}_{t-1}^{(j)(\eta_j)};X_{t-1}^{(j)}\right)$, where the last equality is due to that mutual information is invariant to invertible transformations. Furthermore, due to Assumption \ref{assump:f_theta}, we can find another $f_{\theta'}$ which undoes this affine transformation on $\tilde{X}_{t-1}^{(j)(\eta_j)}$, so the MSE term can be kept the same. Therefore, we have a one-to-one mapping between the original $X_{t-1}^{(j)},\eta_j,f_\theta$ and the new $X_{t-1}^{'(j)},\eta'_j,f_{\theta'}$ such that value of the MSE term and the mutual information term remain unchanged. Thus at the minimization of the objective, $W_{ji}$ remains the same.

\textbf{(3)} This is trivial to prove. We see that in $R_{\mathbf{X},x^{(i)}}[f_\theta,\boldsymbol{\eta}]$, the MSE term remains the same if the mapping $f$ remains the same, regardless of how we parameterize $f$ in terms of parameter $\theta$. The second term does not depend on $f_\theta$. Therefore, at the minimization of $R_{\mathbf{X},x^{(i)}}[f_\theta,\boldsymbol{\eta}]$, the $W_{ji}=I(\tilde{X}^{(j)(\eta_j^*)}_{t-1};X^{(j)}_{t-1})$ is invariant to the reparameterization of the same $f$ in terms of parameter $\theta$. As a direct corollary, $W_{ji}$ is insensitive to the network architecture, as long as the capacity is enough (provided with sufficient number of examples). This is confirmed in Table S\ref{table:synthetic_capacity} in Appendix \ref{app:capacity}. 

Note that L1 and group L1 regularization do not have this property, since they explicitly regularize on the parameter $\theta$.
\end{proof}

\subsubsection{Analysis of the minimum predictive information-regularized risk}

After proving the three properties of $W_{ji}$, now we analyze why the minimum predictive information-regularized risk is likely to select the variables that directly cause $x_t^{(i)}$, under some additional assumptions. We first state the additional assumption needed to perform the analysis, then we restate the definitions of direct causality to make our statements more rigorous. We then prove two lemmas in Appendix \ref{app:lemma_2}, and finally perform the analysis in Appendix \ref{app:analysis_risk}.

\begin{assumption}
\label{assump:additonal_assumption}
Assume that causal sufficiency \cite{peters2017elements} is satisfied, i.e. the observed time series $x^{(i)},i=1,2,...N$ are all the variables that take part in the dynamics (no hidden confounding variables). Also assume that in the response function Eq. (\ref{eq:response_function}), the noise variable $u_i,i=1,2,...N$ are effective variables, so each $h_i$ is not a deterministic mapping. Assume that by saying ``causality", we mean ``causality in mean".
\end{assumption}

To make our statement of causality more rigorous, here we restate the definition of direct (structural) causality \cite{white2011linking} using our notations of the system Eq.~(\ref{eq:response_function}). This definition is a natural extension to Pearl causality \cite{pearl2009causality} in canonical settable systems \cite{white2009settable,white2011linking}, which formalizes time series in its full generality.

\textbf{Direct (structural) causality} \cite{white2011linking} \textit{
We say $X^{(j)}_{t-1}, j\neq i$ does not directly (structurally) cause $x^{(i)}_{t}$, if for all possible values of $\mathbf{X}_{t-1}^{(\hat{j})}$ and $u_l$, $l\in{1,2,...N}$, the function $X^{(j)}_{t-1}\to h_i(\mathbf{X}_{t-1},u_i)$ is constant in $X^{(j)}_{t-1}$. Otherwise, we say $X^{(j)}_{t-1}$ directly (structurally) causes $x^{(i)}_{t}$.
}

The relationship between direct causality and Granger causality in Section \ref{sec:problem_definition} is the following Lemma, which states that for our system, Granger causality is a sufficient condition for direct (structural) causality.

\begin{lemma}
\label{thm:theorem_0}
Assuming causal sufficiency, for system Eq.~\ref{eq:response_function}, for any $i,j\in\{1,2,...N\},i\neq j$, if $X^{(j)}_{t-1}$ Granger-causes $x^{(i)}_{t}$, then $X^{(j)}_{t-1}$ directly structurally causes $x^{(i)}_t$.
\end{lemma}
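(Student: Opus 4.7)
My plan is to prove the contrapositive: if $X^{(j)}_{t-1}$ does not directly structurally cause $x^{(i)}_t$, then $X^{(j)}_{t-1}$ does not Granger-cause $x^{(i)}_t$. By the definition of direct structural causality, the assumption means that for all values of $\mathbf{X}_{t-1}^{(\hat{j})}$ and of the noise variables $u_l$, the map $X^{(j)}_{t-1}\mapsto h_i(\mathbf{X}_{t-1},u_i)$ is constant. Hence there exists a function $\tilde h_i$ such that $x_t^{(i)} = h_i(\mathbf{X}_{t-1},u_i) = \tilde h_i(\mathbf{X}_{t-1}^{(\hat j)}, u_i)$ almost surely.

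From this functional representation I would then show the corresponding probabilistic statement $P(x_t^{(i)}\mid \mathbf{X}_{t-1}^{(\hat j)}, X_{t-1}^{(j)}) = P(x_t^{(i)}\mid \mathbf{X}_{t-1}^{(\hat j)})$. The key ingredient here is the noise-independence part of the data-generating assumptions in Eq.~(\ref{eq:response_function}): the $u_l$ are mutually independent and independent of every $X_{t-1}^{(k)}$ and $x_{t-1}^{(k)}$. Conditioning on $(\mathbf{X}_{t-1}^{(\hat j)}, X_{t-1}^{(j)})$ therefore leaves the distribution of $u_i$ unchanged, so the conditional law of $\tilde h_i(\mathbf{X}_{t-1}^{(\hat j)}, u_i)$ given $(\mathbf{X}_{t-1}^{(\hat j)}, X_{t-1}^{(j)})$ coincides with its law given $\mathbf{X}_{t-1}^{(\hat j)}$ alone. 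This gives the equality of conditional distributions, which is exactly the negation of Granger causality (using the causal-sufficiency assumption so that no hidden confounder can create a residual dependence of $x_t^{(i)}$ on $X_{t-1}^{(j)}$ through a path that bypasses the listed arguments of $h_i$).

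The main obstacle, though largely bookkeeping rather than conceptual, will be making the ``conditional law is unchanged'' step fully rigorous when $X_{t-1}^{(j)}$ is itself a deterministic-plus-noise function of earlier states that may share ancestors with $\mathbf{X}_{t-1}^{(\hat j)}$. I would handle this by pushing through the factorization implied by Eq.~(\ref{eq:response_function}): conditional on $\mathbf{X}_{t-1}$, the only random quantity entering $x_t^{(i)}$ is $u_i$, and Assumption~\ref{assump:additonal_assumption} (causal sufficiency plus independence of $u_i$ from $\mathbf{X}_{t-1}$ and from the other $u_l$) makes this conditioning clean. Under the ``causality in mean'' reading, I would then reduce the equality of full conditional distributions to equality of conditional expectations $\E[x_t^{(i)}\mid \mathbf{X}_{t-1}] = \E[\tilde h_i(\mathbf{X}_{t-1}^{(\hat j)}, u_i)\mid \mathbf{X}_{t-1}^{(\hat j)}]$, which suffices for the form of Granger causality used in Section~\ref{sec:problem_definition}.

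The conclusion follows by contrapositive: Granger causality of $X_{t-1}^{(j)}$ for $x_t^{(i)}$ implies that the structural map $h_i$ must actually depend on $X_{t-1}^{(j)}$ for some configuration of its other arguments, i.e.\ direct structural causality holds.
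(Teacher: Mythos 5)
Your proof is correct, and it takes a genuinely different route from the paper. The paper's proof is essentially a verification-and-citation argument: it shows that the system in Eq.~(\ref{eq:response_function}) is a canonical settable system and that the strict-exogeneity condition $u_i \independent \mathbf{X}_{t-1}$ implies Assumptions~A.1 and~A.2 of White and Lu (2011), and then invokes their Theorem~5.6 to conclude. You instead give a direct, self-contained contrapositive argument: if there is no direct structural dependence on $X_{t-1}^{(j)}$, then $x_t^{(i)} = \tilde h_i(\mathbf{X}_{t-1}^{(\hat j)}, u_i)$ for some $\tilde h_i$, and because $u_i \independent \mathbf{X}_{t-1}$ the conditional law of $x_t^{(i)}$ given $\mathbf{X}_{t-1}$ factors through $\mathbf{X}_{t-1}^{(\hat j)}$ alone, which is precisely the negation of Granger causality as defined in Section~\ref{sec:problem_definition}. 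The concern you raise about $X_{t-1}^{(j)}$ sharing ancestors with $\mathbf{X}_{t-1}^{(\hat j)}$ is actually a non-issue here: the independence $u_i \independent \mathbf{X}_{t-1}$ is assumed outright for the system, so the conditional-law step is immediate, and you do not need the detour through ``causality in mean'' or conditional expectations --- your argument already yields equality of the full conditional distributions. Your approach is more elementary and transparent for this specific system; the paper's approach has the advantage of situating the result within the settable-systems framework and making clear which weaker exogeneity conditions would still suffice.
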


\begin{proof}
We base the proof on the Theorem 5.6 in \cite{white2011linking}. Firstly, by definition, the system Eq. (\ref{eq:response_function}) belongs to the canonical settable system (Def. 3.3 in \cite{white2011linking}), on which their Theorem 5.6 is based. To prove that in our system Granger causality can deduce direct structural causality, we only have to prove that the assumption A.1 and assumption A.2 in \cite{white2011linking} are satisfied by our system. If we identify our $x_t^{(i)}$ with their $Y_{1,t}$, our $\mathbf{X}_{t-1}$ with their $\mathbf{Y}_{t-1}$, our $x_t^{(j)}$ with their $Y_{2,t}$, our $u_{i,t}$ (our $u_i$ at time $t$) with their $U_{1,t}$, our $u_{j,t}$ with their $U_{2,t}$, their $\mathbf{Z}_t=\emptyset$, $\mathbf{W}_t=\emptyset$, then our system Eq. (\ref{eq:response_function}) satisfies their Assumption A.1. Additionally, by definition, our $u_i\in R^M, i=1,2,...N$ are random variables that are mutually independent, and also independent of any $X^{(i)}_{t-1}, x^{(i)}_t$, $i\in\{1,2,...N\}$. Therefore, our system satisfies their strict exogeneity $(\mathbf{Y}_{t-1},\mathbf{Z}_t)\independent U_{1,t}$ (in our representation  $(\mathbf{X}_{t-1},\emptyset)\independent u_{i,t}$), which is a sufficient condition for Assumption A.2. Therefore, both their Assumption A.1 and Assumption A.2 are satisfied by our system Eq. (\ref{eq:response_function}). Applying their Theorem 5.6, we prove Lemma \ref{thm:theorem_0}.

\end{proof}

Therefore, for our system Eq.~(\ref{eq:response_function}), applying the results by \cite{white2011linking}, we have that Granger causality is a sufficient condition for direct structural causality. The reason that here Granger causality can deduce direct structural causality is in part due to the fact that for system Eq.~(\ref{eq:response_function}), conditional exogeneity \cite{white2011linking} is automatically satisfied. 

Note that the reverse of the statement is not true, i.e. a failed Granger causality test does not necessarily imply that there is no direct structural causality (White \& Lu \cite{white2010granger} give several examples, and also note that these instances are exceptional). 

After stating Assumption \ref{assump:additonal_assumption} and clarifying the definition of causalities, now we prove two lemmas, which are important for the analysis of our objective.

\paragraph{Minimum MSE with different variables}
\label{app:lemma_2}

\begin{lemma}
\label{lemma:d_separation_mmse}
Suppose that Assumption 1 and \ref{assump:additonal_assumption} holds, and $X_{t-1}^{(U)}$,$X_{t-1}^{(V)}$, $X_{t-1}^{(W)}\subset \X_{t-1}$ are mutually exclusive sets of variables satisfying
\begin{equation*}
X_{t-1}^{(W)}\independent{} x_t^{(i)}|X_{t-1}^{(U)},X_{t-1}^{(V)},\ \ \ \ \ \ \ 
X_{t-1}^{(V)}\not\!\perp\!\!\!\perp x_t^{(i)}|X_{t-1}^{(U)},X_{t-1}^{(W)}
\end{equation*}
Then
\begin{equation*}
\text{min}_{f_\theta}\mathbb{E}_{X_{t-1}^{(U)},X_{t-1}^{(V)},x_t^{(i)}}\left[\left(x_t^{(i)}-f_\theta(X_{t-1}^{(U)},X_{t-1}^{(V)})\right)^2\right]<\text{min}_{f_\theta}\mathbb{E}_{X_{t-1}^{(U)},X_{t-1}^{(V)},x_t^{(i)}}\left[\left(x_t^{(i)}-f_\theta(X_{t-1}^{(U)},X_{t-1}^{(W)})\right)^2\right]
\end{equation*}
\end{lemma}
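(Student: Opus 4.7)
The plan is to reduce both sides of the inequality to conditional variances using Lemma \ref{lemma:argmin_risk}, then compare them via the law of total variance (equivalently, the Pythagorean identity for conditional expectations in $L^2$). Throughout, I will implicitly invoke Assumption \ref{assump:f_theta} to ensure that $f_\theta$ has sufficient capacity to realize the relevant conditional expectation functions of the smaller variable sets.

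First, I would apply Lemma \ref{lemma:argmin_risk} twice (once with input set $(X_{t-1}^{(U)},X_{t-1}^{(V)})$ and once with $(X_{t-1}^{(U)},X_{t-1}^{(W)})$) to rewrite the two minima as
\begin{align*}
L &:= \min_{f_\theta}\E\!\left[(x_t^{(i)}-f_\theta(X_{t-1}^{(U)},X_{t-1}^{(V)}))^2\right] = \E\!\left[\bigl(x_t^{(i)}-\E[x_t^{(i)}\mid X_{t-1}^{(U)},X_{t-1}^{(V)}]\bigr)^2\right],\\
R &:= \min_{f_\theta}\E\!\left[(x_t^{(i)}-f_\theta(X_{t-1}^{(U)},X_{t-1}^{(W)}))^2\right] = \E\!\left[\bigl(x_t^{(i)}-\E[x_t^{(i)}\mid X_{t-1}^{(U)},X_{t-1}^{(W)}]\bigr)^2\right].
\end{align*}
Next, I would introduce the ``full'' conditional mean $m := \E[x_t^{(i)}\mid X_{t-1}^{(U)},X_{t-1}^{(V)},X_{t-1}^{(W)}]$ and use the Pythagorean identity $\E[(x_t^{(i)}-\E[x_t^{(i)}\mid S])^2] = \E[(x_t^{(i)}-m)^2] + \E[(m-\E[x_t^{(i)}\mid S])^2]$ for any conditioning set $S \subseteq \{X_{t-1}^{(U)},X_{t-1}^{(V)},X_{t-1}^{(W)}\}$.

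The first hypothesis $X_{t-1}^{(W)}\independent x_t^{(i)}\mid X_{t-1}^{(U)},X_{t-1}^{(V)}$ implies $m = \E[x_t^{(i)}\mid X_{t-1}^{(U)},X_{t-1}^{(V)}]$ almost surely, so $L = \E[(x_t^{(i)}-m)^2]$. Applying the Pythagorean decomposition to $R$ with $S=(X_{t-1}^{(U)},X_{t-1}^{(W)})$ gives
\begin{equation*}
R - L \;=\; \E\!\left[\bigl(m-\E[x_t^{(i)}\mid X_{t-1}^{(U)},X_{t-1}^{(W)}]\bigr)^2\right] \;\geq\; 0.
\end{equation*}

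The main obstacle (and the one place the second hypothesis and Assumption \ref{assump:additonal_assumption} are used) is promoting this inequality from ``$\geq$'' to strict ``$>$''. The gap vanishes iff $m = \E[x_t^{(i)}\mid X_{t-1}^{(U)},X_{t-1}^{(W)}]$ almost surely, i.e.\ iff the conditional mean of $x_t^{(i)}$ does not change when $X_{t-1}^{(V)}$ is added to the conditioning set $(X_{t-1}^{(U)},X_{t-1}^{(W)})$. Under the paper's convention that conditional independence is interpreted in mean (Assumption \ref{assump:additonal_assumption}), this is exactly the negation of the second hypothesis $X_{t-1}^{(V)}\not\!\perp\!\!\!\perp x_t^{(i)}\mid X_{t-1}^{(U)},X_{t-1}^{(W)}$. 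Hence the non-independence hypothesis guarantees $R - L > 0$, completing the argument. I would be careful to state the ``causality in mean'' interpretation explicitly so that the passage from distributional non-independence to non-equality of conditional means is transparent.
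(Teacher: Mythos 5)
Your proposal is correct, and it lands on the same essential idea as the paper's proof (strict minimality of the conditional mean as an $L^2$ approximant, activated by the first hypothesis and made strict by the second), but packages it differently. The paper's proof proceeds by direct density manipulation: it expands $R$ as an integral over the full joint, uses $X_{t-1}^{(W)}\independent x_t^{(i)}\mid X_{t-1}^{(U)},X_{t-1}^{(V)}$ to factor the joint density as $P(X^{(U)},X^{(V)})\,P(X^{(W)}\mid X^{(U)},X^{(V)})\,P(x_t^{(i)}\mid X^{(U)},X^{(V)})$, then applies Lemma~\ref{lemma:argmin_risk} pointwise in $(X^{(U)},X^{(V)},X^{(W)})$ to show the inner integral against $P(x_t^{(i)}\mid X^{(U)},X^{(V)})$ is strictly minimized by $\E[x_t^{(i)}\mid X^{(U)},X^{(V)}]$ rather than by $\E[x_t^{(i)}\mid X^{(U)},X^{(W)}]$, before integrating out $X^{(W)}$. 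You instead introduce the full conditional mean $m=\E[x_t^{(i)}\mid X^{(U)},X^{(V)},X^{(W)}]$ and invoke the $L^2$ Pythagorean identity for nested $\sigma$-algebras, which yields $R-L=\E[(m-\E[x_t^{(i)}\mid X^{(U)},X^{(W)}])^2]$ in a single step. This is arguably cleaner and more transparent about \emph{where} the slack lives; the paper's version is more self-contained and does not require the reader to recall the projection interpretation of conditional expectation. Both arguments hinge on the same two points you identify: Lemma~\ref{lemma:argmin_risk} must be applied to restricted input sets (a trivial but noteworthy extension of its statement, which takes $\mathbf{X}_{t-1}$ as input), and strictness of the final inequality relies on the ``causality in mean'' convention of Assumption~\ref{assump:additonal_assumption} to convert the distributional non-independence $X_{t-1}^{(V)}\not\!\perp\!\!\!\perp x_t^{(i)}\mid X_{t-1}^{(U)},X_{t-1}^{(W)}$ into a non-equality of conditional means on a set of positive measure. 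You handle both correctly; when writing this up, also note the standing assumption that $x_t^{(i)}$ is square-integrable so the $L^2$ projection and Pythagorean identity are well posed (the paper implicitly assumes this by working with MSE throughout).
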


Fig. S\ref{fig:d_seperation_diagram} below shows the relations between the variables, where the dashed arrows denote the potential existence of causal relations between variables. We see that \emph{conditioned} on $(X_{t-1}^{(U)},X_{t-1}^{(V)})$, we have $x_t^{(i)}$ and $X_{t-1}^{(W)}$ are independent, while \emph{conditioned} on $(X_{t-1}^{(U)},X_{t-1}^{(W)})$, we have $x_t^{(i)}$ and $X_{t-1}^{(V)}$ are \emph{not} independent. Lemma \ref{lemma:d_separation_mmse} states that under the above scenario and under Assumptions \ref{assump:f_theta} and \ref{assump:additonal_assumption}, using $X_{t-1}^{(U)}$ and $X_{t-1}^{(V)}$ to predict $x_t^{(i)}$ can achieve a lower MSE than using $X_{t-1}^{(U)}$ and $X_{t-1}^{(W)}$ to predict $x_t^{(i)}$.

\begin{suppfigure}[h!]
    \centering
    \includegraphics[scale=0.45]{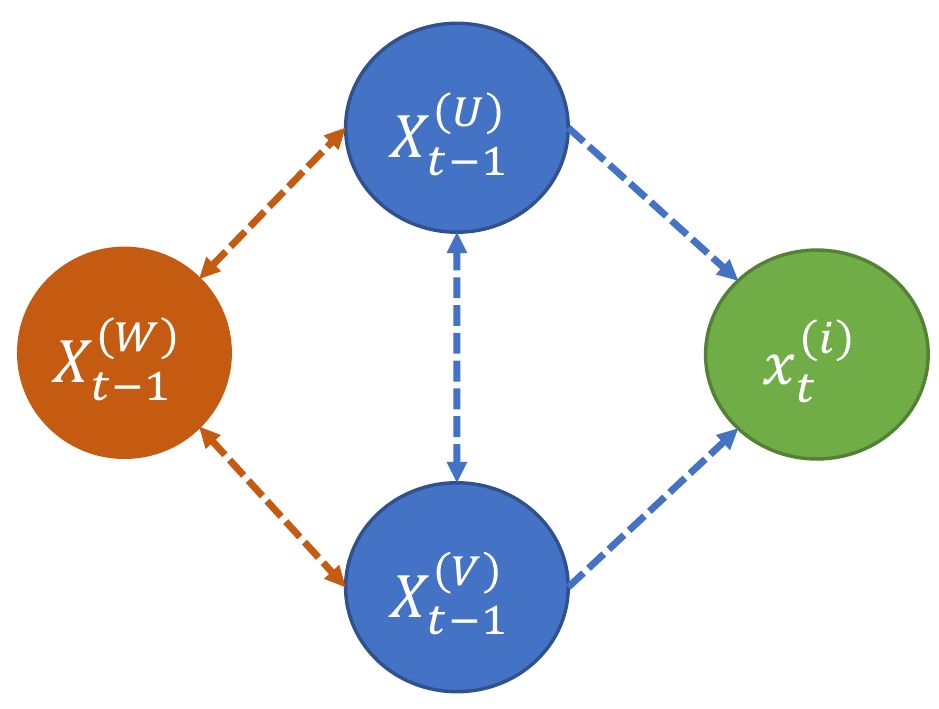}
    \caption{Diagram of variables for Lemma \ref{lemma:d_separation_mmse}. The dashed arrows denote the possible existence of causal relations between variables.}
    \label{fig:d_seperation_diagram}%
\end{suppfigure}

\begin{proof}
Since Assumption \ref{assump:f_theta} holds, according to Lemma \ref{lemma:argmin_risk}, Lemma \ref{lemma:d_separation_mmse} is equivalent to

\begin{equation*}
\begin{aligned}
\mathbb{E}_{X^{(U)}_{t-1},X^{(V)}_{t-1},x_t^{(i)}}&\left[\left(x_t^{(i)}-\int dx^{(i)}_t P(x_t^{(i)}|X^{(U)}_{t-1},X^{(V)}_{t-1})x_t^{(i)}\right)^2\right]\\
&<\mathbb{E}_{X^{(U)}_{t-1},X^{(W)}_{t-1},x_t^{(i)}}\left[\left(x_t^{(i)}-\int dx^{(i)}_t P(x_t^{(i)}|X^{(U)}_{t-1},X^{(W)}_{t-1})x_t^{(i)}\right)^2\right]
\end{aligned}
\end{equation*}
We have
\begin{equation*}
\begin{aligned}
&\mathbb{E}_{X^{(U)}_{t-1},X^{(W)}_{t-1},x_t^{(i)}}\left[\left(x_t^{(i)}-\int dx^{(i)}_t P(x_t^{(i)}|X^{(U)}_{t-1},X^{(W)}_{t-1})x_t^{(i)}\right)^2\right]\\
&=\int dX^{(U)}_{t-1}dX^{(W)}_{t-1}dx_t^{(i)} P(X^{(U)}_{t-1},X^{(W)}_{t-1},x_t^{(i)})\left(x_t^{(i)}-\int dx^{(i)}_t P(x_t^{(i)}|X^{(U)}_{t-1},X^{(W)}_{t-1})x_t^{(i)}\right)^2\\
&=\int dX^{(U)}_{t-1}dX^{(V)}_{t-1}dX^{(W)}_{t-1}dx_t^{(i)} P(X^{(U)}_{t-1},X^{(V)}_{t-1},X^{(W)}_{t-1},x_t^{(i)})\left(x_t^{(i)}-\int dx^{(i)}_t P(x_t^{(i)}|X^{(U)}_{t-1},X^{(W)}_{t-1})x_t^{(i)}\right)^2\\
&=\int dX^{(U)}_{t-1}dX^{(V)}_{t-1}dX^{(W)}_{t-1} P(X^{(U)}_{t-1},X^{(V)}_{t-1})P(X^{(W)}_{t-1}|X^{(U)}_{t-1},X^{(V)}_{t-1})\cdot\\
&\ \ \ \ \ \ \ \ \ \ \ \ \ \ \ \ \ \ \ \ \ \ \ \ \int dx_t^{(i)} P(x_t^{(i)}|X^{(U)}_{t-1},X^{(V)}_{t-1})\left(x_t^{(i)}-\int dx^{(i)}_t P(x_t^{(i)}|X^{(U)}_{t-1},X^{(W)}_{t-1})x_t^{(i)}\right)^2\\
&>\int dX^{(U)}_{t-1}dX^{(V)}_{t-1}dX^{(W)}_{t-1}P(X^{(U)}_{t-1},X^{(V)}_{t-1})P(X^{(W)}_{t-1}|X^{(U)}_{t-1},X^{(V)}_{t-1})\cdot\\
&\ \ \ \ \ \ \ \ \ \ \ \ \ \ \ \ \ \ \ \ \ \ \ \ \int dx_t^{(i)} P(x_t^{(i)}|X^{(U)}_{t-1},X^{(V)}_{t-1})\left(x_t^{(i)}-\int dx^{(i)}_t P(x_t^{(i)}|X^{(U)}_{t-1},X^{(V)}_{t-1})x_t^{(i)}\right)^2\\
&=\int dX^{(U)}_{t-1}dX^{(V)}_{t-1}P(X^{(U)}_{t-1},X^{(V)}_{t-1})\int dx_t^{(i)} P(x_t^{(i)}|X^{(U)}_{t-1},X^{(V)}_{t-1})\left(x_t^{(i)}-\int dx^{(i)}_t P(x_t^{(i)}|X^{(U)}_{t-1},X^{(V)}_{t-1})x_t^{(i)}\right)^2\\
&=\mathbb{E}_{X^{(U)}_{t-1},X^{(V)}_{t-1},x_t^{(i)}}\left[\left(x_t^{(i)}-\int dx^{(i)}_t P(x_t^{(i)}|X^{(U)}_{t-1},X^{(V)}_{t-1})x_t^{(i)}\right)^2\right]
\end{aligned}
\end{equation*}
The third equality (the one before the inequality) is due to that $X_{t-1}^{(W)}\independent{} x_t^{(i)}|X_{t-1}^{(U)},X_{t-1}^{(V)}$, leading to $P(X^{(U)}_{t-1},X^{(V)}_{t-1},X^{(W)}_{t-1},x^{(i)}_{t})=P(X^{(U)}_{t-1},X^{(V)}_{t-1})P(X^{(W)}_{t-1}|X^{(U)}_{t-1},X^{(V)}_{t-1})P(x^{(i)}_{t}|X^{(U)}_{t-1},X^{(V)}_{t-1})$. The inequality step first uses the Assumption \ref{assump:additonal_assumption} that the noise variables $u_i$ are effective arguments of the response functions $h_i$, and that each $h_i$ is ``causality in mean". Therefore, $\int dx^{(i)}_t P(x_t^{(i)}|X^{(U)}_{t-1},X^{(V)}_{t-1})x_t^{(i)}\neq \int dx^{(i)}_t P(x_t^{(i)}|X^{(U)}_{t-1},X^{(W)}_{t-1})x_t^{(i)}$. Using Lemma \ref{lemma:argmin_risk}, we have $f_\theta(X^{(U)}_{t-1},X^{(V)}_{t-1})=\int dx^{(i)}_t P(x_t^{(i)}|X^{(U)}_{t-1},X^{(V)}_{t-1})x_t^{(i)}$ minimizes $$\int dx^{(i)}_t P(x_t^{(i)}|X^{(U)}_{t-1},X^{(V)}_{t-1})\left(x_t^{(i)}-f_\theta(X^{(U)}_{t-1},X^{(V)}_{t-1})\right)^2$$ hence the inequality.
\end{proof}

Using Lemma \ref{lemma:d_separation_mmse} recursively, we see that using variables that directly causes $x^{(i)}$ to predict $x^{(i)}$ can achieve the lowest MSE. Formalizing the above intuition, we have

\begin{lemma}
\label{thm:d_separation_mmse}
Suppose that Assumption 1 and \ref{assump:additonal_assumption} holds, and $X_{t-1}^{(D)}\subseteq\X_{t-1}$ are the set of variables that directly causes $x_t^{(i)}$. Then $\forall X_{t-1}^{(S)}\subseteq\X_{t-1}$ with $X_{t-1}^{(S)}\neq X_{t-1}^{(D)}$, we have

\begin{equation*}
\text{min}_{f_\theta}\mathbb{E}_{X_{t-1}^{(D)}}\left[\left(x_t^{(i)}-f_\theta(X_{t-1}^{(D)})\right)^2\right]<\text{min}_{f_\theta}\mathbb{E}_{X_{t-1}^{(S)}}\left[\left(x_t^{(i)}-f_\theta(X_{t-1}^{(S)})\right)^2\right]
\end{equation*}

Specifically, we have
\begin{equation*}
\text{min}_{f_\theta}\mathbb{E}_{X_{t-1}^{(D)}}\left[\left(x_t^{(i)}-f_\theta(X_{t-1}^{(D)})\right)^2\right]<\text{min}_{f_\theta}\mathbb{E}_{X_{t-1}^{(\hat{D})}}\left[\left(x_t^{(i)}-f_\theta(X_{t-1}^{(\hat{D})})\right)^2\right]
\end{equation*}

where $X_{t-1}^{(\hat{D})}=\X_{t-1}\textbackslash X_{t-1}^{(D)}$.
\end{lemma}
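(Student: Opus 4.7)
The plan is to reduce the general statement to a single application of Lemma~\ref{lemma:d_separation_mmse} by a careful set-theoretic partition of $\X_{t-1}$. Given an arbitrary subset $X_{t-1}^{(S)} \subseteq \X_{t-1}$ with $X_{t-1}^{(S)} \neq X_{t-1}^{(D)}$, I would partition the variables into four disjoint groups:
\begin{equation*}
X_{t-1}^{(U)} := X_{t-1}^{(D)} \cap X_{t-1}^{(S)}, \qquad X_{t-1}^{(V)} := X_{t-1}^{(D)} \setminus X_{t-1}^{(S)}, \qquad X_{t-1}^{(W)} := X_{t-1}^{(S)} \setminus X_{t-1}^{(D)}.
\end{equation*}
Here $X^{(U)}$ collects direct causes retained by $S$, $X^{(V)}$ the direct causes missed by $S$, and $X^{(W)}$ the non-causes wrongly included in $S$. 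I would focus first on the non-degenerate case $X_{t-1}^{(S)} \not\supseteq X_{t-1}^{(D)}$ (i.e.\ $X^{(V)} \neq \emptyset$), which covers the specifically-stated instance $X_{t-1}^{(S)} = X_{t-1}^{(\hat D)}$ since then $X^{(U)} = \emptyset$ and $X^{(V)} = X_{t-1}^{(D)}$.

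The next step is to verify the two premises of Lemma~\ref{lemma:d_separation_mmse}. For the conditional independence $X^{(W)} \independent x_t^{(i)} \mid X^{(U)}, X^{(V)}$, I would invoke the response equation $x_t^{(i)} = h_i(\X_{t-1}, u_i)$ together with the definition of direct structural causality: since $h_i$ is constant in every coordinate outside $X_{t-1}^{(D)} = X^{(U)} \cup X^{(V)}$, conditioning on $X^{(D)}$ makes $x_t^{(i)}$ a (possibly stochastic via $u_i$) function of $X^{(D)}$ alone, and strict exogeneity of $u_i$ (Assumption~\ref{assump:additonal_assumption}) completes the independence. For the conditional dependence $X^{(V)} \not\!\perp\!\!\!\perp x_t^{(i)} \mid X^{(U)}, X^{(W)}$, I would argue that since every coordinate of $X^{(V)}$ is a direct structural cause of $x_t^{(i)}$, there exist values at which $h_i$ genuinely varies in that coordinate; combined with the causality-in-mean assumption, this forces $\mathbb{E}[x_t^{(i)} \mid X^{(U)}, X^{(V)}, X^{(W)}]$ to be non-constant in $X^{(V)}$, hence $x_t^{(i)}$ is not conditionally independent of $X^{(V)}$ given $(X^{(U)}, X^{(W)})$.

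With both premises verified, Lemma~\ref{lemma:d_separation_mmse} gives the strict inequality between the minimum MSE attained on $(X^{(U)}, X^{(V)}) = X^{(D)}$ and on $(X^{(U)}, X^{(W)}) = X^{(S)}$, which is precisely the desired claim. For the remaining edge case $X^{(S)} \supsetneq X^{(D)}$, the argument is the same in spirit but degenerate: since $X^{(V)} = \emptyset$, Lemma~\ref{lemma:argmin_risk} shows that the extra variables in $X^{(W)}$ carry no additional information about $x_t^{(i)}$ beyond $X^{(D)}$, so the minimum MSE values are equal; strictness then requires a tie-breaking refinement (e.g.\ restricting the quantifier to $S \not\supseteq D$, which already covers the main corollary $S = \hat D$).

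The main obstacle will be the second premise: rigorously ruling out accidental cancellations in which a genuine direct cause in $X^{(V)}$ becomes deterministically recoverable from $(X^{(U)}, X^{(W)})$, making $\mathbb{E}[x_t^{(i)} \mid X^{(U)}, X^{(W)}]$ coincide with $\mathbb{E}[x_t^{(i)} \mid X^{(U)}, X^{(V)}]$. I would handle this by leveraging the strict exogeneity of $u_i$ and the causality-in-mean clause of Assumption~\ref{assump:additonal_assumption}, which together guarantee that the functional dependence of $h_i$ on a true direct cause cannot be perfectly mimicked by any function of non-causes, thereby preserving the strict inequality.
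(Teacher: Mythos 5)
Your proposal follows essentially the same route as the paper's own proof: the identical three-way partition of $\X_{t-1}$ into $X^{(U)}=D\cap S$, $X^{(V)}=D\setminus S$, $X^{(W)}=S\setminus D$, followed by a single application of Lemma~\ref{lemma:d_separation_mmse} after verifying its two conditional-(in)dependence premises. In fact you are \emph{more} careful than the paper: you correctly observe that when $X^{(S)}\supsetneq X^{(D)}$ we have $X^{(V)}=\emptyset$, so the second premise of Lemma~\ref{lemma:d_separation_mmse} is vacuous and the two minimum MSEs are \emph{equal}, not strictly ordered --- meaning the quantifier ``$\forall S\neq D$'' in the lemma statement is too broad and the paper's assertion that ``there does not exist an $S$'' with the needed independence simply does not cover the $V=\emptyset$ case. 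Your proposed fix (restrict to $S\not\supseteq D$, which still covers the stated special case $S=\hat D$) is the right repair, and the rest of your argument --- exogeneity of $u_i$ for the $W$-independence, causality-in-mean for the $V$-dependence --- matches the paper's reasoning.
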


\begin{proof}
For any $X^{(S)}_{t-1}$, let $X^{(U)}_{t-1}=X^{(D)}_{t-1}\cap X^{(S)}_{t-1}$, $X^{(V)}_{t-1}=X^{(D)}_{t-1}\textbackslash X^{(S)}_{t-1}$, $X^{(W)}_{t-1}=X^{(S)}_{t-1}\textbackslash X^{(D)}_{t-1}$. Then $X^{(U)}_{t-1}$, $X^{(V)}_{t-1}$, $X^{(W)}_{t-1}$ are mutually exclusive, and $X^{(D)}_{t-1}=X^{(U)}_{t-1}\cup X^{(V)}_{t-1}$, $X^{(S)}_{t-1}=X^{(U)}_{t-1}\cup X^{(W)}_{t-1}$. Now we prove that $\forall X_{t-1}^{(S)}\subseteq\X_{t-1}$ with $X_{t-1}^{(S)}\neq X_{t-1}^{(D)}$, the corresponding $X^{(U)}_{t-1}$, $X^{(V)}_{t-1}$, $X^{(W)}_{t-1}$, $x_t^{(i)}$ satisfy the condition for Lemma \ref{lemma:d_separation_mmse}.
Since $X^{(D)}_{t-1}$ are the set of variables that directly causes $x_t^{(i)}$, there does not exist a $X^{(S)}_{t-1}$ such that the corresponding $X_{t-1}^{(V)}\independent{} x_t^{(i)}|X_{t-1}^{(U)},X_{t-1}^{(W)}$ (otherwise it violates the direct causality). Thus $X_{t-1}^{(V)}\not\!\perp\!\!\!\perp x_t^{(i)}|X_{t-1}^{(U)},X_{t-1}^{(W)}$. To prove $X_{t-1}^{(W)}\independent{} x_t^{(i)}|X_{t-1}^{(U)},X_{t-1}^{(V)}$, note that $X^{(W)}_{t-1}$ does not directly cause $x_t^{(i)}$, then  $X^{(W)}_{t-1}$ does not Granger-cause $x_t^{(i)}$, i.e. $P(x_t^{(i)}|X_{t-1}^{(U)},X_{t-1}^{(V)})=P(x_t^{(i)}|X_{t-1}^{(U)},X_{t-1}^{(V)},X_{t-1}^{(W)})$, which is equivalent to $X_{t-1}^{(W)}\independent{} x_t^{(i)}|X_{t-1}^{(U)},X_{t-1}^{(V)}$. The special case of $X_{t-1}^{(\hat{D})}$ follows directly that $X_{t-1}^{(\hat{D})}=\X_{t-1}\textbackslash X_{t-1}^{(D)}\neq X_{t-1}^{(D)}$ and letting $X_{t-1}^{(S)}=X_{t-1}^{(\hat{D})}$.
\end{proof}

\paragraph{Qualitative and quantitative behaviors of the mutual information-regularized risk}
\label{app:analysis_risk}

In this section, we analyze the qualitative and quantitative behaviors of the mutual information-regularized risk (Eq. \ref{eq:learnable_risk}), with varying noise levels $\eta_j$. For each variable $X_{t-1}^{(j)}\in\X_{t-1}$, $j=1,2,...N$, define $\rho_{j}=\text{tanh}\left(I(X_{t-1}^{(j)};\tilde{X}_{t-1}^{(j)(\eta_j)})\right)\in[0,1]$ as a ``rescaled" mutual information between $X_{t-1}^{(j)}$ and $\tilde{X}_{t-1}^{(j)(\eta_j)}$. When $\eta_j=\mathbf{0}$ so that $\tilde{X}_{t-1}^{(j)(\eta_j)}=X_{t-1}^{(j)}$, $\rho_j=1$, at which the input $X_{t-1}^{(j)}$ is fully preserved. When all elements of $\eta_j\to\infty$, $\rho_j=0$, at which $X_{t-1}^{(j)}$ is fully corrupted. Denoting $\boldsymbol{\rho}=(\rho_1,\rho_2,...\rho_N)$, we can then rewrite the mutual information-regularized risk (Eq. \ref{eq:learnable_risk}) as

\begin{equation}
\label{eq:learnable_risk_rho}
R_{\mathbf{X},x^{(i)}}[f_\theta,\boldsymbol{\rho}]=\text{MMSE}^{(i)}(\boldsymbol{\rho})+\lambda\cdot \sum_{j=1}^{N}\text{arctanh}(\rho_{j})
\end{equation}

where $\text{MMSE}^{(i)}(\boldsymbol{\rho})=\min_{\boldsymbol{\eta},f_\theta}\mathbb{E}_{\mathbf{X}_{t-1},x_t^{(i)},\boldsymbol{\epsilon}}\left[\left(x_t^{(i)}-f_\theta(\tilde{\mathbf{X}}^{(\boldsymbol{\eta})}_{t-1})\right)^2\right]$ subject to $$\rho_{j}=\text{tanh}\left(I(X_{t-1}^{(j)};\tilde{X}_{t-1}^{(j)(\eta_j)}\right),j=1,2,...N$$
Let $X_{t-1}^{(D)}\subseteq\X_{t-1}$ be the set of variables that directly causes $x_t^{(i)}$, and denote the corresponding set of $\rho_j$ as $\boldsymbol{\rho}^{(D)}$. Denote $X_{t-1}^{(\hat{D})}=\X_{t-1}\textbackslash X_{t-1}^{(D)}$ and the corresponding set of $\rho_j$ as $\boldsymbol{\rho}^{(\hat{D})}$. For any $i=1,2,...N$, it is easy to see that $\text{MMSE}^{(i)}(\boldsymbol{\rho})$ has the following properties:

\begin{enumerate}
\item $\text{MMSE}^{(i)}(\boldsymbol{\rho})$ attains maximum at $\boldsymbol{\rho}=\mathbf{0}$.
\item $\text{MMSE}^{(i)}(\boldsymbol{\rho})$ is monotonically decreasing w.r.t. each $\rho_j$.
\item $\text{MMSE}^{(i)}(\boldsymbol{\rho})\big|_{\boldsymbol{\rho}^{(D)}=\mathbf{1},\boldsymbol{\rho}^{(\hat{D})}=\mathbf{0}}<\text{MMSE}^{(i)}(\boldsymbol{\rho})\big|_{\boldsymbol{\rho}^{(D)}=\mathbf{0},\boldsymbol{\rho}^{(\hat{D})}=\mathbf{1}}$ (using Lemma \ref{thm:d_separation_mmse}).
\item $\text{MMSE}^{(i)}(\boldsymbol{\rho})$ attains minimum at $\boldsymbol{\rho}^{(D)}=\mathbf{1}$. $\text{MMSE}^{(i)}(\boldsymbol{\rho})\big|_{\boldsymbol{\rho}^{(D)}=\mathbf{1}}$ is constant w.r.t. $\boldsymbol{\rho}^{(\hat{D})}$.
\end{enumerate}

To get a better intuition of the landscape of $R_{\mathbf{X},x^{(i)}}[f_\theta,\boldsymbol{\rho}]$, let's investigate a simple example. Let the response function be:
\begin{equation}
\label{eq:response_function_app}
    \begin{cases}
      x^{(1)}_t:=h_1(u_1)=\sqrt{\Sigma_x}\cdot u_1 \\
      x^{(2)}_t:=h_2(x^{(1)}_{t-1},u_2)=x^{(1)}_{t-1}+\sqrt{\Omega_x}\cdot u_2 \\
      x^{(3)}_t:=h_3(x^{(2)}_{t-1},u_3)=x^{(2)}_{t-1}+\sqrt{\Omega_y}\cdot u_3
    \end{cases}
  \end{equation}
where $u_1,u_2,u_3$ are independent unit Gaussian variables, and $\X_{t-1}=(X^{(1)}_{t-1},X^{(2)}_{t-1},X^{(3)}_{t-1})=\left((x_{t-2}^{(1)},x_{t-1}^{(1)}),(x_{t-2}^{(2)},x_{t-1}^{(2)}),(x_{t-2}^{(3)},x_{t-1}^{(3)})\right)$. For $R_{\mathbf{X},x^{(3)}}[f_\theta,\boldsymbol{\rho}]=\text{MMSE}^{(3)}(\boldsymbol{\rho})+\lambda\cdot \sum_{j=1}^{3}\text{arctanh}(\rho_{j})$, since only $x_{t-2}^{(1)}$ and $x_{t-1}^{(2)}$ are d-connected to $x_t^{(3)}$, at the minimization of $R_{\mathbf{X},x^{(3)}}[f_\theta,\boldsymbol{\rho}]$, only $x_{t-2}^{(1)}$ and $x_{t-1}^{(2)}$ may have a finite $\eta_{j,l}^*$ (the other $\eta_{j,l}^*$ are all infinite). Therefore, setting the $\eta_{j,l}$ not corresponding to $x_{t-2}^{(1)}$ and $x_{t-1}^{(2)}$ as infinity, and let $\tilde{x}_{t-2}^{(1)}=x_{t-2}^{(1)}+\eta_x\cdot\epsilon_x$, $\tilde{x}_{t-1}^{(2)}=x_{t-1}^{(2)}+\eta_y\cdot\epsilon_y$, $\epsilon_x$ and $\epsilon_y$ being independent unit Gaussian variables. Let $f_\theta (x_{t-2}^{(1)},x_{t-1}^{(2)})=a\cdot x_{t-2}^{(1)} + b\cdot x_{t-1}^{(2)}$, then we can get an analytic expression for $R_{\mathbf{X},x^{(3)}}[f_\theta,\eta_x,\eta_y]$:

\begin{equation*}
\begin{aligned}
&R_{\mathbf{X},x^{(3)}}[f_\theta,\eta_x,\eta_y]\\
&=a^2\Sigma_x+(b-1)^2(\Sigma_x+\Omega_x)+a^2\eta_x^2+b^2\eta_y^2+2a(b-1)\Sigma_x+\Omega_y\\
&+\frac{\lambda}{2}\text{log}\left(1+\frac{\Sigma_x}{\eta_x^2}\right)+\frac{\lambda}{2}\text{log}\left(1+\frac{\Sigma_x+\Omega_x}{\eta_y^2}\right)
\end{aligned}
\end{equation*}

Minimizing $R_{\mathbf{X},x^{(3)}}[f_\theta,\eta_x,\eta_y]$ w.r.t. $a$ and $b$, we get

\begin{equation*}
\begin{aligned}
a^*&= \frac{\eta_y^2\Sigma_x}{\eta_x^2\eta_y^2+\eta_x^2\Sigma_x+\eta_y^2\Sigma_x+\eta_x^2\Omega_x+\Omega_x\Sigma_x}\\
b^*&= \frac{\eta_x^2(\Sigma_x+\Omega_x)+\Sigma_x\Omega_x}{\eta_x^2\eta_y^2+\eta_x^2\Sigma_x+\eta_y^2\Sigma_x+\eta_x^2\Omega_x+\Omega_x\Sigma_x}
\end{aligned}
\end{equation*}

Substituting into $R_{\mathbf{X},x^{(3)}}[f_\theta,\eta_x,\eta_y]$, we have
\begin{equation*}
\begin{aligned}
&R_{\mathbf{X},x^{(3)}}[\eta_x,\eta_y]\\
&=\min_{f_\theta}R_{\mathbf{X},x^{(3)}}[f_\theta,\eta_x,\eta_y]\\
&=\frac{\eta_y^2(\Sigma_x\Omega_x+\eta_x^2(\Sigma_x+\Omega_x))}{\eta_x^2\eta_y^2+\eta_x^2\Sigma_x+\eta_y^2\Sigma_x+\eta_x^2\Omega_x+\Omega_x\Sigma_x}+\frac{\lambda}{2}\text{log}\left(1+\frac{\Sigma_x}{\eta_x^2}\right)+\frac{\lambda}{2}\text{log}\left(1+\frac{\Sigma_x+\Omega_x}{\eta_y^2}\right)
\end{aligned}
\end{equation*}

Here we have neglected the constant $\Omega_y$. To obtain $R_{\mathbf{X},x^{(3)}}[\boldsymbol{\rho}]$, let $\rho_1=\text{tanh}\left(\frac{1}{2}\text{log}\left(1+\frac{\Sigma_x}{\eta_x^2}\right)\right)$, $\rho_2=\text{tanh}\left(\frac{1}{2}\text{log}\left(1+\frac{\Sigma_x+\Omega_x}{\eta_x^2}\right)\right)$, we have $\eta_x^2=\frac{1-\rho_1}{2\rho_1}\Sigma_x$, $\eta_y^2=\frac{1-\rho_2}{2\rho_2}(\Sigma_x+\Omega_x)$. Substituting, we have

\begin{equation*}
\begin{aligned}
&R_{\mathbf{X},x^{(3)}}[\boldsymbol{\rho}]=\text{MMSE}^{(3)}(\boldsymbol{\rho})+\lambda\cdot \sum_{j=1}^{2}\text{arctanh}(\rho_{j})\\
&=\frac{(\rho_2-1)(\Sigma_x+\Omega_x)((\rho_1-1)\Sigma_x-(\rho_1+1)\Omega_x)}{(1+\rho_1+\rho_2-3\rho_1\rho_2)\Sigma_x+(1+\rho_1)(1+\rho_2)\Omega_x}+\lambda\cdot \text{arctanh}(\rho_1)+\lambda\cdot\text{arctanh}(\rho_2)
\end{aligned}
\end{equation*}

Fig. S\ref{fig:risk} shows the landscape of $\text{MMSE}^{(3)}(\boldsymbol{\rho})$ and $R_{\mathbf{X},x^{(3)}}[\boldsymbol{\rho}]$, for $\Sigma_x=1,\Omega_x=2,\lambda=1$. We see that $\text{MMSE}^{(3)}(\boldsymbol{\rho})$ satisfies the above mentioned four properties. Particularly, $\text{MMSE}^{(3)}(\boldsymbol{\rho})\big|_{\rho_1=1,\rho_2=0}>\text{MMSE}^{(3)}(\boldsymbol{\rho})\big|_{\rho_1=0,\rho_2=1}$. After adding $\lambda\cdot \text{arctanh}(\rho_1)+\lambda\cdot\text{arctanh}(\rho_2)$, the $R_{\mathbf{X},x^{(3)}}[\boldsymbol{\rho}]$ has global minimum along $\rho_1=0$ largely due to this property. Therefore, for this particular example, when $R_{\mathbf{X},x^{(3)}}[\boldsymbol{\rho}]$ is minimized, $\rho_1=0$, i.e. $I(x_{t-2}^{(1)},\tilde{x}_{t-2}^{(1)(\eta_1^*)})=0$.

\begin{suppfigure}
    \centering
    \begin{subfigure}{.45\linewidth}
    \includegraphics[scale=0.45]{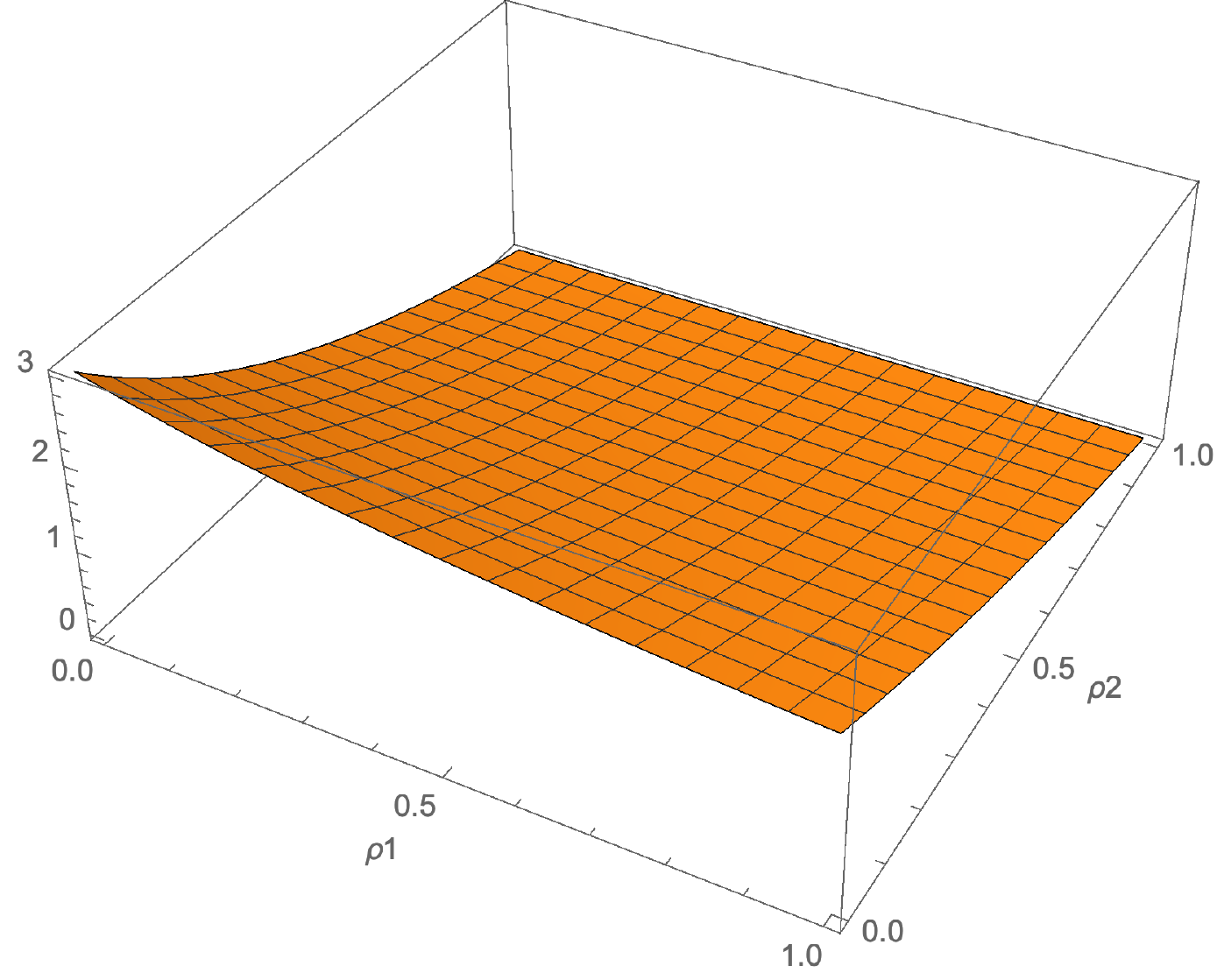}
    \caption{}
    \end{subfigure}
    \begin{subfigure}{.45\linewidth}
    \includegraphics[scale=0.45]{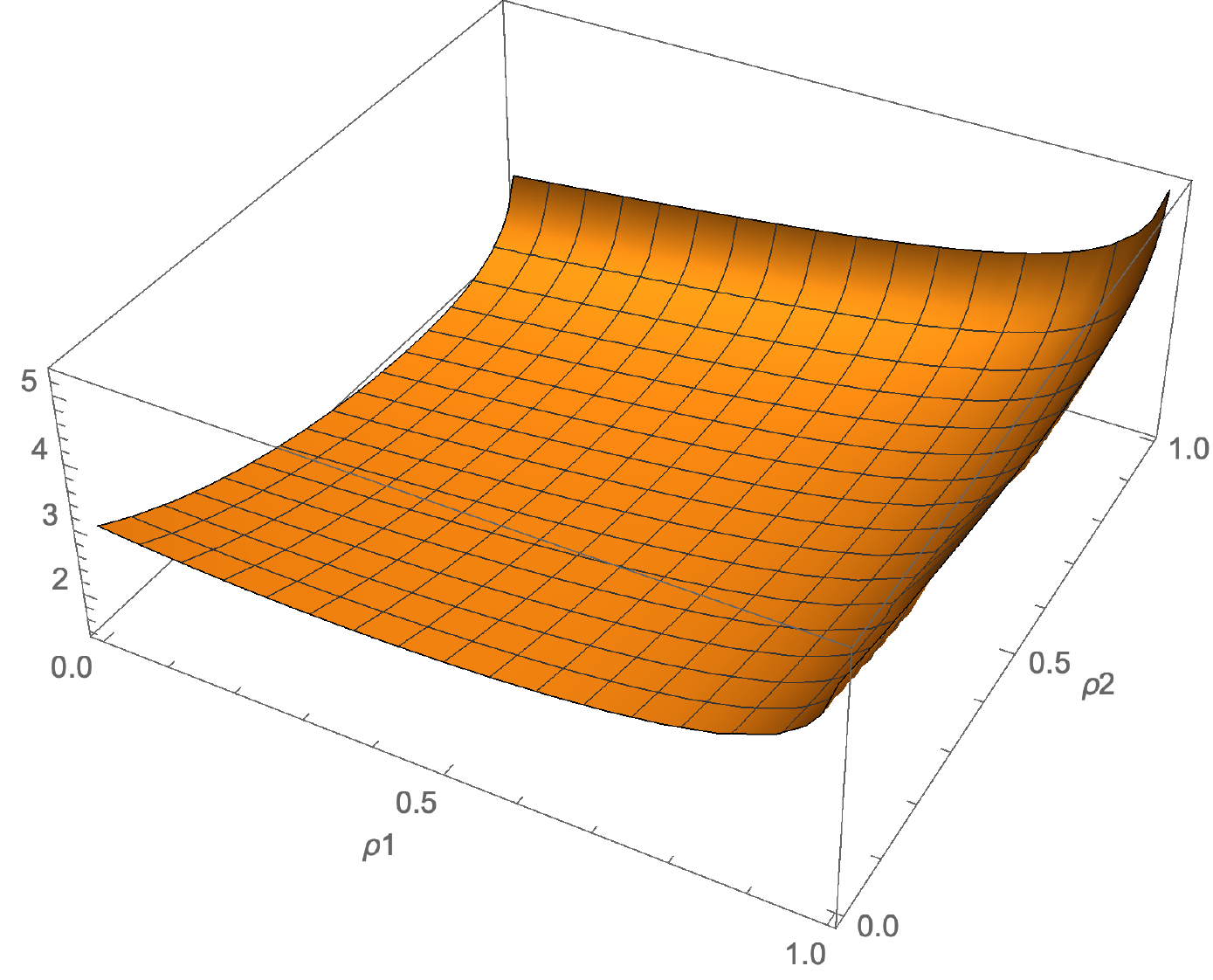}
    \caption{}
    \end{subfigure}
    \caption{(a) $\text{MMSE}^{(3)}(\boldsymbol{\rho})$ and (b) $R_{\mathbf{X},x^{(3)}}[\boldsymbol{\rho}]$ in section \ref{app:analysis_risk}, for $\Sigma_x=1,\Omega_x=2,\lambda=1$.}%
    \label{fig:risk}%
\end{suppfigure}

By varying the value of $\lambda$, we can tune the relative influence of the two terms $\text{MMSE}^{(3)}(\boldsymbol{\rho})$ and $\sum_{j=1}^{2}\text{arctanh}(\rho_{j})$. The landscape corresponding to $\lambda=0.01,0.5,2,10$ are plotted in Fig. S\ref{fig:risk_tune}. We see that when $\lambda\ll1$, the MMSE term dominates, and it is possible that the global minimum of $R_{\mathbf{X},x^{(3)}}[\boldsymbol{\rho}]$ is not at $\rho_1=0$. This is similar to the effect of a L1 regularization, where if the coefficient $\lambda$ for the L1 is vanishingly small, the L1 regularization will barely influence the loss landscape. When $\lambda$ is not vanishingly small, as in Fig. S\ref{fig:risk_tune} (b), we see that the global minimum of $R_{\mathbf{X},x^{(3)}}[\boldsymbol{\rho}]$ lies on $\rho_1=0$. When $\lambda\to+\infty$, the $\sum_{j=1}^{2}\text{arctanh}(\rho_{j})$ term dominates and the global minimum is at $\rho_1=0,\rho_2=0$.

In general, we expect $R_{\mathbf{X},x^{(i)}}[\boldsymbol{\rho}]$ behave qualitatively similar. When $\lambda\to+\infty$, the global minimum for $R_{\mathbf{X},x^{(i)}}[\boldsymbol{\rho}]$ is at $\boldsymbol{\rho}^*=\mathbf{0}$. As we ramp down $\lambda$, the dimension that has largest influence on MMSE will first host the global minimum with nonzero $\rho_j^*$, which is most likely the variable that directly causes $x_i^{(i)}$. When $\lambda$ is further ramping down, we expect that the variables that host the global minimum with nonzero $\rho_j$ will more likely be those that directly causes $x_i^{(i)}$, due to the landscape influenced by the four properties of MMSE. This can justify the mutual information-regularized risk as a good objective for causal discovery/variable selection. The experiments in the paper will empirically test the performance of the mutual information-regularized risk.

\begin{suppfigure}
    \centering
    \begin{subfigure}{.45\linewidth}
    \includegraphics[scale=0.45]{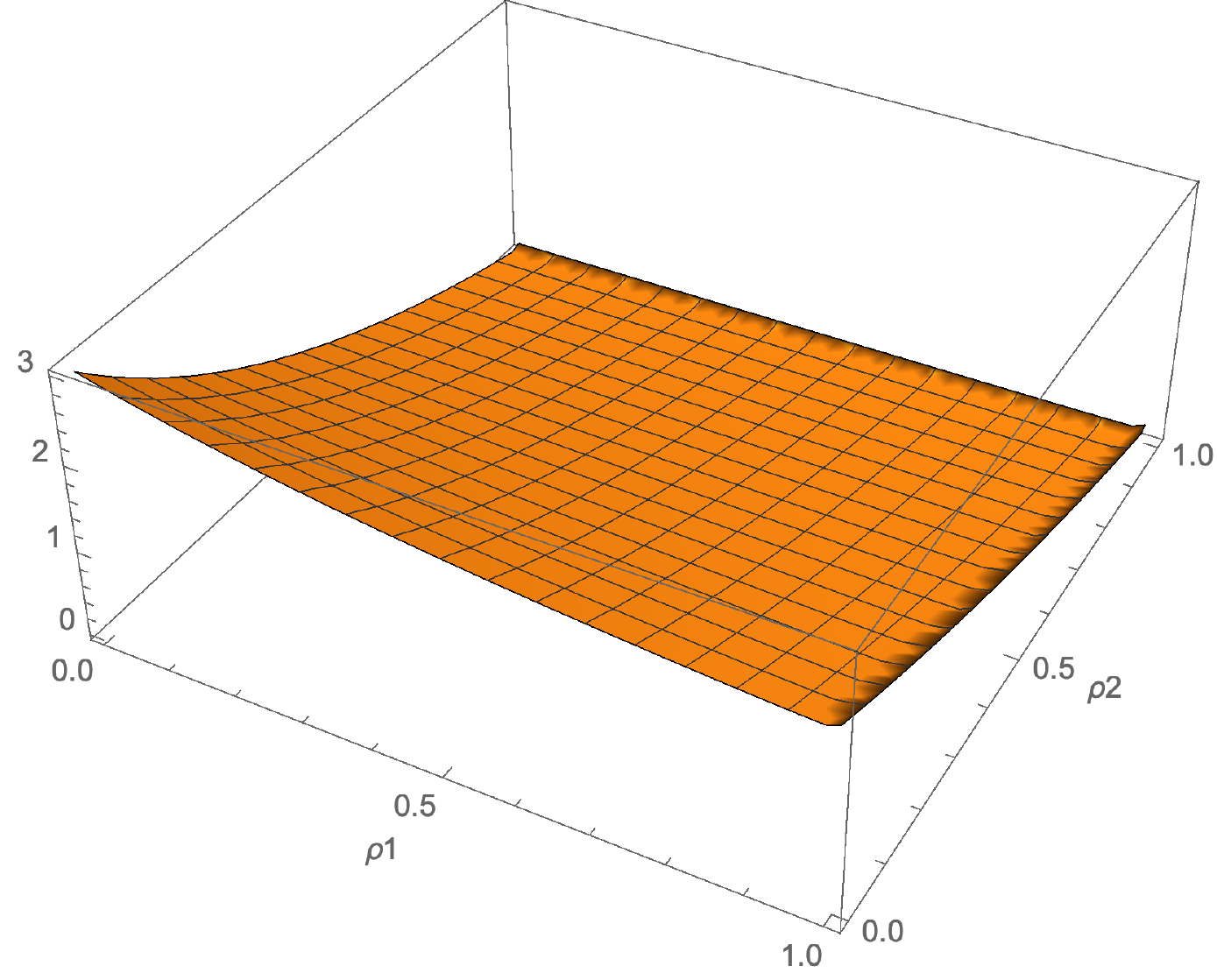}
    \caption{}
    \end{subfigure}
    \begin{subfigure}{.45\linewidth}
    \includegraphics[scale=0.45]{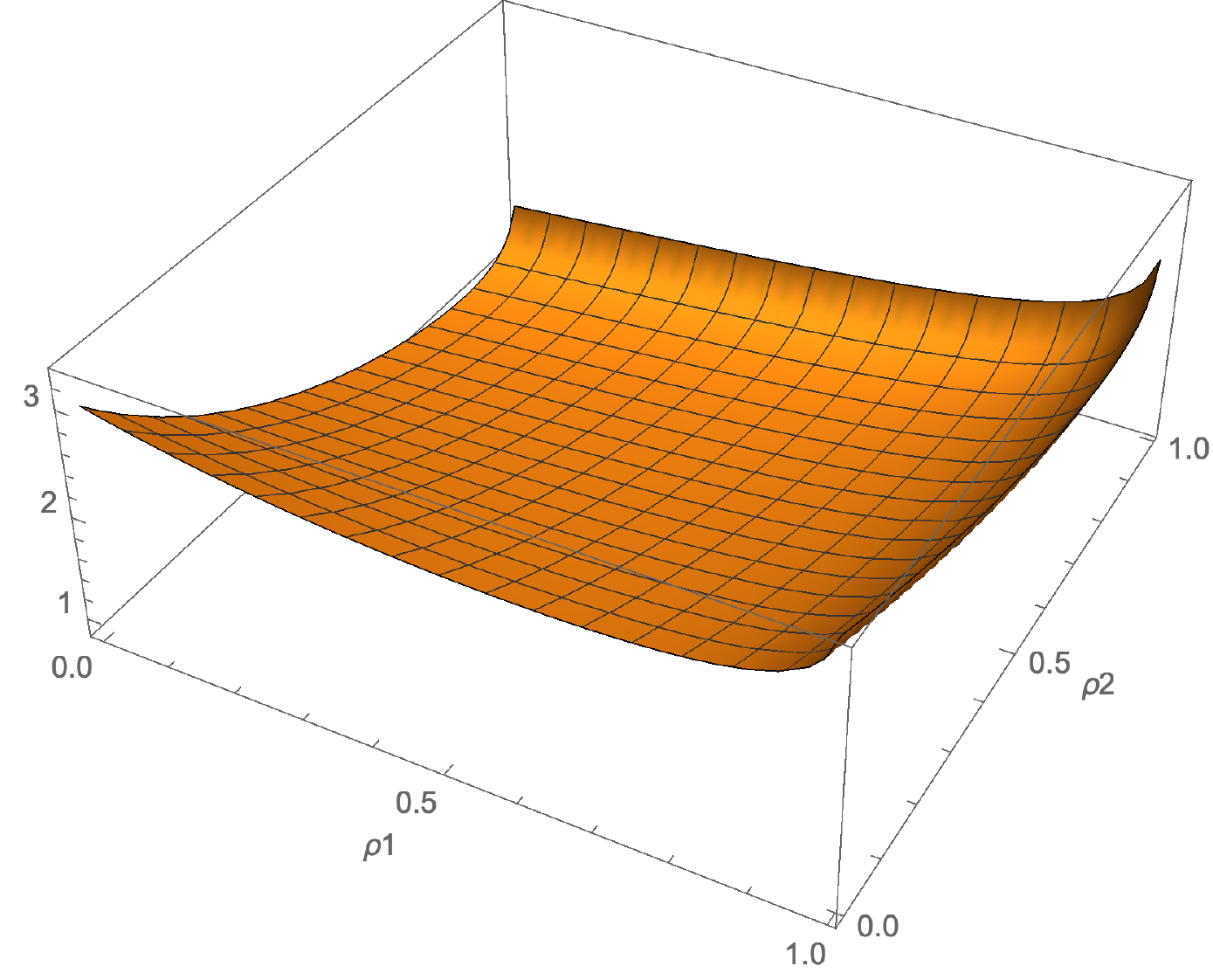}
    \caption{}
    \end{subfigure}
    \begin{subfigure}{.45\linewidth}
    \includegraphics[scale=0.45]{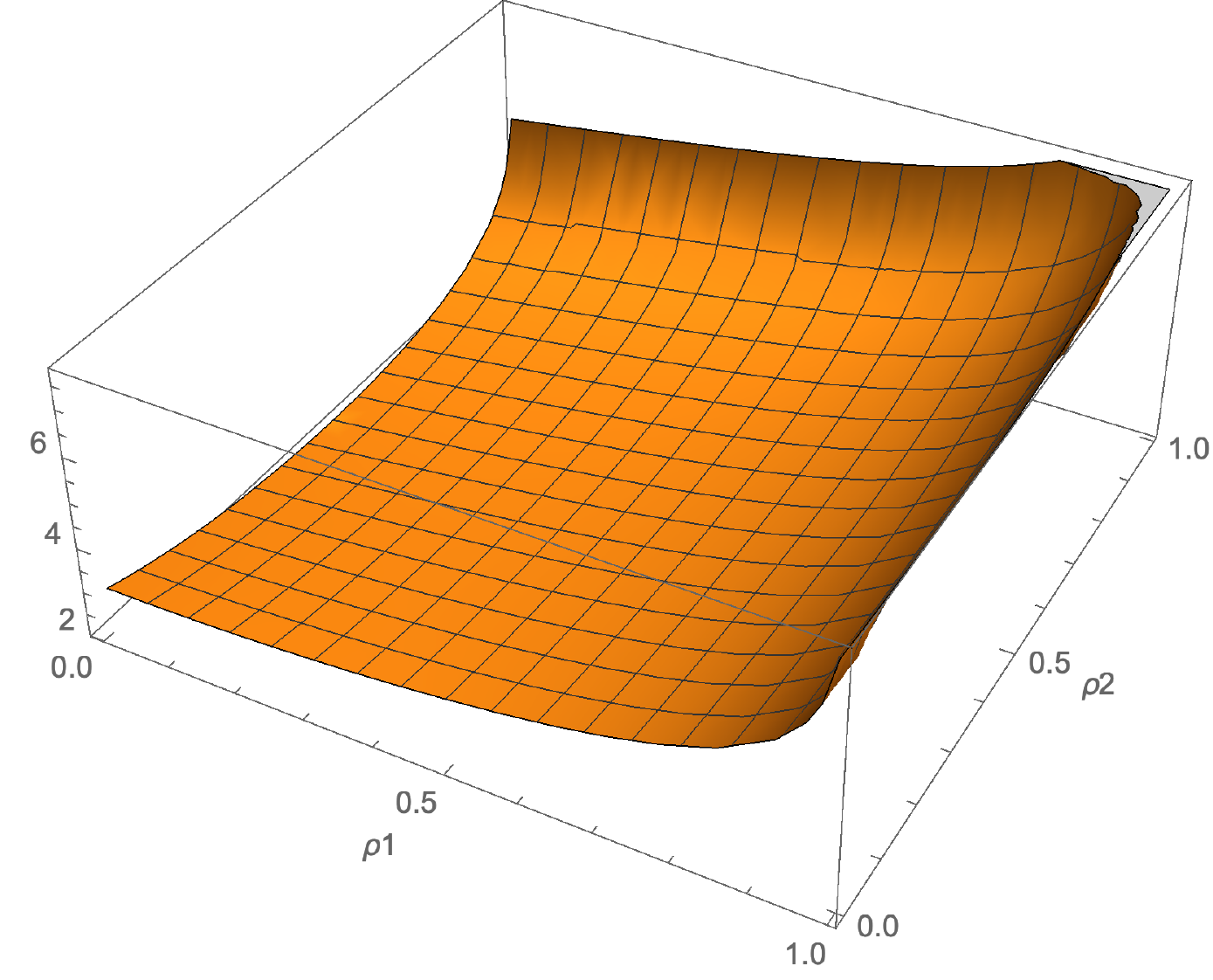}
    \caption{}
    \end{subfigure}
    \begin{subfigure}{.45\linewidth}
    \includegraphics[scale=0.45]{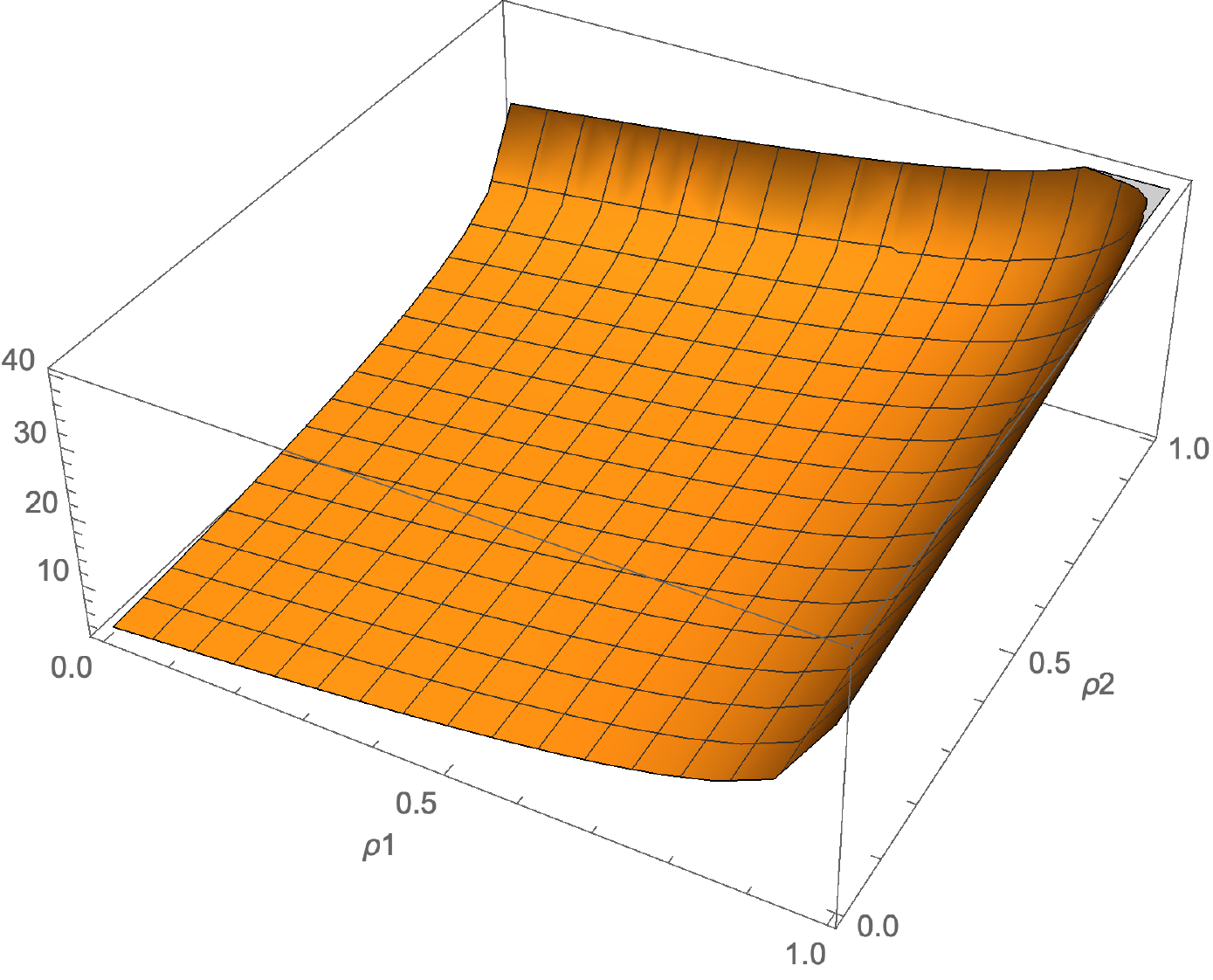}
    \caption{}
    \end{subfigure}
    \caption{(a) $R_{\mathbf{X},x^{(3)}}[\boldsymbol{\rho}]$ for (a) $\lambda=0.01$, (b) $\lambda=0.5$, (c) $\lambda=2$ and (d) $\lambda=10$ in section \ref{app:analysis_risk}, for $\Sigma_x=1,\Omega_x=2$.}%
    \label{fig:risk_tune}%
\end{suppfigure}

\subsection{Upper bound for the mutual information-regularized risk}
\label{app:Gaussian_channel_upper_bound}

In this section, we prove that $I(\tilde{X}^{(j)(\eta_j)}_{t-1};X^{(j)}_{t-1})\leq \frac{1}{2}\sum_{l=1}^{KM}\text{log}\left(1+\frac{\text{Var}(X_{t-1,l}^{(j)})}{\eta_{j,l}^2}\right)$. We formally state the theorem as follows:

\begin{theorem}
Let $\tilde{X}_{t-1}^{(j)(\eta_j)}:=X_{t-1}^{(j)}+\eta_j\cdot \epsilon_j$, $j=1,2,...N$ be the noise-corrupted inputs with \emph{learnable} noise amplitudes $\eta_j\in \R^{KM}$, and $\epsilon_j\sim N(\mathbf{0},\mathbf{I})$. We have

\begin{equation}
I(\tilde{X}^{(j)(\eta_j)}_{t-1};X^{(j)}_{t-1})\leq \frac{1}{2}\sum_{l=1}^{KM}\text{log}\left(1+\frac{\text{Var}(X_{t-1,l}^{(j)})}{\eta_{j,l}^2}\right)
\end{equation}

where $l$ is the $l^{\text{th}}$ element of a vector, $\text{std}(X_{t-1,l}^{(j)})$ is the standard deviation of $X_{t-1,l}^{(j)}$ across $t$. The equality is reached when $X_{t-1}^{(j)}$ obeys a multivariate Gaussian distribution with diagonal covariance matrix $\Sigma$ satisfying $\Sigma_{l,l}=\text{Var}(X^{(j)}_{t-1,l})+\eta_{j,l}^2$.
\end{theorem}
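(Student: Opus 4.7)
The plan is to treat the relation $\tilde{X}^{(j)(\eta_j)}_{t-1} = X^{(j)}_{t-1} + \eta_j \odot \epsilon_j$ as a vector Gaussian channel and apply the classical capacity-style argument. For notational simplicity, I will drop the $j$ subscript and the $t-1$ index, writing $X \in \mathbb{R}^{KM}$ for the input, $\tilde{X} = X + \eta \odot \epsilon$ for the noisy version, and $\epsilon \sim \mathcal{N}(\mathbf{0}, \mathbf{I})$ independent of $X$. The first step is the standard decomposition
\begin{equation*}
I(\tilde{X}; X) \;=\; h(\tilde{X}) - h(\tilde{X}\mid X) \;=\; h(\tilde{X}) - h(\eta \odot \epsilon),
\end{equation*}
where the second equality uses the independence of $\epsilon$ and $X$, so that conditioned on $X$, $\tilde{X}$ is just a shifted version of the Gaussian noise $\eta \odot \epsilon$.

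Next I would bound the two entropy terms separately. For $h(\eta \odot \epsilon)$: since the components of $\epsilon$ are independent unit Gaussians, the components of $\eta \odot \epsilon$ are independent Gaussians with variances $\eta_l^2$, giving exactly $h(\eta \odot \epsilon) = \sum_l \tfrac{1}{2}\log(2\pi e\, \eta_l^2)$. For $h(\tilde{X})$: I would apply two standard inequalities in sequence — first subadditivity of differential entropy, $h(\tilde{X}) \le \sum_l h(\tilde{X}_l)$, and then the maximum-entropy property of the Gaussian distribution among all distributions with a fixed variance, $h(\tilde{X}_l) \le \tfrac{1}{2}\log(2\pi e\, \mathrm{Var}(\tilde{X}_l))$. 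Using independence of $X$ and $\epsilon$, $\mathrm{Var}(\tilde{X}_l) = \mathrm{Var}(X_l) + \eta_l^2$, so
\begin{equation*}
h(\tilde{X}) \;\le\; \sum_{l=1}^{KM} \tfrac{1}{2}\log\!\bigl(2\pi e\,(\mathrm{Var}(X_l) + \eta_l^2)\bigr).
\end{equation*}

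Combining, the $\log(2\pi e)$ and $\eta_l^2$ terms telescope to yield the stated bound
\begin{equation*}
I(\tilde{X}; X) \;\le\; \tfrac{1}{2}\sum_{l=1}^{KM} \log\!\Bigl(1 + \tfrac{\mathrm{Var}(X_l)}{\eta_l^2}\Bigr).
\end{equation*}
For the equality statement, I would trace back when each inequality is tight: subadditivity is tight iff the components of $\tilde{X}$ are mutually independent, and the maximum-entropy bound is tight iff each $\tilde{X}_l$ is Gaussian. Since $\epsilon$ is already Gaussian with diagonal covariance, both conditions are met precisely when $X$ itself is multivariate Gaussian with a diagonal covariance matrix, in which case $\tilde{X}_l \sim \mathcal{N}(\cdot, \mathrm{Var}(X_l)+\eta_l^2)$ independently across $l$, matching the claimed $\Sigma_{l,l} = \mathrm{Var}(X_l) + \eta_l^2$.

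This proof is essentially routine — the only minor care-point is keeping the bookkeeping of independence clean (between $X$ and $\epsilon$, and between coordinates of $\epsilon$) so that $\mathrm{Var}(\tilde{X}_l) = \mathrm{Var}(X_l) + \eta_l^2$ and $h(\tilde{X}\mid X) = h(\eta \odot \epsilon)$ both hold without further assumptions on the joint distribution of $X$. No genuine obstacle arises; the result is a direct corollary of the Gaussian channel capacity argument applied coordinate-wise.
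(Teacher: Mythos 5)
Your proposal is correct and follows essentially the same route as the paper's proof: the decomposition $I(\tilde{X};X)=h(\tilde{X})-h(\eta\odot\epsilon)$, the exact Gaussian formula for the noise entropy, and the bound on $h(\tilde{X})$ by the maximum-entropy Gaussian with the given per-coordinate variances $\mathrm{Var}(X_l)+\eta_l^2$ (the paper invokes this in one step, whereas you split it into subadditivity plus the coordinate-wise Gaussian bound, which is the same argument). Your reading of the equality condition — that $X$ is Gaussian with diagonal covariance so that $\tilde{X}_l\sim\mathcal{N}(\cdot,\mathrm{Var}(X_l)+\eta_l^2)$ — is in fact the correct interpretation of the paper's slightly garbled statement of that condition.
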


\begin{proof}
We have
\begin{equation*}
\begin{aligned}
&I(\tilde{X}^{(j)(\eta_j)}_{t-1};X^{(j)}_{t-1})=H(\tilde{X}^{(j)(\eta_j)}_{t-1}) - H(\eta_j\cdot\epsilon_j)\\
&=H(\tilde{X}^{(j)(\eta_j)}_{t-1}) - \left(\frac{KM}{2}\text{log}(2\pi e)+\sum_{l=1}^{KM}\frac{1}{2}\text{log}(\eta_{j,l}^2)\right)\\
\end{aligned}
\end{equation*}

Here $H(\cdot)$ is differential entropy. For $\tilde{X}^{(j)(\eta_j)}_{t-1}$, its variance at the $l^{\text{th}}$ dimension is
\begin{equation*}
\begin{aligned}
\text{Var}(\tilde{X}^{(j)(\eta_{j})}_{t-1,l})&=\text{Var}(X^{(j)}_{t-1,l}+\eta_j\cdot\epsilon_j)\\
&=\text{Var}(X^{(j)}_{t-1,l})+\text{Var}(\eta_{j,l}\cdot\epsilon_{j,l})\\
&=\text{Var}(X^{(j)}_{t-1,l})+\eta_{j,l}^2
\end{aligned}
\end{equation*}

The second equality is due to that $X_{t-1}^{(j)}$ is independent of $\epsilon_j$. Using the principle of maximum entropy, the distribution that maximizes $H(\tilde{X}^{(j)(\eta_j)}_{t-1})$ subject to the constraint of $\text{Var}(\tilde{X}^{(j)(\eta_{j})}_{t-1,l})=\text{Var}(X^{(j)}_{t-1,l})+\eta_{j,l}^2, l=1,2,...KM$ is a Gaussian distribution whose diagonal covariance matrix $\Sigma$ satisfies $\Sigma_{l,l}=\text{Var}(X^{(j)}_{t-1,l})+\eta_{j,l}^2$. Its entropy is $H(\tilde{X}^{(j)(\eta_j)}_{t-1})=\frac{KM}{2}\text{log}(2\pi e)+\sum_{l=1}^{KM}\frac{1}{2}\text{log}(\eta_{j,l}^2+\text{Var}(X^{(j)}_{t-1,l}))$. Therefore,

\begin{equation*}
\begin{aligned}
&I(\tilde{X}^{(j)(\eta_j)}_{t-1};X^{(j)}_{t-1})\\
&\leq \left(\frac{KM}{2}\text{log}(2\pi e)+\sum_{l=1}^{KM}\frac{1}{2}\text{log}(\eta_{j,l}^2+\text{Var}(X^{(j)}_{t-1,l}))\right)-\left(\frac{KM}{2}\text{log}(2\pi e)+\sum_{l=1}^{KM}\frac{1}{2}\text{log}(\eta_{j,l}^2)\right)\\
&=\frac{1}{2}\sum_{l=1}^{KM}\text{log}\left(1+\frac{\text{Var}(X_{t-1,l}^{(j)})}{\eta_{j,l}^2}\right)\\
\end{aligned}
\end{equation*}
The equality is reached when $X_{t-1}^{(j)}$ obeys a multivariate Gaussian distribution with diagonal covariance matrix $\Sigma$ satisfying $\Sigma_{l,l}=\text{Var}(X^{(j)}_{t-1,l})+\eta_{j,l}^2$.
\end{proof}

\subsection{Implementation details for the methods}
\label{app:algorithm_implementation}

Here we state the implementation details for our method, as well as other methods being compared. Throughout this paper, unless otherwise specified, we use the standard k-nearest neighbor technique in \cite{kraskov2004estimating} to estimate the KL-divergence and mutual information (with number of neighbors $k=5$) and conditional mutual information (with number of neighbors $k=3$), which is used in our implementations of Mutual information, Transfer Entropy and Causal Influence.

\subsubsection{Our method}

Without stating otherwise, our method (Algorithm \ref{alg:learnable_noise}) as a default uses a three layer neural net, with two hidden layers having 8 neurons and leakyReLU ($\text{max}(0.3x,x)$) activation, and the last layer having linear activation. We set the number of fake time series $S=\text{max}(2,\ceil[\big]{N/2})$, and significance level $\alpha=0.05$. Adam \cite{kingma2014adam} optimizer with learning rate $=10^{-4}$ is used as default throughout this paper. We set $\eta_0=0.01$ and $\lambda=0.002$. We use 30000 epochs.
It also has a 400 epoch warm-up period where the mutual information term is turned off, to allow $f_\theta$ to find a good initial model as a start. We use the the upper bound (Eq. \ref{eq:empirical_upper_bound}) as the risk and also in estimating $W_{ji}$, as discussed in the main text in Section \ref{sec:our_method}. 
In this work, the relative noise amplitude $\chi_{j,l}=\frac{\eta_{j,l}}{\text{std}(X_{t-1,l}^{(j)})}$ is shared across the dimension $l$ for each time series $j$. This simplifies the risk calculation, and is invariant to the rescaling of each time series $X_{t-1}^{(j)}$. We also tested fully parameterizing $\chi_{j,l}$ with a similar performance.

\subsubsection{Transfer Entropy}
We use the definition of transfer entropy as defined in \cite{schreiber2000measuring}. In that work the transfer entropy is defined for two time series. To deal with multiple time series, we let $X_{t-1}^{(\hat{j})}$ also include other time series, similar to the extension of transfer entropy as in \cite{lizier2008local}.

\subsubsection{Causal Influence}
For causal influence \cite{janzing2013quantifying}, we use the same network architecture as in our method, to learn a prediction model. Then the KL divergence is estimated via the technique in \cite{kraskov2004estimating}.

\subsubsection{Linear Granger}

We follow the definition of linear Granger causality (Eq. (7) and (8) in \cite{ding2006granger}) to calculate linear Granger causality. Specifically, we calculate the residual squared error of a linear predictor of $x_{t-1}^{(i)}$ with and without $X_{t-1}^{(j)}$ (both with $\mathbf{X}_{t-1}^{(\hat{j})}$). Then the linear Granger causality equals the log of the ratio of the two residual squared errors.

\subsubsection{Kernel Granger}

We use the implementation\footnote{At \href{https://github.com/danielemarinazzo/KernelGrangerCausality}{https://github.com/danielemarinazzo/KernelGrangerCausality}.} for \cite{marinazzo2008kernel,marinazzo2008kernel2} for estimating kernel Granger causality. We use their default settings, with inhomogeneous polynomial (IP) kernel of degree $p=2$. We follow the normalization requirement of the algorithm to normalize the data for each experiment.

\subsubsection{Elastic Net}
We use elastic net \cite{zou2005regularization} with 5-fold time-series-split cross-validation, along the following regularization path: L1-ratio: 0.5, 0.8, 0.9, 0.95, 0.99, and strength of penalization $\alpha$ being a 200-step geometric series from $10^{-4}$ to $10^{-0.5}$. The score function used for cross-validation is the coefficient of determination ($R^2$). The elastic net is implemented with scikit-learn's ElasticNetCV module\footnote{At \href{https://scikit-learn.org/stable/modules/generated/sklearn.linear_model.ElasticNetCV.html}{https://scikit-learn.org/stable/modules/generated/sklearn.linear\_model.ElasticNetCV.html}.}, with optimization tolerance of $10^{-10}$.

\subsubsection{Gaussian Random}
For Gaussian Random, we draw 10,000 random matrices, each element of which is drawn from a standard Gaussian distribution.

\subsection{Implementation details for synthetic experiments}
\label{app:synthetic_exp}

For all experiments in this section, each metric is obtained by performing the experiments (including generation of the dataset and the training) ten times with seed = 0, 30, 60, 90, 120, 150, 180, 210, 240, 270 and averaging the resulting metrics (for Gaussian random matrices, for each true causal matrix $A$ sample 10,000 random matrices $\tilde{A}$). For the ground-truth causal tensor $A$, each element $A_{ji}$ is a $K\times M$ matrix, with 0.5 probability of being an all-zero matrix, and 0.5 probability of being a nonzero matrix. If $A_{ji}$ is a nonzero matrix, its each element is sampled from a log-normal distribution with $\mu=0$ and $\sigma=1$. For $B$, each $B_j$ is also a $K\times N$ matrix, with each element sampling from $U[-1,1]$. We use $\text{H}_1(x)=\text{softplus}(x)=\text{log}(1+e^x)$, and $\text{H}_2(x)=\text{tanh}(x)$ in equation (\ref{eq:synthetic}). 
As a default, 500 time series each with length of 22 are generated from Eq. (\ref{eq:synthetic}), each of which is wrapped into 19 $(\mathbf{X}_{t-1}, x_{t}^{(i)})$ pairs (since $K=3$), so there are in total $500 \times 19=9500$ examples for each dataset. The train-test-split is 9:1 for all experiments in this paper. See Fig. S\ref{fig:synthetic_example_figure} for example snapshots of time series together with the corresponding $A_{ji}$ matrices.

\subsection{AUC-ROC table for synthetic experiment}
\label{app:synthetic_ROC_AUC}
Table S\ref{table:synthetic_larger_N_AUC_PR} show the AUC-ROC table for the synthetic experiment, where for each $N$, 10 datasets are randomly sampled according to Eq. (\ref{eq:synthetic}) using random seed 0, 30, 60, 90, 120, 150, 180, 210, 240, 270, over which each method is run and their metrics are accumulated. It has similar behavior as the AUC-PR table (Table \ref{table:synthetic_larger_N_AUC_PR}) in the main text.

\begin{suppfigure}[h!]
\centering
\begin{subfigure}{1\columnwidth}
\includegraphics[scale=0.33]{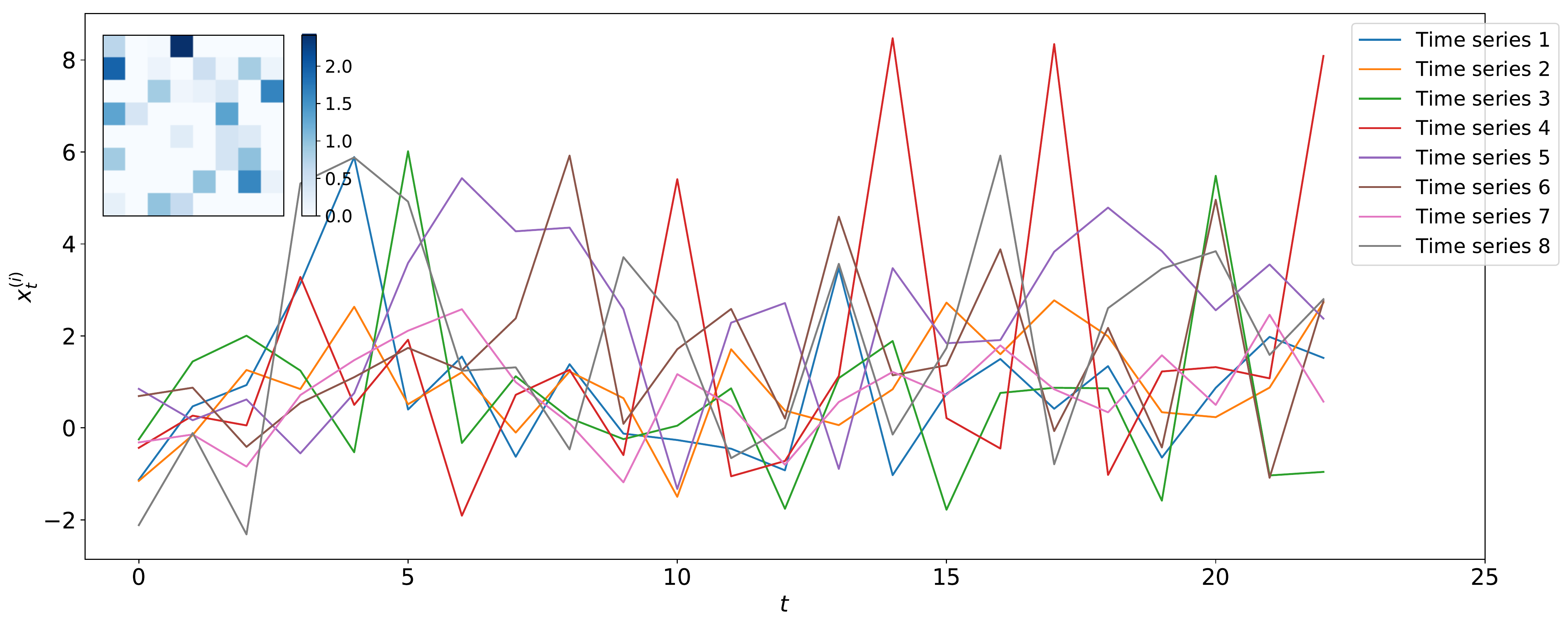}
\caption{}
\end{subfigure}
\begin{subfigure}{1\columnwidth}
\includegraphics[scale=0.33]{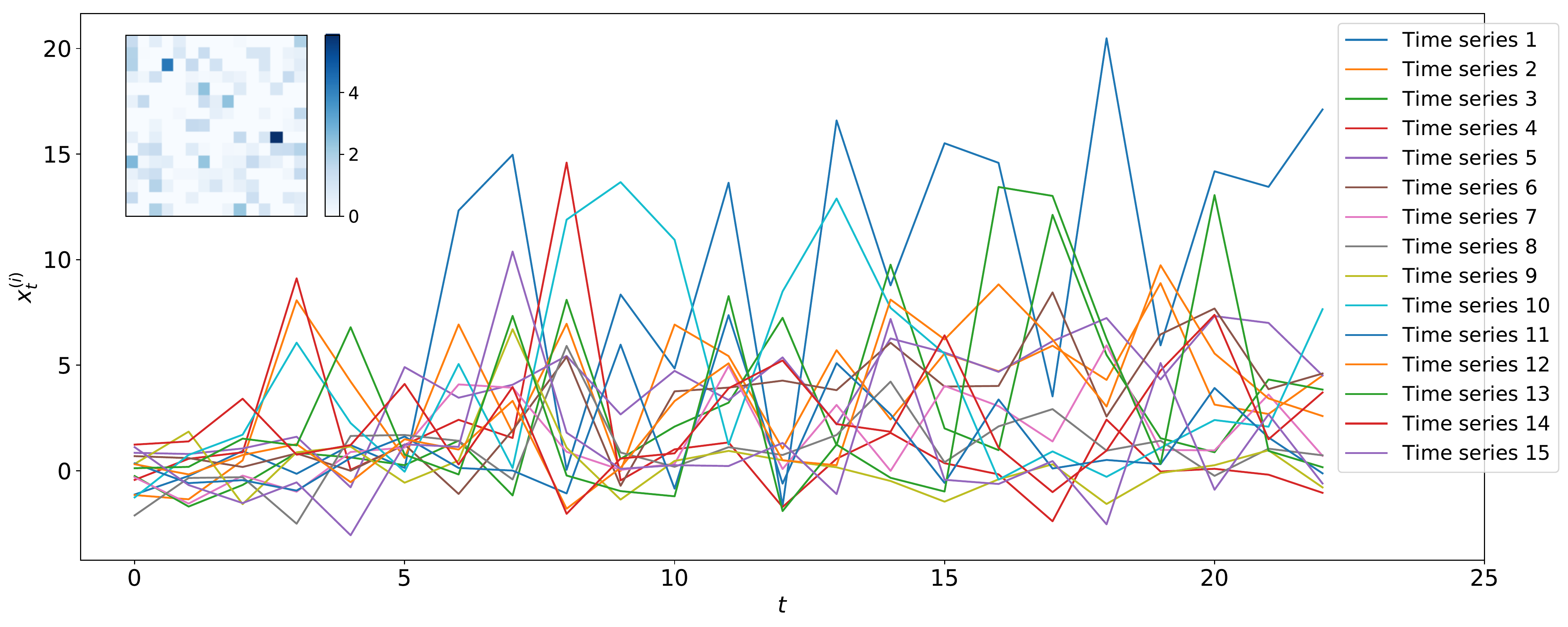}
\caption{}
\end{subfigure}
\begin{subfigure}{1\columnwidth}
\includegraphics[scale=0.33]{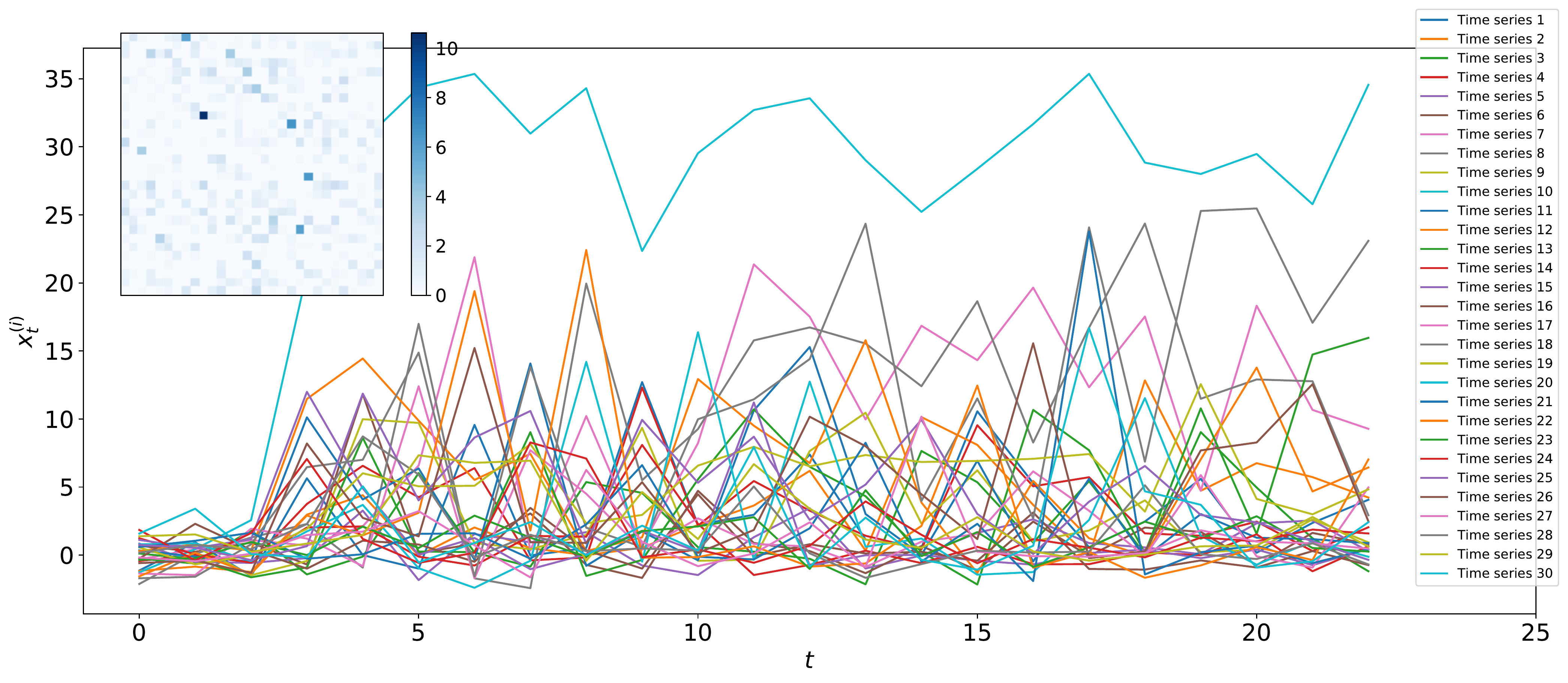}
\caption{}
\end{subfigure}
\caption{Example snapshots of the synthetic time series with (a) $N=8$, (b) $N=15$, and (c) $N=30$. The inset is the hidden underlying $|A_{ji}|$ matrix, whose $(j,i)$ element denotes the causal strength from time series $j$ to $i$. We see that the causal strength varies in orders, making it very difficult to identify each edge correctly.}
\label{fig:synthetic_example_figure}
\end{suppfigure}

\begin{supptable*}[t]
\caption{Mean and standard deviation of AUC-ROC (\%) vs. $N$, over 10 random sampling of datasets. Bold font marks the top method for each $N$.}
\centering
\begin{tabular}{p{3cm}p{1cm}p{1cm}p{1cm}p{1cm}p{1cm}p{1cm}p{1cm}p{1cm}}
\toprule
N &      3  &    4  &    5  &    8  &    10 &    15 &    20 &    30 \\
method             &         &       &       &       &       &       &       &       \\
\midrule
\textbf{MPIR (ours)}        &    95.3{\tiny$\pm$10.0} &  97.6{\tiny$\pm$4.1} &  \textbf{97.3}{\tiny$\pm$3.6} &  \textbf{96.0}{\tiny$\pm$2.4} &  \textbf{94.2}{\tiny$\pm$3.8} &  \textbf{91.0}{\tiny$\pm$3.5} &  \textbf{85.5}{\tiny$\pm$2.4} &  \textbf{76.8}{\tiny$\pm$3.5} \\
Mutual Information &    84.1{\tiny$\pm$18.9} &  90.0{\tiny$\pm$7.6} &  89.0{\tiny$\pm$1.8} &  87.2{\tiny$\pm$3.8} &  81.3{\tiny$\pm$5.3} &  77.5{\tiny$\pm$3.9} &  74.6{\tiny$\pm$3.0} &  72.0{\tiny$\pm$2.0} \\
Transfer Entropy   &    88.3{\tiny$\pm$14.6} &  95.6{\tiny$\pm$5.7} &  89.9{\tiny$\pm$8.7} &  84.4{\tiny$\pm$7.6} &  80.8{\tiny$\pm$5.1} &  69.6{\tiny$\pm$2.5} &  64.7{\tiny$\pm$2.5} &   59.2{\tiny$\pm$1.9} \\
Linear Granger     &    \textbf{98.8}{\tiny$\pm$4.0} &  96.2{\tiny$\pm$5.5} &  91.7{\tiny$\pm$8.9} &  84.1{\tiny$\pm$9.0} &  82.7{\tiny$\pm$7.2} &  73.6{\tiny$\pm$6.9} &  69.9{\tiny$\pm$4.1} &  60.0{\tiny$\pm$2.6} \\
Kernel Granger     &    98.1{\tiny$\pm$5.9} &  \textbf{98.0}{\tiny$\pm$4.4} &  95.4{\tiny$\pm$3.9} &  91.2{\tiny$\pm$2.6} &  89.5{\tiny$\pm$3.3} &  82.4{\tiny$\pm$2.2} &  76.2{\tiny$\pm$2.2} &  68.1{\tiny$\pm$1.3} \\
Elastic Net        &    97.5{\tiny$\pm$7.9} &  97.4{\tiny$\pm$4.5} &  95.3{\tiny$\pm$4.3} &  90.4{\tiny$\pm$5.1} &  87.7{\tiny$\pm$4.1} &  81.8{\tiny$\pm$3.1} &  77.8{\tiny$\pm$3.0} &  72.7{\tiny$\pm$1.4} \\
Causal Influence   &    62.9{\tiny$\pm$28.3} &  58.3{\tiny$\pm$13.8} &  60.4{\tiny$\pm$11.7} &  47.4{\tiny$\pm$7.5} &  50.7{\tiny$\pm$5.6} &  55.3{\tiny$\pm$3.3} &  51.0{\tiny$\pm$3.2} &   50.3{\tiny$\pm$1.6} \\
Gaussian random    &    49.9{\tiny$\pm$0.3} &  50.0{\tiny$\pm$0.1} &  50.0{\tiny$\pm$0.1} &  50.0{\tiny$\pm$0.0} &  50.0{\tiny$\pm$0.1} &  50.0{\tiny$\pm$0.0} &  50.0{\tiny$\pm$0.0} &  50.0{\tiny$\pm$0.0} \\
\bottomrule
\end{tabular}
\label{table:synthetic_larger_N_AUC_ROC}
\end{supptable*}

\subsection{Additional experiment: testing with model capacity variations}
\label{app:capacity}
Since in practice, we do not know the underlying causal structure \textit{a priori}, it presents a greater challenge to select the model capacity for $f_\theta$, as compared with supervised learning method where we can do cross-validation. To see how the capacity of the function approximator $f_\theta$ influences our method, we vary the number of layers and the number of neurons in each layer at $N=10$, using the same 10 datasets as in Section \ref{sec:synthetic}. Table S\ref{table:synthetic_capacity} summarizes the result. We see that our method's performance here is hardly influenced by the model capacity, with only a slight degradation at very low capacity. This shows that our method is quite tolerant and stable with model capacity variations. 

\begin{supptable}[t]
\caption{Average and standard deviation of AUC-PR and AUC-ROC for different network structures for $N=10$ with our method. Here for example, (8, 8, 8) means that the $f_\theta$ has 3 hidden layers, each with 8 neurons.}
    \centering
    \begin{tabular}{lrr}
\toprule
                           &  AUC-PR (\%) &  AUC-ROC (\%) \\
Neurons in hidden layers &         &          \\
\midrule
(8) &  90.0{\small$\pm$4.9} &   91.5{\small$\pm$4.3} \\
  (8, 8) &  93.4{\small$\pm$3.6} &   94.1{\small$\pm$3.7} \\
  (8, 8, 8) &  93.6{\small$\pm$3.6} &   94.4{\small$\pm$3.6} \\
   (8, 8, 8, 8) &  93.8{\small$\pm$4.1} &   94.2{\small$\pm$4.3} \\
 (16, 16) &  94.3{\small$\pm$3.3} &   94.4{\small$\pm$3.5} \\
 (16, 16, 16) &  94.6{\small$\pm$3.0} &   95.1{\small$\pm$2.6} \\
(16, 16, 16, 16) &  92.8{\small$\pm$4.4} &   94.0{\small$\pm$3.2} \\
\bottomrule
\end{tabular}
\label{table:synthetic_capacity}
\end{supptable}

\subsection{Details for the video game dataset}
\label{app:breakout}

Here, we implement a custom Atari Breakout game in the OpenAI Gym \cite{1606.01540} environment, mimicking the original game\footnote{A game playing video can be seen at \href{https://goo.gl/XGzppc}{https://goo.gl/XGzppc}.}, where we can access the state of the ball, paddle and bricks, etc. This representation is also used in the OO-MDP \cite{diuk2008object} paradigm for a more efficient representation of the environment state. We use the DQN algorithm, the same CNN architecture as in \cite{mnih2015human} to train an RL agent. Then we let it play the game for $\sim$45000 steps, obtaining a dataset with time-length of 45000 steps (if the agent dies, we restart the game) and 6 time series: action, paddle's $x$ position, ball's $x$ position, ball's $y$ position, number of bricks and reward. We then feed the time series (each time series normalized to mean of 0 and variance of 1) to our method, the same procedure as performed in the synthetic experiment, to let it produce an inferred matrix $W_{ji}$, which is shown in Fig. 1 in main text. All the datasets used in this paper and code will be open-sourced upon publication of the paper.

\subsection{Implementation details for experiment with heart-rate vs. breath-rate}
\label{app:real_dataset}

For the two real-world datasets, we obtain the data with the same procedure as in \cite{ancona2004radial} (See Fig.S\ref{fig:apnea_figure} for their plots). Then the data (each time series normalized to mean of 0 and variance of 1) are fed into our algorithm to infer the causal strength $W_{ji}$. For each $K=1,2,...20$, the experiments are run for 50 times with seed from 0 to 49, and Fig. \ref{fig:apnea} in the main text is obtained by averaging over the inferred $W$ matrix.

\begin{suppfigure}[t]
\begin{center}
\centerline{\includegraphics[width=1.0\columnwidth]{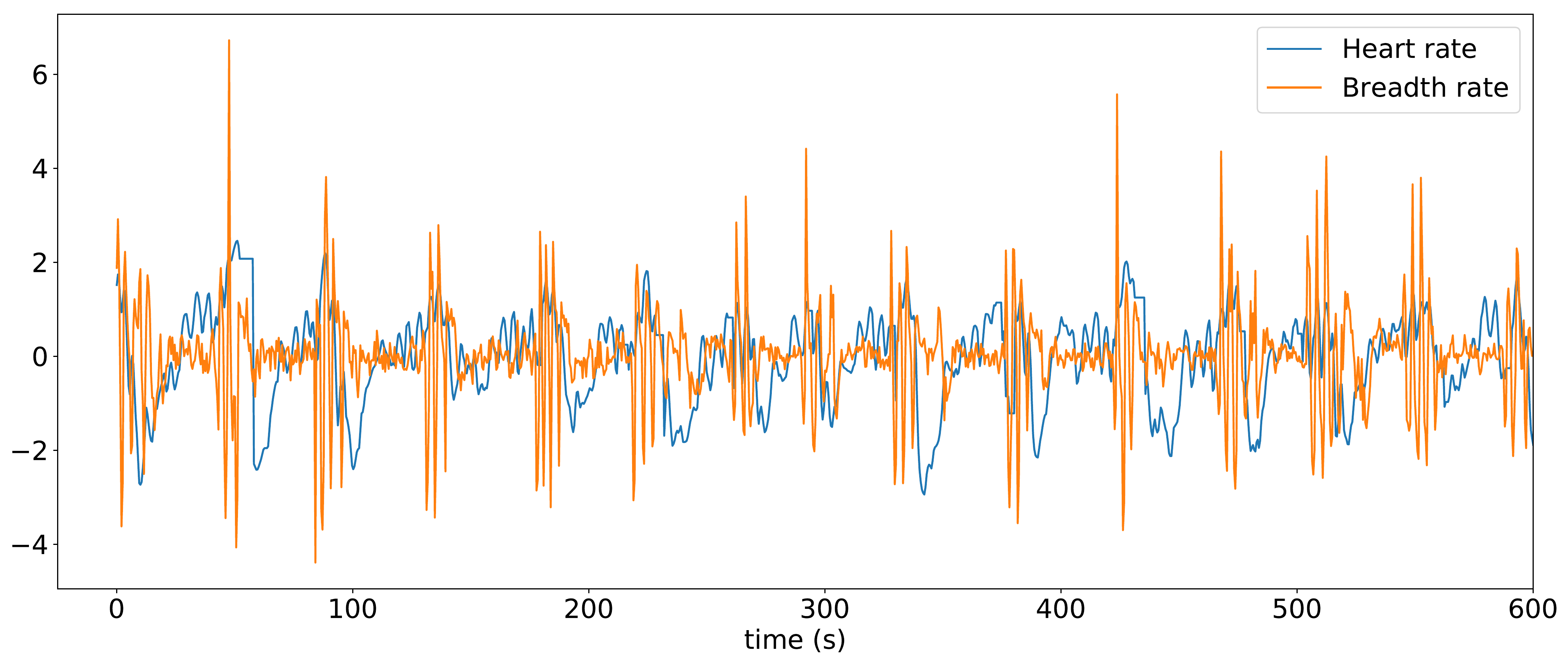}}
\caption{Time series of the heart rate and breath rate of a patient suffering sleep apnea. The data is normalized to have 0 mean and standard deviation of 1. Sample rate is 2Hz.}
\label{fig:apnea_figure}
\end{center}
\end{suppfigure}

\subsection{Additional experiment: rat EEG dataset}
\label{app:rat_EEG_experiment}

As another real-world example, we apply our algorithm to estimate the directional relations of the EEG signals between the right and left cortical intracranial electrodes \cite{ratEEG}, before and after lesion (see Fig. S\ref{fig:ratEEG_before} and S\ref{fig:ratEEG_after_figure} for the signals), also studied in \cite{ancona2004radial,quiroga2002performance,marinazzo2008kernel}. Figure S\ref{fig:ratEEG_W} (left) shows the inferred predictive strength $W_{ji}$ for the EEG signals of a normal rat. We see that there is only a slight asymmetry, with the right channel having a slightly stronger influence on the left channel than the reverse direction. Fig. S\ref{fig:ratEEG_W} (right) shows $W_{ji}$ for the EEG signals with unilateral lesion in the rostral pole of the reticular thalamic nucleus. We see that there is stronger predictive strength from the left to the right channels. Compared with the result of previous works \cite{ancona2004radial,marinazzo2008kernel} as also shown in Fig. S\ref{fig:ratEEG_compare}, we see that all methods correctly infer the directional relations before and after brain lesion. In addition, our method shows only a slight decay of predictive strength with increasing history length, in contrast to the much more rapid decay of causality index in \cite{ancona2004radial}, again demonstrating our method's insensitivity against history length, due to its flexibility in extracting the right amount of information in order to predict the future. This experiment and the breadth rate vs. heart rate experiment in Section \ref{sec:heart_rate} demonstrate our method's capability in inferring the directional relations from noisy, real-world data.

\begin{suppfigure}[t]
\begin{center}
\centerline{\includegraphics[width=1\columnwidth]{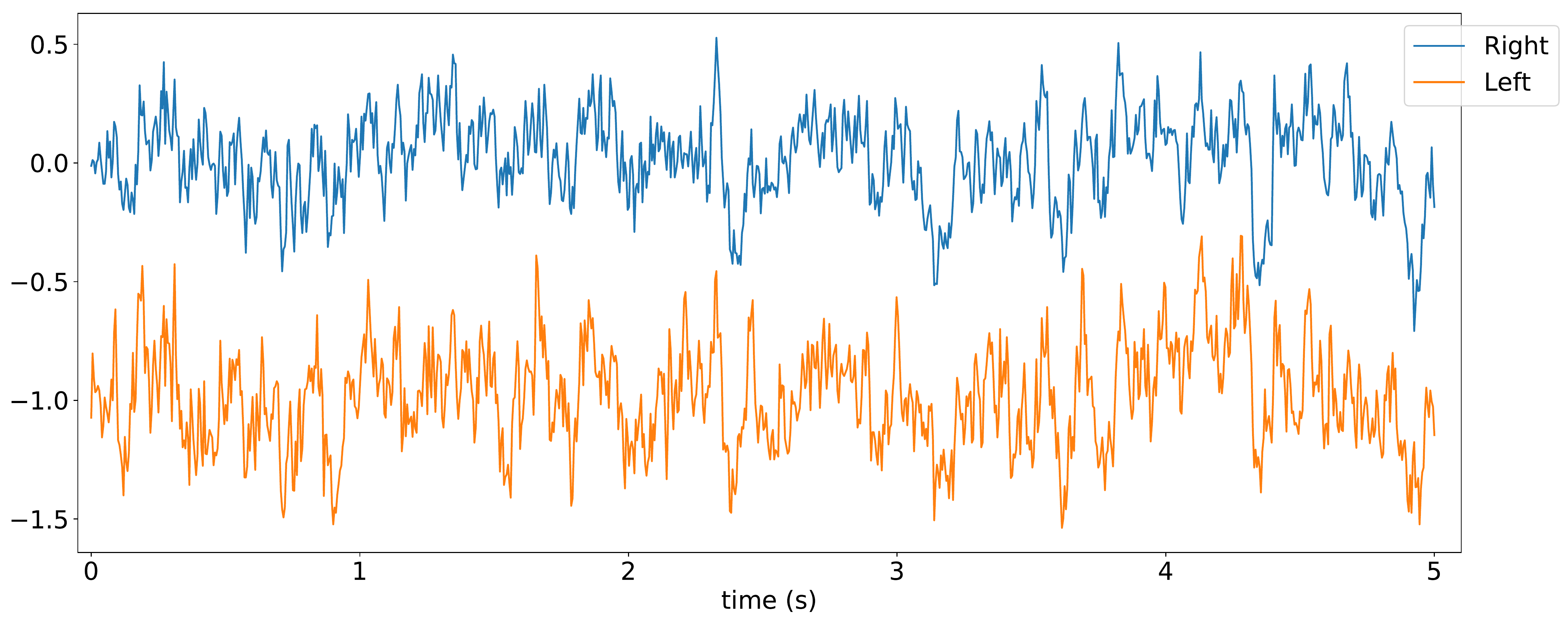}}
\caption{Time series of a normal rat EEG signals from right and left cortical intracranial electrodes. The data is normalized to have 0 mean and standard deviation of 1, and the left signal is plotted with offset for better visualization. Sample rate is 200Hz.}
\label{fig:ratEEG_before}
\end{center}
\vskip -0.3in
\end{suppfigure}

\begin{suppfigure}[t]
\begin{center}
\centerline{\includegraphics[width=1\columnwidth]{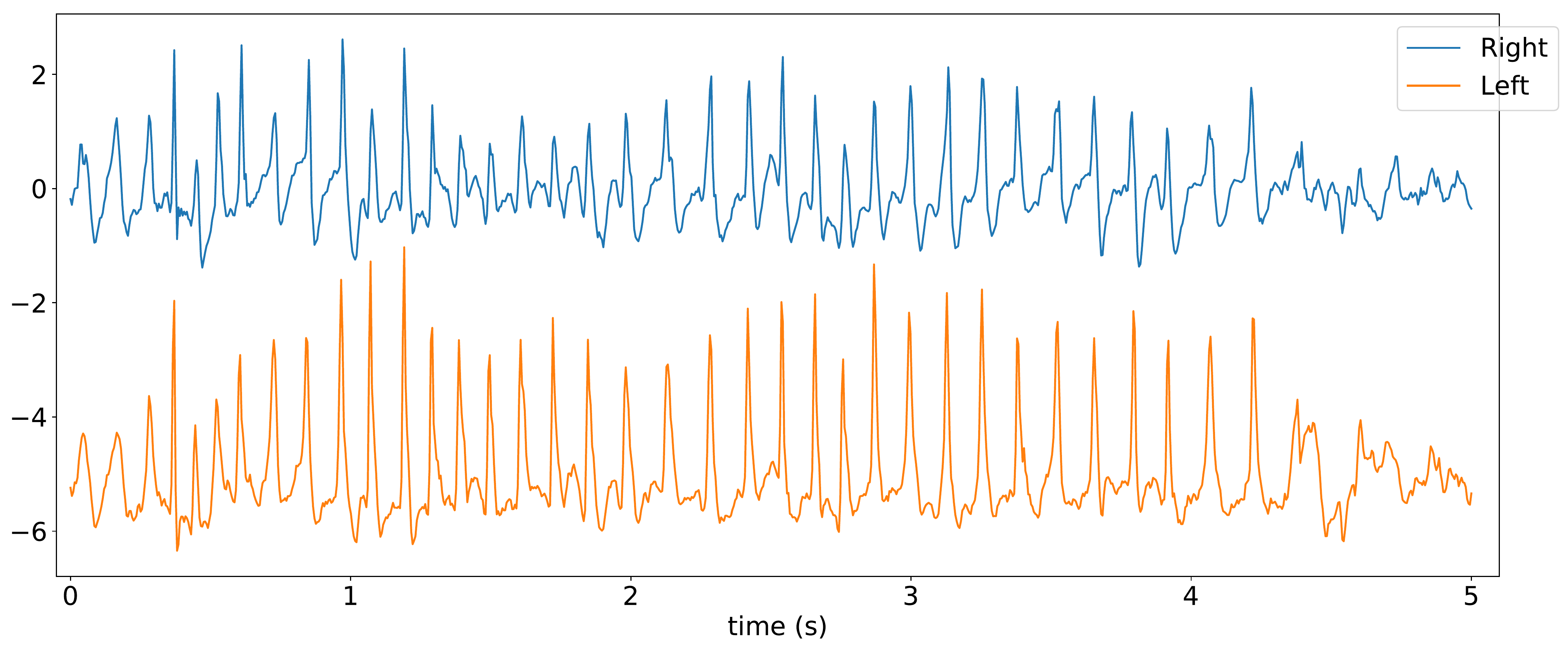}}
\caption{Time series of a rat EEG signals from right and left cortical intracranial electrodes, after lesion. The data is normalized to have 0 mean and standard deviation of 1, and the left signal is plotted with offset for better visualization. Sample rate is 200Hz.}
\label{fig:ratEEG_after_figure}
\end{center}
\vskip -0.3in
\end{suppfigure}

\begin{suppfigure}[t]
\begin{center}
\centerline{\includegraphics[width=0.8\columnwidth]{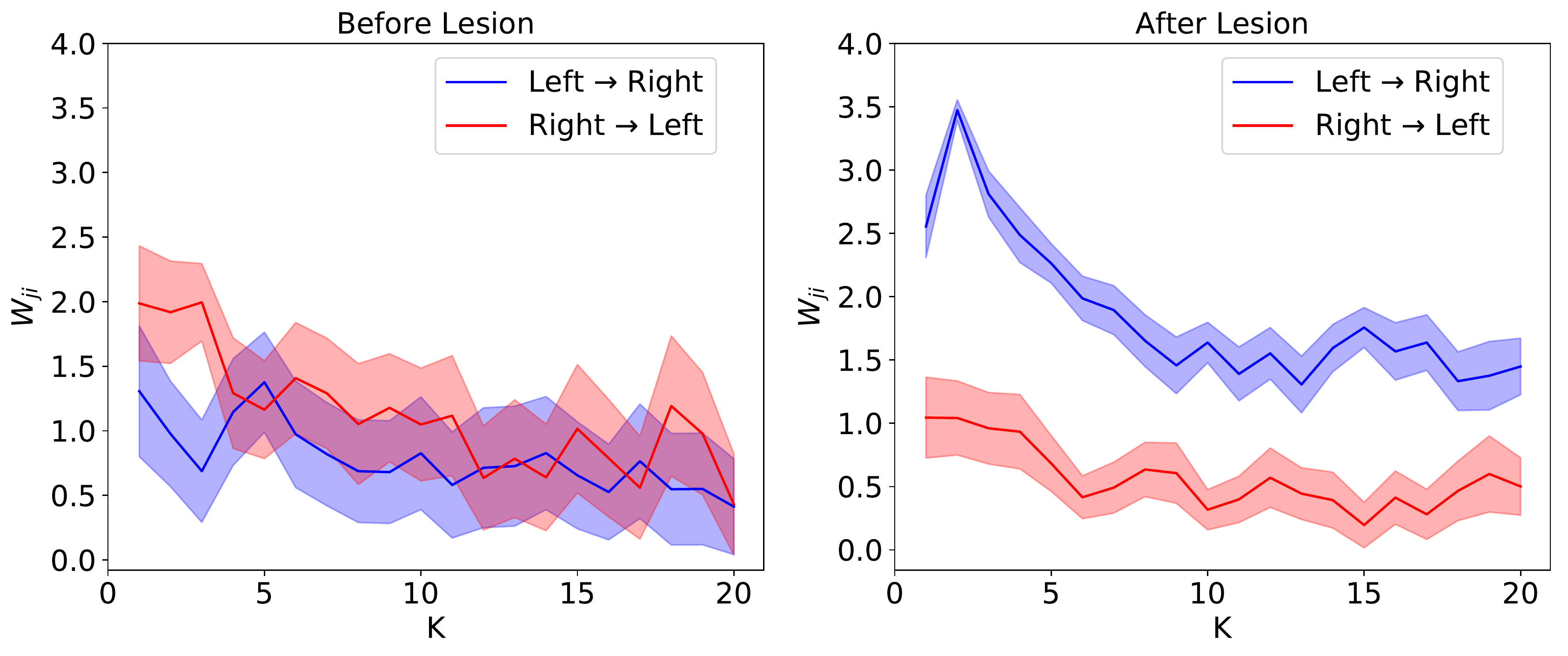}}
\caption{Predictive strength inferred by our method with the EEG datasets, for different maximum time horizon $K$, averaged over 50 initializations of $f_\theta$, for a normal rat (left) and after brain lesion (right).}
\label{fig:ratEEG_W}
\end{center}
\vskip -0.3in
\end{suppfigure}

\begin{suppfigure}
\centering
\begin{subfigure}{.75\linewidth}
\includegraphics[scale=0.55]{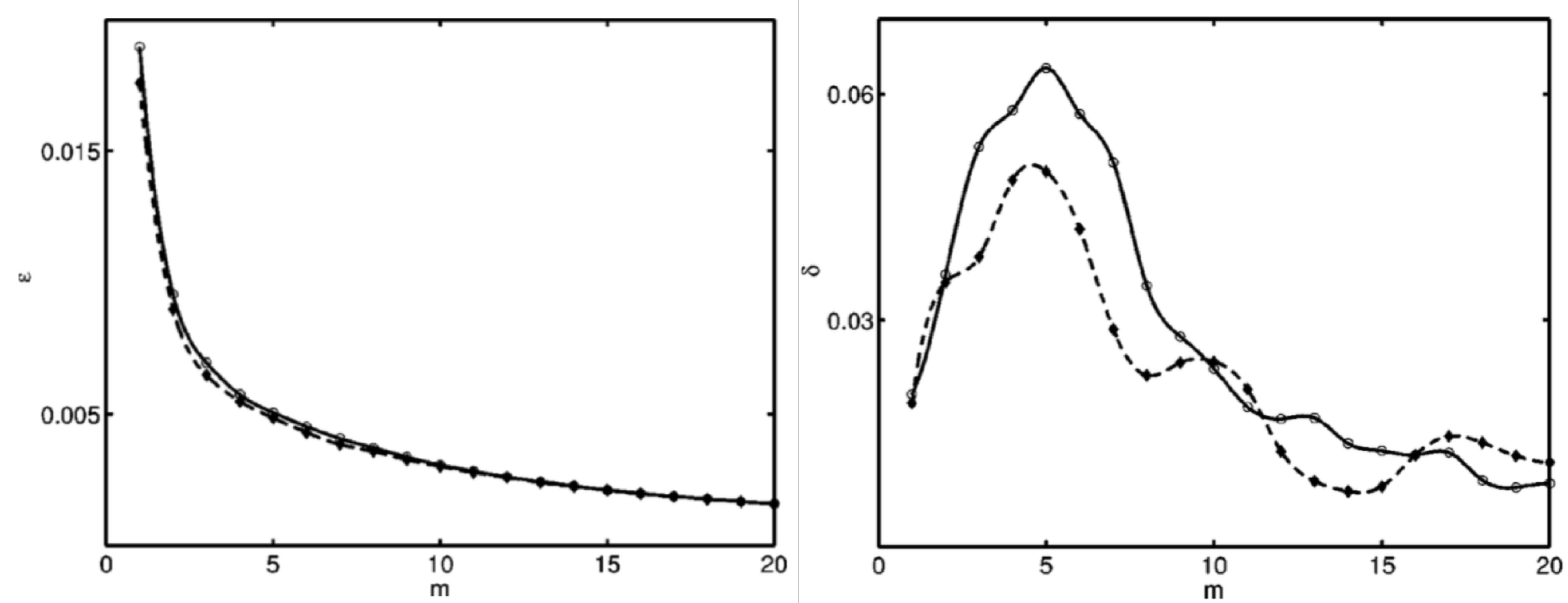}
\caption{}
\end{subfigure}
\begin{subfigure}{.6\linewidth}
\includegraphics[scale=0.55]{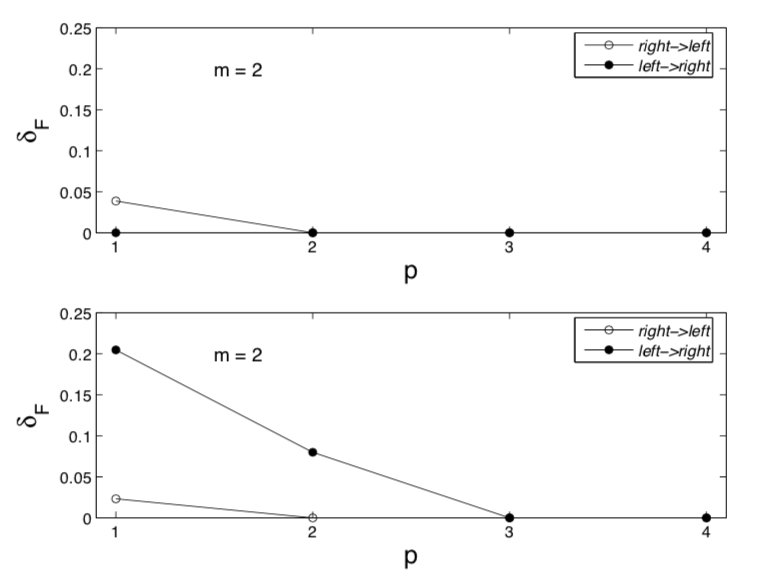}
\caption{}
\end{subfigure}
\caption{Causal indices for the rat EEG dataset with previous methods. (a) By \cite{ancona2004radial}. Left: the variance for the left EEG (open circles) and right EEG (diamonds) vs. time lag $m$ before brain lesion. Right: the causality index after brain lesion. (b) By \cite{marinazzo2008kernel2}. The filtered causality index vs. varying $p$, the order of the inhomogeneous polynomial kernel, before (upper) and after (lower) brain lesion.}
\label{fig:ratEEG_compare}%
\end{suppfigure}


\section{Appendix for Chapter \ref{chap7:mela}}

\textbf{MeLA architectural details}

As described in section \ref{sec:architecture}, MeLA consists of a meta-recognition model $\m_{\muvec}$ and a meta-generative model $\g_{\gammavec}$ that generates the task-specific model $\f_{\phivec}$. The meta-recognition model consists of two blocks. The first block is a MLP with 3 hidden layers, each of which has 60 neurons with leakyReLU activation (unless otherwise specified, the leakyReLU activation in this paper all have a slope of 0.3 when the activation is below 0). The last layer has $\spool=200$ neurons and linear activation. Then a max-pooling is performed along the example dimension, collapsing the $N\times\spool$ matrix to $1\times\spool$ matrix, which feeds into the second block. The second block is an MLP with two hidden layers, each of which has 60 neurons with leakyReLU activation, and the last layer has $s_{\rm code}$ neurons with linear activation. The output is the model code $\z$. 

The meta-generative model $\g_{\gammavec}$ takes as input the model code $\z$, 
and for each layer in the main model $\f_{\phivec}$, it has two separate MLPs that map $\z$ to all the weight and bias parameters of that layer. For all the experiments in this paper, the MLPs in the meta-generative model have 3 hidden layers, each of which has 60 neurons with leakyReLU activation. The last layer of the MLP has linear activation, and has an output size equal to the size of weight or bias in the main network $\f_{\phivec}$. The output of each MLP in the meta-generative model is then reshaped into the size of the corresponding weight or bias matrix, and directly used as the parameters of $\f_{\phivec}$.

The architecture of the main network $\f_{\phivec}$ is dependent on the specific application, which MeLA's architecture is agnostic to. For the simple regression problem in this paper, we implement $\f_{\phivec}$ as an MLP with 2 hidden layers, each of which has 40 neurons with leakyReLU activation. The last layer has linear activation with output size of 1. For the ball bouncing with state representation experiment, $\f_{\phivec}$ is an MLP with input size of 6 and 3 hidden layers, each of which has 40 neurons with leakyReLU activation. The last layer has linear activation with output size of 2. For the video prediction task, the latent dynamics network uses the same architecture. The convolutional autoencoder used in this experiment is as follows. For the encoder, it has 3 convolutional layers with 32 $3\times 3$ kernels with stride 2 and leakyReLU activation. After that, it is flattened into 512 neurons, which feeds into a dense layer with 2 neurons and linear activation. For the decoder, the first layer is a dense layer with 512 neurons and linear activation, then the output is reshaped to a $N\times4\times4\times32$ tensor (32 is the number of channels). The tensor then goes into 3 layers of convolutional-transpose layers with 32 kernels, each with size of 3, stride of 2 and leakyReLU activation. For the leakyReLU activation in the convolutional autoencoder, we use a slope of 0.01 when the activation is below 0.

\section{Appendix for Chapter \ref{chap8:rankpruning}}

\subsection{Proofs}
In this section, we provide proofs for all the lemmas and theorems in the main paper. We always assume that a class-conditional extension of the Classification Noise Process (CNP) \citep{angluin1988learning} maps true labels $y$ to observed labels $s$ such that each label in $P$ is flipped independently with probability $\rho_1$ and each label in $N$ is flipped independently with probability $\rho_0$ ($s \leftarrow CNP(y, \rho_1, \rho_0)$), so that $P(s=s|y=y,x) = P(s=s|y=y)$. Remember that $\rho_1 + \rho_0 <1$ is a necessary condition of minimal information, other we may learn opposite labels.

In Lemma \ref{thm:lemma1_appendix}, Theorem \ref{thm:theorem2_appendix}, Lemma \ref{thm:lemma3_appendix} and Theorem \ref{thm:theorem4_appendix}, we assume that $P$ and $N$ have infinite number of examples so that they are the true,  hidden distributions.

A fundamental equation we use in the proofs is the following lemma:
\medskip

\textbf{Lemma A1}
\textit{When $g$ is ideal, i.e. $g(x)=g^*(x)$ and $P$ and $N$ have non-overlapping support, we have}
\begin{equation} \label{eqA1}
\begin{split}
g(x) = (1 - \rho_1) \cdot \indicator{y=1} + \rho_0 \cdot \indicator{y=0}
\end{split}
\end{equation}

\textbf{Proof:}
Since $g(x)=g^*(x)$ and $P$ and $N$ have non-overlapping support, we have
\begin{equation*}
\begin{split}
g(x) = & g^*(x) = P(s=1|x) \\ 
 = & P(s=1|y=1,x) \cdot P(y=1|x) + P(s=1|y=0,x) \cdot P(y=0|x) \\
 = & P(s=1|y=1) \cdot P(y=1|x) + P(s=1|y=0) \cdot P(y=0|x) \\
 = & (1 - \rho_1) \cdot \mathbbm{1}[[y=1]] + \rho_0 \cdot \mathbbm{1}[[y=0]]
\end{split}
\end{equation*}

\subsubsection{Proof of Lemma 1}\label{sec:A1}

\textbf{Lemma \customlabel{thm:lemma1_appendix}{1}}
\textit{When $g$ is ideal, i.e. $g(x) = g^*(x)$ and $P$ and $N$ have non-overlapping support, we have}
\begin{equation}
\begin{cases}
\tilde{P}_{y=1} = \{x\in P|s=1\}, \tilde{N}_{y=1} = \{x\in P|s=0\}\\
\tilde{P}_{y=0} = \{x\in N|s=1\}, \tilde{N}_{y=0} = \{x\in N|s=0\}
\end{cases}
\end{equation}

\textbf{Proof:} Firstly, we compute the threshold $LB_{y=1}$ and $UB_{y=0}$ used by $\tilde{P}_{y=1}$, $\tilde{N}_{y=1}$, $\tilde{P}_{y=0} $ and $\tilde{N}_{y=0}$. Since $P$ and $N$ have non-overlapping support, we have $P(y=1|x)=\indicator{y=1}$. Also using $g(x) = g^*(x)$, we have

\vskip -0.2in

\begin{align*}
LB_{y=1}=&E_{x\in\tilde{P}}[g(x)]=E_{x\in\tilde{P}}[P(s=1|x)]\\
=&E_{x\in\tilde{P}}[P(s=1|x,y=1)P(y=1|x) +P(s=1|x,y=0)P(y=0|x)]\\
=&E_{x\in\tilde{P}}[P(s=1|y=1)P(y=1|x) +P(s=1|y=0)P(y=0|x)]\\
=&(1-\rho_1)(1-\pi_1)+\rho_0\pi_1\numberthis\label{rh1_prob_eq_A1}
\end{align*}

Similarly, we have
\begin{equation*}
UB_{y=0}=(1-\rho_1)\pi_0+\rho_0(1-\pi_0)
\end{equation*}

Since $\pi_1=P(y=0|s=1)$, we have $\pi_1\in[0,1]$. Furthermore, we have the requirement that $\rho_1+\rho_0<1$, then $\pi_1=1$ will lead to $\rho_1=P(s=0|y=1)=1-P(s=1|y=1)=1-\frac{P(y=1|s=1)P(s=1)}{P(y=1)}=1-0=1$ which violates the requirement of $\rho_1+\rho_0<1$. Therefore, $\pi_1\in[0,1)$. Similarly, we can prove $\pi_0\in[0,1)$. Therefore, we see that both $LB_{y=1}$ and $UB_{y=0}$ are interpolations of $(1-\rho_1)$ and $\rho_0$:
\begin{align*}
&\rho_0<LB_{y=1}\leq1-\rho_1\\
&\rho_0\leq UB_{y=0}<1-\rho_1
\end{align*}
The first equality holds iff $\pi_1=0$ and the second equality holds iff $\pi_0=0$.

Using Lemma A1, we know that under the condition of $g(x)=g^*(x)$ and non-overlapping support, $g(x)=(1 - \rho_1) \cdot \mathbbm{1}[[y=1]] + \rho_0 \cdot \mathbbm{1}[[y=0]]$. In other words, 
\begin{align*}
g(x)&\geq LB_{y=1}\Leftrightarrow x\in P\\
g(x)&\leq UB_{y=0}\Leftrightarrow x\in N
\end{align*}

Since
\begin{equation*}
\begin{cases}
\tilde{P}_{y=1}=\{x\in\tilde{P}|g(x)\geq LB_{y=1}\}\\
\tilde{N}_{y=1}=\{x\in\tilde{N}|g(x)\geq LB_{y=1}\}\\
\tilde{P}_{y=0}=\{x\in\tilde{P}|g(x)\leq UB_{y=0}\}\\
\tilde{N}_{y=0}=\{x\in\tilde{N}|g(x)\leq UB_{y=0}\}
\end{cases}
\end{equation*}

where $\tilde{P}=\{x|s=1\}$ and $\tilde{N}=\{x|s=0\}$, we have
\begin{equation*}
\begin{cases}
\tilde{P}_{y=1} = \{x\in P|s=1\}, \tilde{N}_{y=1} = \{x\in P|s=0\}\\
\tilde{P}_{y=0} = \{x\in N|s=1\}, \tilde{N}_{y=0} = \{x\in N|s=0\}
\end{cases}
\end{equation*}

\subsubsection{Proof of Theorem 2}
We restate Theorem 2 here:

\textbf{Theorem \customlabel{thm:theorem2_appendix}{2}}
\textit{When $g$ is ideal, i.e. $g(x) = g^*(x)$ and $P$ and $N$ have non-overlapping support, we have}

\vskip -0.2in

\begin{align*}
\hat{\rho}_1^{conf}=\rho_1,\hat{\rho}_0^{conf}=\rho_0
\end{align*}

\textbf{Proof:} 
Using the definition of $\hat{\rho}_1^{conf}$ in the main paper:

\begin{equation*}
\hat{\rho}_1^{conf}=\frac{|\tilde{N}_{y=1}|}{|\tilde{N}_{y=1}|+|\tilde{P}_{y=1}|},\ 
\hat{\rho}_0^{conf}=\frac{|\tilde{P}_{y=0}|}{|\tilde{P}_{y=0}|+|\tilde{N}_{y=0}|}
\end{equation*}

Since $g(x) = g^*(x)$ and $P$ and $N$ have non-overlapping support, using Lemma 1, we know 
\begin{equation*}
\begin{cases}
\tilde{P}_{y=1} = \{x\in P|s=1\}, \tilde{N}_{y=1} = \{x\in P|s=0\}\\
\tilde{P}_{y=0} = \{x\in N|s=1\}, \tilde{N}_{y=0} = \{x\in N|s=0\}
\end{cases}
\end{equation*}
Since $\rho_1=P(s=0|y=1)$ and $\rho_0=P(s=1|y=0)$, we immediately have
\begin{align*}
\hat{\rho}_1^{conf}=\frac{|\{x\in P|s=0\}|}{|P|}=\rho_1,\ \hat{\rho}_0^{conf}=\frac{|\{x\in N|s=1\}|}{|N|}=\rho_0
\end{align*}

\subsubsection{Proof of Lemma 3}
We rewrite Lemma 3 below:

\textbf{Lemma \customlabel{thm:lemma3_appendix}{3}}
\textit{When $g$ is unassuming, i.e., $\Delta g(x):=g(x)-g^*(x)$ can be nonzero, and $P$ and $N$ can have overlapping support, we have}

\vskip -0.2in

\begin{equation}\label{rho_imperfect_condition_appendix}
\begin{cases}
LB_{y=1}=LB_{y=1}^*+E_{x\in\tilde{P}}[\Delta g(x)]-\frac{(1-\rho_1-\rho_0)^2}{p_{s1}}\Delta p_o\\
UB_{y=0}=UB_{y=0}^*+E_{x\in\tilde{N}}[\Delta g(x)]+\frac{(1-\rho_1-\rho_0)^2}{1-p_{s1}}\Delta p_o\\
\hat{\rho}_1^{conf}=\rho_1+\frac{1-\rho_1-\rho_0}{|P|-|\Delta P_1| + |\Delta N_1|}|\Delta N_1|\\
\hat{\rho}_0^{conf}=\rho_0+\frac{1-\rho_1-\rho_0}{|N|-|\Delta N_0| + |\Delta P_0|}|\Delta P_0|\\
\end{cases}
\end{equation}

\vskip -0.1in
\textit{where}
\vskip -0.1in

\begin{equation}\label{def_all_delta_P_N}
\begin{cases}
LB_{y=1}^*=(1-\rho_1)(1-\pi_1)+\rho_0\pi_1\\ UB_{y=0}^*=(1-\rho_1)\pi_0+\rho_0(1-\pi_0)\\
\Delta p_o:=\frac{|P \cap N|}{|P \cup N|}\\
\Delta P_1=\{x\in P|g(x)< LB_{y=1}\}\\
\Delta N_1=\{x\in N|g(x)\geq LB_{y=1}\}\\
\Delta P_0=\{x\in P|g(x)\leq UB_{y=0}\}\\
\Delta N_0=\{x\in N|g(x)> UB_{y=0}\}\\
\end{cases}
\end{equation}

\medskip

\noindent \textbf{Proof:} We first calculate $LB_{y=1}$ and $UB_{y=0}$ under unassuming conditions, then calculate $\hat{\rho}_i^{conf}$, $i=0,1$ under unassuming condition.

Note that $\Delta p_o$ can also be expressed as
\begin{align*}
\Delta p_o:=\frac{|P \cap N|}{|P \cup N|}=P(\hat{y}=1,y=0)=P(\hat{y}=0,y=1)
\end{align*}

Here $P(\hat{y}=1,y=0)\equiv P(\hat{y}=1|y=0)P(y=0)$, where $P(\hat{y}=1|y=0)$ means for a perfect classifier $f^*(x)=P(y=1|x)$, the expected probability that it will label a $y=0$ example as positive ($\hat{y}=1$).

\medskip

\textbf{(1) $LB_{y=1}$ and $UB_{y=0}$ under unassuming condition}

Firstly, we calculate $LB_{y=1}$ and $UB_{y=0}$ with perfect probability estimation $g^*(x)$, but the support may overlap. Secondly, we allow the probability estimation to be imperfect, superimposed onto the overlapping support condition, and calculate $LB_{y=1}$ and $UB_{y=0}$.

\textbf{I. Calculating $LB_{y=1}$ and $UB_{y=0}$ when $g(x)=g^*(x)$ and support may overlap}

With overlapping support, we no longer have $P(y=1|x)=\indicator{y=1}$. Instead, we have

\vskip -0.2in

\begin{align*}
LB_{y=1}=&E_{x\in\tilde{P}}[g^*(x)]=E_{x\in\tilde{P}}[P(s=1|x)]\\
=&E_{x\in\tilde{P}}[P(s=1|x,y=1)P(y=1|x) +P(s=1|x,y=0)P(y=0|x)]\\
=&E_{x\in\tilde{P}}[P(s=1|y=1)P(y=1|x) +P(s=1|y=0)P(y=0|x)]\\
=&(1-\rho_1)\cdot E_{x\in\tilde{P}}[P(y=1|x)]+\rho_0\cdot E_{x\in\tilde{P}}[P(y=0|x)]\\
=&(1-\rho_1)\cdot P(\hat{y}=1|s=1)+\rho_0\cdot P(\hat{y}=0|s=1)\\
\end{align*}

\vskip -0.2in

Here $P(\hat{y}=1|s=1)$ can be calculated using $\Delta p_o$:

\vskip -0.15in

\begin{align*}
P(\hat{y}=1|s=1)&=\frac{P(\hat{y}=1,s=1)}{P(s=1)}\\
&=\frac{P(\hat{y}=1,y=1,s=1)+P(\hat{y}=1,y=0,s=1)}{P(s=1)}\\
&=\frac{P(s=1|y=1)P(\hat{y}=1,y=1)+P(s=1|y=0)P(\hat{y}=1,y=0)}{P(s=1)}\\
&=\frac{(1-\rho_1)(p_{y1}-\Delta p_o)+\rho_0\Delta p_o}{p_{s1}}\\
&=(1-\pi_1)-\frac{1-\rho_1-\rho_0}{p_{s1}}\Delta p_o
\end{align*}

Hence,
\begin{align*}
P(\hat{y}=0|s=1)=1-P(\hat{y}=1|s=1)=\pi_1+\frac{1-\rho_1-\rho_0}{p_{s1}}\Delta p_o
\end{align*}

Therefore,
\begin{align*}
LB_{y=1}&=(1-\rho_1)\cdot P(\hat{y}=1|s=1)+\rho_0\cdot P(\hat{y}=0|s=1)\\
&=(1-\rho_1)\cdot \left((1-\pi_1)-\frac{1-\rho_1-\rho_0}{p_{s1}}\Delta p_o\right)+\rho_0\cdot \left(\pi_1+\frac{1-\rho_1-\rho_0}{p_{s1}}\Delta p_o\right)\\
&=LB_{y=1}^*-\frac{(1-\rho_1-\rho_0)^2}{p_{s1}}\Delta p_o\numberthis \label{rho1_prob_overlap}
\end{align*}

where $LB_{y=1}^*$ is the $LB_{y=1}$ value when $g(x)$ is ideal. We see in Eq. (\ref{rho1_prob_overlap}) that the overlapping support introduces a non-positive correction to $LB_{y=1}^*$ compared with the ideal condition.

Similarly, we have
\begin{align}\label{rho0_prob_overlap}
UB_{y=0}=UB_{y=0}^*+\frac{(1-\rho_1-\rho_0)^2}{1-p_{s1}}\Delta p_o
\end{align}

\medskip

\textbf{II. Calculating $LB_{y=1}$ and $UB_{y=0}$ when $g$ is unassuming}

Define $\Delta g(x)=g(x)-g^*(x)$. When the support may overlap, we have

\begin{align*}
LB_{y=1}&=E_{x\in\tilde{P}}[g(x)]\\
&=E_{x\in\tilde{P}}[g^*(x)]+E_{x\in\tilde{P}}[\Delta g(x)]\\
&=LB_{y=1}^*-\frac{(1-\rho_1-\rho_0)^2}{p_{s1}}\Delta p_o+E_{x\in\tilde{P}}[\Delta g(x)]\numberthis\label{rho1_nonideal}
\end{align*}

Similarly, we have
\begin{align*}
UB_{y=0}&=E_{x\in\tilde{N}}[g(x)]\\
&=E_{x\in\tilde{N}}[g^*(x)]+E_{x\in\tilde{N}}[\Delta g(x)]\\
&=UB_{y=0}^*+\frac{(1-\rho_1-\rho_0)^2}{1-p_{s1}}\Delta p_o+E_{x\in\tilde{N}}[\Delta g(x)]\numberthis\label{rho0_nonideal}
\end{align*}

In summary, Eq. (\ref{rho1_nonideal}) (\ref{rho0_nonideal}) give the expressions for $LB_{y=1}$ and $UB_{y=0}$, respectively, when $g$ is unassuming.

\medskip

\textbf{(2) $\hat{\rho}_i^{conf}$ under unassuming condition}

Now let's calculate $\hat{\rho}_i^{conf}$, $i=0,1$. For simplicity, define
\begin{equation}\label{define_all_Delta}
\begin{cases}
PP=\{x\in P|s=1\}\\
PN=\{x\in P|s=0\}\\
NP=\{x\in N|s=1\}\\
NN=\{x\in N|s=0\}\\
\Delta_{PP_1}=\{x\in PP|g(x)< LB_{y=1}\}\\
\Delta_{NP_1}=\{x\in NP|g(x)\geq LB_{y=1}\}\\
\Delta_{PN_1}=\{x\in PN|g(x)< LB_{y=1}\}\\
\Delta_{NN_1}=\{x\in NN|g(x)\geq LB_{y=1}\}\\
\end{cases}
\end{equation}

For $\hat{\rho}_1^{conf}$, we have:
\begin{align*}
\hat{\rho}_1^{conf}=\frac{|\tilde{N}_{y=1}|}{|\tilde{P}_{y=1}| + |\tilde{N}_{y=1}|}
\end{align*}

Here
\begin{align*}
\tilde{P}_{y=1}&=\{x\in\tilde{P}|g(x)\geq LB_{y=1}\}\\
&=\{x\in PP|g(x)\geq LB_{y=1}\}\cup\{x\in NP|g(x)\geq LB_{y=1}\}\\
&=(PP\setminus \Delta_{PP_1})\cup\Delta_{NP_1}
\end{align*}

Similarly, we have
\begin{align*}
\tilde{N}_{y=1}=(PN\setminus\Delta_{PN_1})\cup\Delta_{NN_1}
\end{align*}

Therefore

\begin{align*}
\hat{\rho}_1^{conf}&=\frac{|PN|-|\Delta_{PN_1}|+|\Delta_{NN_1}|}{[(|PP|-|\Delta_{PP_1}|)+(|PN|-|\Delta_{PN_1}|)]+(|\Delta_{NN_1}|+|\Delta_{NP_1}|)}\\
&=\frac{|PN|-|\Delta_{PN_1}|+|\Delta_{NN_1}|}{|P|-|\Delta P_1|+|\Delta N_1|}\numberthis\label{rho_1_conf_general}
\end{align*}

where in the second equality we have used the definition of $\Delta P_1$ and $\Delta N_1$ in Eq. (\ref{def_all_delta_P_N}).

Using the definition of $\rho_1$, we have

\begin{align*}
\frac{|PN|-|\Delta_{PN_1}|}{|P|-|\Delta P_1|}&=\frac{|\{x\in PN|g(x)\geq LB_{y=1}\}|}{|\{x\in P|g(x)\geq  LB_{y=1}\}|}\\
&=\frac{P(x\in PN,g(x)\geq LB_{y=1})}{P(x\in P,g(x)\geq LB_{y=1})}\\
&=\frac{P(x\in PN|x\in P,g(x)\geq LB_{y=1})\cdot P(x\in P,g(x)\geq LB_{y=1})}{P(x\in P,g(x)\geq LB_{y=1})}\\
&=\frac{P(x\in PN|x\in P)\cdot P(x\in P,g(x)\geq LB_{y=1})}{P(x\in P,g(x)\geq LB_{y=1})}\\
&=\rho_1
\end{align*}
Here we have used the property of CNP that $(s \independent x) | y$, leading to $P(x\in PN|x\in P,g(x)\geq LB_{y=1})=P(x\in PN|x\in P)=\rho_1$.

Similarly, we have
\begin{align*}
\frac{|\Delta_{NN_1}|}{|\Delta N_1|}&= 1-\rho_0
\end{align*}

Combining with Eq. (\ref{rho_1_conf_general}), we have

\begin{align*}
\hat{\rho}_1^{conf}=\rho_1+ \frac{1-\rho_1-\rho_0}{|P|-|\Delta P_1| + |\Delta N_1|}|\Delta N_1|\numberthis\label{rho_1_conf_unassuming}
\end{align*}

Similarly, we have

\begin{align}\label{rho_0_conf_unassuming}
\hat{\rho}_0^{conf}=\rho_0+ \frac{1-\rho_1-\rho_0}{|N|-|\Delta N_0| + |\Delta P_0|}|\Delta P_0|
\end{align}

From the two equations above, we see that
\begin{equation}
\hat{\rho}_1^{conf}\geq \rho_1, \hat{\rho}_0^{conf}\geq \rho_0
\end{equation}

In other words, $\hat{\rho}_i^{conf}$ is an \textbf{\textit{upper bound}} of $\rho_i$, $i=0,1$. The equality for $\hat{\rho}_1^{conf}$ holds if $|\Delta N_1|=0$. The equality for $\hat{\rho}_0^{conf}$ holds if $|\Delta P_0|=0$.

\medskip

\subsubsection{Proof of Theorem 4}
Let's restate Theorem 4 below:

\textbf{Theorem \customlabel{thm:theorem4_appendix}{4}}
\textit{Given non-overlapping support condition,}

If $\forall x\in N, \Delta g(x)<LB_{y=1}-\rho_0$, then $\hat{\rho}_1^{conf}=\rho_1$.

If $\forall x\in P, \Delta g(x)>-(1-\rho_1-UB_{y=0}$), then $\hat{\rho}_0^{conf}=\rho_0$.

\medskip

Theorem 4 directly follows from Eq. (\ref{rho_1_conf_unassuming}) and (\ref{rho_0_conf_unassuming}). Assuming non-overlapping support, we have $g^*(x)=P(s=1|x)=(1-\rho_1)\cdot \indicator{y=1}+\rho_0\cdot \indicator{y=0}$. In other words, the contribution of overlapping support to $|\Delta N_1|$ and $|\Delta P_0|$ is 0.  The only source of deviation comes from imperfect $g(x)$.

For the first half of the theorem, since $\forall x\in N, \Delta g(x)<LB_{y=1}-\rho_0$, we have $\forall x\in N, g(x)=\Delta g(x)+g^*(x)<(LB_{y=1}-\rho_0)+\rho_0=LB_{y=1}$, then $|\Delta N_1|=|\{x\in N|g(x)\geq LB_{y=1}\}|=0$, so we have $\hat{\rho}_1^{conf}=\rho_1$. 

Similarly, for the second half of the theorem, since $\forall x\in P,  \Delta g(x)>-(1-\rho_1-UB_{y=0})$, then $|\Delta P_0|=|\{x\in P|g(x)\leq UB_{y=0}\}|=0$, so we have $\hat{\rho}_0^{conf}=\rho_0$.

\subsubsection{Proof of Theorem 5}\label{sec:A5}
Theorem 5 reads as follows:

\textbf{Theorem \customlabel{thm:theorem5_appendix}{5}}
\textit{If $g$ range separates $P$ and $N$ and $\hat{\rho}_i=\rho_i$, $i=0,1$, then for any classifier $f_{\theta}$ and any bounded loss function $l(\hat{y}_i,y_i)$, we have}

\begin{equation}
R_{\tilde{l},\mathcal{D}_{\rho}}(f_{\theta})=R_{l,\mathcal{D}}(f_{\theta})
\end{equation}

\textit{where $\tilde{l}(\hat{y}_i,s_i)$ is Rank Pruning's loss function given by
}

\begin{align*}\label{rankpruning_loss_function_A}
\tilde{l}(\hat{y_i}, s_i)=\frac{1}{1-\hat{\rho}_1}l(\hat{y_i}, s_i)\cdot\indicator{x_i\in \tilde{P}_{conf}}
+&\frac{1}{1-\hat{\rho}_0}l(\hat{y_i}, s_i)\cdot\indicator{x_i\in \tilde{N}_{conf}}\numberthis  
\end{align*}

\textit{and $\tilde{P}_{conf}$ and $\tilde{N}_{conf}$ are given by}

\begin{equation}\label{define_P_N_conf}
\tilde{P}_{conf} := \{x \in \tilde{P} \mid g(x) \geq k_1 \},
\tilde{N}_{conf} := \{x \in \tilde{N} \mid g(x) \leq k_0 \}
\end{equation}

\textit{where $k_1$ is the $(\hat{\pi}_1|\tilde{P}|)^{th}$ smallest $g(x)$ for $x \in \tilde{P}$ and $k_0$ is the $(\hat{\pi}_0|\tilde{N}|)^{th}$ largest $g(x)$ for $x \in \tilde{N}$}

\vskip 0.1in
\textbf{Proof:}

Since $\tilde{P}$ and $\tilde{N}$ are constructed from $P$ and $N$ with noise rates $\pi_1$ and $\pi_0$ using the class-conditional extension of the Classification Noise Process \citep{angluin1988learning}, we have

\begin{equation}
\begin{cases}
P = PP\cup PN\\
N = NP\cup NN\\
\tilde{P} = PP\cup NP\\
\tilde{N} = PN\cup NN\\
\end{cases}
\end{equation}

where
\begin{equation}
\begin{cases}
PP=\{x\in P|s=1\}\\
PN=\{x\in P|s=0\}\\
NP=\{x\in N|s=1\}\\
NN=\{x\in N|s=0\}
\end{cases}
\end{equation}

satisfying
\begin{equation}
\begin{cases}
PP \sim PN \sim P\\
NP \sim NN \sim N\\
\frac{|NP|}{|\tilde{P}|}=\pi_1,\frac{|PP|}{|\tilde{P}|}=1-\pi_1\\
\frac{|PN|}{|\tilde{N}|}=\pi_0,\frac{|NN|}{|\tilde{N}|}=1-\pi_0\\
\frac{|PN|}{|P|}=\rho_1,\frac{|PP|}{|P|}=1-\rho_1\\
\frac{|NP|}{|N|}=\rho_0,\frac{|NN|}{|N|}=1-\rho_0\\
\end{cases}
\end{equation}

Here the $\sim$ means obeying the same distribution.

Since $g$ range separates $P$ and $N$, there exists a real number $z$ such that $\forall x_1\in P$ and $\forall x_0 \in N$, we have $g(x_1)>z>g(x_0)$. Since $P=PP\cup PN$, $N=NP\cup NN$, we have 
\begin{align*}
\forall x\in PP, g(x)>z;\ \forall x\in PN, g(x)>z; \\
\forall x\in NP, g(x)<z;\ \forall x\in NN, g(x)<z\numberthis\label{g_range_separate_PNNP}
\end{align*}

Since $\hat{\rho}_1=\rho_1$ and $\hat{\rho}_0=\rho_0$, we have 

\begin{equation}
\begin{cases}
\hat{\pi}_1=\frac{\hat{\rho}_0}{p_{s1}}\frac{1-p_{s1}-\hat{\rho}_1}{1-\hat{\rho}_1-\hat{\rho}_0}=\frac{\rho_0}{p_{s1}}\frac{1-p_{s1}-\rho_1}{1-\rho_1-\rho_0}=\pi_1\equiv\frac{\rho_0|N|}{|\tilde{P}|}\\
\hat{\pi}_0=\frac{\hat{\rho}_1}{1-p_{s1}}\frac{p_{s1}-\hat{\rho}_0}{1-\hat{\rho}_1-\hat{\rho}_0}=\frac{\rho_1}{1-p_{s1}}\frac{p_{s1}-\rho_0}{1-\rho_1-\rho_0}=\pi_0\equiv\frac{\rho_1|P|}{|\tilde{N}|}
\end{cases}
\end{equation}

Therefore, $\hat{\pi}_1|\tilde{P}|=\pi_1|\tilde{P}|=\rho_0|N|$, $\hat{\pi}_0|\tilde{N}|=\pi_0|\tilde{N}|=\rho_1|P|$. Using $\tilde{P}_{conf}$ and $\tilde{N}_{conf}$'s definition in Eq. (\ref{define_P_N_conf}), and $g(x)$'s property in Eq. (\ref{g_range_separate_PNNP}), we have

\begin{equation}
\tilde{P}_{conf} = PP\sim P, \tilde{N}_{conf} = NN \sim N
\end{equation}

Hence $P_{conf}$ and $N_{conf}$ can be seen as a uniform downsampling of $P$ and $N$, with a downsampling ratio of $(1-\rho_1)$ for $P$ and $(1-\rho_0)$ for $N$. Then according to Eq. (\ref{rankpruning_loss_function_A}), the loss function $\tilde{l}(\hat{y}_i,s_i)$ essentially sees a fraction of $(1-\rho_1)$ examples in $P$ and a fraction of $(1-\rho_0)$ examples in $N$, with a final reweighting to restore the class balance. Then for any classifier $f_{\theta}$ that maps $x\to \hat{y}$ and any bounded loss function $l(\hat{y}_i,y_i)$, we have
\begin{align*}
R_{\tilde{l},\mathcal{D}_{\rho}}(f_{\theta})=&E_{(x,s)\sim \mathcal{D}_{\rho}}[\tilde{l}(f_{\theta}(x),s)]\\
=&\frac{1}{1-\hat{\rho}_1}\cdot E_{(x,s)\sim \mathcal{D}_{\rho}}\left[l(f_{\theta}(x),s)\cdot\indicator{x\in \tilde{P}_{conf}}\right]\\
&+\frac{1}{1-\hat{\rho}_0}\cdot E_{(x,s)\sim \mathcal{D}_{\rho}}\left[l(f_{\theta}(x),s)\cdot\indicator{x\in \tilde{N}_{conf}}\right]\\
=&\frac{1}{1-\rho_1}\cdot E_{(x,s)\sim \mathcal{D}_{\rho}}\left[l(f_{\theta}(x),s)\cdot\indicator{x\in \tilde{P}_{conf}}\right]\\
&+\frac{1}{1-\rho_0}\cdot E_{(x,s)\sim \mathcal{D}_{\rho}}\left[l(f_{\theta}(x),s)\cdot\indicator{x\in \tilde{N}_{conf}}\right]\\
=&\frac{1}{1-\rho_1}\cdot E_{(x,s)\sim \mathcal{D}_{\rho}}\left[l(f_{\theta}(x),s)\cdot\indicator{x\in PP}\right]\\
&+\frac{1}{1-\rho_0}\cdot E_{(x,s)\sim \mathcal{D}_{\rho}}\left[l(f_{\theta}(x),s)\cdot\indicator{x\in NN}\right]\\
=&\frac{1}{1-\rho_1}\cdot (1-\rho_1)\cdot E_{(x,y)\sim \mathcal{D}}\left[l(f_{\theta}(x),y)\cdot\indicator{x\in P}\right]\\
&+\frac{1}{1-\rho_0}\cdot (1-\rho_0)\cdot E_{(x,y)\sim \mathcal{D}}\left[l(f_{\theta}(x),y)\cdot\indicator{x\in N}\right]\\
=&E_{(x,y)\sim \mathcal{D}}\left[l(f_{\theta}(x),y)\cdot\indicator{x\in P}+l(f_{\theta}(x),y)\cdot\indicator{x\in N}\right]\\
=&E_{(x,y)\sim \mathcal{D}}\left[l(f_{\theta}(x),y)\right]\\
=&R_{l,\mathcal{D}}(f_{\theta})
\end{align*}

Therefore, we see that the expected risk for Rank Pruning with corrupted labels, is exactly the same as the expected risk for the true labels, for any bounded loss function $l$ and classifier $f_{\theta}$. The reweighting ensures that after pruning, the two sets still remain unbiased w.r.t. to the true dataset.

Since the ideal condition is more strict than the range separability condition, we immediately have that when $g$ is ideal and $\hat{\rho}_i=\rho_i$, $i=0,1$, $R_{\tilde{l},\mathcal{D}_{\rho}}(f_{\theta})=R_{l,\mathcal{D}}(f_{\theta})$ for any $f_{\theta}$ and bounded loss function $l$.

\subsection{Additional Figures}

Figure B\ref{rankprune_mean_mnist} shows the average image for each digit for the problem “1” or “not 1” in MNIST with logistic regression and high noise ($\rho_1=0.5, \pi_1=0.5$). The number on the bottom and on the right counts the total number of examples (images). From the figure we see that Rank Pruning makes few mistakes, and when it does, the mistakes vary greatly in image from the typical digit.

\subsection{Additional Tables}\label{sec:tables_appendix}

Here we provide additional tables for the comparison of error, Precision-Recall AUC (AUC-PR, \cite{Davis:2006:RPR:1143844.1143874}), and F1 score for the algorithms \emph{RP}, \emph{Nat13}, \emph{Elk08}, \emph{Liu16} with $\rho_1$, $\rho_0$ given to all methods for fair comparison. Additionally, we provide the performance of the ground truth classifier (\emph{true}) trained with uncorrupted labels $(X, y)$, as well as the complete Rank Pruning algorithm (\emph{$\text{RP}_{\rho}$}) trained using the noise rates estimated by Rank Pruning. The top model scores are in bold with $RP_{\rho}$ in red if its performance is better than non-RP models. The $\pi_1=0$ quadrant in each table represents the ``PU learning" case of $\tilde{P}\tilde{N}$ learning. 

\begin{suppfigure}[t]
\begin{center}
\centerline{\includegraphics[width=0.9\columnwidth]{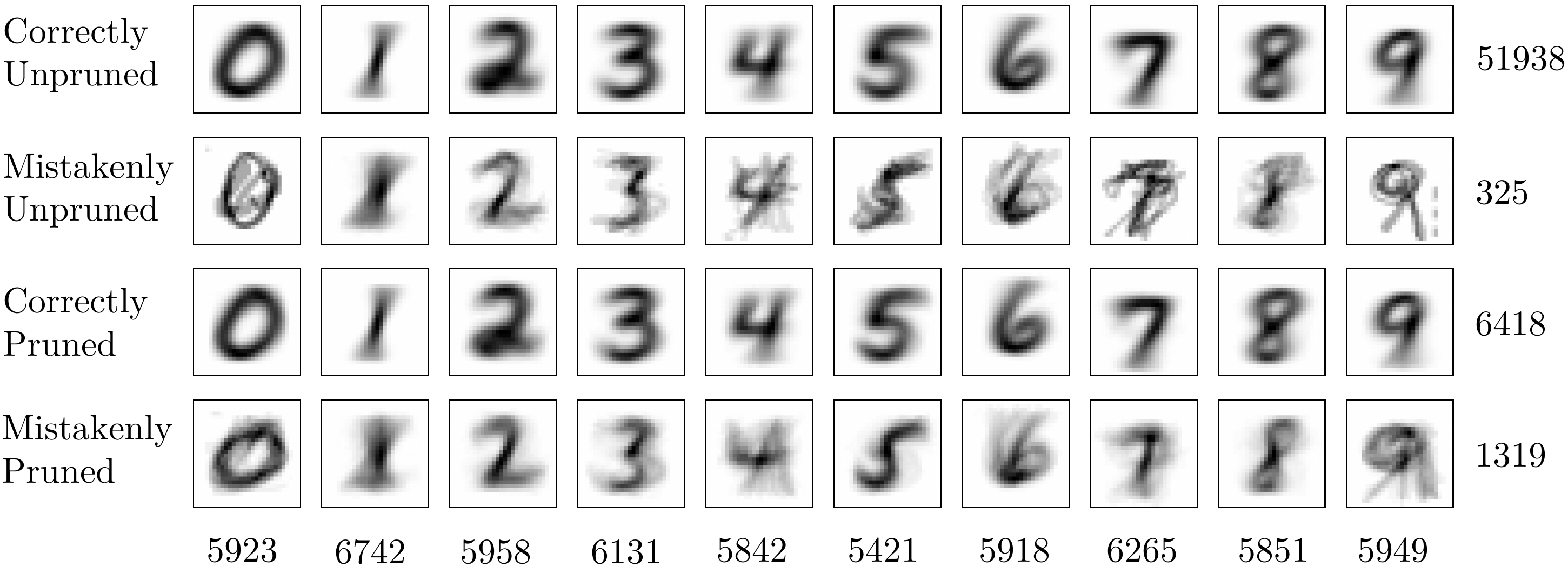}}
\caption{Average image for each digit for the binary classification problem ``1" or ``not 1" in MNIST with logistic regression and significant mislabeling ($\rho_1=0.5, \pi_1=0.5$). The right and bottom numbers count the total number of example images averaged in the corresponding row or column.}
\label{rankprune_mean_mnist}
\end{center}
\end{suppfigure}

Whenever $g(x)=P(\hat{s}=1|x)$ is estimated for any algorithm, we use a 3-fold cross-validation to estimate the probability $g(x)$. For improved performance, a higher fold may be used.

For the logistic regression classifier, we use scikit-learn's LogisticRegression class (\cite{logreg_sklearn}) with default settings (L2 regularization with inverse strength $C=1$).

For the convolutional neural networks (CNN), for MNIST we use the structure in \cite{mnist_cnn_structure} and for CIFAR-10, we use the structure in \cite{cifar_cnn_structure}. A $10\%$ holdout set is used to monitor the weighted validation loss (using the sample weight given by each algorithm) and ends training when there is no decrease for 10 epochs, with a maximum of 50 epochs for MNIST and 150 epochs for CIFAR-10. 

\vskip 0.1in

The following list comprises the MNIST and CIFAR experimental result tables for error, AUC-PR and F1 score metrics:

Table C\ref{table:mnist_logreg_error}: Error for MNIST with logisitic regression as classifier.

Table C\ref{table:mnist_logreg_auc}: AUC-PR for MNIST with logisitic regression as classifier.

Table C\ref{table:mnist_cnn_error}: Error for MNIST with CNN as classifier.

Table C\ref{table:mnist_cnn_auc}: AUC-PR for MNIST with CNN as classifier.

Table C\ref{table:cifar_logreg_f1}: F1 score for CIFAR-10 with logistic regression as classifier.

Table C\ref{table:cifar_logreg_error}: Error for CIFAR-10 with logistic regression as classifier.

Table C\ref{table:cifar_logreg_auc}: AUC-PR for CIFAR-10 with logistic regression as classifier.

Table C\ref{table:cifar_cnn_error}: Error for CIFAR-10 with CNN as classifier.

Table C\ref{table:cifar_cnn_auc}: AUC-PR for CIFAR-10 with CNN as classifier.

\vskip 0.1in
Due to sensitivity to imperfect probability estimation, here \emph{Liu16} always predicts all labels to be positive or negative, resulting in the same metric score for every digit/image in each scenario. Since $p_{y1}\simeq 0.1$, when predicting all labels as positive, \emph{Liu16} has an F1 score of 0.182, error of 0.90, and AUC-PR of 0.55; when predicting all labels as negative, \emph{Liu16} has an F1 score of 0.0, error of 0.1, and AUC-PR of 0.55.

\subsection{Additional Related Work}

In this section we include tangentially related work which was unable to make it into the final manuscript.

\subsubsection{One-class classification}

One-class classification \citep{moya_1993_oneclass} is distinguished from binary classification by a training set containing examples from only one class, making it useful for outlier and novelty detection \citep{ Hempstalk:2008:oneclass}. This can be framed as $\tilde{P}\tilde{N}$ learning when outliers take the form of mislabeled examples. The predominant approach, one-class SVM, fits a hyper-boundary around the training class \citep{oneclasssvm1999}, but often performs poorly due to boundary over-sensitivity \citep{Manevitz:2002:OSD:944790.944808} and fails when the training class contains mislabeled examples.

\subsubsection{\texorpdfstring{$\tilde{P}\tilde{N}$}{PN} learning for Image Recognition and Deep Learning}

Variations of $\tilde{P}\tilde{N}$ learning have been used in the context of machine vision to improve robustness to mislabeling \citep{Xiao2015LearningFM}. In a face recognition task with 90\% of non-faces mislabeled as faces, a bagging model combined with consistency voting was used to remove images with poor voting consistency \citep{angelova2005pruning}. However, no theoretical justification was provided. In the context of deep learning, consistency of predictions for inputs with mislabeling enforces can be enforced by combining a typical cross-entropy loss with an auto-encoder loss \citep{noisy_boostrapping_google}. This method enforces label consistency by constraining the network to uncover the input examples given the output prediction, but is restricted in architecture and generality.

\vskip 0.3in

\begin{supptable*}[th]

\setlength\tabcolsep{2pt} 
\renewcommand{\arraystretch}{0.9}
\caption{Comparison of \textbf{error} for one-vs-rest MNIST (averaged over all digits) using a \textbf{logistic regression} classifier. Except for $RP_\rho$, $\rho_1$, $\rho_0$ are given to all methods. Top model scores are in bold with $RP_\rho$ in red if better (smaller) than non-RP models.} 
\vskip -0.1in
\label{table:mnist_logreg_error}
\begin{center}
\begin{small}
\begin{sc}

\resizebox{\textwidth}{!}{ 
\begin{tabular}{l|rrr|rrrr|rrrr|rrrr}
\toprule

\multicolumn{0}{c}{} & 
\multicolumn{3}{c}{$\pi_1$\textbf{ = 0}}         &   
\multicolumn{4}{c}{$\pi_1$\textbf{ = 0.25}}  & 
\multicolumn{4}{c}{$\pi_1$\textbf{ = 0.5}}  & 
\multicolumn{4}{c}{$\pi_1$\textbf{ = 0.75}}    \\

\textbf{Model,$\rho_1 = $} &      \textbf{0.25} &    \textbf{0.50} &   \textbf{0.75} &   \textbf{0.00} &    \textbf{0.25} &    \textbf{0.50} &   \textbf{0.75} &   \textbf{0.00} &    \textbf{0.25} &    \textbf{0.50} &   \textbf{0.75} &   \textbf{0.00} &    \textbf{0.25} &    \textbf{0.50} &   \textbf{0.75}   \\

\midrule

\textbf{True}   &  0.020 &  0.020 &  0.020 &  0.020 &  0.020 &  0.020 &  0.020 &  0.020 &  0.020 &  0.020 &  0.020 &  0.020 &  0.020 &  0.020 &  0.020 \\
\textbf{RP}$_{\rho}$    &  \textcolor{red}{\textbf{0.023}} &  \textcolor{red}{\textbf{0.025}} &  \textcolor{red}{\textbf{0.031}} &  \textcolor{red}{\textbf{0.024}} &  \textcolor{red}{\textbf{0.025}} &  \textcolor{red}{\textbf{0.027}} &  \textcolor{red}{\textbf{0.038}} &  0.040 &  0.037 &  0.039 &  0.049 &  0.140 &  0.128 &  0.133 &  0.151 \\
\textbf{RP}    &  \textbf{0.022} &  \textbf{0.025} &  \textbf{0.031} &  \textbf{0.021} &  \textbf{0.024} &  \textbf{0.027} &  \textbf{0.035} &  \textbf{0.023} &  \textbf{0.027} &  \textbf{0.031} &  \textbf{0.043} &  \textbf{0.028} &  \textbf{0.036} &  \textbf{0.045} &  0.069 \\
\textbf{Nat13} &  0.025 &  0.030 &  0.038 &  0.025 &  0.029 &  0.034 &  0.042 &  0.030 &  0.033 &  0.038 &  0.047 &  0.035 &  0.039 &  0.046 &  \textbf{0.067} \\
\textbf{Elk08} &  0.025 &  0.030 &  0.038 &  0.026 &  0.028 &  0.032 &  0.042 &  0.030 &  0.031 &  0.035 &  0.051 &  0.092 &  0.093 &  0.123 &  0.189 \\
\textbf{Liu16} &  0.187 &  0.098 &  0.100 &  0.100 &  0.738 &  0.738 &  0.419 &  0.100 &  0.820 &  0.821 &  0.821 &  0.098 &  0.760 &  0.741 &  0.820 \\

\bottomrule
\end{tabular}
}
\end{sc}
\end{small}
\end{center}
\vskip 0.06in
\end{supptable*}

\begin{supptable*}[ht]

\setlength\tabcolsep{2pt} 
\renewcommand{\arraystretch}{0.9}
\caption{Comparison of \textbf{AUC-PR} for one-vs-rest MNIST (averaged over all digits) using a \textbf{logistic regression} classifier. Except for $RP_\rho$, $\rho_1$, $\rho_0$ are given to all methods. Top model scores are in bold with $RP_\rho$ in red if greater than non-RP models.} 
\vskip -0.15in
\label{table:mnist_logreg_auc}
\begin{center}
\begin{small}
\begin{sc}

\resizebox{\textwidth}{!}{ 
\begin{tabular}{l|rrr|rrrr|rrrr|rrrr}
\toprule

\multicolumn{0}{c}{} & 
\multicolumn{3}{c}{$\pi_1$\textbf{ = 0}}         &   
\multicolumn{4}{c}{$\pi_1$\textbf{ = 0.25}}  & 
\multicolumn{4}{c}{$\pi_1$\textbf{ = 0.5}}  & 
\multicolumn{4}{c}{$\pi_1$\textbf{ = 0.75}}    \\

\textbf{Model,$\rho_1 = $} &      \textbf{0.25} &    \textbf{0.50} &   \textbf{0.75} &   \textbf{0.00} &    \textbf{0.25} &    \textbf{0.50} &   \textbf{0.75} &   \textbf{0.00} &    \textbf{0.25} &    \textbf{0.50} &   \textbf{0.75} &   \textbf{0.00} &    \textbf{0.25} &    \textbf{0.50} &   \textbf{0.75}   \\

\midrule

\textbf{True}   &  0.935 &  0.935 &  0.935 &  0.935 &  0.935 &  0.935 &  0.935 &  0.935 &  0.935 &  0.935 &  0.935 &  0.935 &  0.935 &  0.935 &  0.935 \\
\textbf{RP}$_{\rho}$ &  0.921 &  \textcolor{red}{\textbf{0.913}} &  \textcolor{red}{\textbf{0.882}} &  \textcolor{red}{\textbf{0.928}} &  \textcolor{red}{\textbf{0.920}} &  \textcolor{red}{\textbf{0.906}} &  \textcolor{red}{\textbf{0.853}} &  \textcolor{red}{\textbf{0.903}} &  \textcolor{red}{\textbf{0.902}} &  \textcolor{red}{\textbf{0.879}} &  \textcolor{red}{\textbf{0.803}} &  0.851 &  0.835 &  \textcolor{red}{\textbf{0.788}} &  0.640 \\
\textbf{RP}    &  \textbf{0.922} &  \textbf{0.913} &  \textbf{0.882} &  \textbf{0.930} &  \textbf{0.921} &  \textbf{0.906} &  \textbf{0.858} &  \textbf{0.922} &  \textbf{0.903} &  \textbf{0.883} &  \textbf{0.811} &  \textbf{0.893} &  \textbf{0.841} &  \textbf{0.799} &  0.621 \\
\textbf{Nat13} &  \textbf{0.922} &  0.908 &  0.878 &  0.918 &  0.909 &  0.890 &  0.839 &  0.899 &  0.892 &  0.862 &  0.794 &  0.863 &  0.837 &  0.784 &  \textbf{0.645} \\
\textbf{Elk08} &  0.921 &  0.903 &  0.864 &  0.917 &  0.908 &  0.884 &  0.821 &  0.898 &  0.892 &  0.861 &  0.763 &  0.852 &  0.837 &  0.772 &  0.579 \\
\textbf{Liu16} &  0.498 &  0.549 &  0.550 &  0.550 &  0.500 &  0.550 &  0.505 &  0.550 &  0.550 &  0.550 &  0.549 &  0.503 &  0.512 &  0.550 &  0.550 \\

\bottomrule
\end{tabular}
}
\end{sc}
\end{small}
\end{center}
\vskip 0.08in
\end{supptable*}

\begin{supptable*}[ht]

\setlength\tabcolsep{1pt} 
\renewcommand{\arraystretch}{0.85}
\caption{Comparison of \textbf{error} for one-vs-rest MNIST (averaged over all digits) using a \textbf{CNN} classifier. Except for $RP_\rho$, $\rho_1$, $\rho_0$ are given to all methods. Top model scores are in bold with $RP_\rho$ in red if better (smaller) than non-RP models.} 
\vskip -0.1in
\label{table:mnist_cnn_error}
\begin{center}
\begin{small}
\begin{sc}

\resizebox{\textwidth}{!}{
\begin{tabular}{l|c|ccccc|ccccc|ccccc|ccccc}
\toprule

\multicolumn{1}{c}{} &  
\multicolumn{1}{c|}{} & 
\multicolumn{5}{c|}{$\pi_1$\textbf{ = 0}}   &   
\multicolumn{5}{c|}{$\pi_1$\textbf{ = 0.25}}  & 
\multicolumn{10}{c}{$\pi_1$\textbf{ = 0.5}}   \\

\multicolumn{1}{c}{} &  
\multicolumn{1}{c|}{} & 
\multicolumn{5}{c|}{$\rho_1$\textbf{ = 0.5}} & 
\multicolumn{5}{c|}{$\rho_1$\textbf{ = 0.25}}  & 
\multicolumn{5}{c}{$\rho_1$\textbf{ = 0}} & 
\multicolumn{5}{c}{$\rho_1$\textbf{ = 0.5}}    \\

{\textbf{IMAGE}} &   \textbf{True} & \textbf{RP}$_{\rho}$ &    \textbf{RP} & \textbf{Nat13} & \textbf{Elk08} & \textbf{Liu16} & \textbf{RP}$_{\rho}$ &    \textbf{RP} & \textbf{Nat13} & \textbf{Elk08} & \textbf{Liu16} & \textbf{RP}$_{\rho}$ &    \textbf{RP} & \textbf{Nat13} & \textbf{Elk08} & \textbf{Liu16} & \textbf{RP}$_{\rho}$ &    \textbf{RP} & \textbf{Nat13} & \textbf{Elk08} & \textbf{Liu16} \\
\midrule

\textbf{0}     &  0.0013 &  \textcolor{red}{\textbf{0.0018}} &  \textbf{0.0023} &  0.0045 &  0.0047 &  0.9020 &  \textcolor{red}{\textbf{0.0017}} &  \textbf{0.0016} &  0.0034 &  0.0036 &  0.9020 &  \textcolor{red}{\textbf{0.0017}} &  \textbf{0.0016} &  0.0031 &  0.0026 &  0.0029 &  \textcolor{red}{\textbf{0.0021}} &  \textbf{0.0022} &  0.0116 &  0.0069 &  0.9020 \\
\textbf{1}     &  0.0015 &  \textcolor{red}{\textbf{0.0022}} &  \textbf{0.0020} &  0.0025 &  0.0034 &  0.8865 &  \textcolor{red}{\textbf{0.0019}} &  \textbf{0.0019} &  0.0035 &  0.0030 &  0.8865 &  0.0023 &  0.0020 &  0.0018 &  \textbf{0.0016} &  0.0023 &  \textcolor{red}{\textbf{0.0025}} &  \textbf{0.0025} &  0.0036 &  0.0027 &  0.8865 \\
\textbf{2}     &  0.0027 &  \textcolor{red}{\textbf{0.0054}} &  \textbf{0.0049} &  0.0057 &  0.0062 &  0.8968 &  \textcolor{red}{\textbf{0.0032}} &  \textbf{0.0035} &  0.0045 &  0.0051 &  0.8968 &  0.0030 &  0.0029 &  0.0031 &  0.0029 &  \textbf{0.0024} &  \textcolor{red}{\textbf{0.0059}} &  \textbf{0.0050} &  0.0066 &  0.0083 &  0.8968 \\
\textbf{3}     &  0.0020 &  \textcolor{red}{\textbf{0.0032}} &  \textbf{0.0032} &  0.0055 &  0.0038 &  0.8990 &  \textcolor{red}{\textbf{0.0029}} &  \textbf{0.0029} &  0.0043 &  0.0043 &  0.8990 &  \textcolor{red}{\textbf{0.0021}} &  0.0027 &  \textbf{0.0023} &  \textbf{0.0023} &  0.0032 &  \textcolor{red}{\textbf{0.0038}} &  \textbf{0.0042} &  0.0084 &  0.0057 &  0.8990 \\
\textbf{4}     &  0.0012 &  \textcolor{red}{\textbf{0.0037}} &  0.0040 &  \textbf{0.0038} &  0.0044 &  0.9018 &  \textcolor{red}{\textbf{0.0029}} &  \textbf{0.0025} &  0.0055 &  0.0069 &  0.9018 &  0.0026 &  0.0020 &  \textbf{0.0019} &  0.0021 &  0.0030 &  \textcolor{red}{\textbf{0.0044}} &  \textbf{0.0035} &  0.0086 &  0.0077 &  0.9018 \\
\textbf{5}     &  0.0019 &  \textcolor{red}{\textbf{0.0032}} &  \textbf{0.0035} &  0.0039 &  0.0038 &  0.9108 &  \textcolor{red}{\textbf{0.0027}} &  \textbf{0.0031} &  0.0062 &  0.0060 &  0.9108 &  \textcolor{red}{\textbf{0.0021}} &  0.0024 &  0.0024 &  0.0028 &  \textbf{0.0023} &  \textcolor{red}{\textbf{0.0061}} &  \textbf{0.0056} &  0.0066 &  0.0074 &  0.9108 \\
\textbf{6}     &  0.0021 &  \textcolor{red}{\textbf{0.0027}} &  \textbf{0.0028} &  0.0053 &  0.0035 &  0.9042 &  \textcolor{red}{\textbf{0.0028}} &  \textbf{0.0025} &  0.0042 &  0.0036 &  0.9042 &  0.0029 &  0.0029 &  \textbf{0.0022} &  0.0024 &  0.0028 &  \textcolor{red}{\textbf{0.0032}} &  \textbf{0.0035} &  0.0098 &  0.0075 &  0.9042 \\
\textbf{7}     &  0.0026 &  \textcolor{red}{\textbf{0.0039}} &  \textbf{0.0041} &  0.0066 &  0.0103 &  0.8972 &  \textcolor{red}{\textbf{0.0050}} &  \textbf{0.0052} &  0.0058 &  0.0058 &  0.8972 &  0.0049 &  0.0040 &  \textbf{0.0030} &  0.0037 &  0.0035 &  \textcolor{red}{\textbf{0.0054}} &  \textbf{0.0064} &  0.0113 &  0.0085 &  0.8972 \\
\textbf{8}     &  0.0022 &  \textcolor{red}{\textbf{0.0047}} &  \textbf{0.0043} &  0.0106 &  0.0063 &  0.9026 &  \textcolor{red}{\textbf{0.0034}} &  \textbf{0.0036} &  0.0062 &  0.0091 &  0.9026 &  0.0036 &  \textbf{0.0030} &  0.0035 &  0.0041 &  0.0032 &  \textcolor{red}{\textbf{0.0044}} &  \textbf{0.0048} &  0.0234 &  0.0077 &  0.9026 \\
\textbf{9}     &  0.0036 &  0.0067 &  \textbf{0.0052} &  0.0056 &  0.0124 &  0.8991 &  \textcolor{red}{\textbf{0.0048}} &  \textbf{0.0051} &  0.0065 &  0.0064 &  0.8991 &  0.0048 &  0.0050 &  0.0051 &  \textbf{0.0043} &  0.0059 &  \textcolor{red}{\textbf{0.0081}} &  0.0114 &  0.0131 &  \textbf{0.0112} &  0.8991 \\
\midrule
\textbf{AVG} &  0.0021 &  \textcolor{red}{\textbf{0.0038}} &  \textbf{0.0036} &  0.0054 &  0.0059 &  0.9000 &  \textcolor{red}{\textbf{0.0031}} &  \textbf{0.0032} &  0.0050 &  0.0054 &  0.9000 &  0.0030 &  \textbf{0.0028} &  \textbf{0.0028} &  0.0029 &  0.0032 &  \textcolor{red}{\textbf{0.0046}} &  \textbf{0.0049} &  0.0103 &  0.0074 &  0.9000 \\

\bottomrule
\end{tabular}
}
\end{sc}
\end{small}
\end{center}
\end{supptable*}

\begin{supptable*}[ht]

\setlength\tabcolsep{1pt} 
\renewcommand{\arraystretch}{0.85}
\caption{Comparison of \textbf{AUC-PR} for one-vs-rest MNIST (averaged over all digits) using a \textbf{CNN} classifier. Except for $RP_\rho$, $\rho_1$, $\rho_0$ are given to all methods. Top model scores are in bold with $RP_\rho$ in red if greater than non-RP models.} 
\vskip -0.1in
\label{table:mnist_cnn_auc}
\begin{center}
\begin{small}
\begin{sc}

\resizebox{\textwidth}{!}{
\begin{tabular}{l|c|ccccc|ccccc|ccccc|ccccc}
\toprule

\multicolumn{1}{c}{} &  
\multicolumn{1}{c|}{} & 
\multicolumn{5}{c|}{$\pi_1$\textbf{ = 0}}   &   
\multicolumn{5}{c|}{$\pi_1$\textbf{ = 0.25}}  & 
\multicolumn{10}{c}{$\pi_1$\textbf{ = 0.5}}   \\

\multicolumn{1}{c}{} &  
\multicolumn{1}{c|}{} & 
\multicolumn{5}{c|}{$\rho_1$\textbf{ = 0.5}} & 
\multicolumn{5}{c|}{$\rho_1$\textbf{ = 0.25}}  & 
\multicolumn{5}{c}{$\rho_1$\textbf{ = 0}} & 
\multicolumn{5}{c}{$\rho_1$\textbf{ = 0.5}}    \\

{\textbf{IMAGE}} &   \textbf{True} & \textbf{RP}$_{\rho}$ &    \textbf{RP} & \textbf{Nat13} & \textbf{Elk08} & \textbf{Liu16} & \textbf{RP}$_{\rho}$ &    \textbf{RP} & \textbf{Nat13} & \textbf{Elk08} & \textbf{Liu16} & \textbf{RP}$_{\rho}$ &    \textbf{RP} & \textbf{Nat13} & \textbf{Elk08} & \textbf{Liu16} & \textbf{RP}$_{\rho}$ &    \textbf{RP} & \textbf{Nat13} & \textbf{Elk08} & \textbf{Liu16} \\
\midrule
\textbf{0}     & 0.9998 & \textcolor{red}{\textbf{0.9992}} & \textbf{0.9990} & 0.9986 & 0.9982 & 0.5490 & \textcolor{red}{\textbf{0.9996}} & \textbf{0.9996} & 0.9986 & 0.9979 & 0.5490 & \textcolor{red}{\textbf{0.9989}} & \textbf{0.9995} & 0.9976 & 0.9979 & 0.9956 & \textcolor{red}{\textbf{0.9984}} & \textbf{0.9982} & 0.9963 & 0.9928 & 0.5490 \\
\textbf{1}     & 0.9999 & \textcolor{red}{\textbf{0.9995}} & \textbf{0.9995} & 0.9976 & 0.9974 & 0.5568 & \textcolor{red}{\textbf{0.9996}} & 0.9993 & \textbf{0.9995} & \textbf{0.9995} & 0.5568 & \textcolor{red}{\textbf{0.9995}} & \textbf{0.9998} & 0.9982 & 0.9972 & 0.9965 & \textcolor{red}{\textbf{0.9995}} & \textbf{0.9994} & 0.9978 & 0.9985 & 0.5568 \\
\textbf{2}     & 0.9994 & \textcolor{red}{\textbf{0.9971}} & \textbf{0.9969} & 0.9917 & 0.9942 & 0.5516 & \textcolor{red}{\textbf{0.9980}} & \textbf{0.9977} & 0.9971 & 0.9945 & 0.5516 & \textcolor{red}{\textbf{0.9988}} & \textbf{0.9992} & 0.9958 & 0.9934 & 0.9940 & \textcolor{red}{\textbf{0.9938}} & \textbf{0.9947} & 0.9847 & 0.9873 & 0.5516 \\
\textbf{3}     & 0.9996 & \textcolor{red}{\textbf{0.9986}} & \textbf{0.9987} & 0.9983 & 0.9984 & 0.5505 & \textcolor{red}{\textbf{0.9991}} & \textbf{0.9989} & 0.9982 & 0.9980 & 0.5505 & \textcolor{red}{\textbf{0.9993}} & \textbf{0.9994} & 0.9991 & 0.9971 & 0.9974 & \textcolor{red}{\textbf{0.9969}} & \textbf{0.9959} & 0.9951 & \textbf{0.9959} & 0.5505 \\
\textbf{4}     & 0.9997 & 0.9982 & \textbf{0.9989} & 0.9939 & 0.9988 & 0.0891 & \textcolor{red}{\textbf{0.9992}} & \textbf{0.9991} & 0.9976 & 0.9965 & 0.5491 & \textcolor{red}{\textbf{0.9994}} & \textbf{0.9996} & 0.9985 & 0.9978 & 0.9986 & \textcolor{red}{\textbf{0.9983}} & \textbf{0.9977} & 0.9961 & 0.9919 & 0.5491 \\
\textbf{5}     & 0.9993 & \textcolor{red}{\textbf{0.9982}} & \textbf{0.9976} & 0.9969 & 0.9956 & 0.5446 & \textcolor{red}{\textbf{0.9986}} & \textbf{0.9987} & 0.9983 & 0.9979 & 0.5446 & \textcolor{red}{\textbf{0.9984}} & \textbf{0.9982} & 0.9971 & 0.9963 & 0.9929 & \textcolor{red}{\textbf{0.9958}} & \textbf{0.9965} & 0.9946 & 0.9934 & 0.5446 \\
\textbf{6}     & 0.9987 & \textcolor{red}{\textbf{0.9976}} & \textbf{0.9970} & 0.9928 & 0.9931 & 0.5479 & \textcolor{red}{\textbf{0.9974}} & \textbf{0.9980} & 0.9956 & 0.9959 & 0.5479 & \textcolor{red}{\textbf{0.9968}} & \textbf{0.9983} & 0.9933 & 0.9950 & 0.9905 & \textcolor{red}{\textbf{0.9964}} & 0.9957 & 0.9942 & \textbf{0.9961} & 0.5479 \\
\textbf{7}     & 0.9989 & \textcolor{red}{\textbf{0.9973}} & \textbf{0.9972} & 0.9965 & 0.9944 & 0.0721 & 0.9968 & 0.9973 & 0.9966 & \textbf{0.9979} & 0.5514 & 0.9969 & \textbf{0.9983} & 0.9961 & 0.9958 & 0.9974 & \textcolor{red}{\textbf{0.9933}} & \textbf{0.9937} & 0.9896 & 0.9886 & 0.5514 \\
\textbf{8}     & 0.9996 & \textcolor{red}{\textbf{0.9974}} & \textbf{0.9964} & \textbf{0.9964} & 0.9946 & 0.5487 & \textcolor{red}{\textbf{0.9981}} & \textbf{0.9981} & 0.9973 & 0.9971 & 0.5487 & \textbf{0.9983} & 0.9988 & 0.9984 & 0.9976 & \textbf{0.9989} & \textcolor{red}{\textbf{0.9976}} & \textbf{0.9975} & 0.9873 & 0.9893 & 0.5487 \\
\textbf{9}     & 0.9979 & \textcolor{red}{\textbf{0.9931}} & \textbf{0.9951} & 0.9901 & 0.9922 & 0.5504 & \textcolor{red}{\textbf{0.9935}} & \textbf{0.9951} & 0.9933 & 0.9920 & 0.5504 & \textcolor{red}{\textbf{0.9961}} & \textbf{0.9951} & 0.9924 & 0.9922 & 0.9912 & \textcolor{red}{\textbf{0.9877}} & \textbf{0.9876} & 0.9819 & 0.9828 & 0.5504 \\
\midrule
\textbf{AVG} & 0.9993 & \textcolor{red}{\textbf{0.9976}} & \textbf{0.9976} & 0.9953 & 0.9957 & 0.4561 & \textcolor{red}{\textbf{0.9980}} & \textbf{0.9982} & 0.9972 & 0.9967 & 0.5500 & \textcolor{red}{\textbf{0.9983}} & \textbf{0.9986} & 0.9966 & 0.9960 & 0.9953 & \textcolor{red}{\textbf{0.9958}} & \textbf{0.9957} & 0.9918 & 0.9917 & 0.5500 \\

\bottomrule
\end{tabular}
}
\end{sc}
\end{small}
\end{center}
\vskip -0.08in
\end{supptable*}

\begin{supptable*}[ht]

\setlength\tabcolsep{1pt} 
\renewcommand{\arraystretch}{0.85}
\caption{Comparison of \textbf{F1 score} for one-vs-rest CIFAR-10 (averaged over all images) using a \textbf{logistic regression} classifier. Except for $RP_\rho$, $\rho_1$, $\rho_0$ are given to all methods. Top model scores are in bold with $RP_\rho$ in red if greater than non-RP models.} 
\vskip -0.1in
\label{table:cifar_logreg_f1}
\begin{center}
\begin{small}
\begin{sc}

\resizebox{\textwidth}{!}{
\begin{tabular}{l|c|ccccc|ccccc|ccccc|ccccc}
\toprule

\multicolumn{1}{c}{} &  
\multicolumn{1}{c|}{} & 
\multicolumn{5}{c|}{$\pi_1$\textbf{ = 0}}   &   
\multicolumn{5}{c|}{$\pi_1$\textbf{ = 0.25}}  & 
\multicolumn{10}{c}{$\pi_1$\textbf{ = 0.5}}   \\

\multicolumn{1}{c}{} &  
\multicolumn{1}{c|}{} & 
\multicolumn{5}{c|}{$\rho_1$\textbf{ = 0.5}} & 
\multicolumn{5}{c|}{$\rho_1$\textbf{ = 0.25}}  & 
\multicolumn{5}{c}{$\rho_1$\textbf{ = 0}} & 
\multicolumn{5}{c}{$\rho_1$\textbf{ = 0.5}}    \\

{\textbf{IMAGE}} &   \textbf{True} & \textbf{RP}$_{\rho}$ &    \textbf{RP} & \textbf{Nat13} & \textbf{Elk08} & \textbf{Liu16} & \textbf{RP}$_{\rho}$ &    \textbf{RP} & \textbf{Nat13} & \textbf{Elk08} & \textbf{Liu16} & \textbf{RP}$_{\rho}$ &    \textbf{RP} & \textbf{Nat13} & \textbf{Elk08} & \textbf{Liu16} & \textbf{RP}$_{\rho}$ &    \textbf{RP} & \textbf{Nat13} & \textbf{Elk08} & \textbf{Liu16} \\
\midrule
\textbf{plane}   &  0.272 &  \textcolor{red}{\textbf{0.311}} &  \textbf{0.252} &  0.217 &  0.220 &  0.182 &  \textcolor{red}{\textbf{0.329}} &  \textbf{0.275} &  0.222 &  0.224 &  0.182 &  \textcolor{red}{\textbf{0.330}} &  \textbf{0.265} &  0.231 &  0.259 &    0.0 &  \textcolor{red}{\textbf{0.266}} &  \textbf{0.188} &  0.183 &  0.187 &  0.182 \\
\textbf{auto} &  0.374 &  \textcolor{red}{\textbf{0.389}} &  \textbf{0.355} &  0.318 &  0.320 &  0.182 &  \textcolor{red}{\textbf{0.388}} &  \textbf{0.368} &  0.321 &  0.328 &  0.182 &  \textcolor{red}{\textbf{0.372}} &  \textbf{0.355} &  0.308 &  0.341 &    0.0 &  \textcolor{red}{\textbf{0.307}} &  0.287 &  0.287 &  \textbf{0.297} &  0.182 \\
\textbf{bird}       &  0.136 &  \textcolor{red}{\textbf{0.241}} &  0.167 &  0.143 &  0.136 &  \textbf{0.182} &  \textcolor{red}{\textbf{0.248}} &  \textbf{0.185} &  0.137 &  0.137 &  0.182 &  \textcolor{red}{\textbf{0.258}} &  \textbf{0.147} &  0.100 &  0.126 &    0.0 &  \textcolor{red}{\textbf{0.206}} &  0.153 &  0.132 &  0.150 &  \textbf{0.182} \\
\textbf{cat}        &  0.122 &  \textcolor{red}{\textbf{0.246}} &  0.170 &  0.141 &  0.150 &  \textbf{0.182} &  \textcolor{red}{\textbf{0.232}} &  0.163 &  0.112 &  0.127 &  \textbf{0.182} &  \textcolor{red}{\textbf{0.241}} &  \textbf{0.125} &  0.068 &  0.103 &    0.0 &  \textcolor{red}{\textbf{0.209}} &  0.148 &  0.119 &  0.157 &  \textbf{0.182} \\
\textbf{deer}       &  0.166 &  \textcolor{red}{\textbf{0.250}} &  \textbf{0.184} &  0.153 &  0.164 &  0.182 &  \textcolor{red}{\textbf{0.259}} &  0.175 &  0.146 &  0.163 &  \textbf{0.182} &  \textcolor{red}{\textbf{0.259}} &  \textbf{0.177} &  0.126 &  0.164 &    0.0 &  \textcolor{red}{\textbf{0.222}} &  0.162 &  0.132 &  0.164 &  \textbf{0.182} \\
\textbf{dog}        &  0.139 &  \textcolor{red}{\textbf{0.245}} &  0.174 &  0.146 &  0.148 &  \textbf{0.182} &  \textcolor{red}{\textbf{0.262}} &  0.171 &  0.115 &  0.126 &  \textbf{0.182} &  \textcolor{red}{\textbf{0.254}} &  \textbf{0.152} &  0.075 &  0.120 &    0.0 &  \textcolor{red}{\textbf{0.203}} &  0.151 &  0.128 &  0.137 &  \textbf{0.182} \\
\textbf{frog}       &  0.317 &  \textcolor{red}{\textbf{0.322}} &  \textbf{0.315} &  0.289 &  0.281 &  0.182 &  \textcolor{red}{\textbf{0.350}} &  \textbf{0.319} &  0.283 &  0.299 &  0.182 &  \textcolor{red}{\textbf{0.346}} &  \textbf{0.305} &  0.239 &  0.279 &    0.0 &  \textcolor{red}{\textbf{0.308}} &  0.252 &  0.244 &  \textbf{0.269} &  0.182 \\
\textbf{horse}      &  0.300 &  \textcolor{red}{\textbf{0.300}} &  \textbf{0.299} &  0.283 &  0.263 &  0.182 &  \textcolor{red}{\textbf{0.334}} &  \textbf{0.313} &  0.272 &  0.281 &  0.182 &  \textcolor{red}{\textbf{0.322}} &  \textbf{0.310} &  0.260 &  0.292 &    0.0 &  \textcolor{red}{\textbf{0.275}} &  \textbf{0.258} &  0.240 &  0.245 &  0.182 \\
\textbf{ship}       &  0.322 &  \textcolor{red}{\textbf{0.343}} &  \textbf{0.322} &  0.297 &  0.272 &  0.182 &  \textcolor{red}{\textbf{0.385}} &  \textbf{0.319} &  0.287 &  0.289 &  0.182 &  \textcolor{red}{\textbf{0.350}} &  \textbf{0.303} &  0.250 &  0.293 &    0.0 &  \textcolor{red}{\textbf{0.304}} &  \textbf{0.248} &  0.230 &  0.237 &  0.182 \\
\textbf{truck}      &  0.330 &  \textcolor{red}{\textbf{0.359}} &  \textbf{0.323} &  0.273 &  0.261 &  0.182 &  \textcolor{red}{\textbf{0.369}} &  \textbf{0.327} &  0.293 &  0.290 &  0.182 &  \textcolor{red}{\textbf{0.343}} &  \textbf{0.302} &  0.278 &  0.299 &    0.0 &  \textcolor{red}{\textbf{0.313}} &  0.246 &  0.252 &  \textbf{0.262} &  0.182 \\
\midrule
\textbf{AVG}      &  0.248 &  \textcolor{red}{\textbf{0.301}} &  \textbf{0.256} &  0.226 &  0.221 &  0.182 &  \textcolor{red}{\textbf{0.316}} &  \textbf{0.262} &  0.219 &  0.226 &  0.182 &  \textcolor{red}{\textbf{0.308}} &  \textbf{0.244} &  0.194 &  0.228 &   0.000 &  \textcolor{red}{\textbf{0.261}} &  0.209 &  0.195 &  \textbf{0.210} &  0.182 \\
\bottomrule
\end{tabular}
}
\end{sc}
\end{small}
\end{center}
\vskip -0.1in
\end{supptable*}

\begin{supptable*}[ht]

\setlength\tabcolsep{1pt} 
\renewcommand{\arraystretch}{0.85}
\caption{Comparison of \textbf{error} for one-vs-rest CIFAR-10 (averaged over all images) using a \textbf{logistic regression} classifier. Except for $RP_\rho$, $\rho_1$, $\rho_0$ are given to all methods. Top model scores are in bold with $RP_\rho$ in red if better (smaller) than non-RP models. Here the logistic regression classifier severely underfits CIFAR, resulting in Rank Pruning pruning out some correctly labeled examples that ``confuse" the classifier, hence in this scenario, RP and RP$_{\rho}$ generally have slightly smaller precision, much higher recall, and hence larger F1 scores than other models and even the ground truth classifier (Table C\ref{table:cifar_logreg_f1}). Due to the class inbalance ($p_{y1}=0.1$) and their larger recall, RP and RP$_{\rho}$ here have larger error than the other models.} 
\vskip -0.1in
\label{table:cifar_logreg_error}
\begin{center}
\begin{small}
\begin{sc}

\resizebox{\textwidth}{!}{
\begin{tabular}{l|c|ccccc|ccccc|ccccc|ccccc}
\toprule

\multicolumn{1}{c}{} &  
\multicolumn{1}{c|}{} & 
\multicolumn{5}{c|}{$\pi_1$\textbf{ = 0}}   &   
\multicolumn{5}{c|}{$\pi_1$\textbf{ = 0.25}}  & 
\multicolumn{10}{c}{$\pi_1$\textbf{ = 0.5}}   \\

\multicolumn{1}{c}{} &  
\multicolumn{1}{c|}{} & 
\multicolumn{5}{c|}{$\rho_1$\textbf{ = 0.5}} & 
\multicolumn{5}{c|}{$\rho_1$\textbf{ = 0.25}}  & 
\multicolumn{5}{c}{$\rho_1$\textbf{ = 0}} & 
\multicolumn{5}{c}{$\rho_1$\textbf{ = 0.5}}    \\

{\textbf{IMAGE}} &   \textbf{True} & \textbf{RP}$_{\rho}$ &    \textbf{RP} & \textbf{Nat13} & \textbf{Elk08} & \textbf{Liu16} & \textbf{RP}$_{\rho}$ &    \textbf{RP} & \textbf{Nat13} & \textbf{Elk08} & \textbf{Liu16} & \textbf{RP}$_{\rho}$ &    \textbf{RP} & \textbf{Nat13} & \textbf{Elk08} & \textbf{Liu16} & \textbf{RP}$_{\rho}$ &    \textbf{RP} & \textbf{Nat13} & \textbf{Elk08} & \textbf{Liu16} \\
\midrule
\textbf{plane}   & 0.107 &  0.287 & 0.133 &  0.123 &  \textbf{0.122} &  0.900 &  0.177 & 0.128 &  \textbf{0.119} &  0.123 &  0.900 &  0.248 & 0.124 &  0.110 &  0.118 &  \textbf{0.100} &  0.202 & 0.147 &  \textbf{0.142} &  0.160 &  0.900 \\
\textbf{auto} & 0.099 &  0.184 & 0.120 &  \textbf{0.110} &  \textbf{0.110} &  0.900 &  0.132 & 0.114 &  \textbf{0.105} &  0.109 &  0.900 &  0.189 & 0.110 &  0.105 &  0.110 &  \textbf{0.100} &  0.159 & 0.129 &  \textbf{0.125} &  0.139 &  0.900 \\
\textbf{bird}       & 0.117 &  0.354 & 0.148 &  0.133 &  \textbf{0.131} &  0.900 &  0.217 & 0.135 &  \textbf{0.120} &  0.125 &  0.900 &  0.277 & 0.135 &  0.115 &  0.123 &  \textbf{0.100} &  0.226 & 0.147 &  \textbf{0.139} &  0.158 &  0.900 \\
\textbf{cat}        & 0.114 &  0.351 & 0.138 &  \textbf{0.129} &  \textbf{0.129} &  0.900 &  0.208 & 0.139 &  \textbf{0.122} &  0.125 &  0.900 &  0.303 & 0.132 &  0.114 &  0.122 &  \textbf{0.100} &  0.225 & 0.151 &  \textbf{0.141} &  0.158 &  0.900 \\
\textbf{deer}       & 0.112 &  0.336 & 0.143 &  \textbf{0.128} &  0.130 &  0.900 &  0.194 & 0.135 &  \textbf{0.120} &  0.122 &  0.900 &  0.271 & 0.133 &  0.118 &  0.126 &  \textbf{0.100} &  0.209 & 0.150 &  \textbf{0.147} &  0.161 &  0.900 \\
\textbf{dog}        & 0.119 &  0.370 & 0.150 &  \textbf{0.136} &  0.138 &  0.900 &  0.205 & 0.142 &  \textbf{0.129} &  0.132 &  0.900 &  0.288 & 0.135 &  0.120 &  0.128 &  \textbf{0.100} &  0.229 & 0.154 &  \textbf{0.147} &  0.168 &  0.900 \\
\textbf{frog}       & 0.107 &  0.228 & 0.128 &  \textbf{0.117} &  \textbf{0.117} &  0.900 &  0.155 & 0.124 &  \textbf{0.113} &  0.115 &  0.900 &  0.228 & 0.118 &  0.110 &  0.116 &  \textbf{0.100} &  0.167 & 0.137 &  \textbf{0.130} &  0.142 &  0.900 \\
\textbf{horse}      & 0.104 &  0.251 & 0.127 &  \textbf{0.114} &  0.116 &  0.900 &  0.153 & 0.123 &  \textbf{0.110} &  0.112 &  0.900 &  0.224 & 0.116 &  0.108 &  0.113 &  \textbf{0.100} &  0.178 & 0.134 &  \textbf{0.129} &  0.144 &  0.900 \\
\textbf{ship}       & 0.112 &  0.239 & 0.134 &  \textbf{0.121} &  0.126 &  0.900 &  0.160 & 0.131 &  \textbf{0.119} &  0.123 &  0.900 &  0.236 & 0.122 &  0.113 &  0.120 &  \textbf{0.100} &  0.193 & 0.145 &  \textbf{0.139} &  0.159 &  0.900 \\
\textbf{truck}      & 0.106 &  0.210 & 0.130 &  \textbf{0.121} &  0.122 &  0.900 &  0.145 & 0.125 &  \textbf{0.113} &  0.117 &  0.900 &  0.213 & 0.121 &  0.108 &  0.117 &  \textbf{0.100} &  0.165 & 0.142 &  \textbf{0.134} &  0.150 &  0.900 \\
\midrule
\textbf{AVG}      & 0.110 &  0.281 & 0.135 &  \textbf{0.123} &  0.124 &  0.900 &  0.175 & 0.130 &  \textbf{0.117} &  0.120 &  0.900 &  0.248 & 0.125 &  0.112 &  0.119 &  \textbf{0.100} &  0.195 & 0.144 &  \textbf{0.137} &  0.154 &  0.900 \\

\bottomrule
\end{tabular}
}
\end{sc}
\end{small}
\end{center}
\end{supptable*}

\begin{supptable*}[ht]

\setlength\tabcolsep{1pt} 
\renewcommand{\arraystretch}{0.85}
\caption{Comparison of \textbf{AUC-PR} for one-vs-rest CIFAR-10 (averaged over all images) using a \textbf{logistic regression} classifier. Except for $RP_\rho$, $\rho_1$, $\rho_0$ are given to all methods. Top model scores are in bold with $RP_\rho$ in red if greater than non-RP models. Since $p_{y1}=0.1$, here \emph{Liu16} always predicts all labels as positive or negative, resulting in a constant AUC-PR of 0.550.} 
\vskip -0.1in
\label{table:cifar_logreg_auc}
\begin{center}
\begin{small}
\begin{sc}

\resizebox{\textwidth}{!}{
\begin{tabular}{l|c|ccccc|ccccc|ccccc|ccccc}
\toprule

\multicolumn{1}{c}{} &  
\multicolumn{1}{c|}{} & 
\multicolumn{5}{c|}{$\pi_1$\textbf{ = 0}}   &   
\multicolumn{5}{c|}{$\pi_1$\textbf{ = 0.25}}  & 
\multicolumn{10}{c}{$\pi_1$\textbf{ = 0.5}}   \\

\multicolumn{1}{c}{} &  
\multicolumn{1}{c|}{} & 
\multicolumn{5}{c|}{$\rho_1$\textbf{ = 0.5}} & 
\multicolumn{5}{c|}{$\rho_1$\textbf{ = 0.25}}  & 
\multicolumn{5}{c}{$\rho_1$\textbf{ = 0}} & 
\multicolumn{5}{c}{$\rho_1$\textbf{ = 0.5}}    \\

{\textbf{IMAGE}} &   \textbf{True} & \textbf{RP}$_{\rho}$ &    \textbf{RP} & \textbf{Nat13} & \textbf{Elk08} & \textbf{Liu16} & \textbf{RP}$_{\rho}$ &    \textbf{RP} & \textbf{Nat13} & \textbf{Elk08} & \textbf{Liu16} & \textbf{RP}$_{\rho}$ &    \textbf{RP} & \textbf{Nat13} & \textbf{Elk08} & \textbf{Liu16} & \textbf{RP}$_{\rho}$ &    \textbf{RP} & \textbf{Nat13} & \textbf{Elk08} & \textbf{Liu16} \\
\midrule
\textbf{plane}   & 0.288 &  0.225 & 0.224 &  0.225 &  0.207 &  \textbf{0.550} &  0.261 & 0.235 &  0.225 &  0.217 &  \textbf{0.550} &  0.285 & 0.251 &  0.245 &  0.248 &  \textbf{0.550} &  0.196 & 0.171 &  0.171 &  0.159 &  \textbf{0.550} \\
\textbf{auto} & 0.384 &  0.350 & 0.317 &  0.312 &  0.316 &  \textbf{0.550} &  0.342 & 0.335 &  0.331 &  0.331 &  \textbf{0.550} &  0.328 & 0.348 &  0.334 &  0.333 &  \textbf{0.550} &  0.256 & 0.257 &  0.259 &  0.261 &  \textbf{0.550} \\
\textbf{bird}       & 0.198 &  0.160 & 0.169 &  0.166 &  0.161 &  \textbf{0.550} &  0.188 & 0.185 &  0.179 &  0.177 &  \textbf{0.550} &  0.186 & 0.173 &  0.174 &  0.175 &  \textbf{0.550} &  0.150 & 0.154 &  0.150 &  0.147 &  \textbf{0.550} \\
\textbf{cat}        & 0.188 &  0.164 & 0.175 &  0.174 &  0.175 &  \textbf{0.550} &  0.163 & 0.169 &  0.168 &  0.170 &  \textbf{0.550} &  0.148 & 0.156 &  0.154 &  0.152 &  \textbf{0.550} &  0.145 & 0.143 &  0.140 &  0.145 &  \textbf{0.550} \\
\textbf{deer}       & 0.215 &  0.161 & 0.177 &  0.180 &  0.183 &  \textbf{0.550} &  0.194 & 0.180 &  0.180 &  0.182 &  \textbf{0.550} &  0.174 & 0.175 &  0.176 &  0.175 &  \textbf{0.550} &  0.151 & 0.152 &  0.146 &  0.151 &  \textbf{0.550} \\
\textbf{dog}        & 0.188 &  0.162 & 0.161 &  0.165 &  0.155 &  \textbf{0.550} &  0.175 & 0.160 &  0.161 &  0.158 &  \textbf{0.550} &  0.173 & 0.169 &  0.162 &  0.164 &  \textbf{0.550} &  0.145 & 0.142 &  0.139 &  0.133 &  \textbf{0.550} \\
\textbf{frog}       & 0.318 &  0.246 & 0.264 &  0.262 &  0.258 &  \textbf{0.550} &  0.292 & 0.277 &  0.272 &  0.273 &  \textbf{0.550} &  0.276 & 0.274 &  0.277 &  0.277 &  \textbf{0.550} &  0.239 & 0.212 &  0.206 &  0.212 &  \textbf{0.550} \\
\textbf{horse}      & 0.319 &  0.242 & 0.267 &  0.269 &  0.260 &  \textbf{0.550} &  0.283 & 0.264 &  0.264 &  0.263 &  \textbf{0.550} &  0.288 & 0.282 &  0.279 &  0.278 &  \textbf{0.550} &  0.223 & 0.218 &  0.208 &  0.207 &  \textbf{0.550} \\
\textbf{ship}       & 0.317 &  0.257 & 0.267 &  0.271 &  0.248 &  \textbf{0.550} &  0.296 & 0.266 &  0.267 &  0.259 &  \textbf{0.550} &  0.279 & 0.268 &  0.259 &  0.262 &  \textbf{0.550} &  0.220 & 0.212 &  0.207 &  0.191 &  \textbf{0.550} \\
\textbf{truck}      & 0.329 &  0.288 & 0.261 &  0.271 &  0.263 &  \textbf{0.550} &  0.298 & 0.275 &  0.286 &  0.284 &  \textbf{0.550} &  0.289 & 0.272 &  0.276 &  0.277 &  \textbf{0.550} &  0.241 & 0.213 &  0.208 &  0.204 &  \textbf{0.550} \\
\midrule
\textbf{AVG}      & 0.274 &  0.226 & 0.228 &  0.229 &  0.223 &  \textbf{0.550} &  0.249 & 0.235 &  0.233 &  0.231 &  \textbf{0.550} &  0.243 & 0.237 &  0.234 &  0.234 &  \textbf{0.550} &  0.197 & 0.187 &  0.183 &  0.181 &  \textbf{0.550} \\

\bottomrule
\end{tabular}
}
\end{sc}
\end{small}
\end{center}
\vskip -0.08in
\end{supptable*}

\begin{supptable*}[ht!]

\setlength\tabcolsep{1pt} 
\renewcommand{\arraystretch}{0.85}
\caption{Comparison of \textbf{error} for one-vs-rest CIFAR-10 (averaged over all images) using a \textbf{CNN} classifier. Except for $RP_\rho$, $\rho_1$, $\rho_0$ are given to all methods. Top model scores are in bold with $RP_\rho$ in red if better (smaller) than non-RP models.} 
\vskip -0.1in
\label{table:cifar_cnn_error}
\begin{center}
\begin{small}
\begin{sc}

\resizebox{\textwidth}{!}{
\begin{tabular}{l|c|ccccc|ccccc|ccccc|ccccc}
\toprule

\multicolumn{1}{c}{} &  
\multicolumn{1}{c|}{} & 
\multicolumn{5}{c|}{$\pi_1$\textbf{ = 0}}   &   
\multicolumn{5}{c|}{$\pi_1$\textbf{ = 0.25}}  & 
\multicolumn{10}{c}{$\pi_1$\textbf{ = 0.5}}   \\

\multicolumn{1}{c}{} &  
\multicolumn{1}{c|}{} & 
\multicolumn{5}{c|}{$\rho_1$\textbf{ = 0.5}} & 
\multicolumn{5}{c|}{$\rho_1$\textbf{ = 0.25}}  & 
\multicolumn{5}{c}{$\rho_1$\textbf{ = 0}} & 
\multicolumn{5}{c}{$\rho_1$\textbf{ = 0.5}}    \\

{\textbf{IMAGE}} &   \textbf{True} & \textbf{RP}$_{\rho}$ &    \textbf{RP} & \textbf{Nat13} & \textbf{Elk08} & \textbf{Liu16} & \textbf{RP}$_{\rho}$ &    \textbf{RP} & \textbf{Nat13} & \textbf{Elk08} & \textbf{Liu16} & \textbf{RP}$_{\rho}$ &    \textbf{RP} & \textbf{Nat13} & \textbf{Elk08} & \textbf{Liu16} & \textbf{RP}$_{\rho}$ &    \textbf{RP} & \textbf{Nat13} & \textbf{Elk08} & \textbf{Liu16} \\
\midrule
\textbf{plane}   & 0.044 &  \textcolor{red}{\textbf{0.054}} & \textbf{0.057} &  0.059 &  0.063 &  0.900 &  \textcolor{red}{\textbf{0.050}} & \textbf{0.051} &  0.054 &  0.057 &  0.900 &  \textcolor{red}{\textbf{0.048}} & \textbf{0.045} &  0.049 &  0.048 &  0.100 &  \textcolor{red}{\textbf{0.063}} & \textbf{0.061} &  0.074 &  0.065 &  0.900 \\
\textbf{auto} & 0.021 &  \textcolor{red}{\textbf{0.040}} & \textbf{0.037} &  0.041 &  0.043 &  0.100 &  \textcolor{red}{\textbf{0.032}} & \textbf{0.034} &  0.040 &  0.039 &  0.900 &  0.028 & \textbf{0.026} &  \textbf{0.026} &  \textbf{0.026} &  0.100 &  \textcolor{red}{\textbf{0.047}} & \textbf{0.049} &  0.062 &  0.070 &  0.900 \\
\textbf{bird}       & 0.055 &  0.083 & \textbf{0.078} &  0.080 &  0.082 &  0.900 &  \textcolor{red}{\textbf{0.074}} & \textbf{0.074} &  0.077 &  0.078 &  0.900 &  0.072 & \textbf{0.066} &  0.072 &  0.070 &  0.100 &  0.124 & \textbf{0.084} &  0.089 &  0.093 &  0.900 \\
\textbf{cat}        & 0.077 &  0.108 & \textbf{0.091} &  0.092 &  0.095 &  0.100 &  0.111 & 0.090 &  \textbf{0.086} &  0.089 &  0.900 &  0.113 & \textbf{0.084} &  0.086 &  0.088 &  0.100 &  0.117 & 0.098 &  \textbf{0.094} &  0.100 &  0.900 \\
\textbf{deer}       & 0.049 &  0.081 & \textbf{0.078} &  \textbf{0.078} &  0.079 &  0.900 &  0.080 & \textbf{0.069} &  0.075 &  0.070 &  0.900 &  0.076 & 0.062 &  \textbf{0.061} &  0.062 &  0.100 &  0.106 & \textbf{0.086} &  0.091 &  0.093 &  0.900 \\
\textbf{dog}        & 0.062 &  \textcolor{red}{\textbf{0.075}} & \textbf{0.071} &  0.079 &  0.080 &  0.100 &  0.071 & 0.069 &  0.070 &  \textbf{0.067} &  0.900 &  0.069 & 0.061 &  \textbf{0.057} &  0.076 &  0.100 &  0.103 & \textbf{0.081} &  0.084 &  0.086 &  0.900 \\
\textbf{frog}       & 0.038 &  0.050 & \textbf{0.048} &  \textbf{0.048} &  0.054 &  0.100 &  \textcolor{red}{\textbf{0.047}} & \textbf{0.052} &  0.056 &  0.062 &  0.900 &  0.045 & \textbf{0.040} &  0.042 &  0.043 &  0.100 &  \textcolor{red}{\textbf{0.058}} & \textbf{0.062} &  0.066 &  0.071 &  0.900 \\
\textbf{horse}      & 0.035 &  \textcolor{red}{\textbf{0.050}} & \textbf{0.052} &  0.057 &  0.054 &  0.900 &  \textcolor{red}{\textbf{0.048}} & \textbf{0.051} &  0.052 &  0.057 &  0.900 &  0.045 & \textbf{0.040} &  0.042 &  0.046 &  0.100 &  \textcolor{red}{\textbf{0.065}} & \textbf{0.063} &  0.066 &  0.075 &  0.900 \\
\textbf{ship}       & 0.028 &  \textcolor{red}{\textbf{0.042}} & \textbf{0.042} &  0.046 &  \textbf{0.042} &  0.900 &  \textcolor{red}{\textbf{0.037}} & \textbf{0.036} &  0.042 &  0.047 &  0.900 &  0.035 & 0.033 &  \textbf{0.031} &  0.033 &  0.100 &  \textcolor{red}{\textbf{0.051}} & \textbf{0.049} &  0.064 &  0.058 &  0.900 \\
\textbf{truck}      & 0.027 &  \textcolor{red}{\textbf{0.044}} & \textbf{0.046} &  0.054 &  0.056 &  0.900 &  \textcolor{red}{\textbf{0.034}} & \textbf{0.032} &  0.038 &  0.043 &  0.900 &  \textcolor{red}{\textbf{0.034}} & \textbf{0.031} &  0.034 &  0.034 &  0.100 &  \textcolor{red}{\textbf{0.060}} & 0.066 &  0.067 &  \textbf{0.065} &  0.900 \\
\midrule
\textbf{AVG}      & 0.043 &  \textcolor{red}{\textbf{0.063}} & \textbf{0.060} &  0.064 &  0.065 &  0.580 &  \textcolor{red}{\textbf{0.059}} & \textbf{0.056} &  0.059 &  0.061 &  0.900 &  0.056 & \textbf{0.049} &  0.050 &  0.053 &  0.100 &  0.080 & \textbf{0.070} &  0.076 &  0.077 &  0.900 \\

\bottomrule
\end{tabular}
}
\end{sc}
\end{small}
\end{center}
\vskip -0.08in
\end{supptable*}

\begin{supptable*}[ht!]

\setlength\tabcolsep{1pt} 
\renewcommand{\arraystretch}{0.85}
\caption{Comparison of \textbf{AUC-PR} for one-vs-rest CIFAR-10 (averaged over all images) using a \textbf{CNN} classifier. Except for $RP_\rho$, $\rho_1$, $\rho_0$ are given to all methods. Top model scores are in bold with $RP_\rho$ in red if greater than non-RP models.} 
\vskip -0.1in
\label{table:cifar_cnn_auc}
\begin{center}
\begin{small}
\begin{sc}

\resizebox{\textwidth}{!}{
\begin{tabular}{l|c|ccccc|ccccc|ccccc|ccccc}
\toprule

\multicolumn{1}{c}{} &  
\multicolumn{1}{c|}{} & 
\multicolumn{5}{c|}{$\pi_1$\textbf{ = 0}}   &   
\multicolumn{5}{c|}{$\pi_1$\textbf{ = 0.25}}  & 
\multicolumn{10}{c}{$\pi_1$\textbf{ = 0.5}}   \\

\multicolumn{1}{c}{} &  
\multicolumn{1}{c|}{} & 
\multicolumn{5}{c|}{$\rho_1$\textbf{ = 0.5}} & 
\multicolumn{5}{c|}{$\rho_1$\textbf{ = 0.25}}  & 
\multicolumn{5}{c}{$\rho_1$\textbf{ = 0}} & 
\multicolumn{5}{c}{$\rho_1$\textbf{ = 0.5}}    \\

{\textbf{IMAGE}} &   \textbf{True} & \textbf{RP}$_{\rho}$ &    \textbf{RP} & \textbf{Nat13} & \textbf{Elk08} & \textbf{Liu16} & \textbf{RP}$_{\rho}$ &    \textbf{RP} & \textbf{Nat13} & \textbf{Elk08} & \textbf{Liu16} & \textbf{RP}$_{\rho}$ &    \textbf{RP} & \textbf{Nat13} & \textbf{Elk08} & \textbf{Liu16} & \textbf{RP}$_{\rho}$ &    \textbf{RP} & \textbf{Nat13} & \textbf{Elk08} & \textbf{Liu16} \\
\midrule
\textbf{plane}   & 0.856 &  0.779 & 0.780 &  \textbf{0.784} &  0.756 &  0.550 &  \textcolor{red}{\textbf{0.808}} & \textbf{0.797} &  0.770 &  0.742 &  0.550 &  \textcolor{red}{\textbf{0.813}} & \textbf{0.824} &  0.792 &  0.794 &  0.550 &  \textcolor{red}{\textbf{0.710}} & \textbf{0.722} &  0.662 &  0.682 &  0.550 \\
\textbf{auto} & 0.954 &  0.874 & \textbf{0.889} &  0.878 &  0.833 &  0.550 &  \textcolor{red}{\textbf{0.905}} & \textbf{0.900} &  0.871 &  0.866 &  0.550 &  \textcolor{red}{\textbf{0.931}} & \textbf{0.927} &  0.924 &  0.910 &  0.550 &  \textcolor{red}{\textbf{0.824}} & \textbf{0.814} &  0.756 &  0.702 &  0.550 \\
\textbf{bird}       & 0.761 &  0.559 & 0.566 &  \textbf{0.569} &  0.568 &  0.550 &  \textcolor{red}{\textbf{0.619}} & \textbf{0.618} &  0.584 &  0.597 &  0.550 &  \textcolor{red}{\textbf{0.623}} & \textbf{0.679} &  0.613 &  0.619 &  0.115 &  0.465 & 0.492 &  0.436 &  0.434 &  \textbf{0.550} \\
\textbf{cat}        & 0.601 &  0.387 & 0.447 &  \textbf{0.463} &  0.433 &  0.550 &  0.423 & 0.454 &  0.487 &  0.480 &  \textbf{0.550} &  0.483 & \textbf{0.512} &  0.493 &  0.473 &  0.050 &  0.373 & 0.375 &  0.382 &  0.371 &  \textbf{0.550} \\
\textbf{deer}       & 0.820 &  \textcolor{red}{\textbf{0.620}} & 0.600 &  \textbf{0.615} &  0.573 &  0.550 &  0.646 & \textbf{0.660} &  0.610 &  0.657 &  0.550 &  0.658 & \textbf{0.707} &  0.700 &  0.703 &  0.550 &  0.434 & 0.487 &  0.414 &  0.435 &  \textbf{0.550} \\
\textbf{dog}        & 0.758 &  \textcolor{red}{\textbf{0.629}} & \textbf{0.662} &  0.617 &  0.573 &  0.550 &  \textcolor{red}{\textbf{0.673}} & \textbf{0.667} &  0.658 &  0.660 &  0.550 &  0.705 & 0.722 &  \textbf{0.741} &  0.705 &  0.550 &  0.541 & 0.545 &  0.496 &  0.519 &  \textbf{0.550} \\
\textbf{frog}       & 0.891 &  0.812 & \textbf{0.815} &  0.812 &  0.776 &  0.550 &  \textcolor{red}{\textbf{0.821}} & \textbf{0.827} &  0.808 &  0.749 &  0.550 &  \textcolor{red}{\textbf{0.841}} & \textbf{0.851} &  0.828 &  0.831 &  0.550 &  \textcolor{red}{\textbf{0.753}} & \textbf{0.710} &  0.691 &  0.620 &  0.550 \\
\textbf{horse}      & 0.897 &  \textcolor{red}{\textbf{0.810}} & \textbf{0.817} &  0.799 &  0.779 &  0.550 &  \textcolor{red}{\textbf{0.824}} & \textbf{0.809} &  0.801 &  0.772 &  0.550 &  \textcolor{red}{\textbf{0.826}} & \textbf{0.844} &  0.818 &  0.819 &  0.550 &  \textcolor{red}{\textbf{0.736}} & \textbf{0.699} &  \textbf{0.699} &  0.600 &  0.550 \\
\textbf{ship}       & 0.922 &  \textcolor{red}{\textbf{0.870}} & 0.862 &  \textbf{0.864} &  0.853 &  0.550 &  \textcolor{red}{\textbf{0.889}} & \textbf{0.885} &  0.843 &  0.848 &  0.550 &  0.889 & \textbf{0.897} &  0.891 &  0.887 &  0.550 &  \textcolor{red}{\textbf{0.800}} & \textbf{0.808} &  0.767 &  0.741 &  0.550 \\
\textbf{truck}      & 0.929 &  \textcolor{red}{\textbf{0.845}} & \textbf{0.848} &  0.824 &  0.787 &  0.550 &  \textcolor{red}{\textbf{0.887}} & \textbf{0.894} &  0.873 &  0.853 &  0.550 &  \textcolor{red}{\textbf{0.904}} & \textbf{0.902} &  0.898 &  0.883 &  0.550 &  \textcolor{red}{\textbf{0.740}} & \textbf{0.709} &  0.695 &  0.690 &  0.550 \\
\midrule
\textbf{AVG}      & 0.839 &  0.719 & \textbf{0.729} &  0.722 &  0.693 &  0.550 &  \textcolor{red}{\textbf{0.750}} & \textbf{0.751} &  0.730 &  0.722 &  0.550 &  0.767 & \textbf{0.787} &  0.770 &  0.762 &  0.457 &  \textcolor{red}{\textbf{0.637}} & \textbf{0.636} &  0.600 &  0.579 &  0.550 \\

\bottomrule
\end{tabular}
}
\end{sc}
\end{small}
\end{center}
\end{supptable*}

\begin{singlespace}
\bibliography{main}

\newcommand{\etalchar}[1]{$^{#1}$}
\begin{thebibliography}{vdOKE{\etalchar{+}}16}

\bibitem[AAMP05]{angelova2005pruning}
Anelia Angelova, Yaser Abu-Mostafam, and Pietro Perona.
\newblock Pruning training sets for learning of object categories.
\newblock In {\em CVPR}, volume~1, pages 494--501. IEEE, 2005.

\bibitem[ADG{\etalchar{+}}16]{andrychowicz2016learning}
Marcin Andrychowicz, Misha Denil, Sergio Gomez, Matthew~W Hoffman, David Pfau,
  Tom Schaul, Brendan Shillingford, and Nando De~Freitas.
\newblock Learning to learn by gradient descent by gradient descent.
\newblock In {\em Advances in Neural Information Processing Systems}, pages
  3981--3989, 2016.

\bibitem[AFDM16]{alemi2016deep}
Alexander~A Alemi, Ian Fischer, Joshua~V Dillon, and Kevin Murphy.
\newblock Deep variational information bottleneck.
\newblock {\em arXiv preprint arXiv:1612.00410}, 2016.

\bibitem[AG19]{amjad2019learning}
Rana~Ali Amjad and Bernhard~Claus Geiger.
\newblock Learning representations for neural network-based classification
  using the information bottleneck principle.
\newblock {\em IEEE Transactions on Pattern Analysis and Machine Intelligence},
  2019.

\bibitem[AGKN13]{anantharam2013maximal}
Venkat Anantharam, Amin Gohari, Sudeep Kamath, and Chandra Nair.
\newblock On maximal correlation, hypercontractivity, and the data processing
  inequality studied by erkip and cover.
\newblock {\em arXiv preprint arXiv:1304.6133}, 2013.

\bibitem[AKA91]{Aha1991}
David~W. Aha, Dennis Kibler, and Marc~K. Albert.
\newblock Instance-based learning algorithms.
\newblock {\em Mach. Learn.}, 6(1):37--66, 1991.

\bibitem[AL88]{angluin1988learning}
Dana Angluin and Philip Laird.
\newblock Learning from noisy examples.
\newblock {\em Machine Learning}, 2(4):343--370, 1988.

\bibitem[Alb91]{albus1991outline}
James~S Albus.
\newblock Outline for a theory of intelligence.
\newblock {\em IEEE Transactions on Systems, Man, and Cybernetics},
  21(3):473--509, 1991.

\bibitem[AMS04]{ancona2004radial}
Nicola Ancona, Daniele Marinazzo, and Sebastiano Stramaglia.
\newblock Radial basis function approach to nonlinear granger causality of time
  series.
\newblock {\em Physical Review E}, 70(5):056221, 2004.

\bibitem[AMS18]{achille2018dynamics}
Alessandro Achille, Glen Mbeng, and Stefano Soatto.
\newblock {The Dynamics of Differential Learning I: Information-Dynamics and
  Task Reachability}.
\newblock {\em arXiv preprint arXiv:1810.02440}, 2018.

\bibitem[Ana92]{anastasi1992counselors}
Anne Anastasi.
\newblock What counselors should know about the use and interpretation of
  psychological tests.
\newblock {\em Journal of Counseling \& Development}, 70(5):610--615, 1992.

\bibitem[AOS{\etalchar{+}}16]{amodei2016concrete}
Dario Amodei, Chris Olah, Jacob Steinhardt, Paul Christiano, John Schulman, and
  Dan Man{\'e}.
\newblock Concrete problems in ai safety.
\newblock {\em arXiv preprint arXiv:1606.06565}, 2016.

\bibitem[Ari72]{arimoto1972algorithm}
Suguru Arimoto.
\newblock An algorithm for computing the capacity of arbitrary discrete
  memoryless channels.
\newblock {\em IEEE Transactions on Information Theory}, 18(1):14--20, 1972.

\bibitem[AS18a]{achille2018emergence}
Alessandro Achille and Stefano Soatto.
\newblock Emergence of invariance and disentanglement in deep representations.
\newblock {\em The Journal of Machine Learning Research}, 19(1):1947--1980,
  2018.

\bibitem[AS18b]{achille2018information}
Alessandro Achille and Stefano Soatto.
\newblock Information dropout: Learning optimal representations through noisy
  computation.
\newblock {\em IEEE Transactions on Pattern Analysis and Machine Intelligence},
  2018.

\bibitem[BBC{\etalchar{+}}17]{boden2017principles}
Margaret Boden, Joanna Bryson, Darwin Caldwell, Kerstin Dautenhahn, Lilian
  Edwards, Sarah Kember, Paul Newman, Vivienne Parry, Geoff Pegman, Tom Rodden,
  et~al.
\newblock Principles of robotics: regulating robots in the real world.
\newblock {\em Connection Science}, 29(2):124--129, 2017.

\bibitem[BBCG92]{bengio1992optimization}
Samy Bengio, Yoshua Bengio, Jocelyn Cloutier, and Jan Gecsei.
\newblock On the optimization of a synaptic learning rule.
\newblock In {\em Preprints Conf. Optimality in Artificial and Biological
  Neural Networks}, pages 6--8. Univ. of Texas, 1992.

\bibitem[BCP{\etalchar{+}}16]{1606.01540}
Greg Brockman, Vicki Cheung, Ludwig Pettersson, Jonas Schneider, John Schulman,
  Jie Tang, and Wojciech Zaremba.
\newblock Openai gym, 2016.

\bibitem[BES01]{bradley2001reasoning}
Elizabeth Bradley, Matthew Easley, and Reinhard Stolle.
\newblock Reasoning about nonlinear system identification.
\newblock {\em Artificial Intelligence}, 133(1):139 -- 188, 2001.

\bibitem[BFP{\etalchar{+}}73]{Blum:1973:TBS:1739940.1740109}
Manuel Blum, Robert~W. Floyd, Vaughan Pratt, Ronald~L. Rivest, and Robert~E.
  Tarjan.
\newblock Time bounds for selection.
\newblock {\em J. Comput. Syst. Sci.}, 7(4):448--461, August 1973.

\bibitem[BGNR16]{baker2016designing}
Bowen Baker, Otkrist Gupta, Nikhil Naik, and Ramesh Raskar.
\newblock Designing neural network architectures using reinforcement learning.
\newblock {\em arXiv preprint arXiv:1611.02167}, 2016.

\bibitem[BHB{\etalchar{+}}18]{battaglia2018relational}
Peter~W Battaglia, Jessica~B Hamrick, Victor Bapst, Alvaro Sanchez-Gonzalez,
  Vinicius Zambaldi, Mateusz Malinowski, Andrea Tacchetti, David Raposo, Adam
  Santoro, Ryan Faulkner, et~al.
\newblock Relational inductive biases, deep learning, and graph networks.
\newblock {\em arXiv preprint arXiv:1806.01261}, 2018.

\bibitem[Bla72]{blahut1972computation}
Richard Blahut.
\newblock Computation of channel capacity and rate-distortion functions.
\newblock {\em IEEE transactions on Information Theory}, 18(4):460--473, 1972.

\bibitem[BLS10]{Blanchard:2010:SND:1756006.1953028}
Gilles Blanchard, Gyemin Lee, and Clayton Scott.
\newblock Semi-supervised novelty detection.
\newblock {\em J. Mach. Learn. Res.}, 11:2973--3009, December 2010.

\bibitem[BM98]{Blum:1998:CLU:279943.279962}
Avrim Blum and Tom Mitchell.
\newblock Combining labeled and unlabeled data with co-training.
\newblock In {\em 11th Conf. on COLT}, pages 92--100, New York, NY, USA, 1998.
  ACM.

\bibitem[BNVB13]{bellemare2013arcade}
Marc~G Bellemare, Yavar Naddaf, Joel Veness, and Michael Bowling.
\newblock The arcade learning environment: An evaluation platform for general
  agents.
\newblock {\em Journal of Artificial Intelligence Research}, 47:253--279, 2013.

\bibitem[BO18]{blierdescription}
L{\'e}onard Blier and Yann Ollivier.
\newblock The description length of deep learning models.
\newblock 2018.

\bibitem[BPL{\etalchar{+}}16]{battaglia2016interaction}
Peter Battaglia, Razvan Pascanu, Matthew Lai, Danilo Jimenez~Rezende, and koray
  kavukcuoglu.
\newblock Interaction networks for learning about objects, relations and
  physics.
\newblock In D.~D. Lee, M.~Sugiyama, U.~V. Luxburg, I.~Guyon, and R.~Garnett,
  editors, {\em Advances in Neural Information Processing Systems 29}, pages
  4502--4510. Curran Associates, Inc., 2016.

\bibitem[BRB{\etalchar{+}}18]{belghazi2018mine}
Ishmael Belghazi, Sai Rajeswar, Aristide Baratin, R~Devon Hjelm, and Aaron
  Courville.
\newblock Mine: mutual information neural estimation.
\newblock {\em arXiv preprint arXiv:1801.04062}, 2018.

\bibitem[Bre96]{Breiman:1996:BP:231986.231989}
Leo Breiman.
\newblock Bagging predictors.
\newblock {\em Machine Learning}, 24(2):123--140, August 1996.

\bibitem[BRLW17]{bera2017generalized}
Manabendra~N Bera, Arnau Riera, Maciej Lewenstein, and Andreas Winter.
\newblock Generalized laws of thermodynamics in the presence of correlations.
\newblock {\em Nature communications}, 8(1):2180, 2017.

\bibitem[BSW14]{baldi2014searching}
Pierre Baldi, Peter Sadowski, and Daniel Whiteson.
\newblock Searching for exotic particles in high-energy physics with deep
  learning.
\newblock {\em Nature communications}, 5:4308, 2014.

\bibitem[BSXT18]{bramley2018learning}
Neil Bramley, Eric Schulz, Fei Xu, and Joshua Tenenbaum.
\newblock Learning as program induction.
\newblock 2018.

\bibitem[{\v{C}}er85]{vcerny1985thermodynamical}
Vladim{\'\i}r {\v{C}}ern{\`y}.
\newblock Thermodynamical approach to the traveling salesman problem: An
  efficient simulation algorithm.
\newblock {\em Journal of optimization theory and applications}, 45(1):41--51,
  1985.

\bibitem[CGTW05]{chechik2005information}
Gal Chechik, Amir Globerson, Naftali Tishby, and Yair Weiss.
\newblock Information bottleneck for gaussian variables.
\newblock {\em Journal of machine learning research}, 6(Jan):165--188, 2005.

\bibitem[Chi02]{chickering2002optimal}
David~Maxwell Chickering.
\newblock Optimal structure identification with greedy search.
\newblock {\em Journal of machine learning research}, 3(Nov):507--554, 2002.

\bibitem[Cho16a]{cifar_cnn_structure}
Francois Chollet.
\newblock {\em Keras CIFAR CNN}, 2016.
\newblock \href{http://bit.ly/2mVKR3d}{bit.ly/2mVKR3d}.

\bibitem[Cho16b]{mnist_cnn_structure}
Francois Chollet.
\newblock {\em Keras MNIST CNN}, 2016.
\newblock \href{http://bit.ly/2nKiqJv}{bit.ly/2nKiqJv}.

\bibitem[Cho19]{chollet2019measure}
Fran{ç}ois Chollet.
\newblock On the measure of intelligence.
\newblock {\em arXiv preprint arXiv:11911.01547}, 2019.

\bibitem[CJM{\etalchar{+}}19]{cubero2019statistical}
Ryan~John Cubero, Junghyo Jo, Matteo Marsili, Yasser Roudi, and Juyong Song.
\newblock Statistical criticality arises in most informative representations.
\newblock {\em Journal of Statistical Mechanics: Theory and Experiment},
  2019(6):063402, 2019.

\bibitem[CJS{\etalchar{+}}18]{capper2018dna}
David Capper, David~TW Jones, Martin Sill, Volker Hovestadt, Daniel Schrimpf,
  Dominik Sturm, Christian Koelsche, Felix Sahm, Lukas Chavez, David~E Reuss,
  et~al.
\newblock Dna methylation-based classification of central nervous system
  tumours.
\newblock {\em Nature}, 555(7697):469, 2018.

\bibitem[CLB19]{clark2019unsupervised}
David~G Clark, Jesse~A Livezey, and Kristofer~E Bouchard.
\newblock Unsupervised discovery of temporal structure in noisy data with
  dynamical components analysis.
\newblock {\em arXiv preprint arXiv:1905.09944}, 2019.

\bibitem[CLRS09]{cormen2009introduction}
Thomas~H Cormen, Charles~E Leiserson, Ronald~L Rivest, and Clifford Stein.
\newblock {\em Introduction to algorithms}.
\newblock MIT press, 2009.

\bibitem[CM17]{carrasquilla2017machine}
Juan Carrasquilla and Roger~G Melko.
\newblock Machine learning phases of matter.
\newblock {\em Nature Physics}, 13(5):431, 2017.

\bibitem[CMT16]{chalk2016relevant}
Matthew Chalk, Olivier Marre, and Gasper Tkacik.
\newblock Relevant sparse codes with variational information bottleneck.
\newblock In {\em Advances in Neural Information Processing Systems}, pages
  1957--1965, 2016.

\bibitem[Col15]{colman2015dictionary}
Andrew~M Colman.
\newblock {\em A dictionary of psychology}.
\newblock Oxford Quick Reference, 2015.

\bibitem[CRBD18]{chen2018neural}
Tian~Qi Chen, Yulia Rubanova, Jesse Bettencourt, and David~K Duvenaud.
\newblock Neural ordinary differential equations.
\newblock In {\em Advances in neural information processing systems}, pages
  6571--6583, 2018.

\bibitem[CSSM15]{Claesen201573}
Marc Claesen, Frank~De Smet, Johan~A.K. Suykens, and Bart~De Moor.
\newblock A robust ensemble approach to learn from positive and unlabeled data
  using \{SVM\} base models.
\newblock {\em Neurocomputing}, 160:73 -- 84, 2015.

\bibitem[CTMA19]{carrasquilla2019reconstructing}
Juan Carrasquilla, Giacomo Torlai, Roger~G Melko, and Leandro Aolita.
\newblock Reconstructing quantum states with generative models.
\newblock {\em Nature Machine Intelligence}, 1(3):155, 2019.

\bibitem[CUTT16]{chang2016compositional}
Michael~B Chang, Tomer Ullman, Antonio Torralba, and Joshua~B Tenenbaum.
\newblock A compositional object-based approach to learning physical dynamics.
\newblock {\em arXiv preprint arXiv:1612.00341}, 2016.

\bibitem[CV99]{Chapelle:1999:MSS:3009657.3009690}
Olivier Chapelle and Vladimir Vapnik.
\newblock Model selection for support vector machines.
\newblock In {\em Proc. of 12th NIPS}, pages 230--236, Cambridge, MA, USA,
  1999.

\bibitem[CZM{\etalchar{+}}18]{autoaugment}
Ekin~D Cubuk, Barret Zoph, Dandelion Mane, Vijay Vasudevan, and Quoc~V Le.
\newblock Autoaugment: Learning augmentation policies from data.
\newblock {\em arXiv preprint arXiv:1805.09501}, 2018.

\bibitem[DCB06]{ding2006granger}
Mingzhou Ding, Yonghong Chen, and Steven~L Bressler.
\newblock Granger causality: basic theory and application to neuroscience.
\newblock {\em Handbook of time series analysis: recent theoretical
  developments and applications}, pages 437--460, 2006.

\bibitem[DCL08]{diuk2008object}
Carlos Diuk, Andre Cohen, and Michael~L Littman.
\newblock An object-oriented representation for efficient reinforcement
  learning.
\newblock In {\em Proceedings of the 25th international conference on Machine
  learning}, pages 240--247. ACM, 2008.

\bibitem[DG06a]{davis2006relationship}
Jesse Davis and Mark Goadrich.
\newblock The relationship between precision-recall and roc curves.
\newblock In {\em Proceedings of the 23rd international conference on Machine
  learning}, pages 233--240. ACM, 2006.

\bibitem[DG06b]{Davis:2006:RPR:1143844.1143874}
Jesse Davis and Mark Goadrich.
\newblock The relationship between precision-recall and roc curves.
\newblock In {\em Proc. of 23rd ICML}, pages 233--240, NYC, NY, USA, 2006. ACM.

\bibitem[dHJL19]{de2019causal}
Pim de~Haan, Dinesh Jayaraman, and Sergey Levine.
\newblock Causal confusion in imitation learning.
\newblock {\em arXiv preprint arXiv:1905.11979}, 2019.

\bibitem[DKPR87]{DUANE1987216}
Simon Duane, A.D. Kennedy, Brian~J. Pendleton, and Duncan Roweth.
\newblock Hybrid monte carlo.
\newblock {\em Physics Letters B}, 195(2):216 -- 222, 1987.

\bibitem[DLT07]{dvzeroski2007computational}
Sa{\v{s}}o D{\v{z}}eroski, Pat Langley, and Ljup{\v{c}}o Todorovski.
\newblock {\em Computational Discovery of Scientific Knowledge}, pages 1--14.
\newblock Springer Berlin Heidelberg, Berlin, Heidelberg, 2007.

\bibitem[DMAT13]{dechter2013bootstrap}
Eyal Dechter, Jonathan Malmaud, Ryan~P Adams, and Joshua~B Tenenbaum.
\newblock Bootstrap learning via modular concept discovery.
\newblock In {\em IJCAI}, pages 1302--1309, 2013.

\bibitem[DN15]{daniels2015automated}
Bryan~C Daniels and Ilya Nemenman.
\newblock Automated adaptive inference of phenomenological dynamical models.
\newblock {\em Nature communications}, 6:8133, 2015.

\bibitem[DSC{\etalchar{+}}16]{duan2016rl}
Yan Duan, John Schulman, Xi~Chen, Peter~L Bartlett, Ilya Sutskever, and Pieter
  Abbeel.
\newblock Rl2: Fast reinforcement learning via slow reinforcement learning.
\newblock {\em arXiv preprint arXiv:1611.02779}, 2016.

\bibitem[DT95]{dzeroski1995discovering}
Saso Dzeroski and Ljupco Todorovski.
\newblock Discovering dynamics: From inductive logic programming to machine
  discovery.
\newblock {\em Journal of Intelligent Information Systems}, 4(1):89--108, Jan
  1995.

\bibitem[DUB{\etalchar{+}}17]{devlin2017robustfill}
Jacob Devlin, Jonathan Uesato, Surya Bhupatiraju, Rishabh Singh, Abdel-rahman
  Mohamed, and Pushmeet Kohli.
\newblock Robustfill: Neural program learning under noisy i/o.
\newblock {\em arXiv preprint arXiv:1703.07469}, 2017.

\bibitem[EC98]{erkip1998efficiency}
Elza Erkip and Thomas~M Cover.
\newblock The efficiency of investment information.
\newblock {\em IEEE Transactions on Information Theory}, 44(3):1026--1040,
  1998.

\bibitem[EHJ{\etalchar{+}}04]{efron2004least}
Bradley Efron, Trevor Hastie, Iain Johnstone, Robert Tibshirani, et~al.
\newblock Least angle regression.
\newblock {\em The Annals of statistics}, 32(2):407--499, 2004.

\bibitem[EN08]{Elkan:2008:LCO:1401890.1401920}
Charles Elkan and Keith Noto.
\newblock Learning classifiers from only positive and unlabeled data.
\newblock In {\em Proc. of 14th KDD}, pages 213--220, NYC, NY, USA, 2008. ACM.

\bibitem[ES16]{edwards2016towards}
Harrison Edwards and Amos Storkey.
\newblock Towards a neural statistician.
\newblock {\em arXiv preprint arXiv:1606.02185}, 2016.

\bibitem[ESLT15]{ellis2015unsupervised}
Kevin Ellis, Armando Solar-Lezama, and Josh Tenenbaum.
\newblock Unsupervised learning by program synthesis.
\newblock In C.~Cortes, N.~D. Lawrence, D.~D. Lee, M.~Sugiyama, and R.~Garnett,
  editors, {\em Advances in Neural Information Processing Systems 28}, pages
  973--981. Curran Associates, Inc., 2015.

\bibitem[EY36]{eckart1936SVD}
Carl Eckart and Gale Young.
\newblock The approximation of one matrix by another of lower rank.
\newblock {\em Psychometrika}, 1(3):211--218, 1936.

\bibitem[FAL17]{pmlr-v70-finn17a}
Chelsea Finn, Pieter Abbeel, and Sergey Levine.
\newblock Model-agnostic meta-learning for fast adaptation of deep networks.
\newblock In Doina Precup and Yee~Whye Teh, editors, {\em Proceedings of the
  34th International Conference on Machine Learning}, volume~70 of {\em
  Proceedings of Machine Learning Research}, pages 1126--1135, International
  Convention Centre, Sydney, Australia, 06--11 Aug 2017. PMLR.

\bibitem[Fis18]{fischer2018the}
Ian Fischer.
\newblock The conditional entropy bottleneck, 2018.

\bibitem[FKPW17]{NIPS2017_69510}
Marco Fraccaro, Simon Kamronn, Ulrich Paquet, and Ole Winther.
\newblock A disentangled recognition and nonlinear dynamics model for
  unsupervised learning.
\newblock In I.~Guyon, U.~V. Luxburg, S.~Bengio, H.~Wallach, R.~Fergus,
  S.~Vishwanathan, and R.~Garnett, editors, {\em Advances in Neural Information
  Processing Systems 30}, pages 3601--3610. Curran Associates, Inc., 2017.

\bibitem[FS97]{freund1997decision}
Yoav Freund and Robert~E Schapire.
\newblock A decision-theoretic generalization of on-line learning and an
  application to boosting.
\newblock {\em Journal of computer and system sciences}, 55(1):119--139, 1997.

\bibitem[F{\"u}r99]{furnkranz1999separate}
Johannes F{\"u}rnkranz.
\newblock Separate-and-conquer rule learning.
\newblock {\em Artificial Intelligence Review}, 13(1):3--54, 1999.

\bibitem[Geb41]{gebelein1941statistische}
Hans Gebelein.
\newblock Das statistische problem der korrelation als variations-und
  eigenwertproblem und sein zusammenhang mit der ausgleichsrechnung.
\newblock {\em ZAMM-Journal of Applied Mathematics and Mechanics/Zeitschrift
  f{\"u}r Angewandte Mathematik und Mechanik}, 21(6):364--379, 1941.

\bibitem[GHY00]{granger2000bivariate}
Clive~WJ Granger, Bwo-Nung Huangb, and Chin-Wei Yang.
\newblock A bivariate causality between stock prices and exchange rates:
  evidence from recent asianflu.
\newblock {\em The Quarterly Review of Economics and Finance}, 40(3):337--354,
  2000.

\bibitem[GIS{\etalchar{+}}19]{goyal2019infobot}
Anirudh Goyal, Riashat Islam, Daniel Strouse, Zafarali Ahmed, Matthew
  Botvinick, Hugo Larochelle, Sergey Levine, and Yoshua Bengio.
\newblock Infobot: Transfer and exploration via the information bottleneck.
\newblock {\em arXiv preprint arXiv:1901.10902}, 2019.

\bibitem[GMP05]{grunwald2005advances}
Peter~D Gr{\"u}nwald, In~Jae Myung, and Mark~A Pitt.
\newblock {\em Advances in minimum description length: Theory and
  applications}.
\newblock MIT press, 2005.

\bibitem[GPAM{\etalchar{+}}14]{goodfellow2014generative}
Ian Goodfellow, Jean Pouget-Abadie, Mehdi Mirza, Bing Xu, David Warde-Farley,
  Sherjil Ozair, Aaron Courville, and Yoshua Bengio.
\newblock Generative adversarial nets.
\newblock In {\em Advances in neural information processing systems}, pages
  2672--2680, 2014.

\bibitem[Gra69]{granger1969investigating}
Clive~WJ Granger.
\newblock Investigating causal relations by econometric models and
  cross-spectral methods.
\newblock {\em Econometrica: Journal of the Econometric Society}, pages
  424--438, 1969.

\bibitem[Gra80]{granger1980testing}
Clive~WJ Granger.
\newblock Testing for causality: a personal viewpoint.
\newblock {\em Journal of Economic Dynamics and control}, 2:329--352, 1980.

\bibitem[GS{\etalchar{+}}00]{gelfand2000calculus}
Izrail~Moiseevitch Gelfand, Richard~A Silverman, et~al.
\newblock {\em Calculus of variations}.
\newblock Courier Corporation, 2000.

\bibitem[GSD{\etalchar{+}}18]{grace2018will}
Katja Grace, John Salvatier, Allan Dafoe, Baobao Zhang, and Owain Evans.
\newblock When will ai exceed human performance? evidence from ai experts.
\newblock {\em Journal of Artificial Intelligence Research}, 62:729--754, 2018.

\bibitem[GSR{\etalchar{+}}17a]{ghosh2017divide}
Dibya Ghosh, Avi Singh, Aravind Rajeswaran, Vikash Kumar, and Sergey Levine.
\newblock Divide-and-conquer reinforcement learning.
\newblock {\em arXiv preprint arXiv:1711.09874}, 2017.

\bibitem[GSR{\etalchar{+}}17b]{gilmer2017neural}
Justin Gilmer, Samuel~S Schoenholz, Patrick~F Riley, Oriol Vinyals, and
  George~E Dahl.
\newblock Neural message passing for quantum chemistry.
\newblock {\em arXiv preprint arXiv:1704.01212}, 2017.

\bibitem[GV04]{grunwald2004shannon}
Peter Grunwald and Paul Vit{\'a}nyi.
\newblock Shannon information and kolmogorov complexity.
\newblock {\em arXiv preprint cs/0410002}, 2004.

\bibitem[GVW{\etalchar{+}}16]{guttenberg2016permutation}
Nicholas Guttenberg, Nathaniel Virgo, Olaf Witkowski, Hidetoshi Aoki, and Ryota
  Kanai.
\newblock Permutation-equivariant neural networks applied to dynamics
  prediction.
\newblock {\em arXiv preprint arXiv:1612.04530}, 2016.

\bibitem[GWD14]{graves2014neural}
Alex Graves, Greg Wayne, and Ivo Danihelka.
\newblock Neural turing machines.
\newblock {\em arXiv preprint arXiv:1410.5401}, 2014.

\bibitem[GZ87]{gregory1987oxford}
Richard~L Gregory and Oliver~Louis Zangwill.
\newblock {\em The Oxford companion to the mind.}
\newblock Oxford university press, 1987.

\bibitem[HB12]{hauser2012characterization}
Alain Hauser and Peter B{\"u}hlmann.
\newblock Characterization and greedy learning of interventional markov
  equivalence classes of directed acyclic graphs.
\newblock {\em Journal of Machine Learning Research}, 13(Aug):2409--2464, 2012.

\bibitem[HD13]{harris2013pc}
Naftali Harris and Mathias Drton.
\newblock Pc algorithm for nonparanormal graphical models.
\newblock {\em The Journal of Machine Learning Research}, 14(1):3365--3383,
  2013.

\bibitem[HFW08]{Hempstalk:2008:oneclass}
Kathryn Hempstalk, Eibe Frank, and Ian~H. Witten.
\newblock One-class classification by combining density and class probability
  estimation.
\newblock In {\em Proc. of ECML-PKDD}, pages 505--519, Berlin, Heidelberg,
  2008. Springer-Verlag.

\bibitem[Hir35]{hirschfeld1935connection}
Hermann~O Hirschfeld.
\newblock A connection between correlation and contingency.
\newblock In {\em Mathematical Proceedings of the Cambridge Philosophical
  Society}, volume~31, pages 520--524. Cambridge University Press, 1935.

\bibitem[HJ94]{hiemstra1994testing}
Craig Hiemstra and Jonathan~D Jones.
\newblock Testing for linear and nonlinear granger causality in the stock
  price-volume relation.
\newblock {\em The Journal of Finance}, 49(5):1639--1664, 1994.

\bibitem[HMD15]{han2015deep}
Song Han, Huizi Mao, and William~J Dally.
\newblock Deep compression: Compressing deep neural networks with pruning,
  trained quantization and huffman coding.
\newblock {\em arXiv preprint arXiv:1510.00149}, 2015.

\bibitem[HMP{\etalchar{+}}17]{betavae}
Irina Higgins, Loic Matthey, Arka Pal, Christopher Burgess, Xavier Glorot,
  Matthew Botvinick, Shakir Mohamed, and Alexander Lerchner.
\newblock beta-vae: Learning basic visual concepts with a constrained
  variational framework.
\newblock In {\em International Conference on Learning Representations}, 2017.

\bibitem[Hor91]{hornik1991approximation}
Kurt Hornik.
\newblock Approximation capabilities of multilayer feedforward networks.
\newblock {\em Neural networks}, 4(2):251--257, 1991.

\bibitem[Hos17]{hoshen2017vain}
Yedid Hoshen.
\newblock Vain: Attentional multi-agent predictive modeling.
\newblock In {\em Advances in Neural Information Processing Systems}, pages
  2701--2711, 2017.

\bibitem[Hot36]{hotellingCCA1936}
Harold Hotelling.
\newblock Relation between two sets of variates.
\newblock {\em Biometrica}, 28(3-4):321--377, 1936.

\bibitem[Hot92]{hotelling1992relations}
Harold Hotelling.
\newblock Relations between two sets of variates.
\newblock In {\em Breakthroughs in statistics}, pages 162--190. Springer, 1992.

\bibitem[HPTD15]{han2015learning}
Song Han, Jeff Pool, John Tran, and William Dally.
\newblock Learning both weights and connections for efficient neural network.
\newblock In {\em Advances in neural information processing systems}, pages
  1135--1143, 2015.

\bibitem[HS93]{hassibi1993second}
Babak Hassibi and David~G. Stork.
\newblock Second order derivatives for network pruning: Optimal brain surgeon.
\newblock In S.~J. Hanson, J.~D. Cowan, and C.~L. Giles, editors, {\em Advances
  in Neural Information Processing Systems 5}, pages 164--171. Morgan-Kaufmann,
  1993.

\bibitem[Hut00]{hutter2000theory}
Marcus Hutter.
\newblock A theory of universal artificial intelligence based on algorithmic
  complexity.
\newblock {\em arXiv preprint cs/0004001}, 2000.

\bibitem[HvC93]{Hinton:1993:KNN:168304.168306}
Geoffrey~E. Hinton and Drew van Camp.
\newblock Keeping the neural networks simple by minimizing the description
  length of the weights.
\newblock In {\em Proceedings of the Sixth Annual Conference on Computational
  Learning Theory}, COLT '93, pages 5--13, New York, NY, USA, 1993. ACM.

\bibitem[HZRS16a]{he2016deep}
Kaiming He, Xiangyu Zhang, Shaoqing Ren, and Jian Sun.
\newblock Deep residual learning for image recognition.
\newblock In {\em Proceedings of the IEEE conference on computer vision and
  pattern recognition}, pages 770--778, 2016.

\bibitem[HZRS16b]{resnet}
Kaiming He, Xiangyu Zhang, Shaoqing Ren, and Jian Sun.
\newblock {Deep Residual Learning for Image Recognition}.
\newblock In {\em The IEEE Conference on Computer Vision and Pattern
  Recognition (CVPR)}, June 2016.

\bibitem[IMW{\etalchar{+}}18]{iten2018discovering}
Raban Iten, Tony Metger, Henrik Wilming, L{\'\i}dia Del~Rio, and Renato Renner.
\newblock Discovering physical concepts with neural networks.
\newblock {\em arXiv preprint arXiv:1807.10300}, 2018.

\bibitem[JBGW{\etalchar{+}}13]{janzing2013quantifying}
Dominik Janzing, David Balduzzi, Moritz Grosse-Wentrup, Bernhard Sch{\"o}lkopf,
  et~al.
\newblock Quantifying causal influences.
\newblock {\em The Annals of Statistics}, 41(5):2324--2358, 2013.

\bibitem[JKL{\etalchar{+}}17]{januszewski2017high}
Micha{$\l$} Januszewski, J{\"o}rgen Kornfeld, Peter~H Li, Art Pope, Tim
  Blakely, Larry Lindsey, Jeremy~B Maitin-Shepard, Mike Tyka, Winfried Denk,
  and Viren Jain.
\newblock High-precision automated reconstruction of neurons with flood-filling
  networks.
\newblock {\em bioRxiv}, page 200675, 2017.

\bibitem[KB14]{kingma2014adam}
Diederik~P Kingma and Jimmy Ba.
\newblock Adam: A method for stochastic optimization.
\newblock {\em arXiv preprint arXiv:1412.6980}, 2014.

\bibitem[KB15]{adam}
Diederik Kingma and Jimmy Ba.
\newblock Adam: A method for stochastic optimization.
\newblock In {\em International Conference on Learning Representations}, 2015.

\bibitem[KDV16]{krakovna2016increasing}
Viktoriya Krakovna and Finale Doshi-Velez.
\newblock Increasing the interpretability of recurrent neural networks using
  hidden markov models.
\newblock {\em arXiv preprint arXiv:1606.05320}, 2016.

\bibitem[KdW05]{kim2005adaptive}
Il~Yong Kim and Oliver~L de~Weck.
\newblock Adaptive weighted-sum method for bi-objective optimization: Pareto
  front generation.
\newblock {\em Structural and multidisciplinary optimization}, 29(2):149--158,
  2005.

\bibitem[KGK{\etalchar{+}}17]{kim2017discovering}
Hyeji Kim, Weihao Gao, Sreeram Kannan, Sewoong Oh, and Pramod Viswanath.
\newblock Discovering potential correlations via hypercontractivity.
\newblock In {\em Advances in Neural Information Processing Systems}, pages
  4577--4587, 2017.

\bibitem[KGV83]{kirkpatrick1983optimization}
Scott Kirkpatrick, C~Daniel Gelatt, and Mario~P Vecchi.
\newblock Optimization by simulated annealing.
\newblock {\em science}, 220(4598):671--680, 1983.

\bibitem[KH09]{cifar}
Alex Krizhevsky and Geoffrey Hinton.
\newblock Learning multiple layers of features from tiny images.
\newblock Technical report, Citeseer, 2009.

\bibitem[KNH]{cifar10}
Alex Krizhevsky, Vinod Nair, and Geoffrey Hinton.
\newblock Cifar-10 (canadian institute for advanced research).

\bibitem[KNH14]{krizhevsky2014cifar}
Alex Krizhevsky, Vinod Nair, and Geoffrey Hinton.
\newblock The cifar-10 dataset.
\newblock {\em online: http://www. cs. toronto. edu/kriz/cifar. html}, 55,
  2014.

\bibitem[Kol63]{kolmogorov1963tables}
Andrei~N Kolmogorov.
\newblock On tables of random numbers.
\newblock {\em Sankhy{\=a}: The Indian Journal of Statistics, Series A}, pages
  369--376, 1963.

\bibitem[KPR{\etalchar{+}}17]{kirkpatrick2017overcoming}
James Kirkpatrick, Razvan Pascanu, Neil Rabinowitz, Joel Veness, Guillaume
  Desjardins, Andrei~A. Rusu, Kieran Milan, John Quan, Tiago Ramalho, Agnieszka
  Grabska-Barwinska, Demis Hassabis, Claudia Clopath, Dharshan Kumaran, and
  Raia Hadsell.
\newblock Overcoming catastrophic forgetting in neural networks.
\newblock {\em Proceedings of the National Academy of Sciences},
  114(13):3521--3526, 2017.

\bibitem[KS94]{Kirkpatrick1297}
Scott Kirkpatrick and Bart Selman.
\newblock Critical behavior in the satisfiability of random boolean
  expressions.
\newblock {\em Science}, 264(5163):1297--1301, 1994.

\bibitem[KSG04]{kraskov2004estimating}
Alexander Kraskov, Harald St{\"o}gbauer, and Peter Grassberger.
\newblock Estimating mutual information.
\newblock {\em Physical review E}, 69(6):066138, 2004.

\bibitem[KTVK19]{kolchinsky2018caveats}
Artemy Kolchinsky, Brendan~D Tracey, and Steven Van~Kuyk.
\newblock Caveats for information bottleneck in deterministic scenarios.
\newblock {\em {ICLR}}, 2019.

\bibitem[Kur00]{kurzweil2000age}
Ray Kurzweil.
\newblock {\em The age of spiritual machines: When computers exceed human
  intelligence}.
\newblock Penguin, 2000.

\bibitem[KW13]{kingma2013auto}
Diederik~P Kingma and Max Welling.
\newblock Auto-encoding variational bayes.
\newblock {\em arXiv preprint arXiv:1312.6114}, 2013.

\bibitem[KY14]{kurkoski2014quantization}
Brian~M Kurkoski and Hideki Yagi.
\newblock Quantization of binary-input discrete memoryless channels.
\newblock {\em IEEE Transactions on Information Theory}, 60(8):4544--4552,
  2014.

\bibitem[KZS15]{koch2015siamese}
Gregory Koch, Richard Zemel, and Ruslan Salakhutdinov.
\newblock Siamese neural networks for one-shot image recognition.
\newblock In {\em ICML Deep Learning Workshop}, volume~2, 2015.

\bibitem[LA15]{langley2015heuristic}
Pat Langley and Adam Arvay.
\newblock Heuristic induction of rate-based process models.
\newblock In {\em Proceedings of the Twenty-Ninth AAAI Conference on Artificial
  Intelligence}, 2015.

\bibitem[LALR09]{lozano2009grouped}
Aur{\'e}lie~C Lozano, Naoki Abe, Yan Liu, and Saharon Rosset.
\newblock Grouped graphical granger modeling for gene expression regulatory
  networks discovery.
\newblock {\em Bioinformatics}, 25(12):i110--i118, 2009.

\bibitem[Lan81]{langley1981data}
Pat Langley.
\newblock Data-driven discovery of physical laws.
\newblock {\em Cognitive Science}, 5(1):31 -- 54, 1981.

\bibitem[LBBH98]{mnist}
Yann LeCun, L{\'e}on Bottou, Yoshua Bengio, and Patrick Haffner.
\newblock Gradient-based learning applied to document recognition.
\newblock {\em Proceedings of the IEEE}, 86(11):2278--2324, 1998.

\bibitem[LBH15]{lecun2015deep}
Yann LeCun, Yoshua Bengio, and Geoffrey Hinton.
\newblock Deep learning.
\newblock {\em Nature}, 521(7553):436, 2015.

\bibitem[LC10]{lecun-mnisthandwrittendigit-2010}
Yann LeCun and Corinna Cortes.
\newblock {MNIST} handwritten digit database.
\newblock 2010.

\bibitem[LCB10]{lecun2010mnist}
Yann LeCun, Corinna Cortes, and CJ~Burges.
\newblock Mnist handwritten digit database.
\newblock {\em AT\&T Labs [Online]. Available: http://yann. lecun.
  com/exdb/mnist}, 2:18, 2010.

\bibitem[LCH{\etalchar{+}}06]{lecun2006tutorial}
Yann LeCun, Sumit Chopra, Raia Hadsell, M~Ranzato, and F~Huang.
\newblock A tutorial on energy-based learning.
\newblock {\em Predicting structured data}, 1(0), 2006.

\bibitem[LD94]{lavrac1994inductive}
Nada Lavrac and Saso Dzeroski.
\newblock Inductive logic programming.
\newblock In {\em WLP}, pages 146--160. Springer, 1994.

\bibitem[LDL{\etalchar{+}}03]{Liu:2003:BTC:951949.952139}
Bing Liu, Yang Dai, Xiaoli Li, Wee~Sun Lee, and Philip~S. Yu.
\newblock Building text classifiers using positive and unlabeled examples.
\newblock In {\em Proc. of 3rd ICDM}, pages 179--, Washington, DC, USA, 2003.
  IEEE Computer Society.

\bibitem[LF92]{lenat1992thresholds}
D~Lenat and E~Feigenbaum.
\newblock On the thresholds of knowledge.
\newblock {\em Foundations of Artificial Intelligence, MIT Press, Cambridge,
  MA}, pages 185--250, 1992.

\bibitem[LGBS03]{langley2003robust}
Pat Langley, Dileep George, Stephen Bay, and Kazumi Saito.
\newblock Robust induction of process models from time-series data.
\newblock In {\em Proceedings of the Twentieth International Conference on
  International Conference on Machine Learning}, pages 432--439. AAAI Press,
  2003.

\bibitem[LH07a]{legg2007universal}
Shane Legg and Marcus Hutter.
\newblock Universal intelligence: A definition of machine intelligence.
\newblock {\em Minds and machines}, 17(4):391--444, 2007.

\bibitem[LH{\etalchar{+}}07b]{legg2007collection}
Shane Legg, Marcus Hutter, et~al.
\newblock A collection of definitions of intelligence.
\newblock {\em Frontiers in Artificial Intelligence and applications}, 157:17,
  2007.

\bibitem[LH18]{li2017learning}
Z.~{Li} and D.~{Hoiem}.
\newblock Learning without forgetting.
\newblock {\em IEEE Transactions on Pattern Analysis and Machine Intelligence},
  40(12):2935--2947, Dec 2018.

\bibitem[LJK10]{liang2010learning}
Percy Liang, Michael~I Jordan, and Dan Klein.
\newblock Learning programs: A hierarchical bayesian approach.
\newblock In {\em Proceedings of the 27th International Conference on Machine
  Learning (ICML-10)}, pages 639--646, 2010.

\bibitem[LK18]{lam2018machine}
Christopher Lam and David Kipping.
\newblock A machine learns to predict the stability of circumbinary planets.
\newblock {\em Monthly Notices of the Royal Astronomical Society},
  476(4):5692--5697, 2018.

\bibitem[LL03]{lee2003PUlearning_weightedlogreg}
{Wee Sun} Lee and Bing Liu.
\newblock Learning with positive and unlabeled examples using weighted logistic
  regression.
\newblock In {\em Proc. of 20th ICML}, volume~1, pages 448--455, 12 2003.

\bibitem[LML{\etalchar{+}}10]{lane2010survey}
Nicholas~D Lane, Emiliano Miluzzo, Hong Lu, Daniel Peebles, Tanzeem Choudhury,
  and Andrew~T Campbell.
\newblock A survey of mobile phone sensing.
\newblock {\em IEEE Communications}, 48(9), 2010.

\bibitem[LPR17]{lopez2017gradient}
David Lopez-Paz and Marc\textquotesingle~Aurelio Ranzato.
\newblock Gradient episodic memory for continual learning.
\newblock In I.~Guyon, U.~V. Luxburg, S.~Bengio, H.~Wallach, R.~Fergus,
  S.~Vishwanathan, and R.~Garnett, editors, {\em Advances in Neural Information
  Processing Systems 30}, pages 6467--6476. Curran Associates, Inc., 2017.

\bibitem[LPZ08]{lizier2008local}
Joseph~T Lizier, Mikhail Prokopenko, and Albert~Y Zomaya.
\newblock Local information transfer as a spatiotemporal filter for complex
  systems.
\newblock {\em Physical Review E}, 77(2):026110, 2008.

\bibitem[LT16a]{lin2016criticality}
Henry~W Lin and Max Tegmark.
\newblock Criticality in formal languages and statistical physics.
\newblock {\em arXiv preprint arXiv:1606.06737}, 2016.

\bibitem[LT16b]{liu2016classification}
Tongliang Liu and Dacheng Tao.
\newblock Classification with noisy labels by importance reweighting.
\newblock {\em IEEE Transactions on pattern analysis and machine intelligence},
  38(3):447--461, 2016.

\bibitem[LT16c]{Liu:2016:CNL:2914183.2914328}
Tongliang Liu and Dacheng Tao.
\newblock Classification with noisy labels by importance reweighting.
\newblock {\em IEEE Trans. Pattern Anal. Mach. Intell.}, 38(3):447--461, March
  2016.

\bibitem[LUTG17]{lake2017building}
Brenden~M. Lake, Tomer~D. Ullman, Joshua~B. Tenenbaum, and Samuel~J. Gershman.
\newblock Building machines that learn and think like people.
\newblock {\em Behavioral and Brain Sciences}, 40:e253, 2017.

\bibitem[LZ89]{langley1989data}
Pat Langley and Jan~M. Zytkow.
\newblock Data-driven approaches to empirical discovery.
\newblock {\em Artificial Intelligence}, 40(1):283 -- 312, 1989.

\bibitem[LZCL17]{li2017meta}
Zhenguo Li, Fengwei Zhou, Fei Chen, and Hang Li.
\newblock Meta-sgd: Learning to learn quickly for few shot learning.
\newblock {\em arXiv preprint arXiv:1707.09835}, 2017.

\bibitem[MB16]{muller2016future}
Vincent~C M{\"u}ller and Nick Bostrom.
\newblock Future progress in artificial intelligence: A survey of expert
  opinion.
\newblock In {\em Fundamental issues of artificial intelligence}, pages
  555--572. Springer, 2016.

\bibitem[MCM86]{Michalski:1986:MLA:21934}
S~Ryszard Michalski, G~Jaime Carbonell, and M~Tom Mitchell.
\newblock {\em ML an AI Approach}.
\newblock Morgan Kaufmann Publishers Inc., San Francisco, CA, USA, 1986.

\bibitem[MJV{\etalchar{+}}12]{Menon2012PredictingAP}
Aditya~Krishna Menon, Xiaoqian Jiang, Shankar Vembu, Charles Elkan, and Lucila
  Ohno-Machado.
\newblock Predicting accurate probabilities with a ranking loss.
\newblock {\em CoRR}, abs/1206.4661, 2012.

\bibitem[MKH93]{moya_1993_oneclass}
M.~M. {Moya}, M.~W. {Koch}, and L.~D. {Hostetler}.
\newblock {One-class classifier networks for target recognition applications}.
\newblock {\em NASA STI/Recon Technical Report N}, 93, 1993.

\bibitem[MKS{\etalchar{+}}15]{mnih2015human}
Volodymyr Mnih, Koray Kavukcuoglu, David Silver, Andrei~A Rusu, Joel Veness,
  Marc~G Bellemare, Alex Graves, Martin Riedmiller, Andreas~K Fidjeland, Georg
  Ostrovski, et~al.
\newblock Human-level control through deep reinforcement learning.
\newblock {\em Nature}, 518(7540):529, 2015.

\bibitem[MLC16]{mengistu2016evolvability}
Henok Mengistu, Joel Lehman, and Jeff Clune.
\newblock Evolvability search:directly selecting for evolvability in order to
  study and produce it.
\newblock In {\em Proceedings of the Genetic and Evolutionary Computation
  Conference 2016}, pages 141--148. ACM, 2016.

\bibitem[MPS08a]{marinazzo2008kernel2}
Daniele Marinazzo, Mario Pellicoro, and Sebastiano Stramaglia.
\newblock Kernel-granger causality and the analysis of dynamical networks.
\newblock {\em Physical review E}, 77(5):056215, 2008.

\bibitem[MPS08b]{marinazzo2008kernel}
Daniele Marinazzo, Mario Pellicoro, and Sebastiano Stramaglia.
\newblock Kernel method for nonlinear granger causality.
\newblock {\em Physical review letters}, 100(14):144103, 2008.

\bibitem[Mug91]{Muggleton1991}
Stephen Muggleton.
\newblock Inductive logic programming.
\newblock {\em New Generation Computing}, 8(4):295--318, Feb 1991.

\bibitem[MV14]{Mordelet:2014:BSL:2565612.2565683}
F.~Mordelet and J.~P. Vert.
\newblock A bagging svm to learn from positive and unlabeled examples.
\newblock {\em Pattern Recogn. Lett.}, 37:201--209, February 2014.

\bibitem[MVDGB08]{meier2008group}
Lukas Meier, Sara Van De~Geer, and Peter B{\"u}hlmann.
\newblock The group lasso for logistic regression.
\newblock {\em Journal of the Royal Statistical Society: Series B (Statistical
  Methodology)}, 70(1):53--71, 2008.

\bibitem[MY02]{Manevitz:2002:OSD:944790.944808}
Larry~M. Manevitz and Malik Yousef.
\newblock One-class svms for document classification.
\newblock {\em JMLR}, 2:139--154, March 2002.

\bibitem[Nak99]{nakashima1999ai}
Hideyuki Nakashima.
\newblock Ai as complex information processing.
\newblock {\em Minds and machines}, 9(1):57--80, 1999.

\bibitem[NCB08]{neves2008synaptic}
Guilherme Neves, Sam~F Cooke, and Tim~VP Bliss.
\newblock Synaptic plasticity, memory and the hippocampus: a neural network
  approach to causality.
\newblock {\em Nature Reviews Neuroscience}, 9(1):65, 2008.

\bibitem[NDRT13a]{natarajan2013learning}
Nagarajan Natarajan, Inderjit~S Dhillon, Pradeep~K Ravikumar, and Ambuj Tewari.
\newblock Learning with noisy labels.
\newblock In {\em Advances in neural information processing systems}, pages
  1196--1204, 2013.

\bibitem[NDRT13b]{NIPS2013_5073}
Nagarajan Natarajan, Inderjit~S Dhillon, Pradeep~K Ravikumar, and Ambuj Tewari.
\newblock Learning with noisy labels.
\newblock In {\em Adv. in NIPS 26}, pages 1196--1204. Curran Associates, Inc.,
  2013.

\bibitem[NG00]{Nigam00understandingthe}
Kamal Nigam and Rayid Ghani.
\newblock Understanding the behavior of co-training.
\newblock In {\em KDD Workshop}, 2000.

\bibitem[NHM{\etalchar{+}}18]{nandy2018high}
Preetam Nandy, Alain Hauser, Marloes~H Maathuis, et~al.
\newblock High-dimensional consistency in score-based and hybrid structure
  learning.
\newblock {\em The Annals of Statistics}, 46(6A):3151--3183, 2018.

\bibitem[NLBT17]{nguyen2017variational}
Cuong~V Nguyen, Yingzhen Li, Thang~D Bui, and Richard~E Turner.
\newblock Variational continual learning.
\newblock {\em arXiv preprint arXiv:1710.10628}, 2017.

\bibitem[NM92]{287172}
D.~K. Naik and R.~J. Mammone.
\newblock Meta-neural networks that learn by learning.
\newblock In {\em [Proceedings 1992] IJCNN International Joint Conference on
  Neural Networks}, volume~1, pages 437--442 vol.1, Jun 1992.

\bibitem[NOPF10]{Nettleton2010}
David~F. Nettleton, Albert Orriols-Puig, and Albert Fornells.
\newblock A study of the effect of different types of noise on the precision of
  supervised learning techniques.
\newblock {\em Artificial Intelligence Review}, 33(4):275--306, 2010.

\bibitem[NWC17]{northcutt2017learning}
Curtis~G Northcutt, Tailin Wu, and Isaac~L Chuang.
\newblock Learning with confident examples: Rank pruning for robust
  classification with noisy labels.
\newblock {\em arXiv preprint arXiv:1705.01936}, 2017.

\bibitem[OLV18]{oord2018representation}
Aaron van~den Oord, Yazhe Li, and Oriol Vinyals.
\newblock Representation learning with contrastive predictive coding.
\newblock {\em arXiv preprint arXiv:1807.03748}, 2018.

\bibitem[P{\etalchar{+}}09]{pearl2009causal}
Judea Pearl et~al.
\newblock Causal inference in statistics: An overview.
\newblock {\em Statistics surveys}, 3:96--146, 2009.

\bibitem[Pap85]{papoulis1985probability}
A~Papoulis.
\newblock Probability, random variables and stochastic processes.
\newblock 1985.

\bibitem[PCI10]{mech_turk_quality}
Gabriele Paolacci, Jesse Chandler, and Panagiotis~G. Ipeirotis.
\newblock Running experiments on amazon mechanical turk.
\newblock {\em Judgment and Decision Making}, 5(5):411--419, 2010.

\bibitem[Pea01]{pearsonPCA1901}
Karl Pearson.
\newblock Liii. on lines and planes of closest fit to systems of points in
  space.
\newblock {\em The London, Edinburgh, and Dublin Philosophical Magazine and
  Journal of Science}, 2(11):559--572, 1901.

\bibitem[Pea02]{pearl2002causality}
Judea Pearl.
\newblock Causality: models, reasoning, and inference.
\newblock {\em IIE Transactions}, 34(6):583--589, 2002.

\bibitem[Pea09]{pearl2009causality}
Judea Pearl.
\newblock {\em Causality}.
\newblock Cambridge university press, 2009.

\bibitem[Phy]{Physionet}
PhysioNet.
\newblock Physionet data bank.

\bibitem[Pia05]{piaget2005psychology}
Jean Piaget.
\newblock {\em The psychology of intelligence}.
\newblock Routledge, 2005.

\bibitem[PJS17]{peters2017elements}
Jonas Peters, Dominik Janzing, and Bernhard Sch{\"o}lkopf.
\newblock {\em Elements of causal inference: foundations and learning
  algorithms}.
\newblock MIT press, 2017.

\bibitem[PKT{\etalchar{+}}18]{peng2018variational}
Xue~Bin Peng, Angjoo Kanazawa, Sam Toyer, Pieter Abbeel, and Sergey Levine.
\newblock Variational discriminator bottleneck: Improving imitation learning,
  inverse rl, and gans by constraining information flow.
\newblock {\em arXiv preprint arXiv:1810.00821}, 2018.

\bibitem[PMS{\etalchar{+}}16]{parisotto2016neuro}
Emilio Parisotto, Abdel-rahman Mohamed, Rishabh Singh, Lihong Li, Dengyong
  Zhou, and Pushmeet Kohli.
\newblock Neuro-symbolic program synthesis.
\newblock {\em arXiv preprint arXiv:1611.01855}, 2016.

\bibitem[PSJ{\etalchar{+}}18]{peurifoy2018nanophotonic}
John Peurifoy, Yichen Shen, Li~Jing, Yi~Yang, Fidel Cano-Renteria, Brendan~G
  DeLacy, John~D Joannopoulos, Max Tegmark, and Marin Solja{\v{c}}i{\'c}.
\newblock Nanophotonic particle simulation and inverse design using artificial
  neural networks.
\newblock {\em Science advances}, 4(6):eaar4206, 2018.

\bibitem[PSST{\etalchar{+}}99]{oneclasssvm1999}
John Platt, Bernhard Schölkopf, John Shawe-Taylor, Alex~J. Smola, and
  Robert~C. Williamson.
\newblock Estimating support of a high dimensional distribution.
\newblock Technical report, MSR, 1999.

\bibitem[PW17]{polyanskiy2017strong}
Yury Polyanskiy and Yihong Wu.
\newblock Strong data-processing inequalities for channels and bayesian
  networks.
\newblock In {\em Convexity and Concentration}, pages 211--249. Springer, 2017.

\bibitem[QKKG02]{quiroga2002performance}
R~Quian Quiroga, A~Kraskov, T~Kreuz, and Peter Grassberger.
\newblock Performance of different synchronization measures in real data: a
  case study on electroencephalographic signals.
\newblock {\em Physical Review E}, 65(4):041903, 2002.

\bibitem[Qui]{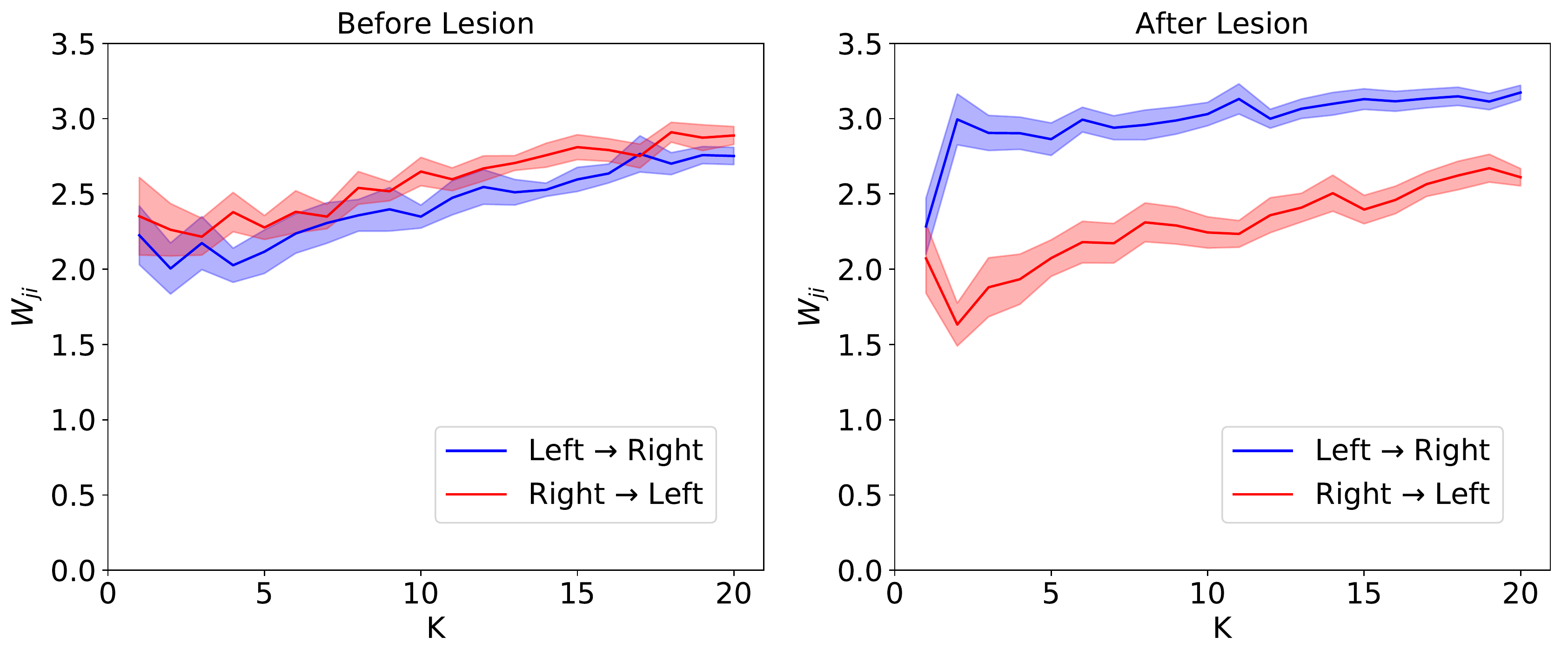}
Rodrigo~Quian Quiroga.
\newblock The dataset can be downloaded from.

\bibitem[RDF15]{reed2015neural}
Scott Reed and Nando De~Freitas.
\newblock Neural programmer-interpreters.
\newblock {\em arXiv preprint arXiv:1511.06279}, 2015.

\bibitem[RDT15]{russell2015research}
Stuart Russell, Daniel Dewey, and Max Tegmark.
\newblock Research priorities for robust and beneficial artificial
  intelligence.
\newblock {\em Ai Magazine}, 36(4):105--114, 2015.

\bibitem[R{\'e}n59]{renyi1959measures}
Alfr{\'e}d R{\'e}nyi.
\newblock On measures of dependence.
\newblock {\em Acta mathematica hungarica}, 10(3-4):441--451, 1959.

\bibitem[RI96]{raviv_heart}
Yuval Raviv and Nathan Intrator.
\newblock Bootstrapping with noise: An effective regularization technique.
\newblock {\em Connection Science}, 8(3-4):355--372, 1996.

\bibitem[Ris78]{rissanen1978modeling}
J.~Rissanen.
\newblock Modeling by shortest data description.
\newblock {\em Automatica}, 14(5):465 -- 471, 1978.

\bibitem[Ris83]{rissanen1983universal}
Jorma Rissanen.
\newblock A universal prior for integers and estimation by minimum description
  length.
\newblock {\em The Annals of Statistics}, 11(2):416--431, 1983.

\bibitem[RJJ{\etalchar{+}}18]{alphafold}
R.Evans, J.Jumper, J.Kirkpatrick, L.Sifre, T.F.G.Green, C.Qin, A.Zidek,
  A.Nelson, A.Bridgland, H.Penedones, S.Petersen, K.Simonyan, D.T.Jones,
  K.Kavukcuoglu, D.Hassabis, and A.W.Senior.
\newblock De novo structure prediction with deep-learning based scoring, 2018.

\bibitem[RL16]{ravi2016optimization}
Sachin Ravi and Hugo Larochelle.
\newblock Optimization as a model for few-shot learning.
\newblock 2016.

\bibitem[RLA{\etalchar{+}}15]{noisy_boostrapping_google}
Scott~E. Reed, Honglak Lee, Dragomir Anguelov, Christian Szegedy, Dumitru
  Erhan, and Andrew Rabinovich.
\newblock Training deep neural networks on noisy labels with bootstrapping.
\newblock In {\em ICLR}, 2015.

\bibitem[RMS{\etalchar{+}}17]{real2017large}
Esteban Real, Sherry Moore, Andrew Selle, Saurabh Saxena, Yutaka~Leon Suematsu,
  Jie Tan, Quoc Le, and Alex Kurakin.
\newblock Large-scale evolution of image classifiers.
\newblock {\em arXiv preprint arXiv:1703.01041}, 2017.

\bibitem[RR12]{rey2012meta}
M{\'e}lanie Rey and Volker Roth.
\newblock Meta-gaussian information bottleneck.
\newblock In {\em Advances in Neural Information Processing Systems}, pages
  1916--1924, 2012.

\bibitem[RT18]{rolnick2018the}
David Rolnick and Max Tegmark.
\newblock The power of deeper networks for expressing natural functions.
\newblock In {\em International Conference on Learning Representations}, 2018.

\bibitem[Rus19]{russell2019humancompatible}
Stuart Russell.
\newblock {\em Human Compatible: Artificial Intelligence and the Problem of
  Control}.
\newblock Viking, 2019.

\bibitem[RV18]{rezende2018taming}
Danilo~Jimenez Rezende and Fabio Viola.
\newblock Taming {VAE}s.
\newblock {\em arXiv preprint arXiv:1810.00597}, 2018.

\bibitem[SBB15]{seth2015granger}
Anil~K Seth, Adam~B Barrett, and Lionel Barnett.
\newblock Granger causality analysis in neuroscience and neuroimaging.
\newblock {\em Journal of Neuroscience}, 35(8):3293--3297, 2015.

\bibitem[SBB{\etalchar{+}}16]{pmlr-v48-santoro16}
Adam Santoro, Sergey Bartunov, Matthew Botvinick, Daan Wierstra, and Timothy
  Lillicrap.
\newblock Meta-learning with memory-augmented neural networks.
\newblock In Maria~Florina Balcan and Kilian~Q. Weinberger, editors, {\em
  Proceedings of The 33rd International Conference on Machine Learning},
  volume~48 of {\em Proceedings of Machine Learning Research}, pages
  1842--1850, New York, New York, USA, 20--22 Jun 2016. PMLR.

\bibitem[SBH13]{ScottBH13}
Clayton Scott, Gilles Blanchard, and Gregory Handy.
\newblock Classification with asymmetric label noise: Consistency and maximal
  denoising.
\newblock In {\em COLT}, pages 489--511, 2013.

\bibitem[Sch87]{schmidhuber1987evolutionary}
J{\"u}rgen Schmidhuber.
\newblock {\em Evolutionary principles in self-referential learning, or on
  learning how to learn: the meta-meta-... hook}.
\newblock PhD thesis, Technische Universit{\"a}t M{\"u}nchen, 1987.

\bibitem[Sch91]{schank1991s}
Roger~C Schank.
\newblock Where's the ai?
\newblock {\em AI magazine}, 12(4):38--38, 1991.

\bibitem[Sch92]{schrodinger1992life}
Erwin Schr{\"o}dinger.
\newblock {\em What is life?: With mind and matter and autobiographical
  sketches}.
\newblock Cambridge University Press, 1992.

\bibitem[Sch00]{schreiber2000measuring}
Thomas Schreiber.
\newblock Measuring information transfer.
\newblock {\em Physical review letters}, 85(2):461, 2000.

\bibitem[SCHU17]{scardapane2017group}
Simone Scardapane, Danilo Comminiello, Amir Hussain, and Aurelio Uncini.
\newblock Group sparse regularization for deep neural networks.
\newblock {\em Neurocomputing}, 241:81--89, 2017.

\bibitem[Sco15]{scott2015rate}
Clayton Scott.
\newblock A rate of convergence for mixture proportion estimation, with
  application to learning from noisy labels.
\newblock {\em JMLR}, 38:838--846, 2015.

\bibitem[SGL{\etalchar{+}}19]{sharma2019dynamics}
Archit Sharma, Shixiang Gu, Sergey Levine, Vikash Kumar, and Karol Hausman.
\newblock Dynamics-aware unsupervised discovery of skills.
\newblock {\em arXiv preprint arXiv:1907.01657}, 2019.

\bibitem[SGS{\etalchar{+}}00]{spirtes2000causation}
Peter Spirtes, Clark~N Glymour, Richard Scheines, David Heckerman, Christopher
  Meek, Gregory Cooper, and Thomas Richardson.
\newblock {\em Causation, prediction, and search}.
\newblock MIT press, 2000.

\bibitem[Sha48a]{shannon1948mathematical}
Claude~Elwood Shannon.
\newblock A mathematical theory of communication.
\newblock {\em Bell system technical journal}, 27(3):379--423, 1948.

\bibitem[Sha48b]{shannon}
Claude~Elwood Shannon.
\newblock A {M}athematical {T}heory of {C}ommunication.
\newblock {\em The Bell System Technical Journal}, 27:379--423, 1948.

\bibitem[SHS01]{suzuki2001simple}
Kenji Suzuki, Isao Horiba, and Noboru Sugie.
\newblock A simple neural network pruning algorithm with application to filter
  synthesis.
\newblock {\em Neural Processing Letters}, 13(1):43--53, 2001.

\bibitem[Sim03]{simonton2003interview}
DK~Simonton.
\newblock An interview with dr. simonton.
\newblock {\em Human intelligence: Historical influences, current
  controversies, teaching resources. http://www. indiana. edu/~ intell}, 2003.

\bibitem[SKCK17]{pxpp}
Tim Salimans, Andrej Karpathy, Xi~Chen, and Diederik~P. Kingma.
\newblock {PixelCNN++: A PixelCNN Implementation with Discretized Logistic
  Mixture Likelihood and Other Modifications}.
\newblock In {\em ICLR}, 2017.

\bibitem[SL09]{schmidt2009distilling}
Michael Schmidt and Hod Lipson.
\newblock Distilling free-form natural laws from experimental data.
\newblock {\em science}, 324(5923):81--85, 2009.

\bibitem[sl16]{logreg_sklearn}
scikit learn.
\newblock {\em LogisticRegression Class at scikit-learn}, 2016.

\bibitem[Sol64]{solomonoff1964formal}
Ray~J Solomonoff.
\newblock A formal theory of inductive inference. part i.
\newblock {\em Information and control}, 7(1):1--22, 1964.

\bibitem[SQL12]{sindhwani2012scalable}
Vikas Sindhwani, Minh~Ha Quang, and Aur{\'e}lie~C Lozano.
\newblock Scalable matrix-valued kernel learning for high-dimensional nonlinear
  multivariate regression and granger causality.
\newblock {\em arXiv preprint arXiv:1210.4792}, 2012.

\bibitem[SS17a]{strouse2017deterministic}
DJ~Strouse and David~J Schwab.
\newblock The deterministic information bottleneck.
\newblock {\em Neural computation}, 29(6):1611--1630, 2017.

\bibitem[SS17b]{strouse2017information}
DJ~Strouse and David~J Schwab.
\newblock The information bottleneck and geometric clustering.
\newblock {\em arXiv preprint arXiv:1712.09657}, 2017.

\bibitem[SS19]{strouse2019information}
DJ~Strouse and David~J Schwab.
\newblock The information bottleneck and geometric clustering.
\newblock {\em Neural computation}, 31(3):596--612, 2019.

\bibitem[SSK12]{Sugiyama:2012:DRE:2181148}
Masashi Sugiyama, Taiji Suzuki, and Takafumi Kanamori.
\newblock {\em Density Ratio Estimation in ML}.
\newblock Cambridge University Press, New York, NY, USA, 1st edition, 2012.

\bibitem[SST10]{shamir2010learning}
Ohad Shamir, Sivan Sabato, and Naftali Tishby.
\newblock Learning and generalization with the information bottleneck.
\newblock {\em Theoretical Computer Science}, 411(29-30):2696--2711, 2010.

\bibitem[SsWF15]{sukhbaatar2015end}
Sainbayar Sukhbaatar, arthur szlam, Jason Weston, and Rob Fergus.
\newblock End-to-end memory networks.
\newblock In C.~Cortes, N.~D. Lawrence, D.~D. Lee, M.~Sugiyama, and R.~Garnett,
  editors, {\em Advances in Neural Information Processing Systems 28}, pages
  2440--2448. Curran Associates, Inc., 2015.

\bibitem[SSZ17]{snell2017prototypical}
Jake Snell, Kevin Swersky, and Richard Zemel.
\newblock Prototypical networks for few-shot learning.
\newblock In {\em Advances in Neural Information Processing Systems}, pages
  4077--4087, 2017.

\bibitem[Sti17]{still2017thermodynamic}
Susanne Still.
\newblock Thermodynamic cost and benefit of data representations.
\newblock {\em arXiv preprint arXiv:1705.00612}, 2017.

\bibitem[SW89]{stock1989interpreting}
James~H Stock and Mark~W Watson.
\newblock Interpreting the evidence on money-income causality.
\newblock {\em Journal of Econometrics}, 40(1):161--181, 1989.

\bibitem[TCF{\etalchar{+}}18]{tank2018neural}
Alex Tank, Ian Covert, Nicholas Foti, Ali Shojaie, and Emily Fox.
\newblock Neural granger causality for nonlinear time series.
\newblock {\em arXiv preprint arXiv:1802.05842}, 2018.

\bibitem[Teg17]{tegmark2017life}
Max Tegmark.
\newblock {\em Life 3.0: Being human in the age of artificial intelligence}.
\newblock Knopf, 2017.

\bibitem[Teg19]{tegmark2019optimal}
Max Tegmark.
\newblock Optimal latent representations: Distilling mutual information into
  principal pairs.
\newblock {\em arXiv preprint arXiv:1902.03364}, 2019.

\bibitem[TGM{\etalchar{+}}18]{thompson2018causal}
Jayne Thompson, Andrew~JP Garner, John~R Mahoney, James~P Crutchfield, Vlatko
  Vedral, and Mile Gu.
\newblock Causal asymmetry in a quantum world.
\newblock {\em Physical Review X}, 8(3):031013, 2018.

\bibitem[Tib96]{tibshirani1996regression}
Robert Tibshirani.
\newblock Regression shrinkage and selection via the lasso.
\newblock {\em Journal of the Royal Statistical Society. Series B
  (Methodological)}, pages 267--288, 1996.

\bibitem[Tis18]{tishbyinfo}
Naftali Tishby.
\newblock {Lecture: the information theory of deep neural networks: the
  statistical physics aspects}.
\newblock
  \url{https://www.perimeterinstitute.ca/videos/information-theory-deep-neural-networks-statistical-physics-aspects/},
  2018.

\bibitem[TMBS19]{tan2019renormalization}
Andrew Tan, Leenoy Meshulam, William Bialek, and David Schwab.
\newblock The renormalization group and information bottleneck: a unified
  framework.
\newblock {\em Bulletin of the American Physical Society}, 2019.

\bibitem[TP12]{thrun2012learning}
Sebastian Thrun and Lorien Pratt.
\newblock {\em Learning to learn}.
\newblock Springer Science \& Business Media, 2012.

\bibitem[TPB00]{tishby2000information}
Naftali Tishby, Fernando~C Pereira, and William Bialek.
\newblock The information bottleneck method.
\newblock {\em arXiv preprint physics/0004057}, 2000.

\bibitem[TW20]{tegmark2019pareto}
Max Tegmark and Tailin Wu.
\newblock Pareto-optimal data compression for binary classification tasks.
\newblock {\em Entropy}, 22(1):7, 2020.

\bibitem[UT19]{udrescu2019ai}
Silviu-Marian Udrescu and Max Tegmark.
\newblock Ai feynman: a physics-inspired method for symbolic regression.
\newblock {\em arXiv preprint arXiv:1905.11481}, 2019.

\bibitem[VBL{\etalchar{+}}16]{NIPS2016_6385}
Oriol Vinyals, Charles Blundell, Tim Lillicrap, koray kavukcuoglu, and Daan
  Wierstra.
\newblock Matching networks for one shot learning.
\newblock In D.~D. Lee, M.~Sugiyama, U.~V. Luxburg, I.~Guyon, and R.~Garnett,
  editors, {\em Advances in Neural Information Processing Systems 29}, pages
  3630--3638. Curran Associates, Inc., 2016.

\bibitem[vdOKE{\etalchar{+}}16]{pixelcnn}
Aaron van~den Oord, Nal Kalchbrenner, Lasse Espeholt, Koray Kavukcuoglu, Oriol
  Vinyals, and Alex Graves.
\newblock {Conditional Image Generation with PixelCNN Decoders}.
\newblock In D.~D. Lee, M.~Sugiyama, U.~V. Luxburg, I.~Guyon, and R.~Garnett,
  editors, {\em Advances in Neural Information Processing Systems 29}, pages
  4790--4798. Curran Associates, Inc., 2016.

\bibitem[VLBM08]{vincent2008extracting}
Pascal Vincent, Hugo Larochelle, Yoshua Bengio, and Pierre-Antoine Manzagol.
\newblock Extracting and composing robust features with denoising autoencoders.
\newblock In {\em Proceedings of the 25th international conference on Machine
  learning}, pages 1096--1103. ACM, 2008.

\bibitem[VNLH17]{van2017learning}
Evert~PL Van~Nieuwenburg, Ye-Hua Liu, and Sebastian~D Huber.
\newblock Learning phase transitions by confusion.
\newblock {\em Nature Physics}, 13(5):435, 2017.

\bibitem[vSCGS18]{van2018relational}
Sjoerd van Steenkiste, Michael Chang, Klaus Greff, and J{\"u}rgen Schmidhuber.
\newblock Relational neural expectation maximization: Unsupervised discovery of
  objects and their interactions.
\newblock {\em arXiv preprint arXiv:1802.10353}, 2018.

\bibitem[VV04]{vereshchagin2004kolmogorov}
Nikolai~K Vereshchagin and Paul~MB Vit{\'a}nyi.
\newblock Kolmogorov's structure functions and model selection.
\newblock {\em IEEE Transactions on Information Theory}, 50(12):3265--3290,
  2004.

\bibitem[Wan95]{wang1995working}
Pei Wang.
\newblock On the working definition of intelligence.
\newblock {\em Center for Research on Concepts and Cognition CRCC, Indiana
  University}, 1995.

\bibitem[Wan16]{wang2016discovering}
Lei Wang.
\newblock Discovering phase transitions with unsupervised learning.
\newblock {\em Physical Review B}, 94(19):195105, 2016.

\bibitem[WBSK20]{wu2020discovering}
Tailin Wu, Thomas Breuel, Michael Skuhersky, and Jan Kautz.
\newblock Discovering nonlinear relations with minimum predictive information
  regularization.
\newblock {\em arXiv preprint arXiv:2001.01885}, 2020.

\bibitem[WC09]{white2009settable}
Halbert White and Karim Chalak.
\newblock Settable systems: an extension of pearl's causal model with
  optimization, equilibrium, and learning.
\newblock {\em Journal of Machine Learning Research}, 10(Aug):1759--1799, 2009.

\bibitem[WCL11]{white2011linking}
Halbert White, Karim Chalak, and Xun Lu.
\newblock Linking granger causality and the pearl causal model with settable
  systems.
\newblock In {\em NIPS Mini-Symposium on Causality in Time Series}, pages
  1--29, 2011.

\bibitem[WF20]{Wu2020Phase}
Tailin Wu and Ian Fischer.
\newblock Phase transitions for the information bottleneck in representation
  learning.
\newblock In {\em International Conference on Learning Representations}, 2020.

\bibitem[WFCT19a]{wu2019learnability}
Tailin Wu, Ian Fischer, Isaac Chuang, and Max Tegmark.
\newblock Learnability for the information bottleneck.
\newblock {\em arXiv preprint arXiv:1907.07331}, 2019.

\bibitem[WFCT19b]{wu2019learnabilityEntropy}
Tailin Wu, Ian Fischer, Isaac Chuang, and Max Tegmark.
\newblock Learnability for the information bottleneck.
\newblock {\em Entropy}, 21(3):924, 2019.

\bibitem[WKNT{\etalchar{+}}16]{wang2016learning}
Jane~X Wang, Zeb Kurth-Nelson, Dhruva Tirumala, Hubert Soyer, Joel~Z Leibo,
  Remi Munos, Charles Blundell, Dharshan Kumaran, and Matt Botvinick.
\newblock Learning to reinforcement learn.
\newblock {\em arXiv preprint arXiv:1611.05763}, 2016.

\bibitem[WL10]{white2010granger}
Halbert White and Xun Lu.
\newblock Granger causality and dynamic structural systems.
\newblock {\em Journal of Financial Econometrics}, 8(2):193--243, 2010.

\bibitem[WLK{\etalchar{+}}17]{NIPS2017_6620}
Jiajun Wu, Erika Lu, Pushmeet Kohli, Bill Freeman, and Josh Tenenbaum.
\newblock Learning to see physics via visual de-animation.
\newblock In I.~Guyon, U.~V. Luxburg, S.~Bengio, H.~Wallach, R.~Fergus,
  S.~Vishwanathan, and R.~Garnett, editors, {\em Advances in Neural Information
  Processing Systems 30}, pages 153--164. Curran Associates, Inc., 2017.

\bibitem[WPCT18]{wu2018meta}
Tailin Wu, John Peurifoy, Isaac~L Chuang, and Max Tegmark.
\newblock Meta-learning autoencoders for few-shot prediction.
\newblock {\em arXiv preprint arXiv:1807.09912}, 2018.
\newblock URL \url{https://arxiv.org/abs/1807.09912}.

\bibitem[WT19]{wu2018toward}
Tailin Wu and Max Tegmark.
\newblock Toward an artificial intelligence physicist for unsupervised
  learning.
\newblock {\em Phys. Rev. E}, 100:033311, Sep 2019.

\bibitem[WZW{\etalchar{+}}17a]{watters2017visual}
Nicholas Watters, Daniel Zoran, Theophane Weber, Peter Battaglia, Razvan
  Pascanu, and Andrea Tacchetti.
\newblock Visual interaction networks: Learning a physics simulator from video.
\newblock In I.~Guyon, U.~V. Luxburg, S.~Bengio, H.~Wallach, R.~Fergus,
  S.~Vishwanathan, and R.~Garnett, editors, {\em Advances in Neural Information
  Processing Systems 30}, pages 4539--4547. Curran Associates, Inc., 2017.

\bibitem[WZW{\etalchar{+}}17b]{NIPS2017_7040}
Nicholas Watters, Daniel Zoran, Theophane Weber, Peter Battaglia, Razvan
  Pascanu, and Andrea Tacchetti.
\newblock Visual interaction networks: Learning a physics simulator from video.
\newblock In I.~Guyon, U.~V. Luxburg, S.~Bengio, H.~Wallach, R.~Fergus,
  S.~Vishwanathan, and R.~Garnett, editors, {\em Advances in Neural Information
  Processing Systems 30}, pages 4539--4547. Curran Associates, Inc., 2017.

\bibitem[XRV17]{xiao2017fashion}
Han Xiao, Kashif Rasul, and Roland Vollgraf.
\newblock Fashion-mnist: a novel image dataset for benchmarking machine
  learning algorithms.
\newblock {\em arXiv preprint arXiv:1708.07747}, 2017.

\bibitem[XXY{\etalchar{+}}15a]{xiao2015learning}
Tong Xiao, Tian Xia, Yi~Yang, Chang Huang, and Xiaogang Wang.
\newblock Learning from massive noisy labeled data for image classification.
\newblock In {\em Proceedings of the IEEE Conference on Computer Vision and
  Pattern Recognition}, pages 2691--2699, 2015.

\bibitem[XXY{\etalchar{+}}15b]{Xiao2015LearningFM}
Tong Xiao, Tian Xia, Yi~Yang, Chang Huang, and Xiaogang Wang.
\newblock Learning from massive noisy labeled data for image classification.
\newblock In {\em CVPR}, 2015.

\bibitem[Yai19]{yaida2018fluctuationdissipation}
Sho Yaida.
\newblock Fluctuation-dissipation relations for stochastic gradient descent.
\newblock In {\em International Conference on Learning Representations}, 2019.

\bibitem[YMJ{\etalchar{+}}12]{ICML2012Yang_127}
Tianbao Yang, Mehrdad Mahdavi, Rong Jin, Lijun Zhang, and Yang Zhou.
\newblock Multiple kernel learning from noisy labels by stochastic programming.
\newblock In {\em Proc. of 29th ICML}, pages 233--240, New York, NY, USA, 2012.
  ACM.

\bibitem[YSB{\etalchar{+}}18]{yildirim2018neurocomputational}
Ilker Yildirim, Kevin~A Smith, Mario Belledonne, Jiajun Wu, and Joshua~B
  Tenenbaum.
\newblock Neurocomputational modeling of human physical scene understanding.
\newblock In {\em 2nd Conference on Cognitive Computational Neuroscience},
  2018.

\bibitem[Zeg15]{fisherproperty}
Pablo Zegers.
\newblock Fisher information properties.
\newblock {\em Entropy}, 17(7):4918--4939, 2015.

\bibitem[ZH05]{zou2005regularization}
Hui Zou and Trevor Hastie.
\newblock Regularization and variable selection via the elastic net.
\newblock {\em Journal of the Royal Statistical Society: Series B (Statistical
  Methodology)}, 67(2):301--320, 2005.

\bibitem[ZK16]{wideresnet}
S.~{Zagoruyko} and N.~{Komodakis}.
\newblock {Wide Residual Networks}.
\newblock {\em arXiv: 1605.07146}, 2016.

\bibitem[ZL16]{zoph2016neural}
Barret Zoph and Quoc~V Le.
\newblock Neural architecture search with reinforcement learning.
\newblock {\em arXiv preprint arXiv:1611.01578}, 2016.

\bibitem[ZLWT18]{zheng2018unsupervised}
David Zheng, Vinson Luo, Jiajun Wu, and Joshua~B Tenenbaum.
\newblock Unsupervised learning of latent physical properties using
  perception-prediction networks.
\newblock {\em arXiv preprint arXiv:1807.09244}, 2018.

\end{thebibliography}
\bibliographystyle{alpha}

\end{singlespace}

\end{document}